\title{Lecture notes}
\newcommand{\ba}{\textbf{a}}
\newcommand{\bfb}{\textbf{b}}
\newcommand{\bc}{\textbf{c}}
\newcommand{\bff}{\textbf{f}}
\newcommand{\bu}{\textbf{u}}
\newcommand{\bv}{\textbf{v}}
\newcommand{\bw}{\textbf{w}}
\newcommand{\bx}{\textbf{x}}
\newcommand{\by}{\textbf{y}}
\newcommand{\bA}{\textbf{A}}
\newcommand{\bB}{\textbf{B}}
\newcommand{\bC}{\textbf{C}}
\newcommand{\bI}{\textbf{I}}
\newcommand{\bK}{\textbf{K}}
\newcommand{\bM}{\textbf{M}}
\newcommand{\bS}{\textbf{S}}
\newcommand{\bX}{\textbf{X}}
\newcommand{\cA}{\mathcal{A}}
\newcommand{\cB}{\mathcal{B}}
\newcommand{\cC}{\mathcal{C}}
\newcommand{\cD}{\mathcal{D}}
\newcommand{\cE}{\mathcal{E}}
\newcommand{\cF}{\mathcal{F}}
\newcommand{\cH}{\mathcal{H}}
\newcommand{\cK}{\mathcal{K}}
\newcommand{\cL}{\mathcal{L}}
\newcommand{\cM}{\mathcal{M}}
\newcommand{\cP}{\mathcal{P}}
\newcommand{\cR}{\mathcal{R}}
\newcommand{\cS}{\mathcal{S}}
\newcommand{\cT}{\mathcal{T}}
\newcommand{\cU}{\mathcal{U}}
\newcommand{\cV}{\mathcal{V}}
\newcommand{\cX}{\mathcal{X}}
\newcommand{\cY}{\mathcal{Y}}
\newcommand{\cZ}{\mathcal{Z}}
\newcommand{\E}{\mathbb{E}}
\newcommand{\F}{\mathbb{F}}
\newcommand{\K}{\mathbb{K}}
\newcommand{\N}{\mathbb{N}}
\newcommand{\Z}{\mathbb{Z}}
\renewcommand{\P}{\mathbb{P}}
\newcommand{\R}{\mathbb{R}}
\newcommand{\olsigma}{\overline{\sigma}}
\newcommand{\ERM}{\widehat{f}}
\newcommand{\bayes}{f^*}
\newcommand{\norm}[1]{\left\|#1\right\|}
\newcommand{\cro}[1]{\left[#1\right]}
\newcommand{\paren}[1]{\left(#1\right)}
\newcommand{\set}[1]{\left\{#1\right\}}
\newcommand{\absj}[1]{\left|#1\right|}
\newcommand{\psh}[2]{\left\langle#1,#2\right\rangle}
\newcommand{\PEInf}[1]{\left\lfloor#1\right\rfloor} 
\newcommand{\MOM}[2]{\text{MOM}_{#1}{\cro{#2}}}
\newcommand{\muh}{\widehat{\mu}}
\newcommand{\Kh}{\widehat{K}}
\newcommand{\Ih}{\widehat{I}}
\newcommand{\kh}{\widehat{k}}
\newcommand{\rmd}{ {\rm d}}
\DeclareMathOperator*{\gauss}{N}
\newtheorem{theorem}{Theorem}
\newtheorem{lemma}[theorem]{Lemma}
\newtheorem{proposition}[theorem]{Proposition}
\newtheorem{corollary}[theorem]{Corollary}
\newtheorem{definition}[theorem]{Definition}
\newtheorem{remark}[theorem]{Remark}
\DeclareMathOperator{\argmin}{\rm{argmin}}
\DeclareMathOperator{\argmax}{\textrm{argmax}}
\DeclareMathOperator{\Ent}{\textrm{Ent}}
\DeclareMathOperator{\Tr}{\textrm{Tr}}
\date{}
\begin{document}
	
\title{
 {\Huge{Selected topics on robust statistical learning theory}}\\
Lecture Notes}
\author{Matthieu Lerasle\\
 Paris-Saclay university}
	\maketitle
	\tableofcontents

\chapter{Introduction}
\section{Statistical learning}
These notes gather some results dealing with robustness issues in statistical learning.
Most of the results lie within the framework introduced by Vapnik \cite{MR1719582}, see also \cite{MR2291502}.
Given a dataset $\cD_N=(Z_1,\ldots,Z_N)$, where each $Z_i$ belongs to a measurable space $\cZ$, the goal is to infer from $\cD_N$ relevant informations regarding the stochastic mechanism that generated $\cD_N$.
To proceed, assume first that all data have the same (unknown) distribution $P$ and let $Z$ denote a random variable with distribution $P$ independent of $\cD_N$. 
Choose a set of parameters $F$ and a real valued function $\ell:F\times \cZ\to \R$, $(f,z)\mapsto \ell_f(z)$, $\ell$ is called the \emph{loss}.
Based on this loss, the \emph{risk} of any parameter $f\in F$ is defined as the integral of $\ell_f$ with respect to the distribution $P$:
\[
\forall f\in F,\qquad P\ell_f:=\E_{Z\sim P}[\ell_f(Z)]\enspace.
\]
The goal is to infer from $\cD_N$ the ``best" parameter $f^*$ in $F$ which is the one minimizing the risk: 
\[
f^*\in \argmin_{f\in F}P\ell_f\enspace.
\]
Hereafter, such a minimizer is assumed to exist to simplify notations. 
The interested reader can check that all results pertain if $P\ell_{f^*}$ is replaced by $\inf_{f\in F}P\ell_f$ in the following.
$f^*$ is unknown as it depends on $P$, it is usually called the \emph{oracle} as it is the parameter that would have chosen someone knowing the distribution $P$.
It cannot be used as an estimator, it is rather an ideal that any procedure tries to mimic.
Indeed, most of the material presented here aims at bounding the \emph{excess risk} of any estimator $\ERM\in F$ defined by 
\[
\cE(\ERM)=P[\ell_{\ERM}-\ell_{f^*}]=\E[\ell_{\ERM}(Z)-\ell_{f^*}(Z)|\cD_N]\enspace.
\]
For any $f\in F$, $\cE(f)=P[\ell_{f}-\ell_{f^*}]$ measures by how much $f$ fails to minimize $P\ell_f$.
It is worth noticing that $\cE(\ERM)$ is a random variable, the integral defining the risk being with respect to the random variable $Z\sim P$ that is \emph{independent} of $\cD_N$.
Bounding $\cE(\ERM)$ from above means here finding $\Delta_{N,\delta}(F)$ such that
\[
\P(\cE(\ERM)\leqslant \Delta_{N,\delta}(F))\geqslant 1-\delta\enspace.
\]
This type of result will be refered to as \emph{oracle inequality} as it compares the risk of the estimator $P\ell_{\ERM}$ with the one of an oracle $P\ell_{\bayes}=\inf_{f\in F}P\ell_f$.
This problem covers many classical problems in statistics and learning, we present here some basic examples, other will follow in the notes.
\paragraph{Univariate mean estimation}
In this example, given real valued random variables $Z_1,\ldots,Z_N$ with common distribution $P$, the goal is to infer the expectation $P[Z]=\E_{Z\sim P}[Z]$.

Set $F=\R$ and let $\ell_f(z)=(z-f)^2$ so, if $P[Z^2]<\infty$, then for any $f\in \R$, the expectation $f^*=P[Z]$ satisfies
\[
P\ell_f=\E[(Z-f)^2]=(f-f^*)^2+\E[(Z-f^*)^2]=(f-f^*)^2+P\ell_{f^*}\enspace.
\]
It follows that $f^*$ is the unique minimizer of $P\ell_f$ over $F$.
This example is simultaneously the simplest one can imagine and a natural building block for any learning procedure. 
Chapter~\ref{Chap:UME} is therefore dedicated to this elementary problem. 

\paragraph{Multivariate mean estimation}	Assume now that data $Z\in \R^d$ and, setting $F=\R^d$, the goal is to estimate $f^*=P[Z]$. 
Let $\|\cdot\|$ denote the Euclidean norm and let $\ell_f(z)=\|f-z\|^2$.
For any $f\in F$, it holds
\begin{align*}
P\ell_f&=\E[\|Z-f\|^2]\\
&=\E[\|Z-f^*\|^2]+2\E[(Z-f^*)^T(f-f^*)]+\|f-f^*\|^2\\
&=P\ell_{f^*}+\|f-f^*\|^2\enspace. 
\end{align*}

Here the second equality follows by linearity of the expectation and $\E[Z-f^*]=0$.
It follows that $f^*$ is the unique minimizer of $P\ell_f$ over $F$.
This example allows to understand the central role of uniform concentration inequalities to bound the excess risk of estimators. 
Chapter~\ref{Chap:MME} is dedicated to this problem.

\paragraph{Regression}
While previous problems are typical examples of \emph{unsupervised} learning tasks where data are not labeled, regression is arguably the most classical example of \emph{supervised} learning task where data are labeled: $Z=(X,Y)$ with $X$ the input or feature taking values in a measurable space $\cX$ and $Y$ is the output or label takes value in a subset $\cY\subset \R$.
The goal is to \emph{predict} $Y$ from $X$.
The purpose of regression is to estimate the regression function defined as any function $f^*$ such that, for any bounded measurable function $\varphi:\cX\to\R$, 
\[
\E[Y\varphi(X)]=\E[f^*(X)\varphi(X)]\enspace.
\]
Assume that $P[Y^2]<\infty$, let $F=L^2(P_X)$ and $\ell_f(x,y)=(y-f(x))^2$.
For any $f\in F$,
\begin{align*}
P\ell_f&=\E[(Y-f^*(X))^2]+2\E[(Y-f^*(X))(f^*(X)-f(X))]+\E[(f^*(X)-f(X))^2]\\
&=P\ell_{f^*}+\E[(f^*(X)-f(X))^2]\enspace. 
\end{align*}
It follows that $f^*$ is $P_X$-almost surely the unique minimizer of $P\ell_f$.
An important difference with the previous examples is that the ``natural" set of parameters $F$ is here infinite dimensional.
To bound properly the risk of the estimators, it is necessary to consider strict subsets $F_0\subset F$ and consider only estimators taking values in $F_0$.
This implies that, rather than the regression function $f^*$, the estimators are more natural estimators of the ``local" oracle 
\[
f^*_0\in \argmin_{f\in F_0}P\ell_f\enspace,
\]
provided that such function exists.
Chapter~\ref{Chap:LSR} is dedicated to the least-squares regression problem.

\subsubsection{Empirical risk minimisation}
One of the most classical algorithm in statistical learning is empirical risk minimization, see \cite{MR1719582}, which considers the estimator $\widehat{f}_{\text{erm}}$ of $f^*$ defined by 
\[
\widehat{f}_{\text{erm}}\in\argmin_{f\in F}P_N\ell_f,\qquad \text{where}\qquad P_N\ell_f:=\frac1N\sum_{i=1}^N\ell_f(Z_i)\enspace.
\]
One of the reasons explaining the success of this estimator is that is minimax optimal in many problems.
Minimax optimal rates can usually be proved for the ERM in problems where data are assumed \emph{independent, identically distributed and sub-Gaussian}.
In the univariate and multivariate mean estimation problems, the empirical risk minimizer $\widehat{f}_{\text{erm}}$ is simply the empirical mean $N^{-1}\sum_{i=1}^NZ_i$. 
In these examples, data are called sub-Gaussian if the Laplace transform of $P$ is bounded from above by the one of Gaussian random variable.
In the univariate mean estimation problem, this means that there exists $\sigma^2>0$ such that
\begin{equation}\label{eq:GaussAss}
\forall s>0,\qquad \log\E[e^{s(Z-\E[Z])}]\leqslant \frac{s^2\sigma^2}2\enspace. 
\end{equation}
Under this assumption, Markov's inequality ensures that, for any $t>0$, and $s>0$,
\begin{align*}
 \P\bigg(\frac1N\sum_{i=1}^NZ_i>\E[Z]+t\bigg)&=\P(e^{sN^{-1}\sum_{i=1}^N(Z_i-\E[Z])}>e^{st})\\
 &\leqslant e^{-st+\log\E[e^{sN^{-1}\sum_{i=1}^N(Z_i-\E[Z])}]}\\
 &=e^{-st+\sum_{i=1}^N\log\E[e^{sN^{-1}(Z_i-\E[Z])}]}\\
 &=e^{-st+\frac{s^2\sigma^2}{2N}}\enspace.
\end{align*}
Optimizing over $s$ yields
\[
\forall t>0,\qquad \P\bigg(\frac1N\sum_{i=1}^NZ_i>\E[Z]+t\bigg)\leqslant e^{-Nt^2/2\sigma^2}\enspace,
\]
or, equivalently 
\[
\forall t>0,\qquad \P\bigg(\frac1N\sum_{i=1}^NZ_i>\E[Z]+\sigma\sqrt{\frac{2t}N}\bigg)\leqslant e^{-t}\enspace.
\]
Sub-Gaussian deviations of the empirical mean are central in the analysis of ERM.
The sub-Gaussian deviation inequality only involves moments of order $1$ and $2$ of the $Z_i$, and an interesting question is whether this inequality remains valid if the $Z_i$ are only assumed to have finite moments of order $2$.
As explained in Chapter~\ref{Chap:UME}, sub-Gaussian deviations of the empirical mean only holds under the sub-Gaussian assumption \eqref{eq:GaussAss}.
When $Z$ is only assumed to have $2$ moments, one cannot essentially do better than Chebishev's inequality (see \cite[Proposition 6.2]{MR3052407} that is recalled in Proposition~\ref{prop:EmpMeanNotGood}) which states that
\[
\forall t>0,\qquad \P\bigg(\frac1N\sum_{i=1}^NZ_i>\E[Z]+\sigma\sqrt{\frac{2t}N}\bigg)\leqslant \frac1t\enspace.
\]
Providing estimators of the mean with sub-Gaussian deviations 
in a relaxed setting where $P$ is only assumed to a finite second moment is one of the guidelines in these notes.
This example shows the importance of evaluating estimators through their \emph{deviation} properties rather than in expectation.
Actually, in expectation
\[
\E[\cE(\widehat{f}_{\text{erm}})]=\E[P\ell_{\widehat{f}_{\text{erm}}}-P\ell_{f^*}]=\E[(\widehat{f}_{\text{erm}}-f^*)^2]=\frac{\sigma^2}N\enspace.
\]
This result is true if $Z$ has a finite moment of order $2$ and does not improve if $Z$ is Gaussian.
This is why these notes focus on Catoni's point of view, see \cite{MR3052407}, evaluating estimators by their deviation properties and proving oracle inequalities.

\section{Robustness}
Robustness is a classical topic in statistics that has been around since the seminal works of Hampel \cite{hampel1968contribution, hampel1971general, hampel1973robust,hampel1974influence, hampel1975beyond}, Huber \cite{huber1964robust, huber1967behavior} and Tukey \cite{tukey1960survey, tukey1962future,tukey74a,tukey74b}, see the classical textbook \cite{MR2488795} for an overview. 
Informally, an estimator is called robust if it behaves nicely even when data are not i.i.d. and sub-Gaussian.
This holds for example, when data are i.i.d. but satisfy only weak moment assumptions like the existence of a second moment only.
A large part of these notes deals with this issue.
An extensive literature has also been studied the case where the dataset is ``close" to the ideal  setup, but may have been corrupted.
This includes the following well known examples.

\paragraph{Model misspecification}
In statistics, this means that the distribution $P$ of $Z$ does not lie into the statistical model $\cP$ where the estimator $\hat{P}$ of $P$ lies.
A classical example of Birg\'e \cite{MR2219712} is the following: assume that $P$ is the mixture ${\rm d}P(x)=(1-1/N){\bf 1}_{x\in [0,1]}+(1/N)\delta_{x=N^2}$ and that the statistical model is the set of uniform distributions $\cP=\{\cU[0,t], t>0\}$.
The distribution $P$ is ``close" to the model $\cP$ since, for example, the Hellinger distance between $P$ and the uniform distribution $\cU([0,1])$ is bounded from above as follows:
\[
h^2(P,\cU([0,1]))\leqslant \frac1{N}\enspace.
\]
However, one of the most classical ERM in statistics, the maximum likelihood estimator, has positive probability to be the distribution $\cU[0,N^2]$ which is a very poor estimator of $P$.

\paragraph{Huber's contamination model.}
In this model, see \cite{MR2488795}, it is assumed that data are i.i.d. with common distribution 
\[
{\rm d} P=(1-\epsilon){\rm d} P_I+\epsilon {\rm d} P_O\enspace.
\]
$P_I$ is the distribution of \emph{inliers}, most of the sample is drawn from this distribution.
$P_I$ is the distribution on which one wants to make assumptions.
$P_O$ is the distribution of \emph{outliers}. 
These are data corrupting the dataset that may have nothing to do with the learning task.
Birg\'e's example is a particular instance of the Huber contamination problem where most of the data is drawn from the uniform distribution on $[0,1]$ but some data may be equal to $N^2$.
Usually, very few assumptions are granted on the outliers distribution.
However, in this model, these data are always independent and independent from the inliers.

\paragraph{The $O\cup I$ frameworks}
In this setting introduced in \cite{MOM1}, there exists a partition (unknown to the statistician) of $\{1,\ldots,N\}$ in two blocks $O$ and $I$.
Data $(Z_i)_{i\in O}$ are the \emph{outliers}, nothing is assumed on these data. 
Data $(Z_i)_{i\in I}$ are the inliers on which one may grant some assumptions.
This model is closely related to the $\epsilon$-contamination model while being slightly different:
\begin{itemize}
 \item Outliers may not be independent, nor independent from the other data $(Z_i)_{i\in I}$.
 This allows ``aggressive" outliers which can look at the dataset to corrupt it.
 \item The proportion of outliers is fixed in the $O\cup I$, equal to $|O|/N$, it is random in the Huber contamination model (although concentrated around $\epsilon$).
\end{itemize}

The main challenges in robust statistics are to \emph{resist} and \emph{detect} outliers.
Resist means looking for procedures that behave in the $\epsilon$-contamination model as well as ``good" estimators such that MLE do when $P=P_I$.
Detect means identifying outliers (think about fraud detection for example).

Of course, in both Huber's contamination model and $O\cup I$ frameworks, it is possible to consider situations where, besides being contaminated, the ``inliers" (those distributed as $P_I$ in Huber's contamination's model and data $(Z_i)_{i\in I}$ in the $O\cup I$ frameworks) only satisfy moment assumptions.
In these notes, I will mostly consider the situation where data are i.i.d. hence, not contaminated (see however Section~\ref{sec:Outliers}).
It is an interesting exercise to check if the different results extend to contaminated settings and which proportion of outliers is tolerated by different methods.

\section{What are these notes about}
The notes are an attempt to extract important \emph{principles} underlying the construction and theoretical analysis of estimators that are referred to as ``robust".
The main task is to build estimators that satisfy the same oracle inequalities as the ERM does when data have sub-Gaussian behavior in a relaxed setting where the Gaussian assumption is replaced by moment hypotheses.
These principles are divided in four main categories.
\begin{itemize}
\item The median-of-means principle allows to build estimators of univariate mean estimation achieving sub-Gaussian deviations, see Chapter \ref{Chap:UME}.
This is arguably the simplest construction allowing to achieve such results.
 \item The minmax principle allows to build from estimators of increments $P[\ell_f-\ell_g]$ (which are univariate means), estimators of ``oracles", see Chapters~\ref{Chap:MME} and~\ref{Chap:MinMax}. 
 The idea of using pairwise comparisons or tests to build estimators goes back to the works of Le Cam and Birgé, the minmax principle is an elegant formulation of this construction which makes a bridge between Birgé/Le Cam's construction and the ERM of Vapnik.
 \item The homogeneity lemma reduces the analysis of minmax estimators to deviation bounds of MOM processes on localized classes, see Chapter~\ref{Chap:MinMax}.
 The homogeneity lemma is an alternative to peeling arguments that can be used when deviation properties cannot be obtained at any confidence level.
  \item The \emph{small ball method} allows to prove (uniform and sub-Gaussian) deviation inequalities of the median-of-mean processes, under weak assumptions but only up to a confidence level that decreases geometrically with the sample size $N$, see Chapter~\ref{Chap:ConcIn}.
\end{itemize}
The combination of these principles allows to prove oracle inequalities simultaneously for the ERM in the sub-Gaussian framework (hence, providing the relevant benchmarks) and for robust alternatives such as minmax MOM estimators.
These procedures, thanks to the median step, naturally resist to a small proportion of outliers in the dataset. 
Chapter~\ref{Sec:RhoEst} presents $\rho$-estimators of \cite{MR3565484, MR3595933,BarBir2017RhoAgg2}.
This presentation is not exhaustive, it stresses some links between this construction and both the minmax principle and the homogeneity lemma.
It should be seen as an invitation to learn this powerful theory

These notes do not cover many important development, they focus on very particular learning tasks and very particular robustness issues, which correspond to problems I have been mostly interested in regarding this subject.
I hope that they will convince some readers to contribute to this rapidly growing literature.
In particular, Chapter~\ref{Chap:CompEst} presents (way too) briefly the literature on robust procedures that are computable in polynomial time.
In particular, numerically efficient methods are not discussed here.
In this direction, important results appeared recently.
In particular, \cite{LLVZ:2019} presents a first algorithm which produces an estimator of multivariate mean with optimal sub-Gaussian deviation rates using a spectral algorithm rather than SDP relaxations as the one presented in Chapter~\ref{Chap:CompEst}.
This new algorithm should behave much better numerically than its concurrent.
Moreover, only the problem of multivariate mean expectation is considered in the note from this computational perspective.
Going from this problem to more generic learning problems is well understood, see in particular \cite{PSBR:2018}.
This material should also be added in future version of these notes.

\chapter{Estimation of univariate means}\label{Chap:UME}
This chapter focuses on one of the most simple problem in statistics where we want to estimate the expectation $\mu_P=P[X]$ of a distribution $P$ from the observation of an i.i.d. sample $\cD_N=(X_1,\ldots,X_N)$ of real valued random variables with common distribution $P$.
These estimators are natural building blocks for more general learning tasks in the following chapters.
We first establish the behaviour of the empirical mean from a \emph{deviation} point of view.
We prove that it achieves good subexponential deviation bounds when $X$ is Gaussian and that Chebyshev's inequality is essentially sharp when $X$ is only assumed to have a bounded second moment.
Then, we study alternative estimators that achieve sub-Gaussian deviation inequalities when $X$ has only $2$ finite moments.

\paragraph{Notation}
All along the chapter, $\cP_2$ denotes the class of all probability distributions on $\R$ with finite second moment and $\cP\subset \cP_2$.
For any $P\in \cP_2$, $\mu_P$ denotes the expectation of $P$ and $\sigma_P^2$ its variance. 
$X_1,\ldots,X_N$ denotes an i.i.d. sample and for any $P\in\cP_2$, $\P=P^{\otimes N}$.
An estimator $\muh$ of $\mu_P$ is a real valued random variable $\muh=F(\cD_N)$, where $F:\R^N\to\R$ is a measurable function.

\section{Empirical mean}
The arguably most simple estimator of $\mu_P$ is the empirical mean 
\[
P_NX=\overline{X}_N=\frac1N\sum_{i=1}^NX_i\enspace.
\] 
\subsection{Lower bounds}
The empirical mean plays an important role in these notes in the case where the random variables are Gaussian.
The reason is that the deviation of the empirical mean in this example are somehow extremal as can be seen from the following result.
\begin{proposition}\label{prop:GaussIsOpt}\cite[Proposition 6.1]{MR3052407}
Assume that $\cP$ contains all Gaussian distributions $\gauss(\mu,\sigma^2)$. For any estimator $\muh$ of $\mu_P\in\R$, any $t>0$, there exists $P\in \cP$ such that the empirical mean $\overline{X}_N=N^{-1}\sum_{i=1}^NX_i$ satisfies either
\[
\P\paren{\muh-\mu_P>t}\geqslant \P\paren{\overline{X}_N>\mu_P+t}\quad \text{or}\quad \P\paren{\muh-\mu_P<-t}\geqslant \P\paren{\overline{X}_N<\mu_P-t}\enspace.
\]
\end{proposition}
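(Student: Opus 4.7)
The plan is to turn the one-sided deviation inequality into a two-point testing problem and apply the Neyman–Pearson lemma, exploiting the fact that $\overline{X}_N$ is the sufficient statistic for the Gaussian mean.

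Fix $\sigma^2>0$ and set $P_0=\gauss(\mu_0,\sigma^2)$ and $P_1=\gauss(\mu_0+2t,\sigma^2)$ (both belong to $\cP$ by assumption). Writing $\mu_1=\mu_0+2t$, define the event $A=\{\muh>\mu_0+t\}$ based on the given estimator. Then
\[
\P_0(\muh-\mu_0>t)=\P_0(A),\qquad \P_1(\muh-\mu_1<-t)=\P_1(A^c),
\]
so it suffices to show that at least one of these is bounded below by the corresponding Gaussian probability for $\overline{X}_N$. I would do this by bounding their sum from below using Neyman–Pearson.

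The key step is to note that, under $P_0$ versus $P_1$, the likelihood ratio of $(X_1,\ldots,X_N)$ is a strictly increasing function of $\overline{X}_N$, so the Bayes test with equal priors rejects $P_0$ exactly when $\overline{X}_N>(\mu_0+\mu_1)/2=\mu_0+t$. Neyman–Pearson then gives, for any event $A$ measurable with respect to the data,
\[
\P_0(A)+\P_1(A^c)\;\geqslant\;\P_0(\overline{X}_N>\mu_0+t)+\P_1(\overline{X}_N\leqslant \mu_0+t).
\]
Since $\overline{X}_N$ is $\gauss(\mu_0,\sigma^2/N)$ under $P_0$ and $\gauss(\mu_1,\sigma^2/N)$ under $P_1$, and both are symmetric around their means, a direct change of variables yields $\P_1(\overline{X}_N\leqslant \mu_0+t)=\P_1(\overline{X}_N-\mu_1\leqslant -t)$ and by symmetry this equals $\P_0(\overline{X}_N-\mu_0\geqslant t)$. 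Hence the right-hand side equals $\P_0(\overline{X}_N-\mu_0>t)+\P_1(\overline{X}_N-\mu_1<-t)$ (modulo the measure-zero event $\{\overline{X}_N=\mu_0+t\}$, which is harmless).

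To conclude, observe that if two sums satisfy $a_0+a_1\geqslant b_0+b_1$, then either $a_0\geqslant b_0$ or $a_1\geqslant b_1$. Applying this to the inequality just obtained, either $\P_0(\muh-\mu_0>t)\geqslant \P_0(\overline{X}_N-\mu_0>t)$, in which case we take $P=P_0$ and get the first alternative, or $\P_1(\muh-\mu_1<-t)\geqslant \P_1(\overline{X}_N-\mu_1<-t)$, in which case we take $P=P_1$ and get the second. The main (and only nontrivial) obstacle is the Neyman–Pearson step: one has to be careful about randomised tests and the boundary event $\{\overline{X}_N=\mu_0+t\}$, but since the Gaussian distribution is atomless this has probability zero and causes no trouble.
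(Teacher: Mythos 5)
Your argument is correct and essentially identical to the paper's: both reduce the claim to a two-point lower bound on $\P_0(A)+\P_1(A^c)$ for two Gaussians whose means differ by $2t$, and both identify the optimal test as thresholding $\overline{X}_N$ at the midpoint of the two means (the paper obtains this optimum by computing the total mass of $\P_{-t}\wedge\P_t$, while you invoke the Neyman--Pearson lemma, which is the same fact in different words). One small inaccuracy in your closing remark: the boundary set you actually need to dismiss is $\{\widehat{\mu}=\mu_0+t\}$, not $\{\overline{X}_N=\mu_0+t\}$; atomlessness of the Gaussian disposes of the latter but not the former, since a general estimator $\widehat{\mu}$ may charge the threshold, so one of your two alternatives comes out with a non-strict inequality in the $\widehat{\mu}$-event --- but the paper's own proof carries exactly the same harmless slack, so this is not a gap relative to the paper.
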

\begin{proof}
For any $i\in\{-t,t\}$, let $P_i$ denote the Gaussian distribution with variance $1$ and respective mean $\mu_{P_i}=i$. By construction
\begin{multline*}
\P_{t}\paren{\muh\leqslant \mu_{P_{t}}-t}+\P_{-t}\paren{\muh\geqslant\mu_{P_{-t}}+t}=\P_{t}\paren{\muh\leqslant 0}+\P_{-t}\paren{\muh\geqslant 0}\\
 \geqslant (\P_{-t}\wedge\P_{t})\paren{\muh\leqslant 0}+(\P_{-t}\wedge\P_{t})\paren{\muh\geqslant 0}\geqslant |\P_{-t}\wedge\P_{t}|\enspace.
\end{multline*}
Here $\P_{-t}\wedge\P_{t}$ denotes the measure whose density is the minimum between those of $\P_{-t}$ and $\P_{t}$ and $ |\P_{-t}\wedge\P_{t}|$ is its total variation. 
Now, $\P_{i}$ has density 
\[
\frac1{(2\pi)^{n/2}}e^{-\frac1{2}\|\bx-\mu_{P_i}{\bf 1}\|^2}\enspace,
\]
therefore, $\P_{-t}\wedge\P_{t}$ has density $\P_{t}$ for any $\bx\in \R^N$ such that
\[
\|\bx-\mu_{P_t}{\bf 1}\|^2\geqslant \|\bx-\mu_{P_{-t}}{\bf 1}\|^2\quad\text{that is, such that}\quad \overline{x}_N=N^{-1}\sum_{i=1}^Nx_i\leqslant 0\enspace.
\]
Therefore,
\begin{multline*}
|\P_{-t}\wedge\P_{t}|=\P_t\paren{\overline{X}_N\leqslant 0}+\P_{-t}\paren{\overline{X}_N\geqslant 0}\\
=\P_{t}\paren{\overline{X}_N\leqslant\mu_{P_{t}}-t}+\P_{-t}\paren{\overline{X}_N\geqslant\mu_{P_{-t}}+t}\enspace.
\end{multline*}
Overall,
\begin{multline*}
\P_{t}\paren{\muh\leqslant \mu_{P_{t}}-t}+\P_{-t}\paren{\muh\geqslant\mu_{P_{-t}}+t}\\
 \geqslant\P_{t}\paren{\overline{X}_N\leqslant\mu_{P_{t}}-t}+\P_{-t}\paren{\overline{X}_N\geqslant\mu_{P_{-t}}+t}\enspace.
\end{multline*}
This implies the result.
\end{proof}
\subsection{Upper bounds in the sub-Gaussian case}
In order to establish the benchmark for future estimators, recall the following upper bound on the deviations of the empirical mean in the Gaussian case.
\begin{proposition}\label{prop:GaussIsSuGauss}
 If $X\sim \gauss(\mu,\sigma^2)$, then the empirical mean $P_NX$ satisfies
 \[
 \forall t>0,\qquad \P\bigg(|P_NX-\mu|>\sigma\sqrt{\frac{2t}N}\bigg)\leqslant e^{-t}\enspace.
 \]
\end{proposition}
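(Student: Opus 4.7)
The plan is to reduce the statement to the sub-Gaussian Chernoff bound already derived in the introduction of the chapter. The key observation is that if $X\sim\gauss(\mu,\sigma^2)$, then a direct computation of the Gaussian Laplace transform gives
\[
\log\E[e^{s(X-\mu)}]=\frac{s^2\sigma^2}{2},
\]
so $X$ satisfies the sub-Gaussian condition \eqref{eq:GaussAss} with equality. In particular, the hypothesis of the Chernoff-type argument displayed in the introduction is met.

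Then I would simply invoke (or briefly redo) that argument: by Markov's inequality applied to $e^{sN(P_NX-\mu)}$ and independence of the $X_i$,
\[
\P(P_NX-\mu>u)\leqslant e^{-su}\prod_{i=1}^N\E[e^{s(X_i-\mu)/N}]\cdot e^{0}=\exp\!\paren{-su+\frac{s^2\sigma^2}{2N}},
\]
and optimizing in $s>0$ gives $\P(P_NX-\mu>\sigma\sqrt{2t/N})\leqslant e^{-t}$ for every $t>0$. For the lower tail, I would apply the same argument to $-X$, which is again $\gauss(-\mu,\sigma^2)$, yielding $\P(P_NX-\mu<-\sigma\sqrt{2t/N})\leqslant e^{-t}$.

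Alternatively (and perhaps more cleanly, since everything is exactly Gaussian here rather than merely sub-Gaussian), I would use the fact that independent Gaussians add to a Gaussian, so that $P_NX\sim\gauss(\mu,\sigma^2/N)$ exactly, and then quote the standard one-sided Gaussian tail bound $\P(Z>u)\leqslant e^{-u^2/(2s^2)}$ for $Z\sim\gauss(0,s^2)$, with $s^2=\sigma^2/N$ and $u=\sigma\sqrt{2t/N}$.

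There is no real obstacle in this proof; the only minor subtlety is the factor $2$ one would normally expect from combining the two tail bounds via a union bound. I would handle this either by noting that the statement should be read as a one-sided bound (combined with the symmetric lower-tail bound by applying the argument to $-X$), or by absorbing the factor $2$ into a redefinition of the confidence level $t\mapsto t+\log 2$ to recover exactly $e^{-t}$ on the right-hand side.
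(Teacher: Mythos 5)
Your Chernoff/sub-Gaussian argument and your crude one-sided Gaussian tail bound $\P(Z>u)\leqslant e^{-u^2/(2s^2)}$ both yield $\P(P_NX-\mu>\sigma\sqrt{2t/N})\leqslant e^{-t}$ for \emph{each} tail, so the union bound gives $\P(|P_NX-\mu|>\sigma\sqrt{2t/N})\leqslant 2e^{-t}$, not $e^{-t}$. You flag this as a ``minor subtlety,'' but neither of your proposed fixes actually closes it: the statement is genuinely two-sided (the absolute value is there), and redefining $t\mapsto t+\log 2$ proves a bound at threshold $\sigma\sqrt{2(t+\log 2)/N}$, which is not what is asserted. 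This is a real gap, and it cannot be patched by sub-Gaussianity alone, since the factor $2$ loss is intrinsic to the Chernoff argument.

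The paper exploits the fact that the Gaussian tail is strictly sharper than the Chernoff bound by a factor $1/2$:
\[
1-\Phi(x)=\int_{0}^{\infty}e^{-(u+x)^2/2}\frac{\rmd u}{\sqrt{2\pi}}\leqslant e^{-x^2/2}\int_{0}^{\infty}e^{-u^2/2}\frac{\rmd u}{\sqrt{2\pi}}=\frac{e^{-x^2/2}}{2}\enspace,
\]
using $(u+x)^2\geqslant u^2+x^2$ for $u,x\geqslant 0$. Since $\sqrt{N}(P_NX-\mu)/\sigma\sim\gauss(0,1)$ exactly, each tail is then bounded by $\tfrac12 e^{-t}$, and the union bound gives exactly $e^{-t}$. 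Your second route (``everything is exactly Gaussian'') is the right idea, but you must quote the bound $1-\Phi(x)\leqslant\tfrac12 e^{-x^2/2}$ rather than $1-\Phi(x)\leqslant e^{-x^2/2}$; that extra $1/2$ is precisely what makes the constant come out right, and it is the whole reason the proposition is stated for exactly Gaussian data rather than for general sub-Gaussian data.
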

\begin{proof}
Since $X\sim \gauss(\mu,\sigma^2)$, $ P_NX\sim\gauss(\mu,\sigma^2/N)$ and $\sqrt{N}(P_NX-\mu)/\sigma\sim\gauss(0,1)$.
The Gaussian distribution satisfies
\begin{align*}
1-\Phi(x)&=\int_{x}^{+\infty}e^{-u^2/2}\frac{{\rm d}u}{\sqrt{2\pi}}=\int_{0}^{+\infty}e^{-(u+x)^2/2}\frac{{\rm d}u}{\sqrt{2\pi}}\\
&\leqslant e^{-x^2}\int_{0}^{+\infty}e^{-u^2/2}\frac{{\rm d}u}{\sqrt{2\pi}}=\frac{e^{-x^2/2}}{2}\enspace. 
\end{align*}
Therefore,
\[
\forall x>0,\qquad \P\bigg(\frac{\sqrt{N}(P_NX-\mu)}{\sigma}>x\bigg)\leqslant \frac{e^{-x^2/2}}{2}\enspace. 
\]
This is equivalent to 
\[
\forall t>0,\qquad \P\bigg(P_NX-\mu>\sigma\sqrt{\frac{2t}N}\bigg)\leqslant \frac{e^{-t}}{2}\enspace. 
\]
Applying this inequality to $-X_i$ yields 
\[
\forall t>0,\qquad \P\bigg(P_NX-\mu<-\sigma\sqrt{\frac{2t}N}\bigg)\leqslant \frac{e^{-t}}{2}\enspace. 
\]
The result follows therefore from a union bound.
\end{proof}
The result on Gaussian distributions naturally extends to any sub-Gaussian distribution, thanks to Hoeffding's inequality.
Let $\sigma>0$. Recall that a random variable $X$ is called $\sigma$-sub-Gaussian if, for any $s>0$, 
\[
\E[e^{s(X-\E[X])}]\leqslant e^{\sigma^2s^2/2}\enspace.
\]
A Gaussian random variable with variance $\sigma^2$ is $\sigma$-sub-Gaussian.
Another important are bounded variables as shown by the following result.
\begin{lemma}[Hoeffding's Lemma]
 If $X\in [a,b]$, then $X$ is $(b-a)/2$-sub-Gaussian.
\end{lemma}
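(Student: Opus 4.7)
The plan is to prove the lemma by the classical ``cumulant bound'' strategy: center the random variable, study the log-moment generating function via its first two derivatives, and conclude with a Taylor argument. The one non-trivial input is Popoviciu-type bound on the variance of bounded random variables.

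\medskip

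First I would reduce to the centered case. Setting $Y=X-\E[X]$, the variable $Y$ is mean-zero and takes values in the interval $[a-\E[X],b-\E[X]]$, which has the same length $b-a$. The inequality to prove becomes
\[
\psi(s):=\log\E[e^{sY}]\leqslant \frac{s^2(b-a)^2}{8},\qquad\forall s\in\R.
\]
I would first verify $\psi$ is $C^\infty$ on $\R$ (since $Y$ is bounded, the moment generating function is finite and can be differentiated under the integral), and note $\psi(0)=0$ and $\psi'(0)=\E[Y]=0$.

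\medskip

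Next I would compute
\[
\psi'(s)=\frac{\E[Y e^{sY}]}{\E[e^{sY}]},\qquad \psi''(s)=\frac{\E[Y^2 e^{sY}]}{\E[e^{sY}]}-\left(\frac{\E[Y e^{sY}]}{\E[e^{sY}]}\right)^{\!2},
\]
and recognize $\psi''(s)$ as the variance of $Y$ under the tilted probability measure $\rmd Q_s=e^{sY}\rmd P/\E[e^{sY}]$. Since $Y$ is $P$-a.s.\ in an interval of length $b-a$, it is also $Q_s$-a.s.\ in that same interval. Hence, using the elementary fact that any random variable $Z$ supported in an interval $[\alpha,\beta]$ satisfies $\mathrm{Var}(Z)\leqslant \E[(Z-(\alpha+\beta)/2)^2]\leqslant (\beta-\alpha)^2/4$, I conclude
\[
\psi''(s)\leqslant \frac{(b-a)^2}{4},\qquad\forall s\in\R.
\]

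\medskip

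Finally, by Taylor's theorem with integral remainder, for any $s\in\R$ there exists $\xi$ between $0$ and $s$ such that
\[
\psi(s)=\psi(0)+s\psi'(0)+\frac{s^2}{2}\psi''(\xi)\leqslant \frac{s^2(b-a)^2}{8},
\]
which is the desired sub-Gaussian bound with parameter $(b-a)/2$. The only mildly delicate step is the identification of $\psi''(s)$ with the tilted variance together with the uniform variance bound; the rest is routine calculus. No step presents a real obstacle.
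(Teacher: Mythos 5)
Your proof is correct and follows essentially the same strategy as the paper's: reduce to the centered case, identify $\psi''(s)$ as the variance under the tilted measure $\rmd Q_s\propto e^{sY}\rmd P$, bound it by $(b-a)^2/4$ via the Popoviciu-type argument, and conclude by integrating twice (the paper writes the double integral explicitly, you invoke Taylor's theorem with Lagrange remainder, but these are the same computation). No meaningful difference in approach.
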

Hoeffding's Lemma is proved in Lemma~\ref{lem:Hoeff} in the following Chapter.

\medskip 

Deviation properties of Sub-Gaussian random variables are easy to get from the Chernoff bound.
Let $X$ denote a random variable and, for any $s$ for which it make sense, let $\psi(s)=\log\E[e^{s(X-\E[X])}]$.
Chernoff bound is an upper bound on the deviation of the variable $X$.
Let $t>0$, then, by Markov's inequality, for any $s\geqslant 0$ such that $\psi(s)$ is well defined,
\begin{equation}\label{eq:CB}
\P(X-\E[X]>t)=\P(e^{s(X-\E[X])}>e^{st})\leqslant e^{-st+\psi(s)}\enspace. 
\end{equation}
When $X$ is $\sigma$-sub-Gaussian, $\psi(s)\leqslant s^2\sigma^2/2$, for all $s\geqslant 0$, hence,
\[
\P(X-\E[X]>t)\leqslant e^{-st+s^2\sigma^2/2}\enspace.
\]
As this holds for any $s>0$, one can apply it to $s=t/\sigma^2$ and we obtain
\begin{equation}\label{eq:CBSG}
\forall t>0,\qquad \P(X-\E[X]>t)\leqslant e^{-\frac{t^2}{2\sigma^2}}\enspace. 
\end{equation}

%
The empirical mean of independent sub-Gaussian random variables is sub-Gaussian, as shown by the following inequality.
\begin{lemma}
If $X_1,\ldots,X_n$ are independent random variables and if, for any $i\in\{1,\ldots,n\}$, $X_i$ is $\sigma_i$-sub-Gaussian, then $n^{-1}\sum_{i=1}^nX_i$ is $n^{-1}\sqrt{\sum_{i=1}^n\sigma_i^2}$-sub-Gaussian.
\end{lemma}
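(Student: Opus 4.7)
The plan is to verify the sub-Gaussian bound directly from the definition, which requires controlling the Laplace transform of the centered average $Y = n^{-1}\sum_{i=1}^n X_i - \E[Y]$. The natural approach is to factor the exponential of a sum into a product of exponentials, which is made possible by independence, and then apply the individual sub-Gaussian bounds termwise.

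Concretely, for $s \geqslant 0$ I would first write
\[
\E\bigl[e^{s(Y - \E Y)}\bigr] = \E\Bigl[\exp\Bigl(\tfrac{s}{n}\sum_{i=1}^n (X_i - \E X_i)\Bigr)\Bigr] = \prod_{i=1}^n \E\bigl[e^{(s/n)(X_i - \E X_i)}\bigr],
\]
where the second equality uses independence of the $X_i$. Next I would apply the $\sigma_i$-sub-Gaussian hypothesis to each factor with parameter $s/n$ in place of $s$, which gives $\E[e^{(s/n)(X_i - \E X_i)}] \leqslant \exp\bigl(\sigma_i^2 s^2/(2n^2)\bigr)$. Multiplying these bounds together yields
\[
\E\bigl[e^{s(Y - \E Y)}\bigr] \leqslant \exp\Bigl(\frac{s^2}{2n^2}\sum_{i=1}^n \sigma_i^2\Bigr),
\]
which is exactly the sub-Gaussian bound with parameter $\sigma = n^{-1}\sqrt{\sum_{i=1}^n \sigma_i^2}$.

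There is no serious obstacle here: the argument is essentially a one-line computation once the definition is unfolded. The only minor point worth remarking on is that the sub-Gaussian inequality must be applied at the scaled parameter $s/n$ rather than at $s$, so that the $1/n^2$ factor inherited from averaging combines correctly with the $\sigma_i^2$ inside the exponent. This is just bookkeeping, but it is where a careless proof could accidentally produce a bound with the wrong dependence on $n$.
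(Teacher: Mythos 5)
Your proof is correct and follows essentially the same route as the paper: reduce to centered variables, factor the Laplace transform via independence, apply the sub-Gaussian hypothesis at the scaled argument $s/n$, and multiply. In fact your write-up is slightly more careful than the paper's, which contains a minor typo (writing $\E[e^{sX_i}]$ instead of $\E[e^{(s/n)X_i}]$ inside the product and an equality where an inequality belongs).
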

\begin{proof}
Assume without loss of generality that each $\E[X_i]=0$.
Let $s>0$, then
\[
\E[e^{\frac{s}n\sum_{i=1}^nX_i}]=\prod_{i=1}^n\E[e^{sX_i}]=e^{\frac{s^2}{2n^2}\sum_{i=1}^n\sigma_i^2}\enspace.
\]
\end{proof}
Together with \eqref{eq:CBSG}, we obtain the following corollary.
\begin{corollary}\label{cor:SGEM}
  If $X_1,\ldots,X_n$ are independent random variables and if, for any $i\in\{1,\ldots,n\}$, $X_i$ is $\sigma_i$-sub-Gaussian, then
  \[
 \forall t>0,\qquad   \P\bigg(\frac1n\sum_{i=1}^n(X_i-\E[X_i])>t\bigg)\leqslant e^{-\frac{n^2t^2}{2\sum_{i=1}^n\sigma_i^2}}\enspace.
  \]
\end{corollary}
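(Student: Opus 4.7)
The plan is to combine the two preceding results in the obvious way: the lemma immediately above tells us that the empirical mean $\bar X_n = n^{-1}\sum_{i=1}^n X_i$ is itself sub-Gaussian, and the Chernoff bound \eqref{eq:CBSG} then converts that moment-generating-function estimate into a tail bound. There is no real obstacle; the whole argument is a one-line assembly.

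More precisely, first I would apply the preceding lemma to conclude that the centered average $\bar X_n - \E[\bar X_n]$ is $\sigma$-sub-Gaussian with $\sigma = n^{-1}\sqrt{\sum_{i=1}^n \sigma_i^2}$. Note the lemma is stated for $n^{-1}\sum X_i$, but sub-Gaussianity is a property of the centered variable, and since the centering $\E[\bar X_n] = n^{-1}\sum \E[X_i]$ does not affect the MGF of the centered average, this is immediate.

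Second, I would invoke the sub-Gaussian Chernoff bound \eqref{eq:CBSG} applied to the variable $Y = \bar X_n$ with sub-Gaussian parameter $\sigma = n^{-1}\sqrt{\sum_{i=1}^n \sigma_i^2}$. This yields, for every $t>0$,
\[
\P\bigg(\frac1n\sum_{i=1}^n(X_i-\E[X_i])>t\bigg) \;\leqslant\; e^{-\frac{t^2}{2\sigma^2}}.
\]

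Third, I would simplify the exponent: with $\sigma^2 = n^{-2}\sum_{i=1}^n \sigma_i^2$ one has
\[
\frac{t^2}{2\sigma^2} \;=\; \frac{n^2 t^2}{2\sum_{i=1}^n \sigma_i^2},
\]
which gives exactly the claimed bound. The argument is purely mechanical once the lemma and \eqref{eq:CBSG} are in hand, so no step requires nontrivial work — the only thing to be careful about is the centering in the application of the lemma, which is harmless since the defining inequality of $\sigma$-sub-Gaussianity is already written for the centered variable.
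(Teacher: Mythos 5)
Your proposal is correct and takes exactly the paper's route: the corollary is obtained in the text by combining the preceding lemma (the centered average of independent sub-Gaussian variables is $n^{-1}\sqrt{\sum_i\sigma_i^2}$-sub-Gaussian) with the sub-Gaussian Chernoff bound \eqref{eq:CBSG}, and the algebraic simplification of the exponent is the same.
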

In the particular case of finite valued random variables discussed in Hoeffding's lemma, this corollary yields the standard version of Hoeffding's inequality.
\begin{corollary}[Hoeffding's inequality]\label{cor:HoeffIneq}
   If $X_1,\ldots,X_n$ are independent random variables and if, for any $i\in\{1,\ldots,n\}$, $X_i$ takes values in $[a_i,b_i]$, then
  \[
\forall t>0,\qquad   \P\bigg(\frac1n\sum_{i=1}^n(X_i-\E[X_i])>t\bigg)\leqslant e^{-\frac{2n^2t^2}{\sum_{i=1}^n(b_i-a_i)^2}}\enspace.
  \]

\end{corollary}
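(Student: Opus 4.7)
The plan is essentially to combine two results already stated: Hoeffding's Lemma (which identifies the sub-Gaussian parameter of a bounded random variable) and Corollary~\ref{cor:SGEM} (which gives the Chernoff-type deviation bound for an average of independent sub-Gaussian variables). Since both ingredients are on the table, no new idea is required — the whole argument is one substitution.

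Concretely, I would first invoke Hoeffding's Lemma to conclude that for each $i$, the bounded variable $X_i \in [a_i,b_i]$ is $\sigma_i$-sub-Gaussian with $\sigma_i = (b_i-a_i)/2$. Then I would apply Corollary~\ref{cor:SGEM} directly to the independent family $X_1,\ldots,X_n$ with these parameters. That corollary yields
\[
\P\bigg(\frac1n\sum_{i=1}^n(X_i-\E[X_i])>t\bigg)\leqslant \exp\!\left(-\frac{n^2 t^2}{2\sum_{i=1}^n \sigma_i^2}\right).
\]
It only remains to substitute $\sigma_i^2 = (b_i-a_i)^2/4$ into the denominator; the factor $1/4$ cancels against the $1/2$ to produce the claimed constant $2$ in the exponent, giving $\exp\!\bigl(-2n^2t^2/\sum_{i=1}^n(b_i-a_i)^2\bigr)$.

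There is no real obstacle here: the only thing one could get wrong is the bookkeeping of constants between the sub-Gaussian parameter convention (where $\sigma_i = (b_i-a_i)/2$ rather than $\sigma_i = b_i-a_i$) and the factor $1/2$ appearing in the Chernoff bound \eqref{eq:CBSG}. Since the statement of the corollary is an inequality about a one-sided deviation, no union bound is needed and the proof reduces to this single substitution.
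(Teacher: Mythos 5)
Your proof is correct and is exactly the paper's intended derivation: the corollary appears immediately after Corollary~\ref{cor:SGEM} with the remark that it is obtained "in the particular case of finite valued random variables discussed in Hoeffding's lemma," i.e.\ by plugging $\sigma_i=(b_i-a_i)/2$ into Corollary~\ref{cor:SGEM}. Your constant bookkeeping ($\sigma_i^2=(b_i-a_i)^2/4$ turning $1/(2\sum\sigma_i^2)$ into $2/\sum(b_i-a_i)^2$) is right.
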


\subsection{Sub-Gaussian estimators}
All along these notes, we build estimators achieving the same deviation rates as the empirical mean in the (sub-)Gaussian case, since these rates are somehow extremal from Proposition~\ref{prop:GaussIsOpt}. 
Proposition~\ref{prop:GaussIsSuGauss} suggests a first definition of ``good" estimators of $\mu_P$.
%
%
\begin{definition}[sub-Gaussian estimator \cite{MR3576558}]\label{def:subgauss}
Let $A\in[0,+\infty]$, $B,C\geqslant 0$. An estimator $\muh$ of $\mu_P$ is called $(A,B,C)$-sub-Gaussian over $\cP$ if, for any $P\in \cP$,
\[
\forall t\in(0, A),\qquad \P\paren{|\muh-\mu_P|>B\sigma_P\sqrt{\frac{1+t}N}}\leqslant Ce^{-t}\enspace.
\]
\end{definition}
Of course, $(A,B,C)$-sub-Gaussian estimators with $A=+\infty$ are the most desirable.
%
Proposition~\ref{prop:GaussIsSuGauss} shows that the empirical mean is $(+\infty,\sqrt{2},1)$-sub-Gaussian over the class $\cP_{\text{gauss}}=\{\gauss(\mu,\sigma^2), \ \mu \in \R,\sigma^2>0\}$ and Corollary~\ref{cor:SGEM} shows that this result extends to the class of sub-Gaussian distributions.
As the empirical mean is satisfying on $\cP_{\text{gauss}}$, we may wonder if it is also the case on $\cP_2$.
The following proposition proves that this is unfortunately not true and that Chebyshev's inequality is sharp in general.
\begin{proposition}\label{prop:EmpMeanNotGood}\cite[Proposition 6.2]{MR3052407}
 For any $\sigma^2$ and $t>0$, there exists a distribution $P\in \cP_2$ with variance $\sigma^2_P=\sigma^2$ (and mean $\mu_P=0$) such that the empirical mean $\overline{X}_N=N^{-1}\sum_{i=1}^NX_i$ satisfies
 \[
 \P\paren{\overline{X}_N\geqslant t}=\P\paren{\overline{X}_N\leqslant -t}\geqslant \frac{\sigma^2}{2t^2N}\paren{1-\frac{\sigma^2}{t^2N^2}}^{N-1}\enspace.
 \]
\end{proposition}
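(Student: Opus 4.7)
The plan is to exhibit an explicit symmetric three-point distribution and to lower bound the probability of the deviation event by the probability of a single, very favorable configuration of the sample.

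Set $p := \sigma^2/(2t^2N^2)$, and treat first the interesting regime $p\leqslant 1/2$. I would take $P$ to be the discrete distribution supported on $\{-tN,0,tN\}$ with
\[
P(\{tN\})=P(\{-tN\})=p,\qquad P(\{0\})=1-2p.
\]
By symmetry $\mu_P=0$, and the variance is $\sigma_P^2 = 2p(tN)^2 = \sigma^2$, so $P\in\cP_2$ has exactly the required first two moments.

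Next, I would introduce the event
\[
E:=\bigcup_{i=1}^N \{X_i=tN\text{ and } X_j=0\text{ for every }j\neq i\}\enspace.
\]
On $E$ one has $\sum_{j=1}^N X_j = tN$, hence $\overline{X}_N = t$, so $E\subset\{\overline{X}_N\geqslant t\}$. The $N$ events in this union are disjoint, and by independence each has probability $p(1-2p)^{N-1}$, giving
\[
\P(\overline{X}_N\geqslant t)\geqslant \P(E)=Np(1-2p)^{N-1}=\frac{\sigma^2}{2t^2N}\paren{1-\frac{\sigma^2}{t^2N^2}}^{N-1}\enspace.
\]
Finally, since $P$ is symmetric about $0$, the random variables $\overline{X}_N$ and $-\overline{X}_N$ have the same law, so $\P(\overline{X}_N\geqslant t)=\P(\overline{X}_N\leqslant-t)$, which completes both inequalities.

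The only delicate point is the edge case $p>1/2$, i.e.\ $\sigma^2>t^2N^2$. When $N-1$ is odd the claimed bound is negative and the inequality is trivial; when $N-1$ is even one can simply check that the claimed bound is at most $2^{-N}$ and use instead the symmetric two-point distribution on $\{\pm\sigma\}$, for which $\P(\overline{X}_N\geqslant t)\geqslant\P(X_1=\cdots=X_N=\sigma)=2^{-N}$. Apart from bookkeeping, the key observation driving the proof is that a single large-magnitude observation of size $tN$ already pushes the empirical mean to $t$, and the three-point construction is exactly calibrated so that the probability of such a single-spike configuration matches the stated lower bound.
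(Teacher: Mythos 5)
Your proof is correct and follows the same approach as the paper: the same symmetric three-point distribution on $\{-tN,0,tN\}$ calibrated to have variance $\sigma^2$, followed by lower-bounding $\P(\overline{X}_N\geqslant t)$ by the probability that exactly one observation equals $tN$ and the rest vanish. Your only addition is the explicit treatment of the edge case $\sigma^2>t^2N^2$ (where the three-point distribution is ill-defined), a regime the paper's proof silently assumes away; your observation that the bound is then either vacuous or trivially met is a nice bit of tidying, though not needed for the result as it is used elsewhere in the chapter.
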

\begin{remark}
Proposition~\ref{prop:EmpMeanNotGood} implies in particular that, for any value of $N$ and $t>1/4$, there exists a distribution $P=P_{N,t}$ such that
\[
\P\bigg(\overline{X}_N-\mu_P>\sigma\sqrt{\frac{2t}N}\bigg)\geqslant \frac{e^{-1/(2t)}}{4t}\geqslant \frac{1}{4e^2t}\enspace.
\]
In words, this means that, for any constants $B$ and $C$, there exists $A=f(B,C)$ such that  the empirical mean is not an $(A,B,C)$ sub-Gaussian estimator.
\end{remark}

\begin{proof}
 Consider the distribution taking values in $\{-Nt,0,Nt\}$ such that 
\[
\P(Nt)=\P(-Nt)=\frac{1-\P(0)}2=\frac{\sigma^2}{2N^2t^2}\enspace.
\]
This distribution is centered with variance $\sigma^2$. As this distribution is symmetric,
\[
\P\paren{\overline{X}_N\geqslant t}=\P\paren{\overline{X}_N\leqslant -t}\enspace.
\] 
Moreover, 
\begin{align*}
\P\paren{\overline{X}_N\geqslant t}&\geqslant \P\paren{\overline{X}_N=  t}\\
&\geqslant \P\paren{\exists! i\in\{1,\ldots,N\}: X_i=Nt,\; \forall j\ne i,\; X_j=0}\enspace. 
\end{align*}
It is clear that this last event has probability $\frac{\sigma^2}{2t^2N}\paren{1-\frac{\sigma^2}{t^2N^2}}^{N-1}$ as desired.
\end{proof}

\section{Level-dependent sub-Gaussian estimators}
Proposition~\ref{prop:EmpMeanNotGood} implies that, for any choice of $B$ and $C$, there exists a constant $f(B,C)$ such that, for any $N\geqslant 1$ and any $A\geqslant f(B,C)$, the empirical mean is not an $(A,B,C)$-sub-Gaussian estimator over $\cP_2$.
The question is therefore if there exist estimators $\muh$, constants $B$ and $C$ and a sequence $A_N\to\infty$ such that, for any $N\geqslant 1$, $\muh$ is a $(A_N,B,C)$ sub-Gaussian estimator over $\cP_2$.
The following result shows that, actually, this problem cannot be solved over $\cP_2$.

Let $\theta>0$ and let $\text{Po}_{\theta}$ denote the Poisson distribution such that
\[
\text{Po}_{\theta}(k)=\frac{\theta^k}{k!}e^{-\theta},\qquad \forall k\in \Z_+\enspace.
\]
For any $\theta>0$, the expectation and the variance of $\text{Po}_{\theta}$ are equal to $\theta$.
\begin{theorem}\label{thm:NoMultP2}
 Assume that $\cP$ contains all Poisson's distributions. Then, for any $(B,C)$, there exists $A=f(B,C)$ such that for any $N\geqslant 1$, there does not exist $(A,B,C)$-sub-Gaussian estimators over $\cP$.
\end{theorem}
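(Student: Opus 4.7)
The plan is a contradiction argument that combines a boundary-distribution pinning with a two-point hypothesis test. First I would exploit the degenerate Poisson $\text{Po}_0 = \delta_0$, which lies in $\cP$. At $P = \text{Po}_0$ we have $\mu_P = 0$ and $\sigma_P = 0$, so the sub-Gaussian bound degenerates to $\P(\muh \neq 0) \leqslant Ce^{-t}$ for every $t \in (0, A)$; since $\delta_0^{\otimes N}$ is concentrated on the single sample $(0,\ldots,0)$, this pins the value of the estimator at that atom: $\muh(0,\ldots,0) = 0$.

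Next I would transport this rigidity to $P = \text{Po}_\theta$ with $\theta > 0$. The event $E = \{X_1 = \cdots = X_N = 0\}$ has probability $e^{-N\theta}$ under $\text{Po}_\theta^{\otimes N}$, and on $E$ one has $\muh = 0$ and hence $|\muh - \theta| = \theta$. Whenever $\theta > B^2(1+t)/N$, this inclusion puts $E$ entirely inside the sub-Gaussian deviation event, so the hypothesis forces
\[
e^{-N\theta} \leqslant Ce^{-t}, \qquad \text{i.e.,}\quad t \leqslant N\theta + \log C.
\]
Pushing $\theta$ down to the threshold $B^2(1+t)/N$ turns this into $t(1-B^2) \leqslant B^2 + \log C$, which is contradicted as soon as $t > (B^2 + \log C)/(1 - B^2)$, yielding an explicit $f(B,C)$ in the regime $B < 1$.

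For the remaining regime $B \geqslant 1$ I would complement the boundary argument with a two-point test between $\text{Po}_{\theta_1}$ and $\text{Po}_{\theta_2}$. The midpoint threshold test built from $\muh$ has each error at most $Ce^{-t}$ as soon as $\theta_2 - \theta_1 > 2B\sqrt{\theta_2(1+t)/N}$, and Neyman--Pearson combined with sufficiency of $S_N = \sum_i X_i \sim \text{Po}_{N\theta}$ yields the matching lower bound $\text{TV}(\text{Po}_{N\theta_1}, \text{Po}_{N\theta_2}) \geqslant 1 - 2Ce^{-t}$. The main obstacle is producing a sufficiently sharp upper bound on this Poisson total variation: the naive Hellinger bound $\text{TV} \leqslant H$ with $H^2(\text{Po}_\mu, \text{Po}_{\mu'}) = 2(1 - e^{-(\sqrt{\mu'} - \sqrt{\mu})^2/2})$ is not tight enough for large $B$. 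To close the argument I would instead pick $\theta_1, \theta_2$ in the large-deviation regime of Poisson tails (specifically $\theta_i$ small enough that $B\sqrt{\theta_i(1+t)/N}$ exceeds $\theta_i$), where Bennett's rate function $h(y) = (1+y)\log(1+y) - y$ grows strictly slower than $y^2/2$; this heavier-than-Gaussian tail should furnish the extra slack needed to contradict the NP lower bound uniformly in $(B,C)$ and pin down $f(B,C)$.
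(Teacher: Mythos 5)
Your Part 1 argument is essentially sound \emph{in the regime $B<1$}, modulo one small gap: you invoke the degenerate Poisson $\text{Po}_0=\delta_0$ to pin $\muh(0,\ldots,0)=0$, but the paper defines $\text{Po}_\theta$ only for $\theta>0$, so $\delta_0$ need not lie in $\cP$. This is repairable (if $\muh(0,\ldots,0)=c\neq 0$, take $\theta\downarrow 0$ and $t\uparrow A$ and the bound already fails; if $c=0$, your computation with $\theta>B^2(1+t)/N$ goes through and gives $t(1-B^2)\leqslant B^2+\log C$). But this only rules out sub-Gaussian estimators when $B<1$, which is \emph{not} the regime of interest: the benchmark constant from the Gaussian case is $\sqrt 2$.

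The substantive gap is Part 2. You correctly identify that in the regime $B\geqslant 1$ the separation constraint $\sqrt{\mu_2}-\sqrt{\mu_1}\gtrsim B\sqrt{1+t}$ combined with the Hellinger lower bound $1-\text{TV}\geqslant \tfrac12 e^{-(\sqrt{\mu_2}-\sqrt{\mu_1})^2}$ cannot produce a contradiction with $1-\text{TV}>2Ce^{-t}$ once $B\geqslant 1$. But you then stop at a conjecture: you propose to replace Hellinger by a large-deviations estimate of the Poisson overlap $\sum_k\min(\text{Po}_{\mu_1}(k),\text{Po}_{\mu_2}(k))$ near the crossing point of the two pmfs, and assert the heavier-than-Gaussian tail ``should furnish the extra slack.'' That estimate is delicate and not routine; as written this is a plan, not a proof, and it is not clear the TV route closes at all for all $(B,C)$. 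Moreover the conclusion you would extract ($f(B,C)$ possibly of size $e^{1/(B^2-1)}$, blowing up as $B\downarrow 1$) would need to be reconciled with Part 1's bound, which degenerates as $B\uparrow 1$ from the other side, and you would need a uniform argument across that boundary.

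The paper avoids the TV computation entirely with a different device: condition on the event $\Omega=\{\sum_i X_i=R\}$ and use the fact that, given the total count, the conditional law of $(X_1,\ldots,X_N)$ under $\text{Po}_{\theta_1^2}^{\otimes N}$ and $\text{Po}_{\theta_2^2}^{\otimes N}$ is the same (multinomial with uniform cell probabilities). Taking $\theta_1^2=1/N$ and $\theta_2^2=R/N$, the sub-Gaussian bound at $P_2$ forces $\muh\gtrsim R/N$ with conditional probability $\geqslant 1/2$ on $\Omega$; the conditioning trick transfers this to $P_1$; multiplying by $\P_1(\Omega)\gtrsim e^{-cR\log R}$ and comparing with the sub-Gaussian bound at $P_1$, which would require probability $\lesssim e^{-cR^2}$ for this event, gives a contradiction for $R$ large. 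Because $R\log R = o(R^2)$, this works uniformly in $B$ and $C$. That conditional-distribution step is the missing key idea: it replaces a hard two-sided TV estimate by an exact identity, and it is what your proposal lacks.
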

%
%

\begin{proof}
Let $\muh$, $A,B,C$ such that $\muh$ is $(A,B,C)$-sub-Gaussian over $\cP$. 
 Let $\theta_1=\sqrt{1/N}$ and $\theta_2=\sqrt{R/N}$. Let $P_i=\text{Po}_{\theta_i^2}$, $\P_i=P_i^{\otimes N}$ for any $i\in\{1,2\}$. 
The sub-Gaussian hypothesis on $\muh$ implies that
\begin{gather*}
 \P_1\paren{|\muh-\theta_1^2|>B\theta_1\sqrt{\frac{1+t}N}}\leqslant Ce^{-t},\qquad \forall t\leqslant A\enspace,\\
 \P_2\paren{|\muh-\theta_2^2|>B\theta_2\sqrt{\frac{1+t}N}}\leqslant Ce^{-t},\qquad \forall t\leqslant A\enspace.
\end{gather*}
Denote by $\Omega=\{\sum_{i=1}^NX_i=R\}$. 
Define $h(x)=(x/2)\log (x/2)$. Applying Stirling's formula as $R\to\infty$ shows that there exists $R_0$ such that, for any $R\geqslant R_0$,
\begin{gather*}
\P_1(\Omega)=e^{-1}\frac{1}{R!}\sim \frac{e^{-R\log R-2R}}{\sqrt{2\pi R}}\geqslant e^{-h(R)} \enspace,\\
\P_2(\Omega)=e^{-R}\frac{R^R}{R!}\sim \frac{1}{\sqrt{2\pi R}}\geqslant \frac1{4\sqrt{R}} \enspace.
\end{gather*}
We deduce from these estimates that
\begin{gather*}
 \P_2\paren{\muh<\frac{R-B\sqrt{R(1+t)}}N|\Omega}\leqslant 4C\sqrt{R}e^{-t}\enspace.
\end{gather*}
If $A\geqslant \log(8C\sqrt{R})$, one can apply this inequality with $t=\log(8C\sqrt{R})$ to get 
\begin{gather*}
 \P_2\paren{\muh<\frac{R-B\sqrt{R(1+\log(8C\sqrt{R}))}}N|\Omega}\leqslant \frac{1}{2}\enspace.
\end{gather*}
This is equivalent to
\[
 \P_2\paren{\muh\geqslant\frac{R-B\sqrt{R(1+\log(8C\sqrt{R}))}}N|\Omega}\geqslant \frac{1}{2}\enspace.
\]
Now, we apply the following Poisson's trick
\[
\cD_1\bigg(X_1,\ldots,X_N\bigg|\sum_{i=1}^NX_i=R\bigg)=\cD_2\bigg(X_1,\ldots,X_N\bigg|\sum_{i=1}^NX_i=R\bigg)\enspace.
\]
In particular,
\[
 \P_1\paren{\muh\geqslant\frac{R-B\sqrt{R(1+\log(8C\sqrt{R}))}}N|\Omega}\geqslant \frac{1}{2}\enspace.
\]
Therefore, by the estimate on $\P_2(\Omega)$, 
\[
 \P_1\paren{\muh\geqslant\frac{R-B\sqrt{R(1+\log(8C\sqrt{R}))}}N}\geqslant \frac{e^{-h(R)}}{2}\enspace.
\]
Now, for any $R$ larger than a fixed $R_0(B,C)$, 
\[
R-B\sqrt{R(1+\log(8C\sqrt{R}))}\geqslant 1+B\sqrt{1+R^2/(2B^2)}\enspace.
\]
Pick $R\geqslant R_0(B,C)$ and $A\geqslant R^2/(2B^2)$, the sub-Gaussian property of $\muh$ applied with $t=R^2/(2B^2)$ yields
\[
 \P_1\paren{\muh\geqslant\frac{R-B\sqrt{R(1+\log(8C\sqrt{R}))}}N}\leqslant Ce^{-R^2/(2B^2)}\enspace.
\]
In other words, $Ce^{-R^2/(2B^2)}\geqslant e^{-h(R)}/2$ which is possible only if $R\leqslant R_1(B,C)$. 
In conclusion, the result is possible only if $R\leqslant R_0(B,C)\vee R_1(B,C)$ which means that an $(A,B,C)$ sub-Gaussian $\muh$ exists only if $A\leqslant R^2/(2B^2)\vee\log(8C\sqrt{R})$, for some $R\leqslant R_0(B,C)\vee R_1(B,C)$.
\end{proof}

Theorem~\ref{thm:NoMultP2} shows that the notion of sub-Gaussian estimators is a bit too constraining to work on $\cP_2$.
Hereafter, the following relaxation of the sub-Gaussian property will be used extensively.

\begin{definition}
Let $t\in (0,1)$. A level-dependent estimator $\muh_t$ of $\mu$ is a function of the data and $t$: $\muh_t=F(\cD_N,t)$, where $F$ is a measurable map $\R^{N+1}\to\R$.

The (level-dependent) estimator $\muh_t$ of $\mu_P$ is called $(B,C)$-sub-Gaussian at level $t$ over $\cP$ if, for any $P\in \cP$, 
\[
\P\paren{|\muh_t-\mu_P|>B\sigma_P\sqrt{\frac{1+t}N}}\leqslant Ce^{-t}\enspace.
\]
\end{definition}
The notion of level dependent estimator may seem surprising at first sight.
It is however a key concept in these notes.
The first reason is that one can build level-dependent estimators over the class $\cP_2$ up to levels $t\asymp N$ as we will see in the following section.

\section{Median-Of-Means estimators}
%
%
%
This section introduces a basic example of level-dependent sub-Gaussian estimators, called median-of-means estimators (MOM).
These estimators date back at least from the textbook \cite{MR702836} although they have been around before.
For example, a similar construction also appeared independently in \cite{MR762855}.
These estimators are used systematically in these notes to build robust extensions of ERM.

Let $K$ and $b$ such that $N=Kb$ and let $B_1,\ldots,B_K$ denote a partition of $\{1,\ldots,N\}$ into subsets of cardinality $b$.
For any $k\in \{1,\ldots,K\}$, let $P_{B_k}X=b^{-1}\sum_{i\in B_k}X_i$.
The MOM estimators of $\mu_P$ are defined by 
\[
\MOM{K}{X}\in\text{median}\set{P_{B_k}X,\; k\in\{1,\ldots,K\}}\enspace.
\]
The following result shows that $\MOM{K}{X}$ is a level dependent estimator over $\cP_2$ for a proper choice of $t\asymp K$.
\begin{proposition}\label{pro:MOMRBase}
For any $K$, $\epsilon>0$ and $P\in \cP_2$,
 \[
 \P\paren{|\MOM{K}{X}-\mu_P|>\epsilon}\leqslant e^{-2K(1/2-\sigma^2K/(N\epsilon^2))^2}\enspace.
 \]
It follows that, for any $\delta>0$, choosing $\epsilon=\sigma_P\sqrt{(2+\delta)K/N}$ yields
\[
\P\paren{|\MOM{K}{X}-\mu|>\sigma_P\sqrt{(2+\delta)\frac{K}N}}\leqslant e^{-\frac{\delta^2K}{2(2+\delta)^2}}\enspace.
\]
Choosing $\delta=2$ yields
\[
\P\paren{|\MOM{K}{X}-\mu|>2\sigma_P\sqrt{\frac{K}N}}\leqslant e^{-K/8}\enspace.
\]
$\MOM{K}{X}$ is a $(4\sqrt{2},1)$-sub-Gaussian estimator at level $K/8$. 
\end{proposition}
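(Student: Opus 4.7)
The plan is to reduce the deviation of the median of the $K$ block means to a deviation of a sum of independent Bernoulli indicators, and then to apply Hoeffding's inequality for bounded random variables.

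First, for each block index $k\in\{1,\ldots,K\}$ I would introduce the Bernoulli indicator $Z_k=\mathbf{1}\{|P_{B_k}X-\mu_P|>\epsilon\}$. The key combinatorial observation is that if the median of the block means deviates from $\mu_P$ by more than $\epsilon$, then at least half of the block means themselves must deviate by more than $\epsilon$: indeed, if $\MOM{K}{X}>\mu_P+\epsilon$ then at least $K/2$ of the $P_{B_k}X$ exceed $\mu_P+\epsilon$, and symmetrically if $\MOM{K}{X}<\mu_P-\epsilon$. Hence
\[
\set{|\MOM{K}{X}-\mu_P|>\epsilon}\subset \set{\sum_{k=1}^K Z_k\geqslant K/2}\enspace.
\]

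Second, I would bound the common mean parameter. Since $P_{B_k}X$ averages $b=N/K$ i.i.d.\ variables of variance $\sigma_P^2$, its variance equals $\sigma_P^2K/N$, so Chebyshev's inequality gives $\E Z_k\leqslant \sigma_P^2K/(N\epsilon^2)=:p$. The blocks $B_k$ being disjoint, $Z_1,\ldots,Z_K$ are independent and valued in $[0,1]$. Writing
\[
\P\paren{\sum_{k=1}^K Z_k\geqslant \frac{K}2}\leqslant \P\paren{\frac1K\sum_{k=1}^K (Z_k-\E Z_k)\geqslant \frac12-p}\enspace,
\]
I would apply Hoeffding's inequality (Corollary~\ref{cor:HoeffIneq}) to obtain the desired bound $e^{-2K(1/2-p)^2}$, which is valid whenever $p<1/2$ and trivially true otherwise.

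The remaining parts of the statement are routine algebraic checks that I would leave mostly implicit: substituting $\epsilon=\sigma_P\sqrt{(2+\delta)K/N}$ gives $p=1/(2+\delta)$ and therefore $2K(1/2-p)^2=\delta^2 K/(2(2+\delta)^2)$; specializing $\delta=2$ produces the $e^{-K/8}$ bound; and the $(4\sqrt{2},1)$-sub-Gaussian claim at level $t=K/8$ reduces to checking $4\sqrt{2}\sqrt{(1+t)/N}\geqslant 2\sqrt{K/N}$, which holds at this value of $t$. No step is technically delicate; the one substantive idea is the first combinatorial reduction from a median deviation to a majority of block deviations, which is precisely the mechanism by which the median gains robustness over the mean.
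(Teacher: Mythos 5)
Your proof is correct and takes essentially the same route as the paper: the same combinatorial reduction from a deviation of the median to a majority of deviating block means, the same Chebyshev bound on the per-block failure probability, and the same application of Hoeffding's inequality to the $K$ independent bounded indicators.
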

\begin{proof}
Fix $\epsilon>0$. 
The first analysis of MOM estimators is based on the remark that, if there are more than $K/2$ blocks $B_k$ such that $|P_{B_k}X-\mu_P|\leqslant \epsilon$, then $|\MOM{K}{X}-\mu_P|\leqslant \epsilon$. 
Formally,
\begin{align*}
\{|\MOM{K}{X}-\mu_P|\leqslant \epsilon\}&\supset \set{|\{k\in\{1,\ldots,K\} : |P_{B_k}X-\mu_P|\leqslant \epsilon\}|\geqslant \frac K2} \\
&=\bigg\{\sum_{k\in\{1,\ldots,K\}}{\bf 1}_{\{ |P_{B_k}X-\mu_P|> \epsilon\}}< \frac K2\bigg\}\enspace.
\end{align*}
Denote by $p_{\epsilon}=\P\paren{|P_{B_k}X-\mu_P|> \epsilon}$, $Y_k={\bf 1}_{\{ |P_{B_k}X-\mu_P|> \epsilon\}}-p_\epsilon$. This implies
\begin{align*}
 \P\paren{|\MOM{K}{X}-\mu_P|\leqslant \epsilon}\geqslant 1-\P\paren{\sum_{i=1}^KY_k\geqslant K(1/2-p_\epsilon)}\enspace.
\end{align*}
As $(Y_k)_{k=1,\ldots,K}$ are independent random variables bounded by $1$, by Hoeffding's inequality (recalled in \eqref{eq:HoeffdingIneq}),
\[
 \P\paren{|\MOM{K}{X}-\mu_P|\leqslant \epsilon}\geqslant 1-e^{2(1/2-p_\epsilon)^2K}\enspace.
\]
The proof is concluded since, by Chebishev's inequality,
\[
p_{\epsilon}\leqslant \frac{\sigma^2_PK}{N\epsilon^2}\enspace.
\]
\end{proof}

The previous elementary result shows that median-of-means estimators are level-dependent sub-Gaussian estimators. 
It is based on a very basic first argument that easily generalise to other frameworks.
However, the result can be refined using Gaussian approximation and slightly stronger hypotheses.
The following result is due to Minsker and Strawn \cite{Minsker-Strawn2017}.
It shows that, under slightly stronger assumptions on $P$, MOM estimators are also sub-Gaussian estimators (not level-dependent). Let 
\[
\cP_3^\gamma=\{P\in \cP_2 : P[|X|^3]<\infty\text{ and }\sigma^{-3}\E[|X-\mu|^3]\leqslant \gamma\}\enspace.
\]

\begin{theorem}\label{thm:Minsker-Strawn}
For any $P\in \cP^\gamma_3$ and any $t>0$ such that $0.5\gamma\sqrt{K/N}+\sqrt{t/2K}\leqslant 1/3$, 
\[
\P\paren{|\MOM{K}{X}-\mu_P|>\sigma_P\paren{1.5\gamma\frac{ K}N+3\sqrt{\frac{t}{2N}}}}\leqslant 4e^{-t}\enspace.
\]
\end{theorem}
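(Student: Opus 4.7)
The plan is to sharpen the block–level tail estimate used in Proposition \ref{pro:MOMRBase}: replace Chebyshev's inequality (which controls $p_\epsilon := \P(|P_{B_k}X - \mu_P| > \epsilon)$ only by $\sigma_P^2/(b\epsilon^2)$) by a Berry--Esseen Gaussian approximation, which is why the stronger third--moment hypothesis $P \in \cP_3^\gamma$ is needed. The rest of the argument (the combinatorial ``more than $K/2$ blocks must deviate'' reduction followed by Hoeffding) is kept as in the proof of Proposition~\ref{pro:MOMRBase}.

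First, I would fix a block $B_k$ of size $b = N/K$ and apply the Berry--Esseen theorem to the standardized block mean $S_k := \sqrt{b}(P_{B_k}X - \mu_P)/\sigma_P$. Under the third--moment assumption,
\[
\sup_{u \in \R}|\P(S_k \leqslant u) - \Phi(u)| \leqslant \frac{1}{2}\cdot \frac{\E[|X-\mu_P|^3]}{\sigma_P^3\sqrt{b}} \leqslant \frac{\gamma}{2\sqrt{b}} = \frac{\gamma}{2}\sqrt{\frac{K}{N}}.
\]
Setting $x = \epsilon\sqrt{b}/\sigma_P$ and using symmetry of the normal law, both one-sided block probabilities satisfy
\[
\P(P_{B_k}X - \mu_P > \epsilon)\;\vee\; \P(P_{B_k}X - \mu_P < -\epsilon) \leqslant 1-\Phi(x) + \frac{\gamma}{2}\sqrt{\frac{K}{N}}.
\]

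Next I would record the elementary inequality $\Phi(x) - 1/2 \geqslant x/3$ for all $x \in [0,1]$. This follows by checking that the function $g(x) = \Phi(x) - 1/2 - x/3$ vanishes at $0$, has $g'(0) = 1/\sqrt{2\pi} - 1/3 > 0$, and has only one interior critical point on $[0,1]$ (at $\phi(x) = 1/3$), while $g(1) > 0$. With this, choosing $\epsilon$ so that $x = \epsilon\sqrt{b}/\sigma_P = 3\bigl(\tfrac{\gamma}{2}\sqrt{K/N} + \sqrt{t/(2K)}\bigr)$ produces exactly the threshold $\epsilon = \sigma_P\bigl(1.5\gamma K/N + 3\sqrt{t/(2N)}\bigr)$ stated in the theorem, and the hypothesis $0.5\gamma\sqrt{K/N} + \sqrt{t/(2K)} \leqslant 1/3$ is precisely what guarantees $x \leqslant 1$ so that the linear lower bound on $\Phi(x)-1/2$ is applicable. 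For this $\epsilon$ the one-sided block probability is bounded by
\[
1 - \Phi(x) + \tfrac{\gamma}{2}\sqrt{K/N} \leqslant \tfrac{1}{2} - \sqrt{t/(2K)}.
\]

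Finally I would reproduce the combinatorial step from Proposition~\ref{pro:MOMRBase}: the event $\{\MOM{K}{X} - \mu_P > \epsilon\}$ is contained in $\{\#\{k : P_{B_k}X - \mu_P > \epsilon\} \geqslant K/2\}$, and symmetrically for the lower tail. Applying Hoeffding's inequality (Corollary~\ref{cor:HoeffIneq}) to the $\{0,1\}$-valued indicators on each of the two tails gives for each tail a probability at most
\[
\exp\!\Bigl(-2K\bigl(\tfrac{1}{2}-p_\epsilon^{\pm}\bigr)^2\Bigr) \leqslant \exp(-2K\cdot t/(2K)) = e^{-t},
\]
and a union bound on the two tails yields the claimed probability $\leqslant 4e^{-t}$ (the factor $4$ absorbs the slack in the Berry--Esseen constant and the linear approximation of $\Phi$).

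The main obstacle is purely quantitative: carrying the constants through the Berry--Esseen step (choosing the Shevtsova-type constant $1/2$ so the hypothesis $0.5\gamma\sqrt{K/N} + \sqrt{t/(2K)} \leqslant 1/3$ emerges naturally) and verifying the elementary bound $\Phi(x) - 1/2 \geqslant x/3$ on $[0,1]$. Conceptually the argument is a drop-in improvement of the earlier proof: Berry--Esseen replaces Chebyshev, and the $\sqrt{K}$-penalty in the confidence term disappears, leaving only the Gaussian rate $\sqrt{t/N}$ plus a third-moment bias of order $K/N$.
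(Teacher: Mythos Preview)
Your proposal is correct and follows essentially the same route as the paper: Berry--Esseen on the standardized block means, the linear lower bound $\Phi(x)-\tfrac12\geqslant x/3$ on $[0,1]$, and Hoeffding on the block indicators. The only cosmetic difference is that the paper organizes the argument through the empirical tail function $Q_K^{(b)}(t)=K^{-1}\sum_k\mathbf{1}\{\sqrt{b}(P_{B_k}X-\mu_P)/\sigma_P>t\}$ and bounds $|Q_K^{(b)}-Q^{(b)}|$ and $|Q^{(b)}-Q|$ separately, whereas you work directly with the one-sided block probabilities; as a side effect your one-sided Hoeffding actually gives $2e^{-t}$, so your closing remark about the factor $4$ ``absorbing slack'' is unnecessary.
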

\begin{remark}
 As long as $K\leqslant \sqrt{N}$, this result implies that 
 \[
 \forall t\lesssim \sqrt{N},\qquad \P\paren{|\MOM{\sqrt{N}}{X}-\mu_P|>\sigma_P\paren{(1.5\gamma+3)\sqrt{\frac{1+t}{2N}}}}\leqslant 4e^{-t}\enspace.
 \]
 In other words, there exists a constant $C(\gamma)$ such that the estimator $\MOM{\sqrt{N}}{X}$ is $(O(\sqrt{N}), C(\gamma),4)$-sub-Gaussian over $\cP_3^\gamma$.
 Theorem~\ref{thm:Minsker-Strawn} is not in contradiction with Theorem~\ref{thm:NoMultP2} since $\cP_3^\gamma$ does not contain all Poisson's distributions.
\end{remark}
\begin{proof}
Denote by 
\begin{gather*}
Q^{(b)}_K(t)=\frac1K\sum_{k=1}^K{\bf 1}_{\set{\sqrt{b}\frac{P_{B_k}X-\mu}{\sigma_P}> t}}\enspace.
\end{gather*}
The goal is to find deterministic quantities $t_-$ and $t_+$ such that, the event $\Omega_{t_-,t_+}$ has large probability, where
\[
\Omega_{t_-,t_+}=\bigg\{1-Q^{(b)}_K(t_+)\geqslant \frac12,\quad Q^{(b)}_K(t_-)\geqslant \frac12\bigg\}\enspace.
\]
Indeed, on $\Omega_{t_-,t_+}$, it holds
\[
\text{median}\paren{\sqrt{b}\frac{P_{B_k}X-\mu}{\sigma_P},k\in\{1,\ldots,K\}}\in[t_-,t_+]\enspace.
\]
By homogeneity and translation invariance of the median, this implies that, on $\Omega_{t_-,t_+}$,
\begin{equation}\label{eq:BoundMOMOnOmegat}
 \sigma_P\frac{t_-}{\sqrt{b}}\leqslant \MOM{K}{X}-\mu_P\leqslant \sigma_P\frac{t_+}{\sqrt{b}}\enspace.
\end{equation}
Fix $t\in \R$, to bound $Q^{(b)}_K(t)$, let us first introduce
\[
Q^{(b)}(t)=\P\paren{\sqrt{b}\frac{P_{B_1}X-\mu}{\sigma}> t}\enspace.
\]
By Hoeffding's inequality,
\begin{equation}\label{eq:Hoeffding1}
\forall x>0,\qquad \P\paren{|Q^{(b)}_K(t)-Q^{(b)}(t)|>\sqrt{\frac{x}{2K}}}\leqslant 2e^{-x}\enspace. 
\end{equation}
Hence, for any $t_-, t_+$ in $\R$ such that 
\begin{equation}\label{cdt1}
Q^{(b)}(t_+)\leqslant 1/2-\sqrt{\frac{x}{2K}},\qquad  Q^{(b)}(t_-)\geqslant 1/2+\sqrt{\frac{x}{2K}}\enspace.
\end{equation}
A union bound in \eqref{eq:Hoeffding1} shows that
\[
\P\paren{\Omega_{t_-,t_+}}\geqslant 1-4e^{-x}\enspace.
\]
Therefore, \eqref{eq:BoundMOMOnOmegat} holds for these values of $t_-,t_+$ with probability at least $1-4e^{-x}$.
To evaluate $t_-$, $t_+$ in \eqref{cdt1}, introduce now
\[
Q(t)=1-\Phi(t)=\int_{t}^{+\infty}e^{-x^2/2}\frac{\rmd x}{\sqrt{2\pi}}\enspace.
\]
By Berry-Essen theorem, 
\begin{equation}\label{eq:BEThm}
\big\|Q^{(b)}-Q\big\|_{\infty}\leqslant 0.5\gamma\sqrt{\frac K{N}}\enspace. 
\end{equation}
Therefore, \eqref{cdt1} is fulfilled if $t_-$ and $t_+$ satisfy
\[
Q(t_+)\leqslant 1/2-\sqrt{\frac{x}{2K}}-0.5\gamma\sqrt{\frac K{N}},\qquad  Q(t_-)\geqslant 1/2+\sqrt{\frac{x}{2K}}+0.5\gamma\sqrt{\frac K{N}}\enspace.
\]
Using the mean valued theorem, for any $t\in (0,\sqrt{\log(9/2\pi)})$, $|Q'(t)|=e^{-t^2/2}/\sqrt{2\pi}\geqslant 1/3$, therefore
\[
Q(t)\leqslant Q(0)-\frac{t}3= \frac12-\frac{t}{3}\enspace.
\]
Therefore, \eqref{cdt1} is fulfilled if
\[
t_+=3\sqrt{\frac{x}{2K}}+1.5\gamma\sqrt{\frac K{N}},\qquad t_-=-t_+\enspace.
\]
\end{proof}


Proposition~\ref{pro:MOMRBase} shows that, when $K\geqslant8t$, $\MOM{K}{X}$ is a level-dependent sub-Gaussian estimators at level $t$. 
In particular, as $K$ can be equal to $N$, there exist level-dependent sub-Gaussian estimators at levels $t$ that might be of order $N$. The following result shows that this rate cannot be improved in general. 
Let $\lambda\in \R$ and let $\text{La}_{\lambda}$ denote the Laplace distribution with density
\[
f_\lambda(x)=\frac12e^{-|x-\lambda|},\qquad \forall x\in \R\enspace.
\]
This distribution has expectation $\lambda$ and variance $2$.

\begin{proposition}\label{pro:Nisgood}
 Assume that $\cP$ contains all Laplace distributions. Then, for any $B$ and $C$ there exists a constant $f(B,C)$ such that, for any $N\geqslant 1$, there does not exists a $(B,C)$ level-dependent sub-Gaussian estimator at level $t\geqslant f(B,C)N$.
\end{proposition}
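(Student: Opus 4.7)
The plan is to apply a two-point Le~Cam-style argument to a pair of translated Laplace distributions, exploiting the fact that Laplace likelihood ratios are uniformly bounded. Suppose towards contradiction that $\muh_t$ is a $(B,C)$ level-dependent sub-Gaussian estimator at level $t$ on a class $\cP$ containing all Laplace distributions. Let $\sigma=\sqrt{2}$ (the standard deviation of any $\text{La}_\lambda$), set $\eta=B\sigma\sqrt{(1+t)/N}$, fix $\lambda_1=0$, $\lambda_2=2\eta+\varepsilon$ for an arbitrarily small $\varepsilon>0$ (to avoid boundary technicalities with the strict inequality in the definition), and write $\P_i=\text{La}_{\lambda_i}^{\otimes N}$. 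Let $m=(\lambda_1+\lambda_2)/2$ be the midpoint.

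Next I would use the sub-Gaussian hypothesis on each side. If $\muh_t>m$, then $\muh_t-\lambda_1>\eta$, so $\P_1(\muh_t>m)\leqslant Ce^{-t}$; symmetrically, if $\muh_t\leqslant m$ then $|\muh_t-\lambda_2|>\eta$, giving $\P_2(\muh_t\leqslant m)\leqslant Ce^{-t}$, hence $\P_2(\muh_t>m)\geqslant 1-Ce^{-t}$.

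The key input special to the Laplace family is the pointwise bound
\[
\log\frac{f_{\lambda_2}(x)}{f_{\lambda_1}(x)}=|x-\lambda_1|-|x-\lambda_2|\in[-\delta,\delta],\qquad \delta:=\lambda_2-\lambda_1,
\]
by the triangle inequality. Consequently, for any event $A$, $\P_1(A)\geqslant e^{-N\delta}\P_2(A)$. Applied to $A=\{\muh_t>m\}$ this yields
\[
Ce^{-t}\geqslant \P_1(\muh_t>m)\geqslant e^{-N\delta}(1-Ce^{-t}).
\]
Provided $t\geqslant\log(2C)$ the factor $1-Ce^{-t}\geqslant 1/2$, so after rearranging and letting $\varepsilon\to 0$,
\[
N\delta = 2\sqrt{2}\,B\sqrt{N(1+t)}\geqslant t-\log(2C).
\]

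The last step is routine bookkeeping: using $\sqrt{N(1+t)}\leqslant \sqrt{N}+N\sqrt{t/N}$ and squaring, one checks that the displayed inequality forces $t\leqslant f(B,C)\,N$ for a constant $f(B,C)$ of order $B^2$ (plus a term absorbing $\log(2C)$). Hence whenever $t\geqslant f(B,C)\,N$ the assumed sub-Gaussian inequality cannot hold, which is the claim. The main obstacle is to be careful about strict versus weak inequalities in the sub-Gaussian definition (handled by the infinitesimal shift $\varepsilon$) and to choose $f(B,C)$ so that the contradiction applies uniformly for every $N\geqslant 1$, including small $N$; but no deeper tool is needed, in contrast to the Poissonian Stirling-based argument used for Theorem~\ref{thm:NoMultP2}.
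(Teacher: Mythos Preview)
Your proof is correct and follows essentially the same two-point Le~Cam argument as the paper: compare $\text{La}_0$ and a shifted $\text{La}_\lambda$, use the uniform bound on the Laplace log-likelihood ratio (triangle inequality) to transfer probabilities between the two, and combine with the sub-Gaussian deviation bound on each side to force $t\lesssim B^2 N$. The only differences are cosmetic: you parametrize by $t$ and solve for the separation $\delta$, whereas the paper fixes the separation $\lambda$ and sets $t=N\lambda^2/(8B^2)$; your handling of the $(1+t)$ in the definition and the $\varepsilon$-shift for strict inequalities is in fact more careful than the paper's own write-up.
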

\begin{proof}
Proceed by contradiction and let $\muh_t$ denote such an estimator. Let $P_1=\text{La}_0$ and $P_2=\text{La}_{\lambda}$ and $t=N\lambda^2/(4B^2\sigma_{P_i}^2)=N\lambda^2/(8B^2)$. By the triangular inequality,
\[
f_2(x_1,\ldots,x_N)\leqslant e^{\lambda N}f_1(x_1,\ldots,x_N)\enspace.
\]
Therefore 
\[
\P_2\bigg(\muh_t>\frac{\lambda}2\bigg)\leqslant e^{\lambda N}\P_1\bigg(\muh_t>\frac{\lambda}2\bigg)\enspace.
\]
Now $\lambda=P_2X$, so, by the sub-Gaussian property of $\muh_t$,
\[
\P_2\bigg(\muh_t>\frac{\lambda}2\bigg)=\P_2\bigg(\muh_t-\mu_{P_2}>B\sigma_{P_2}\sqrt{\frac{t}N}\bigg)\geqslant 1-Ce^{-t}=1-Ce^{-N\lambda^2/(8B^2)}\enspace.
\]
Likewise, the sub-Gaussian property of $\muh_t$ yields
\[
\P_1\bigg(\muh_t>\frac{\lambda}2\bigg)\leqslant\P_1\bigg(|\muh_t-\mu_{P_1}|<B\sigma_{P_1}\sqrt{\frac{t}N}\bigg)\leqslant Ce^{-t}=Ce^{-N\lambda^2/(8B^2)}\enspace.
\]
Overall, this yields
\[
1-Ce^{-N\lambda^2/(8B^2)}\leqslant Ce^{-N\lambda(1-\lambda)/(8B^2)}\enspace.
\]
Whatever the value of $N\geqslant 1$, this relationship is absurd for any $\lambda\geqslant \lambda_0(B,C)$ which implies that the existence of $\muh_t$ is absurd for any $t\geqslant N\lambda_0(B,C)^2/(8B^2)$.
\end{proof}

\section{$M$-estimators}

This section introduces an alternative to MOM estimators which is extremely popular in robust statistics. 
These estimators are known as $M$-estimators. 
The asymptotic of these estimators is well known and an overview of these results can be found in  \cite{MR2488795}. 
Recall that
\[
\mu\in\argmin_{\nu\in\R}\E[(X-\nu)^2],\quad P_NX\in\argmin_{\nu\in \R}\sum_{i=1}^N(X_i-\nu)^2\enspace.
\]
The principle of $M$-estimation is to replace the function $x\mapsto x^2$ in this formulation by another function $\Psi$ and build 
\[
\muh\in\argmin_{\nu\in\R}\sum_{i=1}^N\Psi(X_i-\nu)\enspace.
\]
The most famous example of $M$-estimator used to estimate $\mu_P$ is given by Huber's function 
\[
\Psi_c(x)=
\begin{cases}
 \frac{x^2}{2}&\text{if} |x|\leqslant c\\
 c|x|-\frac{c^2}{2}&\text{if} |x|> c
\end{cases}
\enspace.
\]
This function is continuously differentiable, with derivative $\psi_c(x)=x{\bf 1}_{|x|\leqslant c}+c\text{sign}(x){\bf 1}_{|x|>c}$, $\Psi_c$ is convex and $c$-Lipshitz. 
Huber's estimators interpolate between the empirical mean that would be obtained for $\Psi=x^2$ and the empirical median that would be obtained for $\Psi=|x|$.
In this section, we study the Huber estimators defined either by
\begin{equation}\label{eq:DefHuber}
 \muh_c\in\argmin_{\nu\in \R}\sum_{i=1}^N\Psi_c(X_i-\nu)
\end{equation}
or as a solution of the equation 
\begin{equation}\label{eq:DefHuber2}
 P_N\psi_c(\cdot-\nu)=\frac1N\sum_{i=1}^N\psi_c(X_i-\nu)=0\enspace.
\end{equation}
Using the formulation \eqref{eq:DefHuber2}, it is clear that these estimators are particular instances of the following larger family of $Z$-estimators introduced by \cite{MR3052407}.
Let $\psi:\R\to\R$ denote any continuous and non-decreasing function such that
\[
-C\log\bigg(1-x+\frac{x^2}2\bigg)\leqslant \psi(x)\leqslant C\log\bigg(1+x+\frac{x^2}2\bigg)\enspace.
\]
Let $\alpha>0$ and define $\muh_\alpha$ as any solution of the equation
\begin{equation}\label{def:EstCato}
\sum_{i=1}^N\psi[\alpha(X_i-\mu)]=0\enspace. 
\end{equation}
The following result establishes the sub-Gaussian behavior of these estimators.
\begin{theorem}
 Pick $\alpha=\sigma_P^{-1}\sqrt{t/N}$, the estimator $\muh_{\alpha}$ defined in \eqref{def:EstCato} satisfies
 \[
 \P\paren{|\muh_\alpha-\mu|>\sigma_P\sqrt{\frac{2}{1-2\epsilon}\frac tN}}\leqslant 2e^{-t}\enspace,
 \]
 for any $\epsilon\in (0,1/2)$ such that
\begin{equation}\label{cdt:alpha}
 \frac{\alpha^2\sigma_P^2}2+\frac{t}{N}=\frac{3t}{2N}\leqslant \epsilon\enspace.
\end{equation}
\end{theorem}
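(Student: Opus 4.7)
The plan is to adapt Catoni's classical argument, turning the defining equation \eqref{def:EstCato} into tail bounds via the exponential Markov inequality applied to a monotone empirical process. Introduce the auxiliary quantity
\[
U_N(\nu) = \sum_{i=1}^N \psi(\alpha(X_i - \nu)).
\]
Since $\psi$ is continuous and non-decreasing, $U_N(\cdot)$ is continuous and non-increasing, and $\muh_\alpha$ is one of its zeros. Consequently $\{\muh_\alpha > \nu\}\subseteq \{U_N(\nu) > 0\}$ and $\{\muh_\alpha < \nu\}\subseteq \{U_N(\nu) < 0\}$, so it suffices to control these two events for a suitable choice of $\nu$.

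For the upper tail, I would use the hypothesis $\psi(x) \leqslant \log(1 + x + x^2/2)$ to get $e^{\psi(x)} \leqslant 1 + x + x^2/2$, hence
\[
\E[\exp(\psi(\alpha(X_i - \nu)))] \leqslant 1 + \alpha(\mu - \nu) + \frac{\alpha^2}{2}\bigl[\sigma_P^2 + (\mu-\nu)^2\bigr].
\]
Using independence and the elementary inequality $\log(1+u)\leqslant u$, the exponential Markov inequality applied to $U_N(\nu)$ yields
\[
\P(U_N(\nu) > 0) \leqslant \exp\!\left(N\left[\alpha(\mu-\nu) + \tfrac{\alpha^2}{2}(\sigma_P^2 + (\mu-\nu)^2)\right]\right).
\]
Writing $\nu = \mu + b$ with $b>0$, this is at most $e^{-t}$ as soon as $b$ satisfies the quadratic inequality
\[
\frac{\alpha^2}{2}b^2 - \alpha b + \left(\frac{\alpha^2\sigma_P^2}{2} + \frac{t}{N}\right) \leqslant 0.
\]
With $\alpha = \sigma_P^{-1}\sqrt{t/N}$ the constant term equals $3t/(2N)$, and the hypothesis $3t/(2N) \leqslant \epsilon < 1/2$ guarantees a positive discriminant, with smaller admissible root
\[
b_+ = \frac{1 - \sqrt{1 - 3t/N}}{\alpha}.
\]
I would then bound $b_+$ using the identity $1-\sqrt{1-x} = x/(1+\sqrt{1-x})$ together with $3t/N \leqslant 2\epsilon$ to obtain the clean form $b_+ \leqslant \sigma_P\sqrt{2t/[N(1-2\epsilon)]}$, yielding the one-sided bound $\P(\muh_\alpha - \mu > \sigma_P\sqrt{2t/[N(1-2\epsilon)]}) \leqslant e^{-t}$.

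The lower tail is handled by a perfectly symmetric argument: the hypothesis $\psi(x)\geqslant -\log(1-x+x^2/2)$ gives $\E[\exp(-\psi(\alpha(X_i-\nu)))]\leqslant 1 - \alpha(\mu-\nu) + \tfrac{\alpha^2}{2}[\sigma_P^2 + (\mu-\nu)^2]$, after which Markov's inequality applied to $-U_N(\mu - b)$ produces the same quadratic in $b$ and hence the same admissible deviation. A union bound over the two tails yields the factor $2$ in front of $e^{-t}$. The main technical obstacle is the algebraic passage from the quadratic root $b_+$ to the compact closed-form upper bound $\sigma_P\sqrt{2t/[N(1-2\epsilon)]}$, which relies on the monotonicity of $x\mapsto (1-\sqrt{1-x})/x$ and the hypothesis $3t/(2N)\leqslant \epsilon$; everything else is a straightforward Chernoff-type computation.
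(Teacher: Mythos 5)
Your argument follows the paper's proof almost exactly: your $U_N(\nu)$ is (up to the constant $N\alpha$) the paper's $Z_\alpha(\mu)$, you use the same two exponential moment bounds coming from the defining inequalities on $\psi$, Markov with $s=\pm 1$, and you reduce matters to the two roots of the same quadratic in the deviation $b$. The paper packages this as ``$\mu_\pm$ are the roots of $U_\alpha=0$ and $L_\alpha=0$'' and uses the monotonicity of $Z_\alpha$ to conclude $\mu_-\leqslant\muh\leqslant\mu_+$, but that is identical in content to your $\{\muh_\alpha>\nu\}\subseteq\{U_N(\nu)>0\}$.

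However, there is a genuine gap at exactly the step you flag as the ``main technical obstacle'': the bound $b_+\leqslant \sigma_P\sqrt{2t/[N(1-2\epsilon)]}$ does \emph{not} follow from the computation you set up. With $\alpha=\sigma_P^{-1}\sqrt{t/N}$ you have
\[
b_+=\frac{1-\sqrt{1-3t/N}}{\alpha}=\frac{3t/N}{\alpha\big(1+\sqrt{1-3t/N}\big)}\leqslant \frac{3\sigma_P\sqrt{t/N}}{1+\sqrt{1-2\epsilon}}\leqslant \frac{3}{2}\cdot\frac{\sigma_P\sqrt{t/N}}{\sqrt{1-2\epsilon}}\enspace,
\]
where the last step uses $1+\sqrt{1-2\epsilon}\geqslant 2\sqrt{1-2\epsilon}$. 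This is the bound the paper's own proof produces: after \eqref{cdt:alpha} they show $\mu_+-\mu_P\leqslant (1-2\epsilon)^{-1/2}\big(\tfrac{\alpha\sigma_P^2}{2}+\tfrac{t}{N\alpha}\big)$, and with $\alpha=\sigma_P^{-1}\sqrt{t/N}$ the parenthesis equals $\tfrac{3}{2}\sigma_P\sqrt{t/N}$. Since $3/2>\sqrt{2}$, the clean form you assert is \emph{strictly smaller} than the root; it fails for any $\epsilon$ small enough (for instance $\epsilon\leqslant 0.1$). Your monotonicity argument for $x\mapsto(1-\sqrt{1-x})/x$ gives the $3/(2\sqrt{1-2\epsilon})$ constant, not $\sqrt{2/(1-2\epsilon)}$.

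The root cause is a mismatch already present in the paper's statement: to reach the constant $\sqrt{2}$ one needs $\alpha=\sigma_P^{-1}\sqrt{2t/N}$, which is the \emph{minimizer} of $\alpha\mapsto\tfrac{\alpha\sigma_P^2}{2}+\tfrac{t}{N\alpha}$ and makes this quantity equal to $\sigma_P\sqrt{2t/N}$; with that choice, \eqref{cdt:alpha} would read $\alpha^2\sigma_P^2/2+t/N=2t/N\leqslant\epsilon$, not $3t/(2N)\leqslant\epsilon$. With the stated $\alpha=\sigma_P^{-1}\sqrt{t/N}$ the honest conclusion is $\P\big(|\muh_\alpha-\mu|>\tfrac{3}{2}\sigma_P(1-2\epsilon)^{-1/2}\sqrt{t/N}\big)\leqslant 2e^{-t}$. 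So your proof skeleton is correct and matches the paper's, but the final inequality you claim should either carry the constant $3/2$, or require the corrected $\alpha$.
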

\begin{remark}
 As $N\to\infty$, the constant $\epsilon$ can be chosen as small as desired in \eqref{cdt:alpha}, so Catoni's construction shows that almost optimal constant $\sqrt{2}$ can be achieved by $t$-dependent sub-Gaussian estimators up to levels of order $N$.

Besides $t$, Catoni's estimators are sub-Gaussian if $\sigma_P$ is \emph{known}, that is it can be used on the classes $\cP_2^{\sigma^2,L\sigma^2}$ of distributions $P\in\cP_2$ with variance $\sigma_P\in [\sigma^2,L\sigma^2]$. It yields optimal constants if $L=1$. Otherwise, choosing for example $\alpha=\sqrt{t/N}$, Catoni's estimators are \emph{weakly} sub-Gaussian in the sense that the variance $\sigma_P$ in Definition~\ref{def:subgauss} is replaced by a larger quantity, here $1+\sigma_P^2$.
\end{remark}
\begin{proof}
 All along the proof, denote, for any $\mu\in\R$, by 
 \[
 Z_\alpha(\mu)=\frac{1}{N\alpha}\sum_{i=1}^N\psi[\alpha(X_i-\mu)]\enspace.
 \]
 First, by independence of $X_i$, for any $s\in\{-1,1\}$,
\[
 \E\bigg[e^{s\alpha N Z_{\alpha}(\mu)/C}\bigg]\leqslant \prod_{i=1}^N\E\bigg[e^{s\psi[\alpha(X_i-\mu)/C}\bigg]\enspace.
\] 
Second, the definition of $\psi$ implies that, for any $s\in\{-1,1\}$,
\[
 \E\bigg[e^{s\alpha N Z_{\alpha}(\mu)/C}\bigg]\leqslant \prod_{i=1}^N\bigg(1+s\alpha(\mu_P-\mu)+\frac{\alpha^2}2[\sigma_P^2+(\mu_P-\mu)^2]\bigg)\enspace.
\] 
By the inequality $1+x\leqslant e^x$, it follows that, for any $s\in\{-1,1\}$,
\begin{equation}\label{lem:Laplace}
 \E\bigg[e^{s\alpha N Z_{\alpha}(\mu)/C}\bigg]\leqslant e^{ N\big[s\alpha(\mu_P-\mu)+\frac{\alpha^2}2[\sigma_P^2+(\mu_P-\mu)^2]\big]}\enspace.
\end{equation}
Fix $t>0$ and, for any $\mu\in\R$, let 
\begin{gather*}
U_\alpha(\mu,t)=C(\mu_P-\mu)+\frac{\alpha C}{2}\big[\sigma^2_P+(\mu_P-\mu)^2\big]+\frac{Ct}{N\alpha}\enspace,\\
L_\alpha(\mu,t)=C(\mu_P-\mu)-\frac{\alpha C}{2}\big[\sigma^2_P+(\mu_P-\mu)^2\big]-\frac{Ct}{N\alpha}\enspace.
\end{gather*}
Fix $t>0$.
Then, using the inequality $\P(sZ_{\alpha}(\mu)>u)\leqslant e^{-u}\E[e^{N\alpha sZ_{\alpha}(\mu)}]$ respectively with $u=U_\alpha(\mu,t)$, $s=1$ and $u=L_\alpha(\mu,t)$, $s=-1$ yields
\begin{equation}\label{eq:ProbBound}
 \P\paren{Z_{\alpha}(\mu)<U_\alpha(\mu,t)}\geqslant 1-e^{-t},\qquad \P\paren{Z_{\alpha}(\mu)>L_\alpha(\mu,t)}\geqslant 1-e^{-t}\enspace.
\end{equation}
 
By \eqref{cdt:alpha}, the smallest solution $\mu_+$ of the equation $U_\alpha(\mu,t)=0$ and the largest solution $\mu_-$ of $L_\alpha(\mu,t)=0$ satisfy
\begin{gather*}
\mu_+\leqslant \mu+\frac1{\sqrt{1-2\epsilon}}\bigg(\frac{\alpha\sigma_P^2}2+\frac{t}{\alpha N}\bigg)\enspace,\\
\mu_- \geqslant \mu-\frac1{\sqrt{1-2\epsilon}}\bigg(\frac{\alpha\sigma_P^2}2+\frac{t}{\alpha N}\bigg)\enspace.
\end{gather*}
Consider the event 
\[
\Omega=\big\{L_\alpha(\mu_-,t)<Z_{\alpha}(\mu_-),\; Z_{\alpha}(\mu_+)<U_\alpha(\mu_+,t)\big\}\enspace.
\]
By \eqref{eq:ProbBound}, $\P(\Omega)\geqslant 1-2e^{-t}$. 
As the map $\mu\mapsto Z_{\alpha}(\mu)$ is non-increasing, on $\Omega$,
$Z_{\alpha}(\mu_+)<U_\alpha(\mu_+,t)=0=Z_{\alpha}(\muh)$, so $\muh\leqslant \mu_+$. Likewise $\muh\geqslant \mu_-$.
It follows that
\[
\P\paren{\mu_-<\muh<\mu_+}\geqslant \P(\Omega)\geqslant 1-2e^{-t}\enspace.
\]
This concludes the proof.
\end{proof}

\section{Level free sub-Gaussian estimators}
Theorem~\ref{thm:Minsker-Strawn} showed that $\MOM{\sqrt{N}}{X}$ is a level free $(A_N,B(\gamma),C)$-sub-Gaussian estimator over $\cP_3^\gamma$ with $A_N$ of order $\sqrt{N}$. 
The purpose here is to present a method to derive level free estimators from level dependent ones, provided, for example that informations on the variance are available.
The central tool is due to Lepski.
\begin{theorem}\label{thm:LepMeth1}
 Assume that, for any $K$ in a finite set $\cK$, there exists a confidence interval $\Ih_K$ such that 
\begin{itemize}
 \item[(i)] for any $K$ and $K'$ in $\cK$ such that $K\leqslant K'$, $|\Ih_K|\leqslant |\Ih_{K'}|$,
 \item[(ii)] $\P\paren{\mu\in \Ih_K}\geqslant 1-\alpha_K$.
\end{itemize}
Then, if one defines 
\[
\Kh=\min\set{K\in \cK : \cap_{J\in \cK, J\geqslant K}\Ih_J\ne \emptyset},\qquad \muh\in \Ih_{\widehat{K}}\enspace, 
\]
we have
\[
\forall K\in \cK,\qquad \P\paren{|\muh-\mu|>2|\Ih_K|}\leqslant \sum_{J\in \cK, J\geqslant K}\alpha_J\enspace.
\]
\end{theorem}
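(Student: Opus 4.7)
}
The plan is a standard Lepski-style argument based on a single good event and one triangle inequality through a pivot. Fix $K\in\cK$ and define the good event
\[
\Omega_K=\bigcap_{J\in\cK,\,J\geqslant K}\set{\mu\in \Ih_J}\enspace.
\]
A union bound over the hypothesis (ii) immediately gives $\P(\Omega_K^c)\leqslant \sum_{J\in\cK,\,J\geqslant K}\alpha_J$, which already accounts for the right-hand side of the target inequality. It remains to show that $\Omega_K\subseteq\{|\muh-\mu|\leqslant 2|\Ih_K|\}$.

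The first key observation is that on $\Omega_K$, the point $\mu$ lies in every $\Ih_J$ with $J\geqslant K$, so $\cap_{J\geqslant K}\Ih_J\ni\mu$ is non-empty. By the minimality in the definition of $\Kh$, this forces $\Kh\leqslant K$ on $\Omega_K$. The second observation is that by definition of $\Kh$, the intersection $\cap_{J\in\cK,\,J\geqslant \Kh}\Ih_J$ is non-empty; pick any $p$ in this intersection. Since $K\geqslant \Kh$, we have simultaneously $p\in \Ih_{\Kh}$, $p\in \Ih_K$, and (on $\Omega_K$) $\mu\in \Ih_K$; moreover $\muh\in \Ih_{\Kh}$ by construction.

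Now apply the triangle inequality using $p$ as a pivot and invoke the monotonicity (i) of the interval lengths:
\[
|\muh-\mu|\leqslant |\muh-p|+|p-\mu|\leqslant |\Ih_{\Kh}|+|\Ih_K|\leqslant 2|\Ih_K|\enspace,
\]
where the middle step uses that $\muh,p\in \Ih_{\Kh}$ and $p,\mu\in \Ih_K$, and the last step uses $\Kh\leqslant K$ together with (i). This proves $\Omega_K\subseteq\{|\muh-\mu|\leqslant 2|\Ih_K|\}$, and combining with the union bound above yields the claim.

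The only subtle point, and the main conceptual obstacle, is the factor $2$: the selector $\muh$ need not lie in $\Ih_K$ itself, only in $\Ih_{\Kh}$, so one cannot directly compare $\muh$ to $\mu$ inside a single interval. The argument sidesteps this by using the non-emptiness condition built into the definition of $\Kh$ to produce a common point $p$ of $\Ih_{\Kh}$ and $\Ih_K$ through which the triangle inequality can bridge $\muh$ and $\mu$; the monotonicity hypothesis (i) is then exactly what converts the two resulting lengths into $2|\Ih_K|$. Everything else is a routine union bound.
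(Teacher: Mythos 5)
Your proof is correct and takes essentially the same route as the paper: the same good event $\Omega_K$, the same observation that $\Kh\leqslant K$ on $\Omega_K$, and the same triangle inequality through a pivot $p$ (called $\mu_0$ in the paper) drawn from $\cap_{J\geqslant\Kh}\Ih_J$, with monotonicity (i) supplying the final bound $|\Ih_{\Kh}|\leqslant|\Ih_K|$.
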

\begin{proof}
 For any $K\in \cK$, denote by $\cK_K=\{J\in \cK: J\geqslant K\}$.
 Fix $K\in \cK$ and consider the event $\Omega=\{\mu\in \cap_{J\in \cK_K}\Ih_J\}$. A union bound grants that 
 \[
 \P(\Omega)\geqslant 1-\sum_{J\in\cK_K}\alpha_J\enspace.
 \]
 On $\Omega$, $\cap_{J\in \cK_K}\Ih_J\ne \emptyset$, therefore, $\Kh\leqslant K$ and there exists $\mu_0\in \cap_{J\in \cK_{\widehat{K}}}\Ih_J$.
 As $\mu_0,\muh\in \Ih_{\widehat{K}}$, $|\mu_0-\mu|\leqslant |\Ih_{\widehat{K}}|$ and as $\widehat{K}\leqslant K$, $|\Ih_{\widehat{K}}|\leqslant |\Ih_{K}|$, so $|\mu_0-\muh|\leqslant |\Ih_{K}|$.
 Moreover, as $\widehat{K}\leqslant K$ and $\mu_0\in \cap_{J\in \cK_{\widehat{K}}}\Ih_J$, $\mu_0\in \cap_{J\in \cK_{K}}\Ih_J$ and as $\mu\in \cap_{J\in \cK_{K}}\Ih_J$, $|\mu_0-\mu|\leqslant |\Ih_K|$. 
 Hence,
 \[
 |\muh-\mu|\leqslant |\muh-\mu_0|+|\mu_0-\mu|\leqslant 2|\Ih_K|\enspace.
 \]
\end{proof}

We are now in position to prove the result. 
\begin{theorem}\label{thm:ABCSub}
For any $\sigma^2>0$ and $L\geqslant 1$, there exists an $((N/2-1)/8,8\sqrt{2L},9)$-sub-Gaussian estimator on $\cP_2^{[\sigma^2,L\sigma^2]}$.
\end{theorem}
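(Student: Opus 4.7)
The plan is to combine the family $\{\MOM{K}{X}: K \in \cK\}$ of level-dependent MOM estimators, with $\cK=\{1,2,\ldots,N/2-1\}$, via Lepski's adaptive selection (Theorem~\ref{thm:LepMeth1}). When $K$ does not divide $N$ I use blocks of sizes $\lfloor N/K\rfloor$ and $\lceil N/K\rceil$, which leaves the conclusion of Proposition~\ref{pro:MOMRBase} unchanged up to negligible adjustments, since $K\leqslant N/2-1$ ensures every block contains at least two points. The class assumption $\sigma_P^2\in[\sigma^2,L\sigma^2]$ plays a double role: the upper bound $\sigma_P\leqslant\sigma\sqrt{L}$ allows me to build the observable interval
\[
\hat{I}_K=\cro{\MOM{K}{X}-2\sigma\sqrt{LK/N},\;\MOM{K}{X}+2\sigma\sqrt{LK/N}},
\]
whose length $|\hat{I}_K|=4\sigma\sqrt{LK/N}$ is non-decreasing in $K$ and which satisfies $\P(\mu_P\notin\hat{I}_K)\leqslant e^{-K/8}=:\alpha_K$ by Proposition~\ref{pro:MOMRBase}, while the lower bound $\sigma_P\geqslant\sigma$ will enter at the very end when converting $\sigma$ into $\sigma_P$ in the radius.

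To avoid losing a factor of two that a verbatim use of Theorem~\ref{thm:LepMeth1} would cost, I take $\hat{\mu}$ not as an arbitrary point of $\hat{I}_{\hat{K}}$ but as any point of the non-empty intersection $\bigcap_{J\in\cK,\,J\geqslant\hat{K}}\hat{I}_J$ (a legitimate choice since this intersection is contained in $\hat{I}_{\hat{K}}$). Retracing the short proof of Theorem~\ref{thm:LepMeth1}, on the event $\Omega_K:=\{\mu_P\in\bigcap_{J\in\cK,\,J\geqslant K}\hat{I}_J\}$ one has $\hat{K}\leqslant K$, hence $\hat{\mu}\in\hat{I}_K$ as well; since also $\mu_P\in\hat{I}_K$, both points lie in the \emph{same} interval and $|\hat{\mu}-\mu_P|\leqslant|\hat{I}_K|$ (not $2|\hat{I}_K|$). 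A union bound plus the geometric series
\[
\P(\Omega_K^c)\leqslant\sum_{J\in\cK,\,J\geqslant K}e^{-J/8}\leqslant\frac{e^{-K/8}}{1-e^{-1/8}}<9\,e^{-K/8}
\]
produces the constant $C=9$.

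To conclude, given $t\in(0,A)$ with $A=(N/2-1)/8$, set $K=\lceil 8t\rceil$, which lies in $\cK$ because $8t<N/2-1$. Then $K\geqslant 8t$ yields $\P(\Omega_K^c)<9e^{-t}$, while $K\leqslant 8t+1\leqslant 8(1+t)$ yields
\[
|\hat{I}_K|\leqslant 4\sigma\sqrt{8L(1+t)/N}=8\sqrt{2L}\,\sigma\sqrt{(1+t)/N}\leqslant 8\sqrt{2L}\,\sigma_P\sqrt{(1+t)/N},
\]
using $\sigma\leqslant\sigma_P$ on the class at the last step; the announced sub-Gaussian inequality follows. The only non-routine step is the refinement of Lepski's choice of $\hat{\mu}$ into the intersection of intervals, which is precisely what saves the factor $2$ separating the naive $16\sqrt{2L}$ from the announced $8\sqrt{2L}$; the divisibility of $N$ by $K$ and the rounding from $8t$ up to $\lceil 8t\rceil$ are both routine technicalities.
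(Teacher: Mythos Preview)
Your proof is correct and follows the same strategy as the paper: build the MOM confidence intervals $\hat I_K$ and aggregate them via Lepski's method (Theorem~\ref{thm:LepMeth1}). The one substantive difference is your choice of $\hat\mu$ in the intersection $\bigcap_{J\geqslant\hat K}\hat I_J$ rather than merely in $\hat I_{\hat K}$, which cleanly saves the factor of $2$ needed to reach the constant $8\sqrt{2L}$. The paper's proof applies Theorem~\ref{thm:LepMeth1} directly and records $|\Ih_K|=2\sigma\sqrt{LK/N}$ (the half-length), whereas in the proof of that theorem the symbol $|\Ih_K|$ functions as the full length of the interval; your refinement makes this factor-of-two saving explicit and resolves the ambiguity.
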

\begin{proof}
For any $K=1,\ldots,N/2$, let $b=\PEInf{N/K}$ and let $\MOM{K}{X}$ denote the MOM estimators based on $X_1,\ldots,X_{bK}$.
Define, for any $K\in \{1,\ldots,N/2\}$, the intervals
\[
\Ih_K=\cro{\MOM{K}{X}\pm2\sigma\sqrt{L\frac{K}N}}\supset \cro{\MOM{K}{X}\pm2\sigma_P\sqrt{\frac{K}N}}\enspace.
\]
Proposition~\ref{pro:MOMRBase} shows that the intervals $\Ih_K$ satisfy Condition (i) of Theorem~\ref{thm:LepMeth1} with $|\Ih_K|=2\sigma\sqrt{LK/N}$ and Condition (ii) with $\alpha_K=e^{-K/8}$. 
It follows that, if 
\[
\Kh=\min\set{K\in \{1,\ldots,N/2\} : \cap_{J=K}^{N/2}\Ih_J\ne \emptyset},\qquad \muh=\MOM{\Kh}{X}\enspace, 
\]
the estimator $\muh$ satisfies, for any $K\in\{1,\ldots.N/2\}$,
\[
\P\paren{|\muh-\mu_P|>4\sigma\sqrt{\frac{LK}{N}}}\leqslant \sum_{J=K}^{+\infty}e^{-J/8}\leqslant \frac{e^{-K/8}}{1-e^{-1/8}}\enspace.
\]
Fix $x\in(0,(N/2-1)/8)$ and choose $K=\PEInf{8x}+1$. It follows from this result that
\[
\P\paren{|\muh-\mu_P|>8\sigma_P\sqrt{\frac{2L(1+x)}{N}}}\leqslant 9e^{-x}\enspace.
\]
\end{proof}


\chapter{Concentration/deviation inequalities}\label{Chap:ConcIn}
Concentration inequalities evaluate the probability that random variables deviate from their expectation by more than a given threshold.
They are natural tools to show deviation properties of estimators.
They have been widely used in statistics since the 1990's and their introduction for model selection by Birg\'e and Massart \cite{MR1240719}. 
This chapter presents useful concentration inequalities for the following chapters.
We briefly present the entropy method and recall sufficient results to establish Bousquet's version of Talagrand's concentration inequality for suprema of empirical processes.
All the material of Sections~\ref{Sec:EntMeth} and~\ref{Sec:TalIneq} is borrowed from \cite{BouLugMass13} that the interested reader is invited to read to learn much more on concentration inequalities.
Section~\ref{Sec:PacBayes} presents a PAC-Bayesian inequality that will be used to analyse $M$-estimators for multivariate mean estimation.
This result is borrowed from \cite{CatGiu2017} where PAC-Bayesian approaches are developed in various other learning problems.
Finally, Section~\ref{sec:DevMOMProc} presents the result that will be the most useful in these notes, which is a deviation result for suprema of MOM processes.
This result is obtained using the small ball approach, following arguments originally introduced in \cite{LugosiMendelson2016}.

All along the chapter, $X=(X_1,\ldots,X_N)$ denotes a vector of independent random variables taking values in a measurable space $\cX$. 
For any $i\in \{1,\ldots,N\}$, $X^{(i)}=(X_1,\ldots,X_{i-1},X_{i+1},\ldots,X_N)$ and $\E^{(i)}$ denote expectation conditionally on $X^{(i)}$. 
The function $\Phi:x\mapsto x\log(x)$ for any $x>0$ is extended by continuity $\Phi(0)=0$.
For any positive random variable $Y$ such that $\E[\Phi(Y)]<\infty$, the entropy of $Y$ is defined by $\Ent(Y)=\E[\Phi(Y)]-\Phi(\E[Y])$.
The conditional entropies are defined, for any $i\in\{1,\ldots,N\}$ by $\Ent^{(i)}(Y)=\E^{(i)}[\Phi(Y)]-\Phi(\E^{(i)}[Y])$.
$f$ denotes a measurable map $\cX^n\to [0,+\infty)$ and $Z=f(X)=f(X_1,\ldots,X_N)$.

\section{The entropy method}\label{Sec:EntMeth}
The entropy method is a series of steps introduced by Ledoux \cite{MR1849347} that allows to establish concentration inequalities for $Z=f(X)$ around its expectation $\E[Z]$. 
The starting point is the Chernoff bound.
Assume that $Z\leqslant 1$ so, for any $s>0$, the log Laplace-transform $\psi(s)=\log(\E[e^{s(Z-\E[Z])}])$ is well defined.
For any $t>0$, by Markov's inequality, it holds that
\[
\forall s>0,\qquad \P(Z-\E[Z]>t)=\P(e^{s(Z-\E[Z])}>e^{st})\leqslant e^{-st+\psi(s)}\enspace.
\]
Introduce the Fenchel-Legendre transform of $Z$, $\psi^*(t)=\sup_{s>0}\{st-\psi(s)\}$.
Optimizing over $s>0$ in the previous bound shows the Chernoff bound
\[
\forall t>0,\qquad \P(Z-\E[Z]>t)\leqslant e^{-\psi^*(t)}\enspace.
\]
Chernoff's bound shows that one can bound the deviation probabilities of $Z-\E[Z]$ by bounding from bellow the Fenchel-Legendre transform $\psi^*(t)$ of $Z$, which can be done by bounding from above the log-Laplace transform $\psi(s)$ of $Z$.
As important examples, basic analysis allows to check the following result.
\begin{lemma}\label{lem:BoundsFLT}
Let $\sigma>0$. The random variable $Z$ is called $\sigma$-sub-Gaussian if $\psi(s)\leqslant s^2\sigma^2/2$. If $Z$ is $\sigma$-sub-Gaussian, $\psi^*(t)\geqslant t^2/(2\sigma^2)$. In particular,
\[
\forall t>0,\qquad \P(Z-\E[Z]>t)\leqslant e^{-t^2/(2\sigma^2)}\enspace.
\]
Let $\nu>0$, $\phi(s)=e^s-1-s$ and $h(t)=(1+t)\log(1+t)-t$. The random variable $Z$ is called $\nu$-sub-Poissonian if $\psi(s)\leqslant \nu\phi(s)$. If $Z$ is $\nu$-sub-Poissonian, $\phi^*(t)\geqslant \nu h(t/\nu)$. In particular, 
\[
\forall t>0,\qquad \P(Z-\E[Z]>t)\leqslant e^{-\nu h(t/\nu)}\enspace.
\]
\end{lemma}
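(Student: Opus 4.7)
The plan is to chain the Chernoff bound with the elementary fact that the Fenchel--Legendre transform reverses the order of upper bounds. Recall that $\psi^*(t)=\sup_{s>0}\{st-\psi(s)\}$, so if $\psi(s)\leqslant g(s)$ for all $s\geqslant 0$, then for every $s>0$, $st-\psi(s)\geqslant st-g(s)$, whence $\psi^*(t)\geqslant g^*(t)$. The Chernoff bound displayed just before the lemma then yields $\P(Z-\E[Z]>t)\leqslant e^{-\psi^*(t)}\leqslant e^{-g^*(t)}$, so the two tail bounds will both follow as soon as one identifies the Legendre transforms $g^*$ of the two reference functions $g(s)=s^2\sigma^2/2$ and $g(s)=\nu\phi(s)=\nu(e^s-1-s)$.

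For the sub-Gaussian part, I would compute $g^*(t)=\sup_{s>0}\{st-s^2\sigma^2/2\}$ by differentiating in $s$: the critical point is $s^*=t/\sigma^2$, and plugging back gives $g^*(t)=t^2/(2\sigma^2)$. This is the claimed lower bound on $\psi^*$, and the deviation inequality follows immediately.

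For the sub-Poissonian part, I would compute $(\nu\phi)^*(t)=\sup_{s>0}\{st-\nu(e^s-1-s)\}$. Differentiating in $s$ yields the equation $t-\nu(e^s-1)=0$, hence $s^*=\log(1+t/\nu)$ (well-defined for $t>0$ since then $s^*>0$). Substituting back, one obtains
\[
(\nu\phi)^*(t)=t\log(1+t/\nu)-\nu\bigl((1+t/\nu)-1-\log(1+t/\nu)\bigr)=(t+\nu)\log(1+t/\nu)-t,
\]
which factorises as $\nu\bigl((1+t/\nu)\log(1+t/\nu)-t/\nu\bigr)=\nu h(t/\nu)$. Combined with Chernoff, this gives $\P(Z-\E[Z]>t)\leqslant e^{-\nu h(t/\nu)}$.

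The whole argument is essentially two explicit convex optimisations, so there is no real obstacle. The only mildly delicate point is the algebraic repackaging in the sub-Poissonian case, where one must recognise the expression $(t+\nu)\log(1+t/\nu)-t$ as $\nu h(t/\nu)$; writing $t=\nu\cdot(t/\nu)$ makes this transparent. One should also note in passing that the supremum over $s>0$ (rather than over $s\in\R$) causes no loss here because in both cases the unconstrained maximiser $s^*$ is positive whenever $t>0$.
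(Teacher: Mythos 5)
The paper gives no proof for this lemma (it simply says ``basic analysis allows to check the following result''), and your argument is exactly the standard verification the paper is alluding to: monotonicity of the Legendre transform combined with the explicit computation of the transforms of $s\mapsto s^2\sigma^2/2$ and $s\mapsto\nu\phi(s)$. Both computations and the algebraic identification of $(t+\nu)\log(1+t/\nu)-t$ with $\nu h(t/\nu)$ are correct, and the observation that the unconstrained maximiser lies in $s>0$ for $t>0$ is exactly the point one must check; nothing is missing.
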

Thanks to Chernoff's bound, concentration inequalities follow from upper bounds on $\psi(s)$.
The idea of the entropy method is to obtain these bound by bounding from above the entropy of $e^{s(Z-\E[Z])}$. 
The method can be summarized in the following lemma.
\begin{lemma}\label{lem:EntMeth}
 The entropy satisfies
\begin{equation}\label{eq:Ent}
 \Ent(e^{sZ})=\E[e^{sZ}](s\psi'(s)-\psi(s))\enspace. 
\end{equation}
Therefore, if there exists a function $g$ such that
\begin{equation}\label{eq:CdtEnt}
 \Ent(e^{sZ})\leqslant g(s)\E[e^{sZ}]\enspace,
\end{equation}
then, the log-Laplace transform of $Z$ satisfies
\begin{equation}\label{eq:UBpsi}
 \psi(s)\leqslant s\int_0^s\frac{g(t)}{t^2}\rmd t\enspace.
\end{equation}
For example, if \eqref{eq:CdtEnt} holds with $g(s)= \sigma^2s^2/2$, then $\psi(s)\leqslant s^2\sigma^2/2$, so $Z$ is $\sigma$-sub-Gaussian.
\end{lemma}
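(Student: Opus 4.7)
The plan is to prove the three parts of the lemma in sequence, using only the definition of entropy and elementary calculus. The key observation that glues everything together is that the quantity $s\psi'(s)-\psi(s)$ is precisely $s^2$ times the derivative of $\psi(s)/s$, which turns the entropy inequality into a differential inequality that integrates cleanly.

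\textbf{Step 1 (the identity \eqref{eq:Ent}).} I would start from the definition $\Ent(Y)=\E[Y\log Y]-\E[Y]\log\E[Y]$ applied to $Y=e^{sZ}$. Writing $L(s)=\log\E[e^{sZ}]$, one has $L(s)=\psi(s)+s\E[Z]$ and $L'(s)=\E[Ze^{sZ}]/\E[e^{sZ}]=\psi'(s)+\E[Z]$. Substituting $\E[Ze^{sZ}]=L'(s)\E[e^{sZ}]$ and $\log\E[e^{sZ}]=L(s)$ into $\Ent(e^{sZ})=s\E[Ze^{sZ}]-\E[e^{sZ}]L(s)$ and cancelling the $s\E[Z]$ terms gives $\Ent(e^{sZ})=\E[e^{sZ}](s\psi'(s)-\psi(s))$, as claimed.

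\textbf{Step 2 (the bound \eqref{eq:UBpsi}).} Dividing the hypothesis \eqref{eq:CdtEnt} by $\E[e^{sZ}]>0$ and using the identity from Step~1 gives the differential inequality
\[
s\psi'(s)-\psi(s)\leqslant g(s),\qquad s>0.
\]
The key trick is to rewrite the left-hand side as $s^{2}\,\frac{d}{ds}\!\left(\frac{\psi(s)}{s}\right)$, so the inequality becomes $\frac{d}{ds}(\psi(s)/s)\leqslant g(s)/s^2$. Integrating from $0$ to $s$ and using $\lim_{t\to 0^+}\psi(t)/t=\psi'(0)=\E[Z-\E[Z]]=0$ (since $\psi(0)=0$ and $Z-\E[Z]$ is centered) yields $\psi(s)/s\leqslant \int_0^s g(t)/t^2\,dt$, i.e.\ the desired bound \eqref{eq:UBpsi}.

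\textbf{Step 3 (the sub-Gaussian special case).} Plugging $g(t)=\sigma^2 t^2/2$ into \eqref{eq:UBpsi} the integrand collapses to $\sigma^2/2$, so $\int_0^s g(t)/t^2\,dt=\sigma^2 s/2$ and $\psi(s)\leqslant s^2\sigma^2/2$, which is exactly the $\sigma$-sub-Gaussian condition of Lemma~\ref{lem:BoundsFLT}.

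\textbf{Main obstacle.} The only non-routine point is the boundary behaviour at $s=0$: one must ensure that $\psi$ is sufficiently regular (say $C^1$ on an interval containing $0$) so that $\psi(s)/s\to 0$ and the fundamental theorem of calculus applies. This is harmless under the implicit assumption that the Laplace transform of $Z$ is finite on a neighbourhood of the origin, which is standard in this context and is how Chernoff's bound is being used in the surrounding text.
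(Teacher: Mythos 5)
Your proof is correct and follows essentially the same route as the paper: establish the entropy identity by differentiating the log-Laplace transform, convert the entropy bound into the differential inequality $s\psi'(s)-\psi(s)\leqslant g(s)$, recognize the left side as $s^2(\psi(s)/s)'$, and integrate using $\psi(0)=\psi'(0)=0$. The only cosmetic difference is that you compute directly with $L(s)=\log\E[e^{sZ}]$ and then translate to $\psi$, whereas the paper first centers $Z$ via the scaling identity $\Ent(e^{s(Z-\E[Z])})=e^{-s\E[Z]}\Ent(e^{sZ})$; the underlying calculation is the same.
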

\begin{proof}
Notice that $\Ent(e^{s(Z-\E[Z])})=\E[e^{-s\E[Z]}]\Ent(e^{sZ})$.
Thus, Equation~\eqref{eq:Ent} is equivalent to
\[
\Ent(e^{s(Z-\E[Z])})=\E[e^{s(Z-\E[Z])}](s\psi'(s)-\psi(s))\enspace.
\]
As 
\[
\psi'(s)=\frac{\E\big[(Z-\E[Z])e^{s(Z-\E[Z])}\big]}{\E[e^{s(Z-\E[Z])}]}\enspace,
\]
we have
\begin{align*}
\Ent(e^{s(Z-\E[Z])})&=\E[e^{s(Z-\E[Z])}\log(e^{s(Z-\E[Z])})]-\E[e^{s(Z-\E[Z])}]\log(\E[e^{s(Z-\E[Z])}])\\
&=\E[e^{s(Z-\E[Z])}]\bigg(s\frac{\E\big[(Z-\E[Z])e^{s(Z-\E[Z])}\big]}{\E[e^{s(Z-\E[Z])}]}-\psi(s)\bigg)\\
&=\E[e^{s(Z-\E[Z])}](s\psi'(s)-\psi(s))\enspace. 
\end{align*}
This shows Equation~\eqref{eq:Ent}.

The entropy condition \eqref{eq:CdtEnt} is equivalent to
\begin{equation}\label{eq:CdtEnt2}
 \Ent(e^{s(Z-\E[Z])})\leqslant g(s)\E[e^{s(Z-\E[Z])}]\enspace.
\end{equation}
Under this condition, the function $\psi$ satisfies the following differential inequality
\[
s\psi'(s)-\psi(s)\leqslant g(s)\enspace.
\]
Dividing by $s^2$ on both sides shows that
\[
\bigg(\frac{\psi(s)}{s}\bigg)'\leqslant \frac{g(s)}{s^2}\enspace.
\]
As $\psi(0)=\psi'(0)=0$, the function $u:s\mapsto \psi(s)/s$ can be extended continuously in $0$ by defining $u(0)=0$ and the previous inequality implies that
\begin{equation*}
 \psi(s)\leqslant s\int_0^s\frac{g(t)}{t^2}\rmd t\enspace.
\end{equation*}
\end{proof}
The Entropy lemma is well known in the sub-Gaussian case where it is referred to as Herbst's argument, which is both simple and elegant while surprisingly powerful.
The entropy method (Lemma~\ref{lem:EntMeth}) shows that bounding the entropy from above \emph{can be useful}.
The success of the method comes from the fact that \emph{it is actually possible} to obtain such upper bounds.
An important reason is the sub-additivity property of the entropy which allows to bound the entropy of functions depending on only one variable $X_i$.
This property is shown in the following section.

\subsection{Sub-additivity of the entropy}
To prove the sub-additivity property, we need a first variational formula for the entropy.
\begin{theorem}[Duality formula of entropy]\label{thm:DualFormEnt}
 Let $Y$ denote a positive random variable such that $\E[\Phi(Y)]<\infty$ and let $\cU$ denote the set of real valued random variables $U$ such that $\E[e^U]=1$. Then
\begin{equation}\label{eq:DualFormEnt1}
 \Ent(Y)=\sup_{U\in \cU}\E[UY]\enspace.
\end{equation}
 Equivalently, let $\cT$ denote the set of non negative and integrable random variables, then
 \[
 \Ent(Y)=\sup_{T\in \cT}\E[Y(\log (T)-\log(\E[T]))]\enspace.
 \]
\end{theorem}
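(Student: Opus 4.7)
The plan is to establish the first identity $\Ent(Y) = \sup_{U\in\cU}\E[YU]$ and to derive the second from it via a change of variable. The two formulations are equivalent through the correspondence $U = \log T - \log \E[T]$ (for $T\in\cT$ with $\E[T]>0$) and $T = e^U$ (for $U\in\cU$, which gives $\E[T]=1$); under this bijection the objective functions coincide: $\E[YU] = \E[Y(\log T - \log\E[T])]$. By homogeneity, $\Ent(cY) = c\Ent(Y)$ and $\E[(cY)U] = c\E[YU]$ for any $c>0$, so I may assume $\E[Y]=1$, in which case $\Ent(Y)=\E[Y\log Y]$; the degenerate case $\E[Y]=0$ (forcing $Y=0$ a.s.) makes both sides vanish.

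The key step for the upper bound is the elementary inequality $\log x \leq x-1$, equivalently $a\log(b/a) \leq b-a$ for $a,b>0$. Applying this with $a=Y$ and $b=e^U$ on $\{Y>0\}$ (the contribution on $\{Y=0\}$ is trivial) gives the pointwise estimate
\[YU \leq Y\log Y + e^U - Y.\]
Taking expectations and using $\E[e^U]=\E[Y]=1$ then yields $\E[YU] \leq \E[Y\log Y] = \Ent(Y)$, uniformly over $U\in\cU$.

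The supremum is attained by the natural candidate $U^* = \log Y - \log\E[Y]$, since $\E[e^{U^*}]=\E[Y]/\E[Y]=1$ and $\E[YU^*] = \E[Y\log Y] - \E[Y]\log\E[Y] = \Ent(Y)$. The main technical subtlety, and the only real obstacle in the argument, is the case $\P(Y=0)>0$, which makes $U^*$ equal to $-\infty$ on a set of positive measure and hence not a legitimate element of $\cU$. I would resolve this either by approximating $Y$ with $Y+\epsilon$, applying the attainment argument to the strictly positive variable, and passing to the limit $\epsilon\to 0^+$ via monotone convergence (combined with the convention $0\log 0 = 0$), or more cleanly by working directly in the $\cT$ formulation with $T=Y$, for which $\E[Y(\log Y - \log\E[Y])] = \Ent(Y)$ is unambiguous.
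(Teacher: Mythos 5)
Your proposal is correct, and for the upper bound it takes a genuinely different (and more elementary) route than the paper. The paper rewrites $\Ent(Y)-\E[UY]=\E[\Phi(Ye^{-U})e^{U}]-\Phi(\E[Ye^{-U}e^{U}])$ and recognizes this quantity as the entropy of $Ye^{-U}$ with respect to the tilted probability measure $e^{U}{\rm d}\P$ (legitimate precisely because $\E[e^U]=1$), so nonnegativity follows from Jensen applied to $\Phi$. You instead normalize to $\E[Y]=1$ by homogeneity and invoke the pointwise Fenchel--Young-type inequality $a\log(b/a)\leqslant b-a$ with $a=Y$, $b=e^U$, getting $YU\leqslant Y\log Y+e^U-Y$ and then taking expectations. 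These are two encodings of the same underlying convexity fact, but your version avoids the change-of-measure bookkeeping and is self-contained; the paper's version is slicker once one is comfortable with tilted measures and makes the role of the constraint $\E[e^U]=1$ transparent as a normalization. Both arguments use the identical optimizer $U^*=\log Y-\log\E[Y]$ for attainment, and you go further than the paper in flagging and handling the case $\P(Y=0)>0$ (where $U^*$ is not real-valued and one must either approximate $Y$ by $Y+\epsilon$ or switch to the $\cT$-formulation with $T=Y$); the paper tacitly assumes $Y>0$ almost surely, even though its convention $\Phi(0)=0$ suggests it intends $Y\geqslant 0$.
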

\begin{proof}
The second part being a direct consequence of the first one, it is sufficient to show the first part.
Let $U\in \cU$, then 
\begin{align*}
\Ent(Y)-\E[UY]&=\E[Ye^{-U}\log( Ye^{-U}) e^{U}]-\E[Ye^{-U} e^U]\log(\E[Ye^{-U} e^U])\\
 &=\E[\Phi(Ye^{-U})e^{U}]-\Phi(\E[Ye^{-U} e^U])\enspace.
\end{align*}
If $P'$ denotes the measure such that $P'({\rm d}u)=e^{u}P({\rm d} u)$ (note that this is a probability measure), then $\Ent(Y)-\E[UY]$ is the entropy of $Ye^{-U}$ with respect to the measure $P'$. 
Hence,  $\Ent(Y)-\E[UY]\geqslant 0$, so the right-hand side of \eqref{eq:DualFormEnt1} is smaller than the left-hand side.

Conversely, if $U=\log (Y)-\log(\E[Y])$, then $\E[e^U]=1$ so $U\in \cU$ and $\E[UY]=\Ent(Y)$. 
This proves the second inequality in \eqref{eq:DualFormEnt1} and therefore the theorem.
\end{proof}

The Duality formula is used to prove the sub-additivity property. 
The idea is to bound the entropy $\Ent(Z)$ of any function $Z=f(X)$ by the entropies of ``simpler" functions depending on a single variable $X_i$ only.
Recall that $X^{(i)}=(X_1,\ldots,X_{i-1},X_{i+1},\ldots,X_N)$, $\E^{(i)}=\E[\cdot|X^{(i)}]$ and $\Ent^{(i)}(Z)=\E^{(i)}[\Phi(Z)]-\Phi( \E^{(i)}[Z])$.
By conditioning on $X^{(i)}$, $\Ent^{(i)}(Z)$ is therefore the entropy of $Z$ with respect to $X_i$ only, while $X^{(i)}$ is left fixed.
The sub-additivity property bounds the entropy $\Ent(Z)$ from above using the simpler entropies $\Ent^{(i)}(Z)$.

\begin{theorem}\label{thm:SubAddEnt}[Sub-additivity of entropy]
If $Z>0$, then
 \[
 \Ent(Z)\leqslant\E\bigg[\sum_{i=1}^N\Ent^{(i)}(Z)\bigg]\enspace.
 \]
\end{theorem}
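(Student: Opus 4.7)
The approach is to use the duality formula for entropy (Theorem~\ref{thm:DualFormEnt}) at both the unconditional and conditional level, together with a telescoping decomposition that converts a global test function $U\in\cU$ into a sum of test functions that are admissible for the conditional entropies $\Ent^{(i)}(Z)$.

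The plan is the following. Fix $U\in\cU$, i.e.\ $\E[e^U]=1$. The key step is to decompose $U$ cleverly. For $i=1,\dots,N$, I would define
\[
U_i = \log \E[e^U\mid X_i,X_{i+1},\dots,X_N] - \log \E[e^U\mid X_{i+1},\dots,X_N],
\]
with the convention that the conditional expectation on the empty $\sigma$-algebra equals $\E[e^U]=1$. By telescoping, $\sum_{i=1}^N U_i = \log\E[e^U\mid X_1,\dots,X_N]-\log\E[e^U] = U$, which gives the desired additive decomposition.

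Next I would verify that each $U_i$ is admissible for $\Ent^{(i)}$, meaning $\E^{(i)}[e^{U_i}]=1$ a.s. This is where independence of the $X_j$'s enters critically: $U_i$ is a function of $(X_i,X_{i+1},\dots,X_N)$ only, so conditioning on $X^{(i)}$ integrates only $X_i$, and
\[
\E^{(i)}\bigl[e^{U_i}\bigr] = \frac{\E\bigl[\E[e^U\mid X_i,\dots,X_N]\,\big|\, X_{i+1},\dots,X_N\bigr]}{\E[e^U\mid X_{i+1},\dots,X_N]} = 1
\]
by the tower property. The duality formula of Theorem~\ref{thm:DualFormEnt} applied conditionally on $X^{(i)}$ then yields $\E^{(i)}[U_i Z]\leqslant \Ent^{(i)}(Z)$. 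Taking expectations, summing over $i$, and using $\sum_i U_i = U$ produces
\[
\E[UZ] = \sum_{i=1}^N \E[U_iZ] \leqslant \E\bigg[\sum_{i=1}^N \Ent^{(i)}(Z)\bigg].
\]
Finally, taking the supremum over $U\in\cU$ and invoking the duality formula one more time on the left-hand side gives the claim.

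The main obstacle is choosing the decomposition of $U$ so that each piece both (i) has the correct conditional normalization $\E^{(i)}[e^{U_i}]=1$ and (ii) sums exactly to $U$. A naive decomposition such as $U_i=\log\E[e^U\mid X_1,\dots,X_i]-\log\E[e^U\mid X_1,\dots,X_{i-1}]$ telescopes correctly but the resulting $U_i$ is measurable with respect to $(X_1,\dots,X_i)$, which is \emph{not} a function of $X_i$ alone once we condition on $X^{(i)}$, so the normalization step fails. The ordering in the decomposition above (stripping off variables from the left) is exactly what makes independence bite: after conditioning on $X^{(i)}$, the terms $X_{i+1},\dots,X_N$ are frozen and $X_1,\dots,X_{i-1}$ do not appear at all, reducing the conditional expectation to an integral over $X_i$ only.
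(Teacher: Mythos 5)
Your proof is correct, and it is essentially the paper's argument read from the dual side: both proofs apply the duality formula of Theorem~\ref{thm:DualFormEnt} conditionally on $X^{(i)}$ after a martingale-type telescoping decomposition that exploits independence. You decompose the test function $U$ and invoke the first form of the duality formula; the paper decomposes $\log Z-\log\E[Z]$ directly and invokes the second form with $T=\E_i[Z]:=\E[Z\mid X_1,\ldots,X_i]$. For the optimal $U=\log Z-\log\E[Z]$ these are the same decomposition (up to the order in which variables are peeled off), so this is a change of bookkeeping, not of substance.

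Your closing remark, however, is mistaken: the ``naive'' forward decomposition $U_i=\log\E[e^U\mid X_1,\ldots,X_i]-\log\E[e^U\mid X_1,\ldots,X_{i-1}]$ does \emph{not} fail the normalization step; it is in fact exactly the decomposition the paper uses. The point you are missing is that for any $\sigma(X_1,\ldots,X_i)$-measurable $V$, independence gives $\E^{(i)}[V]=\E[V\mid X_1,\ldots,X_{i-1}]$: conditioning on $X^{(i)}$ freezes $X_1,\ldots,X_{i-1}$, the variables $X_{i+1},\ldots,X_N$ do not enter $V$ at all, and the conditional law of $X_i$ given $X^{(i)}$ is its marginal. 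Applied with $V=\E[e^U\mid X_1,\ldots,X_i]$ this yields, via the tower property, $\E^{(i)}[V]=\E[e^U\mid X_1,\ldots,X_{i-1}]$, and hence $\E^{(i)}[e^{U_i}]=1$ just as in your backward ordering. The mechanism is identical in both directions: independence reduces $\E^{(i)}$ to conditioning on the remaining variables on which the increment depends. (The paper's key display $\E^{(i)}[\E_i[Z]]=\E_{i-1}[Z]$ is precisely this identity, with $Z$ in place of $e^U$.)

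One small technical point worth spelling out: your telescoping identity $\sum_i U_i=U$ requires $U$ to be $\sigma(X_1,\ldots,X_N)$-measurable; otherwise $\log\E[e^U\mid X_1,\ldots,X_N]\ne U$. This is harmless because the supremum in Theorem~\ref{thm:DualFormEnt} may be restricted to $\sigma(X)$-measurable $U$: replacing $U$ by $\log\E[e^U\mid X]$ preserves $\E[e^U]=1$ and, since $Z$ is $\sigma(X)$-measurable, does not decrease $\E[UZ]$ by conditional Jensen. It should be said explicitly.
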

\begin{proof}
 Introduce $\E_i=\E[\cdot|X_1,\ldots,X_i]$, $\E_0=\E$. As $\E_N[Z]=Z$, it holds 
 \[
\log (Z)-\log(\E[Z])= \sum_{i=1}^N(\log(\E_i[Z])-\log(\E_{i-1}[Z]))\enspace,
 \]
 hence
\begin{equation}\label{eq:Subadd1}
 Z(\log (Z)-\log(\E[Z]))=\sum_{i=1}^NZ(\log(\E_i[Z])-\log(\E_{i-1}[Z]))\enspace.
\end{equation}
 Now by independence of $X_i$ and $X_1,\ldots,X_{i-1}$,
\begin{equation}\label{eq:Subadd2}
 \E^{(i)}[\E_i[Z]]=\E_{i-1}[Z]\enspace.
\end{equation}
Plugging \eqref{eq:Subadd2} into \eqref{eq:Subadd1} yields
\begin{equation}\label{eq:Subadd3}
 Z(\log (Z)-\log(\E[Z]))=\sum_{i=1}^NZ(\log(\E_i[Z])-\log(\E^{(i)}[\E_i[Z]]))\enspace.
\end{equation}
The second duality formula in Theorem~\ref{thm:DualFormEnt} applied conditionally on $X^{(i)}$ with $Y=Z$ and $T=\E_i[Z]$ implies that
\begin{equation}\label{eq:Subadd4}
 \E^{(i)}[Z(\log(\E_i[Z])-\log(\E^{(i)}[\E_i[Z]]))]\leqslant \Ent^{(i)}(Z)\enspace.
\end{equation}
Therefore, taking expectation in \eqref{eq:Subadd3} and using \eqref{eq:Subadd4} shows that
\begin{align*}
 \Ent(Y)&=\E\bigg[\sum_{i=1}^N\E^{(i)}[Z(\log(\E_i[Z])-\log(\E^{(i)}[\E_i[Z]]))]\bigg]\\
 &\leqslant \E\bigg[\sum_{i=1}^N\Ent^{(i)}(Z)\bigg]\enspace.
\end{align*}
This proves the theorem.
\end{proof}

\subsection{Bounded difference inequality}
The entropy method shows that concentration derives from upper bounds on the entropy.
Sub-additivity of the entropy shows that it is sufficient to bound the entropy of functions depending on one of the $X_i$ only.
One can bound the entropy of a function of one $X_i$ only if the function takes value in a compact space.
This is the purpose of Hoeffding's lemma.
\begin{lemma}[Hoeffding's lemma]\label{lem:Hoeff}
 Let $X_0$ denote a random variable taking values in $[a,b]$ and let $\psi(s)=\log\E[e^{s(X_0-\E[X_0])}]$. Then
 \[
 \psi(s)\leqslant \frac{s^2(b-a)^2}8,\qquad \Ent(e^{s X_0})\leqslant \frac{s^2(b-a)^2}8\E[e^{s X_0}]\enspace.
 \]
\end{lemma}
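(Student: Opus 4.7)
The plan is to deduce both inequalities from a single uniform upper bound on $\psi''$. First I would reduce to the centered case: both $\psi(s)$ and the ratio $\Ent(e^{sX_0})/\E[e^{sX_0}]$ are invariant under $X_0\mapsto X_0-c$ (the former by definition; the latter because entropy is positively homogeneous of degree one while both the numerator and the denominator pick up the same factor $e^{-sc}$). So I may assume $\E[X_0]=0$ with $X_0\in[a,b]$ (the length $b-a$ being preserved), and work with $\psi(s)=\log\E[e^{sX_0}]$.

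Next, direct differentiation gives $\psi'(s)=\E_s[X_0]$ and $\psi''(s)=\mathrm{Var}_s(X_0)$, where $\P_s$ is the tilted probability measure $d\P_s=(e^{sX_0}/\E[e^{sX_0}])\,d\P$, itself supported in $[a,b]$. Popoviciu's inequality, which is a one-liner
\[
\mathrm{Var}_s(X_0)\leq\E_s\bigl[(X_0-\tfrac{a+b}{2})^2\bigr]\leq\bigl(\tfrac{b-a}{2}\bigr)^2,
\]
then yields the uniform bound $\psi''(s)\leq(b-a)^2/4$ for every $s$.

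Since $\psi(0)=\psi'(0)=0$, integration by parts produces the twin representations
\[
\psi(s)=\int_0^s(s-u)\psi''(u)\,du,\qquad s\psi'(s)-\psi(s)=\int_0^s u\,\psi''(u)\,du,
\]
and plugging in $\psi''\leq(b-a)^2/4$ gives $\psi(s)\leq s^2(b-a)^2/8$ and $s\psi'(s)-\psi(s)\leq s^2(b-a)^2/8$ for $s\geq 0$; the case $s<0$ follows by applying the same argument to $-X_0$, which lies in $[-b,-a]$ of the same length. The first inequality is the Laplace-transform bound in the lemma. For the entropy bound I would invoke the identity $\Ent(e^{sX_0})=\E[e^{sX_0}]\bigl(s\psi'(s)-\psi(s)\bigr)$ already established inside the proof of Lemma~\ref{lem:EntMeth} (with $Z=X_0$), which immediately converts the second integral estimate into $\Ent(e^{sX_0})\leq s^2(b-a)^2/8\cdot\E[e^{sX_0}]$.

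The only non-trivial ingredient is Popoviciu's inequality, so there is no serious obstacle. A classical alternative avoids the tilted-measure viewpoint by bounding $e^{sx}\leq\frac{b-x}{b-a}e^{sa}+\frac{x-a}{b-a}e^{sb}$ pointwise on $[a,b]$, taking expectations (using $\E[X_0]=0$), and then studying the resulting one-variable function of $s$; this recovers $\psi(s)\leq s^2(b-a)^2/8$ cleanly but still requires the identity of Lemma~\ref{lem:EntMeth} to extract the entropy form stated here.
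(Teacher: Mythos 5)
Your proof is correct and takes essentially the same route as the paper: both center $X_0$, compute $\psi''(s)$ as the variance under the exponentially tilted measure, bound that variance by $(b-a)^2/4$ via the shift to $(a+b)/2$ (what you call Popoviciu's inequality), integrate twice for the first bound, and then feed the integral representation of $s\psi'(s)-\psi(s)$ into the identity $\Ent(e^{sZ})=\E[e^{sZ}]\bigl(s\psi'(s)-\psi(s)\bigr)$ from Lemma~\ref{lem:EntMeth}. The only cosmetic differences are your explicit handling of $s<0$ (which the paper leaves implicit) and the mention of the alternative convexity argument.
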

\begin{proof}
Assume, without loss of generality, that $\E[X_0]=0$.
Check that $\psi(0)=\psi'(0)=0$ and that 
\begin{equation}\label{eq:psi2Hoeff}
 \psi''(s)=\E\bigg[X_0^2\frac{e^{sX_0}}{\E[e^{sX_0}]}\bigg]-\bigg(\E\bigg[X_0\frac{e^{sX_0}}{\E[e^{sX_0}]}\bigg]\bigg)^2\enspace.
\end{equation}
 As $e^{sX_0}/\E[e^{sX_0}]$ is non-negative with expectation with respect to the measure $\E$ equal $1$, one can consider the measure $\F$ such that 
\[
\frac{\rmd\F}{\rmd\E}(x)=\frac{e^{sx}}{\E[e^{sX_0}]}\enspace.
\]
Equation~\eqref{eq:psi2Hoeff} shows that
\[
\psi''(s)=\text{Var}_{\F}(X_0)=\text{Var}_{\F}\bigg(X_0-\frac{a+b}2\bigg)\enspace.
\]  
As $X_0$ takes value in $[a,b]$ $\F$-a.s., $|X_0-(a+b)/2|\leqslant (b-a)/2$ $\F$-a.s. so 
\[
\text{Var}_{\F}\bigg(X_0-\frac{a+b}2\bigg)\leqslant \frac{(b-a)^2}4\enspace.
\]
Integrating twice shows that
\begin{align*}
\psi(s)&=\psi(s)-\psi(0)=\int_0^s\psi'(t)\rmd t= \int_0^s(\psi'(t)-\psi'(0))\rmd t=\int_0^s\int_0^t\psi''(u)\rmd u\rmd t\\
&\leqslant \int_0^s\int_0^t\frac{(b-a)^2}4\rmd u\rmd t=\int_0^s\frac{(b-a)^2}4t\rmd t=\frac{(b-a)^2}8\enspace.
\end{align*}
For the second inequality, note that
 \[
 s\psi'(s)-\psi(s)=\int_0^s u\psi''(u){\rm d}u\leqslant \frac{s^2(b-a)^2}8\enspace.
 \]
 Plugging this bound into \eqref{eq:Ent} gives the second inequality.
\end{proof}
The association of the sub-additivity of entropy with Hoeffding's lemma is useful when the functions $x_i\mapsto f(x)$ have bounded range. 
This property of the function is known as the bounded difference property of $f$. 

\begin{definition}[Bounded difference property]
 Let $\bc=(c_1,\ldots,c_N)$ denote a vector of positive real numbers. 
 The set $\cB(\bc)$ is the set of functions $f:\cX^N\to \R$ such that, for any $x=(x_1,\ldots,x_N)$ and $y=(y_1,\ldots,y_N)$ in $\cX^N$, 
 \[
 |f(x)-f(y)|\leqslant \sum_{i=1}^Nc_i{\bf 1}_{\{x_i\ne y_i\}}\enspace.
 \]
\end{definition}
The bounded difference property is a Lipschitz property of $f$ with respect to the Hamming distance.
It implies that the functions $x_i\mapsto f(x)$ have range with length at most $c_i$.
The bounded difference inequality provides the concentration inequality satisfied by $f(X)$ when $f$ has bounded differences.
\begin{theorem}[Bounded Difference Inequality, BDI]
 Assume that $\bc\in\R_+^N$, $f\in \cB(\bc)$ and let $\sigma^2=\|\bc\|^2/4$. $Z$ is $\sigma$-sub-Gaussian, in particular,
 \[
\forall t>0,\qquad   \P(Z-\E[Z]>t)\leqslant e^{-t^2/(2\sigma^2)}\enspace.
 \]
\end{theorem}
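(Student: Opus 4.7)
The plan is to combine the entropy method (Lemma~\ref{lem:EntMeth}), the sub-additivity of entropy (Theorem~\ref{thm:SubAddEnt}), and Hoeffding's Lemma (Lemma~\ref{lem:Hoeff}) applied conditionally. The target is to verify the entropy condition \eqref{eq:CdtEnt} with $g(s) = s^2 \norm{\bc}^2/8$, which will yield $\psi(s) \leqslant s^2 \sigma^2/2$ with $\sigma^2 = \norm{\bc}^2/4$, and then the tail bound follows from Lemma~\ref{lem:BoundsFLT}.

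First, I would fix $s>0$ and apply Theorem~\ref{thm:SubAddEnt} to the positive random variable $e^{sZ}$:
\[
\Ent(e^{sZ}) \leqslant \E\bigg[\sum_{i=1}^N \Ent^{(i)}(e^{sZ})\bigg].
\]
Next, I would bound each conditional entropy $\Ent^{(i)}(e^{sZ})$ separately. Fix $i$ and freeze $X^{(i)}$, so that $Z = f(X_1,\ldots,X_N)$ is viewed as a function of the single remaining variable $X_i$, say $Z = g_i(X_i)$ with $g_i(x) = f(X_1,\ldots,X_{i-1},x,X_{i+1},\ldots,X_N)$. The bounded difference property directly gives $|g_i(x) - g_i(y)| \leqslant c_i$ for all $x,y \in \cX$, so conditionally on $X^{(i)}$ the random variable $Z$ takes values in an interval of length at most $c_i$. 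Applying Hoeffding's Lemma conditionally on $X^{(i)}$ therefore yields
\[
\Ent^{(i)}(e^{sZ}) \leqslant \frac{s^2 c_i^2}{8}\, \E^{(i)}[e^{sZ}].
\]

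Summing over $i$ and taking expectations gives
\[
\Ent(e^{sZ}) \leqslant \sum_{i=1}^N \frac{s^2 c_i^2}{8}\, \E[e^{sZ}] = \frac{s^2 \norm{\bc}^2}{8}\, \E[e^{sZ}],
\]
which is condition \eqref{eq:CdtEnt} with $g(s) = s^2 \norm{\bc}^2/8$. By Lemma~\ref{lem:EntMeth},
\[
\psi(s) \leqslant s\int_0^s \frac{\norm{\bc}^2}{8}\,\rmd t = \frac{s^2 \norm{\bc}^2}{8} = \frac{s^2 \sigma^2}{2},
\]
so $Z$ is $\sigma$-sub-Gaussian, and Lemma~\ref{lem:BoundsFLT} (via the Chernoff bound) delivers $\P(Z - \E[Z] > t) \leqslant e^{-t^2/(2\sigma^2)}$.

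There is no genuine obstacle here: every ingredient has already been established in the chapter. The only point that deserves a moment of care is the conditional application of Hoeffding's Lemma in the second step. One must be sure that, conditionally on $X^{(i)}$, the remaining randomness comes only from $X_i$ (which follows from the independence of the coordinates of $X$) and that the range of $g_i$ is bounded by $c_i$ (which is exactly what the bounded difference property asserts, upon taking $x_j = y_j$ for $j \neq i$). Once this verification is made, the rest of the argument is purely mechanical.
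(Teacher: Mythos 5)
Your proof is correct and follows exactly the paper's argument: sub-additivity of entropy, conditional Hoeffding's lemma to bound each $\Ent^{(i)}(e^{sZ})$ by $\frac{s^2 c_i^2}{8}\E^{(i)}[e^{sZ}]$, summing, and then Herbst's argument (Lemma~\ref{lem:EntMeth}) together with the Chernoff bound. The extra care you take in justifying the conditional application of Hoeffding's lemma is sound but matches what the paper implicitly uses.
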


\begin{proof}
 By sub-additivity of the entropy,
 \[
 \Ent(e^{s Z})\leqslant \E\bigg[\sum_{i=1}^N\Ent^{(i)}(e^{s Z})\bigg]\enspace.
 \]
 As $f\in \cB(\bc)$, conditionally on $X^{(i)}$, $Z$ belongs to a set with range at most $c_i$. 
 By the second part of Hoeffding's lemma,
 \[
 \frac{\Ent^{(i)}(e^{s Z})}{\E^{(i)}[e^{s Z}]}\leqslant \frac{s^2c_i^2}{8}\enspace.
 \]
Summing up over $i\in\{1,\ldots,N\}$ and taking expectation yields
\begin{align*}
  \Ent(e^{s Z})& \leqslant \E\bigg[\sum_{i=1}^N\Ent^{(i)}(e^{s Z})\bigg]\leqslant \E\bigg[\sum_{i=1}^N\frac{s^2c_i^2}{8}\E^{(i)}[e^{s Z}]\bigg]\\
  &=\sum_{i=1}^N\frac{s^2c_i^2}{8}\E[e^{s Z}]=\frac{s^2 \sigma^2}{2}\E[e^{s Z}]\enspace.
\end{align*}
 Herbst's argument, see Lemma~\ref{lem:EntMeth} in the sub-Gaussian case, concludes the proof.
\end{proof}
The bounded difference inequality is of particular interest when $f$ is the supremum of bounded empirical processes.
Let $X=(X_1,\ldots,X_N)$ denote independent $\cX$-valued random variables, each $x_i\in \cX$ being a vector $x_i=(x_{i,t})_{t\in T}$. 
Assume that 
\[
\forall t\in T,\qquad \E[X_{i,t}]=0,\quad\text{and}\quad X_{i,t}\in [a_i,b_i]\enspace.
\]
For any $x=(x_1,\ldots,x_N)\in\cX^N$, let 
\[
f(x)=\sup_{t\in T}\frac1N\sum_{i=1}^Nx_{i,t}\enspace.
\]
It is clear that $f\in \cB(\bc)$, with $c_i=(b_i-a_i)/N$, therefore, the BDI applies to $f$ and yields the following concentration inequality for suprema of empirical processes.
\begin{equation}\label{eq:SupEmpProc}
 \forall u>0,\qquad \P\bigg(\sup_{t\in T}\frac1N\sum_{i=1}^NX_{i,t}>\E\big[\sup_{t\in T}\frac1N\sum_{i=1}^NX_{i,t}\big]+u\bigg)\leqslant e^{-\frac{2N^2u^2}{\sum_{i=1}^N(b_i-a_i)^2}}\enspace.
\end{equation}
In particular, if each $X_{i,t}\in[a_i,a_i+1]$,
\begin{equation}\label{eq:SupEmpProc2}
 \forall u>0,\qquad \P\bigg(\sup_{t\in T}\frac1N\sum_{i=1}^NX_{i,t}>\E\big[\sup_{t\in T}\frac1N\sum_{i=1}^NX_{i,t}\big]+u\bigg)\leqslant e^{-2Nu^2}\enspace,
\end{equation}
or equivalently
\begin{equation*}
 \forall u>0,\qquad \P\bigg(\sup_{t\in T}\frac1N\sum_{i=1}^NX_{i,t}>\E\big[\sup_{t\in T}\frac1N\sum_{i=1}^NX_{i,t}\big]+\sqrt{\frac{u}{2N}}\bigg)\leqslant e^{-u}\enspace.
\end{equation*}
Another classical application of \eqref{eq:SupEmpProc} is when $T$ is reduced to a singleton. 
In that case, the result, known as Hoeffding's inequality, see Corollary~\ref{cor:HoeffIneq}, states that, if $X_1,\ldots,X_N$ are independent random variables taking values respectively in $[a_i,b_i]$, then
\begin{equation}\label{eq:HoeffdingIneq}
 \forall u>0,\qquad \P\bigg(\frac1N\sum_{i=1}^N(X_{i}-\E[X_i])>u\bigg)\leqslant e^{-\frac{2N^2u^2}{\sum_{i=1}^N(b_i-a_i)^2}}\enspace.
\end{equation}

\subsection{Gaussian concentration inequality}
The second application of the entropy method is the Gaussian concentration inequality, whose proof also uses Herbst's argument but coupled with the Gaussian logarithmic Sobolev inequality.
These inequalities bound the entropy of $f^2(X)$ for regular functions $f$ by some variance-like term.
To establish this result, start with the basic log-Sobolev inequality for Rademacher random variables.
\begin{theorem}[Rademacher logarithmic Sobolev inequality]
Let $X$ denote a vector of independent Rademacher random variables.
For any $i\in\{1,\ldots,N\}$, let $\bar{X}^{(i)}=(X_1,\ldots,X_{i-1},-X_i,X_{i+1},\ldots,X_N)$ and 
 \[
 \cE(f)=\frac12\E\bigg[\sum_{i=1}^N(f(X)-f(\bar{X}^{(i)}))^2\bigg]\enspace.
 \]
 Then 
 \[
 \Ent(f^2(X))\leqslant \cE(f)\enspace.
 \]
\end{theorem}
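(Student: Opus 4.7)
The plan is to combine the sub-additivity of entropy (Theorem~\ref{thm:SubAddEnt}) with a two-point inequality that can be verified by one-variable calculus. First I would tensorize: applying Theorem~\ref{thm:SubAddEnt} to $Z = f^2(X)$ gives
\[
\Ent(f^2(X)) \leqslant \E\bigg[\sum_{i=1}^N \Ent^{(i)}(f^2(X))\bigg].
\]
Conditioning on $X^{(i)}$ freezes every Rademacher coordinate except $X_i$, so writing $a$ and $b$ for the (random) values of $f$ when $X_i=+1$ and $X_i=-1$ respectively, one computes $\Ent^{(i)}(f^2(X)) = \tfrac{1}{2}\Phi(a^2) + \tfrac{1}{2}\Phi(b^2) - \Phi((a^2+b^2)/2)$ and $\E^{(i)}\big[(f(X) - f(\bar X^{(i)}))^2\big] = (a-b)^2$. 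Matching this against $\cE(f) = \tfrac{1}{2}\sum_i \E\big[(f(X)-f(\bar X^{(i)}))^2\big]$, it suffices to prove the pointwise two-point inequality
\[
\Phi(a^2) + \Phi(b^2) - 2\Phi\paren{\tfrac{a^2+b^2}{2}} \leqslant (a-b)^2, \qquad a,b\in\R.
\]

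Second, since the left-hand side of this inequality depends only on $|a|$ and $|b|$ while $(a-b)^2 \geqslant (|a|-|b|)^2$, I may assume $a,b\geqslant 0$. I then introduce $m = (a^2+b^2)/2$ and $s = (a^2-b^2)/(a^2+b^2) \in [-1,1]$, so that $a^2 = m(1+s)$, $b^2 = m(1-s)$ and, using $ab = m\sqrt{1-s^2}$ (which requires $a,b\geqslant 0$), $(a-b)^2 = 2m(1-\sqrt{1-s^2})$. A direct expansion of $\Phi$ shows that the left-hand side of the two-point inequality equals $m[(1+s)\log(1+s) + (1-s)\log(1-s)]$, so the inequality is scale-free in $m$ and reduces to the one-variable bound
\[
(1+s)\log(1+s) + (1-s)\log(1-s) \leqslant 2\paren{1 - \sqrt{1-s^2}}, \qquad s \in [-1,1].
\]

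Third, I would verify this last inequality by calculus. Let $\phi(s)$ denote the right-hand side minus the left-hand side. Then $\phi$ is even, so $\phi(0)=\phi'(0)=0$, and a direct computation yields
\[
\phi''(s) = \frac{2}{(1-s^2)^{3/2}} - \frac{2}{1-s^2} = \frac{2}{1-s^2}\paren{\frac{1}{\sqrt{1-s^2}} - 1} \geqslant 0
\]
on $(-1,1)$. Hence $\phi$ is convex with a critical point at $s=0$, which forces $\phi \geqslant 0$ throughout $[-1,1]$, as required.

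The step I expect to be the main obstacle is the two-point inequality itself: it compares an entropy of $a^2,b^2$ to the squared difference of $a,b$, and attacking it directly in the variables $(a,b)$ quickly becomes awkward. The substitution $s = (a^2-b^2)/(a^2+b^2)$ exploits the joint degree-$2$ homogeneity of both sides to eliminate the scale $m$ and turn the problem into a clean one-dimensional convexity check; once one spots this change of variables, the remaining steps (sub-additivity and the signed reduction) are essentially bookkeeping.
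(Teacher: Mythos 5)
Your proof is correct and follows the same skeleton as the paper's: tensorize via sub-additivity of entropy, reduce to the two-point inequality
\[
\Phi(a^2)+\Phi(b^2)-2\Phi\paren{\tfrac{a^2+b^2}{2}}\leqslant (a-b)^2\enspace,
\]
and verify it by calculus. Where you diverge is only in that last verification: the paper fixes $b$, sets $h(a)$ to be the difference, checks $h(b)=h'(b)=0$, and dismisses the concavity of $h$ on $[b,\infty)$ as ``basic calculus''; you instead exploit the joint degree-2 homogeneity to make the substitution $s=(a^2-b^2)/(a^2+b^2)$, which eliminates the scale entirely and collapses the problem to the single clean convexity check $(1+s)\log(1+s)+(1-s)\log(1-s)\leqslant 2(1-\sqrt{1-s^2})$, with $\phi''(s)=\tfrac{2}{1-s^2}(\tfrac{1}{\sqrt{1-s^2}}-1)\geqslant 0$. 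Your route is a genuine improvement on the last step: it makes the parity and scaling structure explicit, and replaces a claimed-but-unproved concavity of a two-parameter function with a transparent one-variable second-derivative computation. The two approaches are otherwise identical.
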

\begin{proof}
 By sub-additivity of the entropy,
 \[
 \Ent(f^2(X))\leqslant \E\bigg[\sum_{i=1}^N\Ent^{(i)}(f^2(X))\bigg]\enspace.
 \]
 Hence, it is sufficient to show that
 \[
 \Ent^{(i)}(f^2(X))\leqslant \frac12(f(X)-f(\bar{X}^{(i)}))^2\enspace.
 \]
 Given $X^{(i)}$, $f(X)$ can take two values, say $a$ and $b$, each with probability $1/2$, so it is sufficient to show that, for any $a,b$,
 \[
a^2\log a^2+b^2\log b^2-(a^2+b^2)\log\bigg(\frac{a^2+b^2}2\bigg)\leqslant (a-b)^2\enspace.
 \]
 We may assume without loss of generality that $a$ and $b$ are non-negative and that $a>b$. Therefore, if 
 \[
 h(a)=a^2\log a^2+b^2\log b^2-(a^2+b^2)\log\bigg(\frac{a^2+b^2}2\bigg)-(a-b)^2\enspace,
 \]
 it is sufficient to show that $h(b)=h'(b)=0$, which is obvious and that $h$ is concave, which follows from basic calculus.
\end{proof}
The Rademacher log-Sobolev inequality is sufficient to derive the Gaussian log-Sobolev inequality.
This is then the main tool to prove the Gaussian concentration inequality.

\begin{theorem}[Gaussian log-Sobolev inequality]
Let $X\sim \gauss(0,I_N)$ and $f:\R^N\to\R$ be continuously differentiable, then
\[
\Ent(f^2)\leqslant 2\E\big[\|\nabla f(X)\|^2\big]\enspace.
\] 
\end{theorem}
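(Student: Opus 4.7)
The plan is to derive the Gaussian log-Sobolev inequality from the Rademacher log-Sobolev inequality by approximating a standard Gaussian vector by a rescaled sum of independent Rademacher variables and invoking the central limit theorem. To keep the limiting arguments painless, first reduce to the case where $f$ is bounded with bounded first and second partial derivatives by a standard mollification-and-truncation argument; the general case then follows by a monotone/dominated convergence argument in both $\Ent(f^2)$ and $\E[\|\nabla f\|^2]$.

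Fix an integer $n\geqslant 1$ and let $(\epsilon_{k,j})_{1\leqslant k\leqslant N,\,1\leqslant j\leqslant n}$ be i.i.d.\ Rademacher random variables. Set
\[
S_k^{(n)}=\frac1{\sqrt n}\sum_{j=1}^n\epsilon_{k,j},\qquad S^{(n)}=(S_1^{(n)},\ldots,S_N^{(n)}),
\]
and define $g_n:\{-1,1\}^{nN}\to\R$ by $g_n(\epsilon)=f(S^{(n)})$. By the Rademacher log-Sobolev inequality applied to $g_n$,
\[
\Ent\paren{g_n^2}\leqslant \frac12\E\cro{\sum_{k=1}^N\sum_{j=1}^n\paren{g_n(\epsilon)-g_n(\bar\epsilon^{(k,j)})}^2}.
\]
Flipping the coordinate $(k,j)$ changes $S_k^{(n)}$ by $-2\epsilon_{k,j}/\sqrt n$ and leaves every other $S_{k'}^{(n)}$ fixed, so Taylor's theorem yields
\[
g_n(\epsilon)-g_n(\bar\epsilon^{(k,j)})=\frac{2\epsilon_{k,j}}{\sqrt n}\,\partial_kf(S^{(n)})+R_{k,j,n},
\]
with $|R_{k,j,n}|\leqslant (2/n)\|\partial_k^2 f\|_\infty$. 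Squaring, summing, and using $\epsilon_{k,j}^2=1$ gives
\[
\frac12\E\cro{\sum_{k,j}\paren{g_n(\epsilon)-g_n(\bar\epsilon^{(k,j)})}^2}=2\,\E\cro{\|\nabla f(S^{(n)})\|^2}+O\!\paren{\frac1{\sqrt n}},
\]
where the error term absorbs the cross and remainder contributions, controlled uniformly via the bounded derivatives of $f$.

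Finally, pass to the limit $n\to\infty$. By the multivariate central limit theorem, $S^{(n)}\to X\sim\gauss(0,I_N)$ in distribution. Since $f$, $f^2$, $f^2\log f^2$ and $\|\nabla f\|^2$ are bounded and continuous, the Portmanteau theorem gives
\[
\E[g_n^2]\to\E[f^2(X)],\quad \E[g_n^2\log g_n^2]\to\E[f^2(X)\log f^2(X)],\quad \E[\|\nabla f(S^{(n)})\|^2]\to \E[\|\nabla f(X)\|^2],
\]
so taking $n\to\infty$ in the Rademacher bound yields $\Ent(f^2(X))\leqslant 2\E[\|\nabla f(X)\|^2]$.

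The main obstacle is the bookkeeping in the Taylor expansion: one must verify that the cross terms $\E[\epsilon_{k,j}\partial_k f(S^{(n)})R_{k,j,n}]$ and the sum of the squared remainders are both $o(1)$ as $n\to\infty$. The first is handled by isolating the dependence on $\epsilon_{k,j}$ (only $\partial_k f(S^{(n)})$ depends on it, and only through the single term $\epsilon_{k,j}/\sqrt n$, giving another $1/\sqrt n$ factor), and the second is a direct bound using $|R_{k,j,n}|\leqslant 2\|\partial_k^2f\|_\infty/n$ summed over $nN$ terms. After this is done, the reduction from smooth bounded $f$ to general continuously differentiable $f$ is a routine approximation, using that both sides of the inequality are lower/upper semicontinuous under suitable approximation.
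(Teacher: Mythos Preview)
Your proof is correct and follows essentially the same strategy as the paper: approximate the Gaussian by rescaled Rademacher sums, apply the Rademacher log-Sobolev inequality, control the discrete differences by a Taylor expansion, and pass to the limit via the CLT. The only structural difference is that the paper first establishes the case $N=1$ and then tensorizes to general $N$ via the sub-additivity of entropy, whereas you carry out the Rademacher approximation directly in all $N$ coordinates at once; both routes lead to the same bound with the same ingredients.
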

\begin{proof}
 Assume first that $N=1$. If $\E[f'(X)^2]=\infty$, the result is trivial so we can assume that $\E[f'(X)^2]<\infty$. 
 By standard density arguments, one can assume furthermore that $f$ is twice continuously differentiable with bounded support.
 Under this assumption, let $K$ denote the sup-norm of $f''$.
 Let $\varepsilon_1,\ldots,\varepsilon_n$ denote i.i.d. Rademacher random variables. Define $S_n=\sum_{i=1}^n\varepsilon_i/\sqrt{n}$. 
 By the Rademacher logarithmic Sobolev inequality,
\begin{equation}\label{eq:RadLogSob}
 \Ent(f^2(S_n))\leqslant \frac12\sum_{j=1}^n\bigg(f(S_n)-f\big(S_n-\frac{2\varepsilon_j}{\sqrt{n}}\big)\bigg)^2\enspace.
\end{equation}
As $f$ is uniformly bounded and continuous, by the central limit theorem, the left-hand side in \eqref{eq:RadLogSob} satisfies
 \[
\lim_{n\to\infty} \Ent(f^2(S_n))=\Ent(f^2(X))\enspace.
 \]
On the other hand, for any $j\in \{1,\ldots,n\}$, by a Taylor expansion,
 \[
| f(S_n-2\varepsilon_j/\sqrt{n})-f(S_n)|\leqslant \frac2{\sqrt{n}}|f'(S_n)|+\frac{2K}n\enspace.
 \]
 Thus,
 \[
 \frac 14\sum_{j=1}^n\bigg(f\big(S_n-\frac{2\varepsilon_j}{\sqrt{n}}\big)-f(S_n)\bigg)^2\leqslant f'(S_n)^2+\frac{2K}{\sqrt{n}}|f'(S_n)|+\frac{K^2}n\enspace.
 \]
 By the central limit theorem, it follows that
 \[
 \limsup_{n\to\infty}\frac14\sum_{j=1}^n\bigg(f\big(S_n-\frac{2\varepsilon_j}{\sqrt{n}}\big)-f(S_n)\bigg)^2\leqslant \E[f'(X)^2]\enspace.
 \]
 Hence, the result for $N=1$ follows by taking limits in \eqref{eq:RadLogSob}.
 To extend the results in dimension $N\geqslant1$, apply sub-additivity of entropy to get
 \[
 \Ent(f^2(X))\leqslant \E\bigg[\sum_{i=1}^N\Ent^{(i)}(f^2(X))\bigg]\enspace.
 \]
The result for $N=1$ shows that
 \[
\Ent^{(i)}(f^2(X))\leqslant 2\E^{(i)}[(\partial_if(X))^2 ]\enspace.
 \]
 Hence, $ \Ent(f^2(X))\leqslant 2\E[\sum_{i=1}^N(\partial_if(X))^2]$ and the proof is concluded since $\|\nabla f(X)\|^2=\sum_{i=1}^N(\partial_if(X))^2$.
 \end{proof}
 Together with Herbst's argument, the Gaussian log-Sobolev inequality shows the Gaussian concentration inequality which is the main result of this section.
\begin{theorem}[Borel's Gaussian concentration inequality]
 Assume that $f$ is $L$-Lipschitz, that is $|f(x)-f(y)|\leqslant L\|x-y\|$ for any $x$ and $y$ in $\R^N$. Then $Z=f(X)$ is $L$-sub-Gaussian, that is, for any $s\in \R$,
 \[
 \log(\E[e^{s(f(X)-\E[f(X)])}])\leqslant \frac{s^2L^2}2\enspace.
 \]
 In particular, 
 \[
\forall u>0,\qquad  \P\big(f(X)-\E[f(X)]>u\big)\leqslant e^{-u^2/(2L^2)}\enspace.
 \]
\end{theorem}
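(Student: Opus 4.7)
The plan is to combine the Gaussian logarithmic Sobolev inequality with Herbst's argument (Lemma~\ref{lem:EntMeth}) applied to the function $x \mapsto e^{sf(x)/2}$, exactly as was done in the sub-Gaussian case of the entropy method. First I would reduce to the case of smooth Lipschitz $f$ by approximation: a general $L$-Lipschitz $f$ can be approximated pointwise by smooth $L$-Lipschitz functions $f_\epsilon$ (for instance, via convolution with a Gaussian kernel of small variance $\epsilon$), and the sub-Gaussian bound then passes to the limit by dominated convergence since $|f_\epsilon(X) - \E[f_\epsilon(X)]|$ is controlled uniformly by a $C\|X\|$-type envelope. Once one may assume $f$ is $\cC^1$ with $\|\nabla f\|_\infty \leqslant L$, the deviation bound follows from the log-Laplace bound by Chernoff's inequality as in \eqref{eq:CBSG}, so the whole game reduces to controlling $\psi(s) = \log \E[e^{s(f(X)-\E[f(X)])}]$.

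Fix $s \in \R$ and apply the Gaussian log-Sobolev inequality to $g(x) = e^{sf(x)/2}$, which is continuously differentiable with
\[
\nabla g(x) = \frac{s}{2} (\nabla f)(x)\, e^{sf(x)/2}, \qquad \|\nabla g(x)\|^2 = \frac{s^2}{4} \|\nabla f(x)\|^2 e^{sf(x)}.
\]
The log-Sobolev inequality gives
\[
\Ent(e^{sf(X)}) = \Ent(g^2(X)) \leqslant 2 \E[\|\nabla g(X)\|^2] = \frac{s^2}{2} \E\bigl[\|\nabla f(X)\|^2 e^{sf(X)}\bigr] \leqslant \frac{s^2 L^2}{2} \E[e^{sf(X)}],
\]
where the last step uses the Lipschitz bound $\|\nabla f(x)\| \leqslant L$.

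This is exactly condition \eqref{eq:CdtEnt} of Lemma~\ref{lem:EntMeth} applied to $Z = f(X)$, with $g(s) = s^2 L^2/2$. Herbst's argument then yields
\[
\psi(s) \leqslant s \int_0^s \frac{t^2 L^2/2}{t^2}\, \rmd t = \frac{s^2 L^2}{2},
\]
so $f(X)$ is $L$-sub-Gaussian. The deviation inequality $\P(f(X) - \E[f(X)] > u) \leqslant e^{-u^2/(2L^2)}$ follows immediately from Chernoff's bound as recorded in Lemma~\ref{lem:BoundsFLT}. The only genuinely delicate step is the smooth approximation: one must justify that the Lipschitz constant is preserved and that both the log-Laplace transform and the expectation converge; using the Gaussian mollification $f_\epsilon(x) = \E[f(x + \sqrt{\epsilon} G)]$ with $G \sim \gauss(0,I_N)$ independent of $X$ handles this cleanly, since $f_\epsilon$ is smooth, $L$-Lipschitz, and converges uniformly on compact sets to $f$, while $\E[e^{s f_\epsilon(X)}]$ converges to $\E[e^{sf(X)}]$ by dominated convergence using the crude envelope $|f_\epsilon(x)| \leqslant |f(0)| + L(\|x\| + \sqrt{\epsilon}\E\|G\|)$.
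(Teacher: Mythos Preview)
Your proof is correct and follows essentially the same approach as the paper: apply the Gaussian log-Sobolev inequality to $e^{sf/2}$ to obtain the entropy bound $\Ent(e^{sf})\leqslant \tfrac{s^2L^2}{2}\E[e^{sf}]$, then conclude via Herbst's argument. Your treatment of the smooth approximation is more explicit than the paper's (which simply invokes a ``standard density argument''), but otherwise the arguments coincide.
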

\begin{proof}
Using standard density argument, one may assume that $f$ is differentiable with gradient bounded by $L$ and that $\E[f(X)]=0$. 
The Gaussian log-Sobolev inequality applied with $f=e^{s f/2}$ shows that
\begin{align*}
 \Ent(e^{s f})&\leqslant 2\E[\|\nabla e^{s f(X)/2}\|^2]=\frac{s^2}2\E[e^{s f(X)}\|\nabla f(X)\|^2]\leqslant \frac{s^2L^2}2\E[e^{s f(X)}]\enspace.
\end{align*}
The proof is concluded by Herbst's argument.
\end{proof}
Borel's inequality can be applied to show concentration for suprema of Gaussian processes.
\begin{theorem}[Concentration for suprema of Gaussian processes]\label{lem:supGauss}
Let $(X_t)_{t\in T}$ denote a collection of Gaussian random variables $\gauss(\mu_t,\sigma_t^2)$ indexed by a separable set $T$. 
Let $\sigma^2=\sup_{t\in T}\sigma_t^2$.
 \[
\forall u>0,\qquad  \P\big(\sup_{t\in T}(X_t-\mu_t)>\E[\sup_{t\in T}(X_t-\mu_t)]+u\big)\leqslant e^{-u^2/2\sigma^2}\enspace.
 \]
\end{theorem}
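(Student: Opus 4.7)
The plan is to reduce the statement to Borel's Gaussian concentration inequality (already proved) by exhibiting the supremum as a Lipschitz function of a standard Gaussian vector.

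First I would handle the case where $T$ is finite, say $|T|=n$. Then $X=(X_t)_{t\in T}$ is a Gaussian vector with mean $\mu=(\mu_t)_{t\in T}$ and some covariance matrix $\Sigma$. Writing $\Sigma = AA^T$ (e.g.\ via Cholesky), one has the representation $X \stackrel{d}{=} \mu + AZ$ with $Z\sim \gauss(0,I_n)$. Then
\[
\sup_{t\in T}(X_t-\mu_t) \stackrel{d}{=} f(Z),\qquad f(z):=\sup_{t\in T}(Az)_t.
\]
The key observation is that $f$ is $\sigma$-Lipschitz on $\R^n$: for any $z,z'\in \R^n$, writing $A_t$ for the row of $A$ indexed by $t$,
\[
f(z)-f(z')\leqslant \sup_{t\in T}(A(z-z'))_t\leqslant \sup_{t\in T}\|A_t\|\,\|z-z'\|,
\]
and $\|A_t\|^2=(AA^T)_{tt}=\Sigma_{tt}=\sigma_t^2\leqslant \sigma^2$. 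A symmetric bound shows $|f(z)-f(z')|\leqslant \sigma\|z-z'\|$. Borel's Gaussian concentration inequality applied to $f$ and $Z\sim \gauss(0,I_n)$ gives immediately
\[
\P\big(\sup_{t\in T}(X_t-\mu_t)>\E[\sup_{t\in T}(X_t-\mu_t)]+u\big)\leqslant e^{-u^2/(2\sigma^2)}.
\]

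To pass from finite to separable $T$, I would exploit separability to fix a countable dense subset $T_0=\{t_k\}_{k\geqslant 1}$ along which the supremum is attained a.s., and set $S_n=\max_{k\leqslant n}(X_{t_k}-\mu_{t_k})$. By the finite case each $S_n$ satisfies the inequality with its own expectation in place of $\E[\sup]$. Monotone convergence yields $S_n\uparrow S:=\sup_{t\in T}(X_t-\mu_t)$ a.s., hence also $\E[S_n]\uparrow \E[S]$ (assuming the latter is finite; otherwise the inequality is vacuous). Fatou's lemma on the complementary events $\{S_n-\E[S_n]\leqslant u\}$, together with the uniform tail bound, transfers the inequality to $S$.

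The only subtle point I anticipate is the limit step: one needs $\E[S_n]\to \E[S]$ and not merely $S_n\to S$ in probability, since the quantity $u$ is compared against the expectation. This is handled by monotone convergence once we know $\E[S]<\infty$, and if $\E[S]=\infty$ the right-hand side in the statement is not really defined, so the statement is only meaningful when the expected supremum is finite. Apart from this, everything is routine reduction to the already established Borel inequality; the genuine analytic content lies entirely in the Lipschitz bound $\|A_t\|\leqslant \sigma$ for the rows of a square root of $\Sigma$.
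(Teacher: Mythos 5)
Your proof is correct and follows the same overall strategy as the paper's: reduce to finite $T$, realise the process as an affine image of a standard Gaussian vector, show the supremum is a $\sigma$-Lipschitz function, and invoke Borel's concentration inequality. The one place you genuinely improve on the source is the Lipschitz step. The paper writes $|f(x)-f(y)|\leqslant \|A\|_{\text{op}}\|x-y\|$ and then asserts ``$\|A\|_{\text{op}}=\sigma$''; that identity is false in general (take $\Sigma=\mathbf{1}\mathbf{1}^T$ in dimension $d$: then $\sigma_t^2=1$ for every $t$, so $\sigma=1$, while $\|A\|_{\text{op}}=\|\Sigma\|_{\text{op}}^{1/2}=\sqrt{d}$). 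Your argument avoids this by bounding the Lipschitz constant by the maximal row norm, $\sup_t\|A_t\|=\sup_t\sqrt{(AA^T)_{tt}}=\sup_t\sqrt{\Sigma_{tt}}=\sigma$, which is exactly the constant the theorem asserts and is in general strictly smaller than $\|A\|_{\text{op}}$. This is the right way to carry out the step, and it makes the proof valid for any covariance $\Sigma$, not only those whose operator norm happens to equal the largest variance. Your handling of the separable case (monotone limit along a countable dense set, Fatou on the events $\{S_n-\E[S_n]>u\}$, and the caveat that $\E[\sup]$ must be finite for the statement to have content) is also more explicit than the paper's one-line appeal to ``density arguments'' and is sound.
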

\begin{proof}
 Assume that $T$ is finite, the extension to separable sets follows by density arguments. 
Denote $T=\{1,\ldots,d\}$, $Y=(X_t-\mu_t)_{t\in T}$ is a centered Gaussian vector. 
Denote by $\Sigma$ its covariance matrix and $A=\Sigma^{1/2}$ a symmetric positive semi-definite square-root of $\Sigma$. 
$Y$ has the distribution of $A X$, where $X\sim \gauss(0,I_d)$.
Define the function $f:\R^d\to\R, \ x\mapsto \sup_{i\in\{1,\ldots,d\}}(Ax)_i$. 
For any $x$ and $y$, it follows that
\[
|f(x)-f(y)|\leqslant \sup_{i\in\{1,\ldots,d\}}(A(x-y))_i\leqslant \norm{x-y}\sup_{\|v\|=1}|(Av)_i|\leqslant \|A\|_{\text{op}}\|x-y\|\enspace.
\]
Now $\|A\|_{\text{op}}=\sigma$, thus $f$ is $\sigma$-Lipschitz and the result follows from Borel's Gaussian concentration inequality.
\end{proof}

\begin{theorem}\label{thm:LipTransfoGauss}
 Let $X=(X_1,\ldots,X_N)$ denote i.i.d. Gaussian random vectors in $\R^d$ and let $T$ denote a set of functions $t:\R^d\to \R$ such that, for all $t\in T$, $t$ is $1$-Lipshitz. Let $Z=\sup_{t\in T}\frac1N\sum_{i=1}^N[t(X_i)-Pt]$ or $\sup_{t\in T}\frac1N\big|\sum_{i=1}^N[t(X_i)-Pt]\big|$.
 Let $\Sigma$ denote the covariance matrix of $X_1$.
 Then
 \[
 \forall u>0,\qquad \P\big(Z>\E[Z]+u\big)\leqslant e^{-Nu^2/2\|\Sigma\|_{\text{op}}}\enspace.
 \]
\end{theorem}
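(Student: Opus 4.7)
The plan is to derive this result directly from Borel's Gaussian concentration inequality by representing $Z$ as a Lipschitz function of a standard Gaussian vector in $\R^{Nd}$. Concretely, I would write $\Sigma^{1/2}=A$ for a symmetric positive semi-definite square root of $\Sigma$, and represent $X_i\stackrel{d}{=}AY_i$, where $Y=(Y_1,\ldots,Y_N)$ is a standard Gaussian vector in $\R^{Nd}$. Then $Z$ (in either of the two forms) has the same distribution as $f(Y)$, where
\[
f(y_1,\ldots,y_N)=\sup_{t\in T}\frac1N\sum_{i=1}^N\bigl[t(Ay_i)-Pt\bigr]
\]
(or the version with absolute values on the inner sum). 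The $Pt$ terms are deterministic and therefore will not affect the Lipschitz analysis.

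The key step is to establish that $f$ is $L$-Lipschitz on $\R^{Nd}$ with $L=\sqrt{\|\Sigma\|_{\text{op}}/N}$. For any $y,y'\in\R^{Nd}$, using that the supremum of a family of functions inherits a Lipschitz bound from each member (and, for the second form, that $|\cdot|$ is $1$-Lipschitz, as is the reverse triangle inequality), one has
\[
|f(y)-f(y')|\leqslant \sup_{t\in T}\frac1N\sum_{i=1}^N\bigl|t(Ay_i)-t(Ay'_i)\bigr|\leqslant \frac1N\sum_{i=1}^N\|A(y_i-y'_i)\|,
\]
using the $1$-Lipschitz hypothesis on each $t\in T$. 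The Cauchy--Schwarz inequality and the bound $\|A(y_i-y'_i)\|\leqslant \|A\|_{\text{op}}\,\|y_i-y'_i\|$ then yield
\[
|f(y)-f(y')|\leqslant \frac{\|A\|_{\text{op}}}{N}\sqrt{N}\,\Bigl(\sum_{i=1}^N\|y_i-y'_i\|^2\Bigr)^{1/2}=\frac{\|A\|_{\text{op}}}{\sqrt{N}}\|y-y'\|.
\]
Since $\|A\|_{\text{op}}^2=\|\Sigma\|_{\text{op}}$, this gives $L=\sqrt{\|\Sigma\|_{\text{op}}/N}$.

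Once the Lipschitz constant is in hand, the conclusion follows immediately from Borel's Gaussian concentration inequality applied to $f(Y)$, yielding $\P(Z-\E[Z]>u)\leqslant e^{-u^2/(2L^2)}=e^{-Nu^2/(2\|\Sigma\|_{\text{op}})}$. I do not anticipate a real obstacle here: the only mildly delicate point is verifying that the Lipschitz bound still holds uniformly for the second form (with absolute values inside the supremum), which is handled by the reverse triangle inequality $\bigl||a|-|b|\bigr|\leqslant|a-b|$ applied inside the supremum. A small standard density or separability argument may be invoked at the end, as in the proof of Theorem~\ref{lem:supGauss}, to reduce from arbitrary separable $T$ to a finite subfamily where Borel's inequality applies unambiguously.
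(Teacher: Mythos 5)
Your proof is correct and takes essentially the same route as the paper: both represent $X_i$ as an affine image of a standard Gaussian, show the supremum is $\sqrt{\|\Sigma\|_{\text{op}}/N}$-Lipschitz via the $1$-Lipschitzness of each $t$, the operator-norm bound, and Cauchy--Schwarz, and then invoke Borel's Gaussian concentration inequality. The only cosmetic differences are that you omit the mean $\mu$ in the representation (harmless, since only the Lipschitz constant matters), and you handle the absolute-value variant explicitly via the reverse triangle inequality, which the paper leaves implicit.
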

\begin{proof}
 Write $X_i=\mu+AY_i$ with $A=\Sigma^{1/2}$ and $Y_i$ standard Gaussian. 
 Let 
 \[
 f:(\R^d)^n\to\R,\ \bx\mapsto \sup_{t\in T}\frac1N\sum_{i=1}^N(t(\mu+Ax_i)-Pt)\enspace.
 \]
 Then, for any $\bx,\by\in (\R^d)^n$,
\begin{align*}
  f(\bx)-f(\by)&\leqslant \sup_{t\in T}\frac1N\sum_{i=1}^N(t(\mu+Ax_i)-t(\mu+Ay_i))\leqslant \frac1N\sum_{i=1}^NA(x_i-y_i)\\
  &\leqslant \frac{\|A\|_{\text{op}}}N\sum_{i=1}^N\|x_i-y_i\|\leqslant \frac{\|A\|_{\text{op}}}{\sqrt{N}}\|\bx-\by\|\enspace.
\end{align*}
The result follows from Borel's Gaussian concentration inequality as $\|A\|_{\text{op}}^2=\|\Sigma\|_{\text{op}}$.
\end{proof}

\section{Talagrand's concentration inequality}\label{Sec:TalIneq}
In this section, $f_i:\cX^{N-1}\to\R$ denotes any function and let $Z_i=f_i(X^{(i)})$.
Let $\phi(x)=e^{x}-1-x$.
The goal of this section is to establish a concentration result for suprema of empirical processes. 
Let $X=(X_1,\ldots,X_N)$ denote independent $\cX$-valued random variables, each $x_0\in\cX$ being a vector $x_0=(x_{0,t})_{t\in T}$. 
Assume that 
\[
\forall t\in T,\forall i\in\{1,\ldots,N\},\qquad \E[X_{i,t}]=0,\qquad X_{i,t}\leqslant 1,\ \text{a.s.}\enspace.
\]
Talagrand's concentration inequality shows sub-Poissonian deviations of $Z=\sup_{t\in T}X_{i,t}$ above its expectation $\E[Z]$.
It proceeds by bounding from above the log-Laplace transform $\psi(s)$ of $Z$, using the entropy $\Ent(e^{sZ})$, but in a more involved way than the Herbst's argument.

\subsection{Modified logarithmic Sobolev inequality}
The starting point of this analysis is a modified version of log-Sobolev inequality.
To establish this inequality, the following variational formulation of entropy is useful.
\begin{theorem}\label{thm:VarFormEnt}
 Let $Y$ denote a nonnegative random variable such that $\E[\Phi(Y)]<\infty$.
 Then
 \[
 \text{Ent}(Y)=\inf_{u>0}\E[Y(\log(Y)-\log(u))-(Y-u)]\enspace.
 \]
\end{theorem}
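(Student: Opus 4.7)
The plan is to recognise that, inside the infimum, $u$ is a \emph{deterministic} positive real (if $u$ were allowed to be a random variable, setting $u = Y$ would make the integrand vanish identically and the claimed identity would fail whenever $\Ent(Y) > 0$). With $u$ constant, the expression reduces to a smooth function of a single real variable,
\[ G(u) \;:=\; \E\bigl[Y(\log Y - \log u) - (Y - u)\bigr] \;=\; \E[Y \log Y] \;-\; \E[Y]\log u \;-\; \E[Y] \;+\; u, \]
so I would attack the identity by one-variable minimisation.

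Next I would differentiate: $G'(u) = 1 - \E[Y]/u$ vanishes exactly at $u^\ast = \E[Y]$, and $G''(u) = \E[Y]/u^2 \geqslant 0$ identifies $u^\ast$ as a minimiser. Even more cleanly, I would write
\[ G(u) - G(\E[Y]) \;=\; \E[Y]\,h\!\bigl(u/\E[Y]\bigr), \qquad h(v) := v - 1 - \log v, \]
and invoke the elementary inequality $h(v) \geqslant 0$ for all $v > 0$ (with equality at $v=1$) to conclude that $u^\ast$ is a \emph{global} minimiser; this formulation also painlessly handles the degenerate case $\E[Y] = 0$, which forces $Y \equiv 0$ and makes both sides of the claim equal to zero. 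Plugging $u = \E[Y]$ back into $G$ then gives
\[ G(\E[Y]) \;=\; \E[Y \log Y] - \E[Y] \log \E[Y] \;=\; \E[\Phi(Y)] - \Phi(\E[Y]) \;=\; \Ent(Y), \]
closing the identification.

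I do not expect any genuine obstacle; the only mild point to verify is that $\E[Y \log Y]$ and $\E[Y]$ are both finite, so that the decomposition of $G(u)$ is free of $\infty - \infty$ indeterminacies. This follows from the hypothesis $\E[\Phi(Y)] < \infty$ combined with the elementary bound $y \leqslant \Phi(y) + 1$ valid for every $y \geqslant 0$ (the minimum of $y \mapsto y(\log y - 1)$ on $\R_+$ equals $-1$, attained at $y=1$), which at once ensures $\E[Y] \leqslant \E[\Phi(Y)] + 1 < \infty$.
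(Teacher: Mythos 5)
Your proposal is correct and amounts to the same argument as the paper's: both expand the minimand as an explicit function of the scalar $u$, locate its minimum at $u^\ast = \E[Y]$, and reduce the remaining nonnegativity to the convexity of $\Phi$ — indeed your inequality $h(v)=v-1-\log v\geqslant 0$ is, after the substitution $v=u/\E[Y]$, exactly the paper's tangent-line inequality $\Phi(\E[Y])-\Phi(u)-\Phi'(u)(\E[Y]-u)\geqslant 0$. The paper organises the computation by rewriting the integrand as the Bregman divergence $\Phi(Y)-\Phi(u)-\Phi'(u)(Y-u)$, whereas you do bare one-variable calculus and additionally make explicit the integrability of $\E[Y]$ and the degenerate case $\E[Y]=0$, but the underlying mathematics coincides.
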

\begin{proof}
Recall that $\Phi$ is convex, so $\Ent(Y)\geqslant 0$ by Jensen's inequality, and 
 \[
 \text{Ent}(Y)=\E[\Phi(Y)-\Phi(\E[Y])]
 \enspace.
 \]
 Then, for any $u>0$, 
\begin{align*}
 \E[\Phi(Y)-\Phi(u)-\Phi'(u)(Y-u)]&=\E[Y\log(Y)-u\log(u)-(1+\log (u))(Y-u)]\\
 &=\E[Y(\log(Y)-\log(u))-(Y-u)]\enspace.
\end{align*}
Thus
\begin{align*}
\E[Y(\log(Y)-\log&(u))-(Y-u)]- \text{Ent}(Y)\\
&=\E[\Phi(Y)-\Phi(u)-\Phi'(u)(Y-u)-(\Phi(Y)-\Phi(\E[Y]))]\\
&=\E[\Phi(\E[Y])-\Phi(u)-\Phi'(u)(Y-u)]\\
&=\Phi(\E[Y])-\Phi(u)-\Phi'(u)(\E[Y]-u)\enspace.
\end{align*}
By convexity of $\Phi$, this last term is always nonnegative and it is clearly null when $u=\E[Y]$.
\end{proof}
The modified log-Sobolev inequality bounds from above the entropy of $Z$ using the increments $Z-Z_i$ via the function $\phi$ rather than the square function.
\begin{theorem}[Modified log-Sobolev inequality]
 For any $s\in \R$,
 \[
\Ent(e^{sZ})\leqslant \sum_{i=1}^n\E[e^{sZ}\phi(s(Z_i-Z))]\enspace.
 \]
\end{theorem}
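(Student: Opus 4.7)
The plan is to combine the sub-additivity of entropy (Theorem on sub-additivity above) with the variational formula for entropy (Theorem~\ref{thm:VarFormEnt}), exploiting the fact that $Z_i$ depends only on $X^{(i)}$ and is therefore deterministic once we condition on $X^{(i)}$. This makes $e^{sZ_i}$ a valid candidate for the auxiliary variable $u$ in the conditional version of the variational formula.

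First I would apply sub-additivity of entropy to write
\[
\Ent(e^{sZ}) \leqslant \E\Bigg[\sum_{i=1}^N \Ent^{(i)}(e^{sZ})\Bigg]\enspace.
\]
Then for each fixed $i$, I would apply Theorem~\ref{thm:VarFormEnt} conditionally on $X^{(i)}$, taking $Y = e^{sZ}$ and the deterministic (given $X^{(i)}$) value $u = e^{sZ_i}$. This is legitimate because $Z_i = f_i(X^{(i)})$ is $X^{(i)}$-measurable, so under $\E^{(i)}$ it plays the role of a positive constant. The variational formula gives
\[
\Ent^{(i)}(e^{sZ}) \leqslant \E^{(i)}\!\left[e^{sZ}(sZ - sZ_i) - (e^{sZ} - e^{sZ_i})\right]\enspace.
\]

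The next step is a purely algebraic manipulation: factoring out $e^{sZ}$ from the right-hand side yields
\[
e^{sZ}(sZ - sZ_i) - e^{sZ} + e^{sZ_i} = e^{sZ}\Big(e^{s(Z_i - Z)} - 1 - s(Z_i - Z)\Big) = e^{sZ}\phi(s(Z_i - Z))\enspace.
\]
Thus $\Ent^{(i)}(e^{sZ}) \leqslant \E^{(i)}[e^{sZ}\phi(s(Z_i - Z))]$, and summing over $i$ and taking expectation completes the proof.

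I do not anticipate a genuine obstacle here: the only delicate point is choosing $u = e^{sZ_i}$ in the variational formula and verifying that this choice is admissible under the conditional expectation $\E^{(i)}$. Once that is noted, everything reduces to the elementary identity $e^x - 1 - x = \phi(x)$ applied with $x = s(Z_i - Z)$, which is exactly what produces the $\phi$ term on the right-hand side of the modified log-Sobolev inequality.
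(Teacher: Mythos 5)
Your proposal is correct and follows essentially the same path as the paper's proof: sub-additivity of entropy, then the conditional variational formula with $Y = e^{sZ}$ and $u = e^{sZ_i}$ (justified exactly as you say, by $X^{(i)}$-measurability of $Z_i$), then the algebraic identity $e^{sZ}(s(Z-Z_i)) - (e^{sZ} - e^{sZ_i}) = e^{sZ}\phi(s(Z_i - Z))$. The paper merely presents the algebraic identity first and the variational step second, but the content is identical.
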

\begin{proof}
 Basic algebra shows that
\begin{align*}
 e^{sZ}\phi(s(Z_i-Z))&=e^{sZ}(e^{s(Z_i-Z)}-s(Z_i-Z)-1)\\
 &=e^{sZ_i}-e^{sZ}+se^{sZ}(Z-Z_i)\enspace.
\end{align*}
Applying Theorem~\ref{thm:VarFormEnt} conditionally on $X^{(i)}$, to $Y=e^{sZ}$ and $u=e^{sZ_i}$, it follows that
\begin{align*}
\text{Ent}^{(i)}(e^{sZ})&\leqslant \E^{(i)}[e^{sZ}(sZ-sZ_i)-(e^{sZ}-e^{sZ_i})]\\
&=\E^{(i)}[e^{sZ}\phi(s(Z_i-Z))]\enspace.
\end{align*}
Therefore, by sub-additivity of the entropy, see Theorem~\ref{thm:SubAddEnt},
\begin{align*}
\text{Ent}(e^{sZ})&\leqslant \E\bigg[\sum_{i=1}^N\text{Ent}^{(i)}(e^{sZ})\bigg]\leqslant \E\bigg[\sum_{i=1}^N\E^{(i)}[e^{sZ}\phi(s(Z_i-Z))]\bigg]\\
 &=\sum_{i=1}^n\E[e^{sZ}\phi(s(Z_i-Z))]\enspace.
\end{align*}
\end{proof}
\subsection{Bousquet's version of Talagrand's inequality}
%
Define $\sigma^2=\sum_{i=1}^n\sup_{t\in T}\E[X_{i,t}^2]$ and $\nu=2\E[Z]+\sigma^2$.
Recall that $h(u)=(1+u)\log(1+u)-u$.
Talagrand's inequality shows that $Z=\sup_{t\in T}X_{i,t}$ is a $\nu$-sub-Poissonian random   variables. 
The following version of this inequality, with sharp constants, was first established by Bousquet \cite{MR1890640}.
\begin{theorem}[Bousquet's version of Talagrand's concentration inequality]\label{thm:TalIneq}
The random variable $Z-\E[Z]$ is $\nu$-sub-Poissonian, that is, for any $s>0$, $\E[e^{s(Z-\E[Z])}]\leqslant \nu\phi(s)$. Moreover,
 \[
 \forall x>0,\qquad \P(Z>\E[Z]+x)\leqslant e^{-\nu h(t/\nu)}\enspace.
 \]
\end{theorem}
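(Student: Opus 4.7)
The goal is to show $\psi(s) := \log \E[e^{s(Z - \E[Z])}] \leqslant \nu\phi(s)$ for every $s \geqslant 0$; the stated tail bound will then follow from Chernoff's inequality combined with Lemma~\ref{lem:BoundsFLT}.

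\textbf{Choice of $Z_i$ and basic pointwise bound.} I would take the leave-one-out variables $Z_i := \sup_{t\in T}\sum_{j\neq i} X_{j,t}$, each manifestly measurable with respect to $X^{(i)}$. Fixing a measurable selection $t^*$ of a maximizer of $t\mapsto \sum_j X_{j,t}$, and setting $Y_i := X_{i,t^*}$, the test-value argument gives the pointwise bound
\[
Z - Z_i \leqslant Y_i \leqslant 1, \qquad \text{equivalently} \qquad s(Z_i - Z) \geqslant -sY_i.
\]

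\textbf{Entropy bound via modified log-Sobolev plus a sharp convex lemma for $\phi$.} The modified logarithmic Sobolev inequality proved just above gives
\[
\Ent(e^{sZ}) \leqslant \sum_{i=1}^N \E\bigl[e^{sZ}\,\phi\bigl(s(Z_i - Z)\bigr)\bigr].
\]
The main technical ingredient is a sharp convex inequality for $\phi$ on $(-\infty, 1]$ (the ``Bousquet lemma''), which exploits $Y_i \leqslant 1$ to split $\phi(s(Z_i - Z))$ into a quadratic contribution (dominated by a multiple of $Y_i^2\phi(s)$, which sums to the $\sigma^2$ part) and a linear contribution (dominated by a multiple of $Y_i$, which sums to the $\E[Z]$ part via $\sum_i \E[Y_i] = \E[Z]$). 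Also using the algebraic identity $e^{sZ}\phi(s(Z_i - Z)) = e^{sZ_i} - e^{sZ} + s e^{sZ}(Z - Z_i)$ to carry out the expectations, one should arrive at a bound of the form
\[
\Ent(e^{sZ}) \leqslant \nu\bigl(s\phi'(s) - \phi(s)\bigr)\,\E[e^{sZ}], \qquad \nu = 2\E[Z] + \sigma^2.
\]

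\textbf{Differential inequality and conclusion.} Combining this bound with the identity $\Ent(e^{sZ}) = \E[e^{sZ}]\bigl(s\psi'(s) - \psi(s)\bigr)$ from Lemma~\ref{lem:EntMeth} gives the scalar differential inequality $s\psi'(s) - \psi(s) \leqslant \nu\bigl(s\phi'(s) - \phi(s)\bigr)$. Dividing by $s^2$, integrating from $0$ to $s$, and using $\psi(0) = \psi'(0) = 0$ and $\phi(0) = \phi'(0) = 0$ exactly as in the proof of Lemma~\ref{lem:EntMeth}, yields $\psi(s) \leqslant \nu\phi(s)$ for every $s \geqslant 0$, i.e.\ $Z - \E[Z]$ is $\nu$-sub-Poissonian. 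Lemma~\ref{lem:BoundsFLT} (the Chernoff bound for sub-Poissonian variables) then delivers $\P(Z > \E[Z] + x) \leqslant e^{-\nu h(x/\nu)}$.

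\textbf{Main obstacle.} The delicate step is the sharp convex inequality for $\phi$ driving the entropy bound: it must simultaneously produce the sharp variance coefficient $\sigma^2$ and the sharp coefficient $2$ in front of $\E[Z]$. Weaker forms of the inequality (as in Ledoux's or Rio's earlier proofs) give larger constants in $\nu$. Recovering Bousquet's constants requires a careful pointwise splitting of each summand $e^{sZ}\phi(s(Z_i - Z))$ into quadratic-in-$Y_i$ and linear-in-$Y_i$ pieces whose prefactors match when summed against $\sum_i \E[Y_i] = \E[Z]$ and $\sum_i \E[Y_i^2] \leqslant \sigma^2$, without leaving any wasted constant.
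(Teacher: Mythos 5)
Your outline is right up to the modified log-Sobolev step and the pointwise analysis of $Z-Z_i$, but the claimed intermediate entropy bound
\[
\Ent(e^{sZ}) \leqslant \nu\bigl(s\phi'(s)-\phi(s)\bigr)\E[e^{sZ}]
\]
is not what one actually obtains, and this is where your plan breaks. The obstruction is precisely the $\sum_i \E^{(i)}[(Z-Z_i)e^{sZ}]$ term: it does not collapse to $\E[Z]\cdot\E[e^{sZ}]$. Since $\sum_i(Z-Z_i)\leqslant Z$, this term is bounded by $\E[Ze^{sZ}]$, which carries a factor of the random variable $Z$ alongside $e^{sZ}$, not just its expectation. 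What the argument really yields is
\[
\Ent(e^{sZ}) \leqslant g(s)\,\E\bigl[(Z-\E[Z]+\nu/2)e^{sZ}\bigr],\qquad g(s)=\frac{\phi(-s)}{1-e^{-s}/2}\enspace,
\]
and the $(Z-\E[Z])e^{sZ}$ piece contributes an extra $\psi'(s)$ to the scalar differential inequality, producing $(s-g(s))\psi'(s)-\psi(s)\leqslant g(s)\nu/2$ rather than $s\psi'(s)-\psi(s)\leqslant \nu(s\phi'(s)-\phi(s))$. You therefore cannot ``divide by $s^2$ and integrate as in Lemma~\ref{lem:EntMeth}''; the coefficient of $\psi'$ is no longer $s$.

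Bousquet's trick for solving this harder inequality is to observe that $s-g(s)=\zeta(s)/\zeta'(s)$ with $\zeta(s)=\phi(s)+s/2$, multiply through by $\zeta'(s)$, and divide by $\zeta^2(s)$ to produce the exact derivative $\bigl(\psi/\zeta\bigr)'\leqslant -\tfrac{\nu}{2}\bigl(s/\zeta\bigr)'$, which integrates to $\psi(s)\leqslant\nu\phi(s)$. Also note that you need both $X_{i,t_i}\leqslant Z-Z_i$ (so that $\E^{(i)}[Z-Z_i]\geqslant 0$, giving both the Jensen step $e^{sZ_i}\leqslant\E^{(i)}[e^{sZ}]$ and $\sum_i(Z-Z_i)\leqslant Z$) and $Z-Z_i\leqslant X_{i,t_0}\leqslant 1$, plus the algebraic identity $(Z-Z_i)^2-2(Z-Z_i)\leqslant X_{i,t_i}^2-2X_{i,t_i}$; your $Y_i=X_{i,t^*}$ alone does not supply the lower bound. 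So the gap is not merely a missing ``sharp convex lemma''; it is that the target differential inequality you aim for is structurally wrong, and the actual one requires a different integrating factor.
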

\begin{proof}
 The proof relies on the following result from calculus.
\begin{lemma}\label{lem:Tech}
 For any $s\geqslant 0$ and any $x\leqslant 1$,
 \[
 \frac{\phi(-sx)}{\phi(-s)}\leqslant \frac{x+(x^2/2-x)e^{-sx}}{1-e^{-s}/2}\enspace.
 \]
\end{lemma}
The proof of the lemma is omitted. Going back to the proof of the theorem, define
\[
Z_i=\sup_{t\in T}\sum_{1\leqslant j\leqslant n, j\ne i}X_{j,t}\enspace.
\]
Let also $t_0$ such that $Z=\sum_{1\leqslant i\leqslant n}X_{i,t_0}$ and $t_i$ such that $Z_i=\sum_{1\leqslant j\leqslant n, j\ne i}X_{j,t_i}$.
Remark that $X_{i,t_i}\leqslant Z-Z_i\leqslant X_{i,t_0}\leqslant 1$, so $\E^{(i)}[Z-Z_i]\geqslant \E^{(i)}[X_{i,t_i}]=0$ and
\[
\sum_{i=1}^n Z-Z_i\leqslant Z\enspace.
\]
By the modified logarithmic Sobolev inequality, 
\[
\Ent(e^{sZ})\leqslant \sum_{i=1}^n\E[e^{sZ}\phi(-s(Z-Z_i))]\enspace.
\]
Let 
\[
g(s)=\frac{\phi(-s)}{1-e^{-s}/2}=\frac{e^{-s}-1+s}{1-e^{-s}/2}=\frac{1-e^s+se^s}{e^s-1/2}=\frac{-\phi(s)-s+se^s}{e^s-1/2}\enspace.
\]
Lemma~\ref{lem:Tech} applied with $x=(Z-Z_i)$ implies
\begin{align*}
e^{sZ}\phi(-s(Z-Z_i))&\leqslant \frac{(Z-Z_i)e^{sZ}+((Z-Z_i)^2/2-(Z-Z_i))e^{sZ-s(Z-Z_i)}}{1-e^{-s}/2}\phi(-s)\\
&\leqslant g(s)\bigg(e^{sZ_i}\bigg[\frac{(Z-Z_i)^2}2-(Z-Z_i)\bigg]+(Z-Z_i)e^{sZ}\bigg)\enspace.
\end{align*}
Now, as $Z-Z_i-X_{i,t_i}\geqslant 0$ and $Z-Z_i+X_{i,t_i}-2\leqslant 0$,
\[
(Z-Z_i)^2-2(Z-Z_i)-[X_{i,t_i}^2-2X_{i,t_i}]=(Z-Z_i-X_{i,t_i})(Z-Z_i+X_{i,t_i}-2)\leqslant 0\enspace.
\]
It follows that
\begin{align*}
 \E^{(i)}\bigg[\frac{(Z-Z_i)^2}2-(Z-Z_i)\bigg]\leqslant \E^{(i)}\bigg[\frac{X_{i,t_i}^2}2-X_{i,t_i}\bigg]=\frac{\E^{(i)}[X_{i,t_i}^2]}2\enspace.
\end{align*}
Therefore,
\begin{align*}
 \E^{(i)}[e^{sZ}\phi(-s(Z-Z_i))]&\leqslant g(s)\bigg(\E^{(i)}[(Z-Z_i)e^{sZ}]+\frac12\E^{(i)}[X_{i,t_i}^2]e^{s Z_i}\bigg)\\
 &\leqslant g(s)\bigg(\E^{(i)}[(Z-Z_i)e^{sZ}]+\frac12\sup_{t\in T}\E[X_{i,t}^2]e^{sZ_i}\bigg)\enspace.
\end{align*}
As $\E^{(i)}[Z-Z_i]\geqslant 0$, $Z_i\leqslant \E^{(i)}[Z]$ and, by Jensen's inequality,
\[
e^{s Z_i}\leqslant e^{s\E^{(i)}[Z]}\leqslant \E^{(i)}[e^{sZ}]\enspace,
\]
thus
\begin{align*}
 \E^{(i)}[e^{sZ}\phi(-s(Z-Z_i))]&\leqslant  g(s)\E^{(i)}\bigg[\bigg(Z-Z_i+\frac12\sup_{t\in T}\E[X_{i,t}^2]\bigg)e^{sZ}\bigg]\enspace.
\end{align*}
Summing up over $i$ and taking the expectation, it follows from $\sum_{i=1}^n(Z-Z_i)\leqslant Z$ that
\[
\Ent(e^{sZ})\leqslant g(s)\E\bigg[\bigg(Z+\frac{\sigma^2}2\bigg)e^{sZ}\bigg]=g(s)\E\bigg[\bigg(Z-\E[Z]+\frac{\nu}2\bigg)e^{sZ}\bigg]
\]
By \eqref{eq:Ent}, this can be rewritten
\begin{equation}\label{eq:DiffIneq1}
(s-g(s))\psi'(s)-\psi(s)\leqslant g(s)\frac{\nu}2\enspace. 
\end{equation}
Let $\zeta(s)=\phi(s)+s/2$, so $\zeta'(s)=e^s-1+1/2=e^s-1/2$ and
\[
s-g(s)=s+\frac{\phi(s)+s-se^s}{e^s-1/2}=\frac{se^s-s/2+\phi(s)+s-se^s}{e^s-1/2}=\frac{\phi(s)+s/2}{e^s-1/2}=\frac{\zeta(s)}{\zeta'(s)}\enspace.
\]
In particular thus $g(s)=s-\zeta(s)/\zeta'(s)$ so, multiplying inequality \eqref{eq:DiffIneq1} by $\zeta'(s)$ shows
\[
\zeta(s)\psi'(s)-\zeta'(s)\psi(s)\leqslant (s\zeta'(s)-\zeta(s))\frac{\nu}2\enspace.
\]
Dividing by $\zeta^2(s)$ yields
\[
\frac{\zeta(s)\psi'(s)-\zeta'(s)\psi(s)}{\zeta^2(s)}\leqslant \frac{s\zeta'(s)-\zeta(s)}{\zeta^2(s)}\frac{\nu}2\enspace.
\]
that is 
\[
\bigg(\frac{\psi(s)}{\zeta(s)}\bigg)'\leqslant -\frac{\nu}2\bigg(\frac{s}{\zeta(s)}\bigg)'\enspace.
\]
As $\psi(0)=\psi'(0)=0$, the function $u:s\mapsto \psi(s)/\zeta(s)$ can be continuously extended in $0$ by defining $u(0)=0$.
Therefore, integrating over $s$ shows that
\[
\frac{\psi(s)}{\zeta(s)}\leqslant -\frac{\nu}2\bigg(\frac{s}{\zeta(s)}-\lim_{s\to0}\frac{s}{\zeta(s)}\bigg)=-\frac{\nu}2\bigg(\frac{s}{\zeta(s)}-2\bigg)\enspace.
\]
Finally, multiplying by $\zeta(s)$,
\[
\psi(s)\leqslant -\nu\bigg(\frac{s}2-\zeta(s)\bigg)=\nu\phi(s)\enspace.
\]
This shows the first part of the theorem.
The second part comes then from the first part and Lemma~\ref{lem:BoundsFLT}.
\end{proof}

\section{PAC-Bayesian inequalities}\label{Sec:PacBayes}
Let $X\in \cX$ denote a random variable and let $F$ denote a measurable space.
Let $\Gamma:F\times \cX\to \R$ denote a bounded measurable function.
For any measures $\mu$ and $\rho$ on $F$, let 
\[
K(\rho,\mu)=
\begin{cases}
 \int \log\big(\frac{{\rm d}\rho}{{\rm d}\mu}\big){\rm d}\rho&\ \text{if}\ \rho\ll \mu\\
 +\infty&\ \text{otherwise}
\end{cases}\enspace.
\]
Let $X_1,\ldots,X_N$ denote i.i.d. copies of $X$.
The entropy method is not the only method to show uniform deviation inequalities for empirical processes. 
A famous alternative, that has been fruitfully exploited by Catoni for example, see \cite{MR2163920}, is known as PAC-Bayesian inequality.
The idea is to exploit a variational formula for the Kullback divergence to obtain this uniformity.
\begin{theorem}[PAC-Bayesian inequality]\label{thm:PBIneq}
For any probability measure $\mu$ on $F$, for any $t>0$, with probability $1-e^{-t}$, for any probability measure $\rho$ on $F$,
 \[
 P_N\bigg[\int \Gamma_f {\rm d}\rho(f)\bigg]\leqslant \int \log P\big[e^{\Gamma_f}\big]{\rm d}\rho(f)+\frac{K(\rho,\mu)+t}N\enspace.
 \]
\end{theorem}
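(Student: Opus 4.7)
The plan is to combine a single exponential-moment computation with the Donsker-Varadhan variational characterization of the Kullback divergence, which is the classical route to PAC-Bayesian inequalities.

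First I would fix the prior $\mu$ and, for each $f \in F$, introduce the likelihood ratio
\[
M_f = \prod_{i=1}^N \frac{e^{\Gamma_f(X_i)}}{P[e^{\Gamma_f}]} = \exp\!\Bigl(N P_N \Gamma_f - N \log P[e^{\Gamma_f}]\Bigr).
\]
By independence of the $X_i$ and the definition of $P[e^{\Gamma_f}]$, $\E[M_f] = 1$ for every $f$. Then by Fubini's theorem (here boundedness of $\Gamma$ makes everything integrable)
\[
\E\!\left[\int M_f \, d\mu(f)\right] = \int \E[M_f]\, d\mu(f) = 1,
\]
and Markov's inequality gives, for any $t>0$,
\[
\P\!\left(\int M_f\, d\mu(f) > e^t\right) \leq e^{-t}.
\]

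The second step is to convert this single deviation statement into a statement uniform over all posteriors $\rho$. This is where the Donsker-Varadhan duality formula enters: for any bounded measurable $h : F \to \R$,
\[
\log \int e^{h(f)} d\mu(f) \;=\; \sup_{\rho} \left[\int h(f)\, d\rho(f) - K(\rho,\mu)\right],
\]
where the supremum is over probability measures $\rho$ on $F$ (one can prove this by taking $\rho^* \propto e^h \mu$ and bounding the general case via the log-sum inequality, exactly as done in Theorem~\ref{thm:DualFormEnt}). Applying this with $h = \log M_f$ and rearranging, one obtains that \emph{for every} probability measure $\rho$,
\[
\int \log M_f \, d\rho(f) \;\leq\; K(\rho,\mu) + \log \int M_f \, d\mu(f).
\]

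The third step is to substitute and clean up. On the event $\{\int M_f d\mu \leq e^t\}$, which holds with probability at least $1-e^{-t}$, the preceding display yields
\[
N \int P_N \Gamma_f \, d\rho(f) - N \int \log P[e^{\Gamma_f}]\, d\rho(f) \;\leq\; K(\rho,\mu) + t,
\]
and dividing by $N$ gives exactly the conclusion of the theorem, valid simultaneously for all $\rho$. The only slightly delicate point is the Donsker-Varadhan identity above, but since the paper has already developed the duality formulas for entropy in Theorem~\ref{thm:DualFormEnt}, the present statement is essentially a direct translation of that machinery into a KL-divergence formulation; the rest is a one-line Fubini plus Markov argument.
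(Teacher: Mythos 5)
Your proof is correct and follows essentially the same route as the paper: both rest on the Donsker–Varadhan variational formula for the KL divergence together with the observation that the normalized likelihood ratio $M_f$ integrates to one, combined with a Markov/Chernoff step. The only cosmetic difference is that you apply Markov to $\int M_f\, d\mu$ and then invoke the variational identity on the good event, whereas the paper first rewrites $\E\bigl[e^{\sup_\rho W_\rho}\bigr]=\E\bigl[\int M_f\, d\mu\bigr]=1$ via the variational formula and then applies the Chernoff bound to the supremum; since $\sup_\rho W_\rho = \log\int M_f\, d\mu$ deterministically, the two orderings describe the same event.
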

\begin{proof}
The proof relies on the following variational formula.
\begin{equation}\label{eq:VarFor}
 \log \int e^h{\rm d}\mu=\sup_{\rho}\int h{\rm d}\rho-K(\rho,\mu)\enspace,
\end{equation}
where the supremum is taken over all probability measures $\rho$ on $F$.
\begin{proof}[Proof of \eqref{eq:VarFor}] Choose $\rho$ such that 
${\rm d}\rho=e^h{\rm d}\mu/\int e^h{\rm d}\mu$.
Then,
\begin{align*}
 \sup_{\rho}\int h{\rm d}\rho-K(\rho,\mu)\geqslant \int (h-h+\log \int e^h{\rm d}\mu){\rm d}\rho=\log \int e^h{\rm d}\mu\enspace.
\end{align*}
In words, the left-hand side of \eqref{eq:VarFor} is smaller than the right-hand side.
Conversely, by Jensen's inequality
\[
\int \bigg(h+\log\frac{{\rm d}\mu}{{\rm d}\rho}\bigg){\rm d}\rho=\int \log \bigg(e^h\frac{{\rm d}\mu}{{\rm d}\rho}\bigg){\rm d}\rho\leqslant \log \int e^h{\rm d}\mu\enspace.
\]
This shows that the right-hand side in \eqref{eq:VarFor} is also smaller than the left-hand side, which concludes the proof of this inequality.
\end{proof}

\medskip

Applying \eqref{eq:VarFor} with $h=N(P_N\Gamma_f-\log Pe^{\Gamma_f})$ yields
\begin{align*}
 \E\bigg[e^{\sup_{\rho} N\int (P_N\Gamma_f-\log Pe^{\Gamma_f}){\rm d}\rho-K(\rho,\mu)}\bigg]&=\E\bigg[\int e^{N(P_N\Gamma_f-\log Pe^{\Gamma_f})}{\rm d}\mu\bigg]\\
 &=\int \E\bigg[e^{N(P_N\Gamma_f-\log Pe^{\Gamma_f})}{\rm d}\mu\bigg]\\
 &=\int\prod_{i=1}^NP\bigg[\frac{e^{\Gamma_f}}{Pe^{\Gamma_f}}\bigg]{\rm d}\mu=1\enspace.
\end{align*}
By the Chernoff bound, any random variable $W$ such that $\E[e^{W}]\leqslant 1$ satisfies
\[
\forall t>0,\qquad \P(W>t)\leqslant e^{-t+\log \E[e^W]}=e^{-t}\enspace.
\]
The result follows by applying this basic inequality to 
\[
W=N\int (P_N\Gamma_f-\log Pe^{\Gamma_f}){\rm d}\rho-K(\rho,\mu)\enspace.
\]
\end{proof}

\section{Deviation of suprema of median-of-means processes}\label{sec:DevMOMProc}
To conclude this chapter, we present two deviation results for suprema of MOM processes.
Both show deviations of this process above a term involving the \emph{Rademacher complexity} of $F$.
Recall that the Rademacher complexity of a class $F$ of functions $f:\cX\to\R$ is defined by 
\[
D(F)=\bigg(\E\bigg[\sup_{f\in F}\bigg\{\frac1{\sqrt{N}}\sum_{i=1}^N\epsilon_if(X_i)\bigg\}\bigg]\bigg)^2\enspace.
\]
The quantity $D(F)$ can easily be evaluated when $F$ is a set linear functionals. 
Let $r>0$, $\|\cdot\|$ denote the Euclidean norm on $\R^d$ and $r\bB=\{\ba\in \R^d:\|\ba\|\leqslant r\}$
\[
F=\{f:\R^d\to\R: \exists \ba\in r\bB,\ f(\bx)=\ba^T\bx\}\enspace.
\]
Let $X_0\in\R^d$ be such that $PX_0=0$, $P\|X_0\|^2<\infty$ and let $\Sigma_P=P[X_0X_0^T]$. In this case,
\begin{align}
\notag D(F)&=\bigg(\E\bigg[\sup_{\ba\in r\bB}\bigg\{\frac1{\sqrt{N}}\sum_{i=1}^N\epsilon_i\ba^TX_i\bigg\}\bigg]\bigg)^2\\
 \notag &=r^2\bigg(\E\bigg[\sup_{\ba\in \bB}\bigg\{\ba^T\bigg(\frac1{\sqrt{N}}\sum_{i=1}^N\epsilon_iX_i\bigg)\bigg\}\bigg]\bigg)^2\\
 \notag&=r^2\bigg(\E\bigg[\bigg\|\frac1{\sqrt{N}}\sum_{i=1}^N\epsilon_iX_i\bigg\|\bigg]\bigg)^2\enspace.
  \end{align}
  By Cauchy-Schwarz inequality,
  \begin{align}
\notag D(F) &\leqslant r^2\E\bigg[\bigg\|\frac1{\sqrt{N}}\sum_{i=1}^N\epsilon_iX_i\bigg\|^2\bigg]\\
\notag  &=\frac{2r^2}N\sum_{1\leqslant i,j\leqslant N}\E\big[\epsilon_i\epsilon_jX_i^TX_j\big]\\
\notag  &=r^2\E\big[X_0^TX_0\big]\\
\notag  &=r^2\E\big[\text{Tr}\big(X_0X_0^T\big)\big]\\
\label{eq:D(F)Lin} &= r^2\text{Tr}(\Sigma)\enspace.
 \end{align}
In particular, when $r=1$ and $\Sigma$ is the identity matrix $\Sigma=\bI_d$, $D(F)$ is the dimension of the state space $D(F)=d$. 
The first result is a deviation for suprema of MOM processes above $\sqrt{D(F)/N}$. 
It is established using the tools introduced by Lugosi and Mendelson \cite{LugosiMendelson2016}.
\begin{theorem}[Concentration for suprema of MOM processes]\label{thm:ConcSupMOM}
 Let $F$ denote a separable set of functions $f:\cX\to\R$ such that $\sup_{f\in F}\sigma^2(f)=\sigma^2<\infty$, where $\sigma^2(f)=\text{Var}(f(X))$. Then, for any $K\in \{1,\ldots,N/2\}$, 
 \[
 \P\bigg(\sup_{f\in F}|\MOM{K}{f}-Pf|\geqslant 128\sqrt{\frac{D(F)}N}\vee 4\sigma\sqrt{\frac{2 K}{N}}\bigg)\leqslant e^{-K/32}\enspace.
 \]
\end{theorem}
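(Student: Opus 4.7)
The plan is to follow the ``small ball'' approach of Lugosi and Mendelson. The starting point, exactly as in the proof of Proposition~\ref{pro:MOMRBase}, is that $|\MOM{K}{f}-Pf|>r$ forces strictly more than $K/2$ blocks to satisfy $|P_{B_k}f-Pf|>r$, so, setting $r=128\sqrt{D(F)/N}\vee 4\sigma\sqrt{2K/N}$, it suffices to prove
\[
\P\Bigl(\sup_{f\in F}\sum_{k=1}^K \mathbf{1}_{|P_{B_k}f-Pf|>r}\geqslant K/2\Bigr)\leqslant e^{-K/32}.
\]
To be able to invoke a contraction inequality I majorise the indicator by a Lipschitz surrogate $\phi:\R\to[0,1]$ satisfying $\phi(0)=0$, $\phi(x)=1$ for $|x|\geqslant r$, $\phi(x)=0$ for $|x|\leqslant r/2$, with Lipschitz constant $2/r$. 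It is then enough to show that $\sup_{f\in F}\sum_k \phi(P_{B_k}f-Pf)\leqslant K/2$ with the claimed probability.

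The heart of the proof is the bound $\E\sup_{f}\sum_k \phi(P_{B_k}f-Pf)\leqslant K/4$. I split it into a mean part $K\sup_f\E\phi(P_{B_1}f-Pf)$ and a centred part $\E\sup_f \sum_k[\phi(\cdot)-\E\phi(\cdot)]$. Chebyshev applied to $P_{B_1}f$, whose variance is at most $\sigma^2/b=\sigma^2 K/N$, bounds the mean part by $K\cdot 4\sigma^2/(br^2)=4\sigma^2 K^2/(Nr^2)$. For the centred part I apply standard symmetrisation with iid Rademachers $(\varepsilon_k)_{k\leqslant K}$ and then Talagrand's contraction inequality (since $\phi(0)=0$ and $\phi$ is $2/r$-Lipschitz), which gives a bound of order $r^{-1}\E\sup_f |\sum_k\varepsilon_k(P_{B_k}f-Pf)|$. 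The remaining block-Rademacher process is then reduced to the Rademacher complexity $D(F)$: expanding $\sum_k\varepsilon_k(P_{B_k}f-Pf)=b^{-1}\sum_i c_i(f(X_i)-Pf)$ with $c_i=\varepsilon_{k(i)}$ constant on each block, I symmetrise by introducing independent copies $(X_i')$ and auxiliary iid Rademachers $(\eta_i)$; using the key fact that, conditional on $c$, the vector $(\eta_i c_i)_i$ has the same distribution as $(\eta_i)_i$, the block structure is erased and I recover
\[
\E\sup_f\Bigl|\sum_k\varepsilon_k(P_{B_k}f-Pf)\Bigr|\leqslant \frac{C}{b}\,\E\sup_f\Bigl|\sum_i\eta_i f(X_i)\Bigr|\leqslant C'\,K\sqrt{D(F)/N}.
\]
Putting these estimates together, $\E\sup_f\sum_k\phi(\cdot)$ is dominated, up to absolute constants, by $\sigma^2 K^2/(Nr^2)+K\sqrt{D(F)/N}/r$, and the stated choice of $r$ ensures both summands are at most $K/8$.

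The concentration step uses the bounded differences inequality applied blockwise: since $\phi\in[0,1]$, modifying the data in a single block $B_k$ alters at most one summand $\phi(P_{B_k}f-Pf)$, and thus changes $\sup_f\sum_k \phi(\cdot)$ by at most $1$; viewing the $K$ disjoint blocks as independent super-variables, the BDI yields $\P(\sup-\E\sup>u)\leqslant e^{-2u^2/K}$, and taking $u=K/4$ gives $e^{-K/8}$, from which the stated $e^{-K/32}$ follows after absorbing the excess into the constants in $r$. Combined with the first two paragraphs, this concludes the proof. The principal obstacle is the block-Rademacher reduction in the middle paragraph: the variables $c_i=\varepsilon_{k(i)}$ are \emph{not} independent, so one cannot directly plug into the definition of $D(F)$; the independent-copies plus auxiliary-Rademacher argument, hinging on the distributional identity $(\eta_i c_i)_i\stackrel{d}{=}(\eta_i)_i$ for any fixed $c\in\{\pm 1\}^N$, is what allows the block structure to be traded cleanly for the iid Rademacher complexity. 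Keeping track of the constants through the chain (symmetrisation, contraction, desymmetrisation, triangle inequality, bound by $D(F)$) is what pins down the numerical factor $128$ and the exponent $K/32$ in the conclusion.
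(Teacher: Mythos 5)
Your proposal follows essentially the same path as the paper's proof: the indicator-to-Lipschitz-surrogate reduction, the mean/centred split with Chebyshev for the mean, blockwise bounded-differences for concentration, and symmetrisation plus contraction followed by the block-Rademacher-to-$D(F)$ reduction for the centred expectation. Your middle paragraph makes explicit the double-symmetrisation-with-auxiliary-Rademachers argument that the paper only names ("by the symmetrization trick"), which is worth keeping; the exact constants you obtain differ slightly from the paper's $128$ and $K/32$, but that is immaterial given the "up to absolute constants" bookkeeping.
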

\begin{proof}
Assume that $F$ is finite, the general case follows by a standard density argument. 
The basic idea is that, for any $\epsilon>0$,
\[
\sup_{f\in F}|\MOM{K}{f}-Pf|\leqslant\epsilon\quad \text{if}\quad \sup_{f\in F}\sum_{k=1}^K{\bf 1}_{\{|(P_{B_k}-P)f|>\epsilon\}}\leqslant \frac K2\enspace.
\] 
Introduce $\phi$, a $1$-Lipschitz function such that ${\bf 1}_{x\geqslant 2}\leqslant \phi(x)\leqslant {\bf 1}_{x\geqslant 1}$.
We have 
\begin{align*}
&\sup_{f\in F}\sum_{k=1}^K{\bf 1}_{\{|(P_{B_k}-P)f|>\epsilon\}}\leqslant \sup_{f\in F}\sum_{k=1}^K\phi\bigg(\frac{2|(P_{B_k}-P)f|}{\epsilon}\bigg)\\
&\leqslant K\sup_{f\in F}\P\bigg(|(P_{B_1}-P)f|>\frac{\epsilon}2\bigg)+ \sup_{f\in F}\bigg\{\sum_{k=1}^K\phi\bigg(\frac{2|(P_{B_k}-P)f|}{\epsilon}\bigg)-\E\bigg[\phi\bigg(\frac{2|(P_{B_k}-P)f|}{\epsilon}\bigg)\bigg]\bigg\}\enspace.
\end{align*}
The first term in this upper-bound can be bounded from above using Chebyshev's inequality as follows.
\[
\sup_{f\in F}\P\bigg(|(P_{B_k}-P)f|>\frac{\epsilon}2\bigg)\leqslant\frac{4\sigma^2 K}{\epsilon^2 N}\enspace.
\]
Using the bounded difference inequality, the second term is bounded from above, with probability at least $1-e^{-2x^2/K}$, by
\[
\E\bigg[\sup_{f\in F}\bigg\{\sum_{k=1}^K\phi\bigg(\frac{2|(P_{B_k}-P)f|}{\epsilon}\bigg)-\E\bigg[\phi\bigg(\frac{2|(P_{B_k}-P)f|}{\epsilon}\bigg)\bigg]\bigg\}\bigg]+x\enspace.
\]
Using the symmetrization trick, the expectation is now bounded from above by 
\[
2\E\bigg[\sup_{f\in F}\bigg\{\sum_{k=1}^K\epsilon_k\phi\bigg(\frac{2|(P_{B_k}-P)f|}{\epsilon}\bigg)\bigg\}\bigg]\enspace.
\]
By Ledoux and Talagrand's contraction lemma, this term is bounded from above by
\[
\frac{16}\epsilon\E\bigg[\sup_{f\in F}\bigg\{\sum_{k=1}^K\epsilon_k(P_{B_k}-P)f\bigg\}\bigg]\enspace.
\]
By the symmetrization trick, this term is bounded from above by
\[
\frac{32K}{\epsilon }\sqrt{\frac{D(F)}N}\enspace.
\]
Overall, with probability at least $1-e^{-2x^2/K}$, 
\[
\sup_{f\in F}\sum_{k=1}^K{\bf 1}_{\{|(P_{B_k}-P)f|>\epsilon\}}\leqslant \frac{32K}{\epsilon }\sqrt{\frac{D(F)}N}+\frac{4\sigma^2 K}{\epsilon^2 N}+x
\]
Choose $\delta\in1/2$, $\epsilon=128\sqrt{\frac{D(F)}N}\vee \sqrt{32\frac{\sigma^2 K}{ N}}$ and $x=K/8$, this shows that, with probability $1-e^{-K/32}$,
\[
\sup_{f\in F}|\MOM{K}{f}-Pf|\leqslant 128\sqrt{\frac{D(F)}N}\vee 4\sigma\sqrt{\frac{2 K}{ N}}\enspace.
\]
\end{proof}

Some results require the following extension of the previous result whose proof follows exactly the same arguments and is left to the reader.

\begin{theorem}[General concentration bound for suprema of MOM processes]\label{thm:ConcSupMOMg}
 Let $F$ denote a separable set of functions $f:\cX\to\R$ such that $\sup_{f\in F}\sigma^2(f)=\sigma^2<\infty$, where $\sigma^2(f)=\text{Var}(f(X))$. 
 Let $\alpha\in(0,1)$. 
 There exists a constant $c_\alpha$ such that, for any $K\geqslant 1/\alpha$, with probability at least $1-e^{-K/c_\alpha}$, there exists at least $(1-\alpha)K$ blocks $B_k$ where
 \[
\forall f\in F,\qquad |(P_{B_k}-P)f|\leqslant c_\alpha\bigg(\sqrt{\frac{D(F)}N}\vee\sigma\sqrt{\frac{ K}{ N}}\bigg)\enspace.
 \]
\end{theorem}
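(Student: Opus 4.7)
The plan is to recast the claim in terms of the number of ``uniformly bad'' blocks. Set $g_k := \sup_{f\in F} |(P_{B_k}-P)f|$ and write $\epsilon := c_\alpha\bigl(\sqrt{D(F)/N}\vee \sigma\sqrt{K/N}\bigr)$ for the target threshold. Since the blocks $B_k$ are disjoint and the $X_i$ are i.i.d., the scalar random variables $g_1,\dots,g_K$ are i.i.d.; moreover, the event in the theorem's conclusion is exactly $\{\sum_{k=1}^K \mathbf{1}_{g_k > \epsilon} \leq \alpha K\}$. So the goal reduces to controlling this count. The key structural difference with Theorem~\ref{thm:ConcSupMOM} is that the sup over $F$ now sits \emph{inside} the indicator, so we lose the process structure indexed by $F$ and must work directly with the i.i.d.\ real variables $g_k$.

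First I would smooth the indicators as in the proof of Theorem~\ref{thm:ConcSupMOM}: pick $\phi$ $1$-Lipschitz with $\mathbf{1}_{x\geq 2}\leq \phi(x)\leq \mathbf{1}_{x\geq 1}$ so that $\sum_k\mathbf{1}_{g_k > \epsilon}\leq \sum_k\phi(2g_k/\epsilon)$, then center:
$$\sum_{k=1}^K\phi(2g_k/\epsilon) \;=\; \Bigl(\sum_{k=1}^K\phi(2g_k/\epsilon) - K\,\E\phi(2g_1/\epsilon)\Bigr) + K\,\E\phi(2g_1/\epsilon).$$
The map $(X_1,\dots,X_N)\mapsto \sum_k \phi(2g_k/\epsilon)$ lies in $\cB(\bc)$ with $c_i=1$ (each datum belongs to exactly one block and $\phi\in[0,1]$), so the Bounded Difference Inequality yields $\sum_k(\phi-\E\phi)\leq \alpha K/2$ with probability at least $1-\exp(-\alpha^2 K/8)$. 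It then suffices to show $\E\phi(2g_1/\epsilon)\leq \alpha/2$, which follows from $\P(g_1\geq \epsilon/2)\leq \alpha/2$.

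For this last probability I would combine two ingredients. The first is a bound on $\E g_1$: standard symmetrization on a single block gives $\E g_1 \leq 2\E\sup_{f\in F}\bigl|\tfrac{1}{b}\sum_{i\in B_1}\varepsilon_i f(X_i)\bigr|$, and a subadditivity argument across the $K$ blocks relates this single-block Rademacher quantity to the full-sample quantity $\sqrt{D(F)/N}$ up to a multiplicative constant absorbed into $c_\alpha$. The second is a sub-Poissonian concentration of $g_1$ around $\E g_1$ obtained by applying Bousquet's version of Talagrand's inequality (Theorem~\ref{thm:TalIneq}) to the supremum of the empirical process on the block $B_1$, with variance-type parameter of order $\sigma^2+\E g_1$. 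Choosing the additive deviation $t\asymp \sigma\sqrt{K/N}\sqrt{\log(2/\alpha)}$ and $\epsilon/2 \geq 2\E g_1 + t$ forces $\epsilon$ to be of the claimed order and drives $\P(g_1\geq \epsilon/2)$ below $\alpha/2$. Assembling the two bounds yields $\sum_k\mathbf{1}_{g_k>\epsilon}\leq \alpha K$ on an event of probability at least $1-\exp(-K/c_\alpha)$.

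The main obstacle is the bound on $\E g_1$. Naive block-symmetrization produces a single-block Rademacher complexity of order $\sqrt{D_b(F)/b}$, and this can a priori be larger than the full-sample target $\sqrt{D(F)/N}$ (Rademacher averages per sample are non-increasing in the sample size). Quantifying the gap, via a contraction/desymmetrization step relating $b$-sample and $N$-sample Rademacher averages of the (symmetric hull of) $F$, or by absorbing the ratio into the $\alpha$-dependent constant $c_\alpha$, is the delicate point. Once this quantitative link is secured, everything else is a clean application of the Bounded Difference Inequality and Talagrand's inequality, exactly mirroring the structure of the proof of Theorem~\ref{thm:ConcSupMOM}.
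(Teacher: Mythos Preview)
Your proof has a genuine gap rooted in a misreading of the quantifiers. You interpret the conclusion as the existence of a \emph{single} set of $(1-\alpha)K$ blocks on which $\sup_{f\in F}|(P_{B_k}-P)f|\leqslant\epsilon$, i.e.\ you study $g_k:=\sup_{f\in F}|(P_{B_k}-P)f|$ and try to show $\sum_k\mathbf{1}_{g_k>\epsilon}\leqslant\alpha K$. But this stronger statement is simply false at the claimed scale, and your own analysis already exposes why: $\E g_1$ is governed by the single-block complexity $\sqrt{D_b(F)/b}$, which in general dominates $\sqrt{D(F)/N}$ by a factor as large as $\sqrt{K}$ (take e.g.\ $F=\{x\mapsto x_j:1\leqslant j\leqslant d\}$ on the hypercube with $K\ll\log d$, where $\E g_1\asymp\sqrt{K\log d/N}$ while $\epsilon\asymp\sqrt{(\log d)/N}$). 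No constant $c_\alpha$ can absorb this gap, so the step $\P(g_1\geqslant\epsilon/2)\leqslant\alpha/2$ cannot be secured. The ``delicate point'' you flag is not a technical nuisance to be patched; it is a counterexample to the statement you are trying to prove.

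The intended meaning---consistent with how the result is used throughout (e.g.\ in Lemma~\ref{lem:PrelimAlgo} the good set $\cK_{\bv}$ explicitly depends on $\bv$) and with the paper's remark that the proof ``follows exactly the same arguments'' as Theorem~\ref{thm:ConcSupMOM}---is the weaker quantifier order: with high probability, \emph{for every} $f\in F$ there are at least $(1-\alpha)K$ blocks (possibly depending on $f$) with $|(P_{B_k}-P)f|\leqslant\epsilon$; equivalently,
\[
\sup_{f\in F}\sum_{k=1}^K\mathbf{1}_{\{|(P_{B_k}-P)f|>\epsilon\}}\leqslant\alpha K\enspace.
\]
This is exactly the quantity controlled in the proof of Theorem~\ref{thm:ConcSupMOM}: keep the supremum over $f$ \emph{outside} the sum, smooth by $\phi(2|(P_{B_k}-P)f|/\epsilon)$, bound the expectation term by $K\sup_f\P(|(P_{B_1}-P)f|>\epsilon/2)\leqslant 4\sigma^2K^2/(N\epsilon^2)$ via Chebyshev, concentrate via the bounded difference inequality, and bound $\E\sup_f\sum_k(\phi-\E\phi)$ by symmetrization and contraction by $cK\epsilon^{-1}\sqrt{D(F)/N}$. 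Choosing $\epsilon=c_\alpha(\sqrt{D(F)/N}\vee\sigma\sqrt{K/N})$ and $x=\alpha K/2$ makes the total at most $\alpha K$. The only change from the $\alpha=1/2$ case is in the constants, which is why the paper leaves it to the reader.
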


This general result admits the following corollary that was first proved in \cite{LugosiMendelson2017-2} and that will be used repeatedly in the following.

\begin{corollary}\label{Cor:ConcSupLin}
Assume that $X_1,\ldots,X_N$ are i.i.d. random vectors of $\R^d$, with common distribution $P$ such that $P[\|X\|^2]<\infty$. 
Let $\Sigma=P[(X-PX)(X-PX)^T]$, $\alpha\in(0,1)$ and $r>0$. 
 There exists a constant $c_\alpha$ such that, for any $K\geqslant 1/\alpha$, with probability at least $1-e^{-K/c_\alpha}$, there exists at least $(1-\alpha)K$ blocks $B_k$ where
 \[
\forall \ba\in\R^d:\|\ba\|\leqslant r,\qquad |(P_{B_k}-P)[\ba^T\cdot]|\leqslant c_\alpha r\sqrt{\frac{\text{Tr}(\Sigma)\vee\|\Sigma\|_{\text{op}} K}N}\enspace.
 \]
\end{corollary}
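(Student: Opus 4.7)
The plan is to deduce the corollary directly from Theorem~\ref{thm:ConcSupMOMg} applied to the class of linear functionals
\[
F_r=\set{f_{\ba}:\R^d\to\R,\ \bx\mapsto \ba^T\bx,\; \ba\in r\bB}\enspace.
\]
Three ingredients are needed: a reduction to the centered case, a bound on the supremum of variances $\sigma^2(f)$ over $F_r$, and an evaluation of $D(F_r)$ so that the conclusion of Theorem~\ref{thm:ConcSupMOMg} reads as the claim.

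First, observe that only centered increments matter: for any $\ba\in r\bB$,
\[
(P_{B_k}-P)[\ba^T\cdot]=\ba^T(P_{B_k}X-PX)=\frac1b\sum_{i\in B_k}\ba^T(X_i-PX)\enspace,
\]
so the quantity to control depends on $X_i$ only through the centered vectors $Y_i=X_i-PX$. I would therefore apply Theorem~\ref{thm:ConcSupMOMg} to the i.i.d. sample $(Y_i)_{i=1}^N$, which has the same covariance $\Sigma$ and satisfies $PY=0$. This lets me invoke the computation~\eqref{eq:D(F)Lin} verbatim, giving
\[
D(F_r)\leqslant r^2\Tr(\Sigma)\enspace.
\]

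Second, for any $f_{\ba}\in F_r$, the variance is
\[
\sigma^2(f_{\ba})=\ba^T\Sigma\ba\leqslant \|\Sigma\|_{\text{op}}\|\ba\|^2\leqslant r^2\|\Sigma\|_{\text{op}}\enspace,
\]
so $\sup_{f\in F_r}\sigma^2(f)\leqslant r^2\|\Sigma\|_{\text{op}}=:\sigma^2$. The class $F_r$ is separable (it is indexed by the Euclidean ball $r\bB$ and the map $\ba\mapsto f_{\ba}$ is continuous), so Theorem~\ref{thm:ConcSupMOMg} is applicable.

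Invoking that theorem at level $\alpha\in(0,1)$ with the above parameters, there exists a constant $c_\alpha$ such that for every $K\geqslant 1/\alpha$, with probability at least $1-e^{-K/c_\alpha}$, at least $(1-\alpha)K$ of the blocks $B_k$ satisfy simultaneously for all $\ba\in r\bB$
\[
|(P_{B_k}-P)[\ba^T\cdot]|\leqslant c_\alpha\paren{\sqrt{\frac{D(F_r)}N}\vee\sigma\sqrt{\frac{K}N}}\leqslant c_\alpha r\paren{\sqrt{\frac{\Tr(\Sigma)}N}\vee\sqrt{\frac{\|\Sigma\|_{\text{op}}K}N}}\enspace,
\]
which is exactly the claimed bound after absorbing the maximum into a single square root and possibly enlarging $c_\alpha$. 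The only mildly delicate point is the reduction to the centered case, because $D(F)$ was evaluated in~\eqref{eq:D(F)Lin} under the assumption $PX=0$; but since the process is invariant under translation of the $X_i$ by their mean, this causes no difficulty, and the rest of the argument is bookkeeping on constants.
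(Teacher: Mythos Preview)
Your proof is correct and follows essentially the same route as the paper: apply Theorem~\ref{thm:ConcSupMOMg} to the class of linear functionals indexed by $r\bB$, bound the variance by $r^2\|\Sigma\|_{\text{op}}$, and invoke~\eqref{eq:D(F)Lin} to control $D(F_r)$ by $r^2\Tr(\Sigma)$. Your explicit reduction to the centered case is a worthwhile addition, since the derivation in~\eqref{eq:D(F)Lin} indeed assumes $PX_0=0$ whereas the paper's own proof glosses over this point.
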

\begin{proof}
 Apply Theorem~\ref{thm:ConcSupMOMg} to the class $F=\{\ba^T\cdot: \|\ba\|\leqslant r\}$.
 By \eqref{eq:D(F)Lin} , $D(F)\leqslant r^2\text{Tr}(\Sigma)$ and, for any $\ba\in\R^d:\|\ba\|\leqslant r$, 
 \[
 \text{Var}(\ba^TX)=\ba^T\Sigma\ba\leqslant r^2\|\Sigma\|_{\text{op}}\enspace.
 \]
 The result follows.
\end{proof}
As for univariate mean estimate, this first analysis can be refined under stronger moments assumptions using Minsker-Strawn's approach. 
Denote by $Q$ the tail function of a standard Gaussian, $n=N/K$ and 
\[
g(n,f)=\sup_{t\in \R}\absj{\P\bigg(\sqrt{n}\frac{(P_{B_1}-P)f}{\sigma(f)}> t\bigg)-Q(t)}\enspace.
\]
Recall that if $F$ is a class of functions such that $P|f|^3<\infty$ for any $f\in F$ and $\sup_{f\in F}P[|f-Pf|^3]/\sigma(f)^3=\gamma<\infty$, then, Berry-Esseen theorem implies that 
\[
\sup_{f\in F}g(n,f):=g(n)\leqslant \frac{\gamma}{\sqrt{n}}\enspace.
\]
The key-point is that one has to use a ``smoothed" version of median of means estimators. 
Define the function 
\begin{equation}\label{eq:DefRho}
\rho(t)=
\begin{cases}
 -1 &\text{ if } t\leqslant -1\\
 t &\text{ if } -1\leqslant t\leqslant 1\\
 1 &\text{ if }  t\geqslant 1
\end{cases} 
\end{equation}
Then, let $\Delta\geqslant \sup_{f\in F}\sigma(f)$ and let $\hat{P}_Kf$ be solution of the equation
\[
\sum_{k=1}^K\rho\bigg(\sqrt{n}\frac{P_{B_k}f-z}{\Delta}\bigg)=0\enspace.
\]

\begin{theorem}\label{Thm:SMOMMinsker}[Minsker's deviation bound for suprema of smoothed MOM processes]
 Assume that $s$ and $K$ satisfy
 \[
300\bigg(\frac{16}{\Delta}\sqrt{\frac{D(F)}N}+\sqrt{\frac{2s}N}+4\frac{g(n)}{\sqrt{n}}\bigg)\leqslant \sqrt{\frac{K}{N}}\enspace,
 \]
 Then
 \[
 \P\bigg(\sup_{f\in F}|\hat{P}_Kf-Pf|\geqslant 300\sqrt{\frac{D(F)}N}+20\Delta\bigg(\sqrt{\frac{2s}N}+4\frac{g(n)}{\sqrt{n}}\bigg)\bigg)\leqslant e^{-s}\enspace.
 \]
\end{theorem}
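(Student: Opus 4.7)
The plan is to mirror the argument of Theorem~\ref{thm:Minsker-Strawn} but lift it uniformly in $f$ via a bounded-difference step and two levels of symmetrization. Since $z \mapsto \sum_{k=1}^K \rho\bigl(\sqrt{n}(P_{B_k}f - z)/\Delta\bigr)$ is non-increasing, the one-sided event $\hat{P}_K f > Pf + T$ is equivalent to $\sum_k \rho(W_k(f) - \tau) > 0$ with $W_k(f) := \sqrt{n}(P_{B_k}f - Pf)/\Delta$ and $\tau := \sqrt{n}T/\Delta$. The hypothesis on $T$ is calibrated precisely so that $\tau$ stays bounded by a small absolute constant, keeping us in the linear regime of $\rho$. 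Writing $\frac{1}{K}\sum_k \rho(W_k(f) - \tau) = E(f,\tau) + m(f,\tau)$ with $m(f,\tau) := \E[\rho(W_1(f) - \tau)]$, the task splits into controlling $\sup_f E(f,\tau)$ probabilistically and $\sup_f m(f,\tau)$ deterministically, and then showing the sum is negative.

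For the bias, Berry--Esseen enters through integration by parts: since $\rho'(\cdot - \tau) = \mathbf{1}_{[\tau-1,\tau+1]}$, writing $W_1(f) = (\sigma(f)/\Delta)U_f$ with $U_f$ the standardized block mean and using $\|F_{U_f} - \Phi\|_\infty \leq g(n)$ gives $|m(f,\tau) - h_f(\tau)| \leq 2g(n)$ where $h_f(\tau) := \E[\rho(\sigma(f)Y/\Delta - \tau)]$, $Y \sim \gauss(0,1)$. Oddness of $\rho$ forces $h_f(0) = 0$, and the identity $h_f(\tau) = -\tau\,\P(|\sigma(f)Y/\Delta - \theta_\tau| \leq 1)$ for some $\theta_\tau \in [0,\tau]$, combined with $\sigma(f) \leq \Delta$, yields $h_f(\tau) \leq -c\tau$ for $\tau$ small and a universal $c > 0$; thus $m(f,\tau) \leq -c\tau + 2g(n)$ uniformly in $f$.

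For $\sup_f E(f,\tau)$ I would first apply the bounded difference inequality at the \emph{block} level: since $|\rho| \leq 1$, replacing a whole block $(X_i)_{i\in B_k}$ shifts $\sup_f E(f,\tau)$ by at most $2/K$, so $\sup_f E(f,\tau) \leq \E[\sup_f E(f,\tau)] + \sqrt{2s/K}$ with probability at least $1 - e^{-s}$. The expected supremum is then handled by two successive symmetrizations. A Gin\'e--Zinn step at the block level introduces Rademacher signs $\epsilon_k$, after which Ledoux--Talagrand contraction applied to the 1-Lipschitz map $x \mapsto \rho(x-\tau) - \rho(-\tau)$ gives $\E[\sup_f E(f,\tau)] \leq 4\,\E\bigl[\sup_f|\tfrac{1}{K}\sum_k \epsilon_k W_k(f)|\bigr] + O(1/\sqrt{K})$. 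The crucial second step symmetrizes \emph{inside} each block: introducing independent copies $X'_i$ and fresh Rademachers $\eta_i$, the pair-exchangeability of $(X_i, X'_i)$ and the independence of $\eta_i$ let one prove $\sum_k \epsilon_k(P_{B_k}f - Pf)$ is dominated in expected supremum by $\tfrac{1}{b}\sum_i \tilde\eta_i(f(X_i) - f(X'_i))$ with $\tilde\eta_i$ i.i.d.\ Rademacher; this recovers an ordinary Rademacher process on $N$ samples and gives $\E[\sup_f|\tfrac{1}{K}\sum_k \epsilon_k(P_{B_k}f - Pf)|] \leq 2\sqrt{D(F)/N}$, hence via $\sqrt{n/N} = 1/\sqrt K$, $\E[\sup_f E(f,\tau)] \leq 8\sqrt{D(F)/K}/\Delta + O(1/\sqrt{K})$.

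Combining, on the good event,
\[
\sup_f \tfrac{1}{K}\sum_k \rho(W_k(f) - \tau) \;\leq\; -c\tau + 2g(n) + \frac{8\sqrt{D(F)/K}}{\Delta} + \sqrt{\frac{2s}{K}} + O\!\left(\tfrac{1}{\sqrt K}\right),
\]
which is strictly negative as soon as $\tau$ exceeds a universal multiple of $g(n) + \sqrt{D(F)/K}/\Delta + \sqrt{2s/K}$. Converting via $T = \Delta\tau/\sqrt n$ and $nK = N$ changes $\sqrt{D(F)/K}/\sqrt n$ into $\sqrt{D(F)/N}$ and $\sqrt{2s/K}/\sqrt n$ into $\sqrt{2s/N}$, delivering the announced bound after matching constants; the hypothesis on $(D(F), s, g(n), K, N)$ guarantees that the resulting $\tau$ is small enough to validate the Gaussian mean-value estimate. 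The symmetric lower tail $\hat P_K f < Pf - T$ is handled identically using the oddness of $\rho$, and a union bound absorbs the factor of $2$ into the constants. The main obstacle is to get the correct $\sqrt{D(F)/N}$ scaling rather than the naive $\sqrt{D(F)/K}$: this is exactly where the second intra-block symmetrization is indispensable, effectively converting the $K$ block-level Rademacher signs into $N$ per-sample signs.
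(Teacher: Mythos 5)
Your proposal is correct and follows essentially the same route as the paper's proof: reduce to showing $Q^{(n)}_K(f,z_+)\leqslant 0$ uniformly in $f$ by monotonicity of $z\mapsto Q^{(n)}_K(f,z)$, split off the centered empirical process and control it via a bounded-difference inequality at block level plus symmetrization and contraction (with the intra-block resymmetrization needed to upgrade $\sqrt{D(F)/K}$ to $\sqrt{n}\sqrt{D(F)/N}/\Delta$, which the paper invokes implicitly as ``the symmetrization trick''), then compare the mean to its Gaussian counterpart via Berry--Esseen through the integration-by-parts identity, and finally lower-bound the Gaussian drift. The one place your argument is genuinely different, and arguably cleaner, is the drift bound: the paper estimates $Q(f,z)$ by an explicit Gaussian integral computation to extract the constant $0.06$, whereas you use oddness of $\rho$ plus the mean-value identity $h_f(\tau)=-\tau\,\P(|\alpha Y-\theta_\tau|\leqslant 1)$, $\theta_\tau\in[0,\tau]$, which gives the required $h_f(\tau)\leqslant -c\tau$ with a uniform constant for $\alpha=\sigma(f)/\Delta\leqslant 1$ and $\tau$ small by inspecting the interval $[(\theta_\tau-1)/\alpha,(\theta_\tau+1)/\alpha]\supset[-1/2,1]$; you also make explicit the two-sided conclusion via oddness of $\rho$, which the paper leaves implicit.
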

\begin{remark}
 Assume that $F$ is a class of functions such that $P|f|^3<\infty$ for any $f\in F$ and $\sup_{f\in F}P[|f-Pf|^3]/\sigma(f)^3=\gamma<\infty$, then, Berry-Esseen theorem implies that 
\[
\sup_{f\in F}g(n,f):=g(n)\leqslant \frac{\gamma}{\sqrt{n}}\enspace.
\]
Assume moreover that $D(F)\leq \Delta^2 \sqrt{N}$. 
Let  $K=C\sqrt{N}$, where $C$ is a sufficiently large absolute constant.
Then Theorem~\ref{Thm:SMOMMinsker} implies that, simultaneously for all $s\leq C'\sqrt{N}$, with probability larger than $1-e^{-s}$,
\[
\sup_{f\in F}|\hat{P}_Kf-Pf|\leqslant C\bigg(\sqrt{\frac{D(F)}N}+\Delta\sqrt{\frac{\gamma^2+s}N}\bigg)\enspace.
\]
\end{remark}
\begin{proof}
Let us first remark that $(\hat{P}_K-P)f$ is solution of $Q^{(n)}_{K}(f,\cdot)=0$, where
\[
Q^{(n)}_{K}(f,z):=\frac1K\sum_{k=1}^K\rho\bigg(\sqrt{n}\frac{(P_{B_k}-P)f-z}{\Delta}\bigg)\enspace.
\]
The strategy is then to find a deterministic function $U(\cdot)$ such that, for any $z\in \R$, w.h.p., for any $f\in F$, 
\[
Q^{(n)}_{K}(f,z)\leqslant U(z)\enspace.
\]
Then, if $z_+$ denotes the smallest solution of $U(z)=0$, on the event $\sup_{f\in F}Q^{(n)}_K(f,z_+)\leqslant U(z_+)$, for any $f\in F$,
\[
Q^{(n)}_K(f,z_+)\leqslant U(z_+)=0=Q^{(n)}_K(f,(\hat{P}_K-P)f)\enspace.
\]
As $Q^{(n)}_K(f,\cdot)$ is non-increasing, this implies 
\[
\P\big(\forall f\in F,\ (\hat{P}_K-P)f\leqslant z_+\big)\geqslant \P\big(\sup_{f\in F}Q^{(n)}_K(f,z_+)\leqslant U(z_+)\big)\enspace.
\]
Fix $z$ and bound uniformly from above $Q^{(n)}_K(f,z)$.
Let $G_{k}(f)=\sqrt{n}(P_{B_k}-P)f/\sigma(f)$, then
 \[
Q^{(n)}_K(f,z)=\frac1K \sum_{k=1}^K\rho\bigg(\frac{\sigma(f)}{\Delta}G_{k}(f)-\frac{\sqrt{n}z}{\Delta}\bigg)\enspace.
 \]
Therefore
\begin{align*}
Q^{(n)}_K(f,z)\leqslant (Q^{(n)}_K(f,z)-Q^{(n)}(f,z))+(Q^{(n)}(f,z)-Q(f,z))+Q(f,z)\enspace, 
\end{align*}
where $G$ is a standard Gaussian random variable and 
\begin{gather*}
Q^{(n)}(f,z):= \E\bigg[\rho\bigg(\frac{\sigma(f)}{\Delta}G_{k}(f)-\frac{\sqrt{n}z}{\Delta}\bigg)\bigg]\enspace,\\
Q(f,z):=\E\bigg[\rho\bigg(\frac{\sigma(f)}{\Delta}G-\frac{\sqrt{n}z}{\Delta}\bigg)\bigg]\enspace.
\end{gather*}
Let $\psi_1(f,z)=Q^{(n)}_K(f,z)-Q^{(n)}(f,z)$.
As $\rho$ takes values in $[-1,1]$, the bounded difference inequality grants that, with probability larger than $1-e^{-s}$
\[
\sup_{f\in F}\psi_1(f,z)\leqslant \E\big[\sup_{f\in F}\psi_1(f,z)\big]+\sqrt{\frac{2s}K}\enspace.
\]
By symmetrization and contraction $(\rho(\cdot-x)-\rho(-x))$ being $1$-Lipshitz,
\[
\E\big[\sup_{f\in F}\psi_1(f,z)\big]\leqslant 16\frac{\sqrt{n}}{\Delta}\sqrt{\frac{D(F)}N}\enspace.
\]
For any real numbers $\alpha$ and $\beta$ and any real valued random variable $X$ with c.d.f. $F_X$, 
\begin{align}
\notag \E[\rho(\alpha X-\beta)]&=-F_X\bigg(\frac{\beta-1}{\alpha}\bigg)+1-F_X\bigg(\frac{\beta+1}{\alpha}\bigg)+\int_{\frac{\beta-1}{\alpha}}^{\frac{\beta+1}{\alpha}}(\alpha t-\beta)dF_X(t)\\
\label{eq:LemTech} &=-2F_X\bigg(\frac{\beta-1}{\alpha}\bigg)+1-\alpha\int_{\frac{\beta-1}{\alpha}}^{\frac{\beta+1}{\alpha}}F_X(t)dt\enspace.
\end{align}
By definition of $g(n)$, it follows that
\[
\sup_{f\in F}|Q^{(n)}(f,z)-Q(f,z)|\leqslant 4g(n)\enspace.
\]
Now, let $f\in F$, $\alpha=\sigma(f)/\Delta$ and $\beta=\sqrt{n}z/\Delta$,
\begin{align*}
Q(f,z)&=\P\big(\alpha G-\beta\geqslant 1\big)-\P\big(\alpha G-\beta\leqslant -1\big)+\E\big[\big(\alpha G-\beta\big){\bf 1}_{|\alpha G-\beta|\leqslant 1}\big]\\
 &\leqslant \alpha\E\big[G{\bf 1}_{|\alpha G-\beta|\leqslant 1}\big]-\beta\P\big(|\alpha G-\beta|\leqslant 1)\\
 &=\frac{\alpha}{\sqrt{2\pi}} [e^{-(\beta-1)^2/2\alpha^2}-e^{-(\beta+1)^2/2\alpha^2}]-\beta\P\big(|\alpha G-\beta|\leqslant 1)\\
 &\leqslant\sqrt{\frac{2}{\pi}}\frac{\beta}{\alpha} e^{-(\beta-1)^2/2\alpha^2}-\beta\P\big(|\alpha G-\beta|\leqslant 1)\\
 &=-\frac{\beta}{\alpha\sqrt{2\pi}}\int_{\beta-1}^{\beta+1}e^{-x^2/2\alpha^2}-e^{-(\beta-1)^2/2\alpha^2}{\rm d}x\enspace.
\end{align*}
Assume now that $\beta\leqslant1/16$, and, as $\alpha\leqslant 1$, write
\begin{align*}
 \int_{\beta-1}^{\beta+1}e^{-x^2/2\alpha^2}-e^{-(\beta-1)^2/2\alpha^2}\frac{{\rm d}x}{\alpha\sqrt{2\pi}}&\geqslant \int_{-1/2}^{1/2}e^{-x^2/2\alpha^2}(1-e^{x^2-(\beta-1)^2/2\alpha^2})\frac{{\rm d}x}{\alpha\sqrt{2\pi}}-\frac{2\beta e^{-(\beta-1)^2/2\alpha^2}}{\alpha\sqrt{2\pi}}\\
 &\geqslant (1-e^{-161/512\alpha^2)}\P(-1/2\leqslant G\leqslant 1/2)-\frac{e^{-225/512\alpha^2}}{8\alpha\sqrt{2\pi}} \\
 &\geqslant (1-e^{-161/512)}\P(-1/2\leqslant G\leqslant 1/2)-\frac{e^{-1/2}}{8\sqrt{2\pi}}
\geqslant 0.06\enspace.
\end{align*}
It follows that, if $z\leqslant \Delta/16\sqrt{n}$,
\[
Q(f,z)\leqslant -0.06\sqrt{n}z/\Delta\enspace.
\]
Overall, for any $z\leqslant  \Delta/16\sqrt{n}$, with probability larger than $1-e^{-s}$,
\[
Q^{(n)}_{K}(f,z)\leqslant 16\frac{\sqrt{n}}{\Delta}\sqrt{\frac{D(F)}N}+\sqrt{\frac{2s}K}+4g(n)-0.06\frac{\sqrt{n}z}{\Delta}\enspace.
\]
As a conclusion, one can pick
\[
z_+=300\sqrt{\frac{D(F)}N}+20\Delta\bigg(\sqrt{\frac{2s}N}+4\frac{g(n)}{\sqrt{n}}\bigg)\enspace,
\]
since, by assumption, this quantity is smaller than $\Delta/16\sqrt{n}$.
\end{proof}
\chapter{Multivariate mean estimation}\label{Chap:MME}

Let $\|\cdot\|$ denote the Euclidean norm on $\R^d$.
Let $\cP_2$ denote the set of probability distributions on $\R^d$ such that 
\[
P[\|X\|^2]<\infty\enspace.
\] 
For any $P\in \cP_2$, denote by 
\[
\mu_P=PX\in \R^d,\qquad \Sigma_P=P[(X-\mu_P)(X-\mu_P)^T]\in\R^{d\times d}\enspace.
\]
The goal of the chapter is to build estimators of $\mu_P$ based on an i.i.d. sample of $P$, $\cD_N=(X_1,\ldots,X_N)$, with deviations bounded for all $P\in \cP_2$ by those of the empirical mean $\muh_e=N^{-1}\sum_{i=1}^NX_i$ when the vectors $X_i$ are Gaussian.
To compute the deviation bounds in the Gaussian case, we need to define the trace of $\Sigma_P$, $\text{Tr}(\Sigma_P)$ and its largest eigenvalue $\|\Sigma_P\|_{\text{op}}$.

\paragraph{Example: Least-squares density estimation}
The multivariate mean estimation problem is highly connected to a particular instance of unsupervized learning where one wants to recover, from an i.i.d. sample $X_1,\ldots,X_N$ taking values in a measurable space $\cX$, the distribution $P$ of $X$. 
Assume that $P$ has density ${\bar f}$ with respect to a known reference measure $\mu$, so recovering $P$ is equivalent to recover ${\bar f}$.
Assume that ${\bar f}\in L^2(\mu)$.
To estimate ${\bar f}$, choose an orthonormal basis $(\varphi_i)_{i\in \N}$ of $L^2(\mu)$.
The function ${\bar f}$ can be decomposed onto this basis ${\bar f}=\sum_{i\in \N}\beta_i\varphi_i$ (the  convergence of the series being in $L^2(\mu)$-sense). 
Moreover, the coefficient $\beta_i$ in this decomposition is the inner product in $L^2(\mu)$ between $\varphi_i$ and ${\bar f}$, $\beta_i=\int\varphi_i{\bar f}\rmd\mu$, that is, $\beta_i=P[\varphi_i]$.
Overall
\[
{\bar f}=\sum_{i\in \N}P[\varphi_i]\varphi_i\enspace.
\]
The projection method proceeds by cutting the sum and estimate the projections of ${\bar f}$ onto finite dimensional subspaces: 
\[
{\bar f}_d=\sum_{i=1}^dP[\varphi_i]\varphi_i\enspace.
\]
Estimating ${\bar f}_d$ is then equivalent to estimate the vector 
\[
\mu_P=P\bX,\qquad \text{where}\qquad \bX=\begin{bmatrix}
 \varphi_1(X)\\
 \vdots\\
 \varphi_d(X)
\end{bmatrix}\in \R^d
\enspace.
\]
The $2$-moment assumption is equivalent to the assumption that $P[\varphi_i^2]<\infty$ for all $i\in \{1,\ldots,d\}$.
It is a weaker requirement than the connection between $L^\infty$ and $L^2$-norms that is made to analyse the empirical mean estimator of $\mu_P$, $\forall \ba\in\R^d$, $\|\ba^T\bX\|_\infty\leqslant L\sqrt{d}\|\ba^T\bX\|_{L^2(\mu)}$, which, as $\varphi_1,\ldots,\varphi_d$ is an orthonormal system in $L^2(\mu)$ reduces to $\|\ba^T\bX\|_\infty\leqslant L\sqrt{d}\|\ba\|$.

\section{Deviations of the empirical mean in the Gaussian case}
In order to establish a relevant benchmark, start by computing the deviations of the empirical mean in the Gaussian case.

\begin{theorem}[Hanson-Wright]\label{thm:HW}
If the dataset $\cD_N=(X_1,\ldots,X_N)$ is a collection of i.i.d. Gaussian vectors with common distribution $\gauss(\mu,\Sigma)$, the empirical mean $\muh_e=N^{-1}\sum_{i=1}^NX_i$ satisfies
 \[
 \forall t>0,\qquad \P\bigg(\|\muh_e-\mu\|>\sqrt{\frac{\text{Tr}(\Sigma)}N}+\sqrt{\frac{2\|\Sigma\|_{\text{op}}t}N}\bigg)\leqslant e^{-t}\enspace.
 \]
\end{theorem}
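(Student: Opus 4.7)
The plan is to reduce the bound to a direct application of Borel's Gaussian concentration inequality (the theorem proved earlier in the chapter) together with a trivial Jensen bound on the expectation of the norm. Since $\muh_e-\mu$ is itself a centered Gaussian vector with covariance $\Sigma/N$, I will represent it as $\Sigma^{1/2}G/\sqrt{N}$ where $G\sim \gauss(0,I_d)$, and study the real-valued random variable
\[
Z = \frac{1}{\sqrt{N}}\,\bigl\|\Sigma^{1/2}G\bigr\|.
\]

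First, I would check the Lipschitz constant of the map $g:\R^d\to\R,\ x\mapsto \|\Sigma^{1/2}x\|/\sqrt{N}$. For any $x,y\in\R^d$, the reverse triangle inequality and the definition of the operator norm give
\[
|g(x)-g(y)| \leqslant \frac{1}{\sqrt{N}}\,\bigl\|\Sigma^{1/2}(x-y)\bigr\| \leqslant \sqrt{\frac{\|\Sigma\|_{\mathrm{op}}}{N}}\,\|x-y\|,
\]
so $g$ is $L$-Lipschitz with $L=\sqrt{\|\Sigma\|_{\mathrm{op}}/N}$. Borel's Gaussian concentration inequality (stated above in this chapter) therefore yields
\[
\forall u>0,\qquad \P\bigl(Z - \E[Z] > u\bigr) \leqslant \exp\!\Bigl(-\tfrac{Nu^2}{2\|\Sigma\|_{\mathrm{op}}}\Bigr).
\]
Choosing $u=\sqrt{2\|\Sigma\|_{\mathrm{op}} t/N}$ turns the right-hand side into $e^{-t}$.

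Second, I would bound $\E[Z]$ from above by $\sqrt{\mathrm{Tr}(\Sigma)/N}$. By Jensen's inequality applied to the concave function $\sqrt{\cdot}$,
\[
\E[Z] = \frac{1}{\sqrt{N}}\,\E\bigl[\|\Sigma^{1/2}G\|\bigr] \leqslant \frac{1}{\sqrt{N}}\sqrt{\E\bigl[G^{T}\Sigma G\bigr]} = \sqrt{\frac{\mathrm{Tr}(\Sigma)}{N}},
\]
where I used $\E[G^{T}\Sigma G]=\mathrm{Tr}(\Sigma\,\E[GG^{T}])=\mathrm{Tr}(\Sigma)$.

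Combining these two steps, on the event where $Z - \E[Z] \leqslant \sqrt{2\|\Sigma\|_{\mathrm{op}}t/N}$ (which has probability at least $1-e^{-t}$) we obtain
\[
\|\muh_e-\mu\| = Z \leqslant \sqrt{\frac{\mathrm{Tr}(\Sigma)}{N}} + \sqrt{\frac{2\|\Sigma\|_{\mathrm{op}}t}{N}},
\]
which is exactly the stated inequality. No step is really a ``hard part'': the only point to keep in mind is to use the \emph{right} Lipschitz constant $\sqrt{\|\Sigma\|_{\mathrm{op}}/N}$ (rather than the loose $\sqrt{\mathrm{Tr}(\Sigma)/N}$) so that the deviation term scales with $\|\Sigma\|_{\mathrm{op}}$ and not with $\mathrm{Tr}(\Sigma)$, which is what makes the bound sub-Gaussian in effective dimension rather than ambient dimension.
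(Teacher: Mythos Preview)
Your proof is correct and follows essentially the same argument as the paper. The only cosmetic difference is that the paper first writes $\|\muh_e-\mu\|=\sup_{\bu\in\bS}\bu^T(\muh_e-\mu)$ and invokes the concentration theorem for suprema of Gaussian processes (Theorem~\ref{lem:supGauss}), whereas you apply Borel's inequality directly to the Lipschitz map $x\mapsto\|\Sigma^{1/2}x\|/\sqrt{N}$; since Theorem~\ref{lem:supGauss} is itself proved by exactly this Lipschitz-plus-Borel step, the two routes are the same computation, and the paper's detour through the supremum representation is there mainly to set up the minmax viewpoint used in Section~\ref{sec:FirstMinmax}.
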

\begin{proof}
Let $\bS=\{\bu\in\R^d: \|\bu\|=1\}$ and, for any $\bu\in \bS$, let $X_{\bu}=\bu^T\muh_e$. 
The random variables $X_{\bu}$ are Gaussian with expectation $\mu_{\bu}=\bu^T\mu$ and variance $\sigma_{\bu}^2=\bu^T\Sigma \bu/N$.
It follows that 
\begin{equation}\label{eq:Sigmaop}
\sigma^2=\sup_{\bu\in \bS}\sigma_{\bu}^2=\frac{\|\Sigma\|_{\text{op}}}N\enspace. 
\end{equation}
Moreover, 
\[
\|\muh_e-\mu\|=\sup_{\bu\in \bS}\bu^T(\muh_e-\mu)=\sup_{\bu\in \bS}(X_{\bu}-\mu_{\bu})\enspace.
\]
It comes from the concentration theorem for suprema of Gaussian processes that
\[
\forall t>0,\qquad \P\bigg(\|\muh_e-\mu\|>\E[\|\muh_e-\mu\|]+\sqrt{\frac{2\|\Sigma\|_{\text{op}}t}N}\bigg)\leqslant e^{-t}\enspace.
\]
Now, by Cauchy-Schwarz inequality,
\begin{align*}
 \E[\|\muh_e-\mu\|]&\leqslant \sqrt{\E\big[\|\muh_e-\mu\|^2\big]}=\sqrt{\frac1{N^2}\sum_{1\leqslant i,j\leqslant N}\E[(X_i-\mu)^T(X_j-\mu)]}\\
 &=\sqrt{\frac1{N}\E[(X-\mu)^T(X-\mu)]}\\
 &=\sqrt{\frac1{N}\text{Tr}(\E[(X-\mu)(X-\mu)^T])}=\sqrt{\frac{\text{Tr}(\Sigma)}N}\enspace.
\end{align*}
\end{proof}

\section{A first glimpse at minmax strategies}\label{sec:FirstMinmax}
Recall that $\bS$ denotes the unit sphere in $\R^d$: $\bS=\{\bu\in \R^d:\|\bu\|=1\}$.
The proof of Hanson-Wright theorem is based on the following representation of the risk:
\[
\|\muh_e-\mu\|=\sup_{\bu\in\bS}\bu^T(P_NX-\mu_P)\enspace.
\]
As $P_N$ is linear, this can be rewritten
\[
\|\muh_e-\mu\|=\sup_{\bu\in\bS}\{P_N[\bu^TX]-P[\bu^TX]\}\enspace.
\]
The risk bound is then based on the fact the empirical estimators $P_N[\bu^TX]$ of the univariate expectations $P[\bu^TX]$ have uniform deviations over the sphere $\bS$.
Therefore, there are three ingredients to prove Hanson-Wright's inequality:
\begin{itemize}
 \item[(i)] build estimators $\widehat{P}[\bu^TX]$ of the \emph{univariate} expectations $P[\bu^TX]$,
 \item[(ii)] bound the deviations of $|\widehat{P}[\bu^TX]-P[\bu^TX]|$ uniformly over the unit sphere $\bS$,
 \item[(iii)] deduce from the collection $\{\widehat{P}[\bu^TX],\ \bu\in\bS\}$ an estimator of $\mu_P$.
\end{itemize}
In Chapter~\ref{Chap:UME}, we presented various constructions that can be used to estimate the univariate expectations with sub-Gaussian guarantee when $P\in\cP_2$, therefore, extending step (i) will not be difficult.
In Chapter~\ref{Chap:ConcIn}, we showed uniform deviation bounds for these estimators that will be sufficient to extend step (ii).
Step (iii) is obvious for the empirical mean since, by linearity, $P_N[\bu^TX]=\bu^TP_N[X]$. However, none of the ``robust" estimators presented in Chapter~\ref{Chap:UME} is linear.
Therefore, extending step (iii) requires a new idea. 
A first idea, that appeared independently in various works such as \cite{CatGiu2017,LugosiMendelson2017-2} for example, is to consider the minmax estimator
\[
\muh\in\argmin_{\mu\in \R^d}\sup_{\bu\in\bS}|\bu^T\mu-\hat{P}[\bu^TX]|\enspace.
\]
If the minimum is not achieved, then $\muh$ in this definition can be replaced by any $\muh_N$ satisfying 
\[
\sup_{\bu\in\bS}|\bu^T\muh_N-\hat{P}[\bu^TX]|\leqslant \inf_{\mu\in \R^d}\sup_{\bu\in\bS}|\bu^T\mu-\hat{P}[\bu^TX]|+\frac1N\enspace.
\]
This would not affect the results of this section.

A very nice feature of this construction is that this estimator has a risk bounded from above by the uniform deviations of $\widehat{P}[\bu^TX]$ around $P[\bu^TX]$.
Actually, using successively the representation of the Euclidean norm as a supremum, the triangle inequality and the definition of $\muh$, it holds
\begin{align*}
 \|\muh-\mu_P\|&=\sup_{\bu\in\bS}|\bu^T(\muh-\mu_P)|\leqslant \sup_{\bu\in\bS}|\bu^T\muh-\hat{P}[\bu^TX]|+\sup_{\bu\in\bS}|\bu^T\mu_P-\hat{P}[\bu^TX]|\\
 &\leqslant 2\sup_{\bu\in\bS}|\bu^T\mu_P-\hat{P}[\bu^TX]|=2 \sup_{\bu\in\bS}\{|(P-\hat{P})[\bu^TX]|\}\enspace.
\end{align*}
We deduce from these remarks the following result.
\begin{lemma}\label{lem:MinMax1}
For any $\bu\in\bS$, let $\widehat{P}[\bu^TX]$ denote an estimator of the univariate expectation $P[\bu^TX]$. 
On the event $\Omega_r$ where these estimators have uniform deviations bounded from above by $r$,
\begin{equation}
\Omega_r=\bigg\{ \sup_{\bu\in\bS}\{|(P-\hat{P})[\bu^TX]|\}\leqslant r\bigg\}\enspace,
\end{equation}
the minmax estimator 
\[
\muh\in\argmin_{\mu\in \R^d}\sup_{\bu\in\bS}|\bu^T\mu-\hat{P}[\bu^TX]|\enspace,
\]
satisfies $\|\muh-\mu_P\|\leqslant 2r$.
\end{lemma}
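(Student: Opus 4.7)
The proof is essentially laid out in the paragraph immediately preceding the lemma, so my plan is just to formalize that chain of inequalities carefully, making explicit in which step the minmax property of $\muh$ is used and in which step the event $\Omega_r$ enters.

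First I would use the dual representation of the Euclidean norm on $\R^d$, namely $\|v\|=\sup_{\bu\in\bS}|\bu^Tv|$, applied to $v=\muh-\mu_P$. Then for any fixed $\bu\in\bS$, I would insert $\pm\hat{P}[\bu^TX]$ and invoke the triangle inequality to get
\[
|\bu^T(\muh-\mu_P)|\leqslant |\bu^T\muh-\hat{P}[\bu^TX]|+|\hat{P}[\bu^TX]-\bu^T\mu_P|,
\]
and then pass to the supremum over $\bu\in\bS$ on both sides, so that
\[
\|\muh-\mu_P\|\leqslant \sup_{\bu\in\bS}|\bu^T\muh-\hat{P}[\bu^TX]|+\sup_{\bu\in\bS}|\hat{P}[\bu^TX]-\bu^T\mu_P|.
\]

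For the second supremum, the key observation is that by linearity of the expectation $P[\bu^TX]=\bu^T\mu_P$, so this term equals $\sup_{\bu\in\bS}|(P-\hat{P})[\bu^TX]|$, which is at most $r$ on the event $\Omega_r$ by definition. For the first supremum, the main (but entirely routine) step is to use that $\muh$ is a minmax minimizer: plugging the candidate $\mu=\mu_P$ into the definition shows
\[
\sup_{\bu\in\bS}|\bu^T\muh-\hat{P}[\bu^TX]|\leqslant \sup_{\bu\in\bS}|\bu^T\mu_P-\hat{P}[\bu^TX]|=\sup_{\bu\in\bS}|(P-\hat{P})[\bu^TX]|,
\]
which again is bounded by $r$ on $\Omega_r$. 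Adding the two bounds yields $\|\muh-\mu_P\|\leqslant 2r$.

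There is no genuine obstacle here: the only minor care is the remark the author already made, that if the argmin is not attained one replaces $\muh$ by a near-minimizer at precision $1/N$, which changes the first supremum bound to $r+1/N$ and leaves the overall conclusion essentially unchanged (and irrelevant for the stated inequality $\leqslant 2r$ up to this $1/N$ slack). So the proof is a two-line application of the triangle inequality combined with the definition of $\muh$ and of $\Omega_r$.
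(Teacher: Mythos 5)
Your proof is correct and follows exactly the same chain of inequalities the paper lays out just before the lemma (dual representation of the Euclidean norm, triangle inequality with $\pm\hat{P}[\bu^TX]$, then the minmax property of $\muh$ tested against the candidate $\mu_P$, with $\Omega_r$ giving the bound $r$ on each supremum). The only thing you add is making explicit that $P[\bu^TX]=\bu^T\mu_P$ by linearity and recalling the near-minimizer caveat, both of which are consistent with the paper's discussion.
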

Lemma~\ref{lem:MinMax1} shows that the risk of minmax estimators is bounded from above by $2r$ on the event $\Omega_r$. 
To show that the risk of the minmax estimator is bounded by $2r$ with high probability, it is therefore sufficient to compute $r$ such that $\Omega_r$ has high probability.
As a first example, consider the case where $\hat{P}[\bu^TX]=\MOM{K}{\bu^TX}$.
The following result is a corollary of the concentration theorem for suprema of MOM processes given in Theorem~\ref{thm:ConcSupMOM}.
\begin{theorem}\label{thm:MOMExpRd}
Let $P\in \cP_2$, $K\in\{1,\ldots,N\}$ and 
\[
r_K=128\sqrt{\frac{\text{Tr}(\Sigma_P)}N}\vee 4\sqrt{\frac{2\|\Sigma_P\|_{\text{op}}K}N}\enspace.
\]
Then,
\begin{equation}\label{eq:concboundMOMMeanRd}
 \P\bigg(\sup_{\bu\in\bS}|\MOM{K}{\bu^TX}-P[\bu^TX]|>r_K\bigg)\leqslant e^{-K/32}\enspace.
\end{equation}
In particular, the minmax MOM estimator 
\[
\muh\in\argmin_{\mu\in \R^d}\sup_{\bu\in\bS}|\bu^T\mu-\MOM{K}{\bu^TX}|\enspace
\]
satisfies
\[
\P(\|\muh-\mu_P\|\leqslant2r_K)\geqslant 1-e^{-K/32}\enspace.
\]
\end{theorem}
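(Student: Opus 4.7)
The plan is to deduce the first assertion directly from Theorem~\ref{thm:ConcSupMOM} applied to the class of linear functionals $F=\{f_\bu : \bu\in\bS\}$ with $f_\bu(x)=\bu^T x$, and then to combine with Lemma~\ref{lem:MinMax1} for the second assertion. The class $F$ is separable because $\bS$ is separable in $\R^d$, so the concentration result applies. The two quantities I need to identify in its bound are the worst-case variance $\sigma^2 = \sup_{\bu\in\bS}\mathrm{Var}(\bu^T X)$ and the Rademacher complexity $D(F)$.

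For the variance, for any $\bu\in\bS$ we have $\mathrm{Var}(\bu^T X) = \bu^T \Sigma_P \bu \leqslant \|\Sigma_P\|_{\text{op}}$, with equality attained at a leading eigenvector, so $\sigma^2 = \|\Sigma_P\|_{\text{op}}$. For the Rademacher complexity, I would first note that the quantity $\MOM{K}{\bu^T X}-P[\bu^T X]$ is invariant under translating the $X_i$ by a constant vector, so the probability on the left-hand side is unchanged if we replace $X_i$ by $X_i-\mu_P$. After this translation, I can invoke the computation leading to \eqref{eq:D(F)Lin} with $r=1$: the class is exactly $\{\ba^T\cdot : \|\ba\|\leqslant 1\}$ (replacing $\bS$ by the unit ball only enlarges the supremum trivially without changing it for linear functionals), yielding $D(F)\leqslant \mathrm{Tr}(\Sigma_P)$. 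Substituting these two estimates into the bound of Theorem~\ref{thm:ConcSupMOM} produces exactly
\[
128\sqrt{\frac{D(F)}{N}} \vee 4\sigma\sqrt{\frac{2K}{N}} \;\leqslant\; 128\sqrt{\frac{\mathrm{Tr}(\Sigma_P)}{N}} \vee 4\sqrt{\frac{2\|\Sigma_P\|_{\text{op}} K}{N}} \;=\; r_K,
\]
which gives \eqref{eq:concboundMOMMeanRd}.

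For the second assertion, I would simply apply Lemma~\ref{lem:MinMax1} with $\hat{P}[\bu^T X]=\MOM{K}{\bu^T X}$ and $r=r_K$: on the event $\Omega_{r_K}$, whose complement has probability at most $e^{-K/32}$ by the first part, the minmax estimator satisfies $\|\muh-\mu_P\|\leqslant 2r_K$.

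There is no real obstacle here, as this is essentially a plug-in of the linear class into a general MOM concentration theorem and then the minmax device. The only point requiring a moment of care is the translation-invariance observation used to reduce the Rademacher complexity computation to the centered case in which \eqref{eq:D(F)Lin} was stated; without this remark one would instead obtain $D(F)\leqslant P[\|X\|^2]=\mathrm{Tr}(\Sigma_P)+\|\mu_P\|^2$, which is weaker than what the statement of the theorem requires.
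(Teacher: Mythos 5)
Your proof is correct and follows essentially the same route as the paper: apply Theorem~\ref{thm:ConcSupMOM} to the class $F=\{\bu^T\cdot:\bu\in\bS\}$, identify $\sigma^2=\|\Sigma_P\|_{\text{op}}$ and $D(F)\leqslant\text{Tr}(\Sigma_P)$, and combine with Lemma~\ref{lem:MinMax1}. Your translation-invariance remark—needed to reduce to the centered setting in which \eqref{eq:D(F)Lin} was actually established—is a genuine point of care that the paper's one-line invocation of \eqref{eq:D(F)Lin} leaves implicit.
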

\begin{proof}
 The second result comes from \eqref{eq:concboundMOMMeanRd} and Lemma~\ref{lem:MinMax1}.
To prove~\eqref{eq:concboundMOMMeanRd}, consider the class of functions $F=\{\bu^T\cdot,\; \bu\in\bS\}$.
 By \eqref{eq:D(F)Lin}, this class satisfies $D(F)\leqslant \text{Tr}(\Sigma)$.
 Moreover, for any $\bu\in\bS$,
\begin{equation}\label{eq:maxvarLinfunc}
 \text{Var}(\bu^TX)=\bu^T\Sigma_P\bu\leqslant \|\Sigma_P\|_{\text{op}}\enspace.
\end{equation}
 Therefore, Theorem~\ref{thm:ConcSupMOM} shows \eqref{eq:concboundMOMMeanRd}.
\end{proof}

As a second application, consider smoothed MOM estimators.
\begin{theorem}\label{thm:SMOMExpRd}
Assume that there exists a known constant $v$ such that $v\geqslant \sqrt{\|\Sigma_P\|_{\text{op}}}$, let $\rho$ denote the function defined in \eqref{eq:DefRho} and consider the estimator $\hat{P}_K[\bu^TX]$ to be the solution of 
\[
\sum_{k=1}^K\rho\bigg(\sqrt{\frac NK}\frac{P_{B_k}[\bu^TX]-z}{v}\bigg)=0\enspace.
\]
Let 
\[
r_{K,s}=\sqrt{\frac{\text{Tr}(\Sigma_P)}N}+v\bigg(\sqrt{\frac{s}N} +g(N/K)\sqrt{\frac{K}{N}}\bigg)\enspace.
\]
Then, there exists an absolute constant $C>0$ such that, if 
\[
K\geqslant C\bigg(\frac{\text{Tr}(\Sigma_P)}{v^2}\vee g(N/K)^2\bigg)\enspace,
\]
then, for any $s\leqslant K/C$,
\begin{equation}\label{eq:concboundsMOMMeanRd}
 \P\bigg(\sup_{\bu\in\bS}|\hat{P}_K[\bu^TX]-P[\bu^TX]|>Cr_{K,s}\bigg)\leqslant e^{-s}\enspace.
\end{equation}
In particular, the minmax smooth-MOM estimator 
\[
\muh\in\argmin_{\mu\in \R^d}\sup_{\bu\in\bS}|\bu^T\mu-\hat{P}_K[\bu^TX]|\enspace
\]
satisfies, for any $s\leqslant K/C$,
\[
\P(\|\muh-\mu_P\|\leqslant2Cr_{K,s})\geqslant 1-e^{-s}\enspace.
\]
\end{theorem}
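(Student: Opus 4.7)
The plan is to apply Theorem~\ref{Thm:SMOMMinsker} directly to the class $F=\{\bu^T\cdot:\bu\in\bS\}$ to obtain the uniform deviation bound \eqref{eq:concboundsMOMMeanRd}, and then invoke Lemma~\ref{lem:MinMax1} to transfer it to a bound on the minmax estimator $\muh$. This is structurally identical to the proof of Theorem~\ref{thm:MOMExpRd}; only the concentration input changes.

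First I would identify the relevant quantities for $F$. By \eqref{eq:D(F)Lin}, the Rademacher complexity satisfies $D(F)\leqslant \text{Tr}(\Sigma_P)$, and by \eqref{eq:maxvarLinfunc}, $\sup_{\bu\in\bS}\sigma(\bu^T\cdot)^2=\sup_{\bu\in\bS}\bu^T\Sigma_P\bu=\|\Sigma_P\|_{\text{op}}\leqslant v^2$. Thus we may take $\Delta=v$ in the definition of $\hat P_Kf$, which matches the estimator appearing in the statement. Setting $n=N/K$ as in Theorem~\ref{Thm:SMOMMinsker}, we substitute these bounds into its conclusion: the resulting upper bound on $\sup_{\bu\in\bS}|\hat P_K[\bu^TX]-P[\bu^TX]|$ is
\[
300\sqrt{\frac{\text{Tr}(\Sigma_P)}N}+20v\Bigl(\sqrt{\frac{2s}N}+4\frac{g(N/K)}{\sqrt{N/K}}\Bigr),
\]
which is at most a constant multiple of $r_{K,s}$, as claimed.

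Next I would verify that the assumption
\[
300\Bigl(\frac{16}{v}\sqrt{\frac{D(F)}N}+\sqrt{\frac{2s}N}+4\frac{g(N/K)}{\sqrt{N/K}}\Bigr)\leqslant \sqrt{\frac{K}{N}}
\]
of Theorem~\ref{Thm:SMOMMinsker} holds under the hypotheses of the present theorem. The three summands on the left are of order $\sqrt{\text{Tr}(\Sigma_P)/(v^2 N)}$, $\sqrt{s/N}$ and $g(N/K)\sqrt{K/N}$ respectively. The first is at most $\sqrt{K/N}/\text{const}$ as soon as $K\geqslant C\,\text{Tr}(\Sigma_P)/v^2$; the third is at most $\sqrt{K/N}/\text{const}$ as soon as $K\geqslant C\, g(N/K)^2$; and the middle term is controlled once $s\leqslant K/C$. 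Hence for a sufficiently large absolute constant $C$, the two hypotheses $K\geqslant C(\text{Tr}(\Sigma_P)/v^2\vee g(N/K)^2)$ and $s\leqslant K/C$ in the statement are precisely what is needed to satisfy the admissibility condition of Theorem~\ref{Thm:SMOMMinsker}.

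Having established \eqref{eq:concboundsMOMMeanRd}, the second claim follows immediately: on the event $\Omega_{Cr_{K,s}}=\{\sup_{\bu\in\bS}|\hat P_K[\bu^TX]-P[\bu^TX]|\leqslant Cr_{K,s}\}$, whose probability is at least $1-e^{-s}$, Lemma~\ref{lem:MinMax1} applied with $\widehat{P}[\bu^TX]=\hat P_K[\bu^TX]$ and $r=Cr_{K,s}$ yields $\|\muh-\mu_P\|\leqslant 2Cr_{K,s}$. The only delicate point in the whole argument is the bookkeeping of the absolute constants when checking the admissibility condition for the smoothed-MOM deviation bound; everything else is a direct application of tools already developed in the excerpt.
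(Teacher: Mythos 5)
Your proof is correct and follows exactly the route taken in the paper: apply Theorem~\ref{Thm:SMOMMinsker} to the class $F=\{\bu^T\cdot:\bu\in\bS\}$ (using $D(F)\leqslant\text{Tr}(\Sigma_P)$ from \eqref{eq:D(F)Lin} and $\sigma^2(F)=\|\Sigma_P\|_{\text{op}}\leqslant v^2$ from \eqref{eq:maxvarLinfunc}), and then invoke Lemma~\ref{lem:MinMax1} for the minmax bound. You have in fact spelled out the admissibility bookkeeping more carefully than the paper does, and you also fixed a small typo in the paper's proof (which writes $\leqslant v$ where $\leqslant v^2$ is meant).
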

\begin{remark}
Theorem~\ref{thm:SMOMExpRd} improves upon Theorem~\ref{thm:MOMExpRd} as it shows, for example, that, when $g(N/K)\leqslant \gamma\sqrt{K/N}$, the minmax estimator $\muh$ based on smoothed MOM preliminary estimates with $K=\sqrt{N}$ is sub-Gaussian at any level $t\lesssim \sqrt{N}$.
On the other hand, this improvement holds under $L^3/L^2$ comparison to bound $g(N/K)$ and requires the knowledge of an upper bound $v$ on $\|\Sigma_P\|_{\text{op}}$.
\end{remark}
\begin{proof}
  The second result comes from \eqref{eq:concboundsMOMMeanRd} and Lemma~\ref{lem:MinMax1}.
Eq~\eqref{eq:concboundsMOMMeanRd} comes from Theorem~\ref{Thm:SMOMMinsker} applied to the class $F$ of linear functionals $F=\{\bu^T\cdot,\ \bu\in\bS\}$.
For this class of functions, $D(F)=\text{Tr}(\Sigma_P)$, see \eqref{eq:D(F)Lin} and $\sigma^2=\|\Sigma_P\|_{\text{op}}\leqslant v$, see Eq~\eqref{eq:maxvarLinfunc}.
\end{proof}

\section{Working with other norms}
Suppose here that one wants to estimate $\mu_P$ and that we measure the risk of an estimator $\muh$ by $|\muh-\mu_P|_*$, where $|\cdot|_*$ denote the dual norm of a norm $|\cdot|$ in $\R^d$.
In this section, denote the sphere of the norm $|\cdot|$ by
\[
\bS=\{u\in \R^d: |u|=1\}\enspace.
\]
Recall that the dual norm $|\cdot|_*$ is defined, for any $v\in \R^d$, by 
\[
|v|_*=\sup_{u\in \bS}u^Tv\enspace.
\]
Let $\bS_*$ denote the unit sphere for the dual norm $\bS_*=\{v\in \R^d:|v|_*=1\}$.
The construction of the previous section naturally extends to this framework.
Define the estimator 
\[
\muh\in\argmin_{\mu\in \R^d}\sup_{\bu\in\bS}|\bu^T\mu-\hat{P}[\bu^TX]|\enspace.
\]
The triangle inequality and the definition of $\muh$ show that
\begin{align*}
 |\muh-\mu_P|_*&=\sup_{\bu\in\bS}|\bu^T(\muh-\mu_P)|\leqslant \sup_{\bu\in\bS}|\bu^T\muh-\hat{P}[\bu^TX]|+\sup_{\bu\in\bS}|\bu^T\mu_P-\hat{P}[\bu^TX]|\\
 &\leqslant 2\sup_{\bu\in\bS}|\bu^T\mu_P-\hat{P}[\bu^TX]|=2 \sup_{\bu\in\bS}\{|(P-\hat{P})[\bu^TX]|\enspace.
\end{align*}
Assume that $\hat{P}[u^TX]=\MOM{K}{u^TX}$ for any $u\in \bS$. 
We have 
\[
\sup_{u\in \bS}\text{Var}(u^TX)=\sup_{u\in \bS}u^T\Sigma u=\|\Sigma\|_{**}\enspace.
\]
Moreover, the Rademacher complexity of the class $F$ of linear functions $x\mapsto u^Tx$, for all $u\in\bS$, can be computed as follows
\[
\sqrt{D(F)}=\E\bigg[\sup_{u\in \bS}\frac1{\sqrt{N}}\sum_{i=1}^N\epsilon_iX_i^Tu\bigg]=\E\bigg[\bigg|\frac1{\sqrt{N}}\sum_{i=1}^N\epsilon_iX_i\bigg|_*\bigg]\enspace.
\]
Proceeding as in Lemma~\ref{lem:MinMax1} yields the following result.
\begin{lemma}\label{lem:MeanRdGenNorm}
 Let $|\cdot|$ denote a norm on $\R^d$, let $\bS$ denote the unit sphere for $|\cdot|$ and let $|\cdot|_*$ denote the dual norm of $|\cdot|$.
The minmax estimator 
\[
\muh\in\argmin_{\mu\in \R^d}\sup_{\bu\in\bS}|\bu^T\mu-\MOM{K}{\bu^TX}|\enspace,
\]
 satisfies, with probability larger than $1-e^{-K/32}$,
 \[
|\muh-\mu_P|_*\leqslant  128\E\bigg[\bigg|\frac1{N}\sum_{i=1}^N\epsilon_iX_i\bigg|_*\bigg]\vee 4\sqrt{\frac{2\|\Sigma\|_{**} K}{N}}
 \]
\end{lemma}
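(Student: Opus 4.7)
The proof follows the template of Theorem~\ref{thm:MOMExpRd}, adapted from the Euclidean sphere to the unit sphere $\bS$ of the norm $|\cdot|$. First I would reuse the minmax decomposition carried out in the paragraph preceding the lemma: using the dual-norm representation $|v|_* = \sup_{\bu\in\bS}\bu^T v$, the triangle inequality, and the defining property of $\muh$,
\[
|\muh-\mu_P|_* = \sup_{\bu\in\bS}|\bu^T(\muh-\mu_P)| \leqslant 2\sup_{\bu\in\bS}\bigl|P[\bu^TX]-\MOM{K}{\bu^TX}\bigr|.
\]
Thus the entire task reduces to a uniform control of the MOM process indexed by the linear functionals $\{\bu^T\cdot : \bu\in\bS\}$.

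Next I would invoke Theorem~\ref{thm:ConcSupMOM} applied to the class $F = \{\bu^T\cdot : \bu\in\bS\}$. The theorem needs two inputs: the worst-case variance and the Rademacher complexity $D(F)$. For the variance,
\[
\sup_{\bu\in\bS}\text{Var}(\bu^TX) = \sup_{\bu\in\bS}\bu^T\Sigma_P\bu = \|\Sigma_P\|_{**},
\]
which matches the quantity appearing in the statement. For $D(F)$, pulling the supremum through the dual-norm representation gives
\[
\sqrt{D(F)} = \E\bigg[\sup_{\bu\in\bS}\frac{1}{\sqrt N}\sum_{i=1}^N \epsilon_i \bu^T X_i\bigg] = \E\bigg[\bigg|\frac{1}{\sqrt N}\sum_{i=1}^N \epsilon_i X_i\bigg|_*\bigg],
\]
so that $\sqrt{D(F)/N}$ is exactly the Rademacher average written in the lemma.

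Plugging these two quantities into Theorem~\ref{thm:ConcSupMOM}, with probability at least $1-e^{-K/32}$,
\[
\sup_{\bu\in\bS}\bigl|\MOM{K}{\bu^TX} - P[\bu^TX]\bigr| \leqslant 128\,\E\bigg[\bigg|\frac{1}{N}\sum_{i=1}^N \epsilon_i X_i\bigg|_*\bigg] \vee 4\sqrt{\frac{2\|\Sigma_P\|_{**}K}{N}}.
\]
Combining with the minmax inequality above (and absorbing the factor $2$ into the constants, exactly as in Theorem~\ref{thm:MOMExpRd}) yields the announced deviation bound.

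The only step beyond Theorem~\ref{thm:MOMExpRd} is recognising that for the generic norm $|\cdot|$ the Rademacher complexity of linear functionals over its unit sphere is the expected dual norm of the symmetrised empirical sum. This is a purely structural identification and is not really an obstacle; the main thing to keep straight is the book-keeping between $|\cdot|$, $|\cdot|_*$, $\bS$, and $\bS_*$, so that the dual-norm appears on the correct side in each step.
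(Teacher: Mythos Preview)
Your proposal is correct and follows exactly the paper's approach: the paragraph preceding the lemma already carries out the minmax reduction $|\muh-\mu_P|_*\leqslant 2\sup_{\bu\in\bS}|(P-\hat{P})[\bu^TX]|$, computes $\sup_{\bu\in\bS}\text{Var}(\bu^TX)=\|\Sigma\|_{**}$ and $\sqrt{D(F)}=\E\big[\big|\tfrac1{\sqrt N}\sum_i\epsilon_iX_i\big|_*\big]$, and then appeals to Theorem~\ref{thm:ConcSupMOM}. One small correction: in Theorem~\ref{thm:MOMExpRd} the factor $2$ from Lemma~\ref{lem:MinMax1} is \emph{not} absorbed---it appears explicitly as $2r_K$---so the lemma as stated is missing that factor rather than hiding it in the constants.
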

\paragraph{Example: error rates in $\ell_1$-norm}
Assume that 
\[
|u|=\|u\|_\infty=\max_{i\in\{1,\ldots,d\}}|u_i|
\] so $|u|_*=\|u\|_1=\sum_{i=1}^d|u_i|$. As the term $\|\Sigma\|_{**}=\sup_{\bu:\|\bu\|_\infty\leqslant 1}\bu^T\Sigma\bu$ would also appear in the concentration of the empirical mean using Theorem~\ref{lem:supGauss}, it is sufficient to bound the main term in Lemma~\ref{lem:MeanRdGenNorm}.
We have
\[
\E\bigg[\bigg\|\frac1{N}\sum_{i=1}^N\epsilon_i(X_i-\mu_P)\bigg\|_1\bigg]=\sum_{j=1}^d\E\bigg[\bigg|\frac1{N}\sum_{i=1}^N\epsilon_i(X_{i,j}-\mu_{P,j})\bigg|\bigg]\enspace.
\]
Applying Cauchy-Schwarz inequality, we get
\[
\E\bigg[\bigg|\frac1{N}\sum_{i=1}^N\epsilon_i(X_{i,j}-\mu_{P,j})\bigg|\bigg]\leqslant \sqrt{\frac1{N^2}\sum_{i=1}^N\E[(X_{i,j}-\mu_{P,j})^2]}=\sqrt{\frac{\text{Var}(X_{1,j})}N}\enspace.
\]
Hence,
\[
\E\bigg[\bigg\|\frac1{N}\sum_{i=1}^N\epsilon_iX_i\bigg\|_1\bigg]\leqslant \frac{\sum_{j=1}^d\sqrt{\text{Var}(X_{i,j})}}{\sqrt{N}}\enspace.
\]
This is, up to multiplicative numerical constant the order of $\E[\|X-\mu_P\|_1]$ when $X\sim \gauss(\mu_P,\Sigma_P/N)$

\paragraph{Example: rates in sup-norm}
Assume that $|u|=\|u\|_1=\sum_{i=1}^d|u_i|$ so $|u|_*=\|u\|_\infty=\max_{i\in\{1,\ldots,d\}}|u_i|$.
For any $u\in \bS$,
\[
u^T\Sigma u\leqslant \max_{1\leqslant i\leqslant d}\big|\sum_{j=1}^d\Sigma_{i,j}u_j\big|\leqslant \max_{1\leqslant i,j\leqslant d}|\Sigma_{i,j}|\enspace.
\]
By Cauchy-Schwarz inequality,
\[
\max_{1\leqslant i,j\leqslant d}|\Sigma_{i,j}|=\max_{1\leqslant i\leqslant d}\Sigma_{i,i}\enspace.
\]
As this upper bound is achieved for $u$ a vector in the canonical basis, it follows that 
\[
\|\Sigma\|_{**}=\|\Sigma\|_{\infty}\enspace.
\]
The main term
\[
\E\bigg[\max_{j=1,\ldots,d}\bigg|\frac1{N}\sum_{i=1}^N\epsilon_iX_{i,j}\bigg|\bigg]\enspace,
\]
can be evaluated using higher moment assumptions on $X_{i,j}$.
Assume that $\E[|X_{i,j}|^p]<\infty$, for some $p\geqslant 2$. 
Then, for any $q\in\{2,\ldots, p\}$, Pisier's trick applies and gives
\begin{align*}
 \E\bigg[\max_{j=1,\ldots,d}\bigg|\frac1{N}\sum_{i=1}^N\epsilon_iX_{i,j}\bigg|\bigg]&\leqslant \bigg(\E\bigg[\max_{j=1,\ldots,d}\bigg|\frac1{N}\sum_{i=1}^N\epsilon_iX_{i,j}\bigg|^q\bigg]\bigg)^{1/q}\\
 &\leqslant \frac1N\bigg(\sum_{j=1}^d\E\bigg[\bigg|\sum_{i=1}^N\epsilon_iX_{i,j}\bigg|^q\bigg]\bigg)^{1/q}
\end{align*}
Now, apply Khinchine's inequality on moments of order $p$ for sums of independent random variables, see for examples \cite[Chapter 15]{BouLugMass13}.
It shows that
\begin{align*}
\bigg(\E\bigg[\bigg|\sum_{i=1}^N\epsilon_iX_{i,j}\bigg|^q\bigg]\bigg)^{1/q}&\leqslant 3\sqrt{q}\E\bigg[\bigg(\sum_{i=1}^NX_{i,j}^2\bigg)^{q/2}\bigg]^{1/q}\enspace.
\end{align*}
Then, by convexity of $x\mapsto x^{q/2}$,
\[
\bigg(\sum_{i=1}^NX_{i,j}^2\bigg)^{q/2}=N^{q/2}\bigg(\frac1N\sum_{i=1}^NX_{i,j}^2\bigg)^{q/2}\leqslant N^{q/2-1}\sum_{i=1}^N|X_{i,j}|^q\enspace.
\]
Therefore,
\begin{align*}
 \E\bigg[\max_{j=1,\ldots,d}\bigg|\frac1{N}\sum_{i=1}^N\epsilon_iX_{i,j}\bigg|\bigg]&\leqslant \frac{3\sqrt{q}}{N^{1/2+1/q}}\bigg(\sum_{j=1}^d\sum_{i=1}^N\E\big[\big|X_{i,j}\big|^{q}\big]\bigg)^{1/q}\enspace.
\end{align*}
As $X_i$ are i.i.d., this bound reduces to 
\begin{equation}\label{eq:BoundSup}
  \E\bigg[\max_{j=1,\ldots,d}\bigg|\frac1{N}\sum_{i=1}^N\epsilon_iX_{i,j}\bigg|\bigg]\leqslant \frac{3\sqrt{q}}{N^{1/2}}\bigg(\sum_{j=1}^d\E\big[\big|X_{1,j}\big|^{q}\big]\bigg)^{1/q}\enspace.
\end{equation}

We have proved the following result:
\begin{theorem}
Assume that there exists $p\geqslant 2$ such that $\E[|X_{i,j}|^p]<\infty$ and, for any $q\leqslant p$, let $M_q=\big(\sum_{j=1}^d\E\big[\big|X_{1,j}\big|^{q}\big]\big)^{1/q}$.
Let $\bS_1$ denote the sphere for the $\ell_1$-norm on $\R^d$. 
The estimator 
\[
\muh\in\argmin_{\mu\in \R^d}\sup_{\bu\in\bS_1}|\bu^T\mu-\MOM{K}{\bu^TX}|\enspace,
\]
 satisfies, with probability larger than $1-e^{-K/32}$,
 \[
\|\muh-\mu_P\|_{\infty}\leqslant  \frac{384}{\sqrt{N}}\inf_{q\leqslant p}(\sqrt{q}M_q)\vee 4\sqrt{\frac{2\|\Sigma\|_{\infty} K}{N}}\enspace.
 \]
\end{theorem}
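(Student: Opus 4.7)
The plan is to specialise Lemma~\ref{lem:MeanRdGenNorm} to the pair $|\cdot| = \|\cdot\|_1$ and $|\cdot|_* = \|\cdot\|_\infty$, for which the unit sphere $\bS$ of $|\cdot|$ coincides with $\bS_1$. The lemma then gives, with probability at least $1-e^{-K/32}$,
\[
\|\muh - \mu_P\|_\infty \;\leqslant\; 128\, \E\bigg[\bigg\|\frac{1}{N}\sum_{i=1}^N \epsilon_i X_i\bigg\|_\infty\bigg] \;\vee\; 4\sqrt{\frac{2\,\|\Sigma\|_{**}\, K}{N}}\enspace,
\]
so the task reduces to identifying each term in this maximum with the corresponding term of the statement.

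For the variance term, I would invoke the computation already carried out in the example \emph{Rates in sup-norm} just above. There it is shown, via Cauchy--Schwarz and the observation that $\sup_{|\bu|=1}\bu^T\Sigma\bu$ is attained on a canonical basis vector, that
\[
\|\Sigma\|_{**} = \max_{1\leqslant i,j\leqslant d}|\Sigma_{i,j}| = \max_{1\leqslant i\leqslant d}\Sigma_{i,i} = \|\Sigma\|_\infty\enspace.
\]
This matches the second term $4\sqrt{2\|\Sigma\|_\infty K/N}$ of the stated bound verbatim.

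For the Rademacher-complexity term, I would directly use inequality~\eqref{eq:BoundSup}, which combines Pisier's maximal trick, Khinchine's $L^q$ inequality for Rademacher sums, and the convexity of $x\mapsto x^{q/2}$ to yield, for every integer $q\in\{2,\ldots,p\}$,
\[
\E\bigg[\max_{1\leqslant j\leqslant d}\bigg|\frac{1}{N}\sum_{i=1}^N \epsilon_i X_{i,j}\bigg|\bigg] \;\leqslant\; \frac{3\sqrt{q}}{\sqrt{N}}\, M_q\enspace.
\]
Since the left-hand side does not depend on $q$, I would take the infimum over $q\leqslant p$ and multiply by $128$, producing exactly $(384/\sqrt{N})\inf_{q\leqslant p}\sqrt{q}\,M_q$, the first term of the stated bound. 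Plugging both identifications into the lemma's conclusion yields the theorem.

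There is essentially no genuine obstacle, since each ingredient---the minmax deviation bound in a general norm, the evaluation of $\|\Sigma\|_{**}$ for the $\ell_1$/$\ell_\infty$ pairing, and the Pisier--Khinchine moment bound---is already in place in the surrounding material. The only point worth noting is that Lemma~\ref{lem:MeanRdGenNorm} features the \emph{uncentred} Rademacher average $\frac1N\sum_i \epsilon_i X_i$, and this is precisely the quantity that~\eqref{eq:BoundSup} controls in terms of $M_q$, so no additional symmetrisation or recentring step is required.
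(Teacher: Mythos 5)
Your proposal is correct and matches the paper's own derivation exactly: the theorem follows from Lemma~\ref{lem:MeanRdGenNorm} applied to the $\ell_1/\ell_\infty$ pairing, with $\|\Sigma\|_{**}=\|\Sigma\|_\infty$ computed in the preceding paragraph and the Rademacher term controlled via~\eqref{eq:BoundSup}, giving $128\cdot 3 = 384$. Your remark that both the lemma and~\eqref{eq:BoundSup} use the uncentred Rademacher sum, so no recentring is needed, is accurate and matches the paper's bookkeeping.
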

\begin{remark}
Since $p\geqslant 2$, $\inf_{q\leqslant p}(\sqrt{q}M_q)\leqslant \sqrt{2}M_2=\sqrt{2\text{Tr}(\Sigma)}$. 

If the coordinates $X_{1,j}$ have a finite moment of order $p\geqslant 2\log d$.
Denote by $C_d=\max_{j\in\{1,\ldots,d\}}\E[|X_{1,j}|^{2\log d}]^{1/(2\log d)}$.
We have, for any $q\leqslant 2\log d$, $M_q\leqslant C_{d}d^{1/q}$, hence
\[
\inf_{q\leqslant p}(\sqrt{q}M_q)\leqslant C_d\sqrt{2\log d}d^{1/(2\log d)}=C_d\sqrt{2e\log d}\enspace.
\]
In this case, the optimal sub-Gaussian inequality is therefore recovered only under stronger moment assumption on the vectors $X_i$.
\end{remark}

\section{PAC-Bayesian analysis}\label{Sec:PacBayes}
Applying the minmax strategy with MOM (or smoothed MOM) estimators $\MOM{K}{\bu^TX}$ of univariate expectations yields estimators $\muh$ of $PX$ with sub-Gaussian tails but the constants involved in the deviation property are a bit loose compared to the Gaussian case of Hanson-Wright theorem.
As for univariate mean estimation, there exist alternatives with much better performance from this perspective.
The material of this section is borrowed from \cite{CatGiu2017}.
Let $\psi$ denote a function such that 
\begin{equation}\label{def:psi}
\forall t\in \R,\qquad  -\log(1-t+t^2/2)\leqslant \psi(t)\leqslant \log(1+t+t^2/2)\enspace. 
\end{equation}
For example, one can verify that the following function satisfies \eqref{def:psi},
\[
\psi(t)=
\begin{cases}
 -2\sqrt{2}/3&\ \text{ if }\ t<-\sqrt{2}\\
 t-t^3/6&\ \text{ if }\ t\in [-\sqrt{2},\sqrt{2}]\\
 2\sqrt{2}/3&\ \text{ if }\ t>\sqrt{2}
\end{cases}
\enspace.
\]
Let $\lambda>0$, $\beta>0$, $\bI_d$ denote the identity matrix in $\R^d$ and, for any $\bu\in \bS$,  $\rho_{\bu}=\gauss(u,\beta \bI_d)$. 
Define the estimators of univariate expectations $P[\bu^TX]$ as 
\[
\hat{P}_{\lambda,\beta}[\bu^TX]=\frac1{N\lambda}\sum_{i=1}^N\int \psi(\lambda \bv^TX_i){\rm d}\rho_{\bu}(\bv)\enspace.
\]
These new estimators are not translation invariant which means that, if $b$ denotes a deterministic quantity, one cannot guarantee that $\hat{P}_{\lambda,\beta}[\bu^TX+b]=\hat{P}_{\lambda,\beta}(\bu^TX)+b$. 
In particular, $\hat{P}_{\lambda,\beta}[\bu^TX]-P[\bu^TX]$ may not be equal to $\hat{P}_{\lambda,\beta}[\bu^TX-P[\bu^TX]]$.
Therefore, the analysis of these estimators is a bit more tricky than for MOM.
The following result shows the deviation properties of the minmax estimators based on the preliminary estimates $\hat{P}_{\lambda,\beta}[\bu^TX]$.
\begin{theorem}\label{thm:PrelimPBEst}
Assume that there exist \emph{known} constants $T$ and $v$, $v\leqslant T$, such that the matrix $\overline{\Sigma}_P=P[XX^T]$ satisfies
\begin{gather*}
\text{Tr}(\overline{\Sigma}_P)\leqslant T,\qquad  \|\overline{\Sigma}_P\|_{\text{op}}\leqslant v\enspace.
\end{gather*}
Let $\lambda$ and $\beta$ denote the following quantities
\begin{gather*}
 \lambda=\sqrt{\frac{2\log(\delta^{-1})}{Nv}},\qquad \beta=\frac1{\lambda\sqrt{NT}}=\sqrt{\frac{v}{2T\log(1/\delta)}}\enspace.
\end{gather*}
Then, with probability at least $1-\delta$, 
\begin{equation}\label{eq:supdevMestRd}
\sup_{u\in \bS}|\hat{P}_{\lambda,\beta}[\bu^TX]-P[\bu^TX]|\leqslant \sqrt{\frac TN}+\sqrt{\frac{2v\log(1/\delta)}N}\enspace. 
\end{equation}
In particular, the minmax estimator
\[
\muh\in\argmin_{\mu\in \R^d}\sup_{\bu\in\bS}|\bu^T\mu-\hat{P}_{\lambda,\beta}[\bu^TX]|\enspace
\]
satisfies
\[
\P\bigg(\|\muh-\mu_P\|\leqslant 2\bigg(\sqrt{\frac TN}+\sqrt{\frac{2v\log(1/\delta)}N}\bigg)\bigg)\geqslant 1-\delta\enspace.
\]
\end{theorem}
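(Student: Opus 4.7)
The plan is to apply the PAC-Bayesian inequality (Theorem~\ref{thm:PBIneq}) with parameter space $F = \R^d$, data-free prior $\mu = \gauss(0, \beta \bI_d)$, and the sphere-indexed family of posteriors $\rho_{\bu} = \gauss(\bu, \beta \bI_d)$. The key advantage of this route over a direct union bound over $\bS$ is that the event of probability $1-\delta$ produced by Theorem~\ref{thm:PBIneq} is universal in the posterior, so the resulting bound is automatically uniform in $\bu \in \bS$. Once \eqref{eq:supdevMestRd} is established, the deviation bound for the minmax estimator $\muh$ follows directly from Lemma~\ref{lem:MinMax1} applied with $r = \sqrt{T/N} + \sqrt{2v\log(1/\delta)/N}$.

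First I would apply Theorem~\ref{thm:PBIneq} with $\Gamma_{\bv}(x) = \psi(\lambda \bv^T x)$. The assumption $\psi(t) \leq \log(1+t+t^2/2)$ yields $Pe^{\Gamma_{\bv}} \leq 1 + \lambda \bv^T \mu_P + (\lambda^2/2)\bv^T \overline{\Sigma}_P \bv$, and then $\log(1+s)\leq s$ gives $\log Pe^{\Gamma_{\bv}} \leq \lambda \bv^T\mu_P + (\lambda^2/2)\bv^T\overline{\Sigma}_P\bv$. Integrating against $\rho_{\bu}$ via the Gaussian identities $\int \bv \, d\rho_{\bu}(\bv) = \bu$ and $\int \bv^T A \bv \, d\rho_{\bu}(\bv) = \bu^T A\bu + \beta \Tr(A)$, together with the hypotheses $\bu^T\overline{\Sigma}_P\bu \leq \|\overline{\Sigma}_P\|_{\text{op}} \leq v$ and $\Tr(\overline{\Sigma}_P)\leq T$, I obtain
\[
\int \log Pe^{\Gamma_{\bv}} \, d\rho_{\bu}(\bv) \leq \lambda \bu^T\mu_P + \frac{\lambda^2}{2}(v + \beta T).
\]
Since $K(\rho_{\bu},\mu) = \|\bu\|^2/(2\beta) = 1/(2\beta)$ on $\bS$, plugging into Theorem~\ref{thm:PBIneq} at threshold $t=\log(1/\delta)$ and dividing by $\lambda$ produces, simultaneously for all $\bu\in\bS$ with high probability,
\[
\hat{P}_{\lambda,\beta}[\bu^T X] - \bu^T\mu_P \leq \frac{\lambda v}{2} + \frac{\lambda \beta T}{2} + \frac{1}{2\beta N\lambda} + \frac{\log(1/\delta)}{N\lambda}.
\]
The symmetric argument applied to $-\Gamma_{\bv}$, using $\psi(t)\geq -\log(1-t+t^2/2)$ and hence $Pe^{-\Gamma_{\bv}} \leq 1 - \lambda \bv^T\mu_P + (\lambda^2/2)\bv^T\overline{\Sigma}_P\bv$, gives the matching lower bound; a union bound (absorbing the innocuous $\log 2$ into constants) delivers the two-sided inequality.

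With the prescribed choice $\beta = 1/(\lambda\sqrt{NT})$, the two middle terms $\lambda\beta T/2$ and $1/(2\beta N \lambda)$ each equal $(1/2)\sqrt{T/N}$ and collapse to $\sqrt{T/N}$. The choice $\lambda = \sqrt{2\log(1/\delta)/(Nv)}$ then balances $\lambda v/2$ against $\log(1/\delta)/(N\lambda)$, each becoming $\sqrt{v\log(1/\delta)/(2N)}$, for a total of $\sqrt{2v\log(1/\delta)/N}$. The main obstacle is the Gaussian-moment computation under the posterior: it is crucial that $\rho_{\bu}$ be isotropic with the same scale $\beta$ as the prior, so that the extra contribution is exactly $\beta\Tr(\overline{\Sigma}_P)\leq \beta T$ (rather than something involving $\|\overline{\Sigma}_P\|_{\text{op}}\cdot d$), and so that $K(\rho_{\bu},\mu)$ depends on $\bu$ only through $\|\bu\|=1$. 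This careful matching of prior/posterior shapes is what separates the fine constants here from the looser MOM-based analysis of Section~\ref{sec:FirstMinmax}.
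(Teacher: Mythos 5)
Your upper-tail bound follows the paper's argument exactly: same prior $\gauss(0,\beta\bI_d)$, same posteriors $\rho_{\bu}=\gauss(\bu,\beta\bI_d)$, same chain $\psi(t)\leqslant\log(1+t+t^2/2)$ then $\log(1+s)\leqslant s$, the same Gaussian moment identities, the same $K(\rho_{\bu},\mu)=1/(2\beta)$, and the same calibration of $\lambda$ and $\beta$. Where you diverge is in passing from the one-sided bound to the two-sided one. You propose a second PAC-Bayes application, to $-\Gamma_{\bv}$, followed by a union bound, and wave away the resulting $\log 2$ as innocuous. But the theorem claims exact constants at confidence $1-\delta$: a naive union bound only gives $1-2\delta$, and re-tuning both applications to level $\delta/2$ inflates the second summand to $\sqrt{2v\log(2/\delta)/N}$, strictly larger than the claimed $\sqrt{2v\log(1/\delta)/N}$, so the statement as written is not recovered.

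The paper avoids this by noting that $\bS$ is symmetric and that $\hat{P}_{\lambda,\beta}[(-\bu)^TX]=-\hat{P}_{\lambda,\beta}[\bu^TX]$: the posterior $\rho_{-\bu}$ is the pushforward of $\rho_{\bu}$ under $\bv\mapsto-\bv$, and $\psi$ is odd, so
\[
\int\psi(\lambda \bv^TX_i)\,\rmd\rho_{-\bu}(\bv)=\int\psi(-\lambda \bw^TX_i)\,\rmd\rho_{\bu}(\bw)=-\int\psi(\lambda \bw^TX_i)\,\rmd\rho_{\bu}(\bw)\enspace.
\]
Hence $\hat{P}_{\lambda,\beta}[(-\bu)^TX]-P[(-\bu)^TX]=-\big(\hat{P}_{\lambda,\beta}[\bu^TX]-P[\bu^TX]\big)$, and applying the one-sided bound at both $\bu$ and $-\bu$ on the \emph{single} PAC-Bayes event already yields the two-sided bound, with no union bound and no loss. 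This is the missing step in your proposal; once inserted, your argument coincides with the paper's. (Worth noting: the oddness of $\psi$ holds for the explicit example in the text but is not forced by the two-sided logarithmic envelope \eqref{def:psi} alone, so the paper leaves it tacit when it probably shouldn't.)
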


\begin{proof}
The second result comes from \eqref{eq:supdevMestRd} and Lemma~\ref{lem:MinMax1}.
Let us focus on proving \eqref{eq:supdevMestRd}.
Fix $\mu=\rho_0$. Let $\Gamma:\bS\times \R^d\to\R$, $(\bv,\bx)\mapsto \Gamma_{\bv}(\bx)=\psi(\lambda \bv^T\bx)$, so 
 \[
 P_N\bigg[\int\Gamma_{\bv}{\rm d}\rho_{\bu}(\bv)\bigg]=\lambda\hat{P}_{\lambda,\beta}[\bu^TX]\enspace.
 \]
 Moreover, by definition of $\psi$, \eqref{def:psi},
\begin{align*}
  \int \log P[e^{\Gamma_{\bv}}]{\rm d}\rho_{\bu}(\bv)&= \int \log P[e^{\psi(\lambda \bv^TX)}]{\rm d}\rho_{\bu}(\bv)\\
  &\leqslant \int \log \bigg(1+\lambda P[\bv^TX]+\frac{\lambda^2}2P[(\bv^TX)^2]\bigg){\rm d}\rho_{\bu}(\bv)\enspace.
\end{align*}
As $\log(1+x)\leqslant x$ for any $x> -1$,
\begin{align*}
 \int \log \bigg(1+\lambda P[\bv^TX]&+\frac{\lambda^2}2P[(\bv^TX)^2]\bigg){\rm d}\rho_{\bu}(\bv)\\
 &\leqslant \lambda P\bigg[\int \bv^TX{\rm d}\rho_{\bu}(\bv)\bigg]+\frac{\lambda^2}2P\bigg[\int(\bv^TX)^2{\rm d}\rho_{\bu}(\bv)\bigg]\enspace.
 \end{align*}
 Conditionally on $X$, when $\bv$ is distributed as $\rho_{\bu}$, $\lambda \bv^TX$ is distributed according to a Gaussian $\gauss(\lambda \bu^TX,\beta\lambda^2\|X\|^2)$. 
 Hence,
\begin{align*}
 \int \log \bigg(1+\lambda P[\bv^TX]&+\frac{\lambda^2}2P[(\bv^TX)^2]\bigg){\rm d}\rho_{\bu}(\bv)\\
 &\leqslant\lambda\bigg(P\big[\bu^TX\big]+\frac{\lambda}2P\big[(\bu^TX)^2+\beta\|X\|^2\big]\bigg)\\
 &\leqslant \lambda\bigg(P\big[\bu^TX\big]+\frac{\lambda}2\big(v+\beta T\big)\bigg)\enspace.
\end{align*}
 Moreover, as $K(\rho_{\bu},\rho_0)=1/(2\beta)$, it follows from the PAC-Bayesian inequality, see Theorem~\ref{thm:PBIneq}, that, with probability $1-\delta$, for any $\bu\in\bS$,
\begin{equation}\label{eq:PBB1}
 \hat{P}_{\lambda,\beta}[\bu^TX]\leqslant P[\bu^TX]+\frac{\lambda}2\big(v+\beta T\big)+\frac{(1/2\beta)+\log(1/\delta)}{\lambda N}\enspace.  
\end{equation}
 The choice of parameters now ensures that
\begin{align*}
 \frac{\lambda v}2&=\frac12\sqrt{\frac{2v\log(1/\delta)}N}\enspace,\\
 \frac{\lambda \beta T}2&=\frac12\sqrt{\frac TN}\enspace,\\
 \frac1{2\beta\lambda N}&=\frac12\sqrt{\frac TN}\enspace,\\
 \frac{\log(1/\delta)}{\lambda N}&=\frac12\sqrt{\frac{2v\log(1/\delta)}N}\enspace.
\end{align*}
Plugging these estimates into \eqref{eq:PBB1} shows that
\[
\P\bigg(\sup_{\bu\in \bS}(\hat{P}_{\lambda,\beta}[\bu^TX]-P[\bu^TX])\leqslant \sqrt{\frac TN}+\sqrt{\frac{2v\log(1/\delta)}N}\bigg)\geqslant 1-\delta\enspace.
\]
As $\bS$ is symmetric, Eq~\eqref{eq:supdevMestRd} is proved and therefore the theorem is established.
%
\end{proof}

The problem with Theorem~\ref{thm:PrelimPBEst} is that it involves upper bounds on the $L^2$ moments $\overline{\Sigma}_P$ rather than on the covariance matrix $\Sigma_P$. 
Fortunately, there is a simple trick to deduce from the estimator $\muh$ an estimator with sub-Gaussian deviations based on the actual covariance matrix $\Sigma_P$.
\begin{theorem}
 Assume that there exist \emph{known} constants ${\bar T}$, ${\bar v}$ and $b$ such that
\begin{equation}\label{eq:HypPBGen}
 \text{Tr}(\Sigma_P)\leqslant {\bar T},\qquad \|\Sigma_P\|_{\text{op}}\leqslant{\bar v},\qquad \|\mu_P\|^2\leqslant b\enspace.
\end{equation}
Fix $\delta\in(0,1)$ and let 
\[
A:=4\bigg(\sqrt{{\bar T}+b}+\sqrt{2({\bar v}+b)\log(1/\delta)}\bigg)^2\enspace.
\]
For any $k\in\{1,\ldots,N-1\}$, there exists an estimator $\muh$ of $\mu$ such that
\[
\P\bigg(\|\muh-\mu_P\|\leqslant 2\bigg(\sqrt{\frac{{\bar T}+A/k}{N-k}}+\sqrt{\frac{2({\bar v}+A/k)\log(1/\delta)}{N-k}}\bigg)\bigg)\leqslant 1-2\delta\enspace.
\]
\end{theorem}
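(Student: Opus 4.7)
The plan is to use a sample-splitting argument: first use a small portion of the data to construct a rough estimate $\muh_0$ of $\mu_P$, then apply Theorem~\ref{thm:PrelimPBEst} to the \emph{re-centered} remaining sample. The point is that Theorem~\ref{thm:PrelimPBEst} requires upper bounds on $\Tr(\overline{\Sigma}_P) = \Tr(\Sigma_P) + \|\mu_P\|^2$ and $\|\overline{\Sigma}_P\|_{\text{op}} \leq \|\Sigma_P\|_{\text{op}} + \|\mu_P\|^2$, which are inflated by the possibly large term $\|\mu_P\|^2$; after re-centering by $\muh_0$, the analogous non-centered second moments of the shifted variables are inflated only by $\|\mu_P - \muh_0\|^2$, a quantity the first step controls.

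Concretely, split $\cD_N$ into $\cD_k = (X_1,\ldots,X_k)$ and $\cD_{N-k} = (X_{k+1},\ldots,X_N)$. Apply Theorem~\ref{thm:PrelimPBEst} on $\cD_k$ with the (known) bounds $T = \bar T + b$ and $v = \bar v + b$; these are valid since $\overline{\Sigma}_P = \Sigma_P + \mu_P\mu_P^T$ and \eqref{eq:HypPBGen} together give $\Tr(\overline{\Sigma}_P) \leq \bar T + b$ and $\|\overline{\Sigma}_P\|_{\text{op}} \leq \bar v + b$. This produces an estimator $\muh_0$ satisfying, with probability at least $1-\delta$,
\[
\|\muh_0 - \mu_P\|^2 \leq 4\left(\sqrt{\frac{\bar T + b}{k}} + \sqrt{\frac{2(\bar v + b)\log(1/\delta)}{k}}\right)^{\!2} = \frac{A}{k}.
\]
Call this event $\Omega_1$.

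Next, define the re-centered samples $Y_i = X_i - \muh_0$ for $i = k+1,\ldots,N$. Conditional on $\cD_k$, these are i.i.d. with mean $\nu := \mu_P - \muh_0$ and covariance $\Sigma_P$, so their second-moment matrix is $\E[Y Y^T \mid \cD_k] = \Sigma_P + \nu\nu^T$ and on $\Omega_1$ one has
\[
\Tr(\E[Y Y^T \mid \cD_k]) \leq \bar T + A/k, \qquad \|\E[Y Y^T \mid \cD_k]\|_{\text{op}} \leq \bar v + A/k.
\]
Apply Theorem~\ref{thm:PrelimPBEst} conditionally on $\cD_k$ to $(Y_{k+1},\ldots,Y_N)$, with parameters $\lambda,\beta$ tuned to the deterministic bounds $\bar T + A/k$ and $\bar v + A/k$. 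Let $\muh_Y$ denote the resulting minmax estimator of $\nu$ and set $\muh := \muh_0 + \muh_Y$. Since $\muh - \mu_P = \muh_Y - \nu$, on the intersection of $\Omega_1$ with the good event $\Omega_2$ of the second application,
\[
\|\muh - \mu_P\| = \|\muh_Y - \nu\| \leq 2\left(\sqrt{\frac{\bar T + A/k}{N-k}} + \sqrt{\frac{2(\bar v + A/k)\log(1/\delta)}{N-k}}\right).
\]

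The main subtlety is the two-stage probability argument: the parameters $\lambda,\beta$ of the second PAC-Bayesian estimator are deterministic functions of the \emph{assumed} bounds $\bar T + A/k,\; \bar v + A/k$, but Theorem~\ref{thm:PrelimPBEst} only yields the advertised deviation when those bounds are truly valid for the distribution of $Y$, which happens exactly on $\Omega_1$. Conditioning on $\cD_k$ and using $\P(\Omega_2 \mid \cD_k) \geq 1-\delta$ on $\Omega_1$, the tower property and a union bound give $\P(\Omega_1 \cap \Omega_2) \geq (1-\delta)^2 \geq 1 - 2\delta$, which proves the claim.
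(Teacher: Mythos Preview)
Your proof is correct and follows essentially the same approach as the paper: split the sample, use the first $k$ observations with Theorem~\ref{thm:PrelimPBEst} (and constants $\bar T+b$, $\bar v+b$) to produce a preliminary estimate, then re-center the remaining $N-k$ observations and apply Theorem~\ref{thm:PrelimPBEst} again with the tighter constants $\bar T+A/k$, $\bar v+A/k$. The only cosmetic differences are that you make the back-shift $\muh=\muh_0+\muh_Y$ explicit and phrase the final probability bound as $\P(\Omega_1\cap\Omega_2)\geqslant(1-\delta)^2\geqslant 1-2\delta$ rather than via the complementary decomposition the paper uses; both arguments are equivalent.
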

\begin{remark}
If $d$ is fixed and $N\to\infty$, one can choose $k\asymp \sqrt{N}$ to deduce that, for any $\epsilon>0$, there exists $N_0$ such that, for any $N\geqslant N_0$, there exists an estimator $\muh$ of $\mu$ such that
\[
\P\bigg(\|\muh-\mu_P\|\leqslant (2+\epsilon)\bigg(\sqrt{\frac{{\bar T}}{N}}+\sqrt{\frac{2{\bar v}\log(1/\delta)}{N}}\bigg)\bigg)\leqslant 1-2\delta\enspace.
\]
Therefore, $\muh$ achieves much better constants than the minmax estimator based on MOM preliminary estimators, see Theorem~\ref{thm:MOMExpRd} or on smoothed MOM estimators, see Theorem~\ref{thm:SMOMExpRd}.
Actually, these constants are asymptotically not worse than twice the optimal constants of the Hanson-Wright theorem.
On the other hand, the knowledge of upper bounds on both $\text{Tr}(\Sigma_P)$ and $\|\Sigma_P\|_{\text{op}}$ is mandatory for the construction of these estimators.
\end{remark}
\begin{proof}
Under Assumption~\eqref{eq:HypPBGen}, the constants $T$ and $v$ satisfy the requirements of  Theorem~\ref{thm:PrelimPBEst}, where
\[
T\geqslant {\bar T}+b,\qquad v\geqslant{\bar v}+b\enspace.
\]
To build an estimator with correct sub-Gaussian deviations, split the sample in two parts $(X_1,\ldots,X_k)$ and $(X_{k+1},\ldots,X_N)$.
With the first sample $X_1,\ldots,X_k$, build the minmax estimator ${\bar \mu}$ of $\mu_P$ of Theorem~\ref{thm:PrelimPBEst} with the constants $T$ and $v$. 
According to Theorem~\ref{thm:PrelimPBEst}, $\P(\Omega_1)\geqslant 1-\delta$, where
\[
\Omega_1=\bigg\{\|{\bar \mu}-\mu_P\|\leqslant \sqrt{\frac{A}k}\bigg\}\enspace.
\]
The estimator $\muh$ will be a minmax estimator of $\mu_P$ based on Theorem~\ref{thm:PrelimPBEst}, using the sample $(X_{k+1}-{\bar \mu},\ldots,X_N-{\bar \mu})$. 
To choose appropriate constants $T'$ and $v'$ in this theorem, one has to bound the $L^2$-moments of the sample $(X_{k+1}-{\bar \mu},\ldots,X_N-{\bar \mu})$. 
The idea is to work conditionally on $\cF_k$, the $\sigma$-algebra generated by $X_1,\ldots,X_k$.
It holds
\[
P[(X-{\bar \mu})(X-{\bar \mu})^T|\cF_k]=\Sigma_P+({\bar \mu}-\mu_P)({\bar \mu}-\mu_P)^T\enspace.
\]
In particular,
\begin{gather*}
\Tr[P[(X-{\bar \mu})(X-{\bar \mu})^T|\cF_k]]\leqslant \Tr(\Sigma_P)+\|{\bar \mu}-\mu_P\|^2\enspace,\\
\|P[(X-{\bar \mu})(X-{\bar \mu})^T|\cF_k]\|_{\text{op}}\leqslant \|\Sigma_P\|_{\text{op}}+\|{\bar \mu}-\mu_P\|^2\enspace. 
\end{gather*}
This bound cannot be used to build $\muh$ but it holds
\begin{gather*}
\Tr[P[(X-{\bar \mu})(X-{\bar \mu})^T|\cF_k,\Omega_1]]\leqslant \Tr(\Sigma_P)+\frac{A}k\enspace,\\
\|P[(X-{\bar \mu})(X-{\bar \mu})^T|\cF_k,\Omega_1]\|_{\text{op}}\leqslant \|\Sigma_P\|_{\text{op}}+\frac{A}k\enspace. 
\end{gather*}
This suggests to build the minmax estimator using Theorem~\ref{thm:PrelimPBEst}, based on the sample $(X_{k+1}-{\bar \mu},\ldots,X_N-{\bar \mu})$, with the constants ${\bar T}+A/k$ and ${\bar v}+A/k$. 
Let 
\[
r=2\bigg(\sqrt{\frac{{\bar T}+A/k}N}+\sqrt{\frac{2({\bar v}+A/k)\log(1/\delta)}N}\bigg)\enspace,
\]
According to these preliminary computations, it holds
\[
\P\bigg(\bigg\{\|\muh-\mu_P\|>2\bigg(\sqrt{\frac{{\bar T}+A/k}N}+\sqrt{\frac{2({\bar v}+A/k)\log(1/\delta)}N}\bigg)\bigg\} \bigg|\cF_k,\Omega_1\bigg)\leqslant \delta\enspace.
\]
Therefore, 
\begin{align*}
\P\big(\|\muh-\mu_P\|> r\big) &\leqslant 1-\P(\Omega_1)+\P\big(\Omega_1\cap\big\{\|\muh-\mu_P\|> r\}\big)\\
&\leqslant \delta+\E\big[\P\big(\Omega_1\cap\big\{\|\muh-\mu_P\|> r\}\big|\cF_k\big)\big]\leqslant 2\delta\enspace.
\end{align*}
\end{proof}

\section{Toward a generic minmax strategy}\label{sec:GenStrat}
This section introduces a minmax strategy that can be extended more easily than the one presented in Section~\ref{sec:FirstMinmax} to any learning problem where ERM can be used.
The starting point of this generic construction is that the multivariate expectation $\mu_P$ is solution of a minimization problem where the objective function is a univariate expectation
\[
\mu_P\in \argmin_{\mu\in \R^d}P[\|X-\mu\|^2]\enspace.
\]
A first ``natural" idea to build an estimator from this formulation would be to consider
\[
\muh_{\text{nat}}\in\argmin_{\mu\in \R^d}\MOM{K}{\|X-\mu\|^2}\enspace.
\]
This construction would be similar to that of the empirical mean $\ERM_{\text{emp}}$ which is the ERM associated to the loss $\|f-x\|^2$, i.e. $\ERM_{\text{emp}}\in \argmin_{f\in F}P_N[\|X-f\|^2]$.
It turns out that min MOM estimators have suboptimal deviation bounds even under stronger assumption on $F$, see for example~\cite{lecue2018robust}. 
The reason is that the lack of linearity of the median prevents from using localization ideas that yields optimal deviation rates of the ERM.
Instead of simply minimizing MOM, the idea is to reformulate the problem in order to build an estimator based on estimators of the expectations of the \emph{increments} of loss rather than on the losses themselves. 
To clarify this idea, remark that, by linearity of $P$, the target $\mu_P$ is also solution of the following minmax problem:
\[
\mu_P\in \argmin_{\mu\in \R^d}\sup_{\nu\in \R^d}P[\|X-\mu\|^2-\|X-\nu\|^2]\enspace.
\]
Now, one can obtain an estimator of $\mu_P$ simply by plugging in this formulation estimators $\widehat{P}[\|X-\mu\|^2-\|X-\nu\|^2]$ of the univariate expectations $P[\|X-\mu\|^2-\|X-\nu\|^2]$.
Contrary to the previous minmax strategy that used the specific form of the risk function, this new construction is completely generic and can be extended to many learning tasks.
In the remaining of this section, we present as a first example of application an analysis of the minmax MOM estimator 
\[
\muh\in \argmin_{\mu\in \R^d}\sup_{\nu\in \R^d}\MOM{K}{\|X-\mu\|^2-\|X-\nu\|^2}\enspace.
\]
The minmax MOM estimator differs from the min MOM since the median is not linear.
In the following chapters, we shall extend the analysis of minmax MOM estimators to much more general learning tasks and show sub-Gaussian oracle inequalities in several classical problems.
\begin{theorem}\label{thm:MOMMean1}
 Let $X_1,\ldots,X_N$ denote i.i.d. realizations of a distribution $P\in \cP_2$. Let $K\leqslant N$ and let 
 \[
 \muh\in\argmin_{\mu\in \R^d}\sup_{\nu\in \R^d}\MOM{K}{\|X-\mu\|^2-\|X-\nu\|^2}\enspace.
 \]
 Then
 \[
\P\bigg(\|\muh-\mu_P\|>(\sqrt{2}+1)\bigg(128\sqrt{\frac{\text{Tr}(\Sigma_P)}N}\vee4\sqrt{\frac{\|\Sigma_P\|_{\text{op}}K}N}\bigg)\bigg)\leqslant e^{-K/32}\enspace.
 \]
\end{theorem}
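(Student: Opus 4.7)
The plan rests on the observation that the risk increment $\|X-\mu\|^2-\|X-\nu\|^2$ is \emph{affine} in $X$: expanding the squares,
\[
\|X-\mu\|^2-\|X-\nu\|^2 = 2X^T(\nu-\mu) + \|\mu\|^2-\|\nu\|^2.
\]
Each block empirical mean $P_{B_k}$ is linear, and the median is equivariant under affine transformations of its arguments, so
\[
\MOM{K}{\|X-\mu\|^2-\|X-\nu\|^2} = 2\,\MOM{K}{(\nu-\mu)^TX} + \|\mu\|^2-\|\nu\|^2,
\]
while the corresponding population quantity equals $\|\mu-\mu_P\|^2-\|\nu-\mu_P\|^2$. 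Thus the entire stochastic content of the objective is a univariate linear functional of $X$ indexed by $\nu-\mu$, and all of the heavy lifting can be delegated to Theorem~\ref{thm:MOMExpRd}.

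Set $r_K=128\sqrt{\operatorname{Tr}(\Sigma_P)/N}\vee 4\sqrt{2\|\Sigma_P\|_{\mathrm{op}}K/N}$ and let $\Omega$ be the event of Theorem~\ref{thm:MOMExpRd}, so that $\P(\Omega)\geqslant 1-e^{-K/32}$ and $|\MOM{K}{u^T X}-\mu_P^T u|\leqslant r_K$ for every $u\in\bS$. Since $\MOM{K}{\alpha Y}=\alpha\,\MOM{K}{Y}$ for every $\alpha\in\R$, this extends by positive homogeneity to $|\MOM{K}{u^T X}-\mu_P^T u|\leqslant r_K\|u\|$ for all $u\in\R^d$. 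Combined with the decomposition above, on $\Omega$ the following deterministic uniform control holds:
\[
\bigl|\MOM{K}{\|X-\mu\|^2-\|X-\nu\|^2} - \bigl(\|\mu-\mu_P\|^2-\|\nu-\mu_P\|^2\bigr)\bigr| \leqslant 2r_K\,\|\nu-\mu\|,\qquad\forall\mu,\nu\in\R^d.
\]

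Write $M(\mu):=\sup_{\nu\in\R^d}\MOM{K}{\|X-\mu\|^2-\|X-\nu\|^2}$. On $\Omega$, evaluating the above bound with $\mu=\mu_P$ yields
\[
M(\mu_P) \leqslant \sup_{\nu\in\R^d}\bigl\{-\|\nu-\mu_P\|^2 + 2r_K\|\nu-\mu_P\|\bigr\} = r_K^2,
\]
with the optimum attained on the sphere of radius $r_K$ around $\mu_P$. Since $\muh$ minimizes $M$, we get $M(\muh)\leqslant M(\mu_P)\leqslant r_K^2$. Specializing the supremum defining $M(\muh)$ to $\nu=\mu_P$ and applying the uniform control once more with $(\mu,\nu)=(\muh,\mu_P)$,
\[
\|\muh-\mu_P\|^2 - 2r_K\|\muh-\mu_P\| \leqslant \MOM{K}{\|X-\muh\|^2-\|X-\mu_P\|^2} \leqslant r_K^2.
\]
Solving this quadratic inequality in $\|\muh-\mu_P\|$ gives $\|\muh-\mu_P\|\leqslant (1+\sqrt{2})\,r_K$ on $\Omega$, which is the claimed bound. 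There is no genuine obstacle: the main subtlety is the homogeneity-based extension of the linear-functional MOM bound from $\bS$ to all of $\R^d$; once that is in hand, the affine-in-$X$ structure of the quadratic loss closes the argument without any need for localization, peeling, or a small-ball analysis.
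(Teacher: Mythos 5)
Your proposal is correct and follows essentially the same route as the paper: expand the quadratic loss increment into a linear functional of $X$ plus a deterministic term, exploit affine equivariance and positive homogeneity of the median to reduce to the uniform MOM bound over the unit sphere (Theorem~\ref{thm:MOMExpRd}), compare scores $M(\muh)\leqslant M(\mu_P)\leqslant r_K^2$, and close with the quadratic inequality at $\nu=\mu_P$. The only cosmetic difference is that you first package a two-sided uniform deviation bound for $\MOM{K}{\|X-\mu\|^2-\|X-\nu\|^2}$ around its mean and apply it twice, whereas the paper manipulates the score function $S(\nu)$ in its explicitly factored form $2\|\nu-\mu_P\|\MOM{K}{(X-\mu_P)^Tu}-\|\nu-\mu_P\|^2$; the underlying argument is identical.
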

\begin{remark}
Remark that the constants here are slightly worse than for the first minmax strategy presented in Theorem~\ref{thm:MOMExpRd}.
\end{remark}
\begin{proof}
 Define for any $\mu\in \R^d$ its score
 \[
 S(\mu)=\sup_{\nu\in \R^d}\MOM{K}{\|X-\mu\|^2-\|X-\nu\|^2}\enspace.
 \]
 On one hand, we have
 \[
 S(\muh)\leqslant S(\mu_P)=\sup_{\nu\in \R^d}\MOM{K}{\|X-\mu_P\|^2-\|X-\nu\|^2}\enspace.
 \]
 On the other hand, for any $\nu\in \R^d$,
\begin{align}
\notag S(\nu)&\geqslant \MOM{K}{\|X-\nu\|^2-\|X-\mu_P\|^2}\\
\label{eq:LBSnu}&=-\MOM{K}{\|X-\mu_P\|^2-\|X-\nu\|^2}\enspace. 
\end{align}
This suggests to analyse the process
 \[
\{\MOM{K}{\|X-\mu_P\|^2-\|X-\nu\|^2},\ \nu\in\R^d\}\enspace.
 \]
 As, for any $\nu\in \R^d$,
 \[
 \|X-\mu_P\|^2-\|X-\nu\|^2=2(X-\mu_P)^T(\nu-\mu_P)-\|\nu-\mu_P\|^2\enspace,
 \]
 we have 
\begin{equation}\label{eq:SmuP}
 \MOM{K}{\|X-\mu_P\|^2-\|X-\nu\|^2}=2\|\nu-\mu_P\|\MOM{K}{(X-\mu_P)^T\frac{\nu-\mu_P}{\|\nu-\mu_P\|}}-\|\nu-\mu_P\|^2\enspace.
\end{equation}
  Therefore, it is sufficient to analyse the process 
 \[
 \{\MOM{K}{(X-\mu_P)^T\bu},\ \bu\in \bS\},\qquad \bS=\{\bu\in \R^d: \|u\|=1\}\enspace.
 \]
 Recall that in Theorem~\ref{thm:MOMExpRd}, we showed that, for  
\[
r_K=128\sqrt{\frac{\text{Tr}(\Sigma_P)}N}\vee 4\sqrt{\frac{\|\Sigma_P\|_{\text{op}}K}N}\enspace,
\]
then,
\[
 \P\bigg(\sup_{\bu\in\bS}|\MOM{K}{\bu^TX}-P[\bu^TX]|>r_K\bigg)\leqslant e^{-K/32}\enspace.
\]
%
Let $\Omega=\{\forall \bu\in \bS, \ \sup_{\bu\in \bS}\MOM{K}{(X-\mu_P)^T\bu}\leqslant r_K\}$, so $\P(\Omega)\geqslant 1-e^{-K/32}$.
On $\Omega$, by \eqref{eq:SmuP}, 
\[
S(\mu_P)\leqslant \sup_{a\in \R}\{2ar_K-a^2\}\leqslant r_K^2\enspace.
\]
Therefore, by definition of $\muh$, $S(\muh)\leqslant r_K^2$.
On the other hand, on $\Omega$, by \eqref{eq:LBSnu} and \eqref{eq:SmuP}, for any $\nu\in \R^d$, 
\[
S(\nu)\geqslant -2\|\nu-\mu_P\|r_K+\|\nu-\mu_P\|^2
\]
In particular, therefore, on $\Omega$,
\[
-2\|\muh-\mu_P\|r_K+\|\muh-\mu_P\|^2\leqslant r_K^2\enspace.
\]
Solving this inequality shows that, on $\Omega$,
\[
\|\muh-\mu_P\|\leqslant (\sqrt{2}+1)r_K\enspace.
\]
This concludes the proof of the theorem.
\end{proof}

\section{Resistance to outliers}\label{sec:Outliers}
In this section, consider the $O\cup I$ framework where $(X_i)_{i\in I}$ are independent random variables such that 
\[
\forall i\in I,\quad \E[X_i]=\mu_P,\quad \E[(X_i-\mu)(X_i-\mu)^T]=\Sigma_P\enspace.
\]
No assumption is granted on the outliers $(X_i)_{i\in O}$. 
Denote by $\epsilon=|O|/N$.
\subsection{Resistance of MOM estimators}
This section investigate minmax MOM estimator in this framework.
\begin{theorem}
Assume that $K\geqslant 20N\epsilon/9$. Denote by $\ERM_K$ minmax MOM estimator
 \[
 \muh_K\in\argmin_{\mu\in \R^d}\sup_{\nu \in \R^d}\MOM{K}{\|X-\mu\|^2-\|X-\nu\|^2}\enspace.
 \]
 Then there exists an absolute constant $C$ such that, with probability at least $1-e^{-K/C}$,
 \[
 \|\muh_K-\mu\|^2\leqslant C\bigg(\frac{\text{Tr}(\Sigma)}{N}\vee \frac{\|\Sigma\|_{\text{op}}K}N\bigg)\enspace.
 \]
\end{theorem}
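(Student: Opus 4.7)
The plan is to run the argument of Theorem~\ref{thm:MOMMean1} almost verbatim and to isolate the only place where the i.i.d.\ assumption on the whole sample is used, namely the concentration bound for the MOM process indexed by linear functionals on the unit sphere. Writing $\|X-\mu_P\|^2-\|X-\nu\|^2=2(X-\mu_P)^T(\nu-\mu_P)-\|\nu-\mu_P\|^2$ and using the homogeneity and translation invariance of the MOM operator, the score $S(\mu)=\sup_{\nu}\MOM{K}{\|X-\mu\|^2-\|X-\nu\|^2}$ is controlled as soon as one controls the supremum of $|\MOM{K}{(X-\mu_P)^T\bu}|$ over the unit sphere $\bS$.

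The key observation is combinatorial: since each outlier $X_i$, $i\in O$, belongs to exactly one block, the set $\cK_I=\{k:B_k\cap O=\emptyset\}$ of pure inlier blocks satisfies $|\cK_I|\geqslant K-|O|\geqslant K-\epsilon N$. The hypothesis $K\geqslant 20N\epsilon/9$ yields $|\cK_I|\geqslant 11K/20$, i.e.\ strictly more than $K/2$ blocks are made of inlier data only. Applying Corollary~\ref{Cor:ConcSupLin} to the i.i.d.\ inlier subsample $\bigcup_{k\in\cK_I}B_k$ organized into the $|\cK_I|$ pure inlier blocks, with any fixed $\alpha\in(0,1/11)$, produces a constant $c_\alpha$ such that, with probability at least $1-e^{-|\cK_I|/c_\alpha}\geqslant 1-e^{-K/c}$, at least $(1-\alpha)|\cK_I|>K/2$ pure inlier blocks $B_k$ satisfy
\[
\sup_{\bu\in\bS}\absj{(P_{B_k}-P)[\bu^T X]}\leqslant c_\alpha\sqrt{\frac{\Tr(\Sigma_P)\vee \|\Sigma_P\|_{\text{op}}K}{N}}=:r_K.
\]
(The rescaling $K\leftrightarrow|\cK_I|$ and $N\leftrightarrow|\cK_I|b$ only affects absolute constants because $|\cK_I|/K\in[11/20,1]$.)

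On this event, strictly more than $K/2$ of the $K$ blocks are ``good'' uniformly in $\bu\in\bS$, so the majority argument used in Proposition~\ref{pro:MOMRBase} passes to the median: $\sup_{\bu\in\bS}|\MOM{K}{(X-\mu_P)^T\bu}|\leqslant r_K$. Injecting this into the two-sided score analysis of Theorem~\ref{thm:MOMMean1} yields $S(\mu_P)\leqslant\sup_{a\in\R}(2ar_K-a^2)\leqslant r_K^2$ and, for every $\nu\in\R^d$, $S(\nu)\geqslant -2\|\nu-\mu_P\|r_K+\|\nu-\mu_P\|^2$. Combining with $S(\muh_K)\leqslant S(\mu_P)$ and solving the resulting quadratic gives $\|\muh_K-\mu_P\|\leqslant(\sqrt{2}+1)r_K$, which is the claimed bound after squaring.

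The main obstacle is not analytic but bookkeeping: one must verify that Corollary~\ref{Cor:ConcSupLin}, stated for $K$ blocks covering $N$ i.i.d.\ samples, can be legitimately applied to the random partition induced by $\cK_I$ on the inlier data, and that the resulting deviation $\sqrt{\Tr(\Sigma_P)/(|\cK_I|b)\vee \|\Sigma_P\|_{\text{op}}/b}$ coincides with $r_K$ up to an absolute constant. Since $\cK_I$ depends only on the (fixed) partition and on the set $O$, the restriction is legitimate and all ratios of $|\cK_I|$ to $K$ and of $|\cK_I|b$ to $N$ are trapped in fixed numerical intervals, so the absorption into an absolute constant $C$ is harmless.
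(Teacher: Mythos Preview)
Your proof is correct and follows essentially the same approach as the paper: both reduce to Theorem~\ref{thm:MOMMean1}, note that at most $N\epsilon$ blocks can contain outliers, apply the concentration bound for MOM processes (you invoke Corollary~\ref{Cor:ConcSupLin}, the paper invokes the underlying Theorem~\ref{thm:ConcSupMOMg}) to the pure inlier blocks, and choose $\alpha$ small enough so that more than $K/2$ blocks are uniformly good. Your explicit treatment of the legitimacy of restricting to $\cK_I$ and the constant bookkeeping is a nice addition that the paper leaves implicit.
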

\begin{proof}
 The proof uses intensively results obtained in the proof of Theorem~\ref{thm:MOMMean1}.
 Proceeding as in this proof, denote, for any $\xi\in\R^d$, by 
 \[
 S(\xi)=\sup_{\nu\in \R^d}\MOM{K}{\|X-\xi\|^2-\|X-\nu\|^2}\enspace,
 \]
and recall that $S(\mu)\leqslant R_K^2$, where
 \[
 R_K=\sup_{\bu\in\bS}\MOM{K}{\bu^T(X-\mu)}\enspace.
 \]
 Denote by $\cK$ the indexes of blocks $B_k\subset I$.
 It is clear that $|\cK|\geqslant K-N\epsilon$. 
 Applying the general version of Lugosi and Mendelson concentration bound for median-of-means processes, with probability at least $1-e^{-(K-N\epsilon)/c^*}$, there exists at leat $9(K-N\epsilon)/10\geqslant K/2$ blocks $B_k\subset I$ where 
 \[
\forall \bu\in \bS,\qquad P_{B_k} [\bu^T(X-\mu)]\leqslant c^*\bigg(\sqrt{\frac{20\text{Tr}(\Sigma)}{9N}}\vee \sqrt{\frac{\|\Sigma\|_{\text{op}}K}N}\bigg)\enspace.
 \]
 This implies in particular that, with probability at least $1-e^{-K/2c^*}$,
 \[
 r_K\leqslant c^*\bigg(\sqrt{\frac{20\text{Tr}(\Sigma)}{9N}}\vee \sqrt{\frac{\|\Sigma\|_{\text{op}}K}N}\bigg)\enspace.
 \]
 The proof terminates as the one of Theorem~\ref{thm:MOMMean1}.
\end{proof}
\begin{remark}
 The condition $K\gtrsim N\epsilon$ implies that the convergence rate of the minmax MOM estimator $\muh_K$ is bounded from above by 
 \[
 C\bigg(\frac{\text{Tr}(\Sigma)}{N}\vee \frac{\|\Sigma\|_{\text{op}}K}N\vee \|\Sigma\|_{\text{op}}\epsilon\bigg)\enspace.
 \]
 In particular, these rates match those obtained on clean datasets in Theorem~\ref{thm:MOMMean1} as long as $\epsilon\lesssim r(\Sigma)/N$, where $r(\Sigma)$ is the effective rank of $\Sigma$, $r(\Sigma)=\text{Tr}(\Sigma)/\|\Sigma\|_{\text{op}}$.
\end{remark}
\subsection{Depth}
The purpose of this section is to investigate optimality of the proportion of outliers $\epsilon \lesssim r(\Sigma)/N$ allowed by MOM estimators by comparing with the Gaussian case.
The material of this section is an adaptation of results obtained in \cite{Chao2017}.
Assume that inliers have Gaussian distribution $P_I=\gauss(\mu,\sigma^2\bI_d)$. 
Consider the following Gaussian $O\cup I$ framework where the dataset $\cD_N$ contains $|I|$ data $(X_i)_{i\in I}$ i.i.d. with common distribution $P_I$ and $|O|$ outliers $(X_i)_{i\in O}$ that can be anything. Tuckey's depth (hereafter called depth) of any $\nu\in \R^d$ relatively to a distribution $\P$ on $\R^d$ is defined by 
\[
D(\nu,\P)=\inf_{\bu\in \bS}\P(\bu^TX\leqslant \bu^T\nu)\enspace.
\]
Tuckey's depth (hereafter called depth) of any $\nu\in \R^d$ relatively to the dataset $\cD_N$ on $\R^d$ is the empirical version of $D(\nu,\P)$:
\[
D(\nu,\cD_N)=\inf_{\bu\in \bS}\frac1N\sum_{i=1}^N{\bf 1}_{\{\bu^TX_i\leqslant \bu^T\nu\}}\enspace.
\]
In other words, $D(\nu,\cD_N)$ is Tuckey's depth relative to the empirical measure $P_N$.
Tuckey's median is the deepest point in $\R^d$, that is 
\[
\muh_{\text{Tuc}}\in\argmax_{\nu\in \R^d}D(\nu,\cD_N)\enspace.
\]
The purpose of this section is to establish the following result.
\begin{theorem}\label{thm:TucDepth}
Denote by $\muh_{\text{Tuc}}$ Tuckey's median. 
Assume the Gaussian $O\cup I$ framework, denote by $\epsilon=|O|/N$. 
There exist absolute constants $C_1,C_2$ such that, for any $\delta\in (0,1)$ satisfying $C_1(d+\epsilon^2+\log(1/\delta))/N<1$,
\[
\P\bigg(\|\muh_{\text{Tuc}}-\mu\|^2\leqslant C_2\bigg(\frac{d}N+\epsilon^2+\frac{\log(1/\delta)}N \bigg)\bigg)\enspace.
\]
\end{theorem}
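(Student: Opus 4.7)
The plan is to control $\muh_{\text{Tuc}}$ through three ingredients: a closed form for the population depth $D(\nu,P_I)$ under the Gaussian inlier model, a uniform Vapnik--Chervonenkis concentration inequality for the empirical halfspace measure over the inliers, and the trivial bound that outliers can shift any halfspace empirical frequency by at most $\epsilon$. Together these show that the population depth of $\muh_{\text{Tuc}}$ is close to the maximal value $1/2$, which for a Gaussian $P_I$ forces $\muh_{\text{Tuc}}$ to be close to $\mu$.

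First I would compute $D(\nu,P_I)$ explicitly: since $\bu^T X\sim\gauss(\bu^T\mu,\sigma^2)$ under $P_I$, $\P_{P_I}(\bu^T X\leqslant \bu^T\nu)=\Phi(\bu^T(\nu-\mu)/\sigma)$, which is minimized at $\bu=-(\nu-\mu)/\|\nu-\mu\|$, giving $D(\nu,P_I)=\Phi(-\|\nu-\mu\|/\sigma)$. In particular $D(\mu,P_I)=1/2$ and $r\mapsto \Phi(-r/\sigma)$ is strictly decreasing with slope bounded away from $0$ on any compact subset of $\R_+$.

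Next, the class $\cH=\{\ba\mapsto {\bf 1}_{\bu^T\ba\leqslant c}:\bu\in\bS,\,c\in\R\}$ of closed halfspaces in $\R^d$ has VC dimension at most $d+1$, so by the standard VC inequality, with probability at least $1-\delta$,
\[
\sup_{\nu\in\R^d,\,\bu\in\bS}\bigg|\frac{1}{|I|}\sum_{i\in I}{\bf 1}_{\bu^T X_i\leqslant \bu^T\nu}-\P_{P_I}(\bu^T X\leqslant \bu^T\nu)\bigg|\leqslant c\sqrt{\frac{d+\log(1/\delta)}{|I|}}.
\]
Writing $\tilde D(\nu)$ for the infimum over $\bu\in\bS$ of $|I|^{-1}\sum_{i\in I}{\bf 1}_{\bu^T X_i\leqslant \bu^T\nu}$, the $1$-Lipschitz property of $\inf$ for the sup-norm transfers this bound to $\sup_\nu|\tilde D(\nu)-D(\nu,P_I)|\leqslant c\sqrt{(d+\log(1/\delta))/|I|}$. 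Splitting the full empirical sum over $I$ and $O$ gives, for every $\nu$, $0\leqslant D(\nu,\cD_N)-(1-\epsilon)\tilde D(\nu)\leqslant \epsilon$, so that, uniformly in $\nu$,
\[
\big|D(\nu,\cD_N)-(1-\epsilon)D(\nu,P_I)\big|\leqslant \epsilon+c\sqrt{\frac{d+\log(1/\delta)}{N}}=:\eta.
\]

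Finally, apply this at $\nu=\mu$ and at $\nu=\muh_{\text{Tuc}}$, using $D(\muh_{\text{Tuc}},\cD_N)\geqslant D(\mu,\cD_N)$ (definition of the Tukey median) to chain
\[
(1-\epsilon)D(\muh_{\text{Tuc}},P_I)\geqslant D(\muh_{\text{Tuc}},\cD_N)-\eta\geqslant D(\mu,\cD_N)-\eta\geqslant (1-\epsilon)/2-2\eta,
\]
so that $D(\muh_{\text{Tuc}},P_I)\geqslant 1/2-C(\epsilon+\sqrt{(d+\log(1/\delta))/N})$ once $\epsilon$ is bounded away from $1$. Substituting $D(\muh_{\text{Tuc}},P_I)=\Phi(-\|\muh_{\text{Tuc}}-\mu\|/\sigma)$ and inverting under the hypothesis $C_1(d+\epsilon^2+\log(1/\delta))/N<1$, which forces the right-hand side into a bounded range where $\Phi(t)-1/2\geqslant c_0 t$, yields the claimed bound after squaring. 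The main obstacle is the uniform VC inequality at the right scale $\sqrt{(d+\log(1/\delta))/N}$: one must verify that the halfspace class, indexed jointly by direction and location, has VC dimension $O(d)$ and that the bound transfers from the $L^\infty$-norm of the empirical process to the infimum defining $D$. Once this is granted, the outlier decomposition and the Gaussian inversion are routine.
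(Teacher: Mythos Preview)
Your proposal is correct and follows essentially the same approach as the paper: explicit computation of the Gaussian population depth $D(\nu,P_I)=\Phi(-\|\nu-\mu\|/\sigma)$, a uniform VC bound for halfspaces (VC dimension $d+1$) over the inliers, the trivial $\epsilon$-perturbation from outliers, chaining through the maximality of $\muh_{\text{Tuc}}$, and inversion of $\Phi$ near $1/2$. Your packaging of the outlier effect into the single two-sided sandwich $(1-\epsilon)\tilde D(\nu)\leqslant D(\nu,\cD_N)\leqslant(1-\epsilon)\tilde D(\nu)+\epsilon$ is marginally tidier than the paper's two separate one-sided bounds, but the arguments are otherwise identical.
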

\begin{remark}
 In this example, the covariance matrix $\Sigma$ of the inliers is the identity matrix $\bI_d$, so $\text{Tr}(\Sigma)=d$, $\|\Sigma\|_{\text{op}}=1$ and the effective rank $r(\Sigma)=d$. 
 It follows that the rates of the convergence of MOM estimators in the clean case are not downgraded if the proportion of outliers $\epsilon\lesssim d/N$. 
 It comes from Theorem~\ref{thm:TucDepth} that Tuckey's median tolerates much outliers since a proportion $\epsilon\lesssim \sqrt{d/N}$ is allowed here.
 Of course, the result for Tuckey's median only holds when inliers are Gaussian and it provides the optimal sub-Gaussian dependence on the covariance matrix of $X$ only in the case where this covariance is bounded from bellow by the identity.
 These conditions are way more restrictive than those required for MOM estimators. 
 Moreover, one can show that the dependence $\epsilon\leqslant d/N$ is optimal if we allow inliers with heavier tails than Gaussian.
 Nevertheless, this shows that the number of outliers allowed by minmax MOM estimators is not optimal in general and opens an interesting question: is there some estimator achieving optimal sub-Gaussian deviation bounds assuming only that $P\in \cP_2$ and whose dependency in the number of outliers is always optimal? 
\end{remark}
\begin{proof}
 Assume, without loss of generality, that $\mu=0$. 
 Define, for any $\bu\in\bS$ and any $\nu\in\R^d$, the half space $H_{\bu,\nu}=\{\bx\in \R^d: \bu^T\bx\leqslant \bu^T\nu\}$.
 Define 
 \[
 P_N^{(I)}(H_{u,\nu})=\frac1{|I|}\sum_{i\in I}{\bf 1}_{\{X_i\in H_{u,\nu}\}}\enspace.
 \]
 The set of Half spaces $H_{u,\nu}$ is the set of all affine half spaces in $\R^d$, it has Vapnik-Chervonenkis $\text{VC}$ dimension $d+1$.
 It comes from standard VC theory, see \cite{MR1719582} that there exists an absolute constant $C$ such that, with probability larger than $1-\delta$,
 \[
\sup_{\bu\in\bS,\nu\in\R^d}( P_N^{(I)}-P_I)(H_{u,\nu})\leqslant C\bigg(\sqrt{\frac{d}N}+\sqrt{\frac{\log(1/\delta)}N}\bigg)\enspace.
 \]
 This result implies that
\begin{equation}\label{eq:VCB}
\sup_{\bu\in\bS,\nu\in\R^d}(D(\nu,(X_i)_{i\in I})-D(\nu,P_I))\leqslant C\bigg(\sqrt{\frac{d}N}+\sqrt{\frac{\log(1/\delta)}N}\bigg)\enspace. 
\end{equation}
 
 In particular,
 \[
 D(\muh_{\text{Tuc}},P_I)\geqslant D(\muh_{\text{Tuc}},(X_i)_{i\in I})- C\bigg(\sqrt{\frac{d}N}+\sqrt{\frac{\log(1/\delta)}N}\bigg)\enspace.
 \]
 Now, for any $\nu\in \R^d$, 
\begin{align*}
 D(\nu,(X_i)_{i\in I})&=\inf_{\bu\in \bS}\frac1{|I|}\sum_{i\in I}{\bf 1}_{\{\bu^TX_i\leqslant \bu^T\nu\}}\\
 &\geqslant \frac{N}{|I|}D(\nu,\cD_N)-\frac{|O|}{|I|}\\
 &=\frac{N}{|I|}(D(\nu,\cD_N)-\epsilon)\geqslant \frac1{1-\epsilon}(D(\nu,\cD_N)-\epsilon)\enspace.
\end{align*}
 Hence,
 \[
 D(\muh_{\text{Tuc}},P_I)\geqslant \frac1{1-\epsilon}(D(\muh_{\text{Tuc}},\cD_N)-\epsilon)- C\bigg(\sqrt{\frac{d}N}+\sqrt{\frac{\log(1/\delta)}N}\bigg)\enspace.
 \]
 By definition of $\muh_{\text{Tuc}}$, this implies that
 \[
  D(\muh_{\text{Tuc}},P_I)\geqslant \frac1{1-\epsilon}(D(\mu,\cD_N)-\epsilon)- C\bigg(\sqrt{\frac{d}N}+\sqrt{\frac{\log(1/\delta)}N}\bigg)\enspace.
 \]
 As for any $\nu\in \R^d$, $ND(\nu,\cD_N)\geqslant |I|D(\nu,(X_i)_{i\in I})$, this implies $D(\nu,\cD_N)\geqslant (1-\epsilon)D(\nu,(X_i)_{i\in I})$, thus
 \[
  D(\muh_{\text{Tuc}},P_I)\geqslant D(\mu,(X_i)_{i\in I})-\frac{\epsilon}{1-\epsilon}- C\bigg(\sqrt{\frac{d}N}+\sqrt{\frac{\log(1/\delta)}N}\bigg)\enspace.
 \]
 Applying \eqref{eq:VCB} one more time shows that, with probability larger than $1-2\delta$,
 \[
 D(\muh_{\text{Tuc}},P_I)\geqslant D(\mu,P_I)-\frac{\epsilon}{1-\epsilon}- 2C\bigg(\sqrt{\frac{d}N}+\sqrt{\frac{\log(1/\delta)}N}\bigg)\enspace.
 \]
 Introduce $\Phi$, the c.d.f of the standard Gaussian distribution on $\R$, $\gauss(0,1)$. It holds that, for any $\nu\in \R^d$, 
 \[
 D(\nu,P_I)=\inf_{u\in\R^d}P_I(u^TX\leqslant u^T\nu)=\inf_{u\in\R^d}\Phi(u^T\nu)=1-\Phi(\|\nu\|)\enspace.
 \]
 In particular, as $\mu=0$, $D(\mu,P_I)=1/2$, so
 \[
  D(\muh_{\text{Tuc}},P_I)=1-\Phi(\|\muh_{\text{Tuc}}\|)\geqslant \frac12-\frac{\epsilon}{1-\epsilon}- 2C\bigg(\sqrt{\frac{d}N}+\sqrt{\frac{\log(1/\delta)}N}\bigg)\enspace.
 \]
 Equivalently, with probability larger than $1-2\delta$,
 \[
 \Phi(\|\muh_{\text{Tuc}}\|)\leqslant \frac12+\frac{\epsilon}{1-\epsilon}+ 2C\bigg(\sqrt{\frac{d}N}+\sqrt{\frac{\log(1/\delta)}N}\bigg)\enspace.
 \]
 Now, the proof terminates since there exists an absolute constant $c$ such that $\Phi(x)\geqslant 1/2+x/4$ for any $0<x<c$. 
\end{proof}


\chapter{The homogeneity lemma}\label{Chap:MinMax}

The homogeneity lemma is one of the most important tools in these notes.
Roughly speaking, it allows to reduce the analysis of minmax estimators to deviation bounds of the underlying process on localized classes of functions.
It is an alternative to the peeling argument that has been repeatedly used in the analysis of the ERM to benefit from localization ideas \cite{MR2829871} and prove fast rates of convergence in statistical learning theory.
It is particularly well adapted to problems where deviation inequalities are only available up to a certain confidence parameter, as it is the case of MOM processes, see Section~\ref{sec:DevMOMProc} in Chapter~\ref{Chap:ConcIn}.
The version presented here is an extension of the ``deterministic argument" presented in \cite{Chinot2018robust} that allows to deal with convex losses as will be done in Chapters~\ref{Chap:LipConvLoss} and \ref{Chap:LSR} and with the tests of $\rho$-estimation presented in Chapter~\ref{Sec:RhoEst}.

\section{Learning, ERM, minmax aggregation of tests}\label{sec:Generalities}
Consider the statistical learning framework of Vapnik.
Let $Z$ denote a random variable taking values in a measurable space $\cZ$, with distribution $P$.
Let $F$ denote a set of parameters and let $\ell:F\times \cZ\to \R,\ (f,z)\mapsto\ell_f(z)$ denote a function called loss.
Assume that there exists $f_0\in F$ such that, for any $f\in F$, $\ell_f(\cdot)-\ell_{f_0}(\cdot)\in L^1(P)$. Under this assumption, $\ell_f-\ell_g\in L^1(P)$ for any $f,g\in F$. We want to estimate 
\begin{equation}\label{eq:TargetLearning}
f^*\in \argmin_{f\in F}P[\ell_f-\ell_{f_0}]\enspace. 
\end{equation}
It is clear that, for any $g\in F$, we also have $f^*\in \argmin_{f\in F}P[\ell_f-\ell_g]$.
The arguably most simple example of such problem is multivariate mean estimation where one wants to estimate the expectation $\mu_P$ of a measure $P$ on $\R^d$. In this example, let $\cZ=F=\R^d$, $\|\cdot\|$ denote the Euclidean norm and $\ell:(f,z)\mapsto \|f-z\|^2$, $f_0=0$, then $\ell_f(z)-\ell_{f_0}(z)=\|f\|^2-2f^Tz\in L^1(P)$ when $Z\in L^1(P)$ and $P[\ell_f-\ell_{f_0}]=\|f-\mu_P\|^2-\|\mu_P\|^2$ is obviously minimized when $f=\mu_P$, that is $f^*=\mu_P$.

To estimate $f^*$, a dataset $Z_1,\ldots,Z_N$ i.i.d. with common distribution $P$ is available.
Let $P_N$ denote the empirical measure of the sample $Z_1,\ldots,Z_N$ defined for any function $g:\cZ\to\R$ by $P_Ng=N^{-1}\sum_{i=1}^Ng(Z_i)$ One way to handle problem \eqref{eq:TargetLearning} is to use Empirical Risk Minimizers defined by
\[
\hat{f}^{\text{ERM}}\in \argmin_{f\in F}P_N[\ell_f]\enspace.
\]
For multivariate mean estimation, this yields for example, the empirical mean estimator $\hat{f}^{\text{ERM}}=N^{-1}\sum_{i=1}^NZ_i$.
This estimator is not robust to heavy-tailed data, or the presence of outliers in the dataset.

To build robust alternative, the empirical mean could be replaced by any robust estimator seen in Chapter 1 in the mean problem to get a robust estimator for learning task. 
As for multivariate mean estimation considered in Chapter~\ref{Chap:MME}, this strategy is suboptimal in general. 
Instead, following the strategy introduced in Section~\ref{sec:GenStrat} of Chapter~\ref{Chap:MME}, one can rewrite the min problem \eqref{eq:TargetLearning} as follows 
\begin{equation}\label{eq:TargetLearning2}
f^*\in \argmin_{f\in F}P[\ell_f-\ell_{f_0}]=\argmin_{f\in F}\sup_{g\in F}P[\ell_f-\ell_{g}]\enspace. 
\end{equation}
Then, one can plug into this definition any robust estimator of the increments $P[\ell_f-\ell_{g}]$. For example, the minmax MOM estimator is defined by
\[
\hat{f}^{\text{MOM}}_K\in\argmin_{f\in F}\sup_{g\in F}\MOM{K}{\ell_f-\ell_{g}}\enspace.
\]
The ERM could also be be obtained this way since
\[
\hat{f}^{\text{ERM}}\in \argmin_{f\in F}\sup_{g\in F}P_N[\ell_f-\ell_g]\enspace.
\]
Notice also that min MOM and minmax MOM estimators differ in general since MOM processes are not linear.

The ideas that we develop in this chapter intend to analyse the following extension of these minmax strategies. 
The building blocks of the general construction are \emph{tests statistics} or increment estimators which are random variables $T(f,g)$ where $f$ and $g$ belong to $F$. 
For ERM, $T(f,g)=P_N[\ell_f-\ell_g]$ and for minmax MOM estimators $T(f,g)=\MOM{K}{\ell_f-\ell_g}$. 
In both examples, $T(f,g)$ is an estimator of $P[\ell_f-\ell_g]$. 
An other example is presented later in Section~\ref{sec:RhoEst}.
As this property is satisfied in all examples, it is always assumed that $T(f,g)=-T(g,f)$ and in particular that $T(f,f)=0$ for any $f\in F$.
The heuristic is that $T(f,g)>0$ means that $g$ is better than $f$ to estimate $f^*$.
The estimator we want to analyse is the minmax estimator
\begin{equation}\label{eq:LearnFromTests}
 \hat{f}\in\argmin_{f\in F}\sup_{g\in F}T(f,g)\enspace.
\end{equation}
In the following, $\cE:F\to \R$ denotes a real valued function such that, for any $f\in F$, $\cE(f)$ evaluates the performance of $f$ as an estimator of $f^*$.
As this evaluation function is not involved in the definition of $\hat{f}$, it may perfectly depend on the unknown distribution $P$.
In the examples, $\cE$ will usually denote the \emph{excess risk} $\cE(f)=P[\ell_f-\ell_{f^*}]$ (which is non negative by definition and obviously null if $f=f^*$) or some distance between $f$ and $f^*$. 
In any case, large values of $\cE(f)$ indicate that $f$ is \emph{far} from the target $f^*$, so $f$ is not a desirable estimator of $f^*$.

\section{General results}
This section gathers the main results of this chapter.
The goal is to reduce the analysis of minmax estimators to concentration inequalities for suprema of test processes $\sup_{f\in \cV(f^*)}T(f^*,f)$ over subsets $\cV(f^*)\subset F$ localized around the oracle $f^*$.
When applied to the tests $T(f,g)=P_N[\ell_f-\ell_g]$ defining ERM, these results extend well known localization ideas widely used to prove fast rates of convergence for this estimator, see for example \cite{MR2829871} for an overview on this topic.

\subsection{Link with multiple testing theory}
The first result extracts the idea underlying the proof of the risk bound for minmax MOM estimator of multivariate expectations in Theorem~\ref{thm:MOMMean1}.
Interestingly, it establishes a link between learning or estimation from tests, which is the analysis of estimators built as in \eqref{eq:LearnFromTests} and multiple testing theory, which is an extension of the classical theory of tests in statistics where one is interested in testing several null hypotheses at the same time.
\begin{lemma}\label{lem:FWER+FWSRimplyERGen}[Link with Multiple Testing]
Let $B$ and $r$ denote positive real numbers.
On $\Omega$, $\cE(\hat{f})\leqslant r$, where $\Omega$ denotes the event where the following equations hold.
\begin{gather}
\label{def:FWERGen} \sup_{g\in F}T(f^*,g)\leqslant B\enspace,\\
\label{def:FWSRGen}\sup_{g\in F: \cE(g)> r}T(f^*,g)< -B\enspace.
\end{gather}
\end{lemma}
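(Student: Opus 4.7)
The plan is to mimic the argument used in the proof of Theorem~\ref{thm:MOMMean1}, extracting it in full generality. The key object is the score function
\[
S(f) := \sup_{g\in F} T(f,g),
\]
which is precisely what $\hat{f}$ minimises. I would first use condition~\eqref{def:FWERGen} to bound $S(f^*)$ from above: it says exactly $S(f^*) \leqslant B$. Since $\hat{f}$ is a minmax minimiser, we immediately get $S(\hat{f}) \leqslant S(f^*) \leqslant B$ on $\Omega$.

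The heart of the argument is then a contradiction using condition~\eqref{def:FWSRGen}. Suppose, toward a contradiction, that $\cE(\hat{f}) > r$ on $\Omega$. Then $\hat{f}$ is a legitimate candidate in the supremum appearing in \eqref{def:FWSRGen}, so that
\[
T(f^*,\hat{f}) < -B.
\]
Using antisymmetry $T(f,g) = -T(g,f)$, this rewrites as $T(\hat{f},f^*) > B$. But by definition of the score,
\[
S(\hat{f}) \;=\; \sup_{g\in F} T(\hat{f},g) \;\geqslant\; T(\hat{f},f^*) \;>\; B,
\]
contradicting the upper bound $S(\hat{f}) \leqslant B$ already obtained. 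Hence $\cE(\hat{f}) \leqslant r$ on $\Omega$.

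There is no real obstacle here: the whole proof is a deterministic one-liner once the score $S$ is introduced, and both hypotheses are used exactly once (one to control the score at $f^*$, the other to exclude all $g$ with $\cE(g)>r$ from being the minmax minimiser). The only subtlety worth flagging is the consistent use of the antisymmetry convention $T(f,g)=-T(g,f)$ stated in Section~\ref{sec:Generalities}, which is what converts condition~\eqref{def:FWSRGen} into a \emph{lower} bound on $T(g,f^*)$ for all $g$ far from $f^*$, and thereby into a lower bound on $S(g)$.
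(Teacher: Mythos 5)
Your proof is correct and follows essentially the same approach as the paper's: bound the score $S(\hat f)=\sup_{g}T(\hat f,g)$ above by $B$ using \eqref{def:FWERGen} and the minimality of $\hat f$, then use antisymmetry on \eqref{def:FWSRGen} to show any $f$ with $\cE(f)>r$ has score strictly above $B$. The only cosmetic difference is that you phrase the second step as a proof by contradiction while the paper states it directly.
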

\begin{proof}
 By definition of $\hat{f}$, on $\Omega$, by \eqref{def:FWERGen}.
 \[
\sup_{g\in F}T(\hat{f},g)\leqslant \sup_{g\in F}T(f^*,g)\leqslant B\enspace.
 \]
 On the other hand, by \eqref{def:FWSRGen}, any $f$ such that $\cE(f)> r$ satisfies
 \[
 \sup_{g\in F}T(f,g)\geqslant T(f,f^*)>B\enspace.
 \]
 As a consequence, $\cE(\hat{f})\leqslant r$ on $\Omega$.
\end{proof}
Conditions~\eqref{def:FWERGen} and~\eqref{def:FWSRGen} are intuitively clear.
\eqref{def:FWERGen} means that the tests between $f^*$ and any $g\in F$ should not become too large ($B=0$ in \eqref{def:FWERGen} for the ideal test $T(f,g)=P[\ell_f-\ell_g]$). $B$ controls typically the fluctuations of the process $\sup_{g\in F}T(f^*,g)$ which has negative drift.
\eqref{def:FWSRGen} means that $f^*$ is preferred to any $g$ with large drift with a margin larger than the noise level.

Let us clarify in which sense Lemma~\ref{lem:FWER+FWSRimplyERGen} makes a link with multiple testing theory.
This link uses the formalism and definitions borrowed from \cite{MR3576553}.
Let $\cP$ denote a class of probability distributions on $\cZ$ and for any $P\in \cP$, denote by 
\[
F^*_P=\argmin_{f\in F}P[\ell_f-\ell_{f_0}]\subset F\enspace.
\]
Define, for any $f\in F$, the hypothesis $H_f=\{P\in \cP: f\in F_P^*\}\subset \cP$.
We want to test simultaneously all assumptions $\cH=\{H_f,\ f\in F\}$.
A multiple test of $\cH$ is a random subset $\cR\subset \cH$ of \emph{rejected} hypotheses. 
To evaluate the multiple testing procedure $\cR$, introduce, for any $P\in \cP$, the sets 
\[
\cF(P)=\{H_f\in \cH : \ f\notin F^*_P\}, \qquad \cF_r(P)=\{H_f\in \cH : \ \cE(f)>r\}\enspace.
\]
$\cF(P)$ is called the set of \emph{false} hypotheses and $\cF_r(P)$ is the set of assumptions that are $r$-separated from the true assumptions $\cT(P)=\cH\setminus\cF(P)$.
The \emph{family-wise error rate} (FWER) of the multiple testing $\cR$ is defined by 
\[
\text{FWER}(\cR):=\sup_{P\in \cP}P\big(\cR\cap \cT(P)\ne \emptyset)=1-\inf_{P\in\cP}P(\cR\subset \cF(P))\enspace.
\]
It is the (maximal) probability to reject at least one true hypothesis.
To understand this definition, consider the situation where $\cH$ is reduced to a singleton $\cH=\{H\}$.
In this case, one can consider a simple test $\phi_H$ of the assumption $H$ against the complementary $H^c=\cP/H$: $\phi_H=1$ means that $H$ is rejected and $\phi=0$ that $H$ is not rejected.
The multiple test associated with the simple test $\phi_H$ is $\cR=\{ H: \phi_H=1\}$. 
In words, $\cR=\{H\}$ if $H$ is rejected by the simple test $\phi_H$ and $\cR=\emptyset$ if $H$ is not rejected by $\phi_H$.
In this case, $\text{FWER}(\cR)$ is the size of the simple test $\phi_H$.
In particular, $\phi_H$ has level $\alpha$ iff $\text{FWER}(\cR)\leqslant \alpha$.
In this sense, $\text{FWER}(\cR)$ is an extension of the first type error rate for simple tests.
The \emph{family-wise separation rate} (FWSR) of the test $\cR$ is defined by 
\[
\text{FWSR}_{\beta}(\cR)=\inf\{r>0: \inf_{P\in \cP} P(\cR\supset\cF_r(P))\geqslant 1-\beta\}\enspace.
\]
FWSR measures the minimal distance between $H_f$ and $\cT(P)$ such that $H_f$ is rejected with given confidence level. 
FWSR extends the notion of \emph{separation rates} for simple tests, see \cite{MR1935648} for a definition of separation rates and \cite{MR3576553} for more details on this extension.
FWSR is a measure of the second type error rate for multiple testing that allows to define a minimax theory for these tests.

Going back to learning from tests, one can use the family of test statistics $T(f,g)$ to build a multiple testing on $\cH$. 
The idea is to use the score 
\[
\hat{\cE}(f)=\sup_{g\in F}T(f,g)
\]
as a test statistic to build a simple test of the assumption $H_f$.
Small values of $\hat{\cE}(f)$ indicate that $H_f$ might be true and large values that it seems false. This suggests to consider, for some threshold $B>0$, the multiple testing
\[
\cR_B=\{H_f\in \cH: \hat{\cE}(f)>B\}\enspace.
\]
This test satisfies $\text{FWER}(\cR_B)\leqslant \alpha$ if, for any $P\in\cP$, $\cR_B$ does not contain any $H_{f^*_P}$, where $f^*_P\in F_P^*$, with $P$-probability larger than $\alpha$.
In words, to bound the FWER, we have to bound from bellow the probability that any $f^*_P\in F_P^*$ is not rejected, that is the probability of the event
\[
\sup_{g\in F}T(f^*_P,g)\leqslant B\enspace.
\]
Bounding from above the FWER of the multiple testing $\cR_B$ by $\alpha$ is equivalent to bound from bellow the probability of the event~\eqref{def:FWERGen} by $1-\alpha$.

Consider now the FWSR of $\cR_B$.
This FWSR is bounded by $r$ if the probability that any $f\in F$ such that $\cE(f)>r$ is rejected with probability at least $1-\beta$.
Formally, $\text{FWSR}_\beta(\cR)\leqslant r$ if, for any $P\in\cP$, $P(\Omega_{B,r})\geqslant 1-\beta$, where $\Omega_{B,r}$ is the event
\begin{equation}\label{eq:ActualFWSR}
\forall f\in F : \cE(f)>r,\qquad \sup_{g\in F}T(f,g)>B\enspace. 
\end{equation}
Remark that $\Omega_{B,r}$ clearly contains the event $\Omega'_{B,r}$ defined by 
\[
\forall f\in F : \cE(f)>r,\qquad T(f,f^*)>B\enspace.
\]
Therefore, if \eqref{def:FWSRGen} holds with probability $1-\beta$, then the FWSR of the test $\cR_B$ is bounded from above by $r$.

It transpires from the proof of Lemma~\ref{lem:FWER+FWSRimplyERGen} that Assumption~ \eqref{eq:ActualFWSR} can replace Assumption \ref{def:FWSRGen} with the same conclusion: $\cE(\hat{f})\leqslant r$. 
Therefore, if $\text{FWER}(\cR_B)\leqslant \alpha$ and $\text{FWSR}_\beta(\cR_B)\leqslant r$, then $\cE(\hat{f})\leqslant r$ with probability $1-\alpha-\beta$ for any choice of the probability distribution $P\in\cP$.
However, besides this application, we will always use the restricted form of this result given by Lemma~\ref{lem:FWER+FWSRimplyERGen} which is why we presented this version.

\subsection{The homogeneity lemma}

The following result is the most important of this chapter and one the most fundamental tool of these lectures. 
It is called the ``homogeneity lemma" and it shows that risk bounds for minmax estimators follow from concentration of suprema of test processes over sub-classes $\cV(\bayes)\subset F$ \emph{localized} around the oracle $\bayes$. 
This result holds under abstract conditions on the test statistics that can easily be checked in the applications developed in the following chapters.
It will be at the heart of the proofs of all risk bounds given afterwards.
The idea is to show that conditions \eqref{def:FWERGen} and \eqref{def:FWSRGen} in Lemma~\ref{lem:FWER+FWSRimplyERGen} are met if suprema of test processes over localized classes are controled. 

\begin{lemma}[homogeneity lemma]\label{lem:DetArg}
Assume that the tests $T(f,g)$ satisfy the homogeneity property. 

\medskip
{\bf (HP)}: There exists $r_0>0$ such that, for any $r>r_0$ and any $f\in F$ satisfying $\cE(f)>r$, there exists $f_r\in F$ such that
\begin{equation}\label{eq:LBT}
\cE(f_r)=r,\qquad T(f,f^*)\geqslant  T(f_r,f^*)\enspace. 
\end{equation}

Let $B:\R_+\to\R_+$ and $d:F^2\to\R$ such that $d(f,g)=-d(g,f)$. Consider, for any $r>0$, the event
 \[
 \Omega_r=\bigg\{\sup_{f\in F: \cE(f)\leqslant r}T(f^*,f)-d(f^*,f)\leqslant B(r)\bigg\}\enspace.
 \]
Assume that there exists $r_1>r_0$ such that
\begin{gather}
 \label{def:r1}B(r_1)-\inf_{f\in F:\cE(f)=r_1}d(f,f^*)\leqslant 0\enspace,
 \end{gather}
 Let $\cB\geqslant B(r_1)+\sup_{f:\cE(f)\leqslant r_1}d(f^*,f)$.
 Assume that there exists $r_2>r_0$ such that
\begin{gather}
\label{def:r2} B(r_2)-\inf_{f\in F:\cE(f)=r_2}d(f,f^*)\leqslant -\cB\enspace.
\end{gather}
On the event $\Omega_{r_1}\cap\Omega_{r_2}$, \eqref{def:FWERGen} and \eqref{def:FWSRGen} hold with $B=\cB$ and $r=r_2$.
In particular, $\P(\cE(\hat{f})\leqslant r_2)\geqslant \P(\Omega_{r_1}\cap\Omega_{r_2})$.
\end{lemma}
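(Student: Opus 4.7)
The plan is to verify the two hypotheses \eqref{def:FWERGen} and \eqref{def:FWSRGen} of Lemma~\ref{lem:FWER+FWSRimplyERGen} with $B=\cB$ and $r=r_2$ on the event $\Omega_{r_1}\cap\Omega_{r_2}$, then apply that lemma to obtain $\cE(\hat{f})\leqslant r_2$. The key technical device is the homogeneity property (HP), whose role is to reduce any bound on $\{f:\cE(f)>r\}$ to a bound at a single element $f_r$ on the ``sphere'' $\{\cE(f)=r\}$, where the event $\Omega_r$ provides control.

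For \eqref{def:FWERGen}, I would split the supremum $\sup_{g\in F}T(f^*,g)$ according to whether $\cE(g)\leqslant r_1$ or $\cE(g)>r_1$. In the first case the event $\Omega_{r_1}$ gives directly
\[
T(f^*,g)\leqslant B(r_1)+d(f^*,g)\leqslant B(r_1)+\sup_{f:\cE(f)\leqslant r_1}d(f^*,f)\leqslant \cB,
\]
by the very definition of $\cB$. In the second case I apply (HP) at level $r_1>r_0$ to find $g_{r_1}\in F$ with $\cE(g_{r_1})=r_1$ and $T(g,f^*)\geqslant T(g_{r_1},f^*)$; by the antisymmetry $T(f^*,\cdot)=-T(\cdot,f^*)$ this becomes $T(f^*,g)\leqslant T(f^*,g_{r_1})$, and since $\cE(g_{r_1})=r_1$ the first case applies to $g_{r_1}$. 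Condition \eqref{def:r1} is not actively used at this step; it merely ensures consistency of the set-up (so that $\cB$ can in particular be taken nonnegative when $d$ is).

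For \eqref{def:FWSRGen}, I would apply the same homogeneity reduction, now at level $r_2>r_0$: given any $g$ with $\cE(g)>r_2$, (HP) provides $g_{r_2}$ with $\cE(g_{r_2})=r_2$ and $T(f^*,g)\leqslant T(f^*,g_{r_2})$. On $\Omega_{r_2}$, the bound evaluated at $g_{r_2}$ reads
\[
T(f^*,g_{r_2})\leqslant B(r_2)+d(f^*,g_{r_2})=B(r_2)-d(g_{r_2},f^*)\leqslant B(r_2)-\inf_{f:\cE(f)=r_2}d(f,f^*)\leqslant -\cB,
\]
where the equality uses the antisymmetry of $d$ and the last inequality is precisely \eqref{def:r2}. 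Taking the supremum over $g$ yields $\sup_{g:\cE(g)>r_2}T(f^*,g)\leqslant -\cB$, which is what is required.

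The main conceptual step---and essentially the only delicate point---is the interplay of the two antisymmetries (of $T$ and of $d$) with (HP): (HP) is phrased in terms of $T(\cdot,f^*)$, whereas the events $\Omega_r$ and the bound defining $\cB$ are phrased in terms of $T(f^*,\cdot)$ and $d(f^*,\cdot)$, so sign-tracking is what matters most. Once \eqref{def:FWERGen} and \eqref{def:FWSRGen} are established on $\Omega_{r_1}\cap\Omega_{r_2}$, invoking Lemma~\ref{lem:FWER+FWSRimplyERGen} with $B=\cB$ and $r=r_2$ gives $\cE(\hat{f})\leqslant r_2$ on that event, from which the probabilistic statement $\P(\cE(\hat{f})\leqslant r_2)\geqslant\P(\Omega_{r_1}\cap\Omega_{r_2})$ follows at once.
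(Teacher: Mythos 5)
Your proof is correct and follows the same two-step structure as the paper's: verify \eqref{def:FWERGen} and \eqref{def:FWSRGen} on $\Omega_{r_1}\cap\Omega_{r_2}$ by splitting the supremum at level $r_1$ (resp.\ $r_2$), using (HP) together with the antisymmetry of $T$ to reduce from $\{\cE>r\}$ to the sphere $\{\cE=r\}$, and then invoking $\Omega_r$ there. The sign-tracking through the two antisymmetries is handled correctly, and the conclusion via Lemma~\ref{lem:FWER+FWSRimplyERGen} is the same.

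Your remark that \eqref{def:r1} is not actively used is accurate and is the one genuine (if small) deviation from the paper. For $\cE(g)>r_1$, the paper uses \eqref{def:r1} to obtain $T(f^*,g)\leqslant B(r_1)-\inf_{\cE(f)=r_1}d(f,f^*)\leqslant 0\leqslant\cB$, implicitly also using $\cB\geqslant 0$; you instead reduce $g$ to $g_{r_1}$ and feed $g_{r_1}$ into the bound already obtained on $\{\cE\leqslant r_1\}$, which gives $T(f^*,g_{r_1})\leqslant\cB$ directly from the definition of $\cB$. The two computations are the same expression bounded in different places, but yours dispenses with hypothesis \eqref{def:r1} and with the auxiliary fact $\cB\geqslant 0$. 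Condition \eqref{def:r1} does reappear in the applications of the lemma, where it is the equation used to calibrate $r_1$ (and hence $\cB$), so it is not idle in the larger scheme; but you are right that the proof of the lemma's conclusion goes through without it.
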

\begin{proof}
 On $\Omega_{r_1}$, by definition, for any $f\in F$ such that $\cE(f)\leqslant r_1$,
\begin{equation}\label{eq:close}
 T(f^*,f)=d(f^*,f)+(T(f^*,f)-d(f^*,f))\leqslant \cB\enspace.
\end{equation}
 Moreover, for any $r>r_0$ and any $f\in F$ such that $\cE(f)> r$, 
there exist $f_r\in F$ such that $\cE(f_r)=r$ and 
\[
T(f^*,f)\leqslant T(f^*,f_r)\enspace.
\]
 It follows that, for any $r>r_0$ and any $f\in F$ such that $\cE(f)>r$, on $\Omega_r$,
\begin{align*}
 T(f^*,f)&\leqslant -d(f_r,f^*)+(T(f^*,f_r)-d(f^*,f_r))\\
 &\leqslant -\inf_{f\in F:\cE(f)=r}\{d(f,f^*)\}+B(r)]\enspace.
\end{align*}
Hence, on $\Omega_{r_1}$, 
$T(f^*,f)\leqslant 0$ for any $f\geqslant r_1$, and by \eqref{eq:close}, \eqref{def:FWERGen} holds with $B=\cB$.
Moreover, on $\Omega_{r_2}$, for any $f\in F$ such that $\cE(f)\geqslant r_2$,
\[
T(f^*,f)\leqslant -\inf_{f\in F:\cE(f)=r_2}\{d(f,f^*)\}+B(r_2)]\leqslant -\cB\enspace.
\]
Therefore, \eqref{def:FWSRGen} holds with $r=r_2$.
\end{proof}
\begin{remark}
In some applications, the set $F$ is discrete and the requirement $\cE(f_r)=r$ may be restrictive. The interested reader can check that a direct adaptation of the proof allows to relax slightly Condition {\bf (HP)} into 

{\bf (HPr)} there exist $r_0>0$ and an absolute constant $c$ such that, for any $r>r_0$ and any $f\in F$ satisfying $\cE(f)>r$, there exists $f_r\in F$ such that
\begin{equation*}
\cE(f_r)\in[cr,r],\qquad T(f,f^*)\geqslant T(f_r,f^*)\enspace. 
\end{equation*}
Under this relaxed condition, the conclusion of Lemma~\ref{lem:DetArg} holds, with the minor modification that $\inf_{f\in F:\cE(f)=r}d(f,f^*)$ has to be replaced by $\inf_{f\in F:\cE(f)\in[cr,r]}d(f,f^*)$ in the definition of $r_1$ and $r_2$.
Remark that Lemma~\ref{lem:DetArg} is a particular instance of this extended result in the ideal case where $c=1$.
\end{remark}

\begin{remark}
The function $d$ will be $d(f,g) =P[\ell_f-\ell_g]$ in learning problems.
This function clearly satisfies the requirements $d(f,g)=-d(g,f)$.
Moreover, in this case, $d(f^*,f)\leqslant 0$ so $\cB=B(r_1)$.
\end{remark}

\subsection{Convex losses}
When working in Vapnik's learning framework, $\cE(f)$ will usually denote a distance between $f$ and $f^*$ derived from a norm $\cE(f)=\|f-f^*\|$.
In this setting, the concentration inequalities of the previous chapter allow to bound the probability of the events $\Omega_r$.
To conclude this section, we show that some assumptions of the homogeneity lemma are met when $T(f,g)=\hat{P}[\ell_f-\ell_g]$ when the process $\hat{P}$ is positive and homogeneous and when the loss $\ell$ is convex. 
\begin{lemma}[convex losses]\label{lem:ConvLoss}
 Assume that $F$ is convex and that
 \[
 \forall z\in \cZ,\qquad f\mapsto \ell_{f}(z)\text{ is convex}\enspace.
 \]
 Assume that there exists a norm $\|\cdot\|$ such that $\cE(f)=\|f-f^*\|$.
 Assume that the estimators $\hat{P}[g]$ are well defined for any real valued function $g$ and satisfy the following requirement:
\begin{itemize}
 \item[(i)] $\hat{P}$ is non-decreasing: for any $g\leqslant g'$, $\hat{P}[g]\leqslant \hat{P}[g']$,
 \item[(ii)] $\hat{P}$ is homogeneous: for any $a\in\R$, $\hat{P}[ag]=a\hat{P}[g]$,
\end{itemize}
Then the tests $T(f,g)=\hat{P}[\ell_f-\ell_g]$ satisfy the homogeneity property of Lemma~\ref{lem:DetArg} with $r_0=0$: for any $r>0$ and any $f\in F$ satisfying $\cE(f)>r$, there exists $f_r\in F$ such that
\[
\cE(f_r)=r,\qquad T(f,f^*)\geqslant T(f_r,f^*)\enspace.
\]
\end{lemma}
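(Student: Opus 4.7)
The natural candidate is to take $f_r$ on the line segment from $f^*$ to $f$, at the distance $r$ from $f^*$. Concretely, I would set $\lambda := r/\|f-f^*\| \in (0,1)$, which is well-defined since $\cE(f) = \|f-f^*\| > r > 0$, and define
\[
f_r := (1-\lambda)\, f^* + \lambda\, f.
\]
Convexity of $F$ immediately yields $f_r \in F$, and the direct computation $\|f_r - f^*\| = \lambda\,\|f-f^*\| = r$ gives $\cE(f_r) = r$, which verifies the first requirement of (HP).

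For the comparison of tests, the three hypotheses---convexity of the loss, monotonicity of $\hat{P}$, and homogeneity of $\hat{P}$---combine in sequence. First, convexity of $g\mapsto \ell_g(z)$ at every $z\in\cZ$ produces the pointwise inequality
\[
\ell_{f_r}(z) - \ell_{f^*}(z) \;\leq\; \lambda\bigl(\ell_f(z) - \ell_{f^*}(z)\bigr),
\]
obtained by subtracting $\ell_{f^*}(z)$ from $\ell_{f_r}(z) \leq (1-\lambda)\,\ell_{f^*}(z) + \lambda\,\ell_f(z)$. Then, monotonicity (i) applied to this pointwise inequality, followed by homogeneity (ii) with the nonnegative scalar $\lambda$, yields
\[
T(f_r,f^*) \;=\; \hat{P}[\ell_{f_r}-\ell_{f^*}] \;\leq\; \hat{P}\bigl[\lambda\,(\ell_f-\ell_{f^*})\bigr] \;=\; \lambda\, T(f,f^*).
\]
Since $\lambda \in (0,1)$, this bound gives $T(f,f^*) \geq T(f_r, f^*)$, which is exactly property (HP) with $r_0 = 0$.

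There is no real obstacle here; the argument is a three-line composition of the structural assumptions. The only conceptual point worth noting is that the pairing of monotonicity with homogeneity is precisely what is needed to transmit pointwise linear bounds on the losses to linear bounds on the tests---a feature that is automatic when $\hat{P}$ is linear (as for the ERM's $P_N$), and that is the minimal structure required to carry through the same interpolation argument for nonlinear robust functionals such as MOM.
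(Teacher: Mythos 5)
Your proof follows the paper's own argument step for step: you use the same interpolant $f_r$ (parametrized with $\lambda = 1/\alpha$ where the paper sets $\alpha = \cE(f)/r > 1$), the same pointwise convexity bound, and the same monotonicity-then-homogeneity application of $\hat{P}$, arriving at the identical inequality $T(f_r,f^*)\leqslant\lambda\, T(f,f^*)$. One caution on your closing sentence: that inequality together with $\lambda\in(0,1)$ yields $T(f_r,f^*)\leqslant T(f,f^*)$ only under the sign condition $T(f,f^*)\geqslant 0$, which you have not established; the paper's own proof stops at the scaled form $T(f,f^*)\geqslant\alpha\, T(f_r,f^*)$ without making the unscaled deduction, and it is that scaled form which actually feeds correctly into the proof of Lemma~\ref{lem:DetArg}, so it is safer to record the scaled inequality and stop there.
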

\begin{proof}
 Let $r>0$ and $f\in F$ satisfying $\cE(f)>r$. 
 Let $\alpha=\cE(f)/r>1$ and $f_r=f^*+\alpha^{-1}(f-f^*)=\alpha^{-1}f+(1-\alpha^{-1})f^*$. 
 By convexity of $F$, $f_r\in F$. Moreover, 
 \[
 \cE(f_r)=\|f^*-f_r\|=\bigg\|\frac{f^*-f}{\alpha}\bigg\|=\frac{\|f^*-f\|}{\alpha}=r\enspace.
 \] 
 Now, for any $z\in \cZ$, the function $\psi:u\mapsto \ell_{f^*+u}(z)-\ell_{f^*}(z)$ is convex so 
 \[
\psi(f_r-f^*)=\psi(\alpha^{-1}(f-f^*)+(1-\alpha^{-1})0)\leqslant \alpha^{-1}\psi(f-f^*)+(1-\alpha^{-1})\psi(0)\enspace.
 \]
As $\psi(0)=0$, this can be rewritten $\ell_{f}(z)-\ell_{f^*}(z)\leqslant \alpha^{-1}(\ell_{f_r}(z)-\ell_{f^*}(z))$ or,
 \[
 \ell_f-\ell_{f^*}\geqslant \alpha(\ell_{f_r}-\ell_{f^*})\enspace.
 \]
 It follows that 
 \[
 T(f,f^*)=\hat{P}[\ell_f-\ell_{f^*}]\geqslant \hat{P}[\alpha(\ell_{f_r}-\ell_{f^*})]=\alpha\hat{P}[\ell_{f_r}-\ell_{f^*}]=\alpha T(f_r,f^*)\enspace.
 \]
\end{proof}

\paragraph{Examples of operator $\hat{P}$.}
We will use repeatedly Lemma~\ref{lem:ConvLoss} when $\hat{P}$ denotes the empirical mean $P_N$ or the median-of-means operator $\MOM{K}{\cdot}$.
As both empirical means and the median satisfy conditions (i) and (ii) of  Lemma~\ref{lem:ConvLoss}, these estimators can actually safely be used when applying this lemma.
It is worth noticing though that neither smoothed median-of-means nor $M$-estimators in general satisfy the homogeneity condition (ii).

\subsection{The tests of $\rho$-estimation.}\label{sec:RhoEst}
$\rho$-estimators have been introduced in \cite{MR3595933} and further extended in \cite{BarBir2017RhoAgg2}.
The idea is to estimate a distribution $P^*$ on a measurable space $\cX$ from an i.i.d. sample $X_1,\ldots,X_N$ with common distribution $P^*$.
The risk is measured for any estimator $\hat{P}$ by the squared Hellinger distance between $P$ and $\hat{P}$: $h^2(\hat{P},P)$, where, for all distributions $P$ and $Q$, for any measure $\mu$ such that $P\ll \mu$, $Q\ll\mu$, denoting by $p=\rmd P/\rmd\mu$ and $q=\rmd P/\rmd\mu$,
\[
h^2(P,Q)=\frac12\int (\sqrt{p}-\sqrt{q})^2\rmd\mu\enspace.
\]
It is easy to check that $h^2(P,Q)$ is well defined for any $P$ and $Q$, does not depend on $\mu$ and always satisfy
\[
0\leqslant h^2(P,Q)\leqslant 1\enspace.
\]
This problem does not directly falls into Vapnik's learning framework.
Nevertheless, the homogeneity lemma may be used in this problem.
Let $\mu$ denote a measure on $\cX$ and let $F$ denote a closed convex set of densities with respect to $\mu$, that is, for any $f\in F$, $f\geqslant 0$ $\mu$-a.s. and $\int f\rmd\mu=1$. 
For any $f\in F$, let also $P_f$ denote the distribution with density $f$ w.r.t. $\mu$.
To compare elements $f$ and $g$ in $F$, Baraud and Birg\'e defined in \cite{BarBir2017RhoAgg2} the following tests:
\begin{equation}\label{eq:DefRhoTests}
T(f,g)=\sum_{i=1}^N\rho\bigg(\sqrt{\frac{g(Z_i)}{f(Z_i)}}\bigg)\enspace. 
\end{equation}
Here, the function $\rho=(x-1)/(x+1)$ is non-decreasing $[0,+\infty]\to[-1,1]$, $2$-Lipschitz, it satisfies $\rho(1/x)=-\rho(x)$ for any $x\in[0,+\infty)$.
This last property implies that
\[
T(f,g)=\sum_{i=1}^N\rho\bigg(\sqrt{\frac{g(Z_i)}{f(Z_i)}}\bigg)=-\sum_{i=1}^N\rho\bigg(\sqrt{\frac{f(Z_i)}{g(Z_i)}}\bigg)=-T(g,f)\enspace. 
\]
Hence, $T(f,g)$ are test statistics in the sense of Section~\ref{sec:Generalities}.
To conclude this chapter, we show that these test statistics satisfy the homogeneity property.

\begin{lemma}\label{lem:RhoTestAreHomogeneous}
 The $\rho$-test defined in \eqref{eq:DefRhoTests} satisfy the homogeneity property of Lemma~\ref{lem:DetArg} with the evaluation function $\cE(f)=h(P^*,P_f)$ and minimal radius $r_0=\min_{f\in F}h(P^*,P_f)$.
\end{lemma}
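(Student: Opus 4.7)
The plan is to build $f_r$ on the segment from $f^*$ to $f$ and then exploit a convexity property of the test process along that segment. Given $r \in (r_0, \cE(f))$ and $f\in F$ with $\cE(f)>r$, set $g_t = (1-t)f^* + tf$ for $t\in[0,1]$. By convexity of $F$ we have $g_t\in F$. The map $t\mapsto h(P^*,P_{g_t})$ is continuous, equals $r_0$ at $t=0$ and equals $\cE(f)>r$ at $t=1$, so the intermediate value theorem produces some $t_r\in(0,1]$ with $\cE(g_{t_r})=r$, and I take $f_r:=g_{t_r}$. This gives the first requirement $\cE(f_r)=r$ of property \textbf{(HP)}.

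The core step is to show that $\phi(t):=T(g_t,f^*)=\sum_{i=1}^N \rho\bigl(\sqrt{f^*(Z_i)/g_t(Z_i)}\bigr)$ is convex on $[0,1]$. For each $i$, the denominator $g_t(Z_i)=(1-t)f^*(Z_i)+tf(Z_i)$ is affine in $t$ and positive on the support, so convexity of $\phi$ reduces to the pointwise claim that, for each $a>0$, the function $x\mapsto \rho(\sqrt{a/x})$ is convex on $(0,\infty)$. Rewriting
\[
\rho\bigl(\sqrt{a/x}\bigr)=\frac{\sqrt{a}-\sqrt{x}}{\sqrt{a}+\sqrt{x}}=1-\frac{2\sqrt{x}}{\sqrt{a}+\sqrt{x}}
\]
and differentiating twice yields a positive second derivative (straightforward but tedious), confirming convexity. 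Hence $\phi$ is a sum of convex compositions with affine maps, thus convex.

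Since $\phi(0)=T(f^*,f^*)=0$, convexity makes $t\mapsto \phi(t)/t$ non-decreasing on $(0,1]$, or equivalently, the graph of $\phi$ lies below its chord from $(0,0)$ to $(1,\phi(1))$:
\[
\phi(t_r)\leqslant t_r\,\phi(1)=t_r\, T(f,f^*).
\]
When $T(f,f^*)\geqslant 0$, this gives $T(f_r,f^*)\leqslant t_r\,T(f,f^*)\leqslant T(f,f^*)$ (since $t_r\leqslant 1$), which is exactly the inequality required by \textbf{(HP)}.

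The main obstacle is the case $T(f,f^*)<0$: then $t_r\,T(f,f^*)>T(f,f^*)$, so the bound $\phi(t_r)\leqslant t_r\,T(f,f^*)$ no longer implies $T(f_r,f^*)\leqslant T(f,f^*)$. Handling this either requires a more delicate choice of $f_r$ off the segment, or the relaxed form of \textbf{(HP)} where the right-hand side is replaced by $t_r\, T(f,f^*)$ (which is the scaling that actually comes out of the convexity inequality and is what one would thread through the proof of Lemma~\ref{lem:DetArg} in the $\rho$-estimation application).
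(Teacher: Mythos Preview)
Your argument is essentially the paper's: the paper also places $f_r$ on the segment between $f^*$ and $f$, uses the intermediate value theorem for $t\mapsto h(P^*,P_{g_t})$, and exploits convexity of $x\mapsto \rho(\sqrt{a/x})$---the paper phrases this as convexity of $\eta_a(x)=2a/(a+\sqrt{x})$, which is your function $1-2\sqrt{x}/(\sqrt{a}+\sqrt{x})$ up to an additive constant. Both routes yield the scaled inequality $T(f_r,f^*)\leqslant t_r\,T(f,f^*)$ (equivalently $T(f,f^*)\geqslant (1-\epsilon)^{-1}T(f_\epsilon,f^*)$ in the paper's parametrisation), and the paper then simply declares that this establishes \eqref{eq:LBT}, without commenting on signs.

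The obstacle you raise when $T(f,f^*)<0$ is genuine: the scaled inequality does \emph{not} in general imply $T(f,f^*)\geqslant T(f_r,f^*)$, and one can manufacture data points where the latter fails. So the paper's proof has the same gap at the level of the literal statement of \textbf{(HP)}. Your proposed fix---carrying the factor $t_r^{-1}$ through the proof of Lemma~\ref{lem:DetArg}---is exactly right and is what makes the argument sound: in that proof, the inequality $T(f^*,f)\leqslant T(f^*,f_r)$ is only used at $r=r_1$ and $r=r_2$, and in both cases the conditions \eqref{def:r1}--\eqref{def:r2} together with $\Omega_r$ force $T(f^*,f_r)\leqslant 0$. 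For nonpositive $T(f^*,f_r)$ the scaling $t_r^{-1}>1$ only strengthens the bound, so $T(f^*,f)\leqslant t_r^{-1}T(f^*,f_r)\leqslant T(f^*,f_r)$ and the rest of the argument goes through unchanged. In short: you have reproduced the paper's proof and, in addition, spotted and correctly repaired a subtlety the paper glosses over.
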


\begin{proof}
 Let $f^*\in F$ denote an oracle, that is a function such that $r_0= \cE(f^*)$. For any $r>r_0$, any $f\in F$ such that $\cE(f)>r$ and any $\epsilon\in(0,1)$, let 
 \[
 f_{\epsilon}=\epsilon f^*+(1-\epsilon)f\enspace.
 \]
By convexity of $F$, $f_\epsilon\in F$.
Moreover, if $P^*$ is absolutely continuous with respect to $\mu$ (otherwise, one can change $\mu$ to $\mu+P^*$), and denoting by $p^*$ its density,
\begin{align*}
 \cE(f_{\epsilon})=\frac12\int (\sqrt{p^*}-\sqrt{\epsilon f^*+(1-\epsilon)f})^2\rmd\mu\enspace.
\end{align*}
The map $\epsilon\mapsto \cE(f_{\epsilon})$ is continuous and takes value $\cE(f)>r$ when $\epsilon=0$, $\cE(f^*)<r$ when $\epsilon=1$.
Therefore, there exists $\epsilon\in (0,1)$ such that $\cE(f_\epsilon)=r$.
 Elementary calculus shows that, for any $a\geqslant0$, the functions $\eta_a=(2a)/(a+\sqrt{x})$ are convex.
Therefore, for any $x\in \cX$,
\begin{align*}
 \rho\bigg(\sqrt{\frac{f^*(x)}{f_\epsilon(x)}}\bigg)&=\frac{\sqrt{\frac{f^*(x)}{f_\epsilon(x)}}-1}{\sqrt{\frac{f^*(x)}{f_\epsilon(x)}}+1}=\frac{\sqrt{f^*(x)}-\sqrt{f_\epsilon(x)}}{\sqrt{f^*(x)}+\sqrt{f_\epsilon(x)}}\\
 &=\frac{2\sqrt{f^*(x)}}{\sqrt{f^*(x)}+\sqrt{f_\epsilon(x)}}-1\\
 &=\eta_{\sqrt{f^*(x)}}(\epsilon f^*(x)+(1-\epsilon)f(x))-1\\
 &\leqslant \epsilon\eta_{\sqrt{f^*(x)}}( f^*(x))+(1-\epsilon)\eta_{\sqrt{f^*(x)}}(f(x))-1\\
 &=(1-\epsilon)(\eta_{\sqrt{f^*(x)}}(f(x))-1)=(1-\epsilon)\rho\bigg(\sqrt{\frac{f^*(x)}{f(x)}}\bigg)\enspace.
\end{align*}
It follows that
\[
T(f,f^*)=\sum_{i=1}^N\rho\bigg(\sqrt{\frac{f^*(X_i)}{f(X_i)}}\bigg)\geqslant \frac1{1-\epsilon}\sum_{i=1}^N\rho\bigg(\sqrt{\frac{f^*(X_i)}{f_\epsilon(X_i)}}\bigg)=\frac1{1-\epsilon}T(f_\epsilon,f^*)\enspace.
\]
In words, $T$ satisfies Eq~\eqref{eq:LBT}.
\end{proof}

\section{Back to multivariate mean estimation.}\label{sec:MMEst}
As a first example of application of the freshly introduced general methodology, let us go back to the problem of estimating a multivariate expectation discussed in Chapter~\ref{Chap:MME}.

Recall that $\norm{\cdot}$ denotes the Euclidean norm on $\cZ=\R^d$ and $\cP_2$ denote the set of distributions on $\R^d$ such that $P[\norm{X}^2]<\infty$. 
For any $P\in \cP_2$, let $f^*_P=PX\in \R^d$ and $\Sigma_P=P[(X-\mu_P)(X-\mu_P)^T]\in \R^{d\times d}$. 
Recall that estimating $\bayes_P$ is a learning problem that falls into Vapnik's framework: let $F=\R^d$ and $\ell_f(z)=\|z-f\|^2$. 
Then, the quadratic loss satisfies the quadratic/multiplier decomposition:
\[
\forall f,g\in F, \forall z\in \cZ,\qquad \ell_f(z)-\ell_{g}(z)=-2(f-g)^T(z-g)+\|f-g\|^2\enspace.
\]
In particular, 
\begin{equation}\label{eq:ERMeanEst1}
\ell_f(Z)-\ell_{\bayes_P}(Z)=-2(f-\bayes_P)^T(Z-\bayes_P)+\|f-\bayes_P\|^2,
\end{equation}
so
\begin{equation}\label{eq:ERMeanEst}
 P[\ell_f-\ell_{f^*_P}]=\|f-f^*_P\|^2\enspace.
\end{equation}
Therefore,
\[
\{f^*_P\}=\argmin_{f\in F}P\ell_f\enspace.
\]
The loss satisfies the convexity assumption in Lemma~\ref{lem:ConvLoss}.
Moreover, as discussed after this lemma, the empirical mean $P_N$ or MOM processes $\MOM{K}{\cdot}$ satisfy conditions (i) and (ii) on the mean estimators $\hat{P}$.
It follows that Lemma~\ref{lem:ConvLoss} applies to the tests $T(f,g)=\hat{P}[\ell_f-\ell_g]$. 
In particular, these tests satisfy the homogeneity property in the homogeneity lemma. 

To deduce risk bounds for the associated minmax estimators, it remains to compute the function $B$ for a choice of evaluation function $\cE$ and pseudo-distance function $d$ in the homogeneity lemma. 
By \eqref{eq:ERMeanEst}, $d(f,f^*_P):=P[\ell_f-\ell_{f^*_P}]=\|f-f^*_P\|^2$.
Pick $\cE(f)=\|f-f_P^*\|$ so, for any $r>0$, $\inf_{f\in F:\cE(f)=r}d(f,f^*_P)=r^2$.
It follows therefore from \eqref{eq:ERMeanEst1} that 
\begin{align*}
T(f^*_P,f)-d(f,f^*_P)&=\hat{P}[2(f-\bayes_P)^T(Z-\bayes_P)-\|f-\bayes_P\|^2]+\|f-f^*_P\|^2\enspace. 
\end{align*}
Therefore by homogeneity and translation invariance of $\hat{P}$: for any function $g$ and any $b\in \R$, $\hat{P}[g+b]=\hat{P}[g]+b$,
\begin{align}
\notag T(f^*_P,f)-d(f,f^*_P)&=2\|f-f^*_P\|\hat{P}\bigg[\bigg(\frac{f-f^*_P}{\|f-f^*_P\|}\bigg)^T(Z-f^*_P)\bigg]\\
\label{eq:ERMeanEst3}&\leqslant 2\|f-f^*_P\|R\enspace,
\end{align}
where 
\begin{equation}\label{eq:defR}
R=\sup_{\bu\in\bS}\hat{P}[\bu^T(Z-f^*_P)]\enspace,
\end{equation}
where $\bS=\{\bu\in\R^d:\|\bu\|=1\}$.

\subsection{ERM in the Gaussian case}
Start with an application to the Gaussian case.
The purpose here is to show that one can recover Hanson-Wright result (up to constants) using our general methodology.
\begin{theorem}[ERM]
If $Z$ is Gaussian, the ERM $\hat{f}^{\text{ERM}}=N^{-1}\sum_{i=1}^NZ_i$ satisfies
 \[
 \forall s>0,\qquad \P\bigg(\|\hat{f}^{\text{ERM}}-f_P^*\|>(1+\sqrt{5})\bigg(\frac{\sqrt{\text{Tr}(\Sigma)}+\sqrt{2\|\Sigma\|_{\text{op}}s}}{\sqrt{N}}\bigg)\bigg)\leqslant e^{-s}\enspace.
 \]
\end{theorem}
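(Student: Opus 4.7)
The strategy is to instantiate the general minmax methodology of this chapter with tests $T(f,g) = P_N[\ell_f - \ell_g]$. Since $P_N$ is linear, $\sup_g P_N[\ell_f - \ell_g] = P_N \ell_f - \inf_g P_N \ell_g$, so the associated minmax estimator coincides with $\hat{f}^{\text{ERM}}$. I pick $\cE(f) = \|f - f^*_P\|$ and $d(f,g) = P[\ell_f - \ell_g]$, which by \eqref{eq:ERMeanEst} gives $\inf_{\cE(f) = r} d(f, f^*_P) = r^2$ and $d(f^*_P, f) \leqslant 0$. Convexity of the quadratic loss together with linearity (hence monotonicity and homogeneity) of $P_N$ allows me to apply Lemma~\ref{lem:ConvLoss} to obtain the homogeneity property \textbf{(HP)} of Lemma~\ref{lem:DetArg} with $r_0 = 0$.

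Next I identify the function $B(r)$. Using the quadratic/multiplier decomposition \eqref{eq:ERMeanEst1} and linearity of $P_N$, the computation carried out in Section~\ref{sec:MMEst} gives
\[
T(f^*_P, f) - d(f^*_P, f) \;=\; 2 (f - f^*_P)^T (P_N Z - f^*_P) \;\leqslant\; 2 \|f - f^*_P\|\, R,
\]
with $R = \|P_N Z - f^*_P\| = \sup_{\bu \in \bS} P_N[\bu^T(Z - f^*_P)]$. Hanson-Wright (Theorem~\ref{thm:HW}) yields, with probability at least $1 - e^{-s}$, the bound $R \leqslant r^*$ where
\[
r^* \;:=\; \frac{\sqrt{\text{Tr}(\Sigma)} + \sqrt{2 \|\Sigma\|_{\text{op}} s}}{\sqrt{N}}.
\]
On this event, every $\Omega_r$ appearing in Lemma~\ref{lem:DetArg} holds with the choice $B(r) := 2 r\, r^*$.

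It remains to tune the radii. Condition \eqref{def:r1} reads $2 r_1 r^* \leqslant r_1^2$, so $r_1 := 2 r^*$ works; since $d(f^*_P, f) \leqslant 0$, one may then take $\cB := B(r_1) = 4 (r^*)^2$. Condition \eqref{def:r2} becomes $r_2^2 - 2 r_2 r^* - 4 (r^*)^2 \geqslant 0$, whose smallest positive root is $r_2 = (1 + \sqrt{5})\, r^*$. Lemma~\ref{lem:DetArg} then delivers $\|\hat{f}^{\text{ERM}} - f^*_P\| \leqslant r_2 = (1 + \sqrt{5})\, r^*$ with probability at least $1 - e^{-s}$, which is exactly the stated inequality. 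There is no real obstacle; every ingredient is already in place. The only noteworthy point is that the constant $1 + \sqrt{5}$ is slightly worse than the sharp constant $1$ that a direct application of Hanson-Wright would give (since $R$ is literally $\|\hat{f}^{\text{ERM}} - f^*_P\|$). This illustrates the constant one pays for separating ``drift'' from ``noise'' via the homogeneity argument, a price worth paying because the same proof extends verbatim to non-linear choices of $\hat{P}$ such as median-of-means.
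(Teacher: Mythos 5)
Your proof is correct and follows exactly the same route as the paper's: compute $B(r)=2r r_s$ from the quadratic/multiplier decomposition, control $R$ via Hanson--Wright, then solve the two quadratics in $r$ to get $r_1=2r_s$ and $r_2=(1+\sqrt{5})r_s$. Your closing observation that $R$ literally equals $\|\hat{f}^{\text{ERM}}-f^*_P\|$ for the linear operator $P_N$, so the factor $1+\sqrt{5}$ is the exact price of decoupling drift from noise via the homogeneity lemma, is a nice and correct remark worth keeping.
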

\begin{proof}
The proof of Theorem~\ref{thm:HW} shows that, with probability at least $1-e^{-s}$, the random variable $R$ defined in \eqref{eq:defR} with $\hat{P}=P_N$ satisfies $R\leqslant r_s$, where
\begin{equation}\label{eq:ConcSupGauss}
r_s=\sqrt{\frac{\text{Tr}(\Sigma_P)}N}+\sqrt{\frac{2\|\Sigma_P\|_{\text{op}}s}N}\enspace. 
\end{equation}
On the event $\Omega_{\text{good}}=\{R\leqslant r_s\}$, it follows from \eqref{eq:ERMeanEst3} that, choosing $B(r)=2r_sr$, all events $\{\Omega_r,r>0\}$, where $\Omega_r$ is defined in Lemma~\ref{lem:DetArg} hold simultaneously.
Recall that the choice of $\cE(f)=\|f-f^*_P\|$ and $d(f,g)=P[\ell_f-\ell_g]$ imply that 
\[
\inf_{f:\cE(f)=r}P[\ell_f-\ell_{f^*_P}]=\inf_{f:\|f-f_P^*\|=r}\|f-f^*_P\|^2=r^2\enspace.
\]
Therefore, Condition \eqref{def:r1} defining $r_1$ in Lemma~\ref{lem:DetArg} is satisfied for $r_1$ the largest solution of 
\[
2r_sr-r^2=0,\qquad \text{i.e. for}\qquad r_1=2r_s\enspace.
\]
This gives $B(r_1)=4r_s^2$, thus, Condition~\eqref{def:r2} defining $r_2$ in Lemma~\ref{lem:DetArg} is satisfied for $r_2$ the largest solution of 
\[
2r_sr-r^2=-4r_s^2,\qquad \text{i.e. for}\qquad r_2=(1+\sqrt{5})r_s\enspace.
\]
The theorem follows from the homogeneity lemma.
\end{proof}

\subsection{Minmax MOM estimators}
This section shows that the general methodology can easily be used to analyse minmax MOM estimators also.
As for Hanson-Wright result, the result obtained via a direct approach can be recovered from the general principles.
\begin{theorem}\label{thm:minmaxMOMBasic}
 Assume that $P\in \cP_2$ then the minmax MOM estimator
 \[
 \hat{f}_K\in\argmin_{f\in \R^d}\sup_{g\in \R^d}\MOM{K}{\|X-f\|^2-\|X-g\|^2}
 \]
 satisfies, 
 \[
 \P\bigg(\|\hat{f}_K-f^*_P\|>(1+\sqrt{5})\bigg(128\sqrt{\frac{\text{Tr}(\Sigma)}N}\vee 4\sqrt{\frac{2\|\Sigma\|_{\text{op}}K}{ N}}\bigg)\bigg)\leqslant e^{-K/32}\enspace.
 \]
\end{theorem}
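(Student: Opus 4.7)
The plan is to mimic exactly the argument given for the ERM in the Gaussian case, replacing the empirical mean $P_N$ with the MOM operator $\MOM{K}{\cdot}$ and invoking the uniform deviation bound for suprema of MOM processes instead of Borell's Gaussian concentration. More precisely, I would set $\hat{P}[\cdot]=\MOM{K}{\cdot}$ in Section~\ref{sec:MMEst} and use the test statistics $T(f,g)=\MOM{K}{\ell_f-\ell_g}$ with quadratic loss $\ell_f(z)=\|z-f\|^2$. Since $\MOM{K}{\cdot}$ is non-decreasing, homogeneous, and translation-invariant, Lemma~\ref{lem:ConvLoss} applies, and therefore the tests $T(f,g)$ satisfy the homogeneity property \textbf{(HP)} of Lemma~\ref{lem:DetArg}. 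The choice $\cE(f)=\|f-f^*_P\|$ and $d(f,g)=P[\ell_f-\ell_g]$ yields, as in \eqref{eq:ERMeanEst3},
\[
T(f^*_P,f)-d(f,f^*_P)\leqslant 2\|f-f^*_P\|\,R,\qquad R=\sup_{\bu\in\bS}\MOM{K}{\bu^T(Z-f^*_P)}\enspace,
\]
and, as in the ERM case, $\inf_{f:\cE(f)=r}d(f,f^*_P)=r^2$.

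The main quantitative ingredient is the control of $R$. Here I would directly invoke Theorem~\ref{thm:ConcSupMOM} applied to the class $F=\{\bu^T(\cdot-f^*_P),\ \bu\in\bS\}$: centering changes nothing, the Rademacher complexity satisfies $D(F)\leqslant \text{Tr}(\Sigma_P)$ by \eqref{eq:D(F)Lin}, and $\sup_{\bu\in\bS}\text{Var}(\bu^T Z)=\|\Sigma_P\|_{\text{op}}$. Thus with probability at least $1-e^{-K/32}$ one has $R\leqslant r_K$, where
\[
r_K=128\sqrt{\frac{\text{Tr}(\Sigma_P)}{N}}\vee 4\sqrt{\frac{2\|\Sigma_P\|_{\text{op}}K}{N}}\enspace.
\]
On this event, all events $\Omega_r$ of Lemma~\ref{lem:DetArg} hold simultaneously with $B(r)=2r_K r$.

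Finally, I would solve the two radius equations required by the homogeneity lemma. Condition \eqref{def:r1} becomes $2r_K r_1-r_1^2\leqslant 0$, giving $r_1=2r_K$ and hence $\cB=B(r_1)=4r_K^2$ (noting $d(f^*_P,f)\leqslant 0$ because $f^*_P$ is the minimizer of $d(\cdot,f^*_P)\ldots$ actually $d(f,f^*_P)\geqslant 0$ so $d(f^*_P,f)\leqslant 0$, giving $\cB=B(r_1)$). Condition \eqref{def:r2} becomes $2r_K r_2-r_2^2\leqslant -4r_K^2$, whose largest root is $r_2=(1+\sqrt{5})r_K$. Lemma~\ref{lem:DetArg} then delivers $\cE(\hat f_K)=\|\hat f_K-f^*_P\|\leqslant (1+\sqrt{5})r_K$ on the event $\{R\leqslant r_K\}$, which has probability at least $1-e^{-K/32}$, precisely the claimed bound.

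There is really no substantial obstacle: the homogeneity lemma and the MOM concentration result have already done all the work, and the proof is essentially a verification that the hypotheses of Lemma~\ref{lem:DetArg} are met and a reuse of the arithmetic from the ERM case. The only point requiring minor care is checking that $\MOM{K}{\cdot}$ verifies conditions (i)--(ii) of Lemma~\ref{lem:ConvLoss} and the translation invariance used to pass from $T(f^*_P,f)-d(f,f^*_P)$ to $2\|f-f^*_P\|R$; both properties are immediate from the definition of the median of block means.
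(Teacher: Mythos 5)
Your proof is correct and follows essentially the same route as the paper's own proof: reduce to the homogeneity lemma with $\cE(f)=\|f-f^*_P\|$, $d(f,g)=P[\ell_f-\ell_g]$ and $B(r)=2r_Kr$, control $R=\sup_{\bu\in\bS}\MOM{K}{\bu^T(Z-f^*_P)}$ by the concentration bound for suprema of MOM processes over linear functionals, and solve the two quadratic radius equations. The only cosmetic difference is that you invoke Theorem~\ref{thm:ConcSupMOM} directly and redo the computation $D(F)\leqslant\text{Tr}(\Sigma)$, whereas the paper cites the already-packaged Eq.~\eqref{eq:concboundMOMMeanRd} of Theorem~\ref{thm:MOMExpRd}.
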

\begin{proof}
From Eq~\eqref{eq:concboundMOMMeanRd} in Theorem~\ref{thm:MOMExpRd}, with probability $1-e^{-K/32}$, the random variable $R$ defined in Eq~\eqref{eq:defR} satisfies $R\leqslant r_K$, where
\[
r_K=128\sqrt{\frac{\text{Tr}(\Sigma_P)}N}\vee 4\sqrt{\frac{2\|\Sigma_P\|_{\text{op}}K}N}\enspace.
\]
On the event $\Omega_{\text{good}}=\{R\leqslant r_K\}$, it follows from \eqref{eq:ERMeanEst3} that, choosing $B(r)=2r_Kr$, all events $\{\Omega_r,r>0\}$, where $\Omega_r$ is defined in Lemma~\ref{lem:DetArg} hold simultaneously.
Recall that the choice of $\cE$ and $d$ imply that 
\[
\inf_{f:\cE(f)=r}d(f,f^*_P)=r^2\enspace.
\]
Therefore, Condition \eqref{def:r1} defining $r_1$ in Lemma~\ref{lem:DetArg} is satisfied for $r_1$ solution of 
\[
2r_Kr-r^2=0,\qquad \text{i.e. for}\qquad r_1=2r_K\enspace.
\]
This gives $B(r_1)=4r_K^2$, thus, Condition~\eqref{def:r2} defining $r_2$ in Lemma~\ref{lem:DetArg} is satisfied for $r_2$ solution of 
\[
2r_Kr-r^2=-4r_K^2,\qquad \text{i.e. for}\qquad r_2=(1+\sqrt{5})r_K\enspace.
\]
The theorem follows from the homogeneity lemma.
\end{proof}


\chapter{Learning from Lipschitz-convex losses}\label{Chap:LipConvLoss}
This chapter presents results that have been proved in \cite{Chinot2018robust}.
Following \cite{pierre2017estimation}, we first investigate the ERM in a general statistical learning setting where the loss function is assumed to be both convex and Lipschtiz in its first variable, see Assumption~\eqref{cdt:LipConv}.
This setting include several losses that have been considered for convex relaxation of the $0-1$ loss in classification as the hinge loss that is used in the SVM algorithm and the logistic loss that is used in the Boosting algorithm.
It also includes classical losses in robust regression as the famous Huber's loss.
This analysis is conducted under sub-Gaussian assumption on the design $X$.
We also provide an analysis of minmax MOM estimators which holds under moment conditions only on the design.

\section{General setting}

Consider the supervized learning framework where one observes a dataset $\cD_N=(Z_1,\ldots,Z_N)$ of random variables taking values in a measurable space $\cZ$.
The space $\cZ$ is a product space $\cZ=\cX\times \cY$ and a data $z\in \cZ$ is a couple $z=(x,y)$, where $x$, called the input, takes values in a measurable space and $y$, called the output, takes value in $\cY\subset\R$.
The goal is to predict the value of the output $Y$ from the input $X$ when $Z=(X,Y)$ is drawn from $P$, independently of $\cD_N$.
The parameters $f\in F$ are functions $f:\cX\to\R$ and the loss function $\ell_f(z)$ takes the form $\ell_f(z)=c(f(x),y)$ for some \emph{cost} function $c$ measuring the accuracy of the prediction of $y$ by $f(x)$.

All along the chapter, the function $c$ is defined on ${\bar \cY}\times\cY$, where $\cY\subset {\bar \cY}\subset \R$ is a convex set containing all possible values of $f(x)$ for $f\in F$ and $x\in \cX$, $F$ is a convex set of functions and the following assumption always holds.
\begin{equation}\label{cdt:LipConv}
\exists L>0:\ \forall y\in \cY,\qquad c(\cdot,y)\text{ is convex and $L$-Lipschitz}\enspace.
\end{equation}

\section{Examples of loss functions}
Before analysing estimators based on these losses, we proceed to give a few examples of problems in machine learning where Condition \eqref{cdt:LipConv} is met.

\paragraph{Huber regression}
Let $\alpha>0$, the Huber function is defined by 
\[
h_{\alpha}(x)=
\begin{cases}
 \frac{x^2}2&\text{ if } x\leqslant \alpha\enspace,\\
 \alpha|x|-\frac{\alpha^2}2&\text{ if } x> \alpha\enspace.
\end{cases}
\]
This function is convex and continuously differentiable, with derivative bounded by $\alpha$. 
It interpolates between quadratic function $x\mapsto x^2/2$ and absolute value $x\mapsto |x|$. 
In the 1960's, to build robust alternatives to least-squares minimizers, Huber proposed to estimate the regression function by 
\[
\hat{f}_{\text{Hub},\alpha}\in \argmin_{f\in F}\sum_{i=1}^Nh_\alpha(f(x)-y)\enspace.
\]
This estimator typically interpolates between the (unbiased but non robust) least-squares estimator that would be obtained for the function $h(x)=x^2/2$ and the (robust but biased) empirical median that would be obtained for the function $h(x)=|x|$.
It transpires from this definition that $\hat{f}_{\text{Hub},\alpha}$ is the ERM associated to the loss function $\ell_f(x,y)=c(f(x),y)$, with $c(u,y)=h_\alpha(u-y)$.
In this case, for any subsets $\cY\subset {\bar \cY}= \R$, this cost function satisfies Assumption~\eqref{cdt:LipConv} with $L=\alpha$.

\paragraph{Logistic regression}
Here $\cY=\{-1,1\}$. 
The most classical loss in classification is the $0-1$ loss defined by ${\bf 1}_{\{y\ne f(x)\}}$, which is used in the work of Vapnik for example. 
The problem with this loss is that the minimization problem defining the ERM 
\[
\hat{f}\in \argmin_{f\in F}\sum_{i=1}^N{\bf 1}_{\{Y_i\ne f(X_i)\}}
\] 
is at best computationally demanding, and cannot even be solved in most interesting cases. 
The problem is that neither $F$ nor the function $f\mapsto P_N\ell_{f}$ are convex. 
To bypass this issue, several convex surrogates to the $0-1$ loss have been considered. 
Logistic loss is among the most famous.
Define the logistic function
\begin{equation}\label{def:LogLoss}
 \cL(u)=\log_2(1+e^{u})\enspace.
\end{equation}
The logistic function $\cL$ is convex, non-increasing and $L$-Lipschitz with $L=1/\log(2)$. 
It is used to define the logistic loss $\ell_f(x,y)=\cL(-yf(x))$.
This loss has the form $c(f(x),y)$, with 
\[
c(u,y)=\cL(-yu)\enspace.
\]
It is clear that $c(\cdot,y)$ satisfies Assumption~\ref{cdt:LipConv} with $L=1/\log 2$.

\paragraph{Hinge loss}
As in the previous example $\cY=\{-1,1\}$. 
The hinge loss is another convex surrogate to the $0-1$ loss, which is used for example in the SVM algorithms.
Define the hinge function 
\begin{equation}\label{def:HingeLoss}
 H(u)=(1+u)_+,\qquad \text{where}\qquad \forall x\in \R,\ x_+=\max(x,0)\enspace.
\end{equation}
The hinge function defines the hinge loss $\ell_f(x,y)=H(-yf(x))$. 
This loss has the form $c(f(x),y)$ with $c(u,y)=H(-uy)$.
It satisfies Assumption \eqref{cdt:LipConv} with $L=1$.

\section{Examples of classes of functions}
This section presents three classes of functions $F$.
\subsection{SVM}
Recall the definition of reproducing kernel Hilbert spaces.
\begin{definition}
 Let $W$ denote a Hilbert space of functions $f:\cX\to{\bar \cY}$, with $\cX$ separable and endowed with a continuous function $K:\cX^2\to\R$ such that
\begin{itemize}
 \item[(i)] $K$ is symmetric $K(x,x')=K(x',x)$, for any $x,x'\in \cX$,
 \item[(ii)] for any $x\in \cX$, $K(x,\cdot)\in W$,
 \item[(iii)] for any $f\in W$ and any $x\in \cX$, $\psh{f}{K(x,\cdot)}_W=f(x)$.
\end{itemize}
The space $W$ is called reproducing kernel Hilbert space (RKHS) with kernel $K$.
\end{definition}

\noindent
Let $W$ denote a RKHS with kernel $K$, $\cD_N=((X_1,Y_1),\ldots,(X_N,Y_N))$ and 
\[
F=\big\{f\in W:\|f\|_{W}\leqslant \theta\big\}\enspace.
\]
The class $F$ is used in the SVM algorithm.
Let $\ell_f$ denote the hinge loss: $\ell_f(z)=H(-yf(x))$ (the function $H$ being defined in \eqref{def:HingeLoss}). 
The support vector machine (SVM) estimator is defined as 
\begin{equation}\label{def:SVM}
\hat{f}_{\text{svm}}\in\argmin_{f\in F}P_N\ell_f\enspace. 
\end{equation}
The SVM estimator $\hat{f}_{\text{svm}}$ is an ERM based on a convex and Lipschitz loss.
SVM algorithm \eqref{def:SVM} can be equivalently defined as a solution of the minmax problem:
if $T_{\text{emp}}(f,g)=P_N[\ell_f-\ell_g]$ denotes the usual empirical test, then
\[
\hat{f}_{\text{svm}}\in \argmin_{f\in F}P_N\ell_f=\argmin_{f\in F}\sup_{f\in F}T_{\text{emp}}(f,g)\enspace.
\]
A natural alternative to SVM would therefore be the MOM SVM estimators: if $T_{\text{mom}}(f,g)=\MOM{K}{\ell_f-\ell_g}$ denotes the MOM tests,
\begin{equation}\label{def:MOMSVM}
\hat{f}_{\text{msvm}}\in \argmin_{f\in F}\sup_{f\in F}T_{\text{mom}}(f,g)\enspace. 
\end{equation}
%

\paragraph{Computational issues}
To actually compute the SVM estimator, the representer theorem shows that SVM equivalently solves $\min_{f\in F_0}P_N\ell_f$, where 
\[
F_0=\bigg\{\ba^T\bK,\ :\ba^T\K\ba\leqslant \theta^2\bigg\},\qquad \bK(x)=\begin{bmatrix}
 K(X_1,x)\\
 \vdots\\
 K(X_N,x)
\end{bmatrix}\enspace.
\]
Here, $\K$ denotes the (random) $N\times N$ matrix with entries $K(X_i,X_j)$.
Likewise, for computational issues, the representer theorem can be used to show that
\[
\hat{f}_{\text{momSVM}}\in \argmin_{f\in F_0}\sup_{f\in F_0}T_{\text{mom}}(f,g)\enspace.
\]

\subsection{Boosting}
Let $f_1,\ldots,f_d$ denote functions $f_i:\cX\to{\bar \cY}$ and let $\Delta_d$ denote the simplex in $\R^d$:
\[
\Delta_d=\bigg\{\ba\in\R^d_+:  \sum_{i=1}^da_i=1\bigg\}\enspace.
\]
The Boosting estimator is defined as
\begin{equation}\label{def:Boost}
 \hat{f}_{\text{Boost}}=\widehat{\ba}_b^T\bff,\qquad \text{where}\qquad \widehat{\ba}_b\in\argmin_{\ba\in \Delta_d}P_N\ell_{\ba},\ \bff(x)=
\begin{bmatrix}
 f_1(x)\\
 \vdots\\
 f_d(x)
\end{bmatrix}\enspace.
\end{equation}
Here, pick $\varphi\in\{\cL,H\}$ where the hinge function $H$ and the logistic function $\cL$ have been defined respectively in \eqref{def:HingeLoss} and \eqref{def:LogLoss} and define
\[
 \ell_{\ba}(z)=\varphi(-y\ba^T\bff(x))
\enspace.
\]
Clearly $\widehat{\ba}_b$ is an ERM based on Lipschitz and convex losses $\ell_{\ba}$. 
Alternatively, one can consider MOM Boosting estimators, simply by considering 
\begin{equation}\label{def:MOMBoost}
 \hat{f}_{\text{mBoost}}=\widehat{\ba}_{\text{mb}}^T\bff,\qquad \text{where}\qquad \widehat{\ba}_{\text{mb}}\in\argmin_{\ba\in \Delta_d}\sup_{\bfb\in\Delta_d}T_{\text{mom}}(\ba,\bfb)\enspace.
\end{equation}
%

\section{Non-localized bounds}
Start with a lemma extending Vapnik's bound for ERM.
Recall that this elementary upper bound states that
\[
P[\ell_{\hat{f}_{\text{erm}}}-\ell_{f^*}]\leqslant 2\sup_{f\in F}|(P_N-P)\ell_f|\enspace.
\]
This comes from the following fact.
\begin{lemma}\label{lem:VapMin}
 Let $\hat{P}$ denote any estimator of the operator $P$ and let 
 \[
\hat{f}\in\argmin_{f\in F}\hat{P}\ell_f\enspace.
\]
Then,
\[
P[\ell_{\hat{f}}-\ell_{f^*}]\leqslant 2\sup_{f\in F}|(\hat{P}-P)\ell_f|\enspace.
\]
\end{lemma}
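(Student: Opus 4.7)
The plan is to mimic Vapnik's classical argument for the ERM, replacing $P_N$ by the generic operator $\hat{P}$. The starting observation is that the quantity we want to bound, $P[\ell_{\hat{f}} - \ell_{f^*}]$, compares $\hat{f}$ to $f^*$ under the true measure $P$, whereas $\hat{f}$ is defined by a minimization under $\hat{P}$. The natural move is therefore to insert $\hat{P}[\ell_{\hat{f}} - \ell_{f^*}]$ to bridge the two, and then exploit the optimality of $\hat{f}$ to discard that bridging term.

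Concretely, I would write the telescoping identity
\[
P[\ell_{\hat{f}} - \ell_{f^*}] = (P - \hat{P})[\ell_{\hat{f}}] + \hat{P}[\ell_{\hat{f}} - \ell_{f^*}] + (\hat{P} - P)[\ell_{f^*}].
\]
The middle term is nonpositive by the definition of $\hat{f}$ as a minimizer of $f \mapsto \hat{P}\ell_f$ over $F$ (and $f^* \in F$), so it can be dropped. The remaining two terms are each of the form $(\hat{P} - P)\ell_f$ evaluated at a function in $F$, and hence are bounded in absolute value by $\sup_{f\in F}|(\hat{P} - P)\ell_f|$. Summing these two bounds yields the factor $2$ and concludes the lemma.

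There is essentially no obstacle: the proof is purely algebraic, relying only on the defining property of $\hat{f}$ and the triangle inequality applied to a one-line decomposition. The only point worth flagging is that the argument does not use convexity or Lipschitzness of $\ell$ anywhere, nor any probabilistic structure of $\hat{P}$: it is a deterministic statement valid for any operator $\hat{P}$ acting on the relevant loss functions, which is precisely why the lemma is stated in this general form and will later be instantiated with $\hat{P} = P_N$ or $\hat{P} = \MOM{K}{\cdot}$ to derive risk bounds from uniform deviation estimates on the class $\{\ell_f : f \in F\}$.
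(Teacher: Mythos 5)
Your proof is correct and matches the paper's argument exactly: the same three-term decomposition, dropping the nonpositive $\hat{P}[\ell_{\hat{f}}-\ell_{f^*}]$ by optimality of $\hat{f}$, and bounding the remaining two terms by the supremum.
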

\begin{proof}
\begin{align*}
 P[\ell_{\hat{f}}-\ell_{f^*}]&=(P-\hat{P})\ell_{\hat{f}}+(\hat{P}-P)\ell_{\bayes}+[\hat{P}\ell_{\hat{f}}-\hat{P}\ell_{\bayes}]\enspace.
\end{align*}
The third term is non-positive by definition of $\hat{f}$ while the two first terms are upper bounded by $\sup_{f\in F}|(\hat{P}-P)\ell_f|$. 
\end{proof}

The following lemma extends this bound for minmax estimators.
\begin{lemma}\label{lem:Rough}
Let $\hat{f}$ denote a minmax estimator:
\[
\hat{f}\in\argmin_{f\in F}\sup_{g\in F}\hat{P}[\ell_f-\ell_g]\enspace.
\]
Then, almost surely,
\begin{align*}
 P[\ell_{\hat{f}}-\ell_{f^*}]&\leqslant 2\sup_{f\in F}(\hat{P}-P)[\ell_{f^*}-\ell_{f}]\enspace.
\end{align*}
\end{lemma}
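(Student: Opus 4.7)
The plan is to mimic the classical Vapnik argument used for Lemma~\ref{lem:VapMin} but now exploit the minmax definition of $\hat f$ in place of the direct ERM inequality. Write
\[
P[\ell_{\hat f}-\ell_{f^*}]=(P-\hat P)[\ell_{\hat f}-\ell_{f^*}]+\hat P[\ell_{\hat f}-\ell_{f^*}]\enspace,
\]
and bound each term by $\sup_{f\in F}(\hat P-P)[\ell_{f^*}-\ell_f]$. The first piece is immediate since $(P-\hat P)[\ell_{\hat f}-\ell_{f^*}]=(\hat P-P)[\ell_{f^*}-\ell_{\hat f}]\leqslant \sup_{f\in F}(\hat P-P)[\ell_{f^*}-\ell_f]$ because $\hat f\in F$.

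For the second piece, I would use that $f^*\in F$ so $\hat P[\ell_{\hat f}-\ell_{f^*}]\leqslant \sup_{g\in F}\hat P[\ell_{\hat f}-\ell_g]$, and then the minmax defining property of $\hat f$ gives
\[
\sup_{g\in F}\hat P[\ell_{\hat f}-\ell_g]\leqslant \sup_{g\in F}\hat P[\ell_{f^*}-\ell_g]\enspace.
\]
The key observation (the analog of $P_N\ell_{\hat f_{\text{erm}}}-P_N\ell_{f^*}\leqslant 0$ in the ERM proof) is now that the oracle property $P\ell_{f^*}\leqslant P\ell_g$ for all $g\in F$ implies $P[\ell_{f^*}-\ell_g]\leqslant 0$, so
\[
\hat P[\ell_{f^*}-\ell_g]=(\hat P-P)[\ell_{f^*}-\ell_g]+P[\ell_{f^*}-\ell_g]\leqslant (\hat P-P)[\ell_{f^*}-\ell_g]\enspace.
\]
Taking the supremum over $g\in F$ yields the second copy of $\sup_{f\in F}(\hat P-P)[\ell_{f^*}-\ell_f]$, and summing the two pieces gives the stated factor~$2$.

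There is essentially no obstacle here: the only subtle point is the asymmetric role of $\hat f$ and $f^*$ in the sup, which is handled by using $f^*\in F$ to pass from $\hat P[\ell_{\hat f}-\ell_{f^*}]$ to $\sup_{g\in F}\hat P[\ell_{\hat f}-\ell_g]$ and by the oracle inequality $P\ell_{f^*}\leqslant P\ell_g$ to discard the nonstochastic remainder. No convexity or Lipschitz assumption on $\ell$ is invoked, so the bound is valid in full generality, which is why it will serve in the sequel as the ``rough'' (non-localized) counterpart of Vapnik's bound before the homogeneity lemma is used to sharpen it.
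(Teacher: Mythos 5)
Your proof follows exactly the paper's argument: the same additive decomposition into a stochastic error and a term controlled by the minmax property, followed by the oracle inequality $P[\ell_{f^*}-\ell_g]\leqslant 0$ to discard the deterministic remainder. The only cosmetic difference is that you phrase the first step as an identity (implicitly using the anti-symmetry $\hat P[-g]=-\hat P[g]$, which holds for both $P_N$ and $\MOM{K}{\cdot}$) while the paper presents it as an inequality; the underlying reasoning is identical.
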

\begin{proof}
 Start with basics:
 
\begin{align*}
P[\ell_{\hat{f}}-\ell_{f^*}]&\leqslant \hat{P}[\ell_{\hat{f}}-\ell_{f^*}]+(\hat{P}-P)[\ell_{f^*}-\ell_{\hat{f}}]\\
&\leqslant \hat{P}[\ell_{\hat{f}}-\ell_{f^*}]+\sup_{f\in F}(\hat{P}-P)[\ell_{f^*}-\ell_{f}]\enspace. 
\end{align*}
 Then, by definition of $\hat{f}$,
 \[
\hat{P} [\ell_{\hat{f}}-\ell_{f^*}]\leqslant \sup_{g\in F}\hat{P}[\ell_{\hat{f}}-\ell_{g}]\leqslant \sup_{g\in F}\hat{P}[\ell_{f^*}-\ell_{g}]\enspace.
 \]
 Finally, by definition of $f^*$, $P[\ell_{f^*}-\ell_{g}]\leqslant 0$ for any $g\in F$, so 
 \[
 \hat{P} [\ell_{\hat{f}}-\ell_{f^*}]\leqslant\sup_{g\in F}(\hat{P}-P)[\ell_{f^*}-\ell_{g}]\enspace.
 \]
 This concludes the first inequality of Lemma~\ref{lem:Rough}. The other result is immediate.
\end{proof}

Together with concentration bounds of Chapter~\ref{Chap:ConcIn}, Lemma~\ref{lem:Rough} allows to obtain first basic bounds that can be useful in some examples.

\begin{theorem}\label{thm:RBCLL1}
 Assume that $\ell_f(z)=c(f(x),y)$ where $c$ satisfies Assumption~\ref{cdt:LipConv} and that all $f\in F$ have finite $L^2(P)$-moments. 
 Let $\sigma^2(F)=\sup_{f\in F}\text{Var}(f(X))$.
 Then, the min MOM estimator $\hat{f}_{\text{mom}}\in\argmin_{f\in F}\MOM{K}{\ell_f}$
 satisfies 
 \[
 \P\bigg(P[\ell_{\hat{f}_{\text{mom}}}-\ell_{f^*}]\leqslant8\bigg(64\sqrt{\frac{D_N(F)}N}\vee \sqrt{\frac{2\sigma^2(F)K}N}\bigg)\bigg)\geqslant 1-e^{-K/32}\enspace.
 \]
 If all $f(X)$ are Gaussian random variables, then, the ERM $\hat{f}_{\text{erm}}\in\argmin_{f\in F}P_N\ell_f$ satisfies
 \[
\forall s>0,\qquad \P\bigg(P[\ell_{\hat{f}}-\ell_{f^*}]\leqslant 8L\sqrt{\frac{D_N(F)}N}+2L\sqrt{\frac{2\sigma^2(F)s}N}\bigg)\geqslant 1-e^{-s}\enspace.
 \]
\end{theorem}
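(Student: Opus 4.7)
The plan is to derive both bounds from Lemma~\ref{lem:Rough}, which gives
\[
P[\ell_{\hat f}-\ell_{f^*}]\leqslant 2\sup_{f\in F}(\hat P-P)[\ell_{f^*}-\ell_f],
\]
with $\hat P=\MOM{K}{\cdot}$ in the first case and $\hat P=P_N$ in the second. Once this reduction is in place, the task is to control a uniform deviation of a centered process indexed by the loss-difference class $H=\{\ell_{f^*}-\ell_f:f\in F\}$, and then to transfer that control back to $F$ using the Lipschitz property~\eqref{cdt:LipConv} of $c(\cdot,y)$.

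For the min MOM bound I would apply Theorem~\ref{thm:ConcSupMOM} directly to the class $H$: with probability at least $1-e^{-K/32}$,
\[
\sup_{h\in H}|\MOM{K}{h}-Ph|\leqslant 128\sqrt{\tfrac{D(H)}{N}}\vee 4\,\sigma(H)\sqrt{\tfrac{2K}{N}},
\]
where $\sigma^2(H)=\sup_{h\in H}\mathrm{Var}(h(Z))$. The pointwise bound $|\ell_{f^*}(z)-\ell_f(z)|\leqslant L|f(x)-f^*(x)|$ gives $\sigma^2(H)\leqslant L^2\mathrm{Var}(f(X)-f^*(X))\leqslant 4L^2\sigma^2(F)$, while the Ledoux--Talagrand contraction principle applied to the $L$-Lipschitz maps $u\mapsto c(u,Y_i)-c(f^*(X_i),Y_i)$ yields $D(H)\leqslant 4L^2 D(F)$. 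Substituting these into the display and combining with Lemma~\ref{lem:Rough} delivers the first inequality, where $D_N(F)$ in the statement is naturally read as the Rademacher complexity of the loss-difference class itself (absorbing the Lipschitz constant $L$ and the universal prefactors).

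For the ERM bound in the Gaussian case I would set $Z=\sup_{f\in F}(P_N-P)[\ell_{f^*}-\ell_f]$ and condition on $(Y_1,\ldots,Y_N)$. By the Gaussian hypothesis $(f(X_i))_{f\in F,\,i\leqslant N}$ is a Gaussian process, and $Z$ is a Lipschitz functional of this Gaussian input with Lipschitz constant at most $L/\sqrt{N}$ in the natural $\ell^2$ scaling, because $c(\cdot,Y_i)$ is $L$-Lipschitz and because of the $1/N$-averaging. Borel's Gaussian concentration applied conditionally on $Y$ therefore gives, for every $s>0$,
\[
\P\bigl(Z>\E[Z\mid Y]+L\sigma(F)\sqrt{\tfrac{2s}{N}}\,\bigm|\,Y\bigr)\leqslant e^{-s},
\]
while the conditional mean is bounded by Gaussian symmetrization followed by a second Gaussian contraction: $\E[Z\mid Y]\leqslant CL\sqrt{D(F)/N}$ almost surely. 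Integrating over $Y$ and plugging back into Lemma~\ref{lem:Rough} yields the claim.

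The main obstacle is Part~2: the process $\{(\ell_{f^*}-\ell_f)(Z_i)\}_{f}$ is \emph{not} itself Gaussian, since the labels $Y_i$ and the nonlinearity of $c$ destroy joint Gaussianity, and Theorem~\ref{lem:supGauss} cannot be used off the shelf. The workaround is to freeze the $Y_i$'s and extract the Gaussian structure from the $f(X_i)$'s alone, on which $Z$ remains Lipschitz by~\eqref{cdt:LipConv}; one then has to identify $\sigma^2(F)$ as the correct variance proxy in Borel's inequality, which is an argument in the same spirit as the identification~\eqref{eq:Sigmaop} in the proof of Hanson--Wright.
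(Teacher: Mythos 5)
Your reduction for the min-MOM bound has a gap that is worth flagging. You write that ``the plan is to derive both bounds from Lemma~\ref{lem:Rough},'' but Lemma~\ref{lem:Rough} is stated and proved for \emph{minmax} estimators $\hat f\in\argmin_f\sup_g\hat P[\ell_f-\ell_g]$, and its proof uses the minmax structure in an essential way (the step $\sup_g\hat P[\ell_{\hat f}-\ell_g]\leqslant\sup_g\hat P[\ell_{f^*}-\ell_g]$). The estimator considered here, $\hat f_{\text{mom}}\in\argmin_f\MOM{K}{\ell_f}$, is a \emph{min} estimator; because $\MOM{K}{\cdot}$ is not linear, a min MOM estimator is generally not a minmax MOM estimator, and the argument of Lemma~\ref{lem:Rough} does not go through. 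The correct starting point is Lemma~\ref{lem:VapMin}, which gives $P[\ell_{\hat f_{\text{mom}}}-\ell_{f^*}]\leqslant2\sup_{f\in F}|(\hat P-P)\ell_f|$, a supremum over the \emph{un-centred} loss class $\{\ell_f\}$, not the difference class $H=\{\ell_{f^*}-\ell_f\}$. Since your entire treatment of the first part is organised around controlling $\sup_{f\in F}(\hat P-P)[\ell_{f^*}-\ell_f]$, the contraction step on $H$ cannot be invoked because you never legitimately arrive at a bound featuring $H$. (The paper itself mis-cites Lemma~\ref{lem:Rough} at this point, but the bound it actually writes, $2\sup_f|\MOM{K}{\ell_f-P\ell_f}|$, is Lemma~\ref{lem:VapMin}'s, and Theorem~\ref{thm:ConcSupMOM} is then applied directly to $\{\ell_f\}$, with $D_N(F)$ and $\sigma^2(F)$ read, as in the subsequent corollaries and Lemma~\ref{def:UsefulQuantSVM}, as the Rademacher complexity and variance of the loss class.) Your contraction ideas and the $L$-dependence they produce are consistent with the ERM bound, but they are not what the stated min-MOM bound reflects.

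For the ERM part, your argument is essentially correct and amounts to re-deriving Theorem~\ref{thm:LipTransfoGauss}, which the paper invokes directly: since $\hat P=P_N$ is linear, min and minmax coincide, Lemma~\ref{lem:VapMin} applies, and your scheme of freezing $Y$, applying Borel's inequality to the Lipschitz functional of the Gaussian design, and then bounding the conditional mean by symmetrization and contraction is exactly the content of the paper's quoted tools. The only remaining caution is that the correct variance proxy in the Gaussian concentration step is the one appearing in Theorem~\ref{thm:LipTransfoGauss}, i.e.\ the operator norm of the design covariance (equivalently the supremum of $\mathrm{Var}(f(X))$ over the normalised class), not the unconditional $\sigma^2(F)$; your sketch asserts the latter without justification, so make that identification explicit along the lines of~\eqref{eq:Sigmaop}.
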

\begin{proof}
By Lemma~\ref{lem:Rough},
\[
P[\ell_{\hat{f}_{\text{mom}}}-\ell_{f^*}]\leqslant 2\sup_{f\in F}|\MOM{K}{\ell_f-P\ell_f}|\enspace.
\]
By Theorem~\ref{thm:ConcSupMOM}, 
\[
\P\bigg(\sup_{f\in F}|\MOM{K}{\ell_f-P\ell_f}|>128\sqrt{\frac{D_N(F)}N}\vee4\sqrt{\frac{2\sigma^2(F)K}N}\bigg)\leqslant e^{-K/32}\enspace.
\]
By Lemma~\ref{lem:VapMin},
\[
P[\ell_{\hat{f}_{\text{erm}}}-\ell_{f^*}]\leqslant 2\sup_{f\in F}|(P_N-P)[\ell_f]|\enspace.
\]
By Assumption~\ref{cdt:LipConv}, for any $f\in F$, $\ell_f(z)=c(f(x),y)$ is a $L$-Lipschitz function.
By Theorem~\ref{thm:LipTransfoGauss}, it follows that
\[
\P\bigg(\sup_{f\in F}|(P_N-P)[\ell_f]|\leqslant \E[\sup_{f\in F}|(P_N-P)[\ell_f]|]+\sqrt{\frac{2\sigma^2(F)s}N}\bigg)\geqslant 1-e^{-s}\enspace.
\]
Moreover, by symmetrization, 
\[
\E[\sup_{f\in F}|(P_N-P)[\ell_f]|]\leqslant 2\E\bigg[\sup_{f\in F}\frac1N\sum_{i=1}^N\epsilon_i\ell_f(Z_i)\bigg]=2\sqrt{\frac{D_N(F)}N}\enspace.
\]
\end{proof}

For example, Theorem~\ref{thm:RBCLL1} applies to SVM and Boosting and yields the following corollaries.

\begin{corollary}
Assume that the kernel $K$ is a \emph{trace norm} operator, which means that 
\begin{equation}\label{eq:TraceNormAss}
 P[K(X,X)]:=k_2\leqslant \infty\enspace.
\end{equation}
Let $\Sigma=P[K\otimes K]$, where $K\otimes K:W\to W$ is the random operator defined by
\[
\forall f\in W,\qquad K\otimes K(f)= \psh{K(X,\cdot)}{f}_WK(X,\cdot)=f(X)K(X,\cdot)\enspace.
\] 
Then, the min MOM SVM estimator satisfies
 \[
 \P\bigg(P[\ell_{\hat{f}_{\text{msvm}}}-\ell_{f^*}]\leqslant16L\theta\bigg(64\sqrt{\frac{\text{Tr}(\Sigma)}N}\vee \sqrt{\frac{2\|\Sigma\|_{\text{op}}K}N}\bigg)\bigg)\geqslant 1-e^{-K/32}\enspace.
 \]
 If $X$ is a Gaussian vector in $\R^d$, then, the SVM estimator $\hat{f}_{\text{svm}}$ defined in \eqref{def:SVM} satisfies
 \[
\forall s>0,\qquad \P\bigg(P[\ell_{\hat{f}}-\ell_{f^*}]\leqslant 8L\theta\sqrt{\frac{\text{Tr}(\Sigma)}N}+2L\theta\sqrt{\frac{2\|\Sigma\|_{\text{op}}s}N}\bigg)\geqslant 1-e^{-s}\enspace.
 \]
\end{corollary}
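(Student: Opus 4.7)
The plan is to apply Theorem~\ref{thm:RBCLL1} to the class $F=\{f\in W:\|f\|_W\leqslant\theta\}$ with the hinge loss $\ell_f(x,y)=H(-yf(x))$. The hinge function $H$ is convex and $L$-Lipschitz with $L=1$, so the cost $c(u,y)=H(-yu)$ inherits Assumption~\eqref{cdt:LipConv} with the same constant. All that remains is to estimate the two ingredients appearing in Theorem~\ref{thm:RBCLL1}, namely $\sigma^2(F)=\sup_{f\in F}\text{Var}(\ell_f(Z))$ and $D_N(F)$ (which in the proof is understood as the Rademacher complexity of the loss class $\{\ell_f:f\in F\}$), in terms of $\text{Tr}(\Sigma)$ and $\|\Sigma\|_{\text{op}}$.

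For the variance, the reproducing property $f(X)=\psh{f}{K(X,\cdot)}_W$ gives
\[
\E[f(X)^2]=\psh{f}{\Sigma f}_W\leqslant \|f\|_W^2\,\|\Sigma\|_{\text{op}}\leqslant \theta^2\|\Sigma\|_{\text{op}}\enspace.
\]
Since $|H(u)-H(0)|\leqslant L|u|$ and $Y^2=1$, one obtains $\text{Var}(\ell_f(Z))\leqslant \E[(\ell_f(Z)-H(0))^2]\leqslant L^2\E[Y^2f(X)^2]=L^2\E[f(X)^2]\leqslant L^2\theta^2\|\Sigma\|_{\text{op}}$, hence $\sigma^2(F)\leqslant L^2\theta^2\|\Sigma\|_{\text{op}}$.

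For the Rademacher complexity, I apply the Ledoux--Talagrand contraction principle to the $L$-Lipschitz maps $\phi_i(u)=H(-Y_iu)-H(0)$, each vanishing at the origin, conditionally on $(X_i,Y_i)_{1\leqslant i\leqslant N}$:
\[
\E\bigg[\sup_{f\in F}\sum_{i=1}^N\epsilon_i\ell_f(Z_i)\bigg]\leqslant 2L\,\E\bigg[\sup_{f\in F}\sum_{i=1}^N\epsilon_if(X_i)\bigg]\enspace.
\]
The reproducing identity turns the inner supremum into $\theta\,\bigl\|\sum_{i=1}^N\epsilon_iK(X_i,\cdot)\bigr\|_W$ by Cauchy--Schwarz in $W$, and Jensen's inequality combined with independence of the $\epsilon_i$ yields
\[
\E\bigg\|\sum_{i=1}^N\epsilon_iK(X_i,\cdot)\bigg\|_W\leqslant \sqrt{\sum_{i=1}^N\E\|K(X_i,\cdot)\|_W^2}=\sqrt{N\,\E[K(X,X)]}=\sqrt{N\,\text{Tr}(\Sigma)}\enspace,
\]
using $\|K(x,\cdot)\|_W^2=K(x,x)$ and the identification $\text{Tr}(\Sigma)=\E[K(X,X)]=k_2$ coming from assumption~\eqref{eq:TraceNormAss}. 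Consequently $D_N(F)\leqslant 4L^2\theta^2\,\text{Tr}(\Sigma)$.

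Substituting these two bounds into the two conclusions of Theorem~\ref{thm:RBCLL1} produces the stated inequalities: the MOM-SVM bound follows with prefactor $8\cdot 2L\theta=16L\theta$, and the bound for $\hat{f}_{\text{svm}}$ in the Gaussian case follows from the same variance/complexity estimates combined with Theorem~\ref{thm:LipTransfoGauss} (which in the RKHS setting requires the kernel to be regular enough that every $f\in F$ is Lipschitz on $\R^d$, an implicit smoothness condition). The main technical obstacle is accurate bookkeeping of absolute constants through the contraction step so as to recover exactly the prefactor $16L\theta$; beyond that, the proof is a routine translation of the reproducing kernel identities $\text{Tr}(\Sigma)=\E[K(X,X)]$ and $\|\Sigma\|_{\text{op}}=\sup_{\|f\|_W\leqslant 1}\E[f(X)^2]$ into the language of Theorem~\ref{thm:RBCLL1}.
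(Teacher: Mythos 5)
Your proof is correct and reaches the same conclusion, but it differs from the paper's route in two substantive places. The paper proves the corollary by quoting Lemma~\ref{def:UsefulQuantSVM}, whose two estimates are obtained as follows: the variance bound $\sigma^2(F)\leqslant 2L^2\theta^2\|\Sigma\|_{\text{op}}$ is derived via an independent-copy / Jensen argument, $\text{Var}(\ell_f(Z))\leqslant\E[(\ell_f(Z)-\ell_f(Z'))^2]\leqslant L^2\E[(f(X)-f(X'))^2]\leqslant 2L^2P[f^2]$, whereas you center $\ell_f$ at the constant $H(0)$ and use $|Y|=1$ to get the slightly tighter $\sigma^2(F)\leqslant L^2\theta^2\|\Sigma\|_{\text{op}}$ without introducing an independent copy — this is specific to the hinge loss, while the paper's argument is generic. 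The more interesting divergence is in the complexity term: Lemma~\ref{def:UsefulQuantSVM} bounds the Rademacher complexity of $F$ \emph{itself} ($D_N(F)\leqslant\theta^2\text{Tr}(\Sigma)$, no $L$), but the $D_N$ appearing in Theorem~\ref{thm:RBCLL1} is, per its proof, the Rademacher complexity of the \emph{loss class} $\{\ell_f:f\in F\}$; the contraction step relating the two is left implicit in the paper's one-line proof of the corollary. You make this contraction explicit and pick up the factor $2L$, which is exactly what is needed for the prefactor $16L\theta$ in the final bound to come out — so your bookkeeping is actually more self-consistent than the paper's. Your remark that the Gaussian half of the statement requires the map $x\mapsto f(x)$ to be Lipschitz on $\R^d$ uniformly over $F$ (so that Theorem~\ref{thm:LipTransfoGauss} applies) is a genuine implicit hypothesis that the paper does not articulate either; flagging it is appropriate. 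The only thing I would add is that the factor of $2$ in your contraction step relies on the symmetry conventions of the Ledoux--Talagrand lemma used elsewhere in the paper, which you correctly matched.
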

\begin{remark}
Assumption~\ref{eq:TraceNormAss} relaxes the boundedness assumption $\sup_{x\in \cX}K(x,x):=k_{\infty}<+\infty$ usually considered to analyse SVM.
 The expectation defining $\Sigma$ is understood in Bochner sense, see for example \cite{steinwart08support}.
\end{remark}

\begin{proof}
The result is a combination of Theorem~\ref{thm:RBCLL1} with the following lemma.
\end{proof}
\begin{lemma}\label{def:UsefulQuantSVM}
Assume that $K$ is a trace norm operator and let $\Sigma=P[K\otimes K]$.
Then,
\begin{gather*}
 D_N(F)\leqslant \theta^2k_2=\theta^2\text{Tr}(\Sigma)\enspace,\\
 \sigma^2(F)=\sup_{f\in F}\text{Var}(\ell_f(Z))\leqslant 2L^2 \sup_{f\in F}P[f^2(X)]=
2L^2 \theta^2\|\Sigma\|_{\text{op}}\enspace. 
\end{gather*}
\end{lemma}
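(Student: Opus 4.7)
The plan is to prove the two bounds separately, both using the reproducing property $f(x)=\psh{f}{K(x,\cdot)}_W$ to convert pointwise evaluations of $f$ into inner products in $W$.

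For the Rademacher bound, I would substitute the reproducing identity inside the supremum to obtain
\[
\sqrt{D_N(F)}=\E\bigg[\sup_{\|f\|_W\leqslant \theta}\frac{1}{\sqrt{N}}\psh{f}{\textstyle\sum_{i=1}^N\epsilon_iK(X_i,\cdot)}_W\bigg]=\theta\,\E\bigg\|\frac{1}{\sqrt{N}}\sum_{i=1}^N\epsilon_iK(X_i,\cdot)\bigg\|_W,
\]
since the supremum is realised by aligning $f$ with the sum and has value $\theta$ times the RKHS norm of that sum. Then Jensen's inequality passes the expectation under a square root, and expanding the squared RKHS norm gives a double sum of terms $\epsilon_i\epsilon_j\psh{K(X_i,\cdot)}{K(X_j,\cdot)}_W=\epsilon_i\epsilon_j K(X_i,X_j)$. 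Independence and centering of the Rademacher variables wipe out all off-diagonal contributions, leaving $N^{-1}\sum_{i=1}^N\E[K(X_i,X_i)]=k_2$, so $D_N(F)\leqslant \theta^2 k_2$. To identify $k_2=\Tr(\Sigma)$, I would note that the rank-one operator $K\otimes K$ has $\Tr(K\otimes K)=\|K(X,\cdot)\|_W^2=K(X,X)$, hence $\Tr(\Sigma)=\E[\Tr(K\otimes K)]=\E[K(X,X)]=k_2$.

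For the variance bound, I would use the elementary inequality $\text{Var}(X)\leqslant \E[(X-c)^2]$ valid for any deterministic constant $c$. Taking $c=\ell_0(Z)$ (which is legitimate because for the hinge loss $\ell_0(z)=H(0)=1$ is a constant function of $z$, and similarly for the logistic loss) and exploiting Assumption~\eqref{cdt:LipConv}, namely $|c(f(x),y)-c(0,y)|\leqslant L|f(x)|$, we obtain
\[
\text{Var}(\ell_f(Z))\leqslant \E[(c(f(X),Y)-c(0,Y))^2]\leqslant L^2 P[f^2(X)].
\]
The factor $2$ in the statement of the lemma provides slack that automatically covers losses where $\ell_0$ has a nonzero (but controlled) variance, via the standard $(a+b)^2\leqslant 2a^2+2b^2$ split applied to $\ell_f(Z)-c=(\ell_f(Z)-\ell_0(Z))+(\ell_0(Z)-c)$. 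Finally, to identify $\sup_{f\in F}P[f^2(X)]=\theta^2\|\Sigma\|_{\text{op}}$, I would invoke the reproducing property again:
\[
P[f^2(X)]=P\bigl[\psh{f}{K(X,\cdot)}_W^2\bigr]=\psh{f}{P[K\otimes K]f}_W=\psh{f}{\Sigma f}_W,
\]
where the exchange of expectation and inner product is Bochner integration. Since $\Sigma$ is a self-adjoint positive operator, $\sup_{\|f\|_W\leqslant \theta}\psh{f}{\Sigma f}_W=\theta^2\|\Sigma\|_{\text{op}}$.

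The only delicate point is the variance estimate: the clean Lipschitz argument needs $\ell_0(Z)$ to be constant (or of controllable variance), and this is exactly what the hinge and logistic losses satisfy thanks to $H(0)=\cL(0)=1$ being independent of $y$. Under the bare Assumption~\eqref{cdt:LipConv} the extra term $2\,\text{Var}(\ell_0(Z))$ would have to be absorbed by a different baseline, but in the SVM setting of this subsection this is automatic.
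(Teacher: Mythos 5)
Your treatment of the Rademacher bound, the identification $k_2=\text{Tr}(\Sigma)$, and the identification $\sup_{f\in F}P[f^2]=\theta^2\|\Sigma\|_{\text{op}}$ follows essentially the same route as the paper: reproducing property, duality $\sup_{\|f\|_W\leqslant\theta}\psh{f}{g}_W=\theta\|g\|_W$, Jensen to pass to the squared norm, independence and centering of the Rademacher signs, and the rank-one trace computation (you write the Jensen step as the inequality $(\E\|\cdot\|)^2\leqslant\E\|\cdot\|^2$ where the paper, presumably a typo, writes it as an equality). The genuine divergence is the variance bound. The paper centers on an independent copy: it writes $\text{Var}(\ell_f(Z))=\E[(\ell_f(Z)-\E[\ell_f(Z')\mid Z])^2]\leqslant\E[(\ell_f(Z)-\ell_f(Z'))^2]$ with $Z'=(X',Y')$ a full independent copy of $Z$, and then invokes $|\ell_f(Z)-\ell_f(Z')|\leqslant L|f(X)-f(X')|$. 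That last inequality is only justified by Assumption~\eqref{cdt:LipConv} when $Y=Y'$; with a full independent copy it can fail (hinge loss, $Y=1$, $Y'=-1$, $f(X)=f(X')\neq0$ gives left side $2|f(X)|$, right side $0$), and resampling only $X$ while keeping $Y'=Y$ breaks the initial variance identity. Your centering on the deterministic baseline $c=c(0,Y)$ sidesteps this entirely: for hinge and logistic $c(0,y)=1$ is literally a constant, so $\text{Var}(\ell_f(Z))\leqslant\E[(\ell_f(Z)-c(0,Y))^2]\leqslant L^2P[f^2]$ follows from the Lipschitz assumption alone, is a factor of $2$ tighter than the stated bound, and avoids the gap. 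You are right to flag this as loss-specific — it fails for Huber where $c(0,y)=h_\alpha(y)$ is random — but the lemma is only invoked for the SVM corollary with the hinge loss, where your route is both valid and cleaner than the paper's.
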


\begin{proof}
 Start with the variance.
 Let $Z'$ denote an independent copy of $Z$.
 By Jensen's inequality,
\begin{align*}
 \text{Var}(\ell_f(Z))&=\E[(\ell_f(Z)-\E[\ell_{f}(Z')|Z])^2]\leqslant \E[(\ell_f(Z)-\ell_f(Z'))^2]\\
 &\leqslant L^2\E[(f(X)-f(X'))^2]\leqslant 2L^2\text{Var}(f(X))\leqslant 2L^2P[f^2]\enspace.
\end{align*}
 
 The operator $K\otimes K$ is a.s. symmetric: for any $f,g$ in $W$,
\[
\psh{K\otimes K(f)}{g}_W=\psh{K(X,\cdot)}{f}_W\psh{K(X,\cdot)}{g}_W=\psh{f}{K\otimes K(g)}_W\enspace.
\]
Therefore, $\Sigma$ is symmetric and, as $W$ is separable under the assumptions that $\cX$ is separable and $K$ continuous, see for example \cite[Lemma~4.33]{steinwart08support}, this implies that there exists an orthonormal basis of $W$ made of eigenvectors of $\Sigma$.
Moreover, for any $f\in W$,
\[
P[f^2(X)]=P[\psh{f}{K\otimes K(f)}_W]=\psh{f}{\Sigma(f)}_W\enspace.
\]
Therefore, 
\begin{equation}\label{eq:SupVRKHS}
 \sup_{f\in F}P[f^2]=\theta^2\|\Sigma\|_{\text{op}}\enspace.
\end{equation}
Let us now turn to the Rademacher complexity of $F$.
Using successively the representation property (iii) and Cauchy-Schwarz inequality twice,
\begin{align*}
 D_N(F)&=\bigg(\E\bigg[\sup_{f\in W:\|f\|_W\leqslant \theta}\frac1{\sqrt{N}}\sum_{i=1}^N\epsilon_if(X_i)\bigg]\bigg)^2\\
 &=\bigg(\E\bigg[\sup_{f\in W:\|f\|_W\leqslant \theta}\psh{f}{\frac1{\sqrt{N}}\sum_{i=1}^N\epsilon_iK(X_i,\cdot)}_W\bigg]\bigg)^2\\
 &\leqslant \theta^2\bigg(\E\bigg[\bigg\|\frac1{\sqrt{N}}\sum_{i=1}^N\epsilon_iK(X_i,\cdot)\bigg\|_W\bigg]\bigg)^2\\
 &=\theta^2\E\bigg[\bigg\|\frac1{\sqrt{N}}\sum_{i=1}^N\epsilon_iK(X_i,\cdot)\bigg\|_W^2\bigg]\enspace.
\end{align*}
Moreover, developing the square-norm, using the representation property (iii) shows that
\begin{align*}
 \E\bigg[\bigg\|\frac1{\sqrt{N}}\sum_{i=1}^N\epsilon_iK(X_i,\cdot)\bigg\|_W^2\bigg]&=\frac1N\sum_{1\leqslant i,j\leqslant N}\E\big[\epsilon_i\epsilon_j\psh{K(X_i,\cdot)}{K(X_j,\cdot)}_W\big]\\
 &=\frac1N\sum_{i=1}^N\E\big[\psh{K(X_i,\cdot)}{K(X_i,\cdot)}_W\big]=k_2\enspace.
\end{align*}
Hence, 
\begin{equation*}
 D_N(F)\leqslant k_2\theta^2\enspace.
\end{equation*}
Finally, the random operator $K\otimes K$ has clearly rank $1$ with $K(X,X)$ as single singular value. 
By Fubbini-Tonelli theorem, it yields
\[
k_2=P[K(X,X)]=P[\text{Tr}(K\otimes K)]=\text{Tr}(P[K\otimes K])=\text{Tr}(\Sigma)\enspace.
\]
The trace-norm assumption therefore states that the trace of $K\otimes K$ is finite.
\end{proof}

\begin{corollary}
Consider the boosting class based on a collection of functions satisfying the following assumptions.
Let $\sigma^2=\max_{1\leqslant i\leqslant d}P[f_i^2]$.
For $p=\log d$, there exists a constant $\gamma>0$ such that
\begin{equation}\label{Hyp:subGaussMom}
 \forall j\in\{1,\ldots,d\},\qquad  P[f_j^p]\leqslant (\gamma\sigma)^p \enspace.
\end{equation}
 The min MOM estimator satisfies
 \[
\P\bigg(P[\ell_{\hat{f}_{\text{mBoost}}}-\ell_{f^*}]\leqslant \frac{16L\sigma}{\sqrt{N}}\big(192e\gamma\sqrt{\log d}\vee \sqrt{2K}\big)\bigg)\geqslant 1-e^{-K/32}\enspace.
 \]
 If $X$ is a Gaussian vector in $\R^d$, then, the Boosting estimator $\hat{f}_{\text{Boost}}$ defined in \eqref{def:Boost} satisfies
 \[
\forall s>0,\qquad \P\bigg(P[\ell_{\hat{f}_{\text{Boost}}}-\ell_{f^*}]\leqslant 2L\sqrt{\frac{\|\Sigma\|_\infty}N}\big(12e\gamma \sqrt{\log d}+\sqrt{2s}\big)\bigg)\geqslant 1-e^{-s}\enspace.
 \]
\end{corollary}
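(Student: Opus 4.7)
The plan is to deduce both statements from Theorem~\ref{thm:RBCLL1} applied to the Boosting class $F=\{\ba^T\bff : \ba \in \Delta_d\}$ with loss $\ell_\ba(x,y)=\varphi(-y\ba^T\bff(x))$. The only two quantities left to estimate are the Rademacher complexity $D_N(F)$ and the variance proxy $\sigma^2(F)$; once these are in hand, substitution into the two bullets of Theorem~\ref{thm:RBCLL1} yields the announced inequalities.

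To bound the Rademacher complexity I would use that the supremum of a linear functional on the simplex is attained at a vertex:
\begin{equation*}
\sqrt{D_N(F)} = \E\bigg[\sup_{\ba\in\Delta_d}\frac{1}{\sqrt{N}}\sum_{i=1}^N \epsilon_i\,\ba^T\bff(X_i)\bigg]
= \E\bigg[\max_{j}\frac{1}{\sqrt{N}}\sum_{i=1}^N \epsilon_i f_j(X_i)\bigg] \leq \E\bigg[\max_{j}\bigg|\frac{1}{\sqrt{N}}\sum_{i=1}^N \epsilon_i f_j(X_i)\bigg|\bigg].
\end{equation*}
This is exactly the quantity controlled by Pisier's trick~\eqref{eq:BoundSup} of the previous chapter, which I would apply with the critical choice $q=p=\log d$. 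Using~\eqref{Hyp:subGaussMom} and the identity $d^{1/\log d}=e$, $\bigl(\sum_{j=1}^d P[|f_j|^p]\bigr)^{1/p}\leq (d(\gamma\sigma)^p)^{1/p}=e\gamma\sigma$, so (after the $\sqrt{N}$-rescaling of~\eqref{eq:BoundSup}) $\sqrt{D_N(F)}\leq 3e\gamma\sigma\sqrt{\log d}$, hence $D_N(F)\leq 9e^2\gamma^2\sigma^2\log d$. A standard Ledoux--Talagrand contraction transfers this to the Rademacher complexity of the loss class at the price of a factor $L$.

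For the variance, Cauchy--Schwarz on the simplex gives $\E[(\ba^T\bff(X))^2]\leq \sum_j a_j P[f_j^2]\leq \max_j P[f_j^2]=\sigma^2$ for any $\ba\in\Delta_d$. This bound controls the variance in the Gaussian--ERM statement directly since, with $\Sigma=P[\bff\bff^T]$, $\max_j P[f_j^2]$ coincides with $\|\Sigma\|_\infty$; combined with the Lipschitz-to-variance passage $\mathrm{Var}(\ell_\ba(Z))\leq 2L^2\E[(\ba^T\bff(X))^2]$ of Lemma~\ref{def:UsefulQuantSVM}, it yields $\sigma^2(F)\leq 2L^2\sigma^2$ for the MOM statement. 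Plugging both estimates into Theorem~\ref{thm:RBCLL1} and simplifying reproduces the stated constants.

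The main subtlety I expect is just book-keeping: Theorem~\ref{thm:RBCLL1} uses $D_N(F)$ and $\sigma^2(F)$ somewhat loosely, meaning sometimes quantities attached to $F$ itself (as in the Gaussian ERM bound, where Theorem~\ref{thm:LipTransfoGauss} gives the variance term directly with $\|\Sigma\|_\infty$ and no extra $L$) and sometimes to the loss class $\{\ell_f\}$ (as in the MOM bound, where the contraction brings an extra $L$ on the complexity and the Lipschitz-to-variance step brings $L$ on the variance). The one genuine technical ingredient is Pisier's bound~\eqref{eq:BoundSup} with the choice $p=\log d$; the rest is convex analysis on the simplex.
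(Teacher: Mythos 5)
Your proposal is correct and matches the paper's route: the paper also deduces the corollary from Theorem~\ref{thm:RBCLL1} via Lemma~\ref{lem:usefulQuantBoost}, which establishes $D_N(F)\leqslant 9e^2\gamma^2\|\Sigma\|_\infty\log d$ by the same Jensen-plus-Khinchine argument encoded in~\eqref{eq:BoundSup} and bounds $\sup_{\ba\in\Delta_d}\text{Var}(\ba^T\bff)$ by $\|\Sigma\|_\infty$ (equal, as you note, to $\max_j P[f_j^2]=\sigma^2$). The only cosmetic difference is that you invoke~\eqref{eq:BoundSup} directly while the paper re-derives it inside the lemma; the observation about the loose tracking of $L$ and of function-class versus loss-class quantities in Theorem~\ref{thm:RBCLL1} is accurate and is precisely what one must untangle to recover the stated constants.
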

\begin{proof}
The result is a combination of Theorem~\ref{thm:RBCLL1} with the following result.
\end{proof}
\begin{lemma}\label{lem:usefulQuantBoost}
Assume that $P[\|\bff(X)\|^2]<\infty$ and let 
\[
\Sigma=P[\bff(X)\bff(X)^T],\qquad \|\Sigma\|_\infty=\max_{1\leqslant i,j\leqslant d}|\Sigma_{i,j}|\enspace.
\]
Then,
\[
\sup_{\ba\in\Delta_d}\text{Var}(\ba^T\bff(X))\leqslant P[(\ba^T\bff(X))^2]\leqslant \|\Sigma\|_\infty\enspace.
\]
Moreover, for any $p\geqslant 2$ such that $\max_{1\leqslant j\leqslant d}P[|f_j|^p]<\infty$, if $\Theta_p=\sum_{i=1}^dP[|f_j|^p]$, then 
\begin{equation}\label{eq:BoundDBoost1}
D_N(F)\leqslant 9p\Theta_p^{2/p}\enspace.
\end{equation}
In particular, if \eqref{Hyp:subGaussMom} holds, then
\begin{equation}\label{eq:RadCompBoost*}
D_N(F)\leqslant 9e^2\gamma^2\|\Sigma\|_\infty\log d\enspace.
\end{equation}
\end{lemma}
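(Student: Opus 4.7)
The plan is to treat the three assertions separately; each is a direct computation that refines arguments already appearing in the notes (notably the one leading to \eqref{eq:BoundSup}).

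For the variance bound, observe first that $\text{Var}(Y)\leqslant P[Y^2]$ for any random variable $Y$, applied to $Y=\ba^T\bff(X)$. Then expand $P[(\ba^T\bff(X))^2]=\ba^T\Sigma\ba=\sum_{i,j}a_ia_j\Sigma_{i,j}$ and upper bound each $|\Sigma_{i,j}|$ by $\|\Sigma\|_\infty$. Since $\ba\in\Delta_d$ has nonnegative entries summing to $1$, the double sum collapses to $\|\Sigma\|_\infty(\sum_i a_i)^2=\|\Sigma\|_\infty$.

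For the Rademacher bound \eqref{eq:BoundDBoost1}, the plan is to mimic the derivation of \eqref{eq:BoundSup} in the preceding section. Since the supremum of a linear functional over $\Delta_d$ is attained at a vertex, one has $\sqrt{D_N(F)}=\E[\max_{1\leqslant j\leqslant d}\frac{1}{\sqrt{N}}\sum_{i=1}^N\epsilon_i f_j(X_i)]$, which is dominated by the expected maximum of absolute values. Then successively apply Pisier's trick $\E[\max Y_j]\leqslant(\sum_j\E[Y_j^p])^{1/p}$, Khinchine's inequality conditionally on $(X_1,\ldots,X_N)$ to get a factor $3\sqrt{p}$ in front of $\E[(\sum_i f_j^2(X_i))^{p/2}]^{1/p}$, convexity of $u\mapsto u^{p/2}$ to replace $(\sum_i f_j^2(X_i))^{p/2}$ by $N^{p/2-1}\sum_i|f_j(X_i)|^p$, and the i.i.d. assumption to rewrite $\sum_i\E[|f_j(X_i)|^p]=N\,P[|f_j|^p]$. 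Tracking powers of $N$ leaves $\sqrt{D_N(F)}\leqslant 3\sqrt{p}\,\Theta_p^{1/p}$; squaring gives \eqref{eq:BoundDBoost1}.

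Finally, for \eqref{eq:RadCompBoost*}, specialize $p=\log d$ in \eqref{eq:BoundDBoost1}. Under \eqref{Hyp:subGaussMom}, $\Theta_p\leqslant d(\gamma\sigma)^p$, so $\Theta_p^{2/p}\leqslant d^{2/\log d}\gamma^2\sigma^2=e^2\gamma^2\sigma^2$. To close the loop, identify $\sigma^2$ with $\|\Sigma\|_\infty$: by Cauchy--Schwarz, $|\Sigma_{i,j}|\leqslant\sqrt{\Sigma_{i,i}\Sigma_{j,j}}$, which forces $\|\Sigma\|_\infty=\max_j\Sigma_{j,j}=\max_j P[f_j^2]=\sigma^2$. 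Substituting yields $D_N(F)\leqslant 9e^2\gamma^2\|\Sigma\|_\infty\log d$. The whole argument is routine; the only minor subtlety is the iterated-expectation step when invoking Khinchine conditionally on the design, which is the kind of bookkeeping that can be handled without difficulty.
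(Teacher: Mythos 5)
Your proposal is correct and follows essentially the same route as the paper: bound the variance term by $\ba^T\Sigma\ba\leqslant\|\Sigma\|_\infty$ on the simplex, reduce $\sqrt{D_N(F)}$ to $\E[\max_j|Z_j|]$, apply Pisier's trick and Khinchine to get $\E[|Z_j|^p]^{1/p}\leqslant 3\sqrt{p}\,P[|f_j|^p]^{1/p}$, and specialize $p=\log d$ using \eqref{Hyp:subGaussMom} together with $\sigma^2=\max_j\Sigma_{j,j}=\|\Sigma\|_\infty$. The only cosmetic difference is that you run Khinchine conditionally on the design and then use convexity of $u\mapsto u^{p/2}$ (exactly as in the derivation of \eqref{eq:BoundSup}), whereas the paper invokes the $L^p$-moment form of Khinchine directly; both yield the same intermediate bound.
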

\begin{proof}
 Start with the variance. 
Let $\ba\in \Delta_d$, 
\begin{align*}
 P[(\ba^T\bff(X))^2]&\leqslant \sup_{\ba\in\Delta_d}\ba^T\Sigma\ba\enspace.
\end{align*}
It is not hard not see that, for any $\ba\in\Delta_d$,
\begin{align*}
\ba^T\Sigma\ba&\leqslant \max_{i=1,\ldots,d}(\Sigma\ba)_i\leqslant \max_{1\leqslant i,j\leqslant d}|\Sigma_{i,j}| =\|\Sigma\|_\infty
\enspace.
\end{align*}
Hence,
\[
\sup_{\ba\in\Delta_d}P[(\ba^T\bff(X))^2]\leqslant \|\Sigma\|_\infty\enspace.
\]
Regarding the Rademacher complexity. 
\begin{align}
\notag D_N(F)&= \bigg(\E\bigg[\sup_{\ba\in \Delta_d}\frac1{\sqrt{N}}\sum_{i=1}^N\epsilon_i\ba^T\bff(X_i)\bigg]\bigg)^2\\
\notag &=\bigg(\E\bigg[\sup_{\ba\in \Delta_d}\ba^T\bigg(\sum_{i=1}^N\epsilon_i\frac{\bff(X_i)}{\sqrt{N}}\bigg)\bigg]\bigg)^2\\
\label{eq:RadCompBoost0}&=\bigg(\E\bigg[\max_{1\leqslant j\leqslant d}\bigg|\bigg(\sum_{i=1}^N\epsilon_i\frac{\bff(X_i)}{\sqrt{N}}\bigg)_j\bigg|\bigg]\bigg)^2\enspace.
\end{align}
Under the assumption $\max_{1\leqslant j\leqslant d}P[|f_j|^p]<\infty$, the random variables 
\[
Z_j=\bigg(\sum_{i=1}^N\epsilon_i\frac{\bff(X_i)}{\sqrt{N}}\bigg)_j
\]
have finite moments of order $p$.
Moreover, by Jensen's inequality,
\begin{equation}\label{eq:RadCompBoost}
 \E\big[\max_{1\leqslant j\leqslant d}\big|Z_j\big|\big]\leqslant \bigg(\E\big[\max_{1\leqslant j\leqslant d}\big|Z_j\big|^p\big]\bigg)^{1/p}\leqslant \bigg(\sum_{j=1}^d\E\big[\big|Z_j\big|^p\big]\bigg)^{1/p}\enspace.
\end{equation}
Now, apply Khinchine's inequality on moments of order $p$ for sums of independent random variables, see for examples \cite[Chapter 15]{BouLugMass13}.
It shows that
\begin{align*}
 \E\big[\big|Z_j\big|^p\big]^{1/p}&\leqslant 3\sqrt{p\sum_{i=1}^N\E\bigg[\frac{|f_j(X_i)|^p}{N^{p/2}}\bigg]^{2/p}}=3\sqrt{\frac{p}N\sum_{i=1}^N\E\big[|f_j(X_i)|^p\big]^{2/p}}\\
 &=3\sqrt{pP\big[|f_j|^p\big]^{2/p}}=3\sqrt{p}P\big[|f_j|^p\big]^{1/p}\enspace.
\end{align*}
This shows \eqref{eq:BoundDBoost1}.
By Assumption~\ref{Hyp:subGaussMom}, it follows that
\[
\E\big[\big|Z_j\big|^p\big]^{1/p}\leqslant 3\gamma\sqrt{pP[f_j^2]}\leqslant 3\gamma\sqrt{p\|\Sigma\|_{\infty}}\enspace.
\]
Plugging this inequality into \eqref{eq:RadCompBoost} yields
\[
\E\big[\max_{1\leqslant j\leqslant d}\big|Z_j\big|\big]\leqslant 3\gamma\sqrt{p\|\Sigma\|_\infty}d^{1/p}\enspace.
\]
As $p=\log d$, this yields
\[
\E\big[\max_{1\leqslant j\leqslant d}\big|Z_j\big|\big]\leqslant 3e\gamma\sqrt{\|\Sigma\|_\infty\log d}\enspace.
\]
Plugging this bound into \eqref{eq:RadCompBoost0} shows \eqref{eq:RadCompBoost*}.
\end{proof}

\section{Localized bounds: preliminary results}\label{sec:Prelim}
Theorem~\ref{thm:RBCLL1} is useless when $D(F)=\infty$, which happens for example with  classes of linear functions indexed by unbounded subsets of $\R^d$, for example:
\[
F=\{\bff^T\cdot,\ \bff\in \R^d\}\enspace.
\] 
The following sections develop a general strategy that allows to deal with these examples.
Hereafter, assume that $\cX= \R^d$ and $F$ is the set of all linear functions $\bff^T\cdot$ with $\bff\in \R^d$.
Assume also that the distribution $P$ of $Z=(X,Y)$ has a first marginal $X$ satisfying $P[\|X\|^2]<\infty$ and $ \cY\subset \R$. 
Denote by $\Sigma=P[XX^T]$.
Both the ERM and minmax MOM estimators will be analysed thanks to the homogeneity lemma, Lemma~\ref{lem:DetArg}. 
The convexity of $c(\cdot,y)$ implies the convexity of $\ell_f$ therefore, Lemma~\ref{lem:ConvLoss} applies and shows that the tests 
\[
T_{\text{erm}}(f,g)=P_N[\ell_f-\ell_g],\qquad T_{\text{mom}}(f,g)=\MOM{K}{\ell_f-\ell_g}
\]
satisfy the homogeneity assumption {\bf (HP)} of the homogeneity lemma, provided that the evaluation function $\cE$ derives from a norm.
Hereafter, for any $f\in F$, let 
\[
\cE(f)=\sqrt{P[(f-f^*)^2]}\enspace.
\]
Finally, as in every learning problem
\[
d(f,g)=P[\ell_f-\ell_g]\enspace,
\]
so $d(f,g)=-d(g,f)$ and $d(f^*,f)\leqslant 0$ so $\cB=B(r_1)$ in the homogeneity lemma (Lemma~\ref{lem:DetArg}).
The homogeneity lemma will be used under a technical assumption that we introduce and discuss in the following section.

\section{Bernstein's condition}\label{SecExBernstein}
To check \eqref{def:r1} and \eqref{def:r2}, the following ``local" Bernstein condition will be useful: there exist $A>0$ and $B>0$ such that
\begin{equation}\label{Ass:Bern}
\forall f\in F: \cE(f)\leqslant A,\qquad P[\ell_f-\ell_{f^*}]\geqslant B\cE(f)^2\enspace.
\end{equation}

Relationships between between $\cE(f)$ and the excess risk $P[\ell_f-\ell_{f^*}]$ are usually called Bernstein's condition. 
These are convenient to prove ``fast rates" of convergence for ERM with bounded losses, see for example \cite{MR2051002} for a discussion on fast and slow rates.
To the best of our knowledge, this assumption first appeared in \cite[Hyp A2 of Theorem 4.2]{MR1813803}. 
This form of Assumption~\ref{Ass:Bern} was first introduced in \cite{Chinot2018robust}.
The relationship between $\cE(f)$ and $P[\ell_f-\ell_{f^*}]$ is only assumed in a neighborhood of $f^*$.
This is a necessary constraint to deal with unbounded classes of functions.
Actually, by the Lipschitz assumption of $c$, it holds, by Cauchy-Schwarz inequality,
\[
P[\ell_f-\ell_{f^*}]\leqslant LP|f-f^*|\leqslant L\cE(f)\enspace.
\]
Hence, the Bernstein's assumption \eqref{Ass:Bern} can only be true if 
\[
B\cE(f)^2\leqslant L\cE(f),\qquad\text{that is, if}\qquad \cE(f)\leqslant \frac{L}B\enspace.
\]
Let us present some examples where Assumption~\eqref{Ass:Bern} holds. 
To proceed, we assume in the remaining of this sections that 
\begin{equation}\label{Hyp:OracleInModel}
 f^*\text{ is a minimizer of }P\ell_f \text{ among all measurable functions }f:\cX\to\cY\enspace.
\end{equation}
This assumption is quite restrictive as it implies that the model $F$ is ``exact".
It is convenient to make explicite computations. 
Indeed, it ensures that 
\[
\forall x\in \cX,\qquad f^*(x)\in\argmin_{u\in \R}\E[c(u,Y)|X=x]\enspace.
\]
In particular, it allows to show results on $f^*$ based on assumption on the c.d.f. of $Y$ conditionally on $X=x$.

The second assumption that will be done all along the examples is an hypothesis comparing $L^4(P)$ and $L^2(P)$ norms of functions in $F$.
For any $p\geqslant 1$, for any function $f:\cX\to\R$ for which it makes sense, let
\[
\|f\|_{L^p(P)}=\big(P[|f|^p]\big)^{1/p}\enspace.
\]
The $L^4/L^2$ assumption states that there exists $\Delta\geqslant 1$ such that
\begin{equation}\label{Ass:L4L2}
\forall f\in F:\qquad \|f-f^*\|_{L^4(P)}\leqslant \Delta \|f-f^*\|_{L^2(P)}\enspace.
\end{equation}
Let us comment this assumption.
First, by Cauchy-Schwarz inequality 
\[
\|f-f^*\|_{L^2(P)}\leqslant \|f-f^*\|_{L^4(P)}\enspace,
\]
hence, the restriction $\Delta\geqslant 1$ in Assumption~\eqref{Ass:L4L2} holds without loss of generality.
The following proposition gives an example where Assumption~\eqref{Ass:L4L2} holds.
\begin{proposition}
 Assume that $X\in \R^d$ is a vector with centered, independent entries $X_i$, $i\in\{1,\ldots,d\}$ with kurtosis bounded by $\kappa$, i.e. such that $P[X_i^4]^{1/4}\leqslant \kappa P[X_i^2]^{1/2}$.
 Then, any linear function $f(\cdot)=\bff^T\cdot$ satisfies $\|f\|_{L^4(P)}\leqslant \kappa\|f\|_{L^2(P)}$. 
\end{proposition}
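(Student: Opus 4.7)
The plan is to expand $f(X)^4 = (\bff^T X)^4$ via the multinomial formula and exploit independence and centering of the $X_i$ to discard all but a handful of cross terms. Writing $f(x) = \sum_{i=1}^d f_ix_i$, I would start from
\[
P\bigl[f(X)^4\bigr] = \sum_{i,j,k,l=1}^d f_if_jf_kf_l\, P[X_iX_jX_kX_l].
\]
By independence, $P[X_iX_jX_kX_l]$ factors into a product of moments of individual coordinates, and by centering any factor $P[X_m]$ in that product vanishes. So the only surviving tuples $(i,j,k,l)$ are those in which every index that appears, appears an even number of times: either all four equal, or exactly two distinct indices each appearing twice.

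Next I would do the bookkeeping. The tuples with all four indices equal contribute $\sum_i f_i^4 P[X_i^4]$. For tuples where two distinct indices $i\ne j$ each appear twice, there are $\binom{4}{2}=6$ ordered arrangements per unordered pair $\{i,j\}$, giving contribution $3\sum_{i\ne j} f_i^2 f_j^2 P[X_i^2]P[X_j^2]$. Combined:
\[
\|f\|_{L^4(P)}^4 = \sum_i f_i^4 P[X_i^4] + 3\sum_{i\ne j} f_i^2 f_j^2 P[X_i^2]P[X_j^2].
\]
Similarly, independence and centering yield $\|f\|_{L^2(P)}^2 = \sum_i f_i^2 P[X_i^2]$, hence
\[
\|f\|_{L^2(P)}^4 = \sum_i f_i^4 P[X_i^2]^2 + \sum_{i\ne j} f_i^2 f_j^2 P[X_i^2]P[X_j^2].
\]

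To conclude, I would apply the kurtosis hypothesis $P[X_i^4]\leqslant \kappa^4 P[X_i^2]^2$ to the diagonal sum, obtaining
\[
\|f\|_{L^4(P)}^4 \leqslant \kappa^4 \sum_i f_i^4 P[X_i^2]^2 + 3\sum_{i\ne j} f_i^2 f_j^2 P[X_i^2]P[X_j^2],
\]
and compare this to $\kappa^4 \|f\|_{L^2(P)}^4$. The diagonal piece matches; the off-diagonal inequality $3 \leqslant \kappa^4$ must also hold.

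The main (and really only) obstacle is that last constant: the factor $3$ on the off-diagonal cross terms forces $\kappa^4\geqslant 3$, i.e.\ $\kappa\geqslant 3^{1/4}$. This lower bound is harmless for the intended applications (the Gaussian kurtosis already equals $3^{1/4}$ in this normalization, and Jensen only gives $\kappa\geqslant 1$), so in effect one should read $\kappa$ as $\max(\kappa, 3^{1/4})$, or equivalently only invoke the statement for $\kappa$ satisfying this lower bound. Up to this standard caveat, the multinomial/independence decomposition above gives the claimed $L^4/L^2$ comparison for every linear functional $\bff^T\cdot$.
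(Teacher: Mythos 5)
Your proof takes the same route as the paper's — multinomial expansion of $(\bff^TX)^4$, elimination of odd-index terms via independence and centering, and then the kurtosis bound on the diagonal piece — but your bookkeeping is actually more careful than the paper's. You are right that the coefficient on the off-diagonal sum $\sum_{i\neq j}\bff_i^2\bff_j^2 P[X_i^2]P[X_j^2]$ is $3$ (equivalently, $6$ over unordered pairs), since there are $\binom{4}{2}=6$ ordered arrangements of a two-pair index pattern $(i,i,j,j)$. The paper's displayed expression for $\|f\|_{L^4(P)}$ omits this factor, which is what lets it pull out a clean $\kappa^4$ using only $\kappa\geqslant 1$. With the correct constant, the diagonal terms pick up $\kappa^4$ but the off-diagonal terms carry the coefficient $3$, so matching against $\kappa^4\|f\|_{L^2(P)}^4$ does require $\kappa^4\geqslant 3$, exactly as you observe. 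Your caveat is not merely cosmetic: take $d=2$, $\bff=(1,1)$, and $X_1,X_2$ i.i.d.\ symmetric two-point ($\kappa=1$); then $\|f\|_{L^4(P)}/\|f\|_{L^2(P)}=2^{1/4}>1=\kappa$, so the proposition as stated fails. Replacing $\kappa$ by $\max(\kappa,3^{1/4})$ (as you suggest) repairs both the statement and the proof, and is harmless for the applications since the Gaussian case already forces $\kappa=3^{1/4}$.
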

\begin{proof}
 One can assume w.l.o.g. that $\kappa\geqslant 1$.
 Using independence of $X_i$ and the fact that $P[X_i]=0$, 
\begin{gather*}
  \|f\|_{L^2(P)}=\bigg(\sum_{i=1}^d\bff_i^2P[X_i^2]\bigg)^{1/2}\enspace,\\
    \|f\|_{L^4(P)}=\bigg(\sum_{i=1}^d\bff_i^4P[X_i^4]+\sum_{1\leqslant i\ne j\leqslant d}\bff_i^2\bff_j^2P[X_i^2]P[X_j]^2\bigg)^{1/4}\enspace.
\end{gather*}
 Using that $P[X_i^4]\leqslant \kappa^4 P[X_i^2]^2$ and $\kappa\geqslant 1$, it yields
\begin{align*}
 \|f\|_{L^4(P)}&\leqslant \kappa\bigg(\sum_{i=1}^d\bff_i^4P[X_i^2]^2+\sum_{1\leqslant i\ne j\leqslant d}\bff_i^2\bff_j^2P[X_i^2]P[X_j]^2\bigg)^{1/4}\\
 &=\kappa\bigg(\sum_{1\leqslant i, j\leqslant d}\bff_i^2\bff_j^2P[X_i^2]P[X_j]^2\bigg)^{1/4}\\
 &=\kappa\bigg(\sum_{i=1}^d\bff_i^2P[X_i^2]\bigg)^{1/2}=\kappa\|f\|_{L^2(P)}\enspace.
\end{align*}
\end{proof}

The $L^4/L^2$ should be used with care as shown by the following example.
\begin{proposition}
Let $X$ denote a random variables taking values in a measurable space $\cX$.
Let $I_1,\ldots,I_d$ denote a partition of $\cX$ such that $P[I_j]=1/d$ for any $j\in\{1,\ldots,d\}$.
Let $\bX\in \R^d$ denote the vector
\[
\bX=
\begin{bmatrix}
 {\bf 1}_{\{X\in I_1\}}\\
 \vdots\\
 {\bf 1}_{\{X\in I_d\}}
\end{bmatrix}\in \R^d\enspace.
\]
Then, for any $\bff\in \R^d$, $P[(\bff^T\bX)^4]^{1/4}\leqslant d^{1/4}P[(\bff^T\bX)^2]^{1/2}$.
\end{proposition}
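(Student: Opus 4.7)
The plan is to exploit the very sparse structure of $\bX$: since $I_1, \dots, I_d$ partition $\cX$, exactly one coordinate of $\bX$ equals $1$ and the others are $0$. In other words, $\bX$ takes the value $\be_j$ (the $j$-th standard basis vector) precisely when $X \in I_j$, an event of probability $1/d$. This reduces the linear functional $\bff^T\bX$ to the discrete random variable that equals $\bff_j$ on $I_j$.

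Given this, I would compute the two moments explicitly:
\begin{align*}
P\bigl[(\bff^T\bX)^2\bigr] &= \sum_{j=1}^d P(X\in I_j)\,\bff_j^2 = \frac{1}{d}\sum_{j=1}^d \bff_j^2, \\
P\bigl[(\bff^T\bX)^4\bigr] &= \sum_{j=1}^d P(X\in I_j)\,\bff_j^4 = \frac{1}{d}\sum_{j=1}^d \bff_j^4.
\end{align*}
Substituting these into the claimed inequality, one sees that it is equivalent to
\[
\Bigl(\tfrac{1}{d}\sum_j \bff_j^4\Bigr)^{1/4} \leqslant d^{1/4}\Bigl(\tfrac{1}{d}\sum_j \bff_j^2\Bigr)^{1/2},
\]
which after raising to the fourth power and clearing the factors of $d$ simplifies to $\sum_j \bff_j^4 \leqslant \bigl(\sum_j \bff_j^2\bigr)^2$.

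The final step is the trivial expansion $\bigl(\sum_j \bff_j^2\bigr)^2 = \sum_j \bff_j^4 + \sum_{i\ne j} \bff_i^2 \bff_j^2 \geqslant \sum_j \bff_j^4$, which closes the argument. There is no real obstacle here; the point of the proposition is presumably pedagogical, showing that although the constant in the $L^4/L^2$ inequality $\|f\|_{L^4(P)} \leqslant \Delta \|f\|_{L^2(P)}$ over the linear class is finite, it scales as $\Delta = d^{1/4}$, so Assumption~\eqref{Ass:L4L2} must be used carefully in high dimension. I would probably add one sentence highlighting that $d^{1/4}$ is sharp (achieved by $\bff = (1,0,\dots,0)$, which gives $\|f\|_{L^4(P)}/\|f\|_{L^2(P)} = d^{1/4}$) to make this pedagogical point explicit.
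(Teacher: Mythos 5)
Your proof is correct and follows essentially the same route as the paper: compute the two moments explicitly using the partition structure, reduce to $\sum_j \bff_j^4 \leqslant \bigl(\sum_j \bff_j^2\bigr)^2$, and observe this is immediate for nonnegative terms. The paper keeps the $P[I_j]$ notation and rewrites $P[I_j]=d\,P[I_j]^2$ rather than substituting $1/d$ directly, but the arithmetic is identical; your remark on sharpness is a nice, accurate addition.
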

\begin{remark}
 In words, any class of linear functions $f(\cdot)=\bff^T\cdot$ satisfies Assumption~\eqref{Ass:L4L2}, but with a parameter $\Delta$ that is \emph{not} a constant, but depends on the dimension $d$.
\end{remark}

\begin{proof}
For any $\bff\in \R^d$,
\begin{align*}
 P[(\bff^T\bX)^4]&=\sum_{j=1}^d\bff^4_iP[I_j]\leqslant d\sum_{j=1}^d\bff^4_iP[I_j]^2\leqslant d\big(\sum_{j=1}^d\bff_j^2P[I_j]\big)^2=dP[(\bff^T\bX)^2]^2\enspace.
\end{align*}
\end{proof}

\paragraph{Huber loss}
Denote by $F_x$ the conditional c.d.f. of $Y$ given $X=x$. 
Assume that there exists $\nu>0$ such that
\begin{equation}\label{Ass:Hub}
 \forall (x,y)\in \cX\times\cY:\ |y-f^*(x)|\leqslant 2A\Delta^2 ,\qquad F_x(y+\alpha)-F_x(y-\alpha)\geqslant \nu\enspace.
\end{equation}
For example, Assumption~\eqref{Ass:Hub} holds if the conditional density $f_x$ of $Y$ given $X=x$ is bounded away from $0$ in a neighborhood of $f^*(x)$.

\begin{proposition}
 Assume \eqref{Hyp:OracleInModel}, \eqref{Ass:L4L2} and \eqref{Ass:Hub}.
Then, 
\[
 \forall f\in F:\cE(f)\leqslant A,\qquad P[\ell_f-\ell_{f^*}]\geqslant \frac{\nu}4\cE(f)^2\enspace.
\]
\end{proposition}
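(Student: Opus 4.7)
The plan is to use the first-order optimality of $f^*$ at each $x$ together with a Taylor-expansion of the conditional risk, then localize using the $L^4/L^2$ equivalence to absorb the tail where the small-ball condition \eqref{Ass:Hub} ceases to apply.

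First I would exploit \eqref{Hyp:OracleInModel}: since $f^*(x)\in\argmin_{u\in\R}\E[\Psi_\alpha(u-Y)\,|\,X=x]$ and $\Psi_\alpha$ is convex with $1$-Lipschitz derivative $\psi_\alpha$, one gets (via dominated convergence and the subdifferential characterization) that $\E[\psi_\alpha(f^*(X)-Y)\,|\,X]=0$ $P_X$-a.s. Fixing $x$ and setting $g_x(u)=\E[\Psi_\alpha(u+f^*(x)-Y)\,|\,X=x]$, we have $g_x'(0)=0$ and $g_x''(s)=F_x(s+f^*(x)+\alpha)-F_x(s+f^*(x)-\alpha)$, so by Taylor expansion with integral remainder,
\begin{equation*}
\E\big[\ell_f-\ell_{f^*}\,\big|\,X=x\big]=g_x(f(x)-f^*(x))-g_x(0)=\int_0^{f(x)-f^*(x)}(f(x)-f^*(x)-s)\,g_x''(s)\,ds,
\end{equation*}
interpreted with the appropriate sign when $f(x)<f^*(x)$. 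Note that this quantity is always non-negative by convexity of $\Psi_\alpha$.

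Next I would localize at radius $R=2A\Delta^2$. On the event $\{|f(X)-f^*(X)|\leqslant R\}$, every $s$ between $0$ and $f(x)-f^*(x)$ satisfies $|s+f^*(x)-f^*(x)|=|s|\leqslant R$, so assumption \eqref{Ass:Hub} (applied with $y=s+f^*(x)$) gives $g_x''(s)\geqslant \nu$. Substituting into the Taylor formula yields $\E[\ell_f-\ell_{f^*}\,|\,X=x]\geqslant (\nu/2)(f(x)-f^*(x))^2$ on this event. Discarding the non-negative contribution from $\{|f-f^*|>R\}$ then gives
\begin{equation*}
P[\ell_f-\ell_{f^*}]\geqslant \frac{\nu}{2}\E\big[(f-f^*)^2\mathbf{1}_{|f-f^*|\leqslant R}\big]=\frac{\nu}{2}\cE(f)^2-\frac{\nu}{2}\E\big[(f-f^*)^2\mathbf{1}_{|f-f^*|>R}\big].
\end{equation*}

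Finally, I would control the remainder by Cauchy--Schwarz followed by Markov's inequality and the $L^4/L^2$ comparison \eqref{Ass:L4L2}:
\begin{equation*}
\E\big[(f-f^*)^2\mathbf{1}_{|f-f^*|>R}\big]\leqslant \|f-f^*\|_{L^4(P)}^2\,\P(|f-f^*|>R)^{1/2}\leqslant \Delta^2\cE(f)^2\cdot\frac{\cE(f)}{R}=\frac{\cE(f)^3}{2A},
\end{equation*}
where the last equality uses $R=2A\Delta^2$. Since $\cE(f)\leqslant A$, this remainder is bounded by $\cE(f)^2/2$, and the displayed inequality above becomes $P[\ell_f-\ell_{f^*}]\geqslant (\nu/2)\cE(f)^2-(\nu/4)\cE(f)^2=(\nu/4)\cE(f)^2$, as required.

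The main delicate point is the calibration of the truncation radius $R$: the small-ball condition \eqref{Ass:Hub} is only assumed on a neighbourhood of $f^*$, and the factor $2$ in $R=2A\Delta^2$ is precisely what allows Markov applied to $\P(|f-f^*|>R)$ combined with the $L^4/L^2$ hypothesis and $\cE(f)\leqslant A$ to produce exactly a factor $1/2$ in front of $\cE(f)^2$ in the residual, so that half the mass survives.
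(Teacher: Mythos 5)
Your proof is correct and follows the same strategy as the paper: expand the conditional risk to second order around $f^*(x)$, use the small-ball hypothesis \eqref{Ass:Hub} to lower bound the conditional second derivative on the localization set $\{|f-f^*|\leqslant 2A\Delta^2\}$, and absorb the tail via Cauchy--Schwarz, Markov, and the $L^4/L^2$ comparison. The only cosmetic difference is that you defer the use of $\cE(f)\leqslant A$ to the very last step (writing the remainder as $\cE(f)^3/(2A)$), whereas the paper invokes it earlier to bound $\P(X\notin\cX_{\text{loc}})\leqslant 1/(4\Delta^4)$; the two bookkeepings are algebraically identical.
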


\begin{proof}
 Let 
 \[
 H_x(u)=\E[h_\alpha(Y-u)|X=x]=\int h_\alpha(y-u)\rmd F_x(y)\enspace.
 \]
 The function $H_x$ is differentiable, with 
 
\begin{align*}
  H'_x(u)&=-\int h'_{\alpha}(y-u)F_x(y)\\
  &=\alpha\int_{-\infty}^{u-\alpha}\rmd F_x(y)-\int_{u-\alpha}^{u+\alpha}(y-u)\rmd F_x(y)-\alpha\int_{u+\alpha}^{+\infty}\rmd F_x(y)\\
  &=\alpha(F_x(u-\alpha)-1+F_x(u+\alpha))-\int_{u-\alpha}^{u+\alpha}(y-u)\rmd F_x(y)\\
  &=\alpha(F_x(u-\alpha)-1+F_x(u+\alpha))-[(y-u)F_x(y)]_{u-\alpha}^{u+\alpha}+\int_{u-\alpha}^{u+\alpha} F_x(y)\rmd y\\
  &=\int_{u-\alpha}^{u+\alpha} F_x(y)\rmd y-\alpha\enspace.
\end{align*}
In particular, as $f^*(x)\in \argmin_{u\in \R}H_x(u)$, it follows that $H'_x(f^*(x))=0$. 
Moreover, 
\[
H''_x(u)=F_x(u+\alpha)-F_x(u-\alpha)\enspace.
\]
Let $\cX_{\text{loc}}=\{x\in\cX:|f(x)-f^*(x)|\leqslant 2A\Delta^2\}$.
For any $x\in \cX_{\text{loc}}$, it follows that
\begin{align*}
H_x(f(x))-H_x(f^*(x))&=\int_{f^*(x)}^{f(x)}H_x'(u)\rmd u=\int_{f^*(x)}^{f(x)}(H_x'(u)-H'_x(f^*(x)))\rmd u\\
&=\int_{f^*(x)}^{f(x)}\int_{f^*(x)}^uH_x''(v)\rmd v\rmd u\enspace.
\end{align*}
For any $v$ in the segment with extremities $f^*(x)$ and $u$, by Assumption~\eqref{Ass:Hub},
\[
H_x''(v)=F_x(v+\alpha)-F_x(v-\alpha)\geqslant\nu\enspace.
\]
Therefore, if $f(x)\geqslant f^*(x)$, 
\begin{align*}
H_x(f(x))-H_x(f^*(x))&\geqslant \int_{f^*(x)}^{f(x)}\int_{f^*(x)}^u\nu\rmd v\rmd u\\
&=\int_{f^*(x)}^{f(x)}\nu(u-f^*(x))\rmd u\\
&=\frac{\nu}2(f(x)-f^*(x))^2\enspace.
\end{align*}
Likewise, if $f(x)\leqslant f^*(x)$, 
\begin{align*}
H_x(f(x))-H_x(f^*(x))&\geqslant \int_{f(x)}^{f^*(x)}\int_u^{f^*(x)}\nu\rmd v\rmd u\\
&=\int_{f(x)}^{f^*(x)}\nu(f^*(x)-u)\rmd u\\
&=\frac{\nu}2(f(x)-f^*(x))^2\enspace.
\end{align*}
Overall, by definition of $f^*(x)$, $H_x(f(x))-H_x(f^*(x))\geqslant 0$ for any $x\in \cX$ and
\[
\forall x\in \cX_{\text{loc}},\qquad H_x(f(x))-H_x(f^*(x))\geqslant \frac{\nu}2(f(x)-f^*(x))^2\enspace.
\]
It follows that
\begin{align}
\notag P[\ell_f-\ell_{f^*}]&=\E[H_X(f(X))-H_X(f^*(X))]\\
\notag &\geqslant \E[\{H_X(f(X))-H_X(f^*(X))\}{\bf 1}_{\{X\in \cX_{\text{loc}}\}}]\\
\notag &\geqslant \frac{\nu}2\E[(f(X)-f^*(X))^2{\bf 1}_{\{X\in \cX_{\text{loc}}\}}]\\
\label{eq:Hub->BI} &=\frac{\nu}2(\cE(f)-\E[(f(X)-f^*(X))^2{\bf 1}_{\{X\notin \cX_{\text{loc}}\}}])\enspace.
 \end{align}
 By Cauchy-Schwarz, 
\begin{equation}\label{eq:CSHub}
 \E[(f(X)-f^*(X))^2{\bf 1}_{\{X\notin \cX_{\text{loc}}\}}]\leqslant \|f-f^*\|_{L^4(P)}^2\sqrt{\P(X\notin \cX_{\text{loc}})}\enspace.
\end{equation}
 By Markov's inequality,
 \[
 \P(X\notin \cX_{\text{loc}})=\P(|f(x)-f^*(x)|> 2A\Delta^2)\leqslant \frac{\|f-f^*\|_{L^2(P)}^2}{4A^2\Delta^4}=\frac{\cE(f)^2}{4A^2\Delta^4}\enspace.
 \]
 If $\cE(f)\leqslant A$, it follows that 
 \[
 \P(X\notin \cX_{\text{loc}})\leqslant \frac1{4\Delta^4}\enspace. 
 \]
 Plugging this into \eqref{eq:CSHub} shows that, for any $f\in F$ such that $\cE(f)\leqslant A$.
 \[
 \E[(f(X)-f^*(X))^2{\bf 1}_{\{X\notin \cX_{\text{loc}}\}}]\leqslant \frac{\|f-f^*\|_{L^4(P)}^2}{2\Delta^2}\enspace.
 \]
Using \eqref{Ass:L4L2}, we get 
\[
 \E[(f(X)-f^*(X))^2{\bf 1}_{\{X\notin \cX_{\text{loc}}\}}]\leqslant \frac{\|f-f^*\|_{L^2(P)}^2}{2}=\frac{\cE(f)^2}{2}\enspace.
\]
Plugging this inequality into \eqref{eq:Hub->BI} concludes the proof.
\end{proof}

\paragraph{Logistic regression}
Denote by $\eta:\cX\to\cY$ the regression function satisfying $\E[Y\vartheta(X)]=P[\eta\vartheta]$ for any bounded measurable function $\vartheta$.
Recall that 
\[
\log\bigg[\frac{\eta(x)}{1-\eta(x)}\bigg]\in\argmin_{u\in \R}\E[\cL(-Yu)|X=x] \enspace.
\]
Assume that there exists $\nu>0$, such that 
\begin{equation}\label{eq:HypLogReg}
\P\bigg(\frac1{1+e^\nu}\leqslant \eta(X)\leqslant \frac1{1+e^{-\nu}}\bigg)\geqslant 1-\frac1{8\Delta^4}\enspace.
\end{equation}
This is equivalent to 
\[
\P\bigg(\log\bigg[\frac{\eta(X)}{1-\eta(X)}\bigg]>\nu\bigg)\leqslant \frac1{8\Delta^4}\enspace.
\]
\begin{proposition}
 Assume \eqref{Hyp:OracleInModel}, \eqref{Ass:L4L2}, \eqref{eq:HypLogReg}.
 Then, there exists a constant $B=\bB(A,\nu,\Delta)>0$ such that, for all $f\in F$ such that $\cE(f)\leqslant A$, $P[\ell_f-\ell_{f^*}]\geqslant B\cE(f)^2$.
\end{proposition}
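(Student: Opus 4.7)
The plan is to mimic the proof of the Huber case by computing the conditional expected loss and its Hessian, then combining a local curvature estimate with a truncation argument controlled by \eqref{eq:HypLogReg} and \eqref{Ass:L4L2}.

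First I would introduce the conditional risk $G_x(u) = \E[\cL(-Yu)\mid X=x] = \eta(x)\cL(-u) + (1-\eta(x))\cL(u)$ and compute
\[
G_x'(u) = \frac{1}{\log 2}\bigg[\frac{e^u}{1+e^u} - \eta(x)\bigg],\qquad G_x''(u) = \frac{1}{\log 2}\cdot\frac{e^u}{(1+e^u)^2}.
\]
Under \eqref{Hyp:OracleInModel}, the first-order condition gives the explicit oracle $f^*(x) = \log(\eta(x)/(1-\eta(x)))$, and Assumption \eqref{eq:HypLogReg} is precisely $\P(|f^*(X)|\leqslant \nu) \geqslant 1 - 1/(8\Delta^4)$. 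The crucial structural observation is that $G_x''$ does not depend on $\eta(x)$ and satisfies $G_x''(v) \geqslant c(\nu,M) := e^{-(\nu+M)}/(4\log 2)$ whenever $|v|\leqslant \nu + M$.

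Next I would set $M = 2A\Delta^2$ and introduce the localization set $\cX_{\text{loc}} = \{x\in\cX : |f^*(x)|\leqslant \nu,\ |f(x)-f^*(x)|\leqslant M\}$. Writing the second-order Taylor remainder with base point $f^*(x)$ (using $G_x'(f^*(x)) = 0$),
\[
G_x(f(x)) - G_x(f^*(x)) = \int_{f^*(x)}^{f(x)}\int_{f^*(x)}^u G_x''(v)\,\mathrm{d}v\,\mathrm{d}u,
\]
and for $x\in \cX_{\text{loc}}$ every $v$ in the inner integration range satisfies $|v|\leqslant \nu + M$, so the bound on $G_x''$ yields $G_x(f(x)) - G_x(f^*(x)) \geqslant \tfrac{c(\nu,M)}{2}(f(x)-f^*(x))^2$. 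Since $G_x(f(x))\geqslant G_x(f^*(x))$ always, integrating gives
\[
P[\ell_f - \ell_{f^*}] \geqslant \frac{c(\nu,M)}{2}\bigg(\cE(f)^2 - \E\big[(f(X)-f^*(X))^2\mathbf{1}_{\cX_{\text{loc}}^c}\big]\bigg).
\]

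Finally I would handle the residual term by Cauchy--Schwarz and \eqref{Ass:L4L2}: $\E[(f-f^*)^2\mathbf{1}_{\cX_{\text{loc}}^c}] \leqslant \|f-f^*\|_{L^4(P)}^2\sqrt{\P(\cX_{\text{loc}}^c)} \leqslant \Delta^2 \cE(f)^2 \sqrt{\P(\cX_{\text{loc}}^c)}$. A union bound combined with \eqref{eq:HypLogReg} and Markov's inequality (using $\cE(f)\leqslant A$ and $M=2A\Delta^2$) gives
\[
\P(\cX_{\text{loc}}^c) \leqslant \frac{1}{8\Delta^4} + \frac{\cE(f)^2}{M^2} \leqslant \frac{1}{8\Delta^4} + \frac{1}{4\Delta^4} = \frac{3}{8\Delta^4},
\]
so $\sqrt{\P(\cX_{\text{loc}}^c)}\leqslant \sqrt{3/8}/\Delta^2$, hence $\E[(f-f^*)^2\mathbf{1}_{\cX_{\text{loc}}^c}] \leqslant \sqrt{3/8}\,\cE(f)^2$. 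Plugging back gives the claim with $B = \frac{c(\nu,M)}{2}(1-\sqrt{3/8}) > 0$, a function of $A$, $\nu$, and $\Delta$ only. The only delicate point is keeping the constants in the truncation tight enough that $1 - \sqrt{\P(\cX_{\text{loc}}^c)}\cdot \Delta^2 > 0$, which is exactly why \eqref{eq:HypLogReg} is stated with the numerical factor $1/(8\Delta^4)$ and the localization radius $M$ is chosen as $2A\Delta^2$.
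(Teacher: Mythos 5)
Your proof is correct and follows essentially the same route as the paper's: compute the conditional Hessian of the logistic risk (which is independent of $\eta(x)$), localize to the set where $|f^*(x)|\leqslant\nu$ and $|f(x)-f^*(x)|\leqslant M$, lower-bound the curvature there, and control the contribution off the localization set via Cauchy--Schwarz, the $L^4/L^2$ comparison, Markov's inequality and a union bound. The only cosmetic difference is your localization radius $M=2A\Delta^2$ versus the paper's $\sqrt{8}A\Delta^2$, which merely changes the numerical constants.
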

\begin{remark}
A value of the constant $B$ is given in Eq~\eqref{eq:defBLog} in the proof.
\end{remark}
\begin{proof}
 Let $f\in F$ such that $\cE(f)\leqslant A$.
 Let $H_x(u)=\eta(x)\log_2(1+e^{-u})+(1-\eta(x))\log_2(1+e^u)$.
 The function $H_x$ is continuously twice differentiable with
\begin{align*}
 H_x'(u)&=\frac{\eta(x)}{\log(2)}\frac{-e^{-u}}{1+e^{-u}}+\frac{1-\eta(x)}{\log 2}\frac{e^u}{1+e^{u}}\\
 &=\frac{-\eta(x)+(1-\eta(x))e^u}{(\log 2)(1+e^u)}\enspace.
 \end{align*}
 \begin{align*}
 H_x''(u)&=\frac{(1-\eta(x))e^u(1+e^u)-(-\eta(x)+(1-\eta(x))e^u)e^u}{(\log 2)(1+e^u)^2}\\
 &=\frac{e^u}{(\log 2)(1+e^u)^2}\enspace.
 \end{align*}
Fix $\zeta>0$ and let $\cX_{\text{loc}}=\{x\in \cX:|f^*(x)|\leqslant \nu, |f(x)-f^*(x)|\leqslant \sqrt{8}A\Delta^2\}$.
For any $x\in \cX_{\text{loc}}$, $\max\{|f(x)|,|f^*(x)|\}\leqslant \nu+\sqrt{8}A\Delta^2$.
Therefore, as $H_x'(f^*(x))=0$, for any $x\in \cX$, $H_x(f(x))-H_x(f^*(x))\geqslant 0$ and 
\[
\forall x\in \cX_{\text{loc}},\qquad H_x(f(x))-H_x(f^*(x))\geqslant 2B(f(x)-f^*(x))^2\enspace,
\]
where
\begin{equation}\label{eq:defBLog}
 B=\frac{e^{-(\nu+\sqrt{8}A\Delta^2)}}{2(\log 2)(1+e^{\nu+\sqrt{8}A\Delta^2})^2}\enspace.
\end{equation}
It follows that
\begin{align}
\notag P[\ell_f-\ell_{f^*}]&=\E[H_X(f(X))-H_X(f^*(X))]\\
\notag &\geqslant \E[\{H_X(f(X))-H_X(f^*(X))\}{\bf 1}_{\{X\in \cX_{\text{loc}}\}}]\\
\notag &\geqslant \frac{\nu}2\E[(f(X)-f^*(X))^2{\bf 1}_{\{X\in \cX_{\text{loc}}\}}]\\
\label{eq:Log->BI} &=2B(\cE(f)-\E[(f(X)-f^*(X))^2{\bf 1}_{\{X\notin \cX_{\text{loc}}\}}])\enspace.
 \end{align}
 By Cauchy-Schwarz, 
\begin{equation}\label{eq:CSLog}
 \E[(f(X)-f^*(X))^2{\bf 1}_{\{X\notin \cX_{\text{loc}}\}}]\leqslant \|f-f^*\|_{L^4(P)}^2\sqrt{\P(X\notin \cX_{\text{loc}})}\enspace.
\end{equation}
 By Markov's inequality,
\begin{align*}
 \P(X\notin \cX_{\text{loc}})&\leqslant \P(|f^*(X)|>\nu)+\P(|f(X)-f^*(X)|> \sqrt{8} A\Delta^2)\\
 &\leqslant \frac1{8\Delta^4}+\frac{\|f-f^*\|_{L^2(P)}^2}{8A^2\Delta^4}\\
 &=\frac1{8\Delta^4}+\frac{\cE(f)^2}{8A^2\Delta^4}\enspace.
\end{align*}
 If $\cE(f)\leqslant A$, it follows that 
 \[
 \P(X\notin \cX_{\text{loc}})\leqslant \frac1{4\Delta^4}\enspace. 
 \]
 Plugging this into \eqref{eq:CSLog} shows that, for any $f\in F$ such that $\cE(f)\leqslant A$.
 \[
 \E[(f(X)-f^*(X))^2{\bf 1}_{\{X\notin \cX_{\text{loc}}\}}]\leqslant \frac{\|f-f^*\|_{L^4(P)}^2}{2\Delta^2}\enspace.
 \]
Using \eqref{Ass:L4L2}, we get 
\[
 \E[(f(X)-f^*(X))^2{\bf 1}_{\{X\notin \cX_{\text{loc}}\}}]\leqslant \frac{\|f-f^*\|_{L^2(P)}^2}{2}=\frac{\cE(f)^2}{2}\enspace.
\]
Plugging this inequality into \eqref{eq:Log->BI} concludes the proof.
\end{proof}

\paragraph{Exercise} Find conditions sufficient to prove the ``local" Bernstein's condition for the Hinge loss.

\section{ERM in the Gaussian case}
We consider a cost function $c$ satisfying Assumption~\eqref{cdt:LipConv} and study the estimator 
\begin{equation}\label{def:ERMConvLip}
\hat{f}\in \argmin_{f\in \R^d}\sum_{i=1}^Nc(f^TX_i,Y_i)\enspace. 
\end{equation}

\begin{theorem}\label{thm:LCGauss}
Assume that $X$ is Gaussian with $\Sigma=P[XX^T]$ positive definite.
Assume that the Bernstein assumption \eqref{Ass:Bern} holds for constants $A$ and $B$ such that
\[
AB\sqrt{N}\geqslant 2(1+\sqrt{5})L\sqrt{d}\enspace.
\]
Then, for any $s$ such that
\[
(1+\sqrt{5})L(4\sqrt{d}+\sqrt{2s})\leqslant 2AB\sqrt{N}\enspace,
\]
 the empirical risk minimizer \eqref{def:ERMConvLip} satisfies
 \[
\P\bigg( \cE(\hat{f})\leqslant \frac{1+\sqrt{5}}2\frac{ L}B\frac{4\sqrt{d}+\sqrt{2s}}{\sqrt{N}}\bigg)\geqslant 1-2e^{-s}\enspace.
 \]
\end{theorem}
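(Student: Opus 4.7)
The plan is to deduce the bound from the homogeneity lemma (Lemma~\ref{lem:DetArg}), applied with the empirical tests $T(f,g) = P_N[\ell_f - \ell_g]$, the pseudo-distance $d(f,g) = P[\ell_f - \ell_g]$, and the evaluation function $\cE(f) = \|f - f^*\|_{L^2(P)}$. Since $c(\cdot, y)$ is convex for every $y$ and $P_N$ is linear (hence non-decreasing and homogeneous), Lemma~\ref{lem:ConvLoss} ensures that the homogeneity property (HP) holds with $r_0 = 0$. Bernstein's assumption~\eqref{Ass:Bern} provides $\inf_{f: \cE(f) = r} d(f, f^*) \geqslant B r^2$ as long as $r \leqslant A$, and since $d(f^*, f) \leqslant 0$, the constant $\cB$ from the homogeneity lemma collapses to $\cB = B(r_1)$.

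Next I will control the noise term by exhibiting $B(r)$ such that with probability at least $1 - e^{-s}$,
\[
\sup_{f: \cE(f) \leqslant r}(P_N - P)[\ell_{f^*} - \ell_f] \leqslant B(r).
\]
Symmetrization followed by the Ledoux--Talagrand contraction principle (applied componentwise to the $L$-Lipschitz contractions $u \mapsto c(u + f^*(X_i), Y_i) - c(f^*(X_i), Y_i)$) reduces the expected supremum to the linear Rademacher average
\[
\E \sup_{\bff: \|\Sigma^{1/2}(\bff - \bff^*)\| \leqslant r} \bigg|\frac{1}{N}\sum_{i=1}^N \epsilon_i (\bff - \bff^*)^T X_i\bigg| = \frac{r}{N}\E\bigg\|\Sigma^{-1/2}\sum_{i=1}^N \epsilon_i X_i\bigg\| \leqslant r\sqrt{\frac{d}{N}},
\]
where Jensen's inequality and $\E\|\Sigma^{-1/2} X\|^2 = \mathrm{Tr}(I_d) = d$ are used in the last step; accounting for the numerical constants of the two reductions, the expectation is bounded by $4Lr\sqrt{d/N}$. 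For the deviation beyond the expectation I will invoke Borel's Gaussian concentration inequality applied to the sup-process, viewed as a function of the Gaussian data after passing to the standardized coordinates $\Sigma^{-1/2}X_i$; the effective Lipschitz constant in those coordinates must be shown to be of order $Lr/\sqrt{N}$, yielding a concentration term $Lr\sqrt{2s/N}$. Altogether this produces $B(r) = Lr(4\sqrt d + \sqrt{2s})/\sqrt N$.

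Plugging this $B(r)$ into the two conditions of the homogeneity lemma is then elementary algebra. Condition~\eqref{def:r1} becomes $B(r_1) \leqslant Br_1^2$, which gives $r_1 = L(4\sqrt d + \sqrt{2s})/(B\sqrt N)$. Condition~\eqref{def:r2} reads $Br_2^2 - B(r_2) \geqslant Br_1^2$; writing $r_2 = \alpha r_1$ it reduces to $\alpha^2 - \alpha = 1$, i.e.\ $\alpha = (1+\sqrt 5)/2$, which is precisely the constant appearing in the statement. The hypothesis $(1+\sqrt 5)L(4\sqrt d + \sqrt{2s}) \leqslant 2AB\sqrt N$ (whose $s=0$ case matches the first assumption of the theorem) guarantees $r_2 \leqslant A$, so that Bernstein's condition applies throughout $\{f: \cE(f)\leqslant r_2\}$. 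A union bound over the events $\Omega_{r_1}$ and $\Omega_{r_2}$ produces the advertised confidence $1 - 2e^{-s}$, and the homogeneity lemma then delivers $\cE(\hat f) \leqslant r_2$, which is exactly the claimed bound.

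The main obstacle will be the Gaussian concentration step: the naive Lipschitz constant of the sup-process $(x_1, \ldots, x_N) \mapsto (P_N - P)[\ell_{f^*}-\ell_f]$, viewed as a function of the Gaussian design, is of order $L(\|\bff\| + \|\bff^*\|)/\sqrt N$, which is much coarser than the desired $Lr/\sqrt N$. The remedy is to exploit the \emph{local} structure of $\{f:\cE(f)\leqslant r\}$: either by applying Borel's inequality to the conditional Rademacher process and controlling it by generic chaining against the sub-Gaussian metric induced by $(\bff-\bff^*)^T X$ (whose diameter on the localized class is exactly $Lr$), or by truncating the unbounded summands at their sub-Gaussian scale and invoking Bousquet's version of Talagrand's inequality (Theorem~\ref{thm:TalIneq}) on the bounded truncation, with the truncation error controlled separately. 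Either route gives the targeted $Lr\sqrt{s/N}$ tail.
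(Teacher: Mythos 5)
Your high-level strategy matches the paper's exactly: reduce to the homogeneity lemma with the empirical tests and $d(f,g)=P[\ell_f-\ell_g]$, obtain a noise bound $B(r) = rr_s$ by symmetrization plus the Ledoux--Talagrand contraction (giving $E_N(F_r)\leqslant 4Lr\sqrt{d/N}$) and a Gaussian deviation term $Lr\sqrt{2s/N}$, and solve the two quadratic inequalities of the homogeneity lemma to get $r_1 = r_s/B$ and $r_2=\frac{1+\sqrt 5}{2}r_1$. All of that, including the golden-ratio algebra, is correct and is the paper's route.

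Where you diverge from the paper is the concentration step, and the obstacle you flag is a genuine one — but the paper's remedy is much simpler than either of the two you propose. The paper explicitly simplifies to the case $\bff^*=0$, pointing to an external result (\cite[Lemma 8.1]{pierre2017estimation}) for general $\bff^*$. Under $\bff^*=0$, your worry about the Lipschitz constant dissolves on its own: writing $z=\Sigma^{-1/2}x$ for the standardized Gaussian coordinate, each summand $z\mapsto c(0,y)-c\big((\Sigma^{1/2}\bff)^Tz,\,y\big)$ is $L\|\Sigma^{1/2}\bff\|=L\|\bff\|_{\Sigma}$-Lipschitz, and $\sup_{f\in F_r}\|\bff\|_\Sigma\leqslant r$. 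Since a supremum of $Lr$-Lipschitz functions is $Lr$-Lipschitz, the map $(z_1,\dots,z_N)\mapsto \sup_{f\in F_r}(P_N-P)[\ell_{f^*}-\ell_f]$, conditionally on the $Y_i$, is $Lr/\sqrt N$-Lipschitz and Borel's inequality gives the $Lr\sqrt{2s/N}$ deviation directly; this is precisely the mechanism packaged as Theorem~\ref{thm:LipTransfoGauss}. So neither generic chaining nor a truncation-plus-Talagrand argument is needed within the scope of the proof the paper actually gives. The honest gap in both your proposal and the paper's text is the non-zero $\bff^*$ case, which the paper delegates to a cited lemma rather than proving; if you want a self-contained argument covering $\bff^*\neq 0$ you would indeed have to develop one of the more elaborate routes you sketch, but for reconstructing the paper's proof you should simply adopt the $\bff^*=0$ reduction.
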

\begin{remark}
 This result is ``robust" as it does not involve assumptions on the outputs $Y_i$.
\end{remark}
\begin{proof}
Recall that the empirical risk minimizer 
\[
\hat{f}\in\argmin_{f\in F}P_N\ell_f=\argmin_{f\in F}\sup_{g\in F}T_{\text{emp}}(f,g)\enspace,
\]
with 
\[
T_{\text{emp}}(f,g)=P_N[\ell_f-\ell_g]\enspace.
\]
As explained in Section~\ref{sec:Prelim}, the test $T_{\text{erm}}(f,g)$ satisfy Assumption {\bf (HP)} of the homogeneity lemma (Lemma~\ref{lem:DetArg}).
Moreover, recall that we want to apply this lemma with $d(f,g)=P[\ell_f-\ell_g]$.
It remains to compute the function $B$ in the homogeneity lemma, and for this, we look for a bound $B(r)$ such that, with high probability,
\[
\sup_{f\in F:\cE(f)\leqslant r}(P_N-P)[\ell_{f^*}-\ell_f]\leqslant B(r)\enspace.
\]
Assume to simplify the argument that $f^*=0$.
This case can be solved with the basic Gaussian concentration inequality. 
The general case involves more elaborated tools on Gaussian processes, see \cite[Lemma 8.1.]{pierre2017estimation}.

Let $F_r=\{f\in F:\cE(f)\leqslant r\}$.
By Theorem~\ref{thm:LipTransfoGauss},
with probability larger that $1-e^{-s}$,
\[
\sup_{f\in F_r}(P_N-P)[\ell_{f^*}-\ell_f]\leqslant E_N(F_r)+\sqrt{\frac{2\sigma^2(F_r)s}N}\enspace.
\]
Here, $\sigma^2(F_r)=\sup_{f\in F_r}\text{Var}((\ell_f-\ell_{f^*})(Z))$ and
\begin{align*}
E_N(F_r)&=\E\bigg[\sup_{f\in F:\|f-f^*\|\leqslant r}(P_N-P)[\ell_{f^*}-\ell_f]\bigg]\enspace.
\end{align*}
Let us first bound the variance.
\[
\text{Var}((\ell_f-\ell_{f^*})(Z))\leqslant P[(\ell_f-\ell_{f^*})^2]\leqslant L^2P[(f-f^*)^2]=L^2\cE(f)^2\enspace.
\]
Hence, $\sigma^2(F_r)\leqslant L^2r^2$.
Using the symmetrization trick, 
\[
E_N(F_r)\leqslant 2\sqrt{\frac{D_N(F_r)}N}\enspace,
\]
where 
\[
D_N(F_r)=\bigg(\E\bigg[\sup_{f\in F_r}\frac1{\sqrt{N}}\sum_{i=1}^N\epsilon_i(\ell_f-\ell_{f^*})(Z_i)\bigg]\bigg)^2\enspace.
\]
By the contraction lemma,
\begin{align*}
D_N(F_r)&\leqslant 4L^2\bigg(\E\bigg[\sup_{f\in F_r}\frac1{\sqrt{N}}\sum_{i=1}^N\epsilon_i(f-f^*)(X_i)\bigg]\bigg)^2\enspace.
\end{align*}
Now, $F_r=\{f=f^*+rg,\ g\in \cB\}$, with $\cB=\{f=\bff^T\cdot:\  P[(\bff^TX)^2]=1\}$.
Hence,
\begin{align*}
D_N(F_r)&\leqslant 4L^2r^2\bigg(\E\bigg[\sup_{f=\bff^T\cdot\in \cB}\bff^T\bigg(\frac1{\sqrt{N}}\sum_{i=1}^N\epsilon_iX_i\bigg)\bigg]\bigg)^2\enspace.
\end{align*}
Assume that $\Sigma$ is positive definite. 
In this case, one can define a positive definite square root $\Sigma^{1/2}$ of $\Sigma$. 
Therefore, for any $\ba$, $\bfb$ in $\R^d$, 
\[
\ba^T\bfb=(\Sigma^{1/2}\ba)^T(\Sigma^{-1/2}\bfb)\leqslant (\ba^T\Sigma\ba)^{1/2}(\bfb^T\Sigma^{-1}\bfb)^{1/2}\enspace.
\]
Defining, for any positive semi-definite matrix $\cM$ and any vector $\ba\in \R^d$, $\|\ba\|_{\cM}=\ba^T\cM\ba$, it follows that
\begin{align*}
D_N(F_r)&\leqslant 4L^2r^2\bigg(\E\bigg[\bigg\|\frac1{\sqrt{N}}\sum_{i=1}^N\epsilon_iX_i\bigg\|_{\Sigma^{-1}}\bigg]\bigg)^2\enspace.
\end{align*}
By Cauchy Schwarz inequality, 
\begin{align*}
D_N(F_r)&\leqslant 4L^2r^2\E\bigg[\bigg\|\frac1{\sqrt{N}}\sum_{i=1}^N\epsilon_iX_i\bigg\|_{\Sigma^{-1}}^2\bigg]\\
&=\frac{4L^2r^2}N\sum_{1\leqslant i,j\leqslant N}\E[\epsilon_i\epsilon_j X_i^T\Sigma^{-1}X_j]\\
&=4L^2r^2\E[X^T\Sigma^{-1}X]=4L^2r^2d\enspace.
\end{align*}

Hence, with probability larger that $1-e^{-s}$,
\[
\sup_{f\in F:\|f-f^*\|\leqslant r}(P_N-P)[\ell_{f^*}-\ell_f]\leqslant Lr\frac{4\sqrt{d}+\sqrt{2s}}{\sqrt{N}}\enspace.
\]
This suggests to use in the homogeneity lemma the function 
\[
B(r)=Lr\frac{4\sqrt{d}+\sqrt{2s}}{\sqrt{N}}:=rr_s\enspace.
\]
By the Bernstein Assumption \eqref{Ass:Bern}, \eqref{def:r1} would hold for $r_1$ solution of the equation 
\[
rr_s-Br^2=0\enspace.
\]
This is possible if 
\[
r_1=\frac{r_s}B\leqslant A\enspace.
\]
If this assumption is met, then \eqref{def:r2} would hold for $r_2$ solution of the equation
\[
\frac{r_s^2}{B}+rr_s-Br^2=0\enspace,
\]
that is for 
\[
r_2=\frac{1+\sqrt{5}}2\frac{r_s}B\enspace.
\]
\end{proof}
\section{Minmax MOM estimators}
This section extends the previous result to the case where the \emph{design} is not assumed to be sub-Gaussian anymore.
Indeed, Lipshitz losses are classically considered in robust statistics.
This success, as explained after Theorem~\ref{thm:LCGauss}, is due to the fact that the ERM can be analysed in this framework \emph{without assumptions on the outputs} $Y$.
However, this analysis highly depends on the sub-Gaussian assumption made on the design.
The extension is even more important to handle possibly corrupted datasets.
Indeed, these data are likely to be corrupted, specially in high dimensional settings.

Consider the minmax MOM estimator
\begin{equation}\label{eq:minmaxMOMK}
\hat{f}_{K}\in \argmin_{f\in F}\sup_{g\in F}\MOM{K}{\ell_f-\ell_g}. 
\end{equation}
The main result here is the following.
\begin{theorem}
Assume that the Bernstein assumption \eqref{Ass:Bern} holds for constants $A$ and $B$ such that
\[
AB\sqrt{N}\geqslant 128(1+\sqrt{5})L\sqrt{d}\enspace.
\]
Then, for any $K$ such that
\[
2(1+\sqrt{5})L\big(64\sqrt{d}\vee\sqrt{2K}\big)\leqslant AB\sqrt{N}\enspace,
\]
 the empirical risk minimizer \eqref{def:ERMConvLip} satisfies
 \[
\P\bigg(\cE( \hat{f}_K)\leqslant 2(1+\sqrt{5})\frac{L}B\bigg(64\frac{\sqrt{d}\vee\sqrt{2K}}{\sqrt{N}}\bigg)\bigg)\geqslant 1-2e^{-K/32}\enspace.
 \]
\end{theorem}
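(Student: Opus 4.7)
The plan is to apply the homogeneity lemma (Lemma~\ref{lem:DetArg}) to the family of tests $T_{\mathrm{mom}}(f,g) = \MOM{K}{\ell_f - \ell_g}$ with evaluation function $\cE(f) = \|f-f^*\|_{L^2(P)}$ and pseudo-distance $d(f,g) = P[\ell_f - \ell_g]$, following step by step the architecture of the proof of Theorem~\ref{thm:LCGauss} and merely replacing the Gaussian concentration by the concentration of suprema of MOM processes. The homogeneity property {\bf (HP)} is essentially free of charge: Assumption~\eqref{cdt:LipConv} makes every $\ell_f$ convex, while the MOM operator $\MOM{K}{\cdot}$ is non-decreasing and positively homogeneous, so Lemma~\ref{lem:ConvLoss} applies with $r_0 = 0$.

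I would then control, for each radius $r>0$, the local fluctuation
\[
\sup_{f\in F:\cE(f)\leqslant r}\bigl(T_{\mathrm{mom}}(f^*,f) - d(f^*,f)\bigr) = \sup_{f\in F:\cE(f)\leqslant r}\bigl(\MOM{K}{\ell_{f^*}-\ell_f} - P[\ell_{f^*}-\ell_f]\bigr),
\]
by applying Theorem~\ref{thm:ConcSupMOM} to the class $G_r := \{\ell_{f^*} - \ell_f : f\in F,\ \cE(f)\leqslant r\}$. The two parameters required by that theorem are easy to estimate: the $L$-Lipschitz property of $c(\cdot, y)$ yields $\sup_{g\in G_r}\mathrm{Var}(g(Z)) \leqslant L^2 r^2$, and the same contraction-plus-Cauchy-Schwarz bound used in the proof of Theorem~\ref{thm:LCGauss} (an argument which never uses the Gaussian hypothesis on the design) gives $D_N(G_r)\leqslant 4L^2 r^2 d$. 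Combining them produces a deterministic bound of the form $B(r) = r\cdot r_K$ with $r_K$ of order $L(\sqrt{d}\vee\sqrt{2K})/\sqrt{N}$, holding on an event $\Omega_r$ of probability at least $1 - e^{-K/32}$.

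The third step is to solve the scalar inequalities \eqref{def:r1}--\eqref{def:r2}: the Bernstein hypothesis \eqref{Ass:Bern} gives $\inf_{f\in F:\cE(f)=r} d(f,f^*) \geqslant B r^2$ whenever $r\leqslant A$. Plugging $B(r)=r\cdot r_K$ into Lemma~\ref{lem:DetArg} then yields, exactly as in the Gaussian proof, $r_1 = r_K/B$ and $r_2 = \tfrac{1+\sqrt{5}}{2}\cdot r_K/B$. The two numerical hypotheses of the theorem are tailored precisely to guarantee $r_1,r_2\leqslant A$, which is what legitimises invoking the local Bernstein inequality at both radii. Intersecting the two concentration events $\Omega_{r_1}$ and $\Omega_{r_2}$ via a union bound then produces the failure probability $2e^{-K/32}$ claimed in the conclusion.

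The main obstacle, as I see it, is essentially bookkeeping: one must carry the multiplicative constants carefully through the chain $128\sqrt{D_N(G_r)/N}\vee 4\sigma(G_r)\sqrt{2K/N}$ to recover precisely the prefactor $64(\sqrt{d}\vee\sqrt{2K})/\sqrt{N}$ appearing in the statement, and then verify that the admissibility condition $r_2\leqslant A$ reduces exactly to the stated inequality $AB\sqrt{N}\geqslant 128(1+\sqrt{5})L\sqrt{d}$ together with the explicit upper bound imposed on $K$. Conceptually nothing new is required beyond the Gaussian case: the decisive observation is that the Rademacher bound $D_N(G_r)\leqslant 4L^2 r^2 d$ rests only on the identity $\E[X^T\Sigma^{-1}X] = d$, a pure second-moment fact that needs no sub-Gaussian assumption on the design, which is precisely what lets the Lipschitz-convex robustness of the ERM propagate untouched to the minmax MOM estimator.
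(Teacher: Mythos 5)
Your proof is correct and follows essentially the same route as the paper: invoke Lemma~\ref{lem:ConvLoss} to get the homogeneity property, bound the local MOM fluctuation via Theorem~\ref{thm:ConcSupMOM} applied to the centered class using $\sigma^2(F_r)\leqslant L^2r^2$ and $D(F_r)\leqslant 4L^2r^2 d$ (neither of which needs sub-Gaussianity), take $B(r)=r\cdot r_K$ with $r_K\asymp L(\sqrt{d}\vee\sqrt{2K})/\sqrt{N}$, and solve the same scalar equations as in the proof of Theorem~\ref{thm:LCGauss} to get $r_1=r_K/B$ and $r_2=\tfrac{1+\sqrt{5}}{2}\,r_K/B$, with the admissibility conditions $r_1,r_2\leqslant A$ reducing to the stated hypotheses and the union bound over $\Omega_{r_1}$, $\Omega_{r_2}$ yielding $1-2e^{-K/32}$. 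The paper compresses all of this to "the proof is concluded with the same arguments as the previous one," so your write-up is in fact more explicit than the source, but it is the same argument.
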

\begin{proof}
 
%
The key is to compute the function $B$ in the homogeneity lemma. 
Let $r>0$ fixed. Apply the concentration bound for suprema of MOM processes on the class of functions 
\[
F_r=\{\ell_{f^*}-\ell_f-P[\ell_{f^*}-\ell_f], f\in B(f^*,r)\}\enspace.
\]
With probability at least $1-e^{-K/32}$,
\[
\sup_{f\in B(f^*,r)}\MOM{K}{\ell_{f^*}-\ell_f-P[\ell_{f^*}-\ell_f]}\leqslant 128\sqrt{\frac{D(F_r)}N}\vee4\sigma(F_r)\sqrt{2\frac{K}N}\enspace.
\]
The computations of the previous proof show that $D(F_r)\leqslant 4L^2r^2d$, $\sigma^2(F_r)\leqslant L^2r^2$, hence, with probability at least $1-e^{-K/32}$,
\[
\sup_{f\in B(f^*,r)}\MOM{K}{\ell_{f^*}-\ell_f-P[\ell_{f^*}-\ell_f]}\leqslant 4Lr\bigg(64\frac{\sqrt{d}\vee\sqrt{2K}}{\sqrt{N}}\bigg)\enspace.
\]
This suggests to use 
\[
B(r)=rr_K,\qquad \text{with}\qquad r_K=4Lr\bigg(64\frac{\sqrt{d}\vee\sqrt{2K}}{\sqrt{N}}\bigg)\enspace.
\]
The proof is concluded with the same arguments as the previous one.
\end{proof}

\chapter{Least-squares regression}\label{Chap:LSR}

This chapter considers the classical least-squares linear regression problem.
This problem has attracted a lot of attention recently in the case where both the inputs $X$ and the outputs $Y$ may be heavy-tailed.
The first paper proving oracle inequalities in this setting is \cite{MR2906886}. 
The estimator there was derived from $M$-estimators of univariate expectations.
Recent articles, in particular the seminal paper \cite{LugosiMendelson2016}, see also \cite{LugosiMendelson2017, MOM1, MOM2}, also investigate median-of-mean approaches in both small and large dimension least-squares regression.
This analysis is reproduced in this chapter in the simplified setting of linear least-squares regression.
The pros and cons of these approaches are the same as in the multivariate mean estimation problems, see the discussion in Section~\ref{Sec:PacBayes}.
All these results rely on either a $L^4/L^2$ or a $L^2/L^1$ comparison between the functions in the hypothesis class $F$ that should hold uniformly for a constant that should not depend on the dimension of $F$.
This last restriction typically fails in many important classes of functions of interest as explained in \cite{MR3757527}.
Section~\ref{Sec:Saumard} presents two analyses of minmax MOM estimators, proving the statistical optimality of these estimators in a toy example in small dimension ($d\leqslant \sqrt{N})$ where the uniform $L^2/L^1$ comparison fails.

\section{Setting}
Consider the supervized statistical learning framework where the data space $\cZ$ is a product $\cZ=\cX\times \cY$, with $\cX\subset \R^d$ and $\cY\subset \R$, so data $z\in \cZ$ are couples $z=(x,y)$ and the goal is to predict as best as possible an output $Y$ from an input $X$ when $Z=(X,Y)$ is drawn from an unknown distribution $P$. 
For any $f\in \R^d$ and $z\in \cZ$, let $\ell$ denote the square loss
\[
\ell_f(z)=(y-x^Tf)^2\enspace.
\]
Hereafter, we assume that $P$ satisfies $P[Y^2]<\infty$ and $P[\|X\|^2]<\infty$ and measure the risk of any $f\in \R^d$ by 
\[
P\ell_f=P[(Y-X^Tf)^2]\enspace.
\]
As usual, $f^*\in\argmin P\ell_f$ denotes an oracle.
Let $\Sigma=P[XX^T]$ and assume that 
\begin{equation}\label{Hyp:InvSigma}
 \Sigma\text{ is positive definite}\enspace.
\end{equation}
Let $F\subset \R^d$ denote a convex subset of $\R^d$.

This chapter studies both ERM and minmax MOM estimators.
As $f\mapsto \ell_f(x,y)$ is convex for any $z=(x,y)$, Lemma~\ref{lem:ConvLoss} applies and shows that the tests 
\[
T_{\text{emp}}(f,g)=P_N[\ell_f-\ell_g],\qquad T_{\text{mom}}(f,g)=\MOM{K}{\ell_f-\ell_g}\enspace,
\]
satisfy Assumption {\bf (HP)} of the homogeneity lemma, see Lemma~\ref{lem:DetArg}, provided that the evaluation function $\cE$ derives from a norm.
Hereafter in this section, for any $f$ and $g$ in $\R^d$, let $d(f,g)=P[\ell_f-\ell_g]$ which satisfies $d(f,g)=-d(g,f)$ and $d(f^*,f)\leqslant 0$, so the functions $B$ and $\cB$ in Lemma~\ref{lem:DetArg} are equal.
For the evaluation function, for any $f\in F$, let
\begin{align*}
\cE(f)&=\|f-f^*\|_{L^2(P)}:=\sqrt{P[(X^T(f-f^*))^2]}\\
&=\sqrt{(f-f^*)^T\Sigma(f-f^*)}=\|f-f^*\|_{\Sigma}\enspace, 
\end{align*}
where, for any $d\times d$ matrix $\cM$ and any $\ba\in \R^d$, $\|\ba\|_{\cM}=\sqrt{\ba^T\cM\ba}$.
Developing the square $((y-x^Tg)-x^T(f-g))^2$ shows the so called quadratic/multiplier decomposition of the square loss:
\begin{equation}\label{eq:QuadMult}
 \ell_f(x,y)-\ell_g(x,y)=[x^T(f-g)]^2-2x^T(f-g) (y-x^Tg)\enspace.
\end{equation}
Let $\xi=Y-X^Tf^*$, the process
\[
f\mapsto \xi X^T(f-f^*)\enspace,
\]
is called the \emph{multiplier} process and 
\[
f\mapsto (X^T(f-f^*))^2
\]
the \emph{quadratic} process.
Let $t\in (0,1)$ and $f\in F$. 
By definition of $f^*$, as $(1-t)f^*+tf\in F$,
\begin{align*}
P[(Y-X^Tf^*)^2]&\leqslant P[(Y-X^T((1-t)f^*+tf))^2]\\
&=P[(\xi-tX^T(f-f^*))^2]\\
&=P[\xi^2]-2tP[\xi X^T(f-f^*)]+t^2P[(X^T(f-f^*))^2]\enspace.
\end{align*}
It follows that, for any $t\in (0,1)$,
\[
P[\xi X^T(f-f^*)]\leqslant \frac{t}2P[(X^T(f-f^*))^2]\enspace.
\]
Letting $t\to 0$ shows that
\begin{equation}\label{eq:Conv}
 P[\xi X^T(f-f^*)]\leqslant 0\enspace.
\end{equation}
Together with \eqref{eq:QuadMult}, this implies in particular that 
\[
P[\ell_f-\ell_{f^*}]=P[(X^T(f-f^*))^2]-2P[\xi X^T(f-f^*)]\geqslant \|f-f^*\|_{\Sigma}^2\enspace.
\]
In particular, the following ``global" Bernstein condition is satisfied for least-squares regression:
\begin{equation}\label{Ass:GlobBern}
\forall f\in F,\qquad d(f,f^*)=P[\ell_f-\ell_{f^*}]\geqslant \cE(f)^2\enspace.
\end{equation}
\section{ERM in the Gaussian case}\label{sec:ERMGaussReg}
To establish the Benchmark, let us first consider the ERM estimator when $Z=(X,Y)$ is a Gaussian vector.
Let 
\[
\overline{\Sigma}=P[(X-P[X])(X-P[X])^T], 
\quad \sigma^2=P[(Y-X^Tf^*)^2]\enspace.
\]
\begin{theorem}
Assume that $F=\R^d$ and $Z=(X,Y)$ is a Gaussian vector such that the covariance matrix of $X$ in $\R^{d\times d}$ is non-degenerate.
Assume moreover that $64d\leqslant \gamma N$ for $\gamma=\sqrt{2/\pi e}$.
 Then, for any $s>0$ such that
 \[
8\sqrt{d}+2\sqrt{2s}\leqslant \sqrt{\gamma N}\enspace,
\]
the empirical risk minimizer $\hat{f}\in \argmin_{f\in F}P_N[(Y-f^TX)^2]$ satisfies
\[
\P\bigg(\cE(\hat{f})\leqslant \frac{24\sigma}{\sqrt{\gamma^3N}}\big(3\sqrt{d}+4\sqrt{s}\big)\bigg)\geqslant 1-5e^{-s}\enspace.
\]
\end{theorem}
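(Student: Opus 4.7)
The plan is to apply the homogeneity lemma (Lemma~\ref{lem:DetArg}) to the empirical tests $T_{\text{emp}}(f,g)=P_N[\ell_f-\ell_g]$ with $\cE(f)=\|f-f^*\|_{\Sigma}$ and $d(f,g)=P[\ell_f-\ell_g]$. Assumption {\bf (HP)} follows from convexity of the square loss via Lemma~\ref{lem:ConvLoss}, since the empirical mean $P_N$ is positive and homogeneous. Because $F=\R^d$ is unconstrained and $Z=(X,Y)$ is jointly Gaussian, the residual $\xi=Y-X^Tf^*$ is orthogonal to every linear functional of $X$ and hence, by Gaussianity, \emph{independent} of $X$ with $\xi\sim\gauss(P[\xi],\sigma^2-(P[\xi])^2)$. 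In particular, \eqref{Ass:GlobBern} is an equality, $d(f,f^*)=\cE(f)^2$, so $\inf_{f:\cE(f)=r}d(f,f^*)=r^2$ and $\sup_{f:\cE(f)\leqslant r_1}d(f^*,f)=0$, giving $\cB=B(r_1)$ in the homogeneity lemma.

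The main task is therefore to identify a function $B(r)$ controlling the empirical process $\sup_{f:\cE(f)\leqslant r}(P_N-P)[\ell_{f^*}-\ell_f]$. Using the quadratic/multiplier decomposition \eqref{eq:QuadMult}, this process splits as
\begin{equation*}
(P_N-P)[\ell_{f^*}-\ell_f]=2(P_N-P)[\xi X^T(f-f^*)]-(P_N-P)[(X^T(f-f^*))^2].
\end{equation*}
Introducing the whitened design $\widetilde X=\Sigma^{-1/2}X$ (so that $P[\widetilde X\widetilde X^T]=\bI_d$) and the reparametrization $v=\Sigma^{1/2}(f-f^*)$, the localization $\cE(f)\leqslant r$ becomes $\|v\|\leqslant r$, and one obtains
\begin{equation*}
\sup_{f:\cE(f)\leqslant r}(P_N-P)[\ell_{f^*}-\ell_f]\leqslant 2r\,\|M_N\|+r^2\,\|\widehat\Sigma_N-\bI_d\|_{\text{op}},
\end{equation*}
where $M_N=N^{-1}\sum_{i=1}^N\xi_i\widetilde X_i$ and $\widehat\Sigma_N=N^{-1}\sum_{i=1}^N\widetilde X_i\widetilde X_i^T$. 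Thus $B(r)=2r\alpha+r^2\beta$, with $\alpha$ and $\beta$ high-probability upper bounds for $\|M_N\|$ and $\|\widehat\Sigma_N-\bI_d\|_{\text{op}}$.

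For the multiplier term, the independence $\xi\perp X$ makes $M_N$ conditionally on $(\widetilde X_i)$ a centered Gaussian vector with covariance $(\sigma^2/N)\widehat\Sigma_N$, so on the event $\|\widehat\Sigma_N\|_{\text{op}}\leqslant 2$ the map $M_N$ is $\sqrt{2\sigma^2/N}$-sub-Gaussian in the sense of Theorem~\ref{lem:supGauss}; combined with the moment bound $\E\|M_N\|\leqslant \sigma\sqrt{d/N}$ (Cauchy--Schwarz), this yields $\|M_N\|\lesssim \sigma(\sqrt{d}+\sqrt{s})/\sqrt{N}$ with probability at least $1-2e^{-s}$. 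For the quadratic term, $\|\widehat\Sigma_N-\bI_d\|_{\text{op}}=\sup_{\|v\|=1}(P_N-P)[(\widetilde X^Tv)^2]$ is the operator norm of a centered sample covariance of Gaussian vectors, which can be controlled by a Gaussian concentration / Hanson--Wright--type argument (Theorem~\ref{thm:HW}) as $\beta\lesssim (\sqrt{d}+\sqrt{s})/\sqrt{N}$ with probability at least $1-2e^{-s}$. The condition $64d\leqslant \gamma N$, together with the assumed range of $s$, is calibrated to force $\beta\leqslant \sqrt{\gamma}$ (in particular $\beta$ stays safely away from $1$).

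To conclude, condition \eqref{def:r1} reads $B(r_1)\leqslant r_1^2$, i.e.\ $(1-\beta)r_1\geqslant 2\alpha$, which is satisfied by $r_1=2\alpha/(1-\beta)$; condition \eqref{def:r2} reads $(1-\beta)r_2^2-2\alpha r_2-B(r_1)\geqslant 0$, whose smallest admissible solution is of the form $r_2=\frac{1+\sqrt{5}}{1-\beta}\alpha/(1-\beta)^{1/2}$ up to harmless rearrangement. Tracking the constants, the factor $(1-\beta)^{-3/2}\leqslant(1-\sqrt{\gamma}/\text{const})^{-3/2}$ produces the $\gamma^{-3/2}$ in the announced bound, and the union bound over the five events used (two for $\|M_N\|$, two for $\|\widehat\Sigma_N-\bI_d\|_{\text{op}}$, and one signed version to control $\sup$ rather than supremum of absolute value) gives the $5e^{-s}$ probability. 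The main obstacle is the quadratic process: it must be controlled uniformly over the unit sphere of $\R^d$ with an operator-norm Gaussian-chaos inequality, and the constants must be sharp enough that $\beta<1$ with explicit margin under the hypothesis $64d\leqslant \gamma N$; this sharpness is what dictates both the numerical constant $24$ and the $\gamma^{3/2}$ appearing in the rate.
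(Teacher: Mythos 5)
Your plan is correct in outline but takes a genuinely different route for the quadratic process, and that route hides both a gap and an incorrect account of where the $\gamma^{-3/2}$ factor comes from. The paper does not bound $\|\widehat\Sigma_N-\bI_d\|_{\text{op}}$; it instead bounds the quadratic term from \emph{below} via the small ball method: for Gaussian $Z$, $\E|Z|\geqslant\gamma\sqrt{\E Z^2}$ with $\gamma=\sqrt{2/\pi e}$, then Jensen gives $P_N[(X^Tf)^2]\geqslant(P_N|X^Tf|)^2$ and Borel's inequality controls $\sup_{f\in\bB}|(P_N-P)|X^Tf||$, leading to the one-sided bound $Q\geqslant\gamma/2-1-(4\sqrt{d/N}+\sqrt{2s/N})^2$. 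The point is that $q_s:=\gamma/2-(4\sqrt{d/N}+\sqrt{2s/N})^2$ is a small positive constant bounded below by $\gamma/4$, not something close to $1$; the factor $\gamma^{-3/2}$ and the numerical constant $24$ come precisely from $r_2\leqslant \frac{2m_s}{q_s}\bigl(1+q_s^{-1/2}\bigr)\leqslant\frac{24m_s}{\gamma^{3/2}}$. Your account — $(1-\beta)^{-3/2}$ with $\beta\lesssim(\sqrt d+\sqrt s)/\sqrt N$ small "produces the $\gamma^{-3/2}$" — is numerically inconsistent: with $\beta$ small, $(1-\beta)^{-3/2}$ is close to $1$, not of order $\gamma^{-3/2}\approx 3$. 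If your route were carried through correctly it would actually yield a \emph{different} (tighter) constant, so the stated conclusion would still follow, but the justification you give for matching the paper's constant is wrong.

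The more substantive gap is the claimed bound $\|\widehat\Sigma_N-\bI_d\|_{\text{op}}\lesssim(\sqrt d+\sqrt s)/\sqrt N$ "by a Hanson--Wright-type argument (Theorem~\ref{thm:HW})." Theorem~\ref{thm:HW} in these notes bounds the Euclidean norm of a Gaussian sample mean; it does not give operator-norm concentration for a sample covariance matrix. The latter is a second-order (Gaussian chaos / random matrix) statement that is genuinely harder and is not proved anywhere in the lecture notes. It is true (Davidson--Szarek type bounds), but you would need to import it or prove it; you cannot get it from the tools at hand. This is exactly why the paper sidesteps the two-sided operator-norm problem and settles for a one-sided lower bound on the quadratic process via the small ball argument, which only requires the $L^1/L^2$ Gaussian comparison and Borel's concentration inequality for Lipschitz functions — both already established. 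On the multiplier side your account is essentially correct and matches the paper (conditioning on $\xi$, Gaussian sup-concentration, and a Bernstein bound on $V=N^{-1}P_N[\xi^2]$ accounting for the extra $e^{-s}$ events).
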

\begin{proof}
 The key is to compute the function $B$ in the homogeneity lemma. 
 Start with algebraic computations.
 Let $r>0$ and $f\in F$ such that $\cE(f)=\|f-f^*\|_\Sigma\leqslant r$.
 Then $f=f^*+rg$ with $g=(f-f^*)/r$ satisfying $\|g\|_{\Sigma}\leqslant 1$.
 Then
 \begin{align}
\notag&(P_N-P)\big[2\xi X^T(f-f^*) -[X^T(f-f^*)]^2\big]\\
\label{dec:HomQM}&\qquad=2r(P_N-P)\big[\xi X^Tg)\big]-r^2(P_N-P)\big[(X^Tg)^2\big]
\enspace.
\end{align}
Let $\bB=\{f\in\R^d:\|f\|_{\Sigma}\leqslant 1\}$,
\begin{gather*}
M= \sup_{f\in \bB}(P_N-P)\big[\xi X^Tf \big]\\
Q=\inf_{f\in \bB}(P_N-P)\big[(X^Tf)^2\big]\enspace.
\end{gather*}
With these notations, from \eqref{dec:HomQM},
\begin{align}\label{eq:IntRegGauss1}
 \sup_{f\in F:\cE(f)\leqslant r}(P_N-P)\big[\ell_f-\ell_{f^*}\big]\leqslant 2rM-r^2Q\enspace.
\end{align}
\begin{lemma}\label{lem:MultGauss}
 For any $s\in[0,N]$, with probability $1-4e^{-s}$,
 \[
 \P\bigg(M\leqslant\frac{\sigma}{\sqrt{N}}\big(3\sqrt{d}+4\sqrt{s}\big)\bigg)\geqslant 1-4e^{-s}\enspace.
 \]
\end{lemma}
\begin{proof}
Let $f\in\bB$. 
As $Z=(X,Y)$ is a Gaussian vector and $F=\R^d$, $X^Tf^*$ is the projection of $Y$ onto the linear span of $X$ in $L^2$.
Therefore, $X^Tf$ is, conditionally on $\xi$, a Gaussian random variable, with mean $P[X^Tf]$ and variance $P[((X-P[X])^Tf)^2]=f^T\overline{\Sigma}f=\|f\|_{\overline{\Sigma}}^2\leqslant 1$.
Let $\cF_N$ denote the $\sigma$-algebra spanned by $\xi_1,\ldots,\xi_N$.
Conditionally on $\cF_N$, the random variables $X_f=P_N[\xi X^Tf]$ are Gaussian random variables centered at $P_N[\xi]P[X^Tf]$ with variance 
\begin{equation}\label{eq:VarMultGauss}
 \sigma_f^2=\frac{ P_N[\xi^2]}N f^T\overline{\Sigma} f\leqslant V\enspace,
\end{equation}
where $V=N^{-1}P_N[\xi^2]$.
 By concentration of suprema of Gaussian processes, for any $s>0$,
 \[
\P\bigg(\sup_{f\in \bB}(X_f-\E[X_f|\cF_N])\leqslant\E[\sup_{f\in F}(X_f-\E[X_f|\cF_N])|\cF_N]+\sqrt{2Vs}|\cF_N\bigg)\leqslant 1-e^{-s}\enspace.
 \]
Now, 
\begin{align*}
 \E[\sup_{f\in \bB}(X_f-\E[X_f|\cF_N])|\cF_N]&=\E[\sup_{f\in \bB}P_N[\xi (X-P[X])^Tf ]|\cF_N]
 \end{align*}
 Now, as $\Sigma$ is non degenerate, $\|\cdot\|_{\Sigma}$ is a norm whose dual norm is $\|\cdot\|_{\Sigma^{-1}}$. 
 Hence, 
\begin{align*}
 \E[\sup_{f\in \bB}(X_f-\E[X_f|\cF_N])|\cF_N]
 &=\E\bigg[\bigg\|\frac1N\sum_{i=1}^N\xi_i(X_i-P[X])\bigg\|_{\Sigma^{-1}}|\cF_N\bigg]\\
\text{by Cauchy-Schwarz } &\leqslant \sqrt{\E\bigg[\bigg\|\frac1N\sum_{i=1}^N\xi_i(X_i-P[X])\bigg\|_{\Sigma^{-1}}^2|\cF_N\bigg]}\enspace.
\end{align*}
Now, developing the square-norm and using the independence between $\xi$ and $X$,
\begin{align*}
&\E\bigg[\bigg\|\frac1N\sum_{i=1}^N\xi_i(X_i-P[X])\bigg\|_{\Sigma^{-1}}^2|\cF_N\bigg]\\
&=\frac1{N^2}\sum_{1\leqslant i,j\leqslant N}\xi_i(Y_j-X_j^Tf^*)\E[(X_i-P[X])^T\Sigma^{-1}(X_j-P[X])]\\
&=\frac1{N^2}\sum_{i=1}^N\xi_i^2\E[(X_i-P[X])^T\Sigma^{-1}(X_i-P[X])]\\
&\leqslant\frac{P_N[\xi^2]}{N}P[X^T\Sigma^{-1}X]\enspace.
\end{align*}
Finally,
\begin{equation}\label{eq:PXSigmad}
P[X^T\Sigma^{-1}X]=P[\text{Tr}(X^T\Sigma^{-1}X)]=P[\text{Tr}(\Sigma^{-1}XX^T)]=\text{Tr}(\bI_d)=d\enspace. 
\end{equation}
Therefore,
\[
\E\bigg[\bigg\|\frac1N\sum_{i=1}^N\xi_i(X_i-P[X])\bigg\|_{\Sigma^{-1}}^2|\cF_N\bigg]\leqslant \frac{P_N[\xi^2]}{N}d\enspace,
\]
so 
\[
 \E[\sup_{f\in \bB}(X_f-\E[X_f|\cF_N])|\cF_N]\leqslant \sqrt{Vd}\enspace.
\]
%
Overall, with probability at least $1-e^{-s}$, 
\begin{equation}\label{eq:BoundMultGauss1}
M=\sup_{f\in \bB}X_f\leqslant |P_N[\xi]|+\sqrt{V}\big(\sqrt{d}+\sqrt{2s}\big)\enspace.
\end{equation}
Now, with probability $1-2e^{-s}$, the centered Gaussian random variable $P_N[\xi]$ satisfies
\begin{equation}\label{eq:BoundT1MultGauss}
|P_N[\xi]|\leqslant \sigma\sqrt{\frac{2s}N}\enspace.
\end{equation}
Moreover, $\xi\sim\gauss(0,\sigma^2)$, so $\E[\xi^{2k}]=(2k)!\sigma^{2k}/2^kk!$ and, for any $u<1/2\sigma^2$, 
\begin{align*}
\E[e^{u\xi^{2}}]&=1+u\sigma^2+\sum_{k\geqslant 2}\frac{u^k(2k)!\sigma^{2k}}{2^k(k!)^2}\\
&\leqslant 1+u\sigma^2+\sum_{k\geqslant 2}(2u\sigma^2)^k\\
&=1+u\sigma^2+\frac{4u^2\sigma^4}{1-2u\sigma^2}\enspace.
\end{align*}
Hence, for any $u<N/2\sigma^2$,
\[
\log\E\big[e^{u(NV-\sigma^2)}\big]\leqslant N\log\bigg(1+\frac{4(u/N)^2\sigma^4}{1-2(u/N)\sigma^2}\bigg)\leqslant \frac{u^28\sigma^4/N }{2(1-u2\sigma^2/N)}\enspace.
\]
It follows therefore from Bernstein's inequality that, for any $s>0$,
\[
\P\bigg(NV-\sigma^2>2\sigma^2\bigg(2\sqrt{\frac{s}N}+\frac{s}N\bigg)\bigg)\leqslant e^{-s}\enspace.
\]
Plugging this bound and \eqref{eq:BoundT1MultGauss} into \eqref{eq:BoundMultGauss1} shows that, for any $s\leqslant N$, with probability at least $1-4e^{-s}$,
\begin{align*}
M&\leqslant \sigma\sqrt{\frac{2s}N}+\bigg(\sqrt{\frac{d}N}+\sqrt{\frac{2s}N}\bigg)\sigma\sqrt{1+4\sqrt{\frac sN}+\frac{2s}N}
\leqslant \sigma\bigg(3\sqrt{\frac{d}N}+4\sqrt{\frac{s}N}\bigg)\enspace. 
\end{align*}
\end{proof}
Let us now bound the quadratic process.
\begin{lemma}\label{lem:QuadGauss}
 \[
 \forall s>0,\qquad \P\bigg(Q<\frac{\gamma}2-1-\bigg(4\sqrt{\frac{d}N}+\sqrt{\frac{2s}N}\bigg)^2\bigg)\leqslant e^{-s}\enspace.
 \]
\end{lemma}
\begin{proof}
Elementary calculus shows that, for any real valued Gaussian random variable $Z$,
 \[
 \E[|Z|]\geqslant \gamma\sqrt{\E[Z^2]}\enspace,
 \]
where $\gamma=\sqrt{2/\pi e}$.
 From this remark follows that the class of linear functions satisfies the \emph{small ball assumption} of Mendelson: as $X^Tf$ is Gaussian for any $f\in\R^d$,
\begin{equation}\label{eq:SBA}
 \forall f\in F,\qquad P[|X^Tf|]\geqslant \gamma\sqrt{P[(X^Tf)^2]}=\gamma\|f\|_{\Sigma}\enspace.
\end{equation}
By Jensen's inequality,
\[
P_N\big[(X^Tf)^2\big]\geqslant \big(P_N[|X^Tf|]\big)^2\enspace.
\]
Let $f\in\bB$, 
\[
\text{Var}(X^Tf)=f^T\overline{\Sigma}f\leqslant 1\enspace.
\]
Now, by Borel's concentration inequality, with probability at least $1-e^{-s}$,
\begin{align*}
\sup_{f\in \bB}\big|(P_N-P)|X^Tf|\big|&\leqslant \E\bigg[\sup_{f\in \bB}\big|(P_N-P)|X^Tf|\big|\bigg]+\sqrt{\frac{2s}N}
\end{align*}
By symmetrization and contraction,
\begin{align*}
\sup_{f\in \bB}\big|(P_N-P)|X^Tf|\big|&\leqslant 4\E\bigg[\sup_{f\in \bB}\big|\frac1N\sum_{i=1}^N\epsilon_iX_i^Tf\big|\bigg]+\sqrt{\frac{2s}N}\\
&=4\E\bigg[\sup_{f\in \bB}\big|f^T\bigg(\frac1N\sum_{i=1}^N\epsilon_iX_i\bigg)\big|\bigg]+\sqrt{\frac{2s}N}\enspace.
\end{align*}
Using that $\|\cdot\|_{\Sigma}$ is a norm with dual norm $\|\cdot\|_{\Sigma^{-1}}$,
\begin{align*}
\sup_{f\in \bB}\big|(P_N-P)|f^TX|\big|&=4\E\bigg[\bigg\|\frac1N\sum_{i=1}^N\epsilon_iX_i\bigg\|_{\Sigma^{-1}}\bigg]+\sqrt{\frac{2s}N}\\
\text{by Cauchy-Schwarz }&\leqslant 4\sqrt{\E\bigg[\bigg\|\frac1N\sum_{i=1}^N\epsilon_iX_i\bigg\|_{\Sigma^{-1}}^2\bigg]}+\sqrt{\frac{2s}N}
\enspace.
\end{align*}
Developing the square, using independence between the $\epsilon_i$ and $X_i$ and that $\E[\epsilon_i]=0$,
\begin{align*}
\sup_{f\in \bB}\big|(P_N-P)|X^Tf|\big|&=4\sqrt{\frac{1}{N^2}\sum_{1\leqslant i,j\leqslant N}\E\bigg[\epsilon_i\epsilon_jX_i^T\Sigma^{-1}X_j\bigg]}+\sqrt{\frac{2s}N}\\
&=4\sqrt{\frac{1}{N}P\big[X^T\Sigma^{-1}X\big]}+\sqrt{\frac{2s}N}\enspace.
\end{align*}
By \eqref{eq:PXSigmad}, it follows that $\P(\Omega_s)\geqslant 1-e^{-s}$, where
\begin{align*}
\Omega_s=\bigg\{\sup_{f\in \bB}\big|(P_N-P)|X^Tf|\big|&\leqslant4\sqrt{\frac{d}{N}}+\sqrt{\frac{2s}N}\bigg\}\enspace.
\end{align*}
On $\Omega_s$, for any $f\in F$,
\begin{align*}
P_{N}\big[(X^Tf)^2\big]&\geqslant \bigg(P[|X^Tf|]-4\sqrt{\frac{d}N}-\sqrt{\frac{2s}N}\bigg)^2\\
&\geqslant \frac12(P[|X^Tf|])^2-\bigg(4\sqrt{\frac{d}N}+\sqrt{\frac{2s}N}\bigg)^2\\
&\geqslant \frac{\gamma}2\|f\|_\Sigma^2-\bigg(4\sqrt{\frac{d}N}+\sqrt{\frac{2s}N}\bigg)^2\enspace.
\end{align*}
It follows that 
\[
(P_N-P)\big[(X^Tf)^2\big]\geqslant \bigg(\frac{\gamma}2-1\bigg)\|f\|_\Sigma^2-\bigg(4\sqrt{\frac{d}N}+\sqrt{\frac{2s}N}\bigg)^2\enspace.
\]
Therefore, as $\gamma/2-1<0$, 
\[
\forall s>0,\qquad \P\bigg(Q\geqslant \frac{\gamma}2-1-\bigg(4\sqrt{\frac{d}N}+\sqrt{\frac{2s}N}\bigg)^2\bigg)\geqslant 1-e^{-s}\enspace.
\]
\end{proof}

Let 
\begin{gather*}
 m_s=\frac{\sigma}{\sqrt{N}}\big(3\sqrt{d}+4\sqrt{s}\big),\qquad q_s=\frac{\gamma}2-\bigg(4\sqrt{\frac{d}N}+\sqrt{\frac{2s}N}\bigg)^2\enspace.
\end{gather*}
It follows from Lemmas~\ref{lem:MultGauss} and \ref{lem:QuadGauss} that the event $\Omega=\{M\leqslant m_s\}\cap\{Q\geqslant q_s-1\}$ has probability larger than $1-5e^{-s}$.
Moreover, from \eqref{eq:IntRegGauss1}, $\Omega$ contains $\cap_{r>0}\Omega_r$, where 
\[
\Omega_r=\bigg\{ \sup_{f\in F:\cE(f)\leqslant r}(P_N-P)\big[\ell_f-\ell_{f^*}\big]\leqslant B(r)\bigg\}\enspace, 
 \]
 with $B(r)=2rm_s-r^2(q_s-1)$.
With this choice of function $B$, by \eqref{Ass:GlobBern}, it follows that \eqref{def:r1} holds if $q_s>0$ and
 \[
 2r_1m_s-r^2_1q_s\leqslant 0,\quad\text{i.e.}\quad r_1\geqslant \frac{2m_s}{q_s}\enspace.
 \]
 Let 
 \[
 r_1= \frac{2m_s}{q_s}\quad \text{so}\quad B(r_1)=\frac{4m_s^2}{q_s}-\frac{4m_s^2(q_s-1)}{q_s^2}=\frac{4m_s^2}{q_s^2}\enspace.
 \]
 Then, \eqref{def:r2} holds if $q_s>0$ and
 \[
 \frac{4m_s^2}{q_s^2}+2r_2m_s-r^2_2q_s=\frac{4m_s^2}{q_s^2}+\frac{m^2_s}{q_s}-q_s(r_2-\frac{m_s}{q_s})^2\leqslant 0\enspace,
 \]
 that is if 
 \[
 r_2=\frac{2m_s}{q_s}\bigg(1+\frac{1}{\sqrt{q_s}}\bigg)\enspace.
 \]
 As
 \[
\bigg(4\sqrt{\frac{d}N}+\sqrt{\frac{2s}N}\bigg)^2\leqslant \frac{\gamma}4,\qquad q_s\geqslant \frac{\gamma}4\enspace.
 \]
 Therefore, 
 \[
 r_2\leqslant \frac{24}{\gamma^{3/2}}m_s=\frac{24\sigma}{\sqrt{\gamma^3N}}\big(3\sqrt{d}+4\sqrt{s}\big)\enspace.
 \]
 The proof is concluded by Lemma~\ref{lem:DetArg}.
\end{proof}

\section{Minmax MOM estimators}\label{sec:UnconstrainedMOMReg}
In the previous section, we used several features of Gaussian distributions to prove the deviation bound in least-squares regression. 
The first of these properties is that, since $(X,Y)$ was Gaussian, the vector $f^*\in\argmin_{f\in F}P[\|Y-X^Tf\|^2]$ satisfies $X^Tf^*=\E[Y|X]$ and one can write  
\[
Y=X^Tf^*+\xi\enspace,
\]
where $\xi/\sigma$ is a standard Gaussian \emph{independent from} $X$.
It follows that 
\[
\forall f\in \R^d,\qquad \text{Var}\big((X^Tf)\xi\big)=\sigma^2\text{Var}\big(X^Tf\big)=\sigma^2\|f\|_{\Sigma}^2\enspace.
\]
Therefore
\begin{equation}\label{eq:cdtsigma1}
 \sigma^2\geqslant \sup_{f\in F:\|f\|_{\Sigma}=1}P\big(\xi^2(X^Tf)^2\big)\enspace.
\end{equation}
The independence between $\xi$ and $X$ also allows to show that
\[
P[\xi^2\sup_{f\in F:\|f\|_{\Sigma}\leqslant 1}(X^Tf)^2]=P[\xi^2\|X\|_{\Sigma^{-1}}^2]=\sigma^2P[\|X\|_{\Sigma^{-1}}^2]=\sigma^2d\enspace.
\]
The last inequality comes from \eqref{eq:PXSigmad}.
Hence,
\begin{equation}\label{eq:cdtsigma2}
 \sigma^2\geqslant \frac{P[\xi^2\|X\|_{\Sigma^{-1}}^2]}d\enspace.
\end{equation}
It turns out that independence between the noise $\xi$ and the inputs $X$ can be removed provided that $\sigma=P[\xi^2]$ is replaced by the adequate quantity in conditions \eqref{eq:cdtsigma1} and \eqref{eq:cdtsigma2}.
Hereafter, denote by $\olsigma$ a positive real number such that
\begin{equation}\label{def:L4L2}
 \olsigma^2\geqslant\sup_{f\in F:\|f\|_{\Sigma}=1}P\big(\xi^2(X^Tf)^2\big)\vee\frac{P[\xi^2\|X\|_{\Sigma^{-1}}^2]}d\enspace.
\end{equation}
The parameter $\olsigma$ does not appear in the construction of the minmax estimator and may therefore be unknown from the statistician.
Notice that, as $\xi$ and $X$ may not be independent, it is implicitly assumed that $\olsigma<+\infty$ in the following.

\subsection{The small ball hypothesis}
The second property of Gaussian distributions was the small ball property \cite{MR3431642,Shahar-COLT}, see Eq~\eqref{eq:SBA} in the previous proof.
To extend the Gaussian case, we will \emph{assume} that this property holds for the distribution of the vector $X$.
Formally, there exists an absolute constant $\gamma>0$ such that
\begin{equation}\label{eq:SBAg}
 \forall f\in F,\qquad P[|X^Tf|]\geqslant \gamma \sqrt{P[(X^Tf)^2]}=\|f\|_{\Sigma}\enspace.
\end{equation}
This assumption is checked in the following example.
\begin{lemma}
 Assume that the random vector $X$ has coordinates $(X^{(i)})_{1\leqslant i\leqslant N}$ satisfying the following property.
 There exist constants $C_1,C_2>0$ such that, $\forall 1\leqslant i,j\leqslant N$,
\begin{gather}
\label{eq:checkL2L1i}\E[X^{(i)}X^{(j)}]\leqslant C_1\E[|X^{(i)}|]\E[|X^{(j)}|]\enspace,\\
\label{eq:checkL2L1ii} \sum_{i=1}^N|f_i|\E[|X^{(i)}|]\leqslant C_2P[|X^Tf|]\enspace. 
\end{gather}
Then, \eqref{eq:SBAg} holds with $\gamma=1/\sqrt{C_1}C_2$.
\end{lemma}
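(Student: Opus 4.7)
The plan is to bound $\sqrt{P[(X^Tf)^2]}$ directly by a multiple of $P[|X^Tf|]$. The chain I aim for is
\[
\sqrt{P[(X^Tf)^2]} \;\leq\; \sqrt{C_1}\sum_{i=1}^N |f_i|\,\E[|X^{(i)}|] \;\leq\; \sqrt{C_1}\,C_2\, P[|X^Tf|],
\]
where the first inequality uses hypothesis \eqref{eq:checkL2L1i} and the second is exactly \eqref{eq:checkL2L1ii}. Rearranging yields \eqref{eq:SBAg} with $\gamma = 1/(\sqrt{C_1}\,C_2)$.

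For the first inequality I would proceed as follows. The pointwise bound $|X^Tf| \leq \sum_i |f_i|\,|X^{(i)}|$ gives, after squaring and integrating,
\[
P[(X^Tf)^2] \;\leq\; \E\!\Bigl[\Bigl(\sum_{i=1}^N |f_i|\,|X^{(i)}|\Bigr)^{\!2}\Bigr] \;=\; \sum_{i,j} |f_i||f_j|\,\E\bigl[|X^{(i)}||X^{(j)}|\bigr].
\]
Then I apply Cauchy--Schwarz coordinatewise to get $\E[|X^{(i)}||X^{(j)}|] \leq \sqrt{\E[(X^{(i)})^2]\,\E[(X^{(j)})^2]}$. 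The key observation is that \eqref{eq:checkL2L1i} applied on the diagonal ($i=j$) gives the one-dimensional $L^2/L^1$ comparison $\E[(X^{(i)})^2] \leq C_1 (\E[|X^{(i)}|])^2$. Inserting this in the Cauchy--Schwarz bound produces $\E[|X^{(i)}||X^{(j)}|] \leq C_1 \E[|X^{(i)}|]\,\E[|X^{(j)}|]$, so that
\[
P[(X^Tf)^2] \;\leq\; C_1 \Bigl(\sum_{i=1}^N |f_i|\,\E[|X^{(i)}|]\Bigr)^{\!2}.
\]
Taking square roots closes the first step. Finally, applying \eqref{eq:checkL2L1ii} bounds this by $\sqrt{C_1}\,C_2\, P[|X^Tf|]$, which is the claim.

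There is no real obstacle; the only subtlety is the sign issue in \eqref{eq:checkL2L1i}, which only provides an upper bound on $\E[X^{(i)}X^{(j)}]$, not on its absolute value. I avoid this by working with $|X^{(i)}|$ from the start (triangle inequality before squaring), so that \eqref{eq:checkL2L1i} is only used in its diagonal form, where it reduces to a per-coordinate $L^2/L^1$ comparison.
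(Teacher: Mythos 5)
Your proof is correct, and it is a genuine improvement over the route the paper takes. The paper's own argument is the direct computation
\[
\|f\|_{\Sigma}^2=\sum_{1\leqslant i,j\leqslant N}f_if_j \E[X^{(i)}X^{(j)}]
\leqslant C_1\bigg(\sum_{i=1}^N|f_i|\E[|X^{(i)}|]\bigg)^2
\leqslant C_1C_2^2\big(P[|X^Tf|]\big)^2\enspace,
\]
citing \eqref{eq:checkL2L1i} for the first inequality. As you noticed, that step is not fully justified as stated: \eqref{eq:checkL2L1i} is a one-sided upper bound on $\E[X^{(i)}X^{(j)}]$, so when $f_if_j<0$ (and $\E[X^{(i)}X^{(j)}]<0$) the term $f_if_j\E[X^{(i)}X^{(j)}]$ is positive and the hypothesis alone does not control it. For the paper's chain to be valid one would need the stronger hypothesis $\E[|X^{(i)}X^{(j)}|]\leqslant C_1\E[|X^{(i)}|]\E[|X^{(j)}|]$, which is presumably what was meant. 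Your version sidesteps this entirely: you first replace $X^Tf$ by $\sum_i|f_i|\,|X^{(i)}|$ via the triangle inequality, square, apply Cauchy--Schwarz coordinatewise, and then use only the diagonal case $\E[(X^{(i)})^2]\leqslant C_1(\E[|X^{(i)}|])^2$ of \eqref{eq:checkL2L1i}. The payoff is twofold: the sign issue disappears, and the argument shows that the off-diagonal part of hypothesis \eqref{eq:checkL2L1i} is never needed --- a per-coordinate $L^2/L^1$ comparison plus \eqref{eq:checkL2L1ii} already suffices. This is a mild strengthening of the lemma, at the cost of an extra Cauchy--Schwarz step that the paper's (implicitly stronger) hypothesis makes unnecessary.
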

\begin{proof}
 Let $f\in\R^d$.
\begin{align*}
\|f\|_{\Sigma}^2&=\sum_{1\leqslant i,j\leqslant N}f_if_j \E[X^{(i)}X^{(j)}]\\
&\leqslant C_1\bigg(\sum_{i=1}^N|f_i|\E[|X^{(i)}|]\bigg)^2 \text{ by \eqref{eq:checkL2L1i}}\\
&\leqslant C_1C_2^2(P[|X^Tf|])^2\text{ by \eqref{eq:checkL2L1ii}}\enspace.
\end{align*}
Therefore, \eqref{eq:SBAg} holds with $\gamma=1/\sqrt{C_1}C_2$.
\end{proof}

\noindent
Another example where one can check the small ball property is the following.
\begin{lemma}
Assume that the $L^4/L^2$ comparison holds.
\begin{equation}\label{eq:L4L2}
 \exists C>0 : \forall f\in F,\qquad P[(X^Tf)^4]\leqslant C P[(X^Tf)^2]^2\enspace.
\end{equation}
Then, \eqref{eq:SBAg} holds with $\gamma=\sqrt{2}/8C$.
\end{lemma}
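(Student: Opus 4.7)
The plan is to deduce the $L^2$/$L^1$ small-ball inequality \eqref{eq:SBAg} from the $L^4$/$L^2$ hypothesis \eqref{eq:L4L2} via a standard Paley--Zygmund argument applied to the squared linear functional $Z = (X^Tf)^2$.

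First I would recall (or reprove in one line) the Paley--Zygmund inequality: for any nonnegative random variable $Z$ with $0 < \E[Z^2] < \infty$ and any $\theta \in [0,1]$,
\[
P(Z > \theta\,\E[Z]) \geq (1-\theta)^2\,\frac{(\E[Z])^2}{\E[Z^2]}.
\]
This follows from $\E[Z] = \E[Z\,\mathbf{1}_{Z \leq \theta\E[Z]}] + \E[Z\,\mathbf{1}_{Z > \theta\E[Z]}] \leq \theta\E[Z] + \sqrt{\E[Z^2]\,P(Z > \theta\E[Z])}$.

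Next, fix $f \in F$ and apply this to $Z = (X^Tf)^2$, which has $\E[Z] = \|f\|_\Sigma^2$ and, by hypothesis \eqref{eq:L4L2}, $\E[Z^2] = P[(X^Tf)^4] \leq C\|f\|_\Sigma^4$. Taking $\theta = 1/2$ yields
\[
P\!\left(|X^Tf| > \tfrac{1}{\sqrt{2}}\|f\|_\Sigma\right) \;\geq\; \frac{1}{4}\cdot\frac{\|f\|_\Sigma^4}{C\|f\|_\Sigma^4} \;=\; \frac{1}{4C}.
\]
Finally, bounding $P[|X^Tf|]$ from below by restricting the integral to the event above gives
\[
P[|X^Tf|] \;\geq\; \tfrac{1}{\sqrt{2}}\|f\|_\Sigma \cdot \tfrac{1}{4C} \;=\; \frac{\sqrt{2}}{8C}\,\|f\|_\Sigma,
\]
which is exactly \eqref{eq:SBAg} with $\gamma = \sqrt{2}/(8C)$.

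There is no real obstacle here — the proof is essentially one application of Paley--Zygmund. The only thing to be mildly careful about is the trivial case $\|f\|_\Sigma = 0$, where both sides vanish and the inequality is automatic, so one can freely assume $\|f\|_\Sigma > 0$ when invoking Paley--Zygmund.
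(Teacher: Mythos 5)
Your proof is correct, and at the core it is the same tool as the paper's: a Paley--Zygmund argument applied to $(X^Tf)^2$, exploiting the $L^4/L^2$ hypothesis to control the second moment of that square. The implementations differ slightly, however. You invoke the canonical Paley--Zygmund inequality with threshold $\theta\,\E[Z]$ (at $\theta=1/2$), obtain a lower bound on $\P\bigl(|X^Tf|>\|f\|_\Sigma/\sqrt2\bigr)\geqslant 1/(4C)$, and then pass to the first moment by restricting the integral to that event. The paper instead splits $P[(X^Tf)^2]$ at the \emph{$L^1$-anchored} threshold $|X^Tf|>4C\,P[|X^Tf|]$, bounds the tail piece via Cauchy--Schwarz and Markov, invokes \eqref{eq:L4L2}, and then solves the resulting self-referential inequality $P[(X^Tf)^2]\leqslant 16C^2 P[|X^Tf|]^2+\tfrac12P[(X^Tf)^2]$. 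Both routes land on exactly $\gamma=\sqrt2/(8C)$; yours is arguably the more standard textbook form, while the paper's avoids stating the Paley--Zygmund lemma itself. Your remark about the degenerate case $\|f\|_\Sigma=0$ is the right (and only) small point of care.
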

\begin{remark}
Assumption is discussed in Section~\ref{SecExBernstein} of the previous chapter.
It was used there to check the Bernstein assumption.
\end{remark}
\begin{proof}
The proof relies on the following simple Paley-Zigmund argument. 
Let $f\in F$,
\begin{align*}
 P[(X^Tf)^2]&=P[(X^Tf)^2{\bf 1}_{|X^Tf|\leqslant 4CP[|X^Tf|]}]+P[(X^Tf)^2{\bf 1}_{|X^Tf|> 4CP[|X^Tf|]}]\\
 \text{ by Cauchy-Schwarz }&\leqslant 16C^2P[|X^Tf|]^2+\sqrt{P[(X^Tf)^4]\P\big(|X^Tf|> 4CP[|X^Tf|]\big)}\\
 \text{ by Markov }&\leqslant 16C^2P[|X^Tf|]^2+\sqrt{\frac{P[(X^Tf)^4]}{4C}}\\
 \text{by~\eqref{eq:L4L2}}&\leqslant 16C^2P[|X^Tf|]^2+\frac{P[(X^Tf)^2]}2\enspace.
\end{align*}
It follows that 
\[
 P[(f^TX)^2]\leqslant 32C^2P[|f^TX|]^2\enspace.
\]
In other words, \eqref{eq:SBAg} holds with $\gamma=\sqrt{2}/8C$.
\end{proof}

 \subsection{Main results}
Recall that we observe $(X_1,Y_1),\ldots,(X_N,Y_N)$ i.i.d. copies of $(X,Y)$, a random vector taking values in $\R^d\times \R$, that $\Sigma=P[XX^T]$, $\E[Y^2]<\infty$, $f^*\in \argmin_{f\in F}P\ell_f$, where $\ell_f(x,y)=(y-f^Tx)^2$ and $\xi=Y-X^Tf^*$.
\begin{theorem}\label{thm:MinMaxMOMSB}
 Let $\olsigma$ be defined in \eqref{def:L4L2} and assume that \eqref{eq:SBAg} holds. There exists an absolute constant $C$ such that, if $Cd\leqslant \gamma^2N$, then, for any $K$ such that $CK\leqslant \gamma^2N$, the minmax MOM estimator
 \[
 \ERM_K\in\argmin_{f\in F}\sup_{g\in F}\MOM{K}{\ell_f-\ell_g}
 \]
 satisfies
 \[
 \P\bigg(\|\ERM_K-f^*\|_\Sigma\leqslant \frac{C\olsigma}{\gamma^3\sqrt{N}}\big(\sqrt{d}\vee\sqrt{K}\big)\bigg)\geqslant 1-4e^{-K/C}\enspace.
 \]
\end{theorem}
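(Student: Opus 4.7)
The plan is to apply the homogeneity lemma (Lemma~\ref{lem:DetArg}) to the tests $T(f,g)=\MOM{K}{\ell_f-\ell_g}$, with evaluation $\cE(f)=\|f-f^*\|_\Sigma$ and pseudo-distance $d(f,g)=P[\ell_f-\ell_g]$. Since $f\mapsto\ell_f$ is convex and the median operator is positively homogeneous and monotone, Lemma~\ref{lem:ConvLoss} supplies the homogeneity property \textbf{(HP)} with $r_0=0$; the global Bernstein inequality~\eqref{Ass:GlobBern} then yields $\inf_{\cE(f)=r}d(f,f^*)\geqslant r^2$, which will feed conditions~\eqref{def:r1} and~\eqref{def:r2}.

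The core task is to construct a deterministic $B(r)$ bounding $\sup_{\cE(f)\leqslant r}\bigl(\MOM{K}{\ell_{f^*}-\ell_f}-P[\ell_{f^*}-\ell_f]\bigr)$ with high probability. Writing $f=f^*+sg$ with $s=\cE(f)\leqslant r$ and $\|g\|_\Sigma=1$, the quadratic/multiplier decomposition~\eqref{eq:QuadMult} reads blockwise
\begin{equation*}
P_{B_k}[\ell_{f^*}-\ell_f]=2s\,P_{B_k}[\xi X^Tg]-s^2 P_{B_k}[(X^Tg)^2].
\end{equation*}
Since $\MOM{K}$ is a median, it suffices to show that on a strict majority of blocks, uniformly in $\|g\|_\Sigma=1$, the centered multiplier increment is bounded by some $r_K$ and the quadratic block average is bounded below by a constant multiple of $\gamma^2$.

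The multiplier step will invoke Corollary~\ref{Cor:ConcSupLin} on the linear class $\{\xi X^Tg:\|g\|_\Sigma\leqslant 1\}$: the reparametrization $g\mapsto\Sigma^{1/2}g$ flattens the sphere, and the hypothesis~\eqref{def:L4L2} on $\olsigma$ bounds both the trace and the operator norm of the covariance of $\xi\Sigma^{-1/2}X$ by $\olsigma^2 d$ and $\olsigma^2$ respectively. Hence with probability $\geqslant 1-e^{-K/C}$, at least $(1-\alpha)K$ blocks satisfy $\sup_{\|g\|_\Sigma\leqslant 1}|(P_{B_k}-P)[\xi X^Tg]|\leqslant c_\alpha\olsigma\sqrt{(d\vee K)/N}=:r_K$. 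For the quadratic step, the key observation is the pointwise Jensen inequality $P_{B_k}[(X^Tg)^2]\geqslant(P_{B_k}[|X^Tg|])^2$, which pivots from the unbounded squared process to the $1$-Lipschitz modulus. Applying Theorem~\ref{thm:ConcSupMOMg} to the class $\{|X^Tg|:\|g\|_\Sigma=1\}$ (whose Rademacher complexity is $\leqslant 4d$ by the contraction principle combined with $\Sigma^{-1}$-duality, and whose variance is $\leqslant 1$) yields, on at least $(1-\alpha)K$ blocks, $\sup_g|(P_{B_k}-P)|X^Tg||\leqslant C_\alpha(\sqrt{d/N}\vee\sqrt{K/N})$. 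Under the theorem's hypotheses $Cd,CK\leqslant\gamma^2N$, this fluctuation is $\leqslant\gamma/2$, so the small-ball bound~\eqref{eq:SBAg} forces $P_{B_k}[|X^Tg|]\geqslant\gamma/2$, and therefore $P_{B_k}[(X^Tg)^2]\geqslant\gamma^2/4$ on those blocks.

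Intersecting the two events (failure probability $\leqslant 4e^{-K/C}$ for a sufficiently small $\alpha$), a strict majority of blocks simultaneously satisfies both inequalities, whence
\begin{equation*}
P_{B_k}[\ell_{f^*}-\ell_f]-P[\ell_{f^*}-\ell_f]\leqslant 2sr_K+s^2(1-\gamma^2/4),
\end{equation*}
and one may take $B(r)=2rr_K+r^2(1-\gamma^2/4)$. Condition~\eqref{def:r1} then reduces to $2r_1r_K\leqslant(\gamma^2/4)r_1^2$, solved by $r_1=8r_K/\gamma^2$ with $\cB=r_1^2$; condition~\eqref{def:r2} becomes $(\gamma^2/4)r_2^2-2r_2r_K\geqslant r_1^2$, whose positive root satisfies $r_2\asymp r_K/\gamma^3$ for small $\gamma$, giving the announced bound $\|\ERM_K-f^*\|_\Sigma\leqslant C\olsigma(\sqrt{d}\vee\sqrt{K})/(\gamma^3\sqrt{N})$. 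The main technical subtlety lies in the quadratic lower bound: using Jensen to pivot from the $L^2$-process to the $L^1$-process is what renders the sharp $\gamma^{-3}$ rate accessible under the sole weak $L^2/L^1$ comparison~\eqref{eq:SBAg}, and is also what forces the sample-size conditions $Cd,CK\leqslant \gamma^2 N$.
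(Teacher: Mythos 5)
Your proposal is correct and follows essentially the same route as the paper's own proof: homogeneity lemma with $\cE(f)=\|f-f^*\|_\Sigma$ and $d=P[\ell_\cdot-\ell_\cdot]$, the quadratic/multiplier decomposition~\eqref{eq:QuadMult}, Theorem~\ref{thm:ConcSupMOMg} (the paper computes $D(\cF_M)\leqslant\olsigma^2 d$ and $\sigma^2(\cF_M)\leqslant\olsigma^2$ directly where you invoke Corollary~\ref{Cor:ConcSupLin} after $\Sigma^{1/2}$-whitening, an equivalent packaging), the Jensen inequality $P_{B_k}[(X^Tg)^2]\geqslant(P_{B_k}|X^Tg|)^2$ plus the small-ball bound~\eqref{eq:SBAg} to force $P_{B_k}[(X^Tg)^2]\geqslant\gamma^2/4$ on a majority of blocks, and the same fixed-point algebra giving $r_1\asymp r_K/\gamma^2$, $\cB=B(r_1)=r_1^2$, $r_2\asymp r_K/\gamma^3$. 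The only cosmetic difference is your normalization $f=f^*+sg$ with $\|g\|_\Sigma=1$, $s\leqslant r$, versus the paper's $f=f^*+ru$, $u\in\bB$; both lead to the same function $B(r)=2rr_K+(1-\gamma^2/4)r^2$.
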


\begin{proof}
 The key is to compute the function $B$ in the homogeneity lemma.
 Let $r>0$ and $F_r=\{f\in F:\|f-f^*\|_{\Sigma}\leqslant r\}$.
 By the quadratic/multiplier decomposition of the quadratic loss \eqref{eq:QuadMult}, one wants to bound from above
 \[
 \cM_r=\sup_{f\in F_r}\MOM{K}{2\xi X^T(f-f^*)-(X^T(f-f^*))^2+P[(X^T(f-f^*))^2]}\enspace.
 \]
 As $F_r=\{f=f^*+ru,\ u\in\bB\}$, with $\bB=\{u\in \R^d:\|u\|_{\Sigma}\leqslant 1\}$, it holds
\begin{align*}
 \cM_r
=\sup_{u\in\bB}\MOM{K}{2r\xi[X^Tu]+r^2(\|u\|_{\Sigma}^2-(X^Tu)^2)}\enspace.
\end{align*}
To bound $\cM_r$, the following lemmas will prove useful.
\begin{lemma}\label{lem:MultMOM}
 There exists an absolute constant $c^*$ such that, with probability larger than $1-e^{-K/c^*}$, there exist at least $9K/10$ blocks $B_k$ where 
\[
\sup_{u\in \bB}(P_{B_k}-P)[\xi X^Tu]=\sup_{u\in \bB}P_{B_k}[\xi X^Tu]\leqslant \frac{c^*\olsigma}{\sqrt{N}}\bigg(\sqrt{d}\vee\sqrt{K}\bigg)\enspace.
\]
\end{lemma}
\begin{proof}
 Consider the set of functions $\cF_M=\{(x,y)\mapsto (y-x^Tf^*)(x^Tu),\ u\in\bB\}$. 
By definition of $\olsigma$, see \eqref{def:L4L2},
 \[
 \forall u\in\bB,\qquad \sigma^2_u=P[\xi^2(X^Tu)^2]\leqslant \olsigma^2\enspace.
 \]
Hence, $\sigma^2(\cF_M)=\sup_{f\in \cF_M}\text{Var}(f(Z))\leqslant \olsigma^2$.
Moreover, as $\|\cdot\|_{\Sigma^{-1}}$ is the dual norm of $\|\cdot\|_{\Sigma}$, one can bound the Rademacher complexity of $\cF_M$ as follows.
\begin{align*}
D(\cF_M)&=\bigg(\E\bigg[\sup_{u\in\bB}\frac1{\sqrt{N}}\sum_{i=1}^N\epsilon_i\xi_i(X_i^Tu)\bigg]\bigg)^2\\
&=\bigg(\E\bigg[\sup_{u\in\bB}u^T\bigg(\frac1{\sqrt{N}}\sum_{i=1}^N\epsilon_i\xi_iX_i\bigg)\bigg]\bigg)^2\\
&=\bigg(\E\bigg[\bigg\|\frac1{\sqrt{N}}\sum_{i=1}^N\epsilon_i\xi_iX_i\bigg\|_{\Sigma^{-1}}\bigg]\bigg)^2\enspace.
\end{align*}
 By Cauchy-Schwarz,
\[
D(\cF_M)\leqslant \E\bigg[\bigg\|\frac1{\sqrt{N}}\sum_{i=1}^N\epsilon_i\xi_iX_i\bigg\|_{\Sigma^{-1}}^2\bigg]\enspace.
\] 
Developing the square norm and using independence and centering,
\begin{equation}\label{eq:RadCompMultRd}
D(\cF_M)\leqslant \E\big[\big\|\xi X\big\|_{\Sigma^{-1}}^2\big]=P\big[\xi^2\big\|X\big\|_{\Sigma^{-1}}^2\big]\leqslant \olsigma^2d\enspace.
\end{equation}
The last inequality uses the definition of $\olsigma$, see \eqref{def:L4L2}.
By the general concentration result for quantile of means processes, see Theorem~\ref{thm:ConcSupMOMg}, there exists an absolute constant $c^*$ such that, with probability larger than $1-e^{-K/c^*}$, there exist at least $9K/10$ blocks $B_k$ where 
\[
\sup_{u\in \bB}(P_{B_k}-P)[\xi X^Tu]\leqslant \frac{c^*\olsigma}{\sqrt{N}}\bigg(\sqrt{d}\vee\sqrt{K}\bigg)\enspace.
\]
\end{proof}
\begin{lemma}\label{lem:QuadMOM}
 There exists an absolute constant $c^*$ such that, with probability larger than $1-e^{-K/c^*}$, there exist at least $9K/10$ blocks $B_k$ where, for any $u\in\bB$,
\[
P_{B_k}[(X^Tu)^2]\geqslant \bigg(\gamma\|u\|_{\Sigma}-\frac{c^*\olsigma}{\sqrt{N}}\big(\sqrt{d}\vee\sqrt{K}\big)\bigg)_+^2\enspace.
\]\end{lemma}
\begin{proof}
 Consider $\cF_Q=\{x\mapsto |x^Tu|,\  u\in\bB\}$. 
 \[
 \forall u\in\bB,\qquad \sigma^2_u=P[(X^Tu)^2]\leqslant 1\enspace.
 \]
 Hence, $\sigma^2(\cF_Q)=\sup_{f\in \cF_Q}\text{Var}(f(X))\leqslant 1$.
 Moreover, by the contraction principle, the Rademacher complexity of $\cF_Q$ can be upper bounded as follows.
\begin{align*}
 D(\cF_Q)&=\bigg(\E\bigg[\sup_{u\in\bB}\frac1{\sqrt{N}}\sum_{i=1}^N\epsilon_i|X_i^Tu|\bigg]\bigg)^2\\
 &\leqslant 4\bigg(\E\bigg[\sup_{u\in\bB}\frac1{\sqrt{N}}\sum_{i=1}^N\epsilon_iX_i^Tu\bigg]\bigg)^2\enspace.
\end{align*}
 As $\|\cdot\|_{\Sigma^{-1}}$ is the dual norm of $\|\cdot\|_{\Sigma}$,
 \[
 D(\cF_Q)\leqslant 4\bigg(\E\bigg[\bigg\|\frac1{\sqrt{N}}\sum_{i=1}^N\epsilon_iX_i\bigg\|_{\Sigma^{-1}}\bigg]\bigg)^2\leqslant 4\E\bigg[\bigg\|\frac1{\sqrt{N}}\sum_{i=1}^N\epsilon_iX_i\bigg\|_{\Sigma^{-1}}^2\bigg]\enspace.
 \]
 Developing the square and using independence yields
\begin{equation}\label{eq:RadCompQuadRd}
 D(\cF_Q)\leqslant 4P[\|X\|_{\Sigma^{-1}}^2]=4d\enspace.
\end{equation}
  The last equality comes from \eqref{eq:PXSigmad}.
By Theorem~\ref{thm:ConcSupMOMg}, there exists an absolute constant $c^*$ such that, with probability larger than $1-e^{-K/c^*}$, at least $9K/10$ blocks $B_k$ satisfy
\[
\sup_{u\in \bB}|(P_{B_k}-P)[|X^Tu|]|\leqslant \frac{c^*}{\sqrt{N}}\big(\sqrt{d}\vee\sqrt{K}\big)\enspace.
\]
Moreover, $P[|u^TX|]\geqslant \gamma\|u\|_{\Sigma}$, therefore, with probability at least $1-e^{-K/c^*}$, for any $u\in \bB$, there exist at least $9K/10$ blocks $B_k$ where 
\[
P_{B_k}[(X^Tu)^2]\geqslant (P_{B_k}[|X^Tu|])^2\geqslant \bigg(\gamma\|u\|_{\Sigma}-\frac{c^*}{\sqrt{N}}\big(\sqrt{d}\vee\sqrt{K}\big)\bigg)_+^2\enspace.
\]
\end{proof}
Denote by $c^*$ the largest of the absolute constants appearing in Lemmas~\ref{lem:MultMOM} and~\ref{lem:QuadMOM}.
Define $\Omega$ as the event where, simultaneously, there exist $9K/10$ blocks $B_k$ where 
\[
\sup_{u\in \bB}P_{B_k}[\xi X^Tu]\leqslant \frac{c^*\olsigma}{\sqrt{N}}\bigg(\sqrt{d}\vee\sqrt{K}\bigg)=:m_K\enspace,
\]
and $9K/10$ blocks $B_k$ where, for any $u\in \bB$, 
\[
P_{B_k}[(X^Tu)^2]\geqslant \bigg(\gamma\|u\|_{\Sigma}-\frac{c^*}{\sqrt{N}}\big(\sqrt{d}\vee\sqrt{K}\big)\bigg)_+^2\enspace.
\]
By Lemmas~\ref{lem:MultMOM} and~\ref{lem:QuadMOM}, $\P(\Omega)\geqslant 1-2e^{-K/c^*}$.
On $\Omega$, there exist at least $9K/10$ blocks where, for any $u\in \bB$,
\[
\|u\|_{\Sigma}^2-P_{B_k}[(X^Tu)^2]\leqslant \|u\|_{\Sigma}^2-\bigg(\gamma\|u\|_{\Sigma}-\frac{c^*}{\sqrt{N}}\big(\sqrt{d}\vee\sqrt{K}\big)\bigg)_+^2\enspace.
\]
Assume that 
\[
\frac{c^*}{\sqrt{N}}\big(\sqrt{d}\vee\sqrt{K}\big)\leqslant \frac{\gamma}2\enspace,
\]
As the functions $u\mapsto u^2-(\alpha u-\beta)^2_+$, for $\alpha<1$ are non-decreasing on $[0,1]$, it follows that, on $\Omega$, there exist at least $9K/10$ blocks where, for any $u\in \bB$,
\begin{align*}
 \|u\|_{\Sigma}^2-P_{B_k}[(X^Tu)^2]&\leqslant \|u\|^2_{\Sigma}-\bigg(\gamma\|u\|_{\Sigma}-\frac{c^*}{\sqrt{N}}\big(\sqrt{d}\vee\sqrt{K}\big)\bigg)_+^2\\
 &\leqslant 1-\bigg(\gamma-\frac{c^*}{\sqrt{N}}\big(\sqrt{d}\vee\sqrt{K}\big)\bigg)_+^2\leqslant 1-\frac{\gamma^2}4\enspace.
\end{align*}
It follows that, on $\Omega$, there exists at least $8K/10$ blocks where, simultaneously, for any $u\in\bB$,
\begin{gather*}
 P_{B_k}[\xi X^Tu]\leqslant m_K,\qquad  \|u\|_{\Sigma}^2-P_{B_k}[(X^Tu)^2]\leqslant 1-\frac{\gamma^2}4\enspace.
\end{gather*}
On these blocks,
\[
\forall r>0,\qquad P_{B_k}\big[2r\xi[X^Tu]+r^2(\|u\|_{\Sigma}^2-(X^Tu)^2)\big]\leqslant 2rm_K+(1-\gamma^2/4)r^2\enspace.
\]
As this relationship holds on more than $K/2$ blocks, it holds for the median, so $\Omega$ contains $\cap_{r>0}\Omega_r$, where 
\[
\forall r>0,\qquad \Omega_r=\{\cM_r\leqslant B(r)\},\qquad B(r)=2rm_K+\bigg(1-\frac{\gamma^2}4\bigg)r^2\enspace.
\]
With this choice of function $B$, by \eqref{Ass:GlobBern}, it follows that \eqref{def:r1} holds if
 \[
 2r_1m_K-r^2_1\frac{\gamma^2}4\leqslant 0,\quad\text{i.e.}\quad r_1\geqslant \frac{8m_K}{\gamma^2}\enspace.
 \]
 Let 
 \[
 r_1= \frac{8m_K}{\gamma^2}\quad \text{so}\quad B(r_1)=\frac{16m_K^2}{\gamma^2}+\bigg(1-\frac{\gamma^2}4\bigg)\frac{64m_K^2}{\gamma^4}=\frac{64m_K^2}{\gamma^4}\enspace.
 \]
 Then, \eqref{def:r2} holds if 
 \[
\frac{64m_K^2}{\gamma^4}+2r_2m_K-r^2_2\frac{\gamma^2}4=\frac{64m_K^2}{\gamma^4}+\frac{4m_K^2}{\gamma^2}-\frac{\gamma^2}4\bigg(r_2-\frac{4m_K}{\gamma^2}\bigg)^2\leqslant 0\enspace,
 \]
 that is if 
 \[
 r_2=\frac{8m_K}{\gamma^2}\bigg(1+\frac{1}{\gamma}\bigg)\leqslant \frac{16m_K}{\gamma^3}=\frac{16c^*\olsigma}{\gamma^3\sqrt{N}}\bigg(\sqrt{d}\vee\sqrt{K}\bigg)\enspace.
 \]
 The proof is concluded by Lemma~\ref{lem:DetArg}.
 \end{proof}
 

\section{Saumard's problem}\label{Sec:Saumard}
This section discusses the problem of least-squares regression in the case where the (elegant as it only involves $L^1$ and $L^2$ moments) Assumption~\ref{eq:SBAg} does not hold uniformly.
Let us first get convinced that this problem naturally arises in important examples.
Consider the following toy-model where the observations $(\tilde{X},Y)$ take values in $\cX\times \cY$ and denote by $I_1,\ldots,I_d$ a partition of $\cX$ such that, for any $i\in\{1,\ldots,d\}$, $P[I_i]=1/d$.
Let $\varphi_i(x)={\bf 1}_{x\in I_i}$, for any $x\in \cX$, $i\in \{1,\ldots,d\}$.
Let 
\begin{equation}\label{def:DesignHist}
 X=
\begin{bmatrix}
 \varphi_1(\tilde{X})\\
 \vdots\\
 \varphi_d(\tilde{X})
\end{bmatrix}\in\R^d\enspace.
\end{equation}
Let $\cD_N=(Z_1,\ldots,Z_N)$ denote a dataset of i.i.d. copies of $Z=(\tilde{X},Y)$ and, for any $i\in\{1,\ldots,N\}$, let 
\[
X_i=
\begin{bmatrix}
 \varphi_1(\tilde{X}_i)\\
 \vdots\\
 \varphi_d(\tilde{X}_i)
\end{bmatrix}\in\R^d\enspace.
\]
For any $f\in \R^d$, denoting by $\|f\|_p$ its $\ell_p$ norm,
\begin{gather*}
P[|X^Tf|]=P[|\sum_{i=1}^df_i\varphi_i|]=\sum_{i=1}^d|f_i|P\varphi_i= \frac{\|f\|_1}{d}\enspace,\\
P[(X^Tf)^2]=P[(\sum_{i=1}^df_i\varphi_i)^2]=P[\sum_{i=1}^df_i^2\varphi_i]= \frac{\|f\|_2^2}d\enspace. 
\end{gather*}
As $\|f\|_1^2\geqslant \|f\|_2^2$ (this bound is tight if $f$ is the first element of the canonical basis of $\R^d$), Assumption~\ref{eq:SBAg} holds with $\gamma=1/\sqrt{d}$ and $\delta=1$.
Therefore, Theorem~\ref{thm:MinMaxMOMSB} does not provide optimal rates of convergence in this example.
In \cite{MR3757527}, Saumard showed that this problem does not hold only on histogram or localized basis, but basically on any space generated by functions $\varphi_1,\ldots,\varphi_d$ with reasonable approximation propoerties.
The reason is that these spaces are naturally designed to be able to reproduce many functions, in particular ``spiky ones" for which the $L^2/L^1$ comparison does not hold uniformly.

\subsection{First least-squares analysis of histograms.}
The suboptimality in the rates provided in Theorem~\ref{thm:MinMaxMOMSB} comes from the analysis of the quadratic process. 
Improving these rates require modifications of Lemma~\ref{lem:QuadMOM} using properties of histogram spaces that will be the subject of this section.
Start with the following rough alternative.
The vector $X$ defined in \eqref{def:DesignHist} satisfies 
\[
\Sigma=P[XX^T]=\frac1d\bI_d\enspace.
\]
Therefore, for any $u\in \R^d$, $\|u\|_{\Sigma}=\|u\|/\sqrt{d}$.
Let 
\[
\bB=\{u\in\R^d:\|u\|_{\Sigma}\leqslant 1\}=\{u\in\R^d:\|u\|\leqslant \sqrt{d}\}\enspace.
\]
\begin{lemma}\label{lem:QuadMOMHist0}
 Consider the design vector $X$ defined in \eqref{def:DesignHist}. There exists an absolute constant $c^*$ such that, with probability larger than $1-e^{-K/c^*}$, there exist at least $9K/10$ blocks $B_k$ where, for any $u\in\bB$,
\[
P_{B_k}[(X^Tu)^2]\geqslant P[(X^Tu)^2]-\frac{c^*\olsigma}{\sqrt{N}}\big(d\vee\sqrt{dK}\big)\enspace.
\]
\end{lemma}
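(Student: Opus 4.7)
The plan is to apply the general concentration inequality for suprema of MOM processes (Theorem~\ref{thm:ConcSupMOMg}) to the centered quadratic class $F_Q=\{x\mapsto (x^Tu)^2 : u\in \bB\}$. Since the inequality we want is a one-sided lower bound that is uniform in $u$, after centering it is equivalent to controlling
\[
\sup_{u\in\bB}(P-P_{B_k})[(X^Tu)^2]\enspace.
\]
The two quantities required as input to Theorem~\ref{thm:ConcSupMOMg} are the variance parameter $\sigma^2(F_Q)$ and the Rademacher complexity $D(F_Q)$, and the key point is that both admit a very tight bound thanks to the specific structure of the histogram design: the $\varphi_j$ have disjoint supports and form a partition of unity.

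First I would bound the variance. Because the $\varphi_j$ are indicators of a partition, $(X^Tu)^2=\sum_j u_j^2\varphi_j(\tilde X)$ and similarly $(X^Tu)^4=\sum_j u_j^4\varphi_j(\tilde X)$, so $(X^Tu)^2\leqslant \|u\|_\infty^2\leqslant \|u\|^2\leqslant d$ pointwise for $u\in\bB$. Hence
\[
\text{Var}((X^Tu)^2)\leqslant P[(X^Tu)^4]\leqslant d\cdot P[(X^Tu)^2]=\|u\|^2\leqslant d,
\]
so $\sigma^2(F_Q)\leqslant d$, which accounts for the $\sqrt{dK/N}$ term.

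Next I would bound the Rademacher complexity, and this is the main (and really only) computational step. Using the same disjoint-support identity,
\[
\sum_{i=1}^N\epsilon_i(X_i^Tu)^2=\sum_{j=1}^d u_j^2\, S_j\,,\qquad S_j:=\sum_{i=1}^N\epsilon_i\varphi_j(\tilde X_i)\enspace.
\]
Because $u_j^2\geqslant 0$ and $\sum_j u_j^2=\|u\|^2\leqslant d$ for $u\in\bB$, the inner supremum is at most $d\max_j|S_j|$. A crude second-moment bound, combined with $\sum_j\varphi_j=1$, gives $\E[\max_j|S_j|]\leqslant \sqrt{\sum_j \E[S_j^2]}=\sqrt{\sum_j\E[N_j]}=\sqrt{N}$, where $N_j=\sum_i\varphi_j(\tilde X_i)$. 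Therefore $\sqrt{D(F_Q)}\leqslant d\cdot 1$, i.e.\ $D(F_Q)\leqslant d^2$, which accounts for the $d/\sqrt{N}$ term.

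Finally, applying Theorem~\ref{thm:ConcSupMOMg} with $\alpha=1/10$ yields an absolute constant $c^*$ such that, with probability at least $1-e^{-K/c^*}$, at least $9K/10$ of the blocks $B_k$ satisfy
\[
\sup_{u\in\bB}\bigl|(P_{B_k}-P)[(X^Tu)^2]\bigr|\leqslant c^*\bigg(\sqrt{\frac{D(F_Q)}{N}}\vee \sqrt{\frac{\sigma^2(F_Q)K}{N}}\bigg)\leqslant \frac{c^*}{\sqrt{N}}\bigl(d\vee\sqrt{dK}\bigr)\,,
\]
which is the desired inequality (the $\olsigma$ in the statement is absorbed into the absolute constant, as the quadratic process does not involve the noise $\xi$). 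I do not anticipate any obstacle beyond the Rademacher-complexity reduction above, which relies essentially on the partition-of-unity structure that makes $(X^Tu)^2$ linear in the coefficient vector $(u_j^2)_j$.
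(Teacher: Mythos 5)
Your proof is correct and reaches the same intermediate quantities as the paper ($\sigma^2(\cF_Q)\leqslant d$ and $D(\cF_Q)\lesssim d^2$, fed into Theorem~\ref{thm:ConcSupMOMg}), but the Rademacher-complexity step takes a genuinely different route. The paper first notes that $X^Tu$ ranges over $[-\sqrt{d},\sqrt{d}]$ for $u\in\bB$, observes that $x\mapsto x^2$ is $2\sqrt{d}$-Lipschitz on that interval, and applies the Ledoux--Talagrand contraction lemma to reduce to the Rademacher complexity of the \emph{linear} class, giving $D(\cF_Q)\leqslant 32d^2$. You instead exploit the disjoint-support structure to linearize the quadratic form outright: $(X^Tu)^2=\sum_j u_j^2\varphi_j(\tilde X)$, so the supremum over $u\in\bB$ is a linear program over $\{v\geqslant 0:\sum_j v_j\leqslant d\}$ whose value is $d\,\max_j(S_j)_+$, and a second-moment/Jensen bound on $\max_j|S_j|$ closes the argument with $D(\cF_Q)\leqslant d^2$. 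Your route avoids the contraction lemma entirely and even gives a slightly tighter constant, but it is specific to designs where $(X^Tu)^2$ is linear in $(u_j^2)_j$ — precisely the partition-of-unity structure — whereas the paper's contraction argument would carry over to classes where only a uniform $L^\infty$ bound on $X^Tu$ is available. Your parenthetical remark about $\olsigma$ is also consistent with the paper: its own proof produces $c^*(d\vee\sqrt{dK})/\sqrt{N}$ without an $\olsigma$, so the $\olsigma$ in the statement is vacuous here.
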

\begin{proof}
Consider $\cF_Q=\{x\mapsto (x^Tu)^2,\  u\in\bB\}$. 
The sup norm of any function in $\cF_Q$ can be bounded from above as follows:
\[
\|(x^Tu)^2\|_{\infty}=\sup_{\tilde{x}\in\cX}\big(\sum_{i=1}^du_i\varphi_i(\tilde{x})\big)^2=\sup_{\tilde{x}\in\cX}\sum_{i=1}^du^2_i\varphi_i(\tilde{x})=\max_{1\leqslant i\leqslant d}u_i^2\enspace.
\]
As $u\in\bB$, $\max_{i\in \{1,\ldots,d\}}u_i^2\leqslant \|u\|^2\leqslant d$.
Hence, 
\[
\forall u\in\bB,\qquad \sigma^2_u\leqslant P[(X^Tu)^4]\leqslant \|(x^Tu)^2\|_{\infty}P[(X^Tu)^2]\leqslant d\enspace.
\]
Hence, $\sigma^2(\cF_Q)=\sup_{f\in \cF_Q}\text{Var}(f(X))\leqslant d$.
Moreover, the functions $x\mapsto x^Tu$ take values in $[-\sqrt{d},\sqrt{d}]$ and the function $x\mapsto x^2$ is $2\sqrt{d}$ Lipschitz on $[-\sqrt{d},\sqrt{d}]$.
Therefore, by the contraction principle, the Rademacher complexity of $\cF_Q$ can be upper bounded as follows.
\begin{align*}
D(\cF_Q)&=\bigg(\E\bigg[\sup_{u\in\bB}\frac1{\sqrt{N}}\sum_{i=1}^N\epsilon_i(X_i^Tu)^2\bigg]\bigg)^2\\
&\leqslant 8d\bigg(\E\bigg[\sup_{u\in\bB}\frac1{\sqrt{N}}\sum_{i=1}^N\epsilon_iX_i^Tu\bigg]\bigg)^2\enspace.
\end{align*}
By \eqref{eq:RadCompQuadRd}, it follows that 
\[
D(\cF_Q)\leqslant 32d^2\enspace.
\]
By Theorem~\ref{thm:ConcSupMOMg}, there exists an absolute constant $c^*$ such that, with probability larger than $1-e^{-K/c^*}$, at least $9K/10$ blocks $B_k$ satisfy
\[
\sup_{u\in \bB}|(P_{B_k}-P)[(X^Tu)^2]|\leqslant \frac{c^*}{\sqrt{N}}\big(d\vee\sqrt{Kd}\big)\enspace.
\]
\end{proof}

Lemma~\ref{lem:QuadMOMHist0} implies the following corollary.
\begin{corollary}\label{cor:BoundHisto}
  Consider the design vector $X$ defined in \eqref{def:DesignHist}. There exists an absolute constant $c^*$ such that, if 
  \[
c^*\olsigma\big(d\vee\sqrt{dK}\big)\leqslant \sqrt{N}\enspace,
  \]
  the minmax MOM estimator $\ERM_K$ satisfies, with probability larger than $1-2e^{-K/c^*}$, 
  \[
  \|\ERM_K-\bayes\|_{\Sigma}\leqslant \frac{c^*\olsigma}{\sqrt{N}}\bigg(\sqrt{d}\vee\sqrt{K}\bigg)\enspace.
  \]
\end{corollary}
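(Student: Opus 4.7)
The plan is to mimic the proof of Theorem~\ref{thm:MinMaxMOMSB}, keeping the multiplier analysis from Lemma~\ref{lem:MultMOM} intact and substituting Lemma~\ref{lem:QuadMOMHist0} in place of Lemma~\ref{lem:QuadMOM} for the quadratic process. The key structural difference is that in the histogram setting Lemma~\ref{lem:QuadMOMHist0} furnishes only an \emph{additive} one-sided concentration $P_{B_k}[(X^Tu)^2]\geqslant \|u\|_\Sigma^2 - \epsilon_K$ with $\epsilon_K = c^*\olsigma(d\vee\sqrt{dK})/\sqrt{N}$, rather than the multiplicative comparison obtained in Lemma~\ref{lem:QuadMOM} from the small-ball hypothesis. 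This is the main obstacle: I need this $\epsilon_K$ to be small enough (say $\leqslant 1/2$) so that the quadratic term still produces a genuinely negative quadratic drift when combined with the global Bernstein inequality~\eqref{Ass:GlobBern}. This is exactly the role played by the hypothesis $c^*\olsigma(d\vee\sqrt{dK})\leqslant \sqrt{N}$ in the statement.

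More precisely, let $m_K = c^*\olsigma(\sqrt{d}\vee\sqrt{K})/\sqrt{N}$ and $\epsilon_K = c^*\olsigma(d\vee\sqrt{dK})/\sqrt{N}$, with $c^*$ chosen large enough for both lemmas. Let $\Omega$ be the event where there exist at least $9K/10$ blocks $B_k$ on which $\sup_{u\in\bB}P_{B_k}[\xi X^Tu]\leqslant m_K$ and at least $9K/10$ blocks on which $P_{B_k}[(X^Tu)^2]\geqslant \|u\|_\Sigma^2 - \epsilon_K$ for every $u\in\bB$. A union bound gives $\P(\Omega)\geqslant 1 - 2e^{-K/c^*}$, and by a pigeonhole argument at least $8K/10 > K/2$ blocks satisfy both simultaneously. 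On these blocks, for any $r>0$ and $u\in\bB$,
\[
P_{B_k}\bigl[2r\xi X^Tu + r^2(\|u\|_\Sigma^2 - (X^Tu)^2)\bigr] \leqslant 2r m_K + r^2 \epsilon_K.
\]
Since this inequality holds on a strict majority of blocks, it is also an upper bound for the median-of-means. Therefore $\Omega$ is contained in $\bigcap_{r>0}\Omega_r$ where $\Omega_r = \{\cM_r \leqslant B(r)\}$ and $B(r) = 2rm_K + r^2\epsilon_K$.

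Now I would apply the homogeneity lemma (Lemma~\ref{lem:DetArg}) with $d(f,g) = P[\ell_f - \ell_g]$ and $\cE(f) = \|f-f^*\|_\Sigma$. The global Bernstein condition \eqref{Ass:GlobBern} yields $\inf_{f:\cE(f)=r} d(f,f^*)\geqslant r^2$, so \eqref{def:r1} reduces to $2r_1 m_K + r_1^2\epsilon_K - r_1^2 \leqslant 0$, i.e.\ $r_1 = 2m_K/(1-\epsilon_K)$. Under the standing assumption (possibly enlarging $c^*$ so that $\epsilon_K \leqslant 1/2$), this gives $r_1 \leqslant 4m_K$ and hence $B(r_1)\leqslant 8m_K^2 + 16 m_K^2\epsilon_K \leqslant 16 m_K^2$. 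Condition \eqref{def:r2} then becomes $16 m_K^2 + 2r_2 m_K - r_2^2(1-\epsilon_K)\leqslant 0$, which is solved by some $r_2 \leqslant C m_K$ for an absolute constant $C$. The homogeneity lemma concludes that on $\Omega$ one has $\|\ERM_K - f^*\|_\Sigma \leqslant r_2 \leqslant C\olsigma(\sqrt{d}\vee\sqrt{K})/\sqrt{N}$, which is the claimed bound (after relabeling the absolute constant $c^*$).
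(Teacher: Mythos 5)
Your proposal is correct and takes essentially the same route as the paper: both invoke Lemma~\ref{lem:MultMOM} and Lemma~\ref{lem:QuadMOMHist0} to build $\Omega$, use pigeonhole to get $8K/10$ blocks where both controls hold, feed $B(r)=2rm_K+r^2\epsilon_K$ (the paper simplifies directly to $B(r)=2rm_K+r^2/2$ using the standing assumption) into the homogeneity lemma with the global Bernstein condition, and end with $r_2$ of order $m_K$. Your identification of the additive-versus-multiplicative control as the key structural difference, and of the hypothesis $c^*\olsigma(d\vee\sqrt{dK})\leqslant\sqrt N$ as what keeps the quadratic drift negative, is exactly right.
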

\begin{proof}
 Denote by $c^*$ the largest of the absolute constants appearing in Lemmas~\ref{lem:MultMOM} and~\ref{lem:QuadMOMHist0}.
Define $\Omega$ as the event where, simultaneously, there exist $9K/10$ blocks $B_k$ where 
\[
\sup_{u\in \bB}P_{B_k}[\xi X^Tu]\leqslant \frac{c^*\olsigma}{\sqrt{N}}\bigg(\sqrt{d}\vee\sqrt{K}\bigg)=:m_K\enspace,
\]
and $9K/10$ blocks $B_k$ where, for any $u\in \bB$, 
\[
P_{B_k}[(X^Tu)^2]\geqslant P[(X^Tu)^2]-\frac{c^*\olsigma}{\sqrt{N}}\big(d\vee\sqrt{dK}\big)\enspace.
\]
By Lemmas~\ref{lem:MultMOM} and~\ref{lem:QuadMOMHist0}, $\P(\Omega)\geqslant 1-2e^{-K/c^*}$.
On $\Omega$, there exist at least $9K/10$ blocks where, for any $u\in \bB$,
\[
\|u\|_{\Sigma}^2-P_{B_k}[(X^Tu)^2]\leqslant \frac{c^*\olsigma}{\sqrt{N}}\big(d\vee\sqrt{dK}\big)\enspace.
\]
Assume that 
\[
\frac{c^*\olsigma}{\sqrt{N}}\big(d\vee\sqrt{dK}\big)\leqslant \frac12\enspace.
\]
It follows that, on $\Omega$, there exists at least $8K/10$ blocks where, simultaneously, for any $u\in\bB$,
\begin{gather*}
 P_{B_k}[\xi X^Tu]\leqslant m_K,\qquad  \|u\|_{\Sigma}^2-P_{B_k}[(X^Tu)^2]\leqslant \frac{1}2\enspace.
\end{gather*}
On these blocks,
\[
\forall r>0,\qquad P_{B_k}\big[2r\xi[X^Tu]+r^2(\|u\|_{\Sigma}^2-(X^Tu)^2)\big]\leqslant 2rm_K+\frac{r^2}2\enspace.
\]
As this relationship holds on more than $K/2$ blocks, it holds for the median, so $\Omega$ contains $\cap_{r>0}\Omega_r$, where 
\[
\forall r>0,\qquad \Omega_r=\{\cM_r\leqslant B(r)\},\qquad B(r)=2rm_K+\frac{r^2}2\enspace.
\]
With this choice of function $B$, by \eqref{Ass:GlobBern}, it follows that \eqref{def:r1} holds if
 \[
 2r_1m_K-\frac{r^2_1}2\leqslant 0,\quad\text{i.e.}\quad r_1\geqslant 4m_K\enspace.
 \]
 Let 
 \[
 r_1= 4m_K\quad \text{so}\quad B(r_1)=16m_K^2\enspace.
 \]
 Then, \eqref{def:r2} holds if 
 \[
16m_K^2+2r_2m_K-\frac{r^2_2}2=18m_K^2-\frac{1}2\bigg(r_2-2m_K\bigg)^2\leqslant 0\enspace,
 \]
 that is if 
 \[
 r_2=8m_K=\frac{8c^*\olsigma}{\sqrt{N}}\bigg(\sqrt{d}\vee\sqrt{K}\bigg)\enspace.
 \]
 The proof is concluded by Lemma~\ref{lem:DetArg}.
 \end{proof}

\subsection{An alternative analysis}
To conclude this section, let us provide an alternative analysis that can be used on histograms too.
All along this section $F=\R^d$ and $\bS=\{f\in F:\|f\|_{\Sigma}=1\}$.
Here, the evaluation function $\cE$ is defined as $\cE(f)=P[\ell_f-\ell_{f*}]=\|f-f^*\|^2_{\Sigma}$ in the linear regression problem.
Let $\gamma$ denote a constant such that
\begin{equation}\label{eq:SBAgGen2}
 \forall f\in \bS,\qquad P[[X^Tf]^4]\leqslant \gamma^2\enspace.
\end{equation}
The minmax MOM estimator is studied
\[
\ERM_K\in\argmin_{f\in F}\sup_{g\in F}\MOM{K}{\ell_f-\ell_g}\enspace.
\]
To express the results, the following complexity is used.
\begin{gather}
\label{def:QCompRegGen} \cC_Q(F):=\E\bigg[\sup_{f\in \bS}\sum_{i=1}^N\epsilon_i(X_i^Tf)^2\bigg]\enspace. 
\end{gather}
Recall also that
\[
\E\bigg[\sup_{f\in \bS}\frac1{\sqrt{N}}\sum_{i=1}^N\epsilon_i(\xi X-P[\xi X])^Tf\bigg]=\sqrt{D_N(\bS)}\enspace,
\]
where $D_N(\bS)$ is the Rademacher complexity computed in the proof of Lemma~\ref{lem:MultMOM}.
By \eqref{eq:RadCompMultRd}, it holds that 
\begin{equation}\label{def:MCompRegGen}
\E\bigg[\sup_{f\in \bS}\sum_{i=1}^N\epsilon_i(\xi X-P[\xi X])^Tf\bigg]=\olsigma\sqrt{d N}\enspace. 
\end{equation}
This shows that that $\cC_M(F)$
 is a measure of complexity that extends the dimension used in the previous sections.

\begin{theorem}\label{thm:RegGen}
 Assume \eqref{eq:SBAgGen2}. There exists an absolute constant $c>0$ such that the following holds.
  If
  \[
c\cC_Q(F)\leqslant N\quad\text{and}\quad c\gamma\sqrt{K}\leqslant \sqrt{N}\enspace,
  \]
  then, the minmax MOM estimator satisfies
  \[
  \P\bigg(\cE(\ERM_K)\leqslant c\olsigma^2 \frac{d\vee K}{N}\bigg)\bigg)\geqslant 1-2e^{-K/c}\enspace.
  \]
\end{theorem}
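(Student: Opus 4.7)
The plan is to invoke the homogeneity lemma (Lemma~\ref{lem:DetArg}) with $T(f,g) = \MOM{K}{\ell_f - \ell_g}$, $d(f,g) = P[\ell_f - \ell_g]$, and evaluation function $\cE(f) = P[\ell_f - \ell_{f^*}] = \|f - f^*\|_\Sigma^2$. Because $F = \R^d$ is convex and $f \mapsto \ell_f(z)$ is convex, and $\MOM{K}{\cdot}$ is monotone and homogeneous, the argument of Lemma~\ref{lem:ConvLoss} adapted to the squared evaluation function yields the following variant of property \textbf{(HP)}: for $f$ with $\cE(f) > r$, setting $\alpha = \sqrt{\cE(f)/r} > 1$ and $f_r = f^* + \alpha^{-1}(f - f^*)$ gives $\cE(f_r) = r$ and $T(f, f^*) \geq \alpha T(f_r, f^*)$. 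Since Lemma~\ref{lem:DetArg} only makes effective use of $T(f^*, f) \leq T(f^*, f_r)$ when the derived upper bound is already negative, the extra factor $\alpha \geq 1$ only strengthens the inequality and the conclusion of the lemma still applies. The task reduces to producing $B(r)$ such that, on a high-probability event, $\sup_{f : \cE(f) \leq r}(T(f^*, f) - d(f^*, f)) \leq B(r)$.

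For this, I will use the quadratic/multiplier decomposition~\eqref{eq:QuadMult}. Writing $f = f^* + s u$ with $u \in \bS$ and $s = \|f - f^*\|_\Sigma \in [0, \sqrt{r}]$, one obtains
\[
(\ell_{f^*} - \ell_f) - P[\ell_{f^*} - \ell_f] = 2s\bigl(\xi X^T u - P[\xi X^T u]\bigr) + s^2\bigl(\|u\|_\Sigma^2 - (X^T u)^2\bigr).
\]
The multiplier piece is handled by Lemma~\ref{lem:MultMOM} (with $F = \R^d$, which forces $P[\xi X^T u] = 0$): with probability at least $1 - e^{-K/c^*}$, there exist at least $9K/10$ blocks satisfying $\sup_{u \in \bS}(P_{B_k} - P)[\xi X^T u] \leq m_K := c^*\olsigma\sqrt{(d \vee K)/N}$. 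For the quadratic piece, I will apply Theorem~\ref{thm:ConcSupMOMg} to the class $\cF_Q = \{(X^T u)^2 : u \in \bS\}$: Assumption~\eqref{eq:SBAgGen2} gives $\sup_{u \in \bS} \mathrm{Var}((X^T u)^2) \leq \gamma^2$, and definition~\eqref{def:QCompRegGen} identifies the Rademacher complexity as $\cC_Q(F)^2/N$. This yields, with probability $\geq 1 - e^{-K/c^*}$, at least $9K/10$ blocks where $\sup_{u \in \bS} |(P_{B_k} - P)[(X^T u)^2]| \leq c^* \bigl(\cC_Q(F)/N \vee \gamma\sqrt{K/N}\bigr)$. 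Taking the absolute constant $c$ in the theorem larger than $2c^*$, the twin hypotheses $c\cC_Q(F) \leq N$ and $c\gamma\sqrt{K} \leq \sqrt{N}$ force this quadratic deviation to be at most $1/2$.

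On the intersection of the two events (probability $\geq 1 - 2e^{-K/c^*}$), a pigeonhole argument guarantees that at least $8K/10 > K/2$ blocks simultaneously satisfy both bounds, uniformly in $u \in \bS$. On each such block,
\[
P_{B_k}\bigl[2s(\xi X^T u - P[\xi X^T u]) + s^2(\|u\|_\Sigma^2 - (X^T u)^2)\bigr] \leq 2 s m_K + \tfrac{s^2}{2} \leq 2\sqrt{r}\, m_K + \tfrac{r}{2}.
\]
Since this holds on a strict majority of blocks, the median of block means inherits the bound, so $B(r) := 2\sqrt{r}\, m_K + r/2$ is admissible. Using $d(f,f^*) = \cE(f)$, condition~\eqref{def:r1} reduces to $2\sqrt{r_1}\, m_K + r_1/2 \leq r_1$ and is satisfied by $r_1 = 16 m_K^2$ with $B(r_1) = 16 m_K^2 =: \cB$; condition~\eqref{def:r2} becomes $2\sqrt{r_2}\, m_K + r_2/2 + 16 m_K^2 \leq r_2$ and is satisfied by $r_2 = 64 m_K^2$. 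Substituting $m_K^2 = (c^*)^2 \olsigma^2 (d \vee K)/N$, Lemma~\ref{lem:DetArg} delivers $\cE(\ERM_K) \leq 64(c^*)^2 \olsigma^2 (d \vee K)/N$ with probability $\geq 1 - 2e^{-K/c^*}$, and the theorem follows by choosing the absolute constant $c$ to dominate simultaneously $2c^*$, $c^*$ (appearing in the exponent), and $64(c^*)^2$ (the multiplicative constant in the bound).

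The main obstacle is the quadratic process. In Theorem~\ref{thm:MinMaxMOMSB}, the small ball assumption~\eqref{eq:SBAg} provided a direct uniform lower bound on $P_{B_k}[(X^T u)^2]$, which is unavailable under the weaker $L^4$ control~\eqref{eq:SBAgGen2} of Saumard's setting. The resolution is to control the centered deviations of $(X^T u)^2$ directly through the complexity $\cC_Q(F)$; the price to pay is the new smallness requirement $c\cC_Q(F) \leq N$, whose verification in the concrete histogram example will be the substantive work needed to recover the optimal rate when $d \leq \sqrt{N}$.
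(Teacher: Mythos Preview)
Your proof is correct and shares with the paper the two essential ingredients: Lemma~\ref{lem:MultMOM} for the multiplier process, and an application of Theorem~\ref{thm:ConcSupMOMg} to the class $\{(X^Tu)^2:u\in\bS\}$ with variance bound $\gamma^2$ and Rademacher complexity $\cC_Q(F)^2/N$, so that the hypotheses force the quadratic deviation below $1/2$. The difference is in how the pieces are combined. You invoke the homogeneity lemma after localizing to $\{\cE(f)\leq r\}$, producing $B(r)=2\sqrt{r}\,m_K+r/2$ and then solving~\eqref{def:r1}--\eqref{def:r2}. The paper instead observes that on the good $8K/10$ blocks one has, for \emph{every} $f\in F$,
\[
P_{B_k}\bigl[2\xi X^T(f-f^*)-(X^T(f-f^*))^2\bigr]\leq 2\|f-f^*\|_\Sigma m_K-\tfrac12\|f-f^*\|_\Sigma^2\leq 2m_K^2,
\]
the last inequality being the unconstrained maximum of the quadratic in $\|f-f^*\|_\Sigma$. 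Hence $\sup_{f\in F}\MOM{K}{\ell_{f^*}-\ell_f}\leq 2m_K^2$ with no localization at all, and the proof closes via the elementary Lemma~\ref{lem:Rough} (or, equivalently, Lemma~\ref{lem:FWER+FWSRimplyERGen}). The paper's route is shorter precisely because the quadratic term already provides the decay that localization would otherwise supply. Your detour through a squared evaluation function and the modified \textbf{(HP)} is also avoidable: taking $\cE(f)=\|f-f^*\|_\Sigma$ as in Theorem~\ref{thm:MinMaxMOMSB} lets Lemma~\ref{lem:ConvLoss} apply verbatim, and squaring the final bound recovers the stated conclusion.
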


\begin{proof}
By \eqref{def:MCompRegGen},
\begin{align*}
 \E\bigg[\sup_{f\in \bS}\frac1{\sqrt{N}}\sum_{i=1}^N\epsilon_i(\xi X-P[\xi X])^Tf\bigg]&\leqslant \olsigma\sqrt{d}\enspace.
\end{align*}
By Theorem~\ref{thm:ConcSupMOMg}, 
there exists an absolute constant $c^*$ such that, with probability larger than $1-e^{-K/c^*}$, there exist at least $9K/10$ blocks $B_k$ where 
\[
\forall f\in F,\qquad (P_{B_k}-P)[\xi X^Tf]\leqslant c^*\olsigma \sqrt{\frac{d\vee K}{N}}\|f\|_{\Sigma}\enspace. 
\]
By \eqref{eq:Conv}, for any $f\in F$, $P[\xi X^T(f-f^*)]\leqslant 0$.
This implies that, with probability larger than $1-e^{-K/c^*}$, for any $f\in F$, there exist more than $9K/10$ blocks $B_k$ where
\begin{equation}\label{eq:MultGen}
P_{B_k}[\xi X^T(f-f^*)]\leqslant  c^*\olsigma \sqrt{\frac{d\vee K}{N}}\|f-f^*\|_{\Sigma}\enspace. 
\end{equation}
For any $f\in \bS$, 
\[
\text{Var}((X^Tf)^2)\leqslant \gamma^2\enspace.
\]
Moreover,
\[
\E\bigg[\sup_{f\in \bS}\sum_{i=1}^N\epsilon_i(X^Tf)^2\bigg]\leqslant \cC(Q)\enspace.
\]
By Theorem~\ref{thm:ConcSupMOMg}, there exists an absolute constant $c^*$ such that, with probability larger than $1-e^{-K/c^*}$, there exist at least $9K/10$ blocks $B_k$ where, for any $f\in F$, 
\[
P_{B_k}[(X^Tf)^2]\geqslant \|f\|_\Sigma^2(1-\theta_K), \qquad \theta_K= c^*\bigg(\frac{\cC(Q)}{N}\vee\gamma\sqrt{\frac KN}\bigg)\enspace.
\]
Combined with \eqref{eq:MultGen}, this shows that, 
with probability larger than $1-2e^{-K/c^*}$, 
there exist at least $9K/10$ blocks $B_k$ where
\[
\forall f\in F: P_{B_k}[\xi X^Tf]\leqslant \|f\|_{\Sigma}m_K,\qquad \text{where}\qquad m_K=c^*\olsigma \sqrt{\frac{d\vee K}{N}}\bigg)
\]
and at least $9K/10$ blocks $B_k$ where
\[
P_{B_k}[(X^Tf)^2]\geqslant \|f\|_\Sigma^2(1-\theta_K\big)\enspace.
\]
Let $m'_K=m_K\vee \cC(Q)\geqslant \cC(Q)\vee \cC(M)$. 
If $c$ in the theorem is chosen such that $\theta_K\leqslant 1/2$, on this event, there is at least $8K/10$ blocks where, for any $f\in F$
\[
P_{B_k}[2\xi X^Tf-(X^Tf)^2]\leqslant 2\|f\|_\Sigma m_K-\|f\|_\Sigma^2(1-\theta_K)\leqslant 2m_K^2\enspace.
\]
It follows that 
\[
\sup_{f\in F}\MOM{K}{\ell_{\bayes}-\ell_f}\leqslant 2m_K^2\enspace.
\]
The proof terminates with the non-localized bound Lemma~\ref{lem:Rough}.
\end{proof}

\begin{proof}[A second proof of Corollary~\ref{cor:BoundHisto}.]
In the histogram example, for any vector $\bu\in \R^d$, $\|\bu\|_{\Sigma}=\|\bu\|/\sqrt{d}$.
Moreover,
\[
P[(X^T\bu)^4]=P[(\sum_{i=1}^du_i\varphi_i(\tilde{X}))^4]=\sum_{i=1}^du_i^4P\varphi_i=\frac1d\|\bu\|_4^4\leqslant \frac{\|\bu\|^4}d=d\|\bu\|_{\Sigma}^4\enspace.
\]
Hence, \eqref{eq:SBAgGen2} holds with $\gamma=\sqrt{d}$.
Moreover, for any $f\in \bS$, 
\begin{align*}
 \sum_{i=1}^N\epsilon_i(X_i^Tf)^2&=\sum_{i=1}^N\epsilon_i(\sum_{j=1}^du_j\varphi_j(\tilde{X}_i))^2\\
 &=\sum_{i=1}^N\epsilon_i\sum_{j=1}^du_j^2\varphi_j(\tilde{X}_i)\\
 &\leqslant d\|\bu\|_{\Sigma}^2\max_{j\in\{1,\ldots,d\}}\bigg|\sum_{i=1}^N\epsilon_i\varphi_j(\tilde{X}_i)\bigg|\enspace.
\end{align*}
Hence, 
\[
\cC_Q(F)\leqslant d\E\bigg[\max_{j\in\{1,\ldots,d\}}\bigg|\sum_{i=1}^N\epsilon_i\varphi_j(\tilde{X}_i)\bigg|\bigg]\enspace.
\]
By \eqref{eq:BoundSup},
\[
\cC_Q(F)\leqslant 5d\sqrt{N}\enspace.
\]
Therefore, the conditions of Theorem~\ref{thm:RegGen} reduce to those of Corollary~\ref{cor:BoundHisto}. It follows from Theorem~\ref{thm:RegGen} that Corollary~\ref{cor:BoundHisto} holds.
\end{proof}

\chapter{Density estimation with Hellinger loss}\label{Sec:RhoEst}

This chapter presents basic properties of $\rho$-estimators that have been introduced in \cite{MR3565484, MR3595933,BarBir2017RhoAgg2}.
The purpose is not to make a complete presentation of this rich theory, the interested reader is invited to read the mentioned references for this.
Instead, I try to stress some links between robust learning theory and this extension of Le Cam and Birgé's works on estimation from robust tests, see \cite{MR2219712} for an account on this theory and references.
In particular, one can see that these estimators are built from the minmax principle presented in Section~\ref{sec:Generalities} of Chapter~\ref{Chap:MinMax} and can be analysed with Talagrand's inequality and the homogeneity lemma instead of the peeling argument used in the original proofs of the main result of this chapter.
It provides an example of estimation problem that does not fall into Vapnik's theory presented in the introduction where the homogeneity lemma in its general form is useful.
Besides this minor modification, all the material presented here is borrowed from \cite{BarBir2017RhoAgg2}.

\section{Setting}
This chapter deals with a particular instance of \emph{unsupervised learning} where the dataset $\cD_N=(Z_1,\ldots,Z_N)$ is a set of i.i.d. random variables taking values in a measurable space $\cZ$, with common distribution $P^*$. 
Let $\mu$ denote a measure on $\cZ$.
The parameters $f\in F$ are real valued functions defined on $\cZ$.
These functions are densities with respect to $\mu$ and define the measures $P_f$ on $\cZ$, $P_f$ being the distribution with density $f$ with respect to $\mu$.
To measure distances between probability distributions and evaluate the distributions $P_f$ as estimators of $P^*$, we use the Hellinger distance $h$.
Let $P$ and $Q$ denote two probability measures and let $\lambda$ denote a measure dominating both $P$ and $Q$, the Hellinger distance between $P$ and $Q$ is defined by 
\[
h(P,Q)=\frac1{\sqrt{2}}\sqrt{\int (\sqrt{p}-\sqrt{q})^2{\rm d}\lambda}\enspace.
\]
It is clear that $0\leqslant h(P,Q)\leqslant1$ for any probability measures $P$ and $Q$ and that $h(P,Q)$ does not depend on the dominating measure $\lambda$.
The evaluation function $\cE$ is defined on $F$ as $\cE(f)=h(P_f,P^*)$.

The purpose of this chapter is to analyse $\rho$-estimators of $P^*$ introduced in \cite{MR3595933} and defined by $P_{\ERM}$, where
\begin{equation}\label{def:rhoEst}
 \ERM\in\argmin_{f\in F}\sup_{g\in F}T(f,g),\qquad \text{where}\qquad T(f,g)=\sum_{i=1}^N\rho\bigg(\sqrt{\frac{g(Z_i)}{f(Z_i)}}\bigg)\enspace.
 \end{equation}
Here, the function $\rho=(x-1)/(x+1)$ is non-decreasing $[0,+\infty]\to[-1,1]$, $2$-Lipschitz, it satisfies $\rho(1/x)=-\rho(x)$ for any $x\in[0,+\infty)$.

\section{Preliminary results}
This section presents the first results on the tests defining $\rho$-estimators.
The goal is to understand the intuition behind the construction of these estimators.
The choice of function $\rho$ is justified by the following remarkable property.
The material of this section is borrowed from \cite{BarBir2017RhoAgg2}.
\begin{theorem}\label{thm:LinkWithHellinger}
 For any $f\in F$, let $P_f$ denote the probability distribution with density $f$ w.r.t. the measure $\mu$, then
\begin{gather*}
\int \rho\bigg(\sqrt{\frac{f}g}\bigg){\rm d} R\leqslant 4 h^2(R,P_g)-(3/8)h^2(R,P_f)\enspace,\\
\int \rho^2\bigg(\sqrt{\frac{f}g}\bigg){\rm d} R\leqslant 3\sqrt{2}[ h^2(R,P_g)+h^2(R,P_f)]\enspace. 
\end{gather*}
\end{theorem}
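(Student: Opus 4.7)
The plan is to reformulate both bounds in terms of the square roots of the densities and then exploit identities arising from the normalization $\int f\rmd\mu=\int g\rmd\mu=\int\rmd R=1$. Choose a common dominating measure $\lambda$ (for instance $\lambda=\mu+R$) and set
\[
u=\sqrt{\rmd P_f/\rmd\lambda},\qquad v=\sqrt{\rmd P_g/\rmd\lambda},\qquad \phi=\sqrt{\rmd R/\rmd\lambda},
\]
so that $\int u^2\rmd\lambda=\int v^2\rmd\lambda=\int\phi^2\rmd\lambda=1$, $\rho(\sqrt{f/g})=(u-v)/(u+v)$, and $h^2(R,P_f)=\tfrac12\int(\phi-u)^2\rmd\lambda$, $h^2(R,P_g)=\tfrac12\int(\phi-v)^2\rmd\lambda$. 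Both inequalities then become algebraic statements about integrals of $(u-v)\phi^2/(u+v)$ and $(u-v)^2\phi^2/(u+v)^2$. The crucial observation, obtained by expanding $\int(\phi-u)^2\rmd\lambda$ and using $\int u^2=\int\phi^2$, is the \emph{free identity}
\[
\int u(\phi-u)\rmd\lambda=-h^2(R,P_f),\qquad \int v(\phi-v)\rmd\lambda=-h^2(R,P_g),
\]
which is the only source of the negative term $-(3/8)h^2(R,P_f)$ in the first bound.

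For the first inequality I would average the two trivial expansions $\phi^2=((\phi-u)+u)^2$ and $\phi^2=((\phi-v)+v)^2$ inside $\int(u-v)\phi^2/(u+v)\rmd\lambda$ and collect $(1/2)\int(u-v)(u^2+v^2)/(u+v)\rmd\lambda=-\int uv(u-v)/(u+v)\rmd\lambda$ (using $u^2+v^2=(u+v)^2-2uv$ together with $\int u^2=\int v^2=1$). Splitting the linear pieces as $u(u-v)/(u+v)=u-2uv/(u+v)$ and similarly for $v$, the integrals $\int u(\phi-u)\rmd\lambda$ and $\int v(\phi-v)\rmd\lambda$ that naturally appear are replaced by $-h^2(R,P_f)$ and $-h^2(R,P_g)$ via the free identities, producing the desired negative contributions without any bound. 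The remaining three residual integrals have the shapes $\int(u-v)(\phi-w)^2/(u+v)\rmd\lambda$, $\int uv(\phi-w)/(u+v)\rmd\lambda$, and $\int uv(u-v)/(u+v)\rmd\lambda$; all three are controlled by Cauchy--Schwarz combined with the elementary bounds $|(u-v)/(u+v)|\leqslant 1$ and $uv/(u+v)\leqslant\sqrt{uv}/2$, together with the triangle inequality $h(P_f,P_g)\leqslant h(R,P_f)+h(R,P_g)$. Young's inequality with free parameters then converts any remaining linear-in-$h$ terms into quadratic Hellinger terms, and simultaneous tuning of those parameters yields the announced constants $4$ and $3/8$.

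For the second inequality no negative coefficient is needed, so a direct upper-bound approach suffices. I would use $(u-v)^2\leqslant 2(u-\phi)^2+2(\phi-v)^2$ to split the integral into two symmetric pieces. In each piece, a further split $\phi^2\leqslant 2(\phi-w)^2+2w^2$ with the ``other'' root $w\in\{u,v\}$, combined with $w^2/(u+v)^2\leqslant 1$, bounds a main term by a Hellinger square. The residual fourth-order term $\int(\phi-u)^2(\phi-v)^2/(u+v)^2\rmd\lambda$ is controlled via Cauchy--Schwarz against $\int\phi^2\rmd\lambda=1$ and the bound $|u-v|/(u+v)\leqslant 1$, yielding a factor proportional to $h(R,P_f)h(R,P_g)$, to which AM--GM is applied. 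Optimizing the resulting Young parameters delivers the constant $3\sqrt 2$.

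The main obstacle is the first inequality, and specifically the emergence of the \emph{negative} coefficient on $h^2(R,P_f)$: no Cauchy--Schwarz or triangle-inequality step can ever create a negative Hellinger term on the right-hand side. This term must come from an identity, namely the normalization identity $\int u(\phi-u)\rmd\lambda=-h^2(R,P_f)$, and the whole algebra of the proof has to be organised so that this identity is exposed cleanly as a side-effect of the symmetric expansion of $\phi^2$. A secondary but significant difficulty is the joint choice of the three or four free parameters in the Young's inequalities applied to the residual terms: each parameter couples the coefficient of $h^2(R,P_g)$ to that of $h^2(R,P_f)$, and only a carefully balanced choice achieves the tight pair $(4,3/8)$ while keeping all residual bounds valid.
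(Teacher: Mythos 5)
Your plan takes a genuinely different route from the paper, and as sketched it leaves two real gaps.

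The paper's proof (Lemmas~\ref{lem:LinkWithHell1} and~\ref{lem:absConIsEnough}) hinges on an auxiliary density
\[
r=\delta^{-2}(\sqrt f+\sqrt g)^2,\qquad \delta^2=\int(\sqrt f+\sqrt g)^2\rmd\mu=4\big(1-\tfrac12 h^2(P_f,P_g)\big)\enspace,
\]
the normalized ``Hellinger midpoint'' of $P_f$ and $P_g$. Both bounds are then obtained by expanding around $\sqrt r$ rather than around $\sqrt f$ and $\sqrt g$ separately: writing $\sqrt s=(\sqrt s-\sqrt r)+\sqrt r$ and using $|\rho|\leqslant 1$, the residual is exactly $h^2(R,P_r)$, and the identity $h^2(R,P_r)=1-\delta^{-1}\big(\int\sqrt{sf}+\int\sqrt{sg}\big)$ relates it cleanly to $h^2(R,P_f)$, $h^2(R,P_g)$ and $h^2(P_f,P_g)$. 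Crucially, the paper reaches the intermediate bound $\tfrac12[\,5h^2(R,P_g)-h^2(P_f,P_g)\,]$ with a \emph{negative coefficient on $h^2(P_f,P_g)$}, and only then converts it via the reverse triangle inequality $h(P_f,P_g)\geqslant |h(R,P_f)-h(R,P_g)|$ (with Young's inequality) into $-\tfrac38 h^2(R,P_f)+3h^2(R,P_g)$. So the mechanism that produces the negative coefficient on $h^2(R,P_f)$ is not the normalization identity $\int u(\phi-u)\rmd\lambda=-h^2(R,P_f)$ you isolate; it is the exact $-h^2(P_f,P_g)$ term combined with the reverse triangle inequality. Your approach never manufactures such a $-h^2(P_f,P_g)$ term, and the identity you rely on does not by itself survive the residuals: with the symmetric split $\phi^2=\tfrac12((\phi-u)+u)^2+\tfrac12((\phi-v)+v)^2$ the ``clean'' part is $-h^2(R,P_f)+h^2(R,P_g)$, but the residual $\tfrac12\int\frac{u-v}{u+v}(\phi-u)^2\rmd\lambda=h^2(R,P_f)-\int\frac{v}{u+v}(\phi-u)^2\rmd\lambda$ can be arbitrarily close to $h^2(R,P_f)$ (take $v=0$ on $\{\phi\neq u\}$), wiping out the sign you need. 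Nothing in the tools you list (Cauchy--Schwarz, $|(u-v)/(u+v)|\leqslant 1$, triangle inequality, Young) obviously prevents this cancellation; you would need a coupling between the size of $(\phi-u)^2$ on $\{v\ll u\}$ and the size of $h^2(R,P_g)$, which is precisely what the auxiliary density $P_r$ encodes.

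A second concrete gap: the substitution $\rho(\sqrt{f/g})=(u-v)/(u+v)$ with $\lambda=\mu+R$ is undefined on the $\mu$-singular part of $R$, where $u=v=0$ but $\phi>0$. The paper handles this separately in Lemma~\ref{lem:absConIsEnough} by decomposing $R=\delta^2 R'+(1-\delta^2)R''$ and using $|\rho|\leqslant 1$ on the singular part. Your proposal needs the same preliminary reduction before the change of measure is legitimate. Finally, for the $\rho^2$ bound, the fourth-order residual $\int(u-\phi)^2(\phi-v)^2/(u+v)^2\rmd\lambda$ is not controlled by ``Cauchy--Schwarz against $\int\phi^2=1$'' as stated: $\phi^2/(u+v)^2$ is unbounded wherever $u+v$ is small, and $|u-v|/(u+v)\leqslant 1$ does not touch this term. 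The paper avoids it entirely: with the midpoint density, $\rho^2(\sqrt s-\sqrt r)^2\leqslant(\sqrt s-\sqrt r)^2$, which is directly $2h^2(R,P_r)$, and the remaining exact piece $(1+\alpha^{-1})(2/\delta^2)h^2(P_f,P_g)$ is cancelled against the bound on $h^2(R,P_r)$ by choosing $\alpha=4/\delta^2$.

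In short: the underlying instinct (something exact must produce the negative coefficient) is correct, but the exact object you need is the Hellinger midpoint $P_r$, not the per-density normalization identities. I'd suggest rereading the proof of Lemma~\ref{lem:LinkWithHell1} with this in mind, then redoing the absolute-continuity reduction of Lemma~\ref{lem:absConIsEnough} before attempting the change of dominating measure.
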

\begin{remark}
 The strength of this result is that it is valid for any distributions $P_f$, $P_g$ and $R$.
 It implies in particular that the sign of the expectation $\E[T(f,g)]$, where $T(f,g)$ is defined in \eqref{def:rhoEst}, provides relevant informations regarding which distribution between $P_f$ and $P_g$ is the closest to $P^*$.
\end{remark}
\begin{proof}
The proof proceeds in two steps.
\begin{lemma}\label{lem:LinkWithHell1}
 Theorem~\ref{thm:LinkWithHellinger} holds for any $R$ absolutely continuous w.r.t. $\mu$.
\end{lemma}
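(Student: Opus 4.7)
The plan is to work with the variables $u=\sqrt{f}$, $v=\sqrt{g}$, $w=\sqrt{r}$ where $r=\rmd R/\rmd\mu$ exists by assumption, so that $\rho(\sqrt{f/g})=(u-v)/(u+v)$, $2h^2(R,P_f)=\int(w-u)^2\rmd\mu$, $2h^2(R,P_g)=\int(w-v)^2\rmd\mu$, and $2h^2(P_f,P_g)=\int(u-v)^2\rmd\mu$. The key global constraints are $\int u^2\rmd\mu=\int v^2\rmd\mu=\int w^2\rmd\mu=1$, which imply the cancellation $\int(u^2-v^2)\rmd\mu=0$ that will produce the negative quadratic contributions.

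For the first inequality I would split $\int\rho w^2\rmd\mu$ using $w^2=v^2+2v(w-v)+(w-v)^2$. The anchor piece $\int\rho v^2\rmd\mu=\int\rho\,\rmd P_g$ admits the explicit identity
\[
\int\rho\,\rmd P_g=-h^2(P_f,P_g)+\int\frac{(u-v)^3}{4(u+v)}\rmd\mu,
\]
obtained by substituting $v=(u+v)/2-(u-v)/2$ inside $v^2$ and using $\int(u^2-v^2)\rmd\mu=0$. Since $|u-v|/(u+v)\le 1$, the remainder is bounded in absolute value by $\tfrac12 h^2(P_f,P_g)$, giving $\int\rho\,\rmd P_g\le-\tfrac12 h^2(P_f,P_g)$. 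The cross term $2\int\rho v(w-v)\rmd\mu$ is rewritten via $v/(u+v)=\tfrac12-(u-v)/(2(u+v))$ as
\[
\int(u-v)(w-v)\rmd\mu-\int\frac{(u-v)^2(w-v)}{u+v}\rmd\mu,
\]
and each piece is estimated by Cauchy--Schwarz after writing $(u-v)=(u-w)+(w-v)$ and using $|u-v|/(u+v)\le 1$. The quadratic piece $\int\rho(w-v)^2\rmd\mu$ is bounded directly by $2h^2(R,P_g)$ using $|\rho|\le 1$. Combining these estimates with Young's inequality $2AB\le\alpha A^2+\alpha^{-1}B^2$ applied to the cross products of $h(P_f,P_g)$ and $h(R,P_g)$, and using the reverse triangle inequality $h^2(P_f,P_g)\ge(h(R,P_f)-h(R,P_g))^2\ge\tfrac12 h^2(R,P_f)-h^2(R,P_g)$, one can optimise $\alpha$ so that the dominant negative term $-\tfrac12 h^2(P_f,P_g)$ leaves a residual of $-\tfrac38 h^2(R,P_f)$ while all positive contributions collapse into $4h^2(R,P_g)$.

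For the second inequality I would use the symmetric expansion $w^2=\bigl(w-(u+v)/2\bigr)^2+\bigl(w-(u+v)/2\bigr)(u+v)+(u+v)^2/4$, multiplied by $\rho^2=(u-v)^2/(u+v)^2$. The right-most term reduces to $(u-v)^2/4$, whose integral is $\tfrac12 h^2(P_f,P_g)\le h^2(R,P_f)+h^2(R,P_g)$ by $(a+b)^2\le 2(a^2+b^2)$ and the triangle inequality for $h$. The leftmost term is bounded using $\rho^2\le 1$ and $w-(u+v)/2=\tfrac12[(w-u)+(w-v)]$, again yielding $h^2(R,P_f)+h^2(R,P_g)$. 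The middle cross term simplifies to $\rho(u-v)[(w-u)+(w-v)]/2$ and is estimated by Cauchy--Schwarz with $|\rho|\le 1$ and $|u-v|\le|u-w|+|w-v|$, then combined via Young's inequality with carefully chosen weights to produce the constant $3\sqrt{2}$.

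The main obstacle is the bookkeeping needed to obtain the precise constants $4$, $3/8$ and $3\sqrt{2}$. The asymmetric coefficient $-3/8$ on $h^2(R,P_f)$ survives only because the decomposition of the first inequality is anchored around $v$, which produces the strictly negative master term $-h^2(P_f,P_g)$ that Young's inequality then redistributes; anchoring symmetrically around $(u+v)/2$ (as in the second inequality) yields only symmetric bounds. As a consistency check at $R=P_g$ one has $\int\rho\,\rmd P_g\le-\tfrac12 h^2(P_f,P_g)\le-\tfrac38 h^2(P_f,P_g)$, confirming that the constant $3/8$ is compatible with the boundary case and leaves enough room for the Young steps that generate the $4h^2(R,P_g)$ term.
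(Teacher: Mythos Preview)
Your outline for the second inequality is sound and, when carried through, actually yields the constant $4$, which is slightly better than the paper's $3\sqrt{2}$. The symmetric expansion around $(u+v)/2$ together with $|\rho|\le 1$, the triangle inequality for $h$, and one Cauchy--Schwarz step suffice.

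For the first inequality, however, there is a genuine gap. Your sketch accumulates too many positive $h^2(R,P_g)$ contributions before any optimisation takes place. Concretely: the quadratic piece $\int\rho(w-v)^2\,\rmd\mu$ already costs $2h^2(R,P_g)$; your Cauchy--Schwarz bound on $\int(u-v)(w-v)\,\rmd\mu$ produces another $2h^2(R,P_g)+2h(R,P_f)h(R,P_g)$; the term $\int(u-v)^2(w-v)/(u+v)\,\rmd\mu$ costs $2h(P_f,P_g)h(R,P_g)$; and applying the reverse triangle inequality to $-\tfrac12 h^2(P_f,P_g)$ adds yet more $h^2(R,P_g)$. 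After bounding $h(P_f,P_g)\le h(R,P_f)+h(R,P_g)$ you end up with something of the form
\[
(2\alpha-\tfrac12(1-\lambda))\,h^2(R,P_f)+\Big(6+\tfrac12(\lambda^{-1}-1)+2\alpha^{-1}\Big)\,h^2(R,P_g),
\]
and the coefficient of $h^2(R,P_g)$ is already at least $6$ before the Young parameter $\alpha$ is chosen, so it can never be brought down to $4$. The assertion that ``one can optimise $\alpha$ so that \dots\ all positive contributions collapse into $4h^2(R,P_g)$'' is precisely the step that fails.

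The paper avoids this by introducing the auxiliary density $r=\delta^{-2}(\sqrt f+\sqrt g)^2$ with $\delta^2=4(1-h^2(P_f,P_g)/2)$. The point is that $\rho(\sqrt{f/g})=(\sqrt f-\sqrt g)/(\delta\sqrt r)$, so all the $(u+v)$-denominators are absorbed into a genuine square root of a density. This yields an \emph{exact} identity expressing $\int\rho\,s\,\rmd\mu$ in terms of $\int(\sqrt f-\sqrt g)\sqrt s\,\rmd\mu$ and a single error term $\int_{r>0}\rho(\sqrt s-\sqrt r)^2\,\rmd\mu$, which is bounded by $\delta h^2(R,P_r)$ via one application of $|\rho|\le 1$. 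After the convexity bound $2/\delta\ge 1+h^2(P_f,P_g)/4$, this gives directly $\int\rho\,\rmd R\le\tfrac52 h^2(R,P_g)-\tfrac12 h^2(P_f,P_g)$, and only then is the reverse triangle inequality applied. Because only \emph{one} crude estimate is used before reaching this point, the constants survive.
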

\begin{proof}
The proof is quite technical and not very intuitive.
It uses repeatedly the following relation: for any distributions $P$, $Q$ and any measure $\lambda$ dominating $P$ and $Q$,
\[
h^2(P,Q)=\frac1{2}\int (\sqrt{p}-\sqrt{q})^2{\rm d}\lambda=1-\int\sqrt{pq}{\rm d}\lambda\enspace.
\]
 Let $r$ denote the density $r=\delta^{-2}(\sqrt{f}+\sqrt{g})^2$, where
 \[
 \delta^2=\int(\sqrt{f}+\sqrt{g})^2{\rm d}\mu=4\bigg(1-\frac{h^2(P_f,P_g)}2\bigg)\enspace.
 \]
As $h^2(P_f,P_g)\in[0,1]$, this implies that $\sqrt{2}\leqslant \delta\leqslant 2$.
Moreover, by convexity of the map $\vartheta:u\mapsto 1/\sqrt{1-u}$ on $(0,1)$,
\begin{equation}\label{eq:2/delta}
 \frac2{\delta}=\vartheta\bigg(\frac{h^2(P_f,P_g)}2\bigg)\geqslant \vartheta(0)+\vartheta'(0)\frac{h^2(P_f,P_g)}2=1+\frac{h^2(P_f,P_g)}4\enspace. 
\end{equation}
 Denote by $s$ the density of $R$ with respect to $\mu$. It follows that
\begin{align}
\label{eq:BoundhRPR2} h^2(R,P_r)&=1-\int\sqrt{sr}{\rm d}\mu =1-\frac1{\delta}\bigg(\int\sqrt{sf}{\rm d}\mu+\int\sqrt{sg}{\rm d}\mu\bigg)\\
\notag &=1-\frac2{\delta}+\frac{h^2(R,P_f)+h^2(R,P_g)}{\delta}\\
\label{eq:BoundhRPR} &\leqslant \frac{h^2(R,P_f)+h^2(R,P_g)}{\delta}-\frac{h^2(P_f,P_g)}4\enspace.
\end{align}
Elementary calculus shows that
\begin{align*}
 \int\rho^2\bigg(\sqrt{\frac{f}g}\bigg)s{\rm d} \mu&=\int_{r>0}\bigg(\frac{\sqrt{f}-\sqrt{g}}{\sqrt{f}+\sqrt{g}}\bigg)^2(\sqrt{s}-\sqrt{r}+\sqrt{r})^2{\rm d} \mu\enspace.
\end{align*}
Using the inequality $(a+b)^2\leqslant (1+\alpha)a^2+(1+\alpha^{-1})b^2$, valid for any real numbers $a$ and $b$ and any $\alpha>0$ to $a=\sqrt{s}-\sqrt{r}$ and $b=\sqrt{r}$ shows that, for any $\alpha>0$,
\begin{align*}
 \int\rho^2\bigg(\sqrt{\frac{f}g}\bigg)s{\rm d} \mu&\leqslant (1+\alpha)\int_{r>0}\bigg(\frac{\sqrt{f}-\sqrt{g}}{\sqrt{f}+\sqrt{g}}\bigg)^2(\sqrt{s}-\sqrt{r})^2{\rm d} \mu\\
 &\quad +(1+\alpha^{-1})\int_{r>0}\bigg(\frac{\sqrt{f}-\sqrt{g}}{\sqrt{f}+\sqrt{g}}\bigg)^2\bigg(\frac{\sqrt{f}+\sqrt{g}}{\delta}\bigg)^2{\rm d} \mu\enspace.
\end{align*}
In this expression, as $\big((\sqrt{f}-\sqrt{g})(\sqrt{f}+\sqrt{g})\big)^2\leqslant 1$, the first item in the right hand side is bounded from above by $2(1+\alpha)h^2(R,P_r)$. 
The second item in the right hand side is equal to $(1+\alpha^{-1})(2/\delta^2)h^2(P_f,P_g)$.
Combining these upper bounds yields
\[
\int\rho^2\bigg(\sqrt{\frac{f}g}\bigg)s{\rm d} \mu\leqslant 2(1+\alpha)h^2(R,P_r)+\frac{2(1+\alpha^{-1})}{\delta^2}h^2(P_f,P_g)\enspace.
\]
Then by \eqref{eq:BoundhRPR},
\begin{multline*}
  \int\rho^2\bigg(\sqrt{\frac{f}g}\bigg)s{\rm d} \mu\\
  \leqslant \frac{2(1+\alpha)}\delta\big(h^2(R,P_f)+h^2(R,P_g)\big)-\frac{\delta^2(1+\alpha)-4(1+\alpha^{-1})}{2\delta^2}h^2(P_f,P_g)\enspace.
\end{multline*}
If $(1+\alpha)\delta^2=4(1+\alpha^{-1})$, it implies
\begin{align*}
 \int\rho^2\bigg(\sqrt{\frac{f}g}\bigg)s{\rm d} \mu=\frac{2(1+\alpha)}{\delta}\big(h^2(R,P_f)+h^2(R,P_g)\big)\enspace.
\end{align*}
Solving the equation $(1+\alpha)\delta^2=4(1+\alpha^{-1})$ in $\alpha$ gives $\alpha=4/\delta^2$, thus $2(1+\alpha)/\delta=2/\delta+4/\delta^3\leqslant 3\sqrt{2}$ since $\delta\geqslant \sqrt{2}$.
This proves the second item of Theorem~\ref{thm:LinkWithHellinger} when $R$ is absolutely continuous with respect to $\mu$.

Moving to the first item, define, for any $f\in F$, 
\[
\rho_r(R,P_f)=\frac12\bigg[\int \sqrt{fr}{\rm d}\mu+\int \sqrt{\frac fr}s{\rm d}\mu\bigg]\enspace.
\]
The increments of $\rho_r(R,\cdot)$ are intimately related to the expectation of $T$: for any $f$ and $g$ in $F$,
\begin{align}
\notag\rho_r(R,P_f)-&\rho_r(R,P_g)\\
\notag&=\frac12\bigg[\frac1\delta\int (\sqrt{f}-\sqrt{g})(\sqrt{f}+\sqrt{g}){\rm d}\mu+\delta\int \frac{\sqrt{f}-\sqrt{g}}{\sqrt{f}+\sqrt{g}}s{\rm d}\mu\bigg]\\
\label{eq:RhoR1} &=\frac{\delta}2\int\rho\bigg(\sqrt{\frac fg}\bigg)s{\rm d}\mu=\frac{\delta}{2N}\E[T(f,g)]\enspace.
\end{align}
Moreover,
\begin{align*}
\int \sqrt{\frac fr}s{\rm d}\mu&= \int \sqrt{\frac fr}(\sqrt{s}-\sqrt{r}+\sqrt{r})^2{\rm d}\mu\\
&=\int \sqrt{\frac fr}(\sqrt{s}-\sqrt{r})^2{\rm d}\mu+\int \sqrt{fr}{\rm d}\mu+2\int \sqrt{ f }(\sqrt{s}-\sqrt{r}){\rm d}\mu\\
&=\int \sqrt{\frac fr}(\sqrt{s}-\sqrt{r})^2{\rm d}\mu-\int \sqrt{fr}{\rm d}\mu+2\int \sqrt{ f s}{\rm d}\mu\enspace.
\end{align*}
As $f=0$ on the event $r=0$, it follows that
\begin{align*}
 \rho_r(R,P_f)&=\int \sqrt{ f s}{\rm d}\mu+\frac12\int_{r>0}\sqrt{\frac fr}(\sqrt{s}-\sqrt{r})^2{\rm d}\mu\\
 &=\int \sqrt{ f s}{\rm d}\mu+\frac{\delta}2\int_{r>0}\frac{\sqrt{f}}{\sqrt{f}+\sqrt{g}}(\sqrt{s}-\sqrt{r})^2{\rm d}\mu
\end{align*}
Thus \eqref{eq:RhoR1} implies that
\begin{align*}
\frac{\delta}2\int\rho\bigg(\sqrt{\frac fg}\bigg)s{\rm d}\mu&=\int (\sqrt{f}-\sqrt{g})\sqrt{s}{\rm d}\mu+\frac{\delta}2\int_{r>0}\frac{\sqrt{f}-\sqrt{g}}{\sqrt{f}+\sqrt{g}}(\sqrt{s}-\sqrt{r})^2{\rm d}\mu\\
&=\int (\sqrt{f}-\sqrt{g})\sqrt{s}{\rm d}\mu+\frac{\delta}2\int_{r>0}\rho\bigg(\sqrt{\frac fg}\bigg)(\sqrt{s}-\sqrt{r})^2{\rm d}\mu\enspace.
\end{align*}
As $\rho$ takes values in $[-1,1]$, 
\[
\frac{\delta}2\int\rho\bigg(\sqrt{\frac fg}\bigg)s{\rm d}\mu\leqslant \int (\sqrt{f}-\sqrt{g})\sqrt{s}{\rm d}\mu+\delta h^2(R,P_r)\enspace.
\]
By \eqref{eq:BoundhRPR2},
\[
\frac{\delta}2\int\rho\bigg(\sqrt{\frac fg}\bigg)s{\rm d}\mu\leqslant \delta-2\int \sqrt{ g s}{\rm d}\mu\enspace.
\]
By \eqref{eq:2/delta},
\begin{align*}
\int\rho\bigg(\sqrt{\frac fg}\bigg)s{\rm d}\mu&\leqslant 2\bigg[1- \int \sqrt{ g s}{\rm d}\mu\bigg(1+\frac{h^2(P_f,P_g)}4\bigg)\bigg]\\
&\leqslant 2\bigg[h^2(R,P_g)\bigg(1+\frac{h^2(P_f,P_g)}4\bigg)- \frac{h^2(P_f,P_g)}4\bigg]\\
&\leqslant \frac12[5h^2(R,P_g)-h^2(P_f,P_g)]\enspace.
\end{align*}
By the triangular inequality, $h(P_f,P_g)\geqslant |h(R,P_g)-h(R,P_f)|$, hence, 
\begin{align*}
h^2(P_f,P_g)&\geqslant h^2(R,P_g)+h^2(R,P_f)-2h(R,P_g)h(R,P_f)\\
&\geqslant \frac34h^2(R,P_f)-3h^2(R,P_g)\enspace.
\end{align*}
Therefore,
\begin{align*}
\int\rho\bigg(\sqrt{\frac fg}\bigg)s{\rm d}\mu\leqslant 4h^2(R,P_g)-\frac38h^2(R,P_f)\enspace.
\end{align*}
The first item of Theorem~\ref{thm:LinkWithHellinger} is established in the case where $R$ is absolutely continuous with respect to $\mu$. This concludes the proof of Lemma~\ref{lem:LinkWithHell1}.
\end{proof}

The second result shows that it is sufficient to show Theorem~\ref{thm:LinkWithHellinger} when $R$ is absolutely continuous with respect to $\mu$ to prove it in general.
\begin{lemma}\label{lem:absConIsEnough}
If Theorem~\ref{thm:LinkWithHellinger} holds for any $R$ absolutely continuous with respect to $\mu$, it holds for any $R$.
\end{lemma}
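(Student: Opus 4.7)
The plan is to reduce the case of an arbitrary probability measure $R$ to the absolutely continuous case by enlarging the dominating measure. Specifically, I would set $\nu = \mu + R$, so that both $\mu \ll \nu$ and $R \ll \nu$. Writing $\phi = d\mu/d\nu$ and $s = dR/d\nu$, the distributions $P_f$ and $P_g$ admit $\nu$-densities $f\phi$ and $g\phi$, while $R$ is now absolutely continuous with respect to $\nu$.

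The key observation is that the proof of Lemma~\ref{lem:LinkWithHell1} uses nothing about $\mu$ beyond its role as a common dominating measure for $P_f$, $P_g$, and $R$. Therefore the same argument, applied verbatim with $\nu$ in place of $\mu$ and densities $(f\phi, g\phi, s)$ in place of $(f, g, s)$, yields both bounds of Theorem~\ref{thm:LinkWithHellinger} in the enlarged setup. The conclusions are intrinsic to the three probability measures $P_f, P_g, R$: the Hellinger distances depend only on these measures (not on the chosen dominating measure), and wherever $f, g > 0$ one has $\sqrt{(f\phi)/(g\phi)} = \sqrt{f/g}$, so the integrand $\rho(\sqrt{f/g})$ is essentially $\rho(\sqrt{dP_f/dP_g})$, again independent of the dominating measure.

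The main obstacle is reconciling the integrand on the $\nu$-set $\{\phi = 0\}$, which carries precisely the portion of $R$ singular to $\mu$; there both $f\phi$ and $g\phi$ vanish, so $\rho(\sqrt{(f\phi)/(g\phi)})$ has the indeterminate form $\rho(\sqrt{0/0})$. The proof of Lemma~\ref{lem:LinkWithHell1} sidesteps this by restricting all its integrals to $\{r > 0\} \subseteq \{\phi > 0\}$ (where $r = \delta^{-2}(\sqrt{f\phi}+\sqrt{g\phi})^2$), which amounts to adopting the convention $\rho(\sqrt{0/0}) = 0$. Using the same convention on the left-hand side $\int \rho(\sqrt{f/g})\, dR$, the singular part of $R$ contributes nothing, and the two integrals agree identically. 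The inequalities therefore transfer from the absolutely continuous setting (applied relative to $\nu$) to arbitrary $R$, proving the lemma.
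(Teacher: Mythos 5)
Your idea — enlarging the dominating measure to $\nu = \mu + R$ so that $R$ becomes absolutely continuous, and then invoking the absolutely-continuous case with respect to $\nu$ — is elegant, and your observation that the proof of Lemma~\ref{lem:LinkWithHell1} depends only on $\mu$ being a common dominating measure is correct. This is genuinely different from what the paper does: the paper performs the Lebesgue decomposition $R = \delta^2 R' + (1-\delta^2)R''$ with $R' \ll \mu$, $R'' \perp \mu$, bounds the contribution of the singular part crudely via $|\rho|\leqslant 1$, and then absorbs the resulting slack using the identity $h^2(R,P_f) = 1-\delta + \delta h^2(R',P_f)$ (which forces $h^2(R,P_g)\geqslant 1-\delta$, so the right-hand side automatically grows with the singular mass).

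However, your argument has a gap at the transfer step. Applying the absolutely-continuous case with dominating measure $\nu$ yields the bounds for $\int \rho\big(\sqrt{(f\phi)/(g\phi)}\big)\,\rmd R$, where $f\phi, g\phi$ are the $\nu$-densities of $P_f, P_g$ and $\phi = \rmd\mu/\rmd\nu$. The theorem's conclusion, however, concerns $\int\rho\big(\sqrt{f/g}\big)\,\rmd R$ with the original $\mu$-densities $f,g\in F$, which are the fixed functions actually appearing in the test statistic $T(f,g)=\sum_i\rho\big(\sqrt{g(Z_i)/f(Z_i)}\big)$. On $\{\phi>0\}$ the two integrands coincide, but on $\{\phi=0\}$ — precisely the set carrying the part of $R$ singular to $\mu$, which can have positive $R$-measure — one has $f\phi=g\phi=0$ (so the integrand is $\rho(\sqrt{0/0})=0$ by convention) while $f$ and $g$ themselves need not vanish: this set is $\mu$-null, so their values there are unconstrained by the requirement of being densities with respect to $\mu$, and $\rho(\sqrt{f/g})$ can be anywhere in $[-1,1]$. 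Your sentence ``the singular part of $R$ contributes nothing, and the two integrals agree identically'' is therefore not justified by the $0/0$ convention alone; it tacitly replaces $f,g$ by representatives vanishing on $\{\phi=0\}$, which changes the left-hand side. The paper's decomposition avoids this entirely because the crude bound $\big|\int\rho(\sqrt{f/g})\,\rmd R''\big|\leqslant 1$ (and similarly for $\rho^2$) holds no matter what values $f,g$ take on the singular support; the extra $(1-\delta^2)$ this costs is then shown to be dominated by the lower bound $h^2(R,P_g)\geqslant 1-\delta$. To repair your approach you would still need to control $\int_{\{\phi=0\}}\rho(\sqrt{f/g})\,\rmd R$ separately, which at that point essentially reproduces the paper's argument.
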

\begin{proof}
 Write $R=\delta^2R'+(1-\delta^2)R''$, with $R'$ absolutely continuous with respect to $\mu$, $R''$ orthogonal to $\mu$ and $\delta\in(0,1)$.
 Let $\overline{\mu}=R+P_f$ which dominates both $R$ and $P_f$. 
 As $R''$ is orthogonal to $\mu$, it holds that $(\rmd R''/\rmd\overline{\mu})(\rmd P_f/\rmd\overline{\mu})=0$.
 Therefore, the following fundamental relationship between the Hellinger distances $h^2(R,P_f)$ and $h^2(R',P_f)$ holds,
 \[
 1-h^2(R,P_f)=\int\sqrt{\bigg(\delta^2\frac{\rmd R'}{\rmd\overline{\mu}}+(1-\delta^2)\frac{\rmd R''}{\rmd\overline{\mu}}\bigg)\frac{\rmd P_f}{\rmd\overline{\mu}}}\rmd\overline{\mu}=\delta(1-h^2(R',P_f))\enspace.
 \]
As this holds for any $f\in F$, in particular, 
\begin{gather}
\notag h^2(R,P_f)=1-\delta+\delta h^2(R',P_f)\geqslant 1-\delta\enspace.\\
\label{eq:FundRel}h^2(R,P_g)=1-\delta+\delta h^2(R',P_g)\geqslant 1-\delta\enspace.
\end{gather}
By hypothesis, the second item of Theorem~\ref{thm:LinkWithHellinger} applies to $R'$ that is absolutely continuous with respect to $\mu$, so
\begin{align*}
 \int \rho^2\bigg(\sqrt{\frac{f}g}\bigg){\rm d} R&\leqslant \delta^2\int \rho^2\bigg(\sqrt{\frac{f}g}\bigg){\rm d} R'+1-\delta^2\\
 &\leqslant 3\sqrt{2}\delta^2[ h^2(R',P_g)+h^2(R',P_f)]+1-\delta^2\enspace.
 \end{align*}
 Applying the fundamental relations \eqref{eq:FundRel} yields
\begin{align*}
 \int \rho^2\bigg(\sqrt{\frac{f}g}\bigg){\rm d} R &\leqslant 3\sqrt{2}\delta(h^2(R,P_g)+h^2(R,P_f)-2(1-\delta))+(1-\delta^2)\\
 &=3\sqrt{2}(h^2(R,P_g)+h^2(R,P_f))+\text{Rem}(\delta)\enspace,
\end{align*}
where the remainder term satisfies, according to the fundamental relations \eqref{eq:FundRel},
\begin{align*}
\text{Rem}(\delta)&= (1-\delta^2)-3\sqrt{2}(2\delta(1-\delta)+(1-\delta)(h^2(R,P_g)+h^2(R,P_f)))\\
&\leqslant (1-\delta^2)-3\sqrt{2}(2\delta(1-\delta)+2(1-\delta)^2)\\
&= (1-\delta)(1+\delta-6\sqrt{2}(2-\delta))\leqslant  (1-\delta)(2-6\sqrt{2})\leqslant 0\enspace.
\end{align*}
This proves the second item of Theorem~\ref{thm:LinkWithHellinger} for $R$.
 
By hypothesis, the first item of Theorem~\ref{thm:LinkWithHellinger} applies to $R'$ that is absolutely continuous with respect to $\mu$, so
\begin{align*}
 \int \rho\bigg(\sqrt{\frac{f}g}\bigg){\rm d} R&\leqslant \delta^2\int \rho\bigg(\sqrt{\frac{f}g}\bigg){\rm d} R'+(1-\delta^2)\\
 &\leqslant \delta^2(4 h^2(R',P_g)-(3/8)h^2(R',P_f))+(1-\delta^2)\\
 &=\delta(4 h^2(R,P_g)-(3/8)h^2(R,P_f)-(29/8)(1-\delta))+(1-\delta^2)\\
 &=4 h^2(R,P_g)-(3/8)h^2(R,P_f)+\text{Rem}(\delta)\enspace,
\end{align*}
where the remainder term 
\begin{align*}
 \text{Rem}(\delta)&=(1-\delta^2)-(29/8)\delta(1-\delta)-(1-\delta)(4 h^2(R,P_g)-(3/8)h^2(R,P_f))\\
 &\leqslant (1-\delta)(11/8+\delta-29/8\delta-4(1-\delta))\\
 &=(1-\delta)(-21/8+11/8\delta)\leqslant 0\enspace.
\end{align*}
This concludes the proof of the first item of Theorem~\ref{thm:LinkWithHellinger}.
Therefore, Lemma~\ref{lem:absConIsEnough} is proved.
\end{proof}
Theorem~\ref{thm:LinkWithHellinger} is a direct consequence of Lemmas~\ref{lem:LinkWithHell1} and \ref{lem:absConIsEnough}.
\end{proof}

\section{Main result}
The remaining of the chapter is devoted to the proof of the following theorem.
\begin{theorem}\label{thm:RhoEstMain}
Let $f^*\in\argmin_{f\in F} h(P^*,P_f)$ and, for any $f\in F$, let $U_{i,f}=\rho(\sqrt{f(X_i)/f^*(X_i)})$.
Define the complexity of the model $F$ as a fixed point of the following local Rademacher complexity of $F$:
\[
D(F)=1\vee N\bigg(\sup\bigg\{r>0:\E\bigg[\sup_{f\in F:\cE(f)\leqslant r}\frac1{N}\sum_{i=1}^N\epsilon_iU_{i,f}\bigg]>\frac{r^2}{80}\bigg\}\bigg)^2\enspace.
\] 
 There exists an absolute constant $C$ such that any $\rho$-estimator $\ERM$ defined in \eqref{def:rhoEst} satisfies, with probability larger than $1-2e^{-t}$,
 \[
 h^2(P^*,P_{\ERM})\leqslant C\bigg(\inf_{f\in F}h^2(P^*,P_{f})+\frac{D(F)+t}N\bigg)\enspace.
 \]
\end{theorem}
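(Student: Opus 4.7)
The plan is to apply the homogeneity lemma (Lemma~\ref{lem:DetArg}) with evaluation function $\cE(f) = h(P^*, P_f)$, minimal radius $r_0 = \cE(f^*)$, and antisymmetric functional $d(f, g) = \E[T(f, g)]$. The homogeneity property {\bf (HP)} required to invoke the lemma is exactly the content of Lemma~\ref{lem:RhoTestAreHomogeneous}. The task therefore reduces to two sub-goals: (i) producing a function $B(r)$ that controls the centered supremum $\sup_{f: \cE(f) \leqslant r}(T(f^*, f) - \E[T(f^*, f)])$ with high probability on localized balls, and (ii) solving the fixed-point inequalities \eqref{def:r1}--\eqref{def:r2}.

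For (i), applying Theorem~\ref{thm:LinkWithHellinger} with $R = P^*$ and the roles of $(f, g)$ swapped yields the pointwise bounds
\[
(3/8) N r_0^2 - 4 N \cE(g)^2 \;\leqslant\; \E[T(f^*, g)] \;\leqslant\; 4 N r_0^2 - (3/8) N \cE(g)^2,
\]
together with the variance bound $\text{Var}(T(f^*, g)) \leqslant 3\sqrt{2}\, N(r_0^2 + \cE(g)^2) \leqslant 6\sqrt{2}\, N r^2$ whenever $\cE(g) \leqslant r$ and $r \geqslant r_0$. Since each summand $\rho(\sqrt{g/f^*})(Z_i)$ lies in $[-1, 1]$, Bousquet's version of Talagrand's inequality (Theorem~\ref{thm:TalIneq}) combined with the usual symmetrization step produces an absolute constant $C > 0$ such that, with probability at least $1 - e^{-t}$,
\[
\sup_{f \in F: \cE(f) \leqslant r}(T(f^*, f) - \E[T(f^*, f)]) \;\leqslant\; 2 E(r) + C\bigl(\sqrt{N r^2 t} + t\bigr),
\]
where $E(r) = N \cdot \E[\sup_{f: \cE(f) \leqslant r} N^{-1} \sum_{i=1}^N \epsilon_i U_{i,f}]$. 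By the very definition of $D(F)$ as a fixed point, $E(r) \leqslant N r^2 / 80$ as soon as $r^2 \geqslant D(F)/N$, so one may adopt $B(r) = N r^2 / 40 + C \sqrt{N r^2 t} + C t$ for such $r$.

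For (ii), the expectation bounds just derived give $\inf_{f: \cE(f) = r} d(f, f^*) \geqslant (3/8) N r^2 - 4 N r_0^2$ and $\sup_{f: \cE(f) \leqslant r_1} d(f^*, f) \leqslant 4 N r_0^2$. Plugging into \eqref{def:r1} leads to $(7/20)\, N r_1^2 \geqslant C \sqrt{N r_1^2 t} + C t + 4 N r_0^2$, which is satisfied by $r_1^2 = C_1\bigl(r_0^2 + (D(F) + t)/N\bigr)$ for a large enough absolute constant; this also enforces $r_1^2 \geqslant D(F)/N$. Taking $\cB = B(r_1) + 4 N r_0^2$, which is of order $N r_0^2 + D(F) + t$, and injecting it into \eqref{def:r2} yields an analogous inequality whose solution is $r_2^2 \leqslant C_2\bigl(r_0^2 + (D(F) + t)/N\bigr)$. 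The homogeneity lemma then delivers $h^2(P^*, P_{\ERM}) = \cE(\ERM)^2 \leqslant r_2^2$ on the event $\Omega_{r_1} \cap \Omega_{r_2}$, which has probability at least $1 - 2 e^{-t}$ by a union bound. The main technical obstacle is to cleanly bridge Bousquet's inequality, which produces a mixed fluctuation term $\sqrt{N r^2 t}$, with the intrinsic definition of $D(F)$ as a Rademacher fixed point, and to arrange the antisymmetric choice $d(f, g) = \E[T(f, g)]$ so that the quadratic drift $(3/8) N r^2$ in the lower bound on $\inf d(f, f^*)$ strictly dominates the $N r^2 / 40$ fluctuation from $B(r)$, which is what ultimately pins down the final rate $(D(F) + t)/N$.
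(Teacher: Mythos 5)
Your proposal is correct and follows essentially the same route as the paper: homogeneity lemma with $\cE(f)=h(P^*,P_f)$ and $d(f,g)=\E[T(f,g)]=N\E_{P^*}[\rho(\sqrt{g/f})]$, homogeneity property from Lemma~\ref{lem:RhoTestAreHomogeneous}, expectation and variance bounds from Theorem~\ref{thm:LinkWithHellinger}, Bousquet's inequality plus symmetrization on the localized ball, and the fixed-point definition of $D(F)$. The only cosmetic difference is that you carry the mixed fluctuation term $\sqrt{Nr^2t}$ inside $B(r)$ and dispose of it when solving \eqref{def:r1}--\eqref{def:r2}, whereas the paper absorbs it upfront into $Nr^2/20+Ct$ by AM-GM; also, tracking the constants carefully, symmetrization yields $\E[Z_r]\leqslant 2E(r)$ and Bousquet a further factor of $2$ on $\E[Z_r]$, so the coefficient multiplying $E(r)$ should be $4$ rather than $2$ (hence $Nr^2/20$ rather than $Nr^2/40$ in $B(r)$), but since any coefficient strictly below $3/8$ suffices to close the fixed point, this slip does not affect the argument.
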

\begin{remark}
 Again, the remarkable feature here is that Theorem~\ref{thm:RhoEstMain} holds without assumptions on $P^*$ or the set $F$ of densities.
\end{remark}

\begin{proof}
Recall that the evaluation function is defined in this chapter, for any $f\in F$, by $\cE(f)=h(P^*,P_f)$ and that $f^*$ is defined as a density in $F$ such that
\[
\forall f\in F,\qquad \cE(f^*)\leqslant \cE(f)\enspace.
\]
Hereafter, define also
\[
\forall f,g\in F^2,\qquad d(f,g)=N \E_{P^*}\bigg[\rho\bigg(\sqrt{\frac{g}{f}}\bigg)\bigg]\enspace.
\]
Theorem~\ref{thm:LinkWithHellinger} shows in particular that, for any $f\in F$,
\begin{equation}\label{eq:dcE}
(3/8)\cE^2(f)-4\cE^2(f^*)\leqslant \frac{d(f,f^*)}N\leqslant 4\cE^2(f)-3/8\cE^2(f^*)\enspace. 
\end{equation}
Let $r_0=\cE(f^*)$. 
By Lemma~\ref{lem:RhoTestAreHomogeneous}, the test $T$ fulfils Condition {\bf (HP)} of the homogeneity lemma (Lemma~\ref{lem:DetArg}).
To bound the Hellinger distance between the associated minmax estimator and the unknown density $P^*$ of the observations, it remains to compute the function $B$ in the homogeneity Lemma.
 
Fix $r>r_0$. Recall that $U_{i,f}=\rho(\sqrt{f(X_i)/f^*(X_i)})$ are independent random variables, bounded by $1$ and that 
\[
T(f^*,f)=\sum_{i=1}^NU_{i,f}\enspace.
\] 
Moreover, Theorem~\ref{thm:LinkWithHellinger} shows that, for any $f\in F$ such that $\cE(f)\leqslant r$,
\[
\text{Var}(U_{i,f})\leqslant 6\sqrt{2}Nr^2\enspace.
\]
Therefore, it follows from Talagrand's concentration inequality (Theorem~\ref{thm:TalIneq}) that, for any $t>0$, the random variable $Z_r=\sup_{f\in F:\cE(f)\leqslant r}\sum_{i=1}^N(U_{i,f}-\E[U_{i,f}]$ satisfies
\[
\P\bigg(Z_r\leqslant 2\E\big[Z_r\big]+\frac{N}{20}r^2+(2+20\sqrt{6})t\bigg)\geqslant 1-e^{-t}\enspace.
\]
By the symmetrization trick, $\E\big[Z_r\big]\leqslant 2\E[Z_{\epsilon,r}]$, where 
\[
Z_{\epsilon,r}=\sup_{f\in F:\cE(f)\leqslant r}\sum_{i=1}^N\epsilon_iU_{i,f}\enspace.
\]
Hence, with probability at least $1-e^{-t}$,
\[
Z_r\leqslant4\E\big[Z_{\epsilon,r}\big]+\frac{N}{20}r^2+(2+20\sqrt{6})t\leqslant 4\E\big[Z_{\epsilon,r}\big]+ \frac{N}{20}r^2+51t\enspace.
\] 
By definition of $D(F)$, for any $r>\sqrt{D(F)/N}$, 
\[
\E[Z_{\epsilon,r}]\leqslant \frac{Nr^2}{80}\enspace.
\] 
Hence, for any $r>r_0\vee\sqrt{D(F)/N}$, it follows that, with probability at least $1-e^{-t}$,
\[
Z_r\leqslant \frac{N}{10}r^2+51t\enspace.
\] 
As a consequence, for any $t>0$ and $r>\sqrt{D(F)/N}\vee r_0$, one can choose 
\[
B(r)=\frac{Nr^2}{10}+51 t
\]
in the homogeneity lemma and get that the event $\Omega_r$ in Lemma~\ref{lem:DetArg} holds with probability at least $1-e^{-t}$.
 
With this value of $B(r)$, from \eqref{eq:dcE} that
\begin{align*}
B(r)-\inf_{f\in F:\cE(f)=r}d(f,f^*)&\leqslant \frac{Nr^2}{10}+51t-\frac{3N}8r^2+4N\cE^2(f^*)\\
&\leqslant  51 t+4N\cE^2(f^*)-\frac{N}{4}r^2\enspace.
\end{align*}
From this upper bound, one can choose $r_1=\sqrt{204 t/N+16\cE^2(f^*)}\vee \sqrt{D(F)/N}$ in \eqref{def:r1}. 
Then, 
\begin{align*}
B(r_1)&=(20.4t+1.6N\cE^2(f^*)+51 t)\vee\bigg(\frac{D(F)}{10}+51Nt\bigg)\\
&\leqslant (2N\cE^2(f^*)+72t)\vee \bigg(\frac{D(F)}{10}+51t\bigg)\enspace. 
\end{align*}
By \eqref{eq:dcE}, $\sup_{f\in F}d(f^*,f)\leqslant 4N\cE(f^*)$. Hence, one can choose the following upper bound $\cB$ in Lemma~\ref{lem:DetArg}:
\[
\cB=6N\cE^2(f^*)+\frac{D(F)}{10}+72t\enspace.
\]
Hence, \eqref{def:r2} holds for any $r$ such that
 \[
 51 t+4N\cE^2(f^*)-\frac{N}{4}r^2\leqslant -\bigg(6N\cE^2(f^*)+\frac{D(F)}{10}+72t\bigg)\enspace,
 \]
 i.e. for any
 \[
 r^2\geqslant 40\cE^2(f^*)+\frac{2D(F)}{5N}+492t\enspace.
 \]
\end{proof}

\chapter{Estimators computable in polynomial time}\label{Chap:CompEst}
 
In the previous chapters, we studied minmax MOM estimators in various contexts and showed that they achieved interesting theoretical performance under weak assumptions on the data.

For example, for multivariate mean estimation, they are proved to satisfy a sub-Gaussian deviation inequality 
\begin{equation}\label{eq:SGDev}
 \P\bigg(\|\muh_K-\mu_P\|>C\frac{\sqrt{\text{Tr}(\Sigma)}+\sqrt{\|\Sigma\|_{\text{op}}K}}{\sqrt{N}}\bigg)\leqslant e^{-K/C}\enspace,
\end{equation}
where $C$ is some absolute constant, assuming only that $P[\|X\|^2]<\infty$.

The first estimator that was shown to achieve this bound was proposed in \cite{LugosiMendelson2017-2}. 
The procedure there was closely related to minmax MOM, several other procedures achieving similar bounds have been proposed since then. 
Some of them are presented in Chapter~\ref{Chap:MME} for example, see also \cite{LugMen:2019} for a proof that a clever extension of the classical trimmed mean estimator on $\R$ has sub-Gaussian deviations and \cite{LugMen:2019Rev} for a review on the subject.
The problem with the minmax MOM construction or the one based on Le Cam's aggregation of tests in \cite{LugosiMendelson2017-2} is that these estimators cannot be computed in polynomial time.

In this chapter, we consider the problem of building estimators achieving sub-Gaussian deviations \eqref{eq:SGDev} that can be computed in polynomial time.
This problem was solved first in \cite{Hop:2018} using an estimator solving a semidefinite program (SDP).
Recall that these take the form of finding the minimizer in $\bX\in \R^{d\times d}$ of the functional $\psh{\bX}{\bC}=\text{Tr}(\bX\bC^T)$, subject to the constraints $\psh{\bA_1}{\bX}\geqslant 0, \ldots,\psh{\bA_k}{\bX}\geqslant 0$ and $\bX$ ranges over the symmetric positive semi-definite matrices $\bX\succeq 0$.
Under mild conditions on $\bC$ and $\bA_1,\ldots,\bA_k$, semidefinite programs (SDP) can be solved in polynomial time.
To find a SDP whose solution achieves \eqref{eq:SGDev}, \cite{Hop:2018} uses the sum-of-squares (SoS) method.
Let $p, q_1,\ldots,q_m$ denote multivariate polynomials in $\R[x_1,\ldots,x_n]$, the SoS method produces a SDP relaxation of the problem of finding a minimizer of $P(\bx)$ under the constraints $q_1(\bx)\geqslant 0,\ldots,q_m(\bx)\geqslant 0$.
This relaxation depends on an even integer $r\geqslant \max\{\deg(p),\deg(q_i),i=1,\ldots,m\}$. 
The relaxation is solvable in $O((Nm)^{O(r)})$ operations and, of course, the quality of the approximation improves with $r$.
The solution in \cite{Hop:2018} uses $r=8$ and produces an algorithm that runs in $O(N^{24})$ operations.
While this is actually polynomial time algorithm, it can still not be used in practice.

Using ideas related to \cite{Hop:2018}, \cite{CheFlaBar:2019} proposed an alternative SDP relaxation that improved considerably the running time.
The method goes as follows.
They first considered the problem $P0$ of finding the vectors $\bfb\in\{0,1\}^K$ and $\bv\in\bS$ such that $\sum_{k=1}^Kb_k$ is as large as possible under the constraint that, for all $k=1,\ldots K$, $b_k\bv^T(P_{B_k}X-\bx)\geqslant b_k^2r$.
If this problem could be solved, it could be used to estimate first the distance between $\bx$ and $\mu_P$ by $d_{\bx}$ the largest $r$ such that $\sum b_k\geqslant (1-\alpha)K$ and then an estimation $g_\bx$ of the direction $\bx-\mu_P$ by the optimal vector $\bv$ given for $r=d_{\bx}$.
Then, using these estimations, one can build a descent algorithm that moves from $\bx$ to $T(\bx)=\bx-\gamma d_{\bx}g_{\bx}$ and that stops when $d_{T(\bx)}>d_\bx$.

They first proved that this descent algorithm produces in $O(\log\|\muh^{(0)}\|/\epsilon)$ an estimator that is, with probability larger than $1-e^{-K/C}$ at distance from $\mu_P$ bounded from above by  
\[
\epsilon\vee C\frac{\sqrt{\text{Tr}(\Sigma)}+\sqrt{\|\Sigma\|_{\text{op}}K}}{\sqrt{N}}
\]
 
The key is thus to find an approximate solution of the basic problem $P0$.
For this, they used a SDP relaxation of $P0$.
They looked for a positive semidefinite matrix $\bX\succeq 0$ of size $K+d+1$ with entries $x_{i,j}$ such that $\sum_{k=1}^K x_{1,b_k}$ is as large as possible under the constraints that $x_{1,b_k}=x_{b_k,b_k}$, $X_{1,1}=1$, $\sum_{j=1}^dx_{v_j,v_j}=1$ and, for any $k=1,\ldots,K$, $\bX_{b_k,\bv}^T(P_{B_k}X-\bx)\geqslant x_{b_k,b_k}r$.
Here, the vectors 
\[
\bX_{b_k,\bv}=
\begin{bmatrix}
 x_{b_k,v_1}\\
 \vdots\\
 x_{b_k,v_d}
\end{bmatrix}\enspace.
\]
This SDP can be solved using an interior point method that runs in $O(k^{3.5})$ operations.
They also proved that a solution of this problem can be used to build a descent algorithm that, overall, runs in $\tilde{O}(kd+k^{3.5})$ operations.
Here and in the following $m=\tilde{O}(f(N,d,k))$ mean that there exists absolute constants $c_1,c_2$ such that 
\[
m\leqslant c_1f(N,d,k) (\log(dN))^{c_2}\enspace.
\]

The method detailed in this chapter comes from \cite{DepLec:2019}. 
It uses a convex relaxation of the problem that is closely related to a construction proposed in \cite{Cheng:2019:HRM:3310435.3310606}, to build estimators that are robust to a large number of outliers.
The key technical tool to solve this problem comes from \cite{Peng:2012:FSW:2312005.2312026}, it is reproduced here without a proof.
The main idea in \cite{DepLec:2019} is an extension of Theorem~\ref{thm:ConcSupMOMg} that is provided in Lemma~\ref{lem:PrelimAlgo}.
The material of Lemma~\ref{lem:GeomMean} is a simplification of the Geometric-MOM algorithm of \cite{MR3378468} also due to J. Depersin and G. Lecu\'e that provides a particularly simple and elegant construction that yields performance similar to \cite{MR3378468}, which are sadly slightly sub-optimal.
A competitive method, with similar complexity but using a spectral algorithm instead of a SDP relaxation was also proposed in \cite{LLVZ:2019}, it will be included in a future version of these notes.
The main result of the chapter is the following.
\begin{theorem}
 There exists a numerical constant $C$ and an algorithm that runs in $\tilde{O}(uK+Kd)$ operations and outputs an estimator $\muh$ of $\mu_P$ such that
 \[
\P\bigg( \|\muh-\mu_P\|\leqslant C\bigg(\sqrt{\frac{\text{Tr}(\Sigma)}N}+\sqrt{\frac{\|\Sigma\|_{\text{op}}K}N}\bigg)\bigg)\geqslant 1-e^{-u\wedge(K/C)}\enspace.
 \]
\end{theorem}

 All along the chapter, we consider the problem of estimating the multivariate expectation $\mu_P\in \R^d$ of a distribution $P$ from a data-set $X_{1},\ldots,X_N$ of independent random vectors with common expectations $\mu_P$ and common covariance matrix $\Sigma=P[(X-\mu_P)(X-\mu_P)^T]$.
Hereafter, $K$ denotes an integer, smaller than $N$.
All results can be extended to allow for a proportion $\gamma K$ of outliers, for some $\gamma\in (0,1/3)$. 
These outliers may be adversarial, they may not be independent nor independent from the inliers, without affecting the results.

 \section{Initialization of the algorithm}
Let $\cM=\{P_{B_k}X, k\in \{1,\ldots,K\}\}$ denote the set of means.
For any $k\in \{1,\ldots, K\}$, let 
\[
\cC_k=\text{median}\{\|P_{B_k}X-m\|, m\in \cM\}, \qquad 
\widehat{k}\in \argmin_{k\in \{1,\ldots,K\}}\cC_k\enspace.
\]
We initialize the algorithm with 
\[
\muh^{(0)}=P_{B_{\kh}}X\enspace.
\] 
\begin{lemma}\label{lem:GeomMean}
The estimator $\muh^{(0)}$ is computed in $O(K(d+K))$ operations and satisfies
\[
\P\bigg(\|\muh^{(0)}-\mu_P\|\leqslant 12\sqrt{\frac{\text{Tr}(\Sigma) K}N}\bigg)\geqslant 1-e^{-K/128}\enspace.
\]
\end{lemma}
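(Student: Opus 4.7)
The plan is to use a standard geometric median-of-means style argument built on two ingredients: a per-block Chebyshev bound and a Hoeffding bound on the number of ``good'' blocks, followed by a deterministic pigeonhole step to relate $\cC_{\kh}$ to $\|\muh^{(0)} - \mu_P\|$.

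First I would fix $r = 4\sqrt{\text{Tr}(\Sigma) K/N}$ and define the set of good blocks
\[
G = \{k \in \{1,\ldots,K\} : \|P_{B_k}X - \mu_P\| \leqslant r\}.
\]
For any single block $k$, linearity and the i.i.d.\ structure give $\E[\|P_{B_k}X - \mu_P\|^2] = \text{Tr}(\Sigma)K/N$, so by Markov's inequality each block is good with probability at least $1 - 1/16$. The indicators $\mathbf{1}_{\{k \notin G\}}$ are independent across blocks (the blocks are disjoint), each bounded in $[0,1]$ with mean at most $1/16$. By Hoeffding's inequality (Corollary~\ref{cor:HoeffIneq}),
\[
\P\paren{|G| \leqslant K/2} = \P\bigg(\sum_{k=1}^K \mathbf{1}_{\{k\notin G\}} \geqslant K/2\bigg) \leqslant e^{-2K(1/2 - 1/16)^2} = e^{-49K/128} \leqslant e^{-K/128}.
\]

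Next, I would argue deterministically on the event $\Omega = \{|G| > K/2\}$. For every $k \in G$ and every $j \in G$, the triangle inequality yields $\|P_{B_k}X - P_{B_j}X\| \leqslant 2r$. Since more than half of the indices $j$ lie in $G$, the median $\cC_k$ is at most $2r$ for any $k \in G$. In particular $\cC_{\kh} \leqslant \min_{k \in G} \cC_k \leqslant 2r$. Now the definition of $\cC_{\kh}$ means that at least $\lceil K/2 \rceil$ indices $j$ satisfy $\|P_{B_{\kh}}X - P_{B_j}X\| \leqslant 2r$; call this set $J$. Since $|J| \geqslant K/2$ and $|G| > K/2$, we have $|J \cap G| \geqslant 1$. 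Picking any $j \in J \cap G$ and applying the triangle inequality,
\[
\|\muh^{(0)} - \mu_P\| \leqslant \|P_{B_{\kh}}X - P_{B_j}X\| + \|P_{B_j}X - \mu_P\| \leqslant 2r + r = 3r = 12\sqrt{\frac{\text{Tr}(\Sigma) K}{N}}.
\]

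For the complexity, I would simply note that computing the $K$ block means costs $O(Nd)$, the $K(K-1)/2$ pairwise squared distances cost $O(K^2 d)$ via direct evaluation (or $O(Kd + K^2)$ using the shared Gram-matrix structure $\|m - m'\|^2 = \|m\|^2 + \|m'\|^2 - 2\langle m,m'\rangle$ once the inner products and norms have been precomputed), and the $K$ medians of $K$ numbers can be computed in $O(K^2)$ total via a linear-time selection algorithm, matching the claimed budget $O(K(d+K))$. No step presents a serious obstacle; the only mildly delicate point is the pigeonhole argument ensuring $J \cap G \neq \emptyset$, which is exactly why the strict inequality $|G| > K/2$ (forced by the choice of constants in $r$) is useful.
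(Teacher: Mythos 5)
Your argument is correct and follows essentially the same route as the paper: a per-block Markov bound on $\|P_{B_k}X-\mu_P\|^2$, Hoeffding's inequality to control the count of good blocks, and a deterministic pigeonhole step connecting $\cC_{\kh}$ to $\|\muh^{(0)}-\mu_P\|$. Your bookkeeping is even a bit cleaner, since a strict majority $|G|>K/2$ suffices for the pigeonhole step and gives the sharper bound $e^{-49K/128}\leqslant e^{-K/128}$, whereas the paper fixes $\alpha=1/16$ to certify a $7/8$ fraction of good blocks at exactly $e^{-K/128}$; both routes yield the constant $12$.

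One small caveat on the parenthetical complexity remark: the Gram-matrix identity $\|m-m'\|^2=\|m\|^2+\|m'\|^2-2\langle m,m'\rangle$ does not bring the pairwise-distance cost down to $O(Kd+K^2)$, because assembling the $K\times K$ matrix of inner products $\langle P_{B_i}X,P_{B_j}X\rangle$ already takes $O(K^2 d)$ operations; only the squared norms $\|P_{B_k}X\|^2$ can be precomputed in $O(Kd)$. This does not affect the statistical claim, but the stated running time $O(K(d+K))$ is loose for the pairwise-distance step in either accounting.
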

\begin{proof}
To compute $\muh^{(0)}$, we need at most $Kd$ operations to compute each $P_{B_k}X$, $K^2$ operations to compare all differences $\|P_{B_k}X-P_{B_j}X\|$ and $O(K\log K)$ operations to rank the $\cC_k$.

We have
\[
P[\|P_{B_k}X-\mu_P\|^2]=\frac1{|B_k|^2}\sum_{(i,j)\in B_k}P[(X_i-\mu_P)^T(X_j-\mu_P)]=\frac{\text{Tr}(\Sigma) K}{N}\enspace.
\]
Define $r_K=\text{Tr}(\Sigma) K/N$. By Markov's inequality, it follows that, for any $k\in\{1,\ldots,K\}$,
\[
\forall x>0, \qquad \P\big(\|P_{B_k}X-\mu_P\|>xr_K\big)\leqslant \frac1{x^2}\enspace.
\]
By Hoeffding's inequality,
\begin{align*}
\forall \alpha>0,\qquad \P\bigg( \frac1K\sum_{i=1}^K\bigg({\bf 1}_{\{\|P_{B_k}X-\mu_P\|>xr_K\}}-\frac1{x^2}\bigg)>\alpha\bigg)\leqslant e^{-2K\alpha^2}\enspace.
\end{align*}
In words, for any $x>0$ and $\alpha>0$, the probability that there exist at least $(1-\alpha-1/x^2) K$ blocks $B_k$ where $\|P_{B_k}X-\mu_P\|\leqslant xr_K$ is larger than $1-e^{-2K\alpha^2}$. 
Choosing $\alpha=1/16$ and $x=4$ shows that, with probability at least $1-e^{-K/128}$, $|\cK|\geqslant 7K/8$, where $\cK$ is the set of indices $k$ of the blocks $B_k$ such that 
\[
\|P_{B_k}X-\mu_P\|\leqslant 4r_K\enspace.
\]
For any $k$ in $\cK$, by the triangular inequality, for any $j\in \cK$, 
\[
\|P_{B_k}X-P_{B_j}X\|\leqslant 8 r_K\enspace.
\]
Therefore, since $K\geqslant 3$, $7K/8-1\geqslant K/2$ and thus, on the event $|\cK|\geqslant 7K/8$,
\[
\forall k\in \cK,\qquad \cC_k\leqslant 8r_K\enspace.
\]
In particular thus, $\cC_{\kh}\leqslant 8r_K$.
Therefore, there is more than $K/2$ blocks where $\|P_{B_k}X-P_{B_{\kh}}X\|\leqslant 8r_K$, and $7K/8$ blocks where $\|P_{B_k}X-\mu_P\|\leqslant 4r_K$. 
As $K\geqslant 3$, it follows that there is at least one blocks $B_k$ such that both inequalities hold. 
Therefore, on the event $|\cK|\geqslant 7K/8$,
\[
\|\muh^{(0)}-\mu_P\|\leqslant \|P_{B_k}X-P_{B_{\kh}}X\|+\|P_{B_k}X-\mu_P\|\leqslant 12r_K\enspace.
\]
%

\end{proof}

\section{Technical tools}
Before going to the iteration step of the algorithm, we need a series of results that allow to understand the algorithm.

Let $\cS_1$ denote the set of matrices $\bM\in \R^{d\times d}$ which are symmetric positive semi-definite and satisfy $\text{Tr}(\bM)=1$.
For any $\bM\in \cS_1$, denote by $\bM^{1/2}$ a symmetric, positive semi-definite square-root of $\bM$.
The following result is the main new insight from \cite{DepLec:2019} that allows to apply the machinery in \cite{Cheng:2019:HRM:3310435.3310606}.
It is a non trivial consequence of the deviation theorem on suprema of median-of-means processes, in its general version (see Theorem~\ref{thm:ConcSupMOMg}).

\begin{lemma}\label{lem:PrelimAlgo}
For any $\alpha\in(0,1)$, there exists a constant $C_{\alpha}$ such that, for all $K\geqslant 1/\alpha$, with probability larger than $1-e^{-K/C_{\alpha}}$, for any $\bM\in \cS_1$, there exist more than $(1-\alpha)K$ blocs $B_k$ satisfying
\[
\|\bM^{1/2}(P_{B_k}X-\mu_P)\|\leqslant C_\alpha\frac{\sqrt{\text{Tr}(\Sigma)}+\sqrt{\|\Sigma\|_{\text{op}}K}}{\sqrt{N}}\enspace.
\]
\end{lemma}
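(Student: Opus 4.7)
The plan is to rewrite $\|\bM^{1/2}(P_{B_k}X-\mu_P)\|$ as a supremum of linear functionals indexed by a set that, crucially, does not depend on $\bM$, and then invoke Corollary~\ref{Cor:ConcSupLin} once.

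First, I would use the identification of the Euclidean norm as a supremum to write, for any $\bM\in\cS_1$ and any $k\in\{1,\ldots,K\}$,
\[
\|\bM^{1/2}(P_{B_k}X-\mu_P)\|=\sup_{\bu\in\bS}(\bM^{1/2}\bu)^T(P_{B_k}X-\mu_P)\enspace.
\]
The key observation is then that the image of the unit sphere by $\bM^{1/2}$ is contained in the Euclidean unit ball $\bB=\{\bv\in\R^d:\|\bv\|\leqslant 1\}$ \emph{uniformly} in $\bM\in\cS_1$: indeed, since $\bM\succeq 0$ has eigenvalues summing to $\mathrm{Tr}(\bM)=1$, one has $\|\bM\|_{\text{op}}\leqslant 1$, and hence for any $\bu\in\bS$,
\[
\|\bM^{1/2}\bu\|^2=\bu^T\bM\bu\leqslant \|\bM\|_{\text{op}}\leqslant 1\enspace.
\]
Consequently, for every $\bM\in\cS_1$,
\[
\|\bM^{1/2}(P_{B_k}X-\mu_P)\|\leqslant \sup_{\bv\in\bB}\bv^T(P_{B_k}X-\mu_P)=\sup_{\bv\in\bB}(P_{B_k}-P)[\bv^T\cdot]\enspace,
\]
the last equality because $P[\bv^T X]=\bv^T\mu_P$.

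At this point the upper bound no longer depends on $\bM$, so the uniformity in $\bM$ becomes automatic. I would then apply Corollary~\ref{Cor:ConcSupLin} with $r=1$: for the given $\alpha\in(0,1)$, there exists a constant $c_\alpha$ such that, whenever $K\geqslant 1/\alpha$, with probability at least $1-e^{-K/c_\alpha}$ there exist at least $(1-\alpha)K$ blocks $B_k$ on which
\[
\sup_{\bv\in\bB}|(P_{B_k}-P)[\bv^T\cdot]|\leqslant c_\alpha\sqrt{\frac{\mathrm{Tr}(\Sigma)\vee\|\Sigma\|_{\text{op}}K}{N}}\enspace.
\]
On this event, for every $\bM\in\cS_1$ simultaneously, the same at-least $(1-\alpha)K$ blocks satisfy
\[
\|\bM^{1/2}(P_{B_k}X-\mu_P)\|\leqslant c_\alpha\sqrt{\frac{\mathrm{Tr}(\Sigma)\vee\|\Sigma\|_{\text{op}}K}{N}}\leqslant c_\alpha\frac{\sqrt{\mathrm{Tr}(\Sigma)}+\sqrt{\|\Sigma\|_{\text{op}}K}}{\sqrt{N}}\enspace,
\]
which is the desired conclusion, up to adjusting $c_\alpha$ into the announced constant $C_\alpha$.

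There is no real obstacle here: the only nontrivial point is realizing that the trace-one constraint on $\bM$ collapses the seemingly infinite family of sets $\bM^{1/2}\bS$ into a single fixed set, namely the unit ball, so that the uniform control in $\bM$ follows from the single uniform deviation inequality of Corollary~\ref{Cor:ConcSupLin}. Everything else is a direct application of that corollary and the elementary bound $\sqrt{a\vee b}\leqslant\sqrt{a}+\sqrt{b}$.
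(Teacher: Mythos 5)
Your reduction $\|\bM^{1/2}(P_{B_k}X-\mu_P)\|\leqslant\sup_{\bv\in\bB}\bv^T(P_{B_k}X-\mu_P)=\|P_{B_k}X-\mu_P\|$ is algebraically correct (indeed $\|\bM^{1/2}\|_{\text{op}}\leqslant\sqrt{\Tr(\bM)}=1$), but the resulting bound is much weaker than you realize, and the step where you feed it into Corollary~\ref{Cor:ConcSupLin} is the gap. Your argument requires a \emph{single} high-probability event on which at least $(1-\alpha)K$ \emph{fixed} blocks satisfy $\|P_{B_k}X-\mu_P\|\leqslant c_\alpha\sqrt{(\Tr(\Sigma)\vee\|\Sigma\|_{\text{op}}K)/N}$. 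That cannot hold for general $\Sigma$ and $K$: each block has $|B_k|=N/K$ points, so $\E[\|P_{B_k}X-\mu_P\|^2]=\Tr(\Sigma)K/N$, and the typical size of $\|P_{B_k}X-\mu_P\|$ is $\sqrt{\Tr(\Sigma)K/N}$ --- larger than your claimed bound by a factor of order $\sqrt{K\wedge(\Tr(\Sigma)/\|\Sigma\|_{\text{op}})}$. So for $K\gg 1$ and $\Sigma$ of effective rank larger than $1$, most blocks violate the inequality you want, and Corollary~\ref{Cor:ConcSupLin} cannot possibly be delivering it. The corollary (and Theorem~\ref{thm:ConcSupMOMg} behind it), although phrased in a way that invites your reading, is established via the smoothing argument of Theorem~\ref{thm:ConcSupMOM}, which only controls $\sup_{f}\sum_k{\bf 1}_{\{|(P_{B_k}-P)f|>\epsilon\}}\leqslant\alpha K$ --- i.e., for \emph{each fixed} direction $\ba$, at most $\alpha K$ blocks are bad, but the bad set is allowed to depend on $\ba$. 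The paper's own use of the corollary inside this very proof reflects that: the event $\Omega$ is ``for each $\bv\in\bS$, $|\cK_{\bv}|\geqslant(1-\beta\alpha)K$,'' with $\cK_{\bv}$ depending on $\bv$. Your plan collapses the $\bM$-dependence away precisely because it silently upgrades this directional statement to a uniform one, and that upgrade is false.

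The actual content of the lemma is the passage from the directional (for each $\bv\in\bS$) statement to the spectral (for each $\bM\in\cS_1$, with $\bM$-dependent good blocks) statement, and the paper's proof handles this by a Gaussian decoupling that your argument replaces with nothing. Concretely: suppose some $\bM\in\cS_1$ has more than $\alpha K$ blocks with $\|\bM^{1/2}(P_{B_k}X-\mu_P)\|>ar$. Sample $G\sim\gauss(0,\bM)$ independent of the data. Conditionally on $\cD_N$, $G^T(P_{B_k}X-\mu_P)$ is centered Gaussian with variance $\|\bM^{1/2}(P_{B_k}X-\mu_P)\|^2$, so each of those bad blocks has $|G^T(P_{B_k}X-\mu_P)|>br$ with probability bounded below; Paley--Zygmund then shows that, with positive probability, a constant fraction of them are simultaneously bad for $G$. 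Meanwhile, because $\E[\|G\|^2]=\Tr(\bM)=1$ and $\|\bM\|_{\text{op}}\leqslant 1$, Gaussian concentration keeps $\|G\|$ of order one with high probability. Intersecting these events produces a specific \emph{unit} vector $\bv=G/\|G\|$ for which more than $\beta\alpha K$ blocks have $|\bv^T(P_{B_k}X-\mu_P)|>r$, which is exactly what the directional corollary rules out. This decoupling step --- trading the supremum over the effective directions of $\bM^{1/2}$ for one random direction drawn with covariance $\bM$ --- is the idea your proposal is missing, and the trace-one constraint is what makes the random direction a near-unit vector rather than something too long to compare with the directional statement.
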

\begin{remark}
 Notice that $\cS_1$ contains all matrices of the form $\bM=\bv\bv^T$, with $\bv\in \bS$, hence Lemma~\ref{lem:PrelimAlgo} implies that, with probability larger than $1-e^{-K/C_{\alpha}}$, for any $\bv\in \bS$, there exist more than $(1-\alpha)K$ blocs $B_k$ satisfying
\[
[\bv^T(P_{B_k}X-\mu_P)]^2=\|\bM^{1/2}(P_{B_k}X-\mu_P)\|^2\leqslant C_\alpha\frac{\text{Tr}(\Sigma)+\|\Sigma\|_{\text{op}}K}{N}\enspace.
\]
It is therefore an extension of Corollary~\ref{Cor:ConcSupLin}.
\end{remark}
\begin{proof}
%
Let
\[
r=c_{\alpha}\frac{\sqrt{\text{Tr}(\Sigma)}+\sqrt{\|\Sigma\|_{\text{op}}K}}{\sqrt{N}}\enspace.
\]
Let $\beta\in(0,\overline{\Phi}(1))$ and let $\Omega$ denote the event where, for any $\bv\in \bS$, $|\cK_{\bv}|\geqslant (1-\beta \alpha)K$, where $\cK_{\bv}$ denotes the set of indices $k$ such that
\[
|\bv^T(P_{B_k}X-\mu_P)|\leqslant r\enspace.
\]
Assume that $c_{\alpha}$ is chosen such that $\P(\Omega)\geqslant 1-e^{-K/c_{\alpha}}$. 
This is possible thanks to Corollary~\ref{Cor:ConcSupLin}.

Fix $\bM\in\cS_1$, $a>0$ and let 
\[
\cA_{\bM}=\{k\in \{1,\ldots,K\}: \|\bM^{1/2}(P_{B_k}X-\mu_P)\|>ar\}\enspace.
\]
Suppose that $|\cA_{\bM}|\geqslant \alpha K$.
Let $b\in (1,a)$, let $G$ denote a Gaussian vector with covariance matrix $\bM$, independent from $X_1,\ldots,X_N$ and let
\[
Z=\sum_{k\in \{1,\ldots,K\}}{\bf 1}_{\{|G^T(P_{B_k}-\mu_P)|>br\}}\enspace.
\]
For any $k\in \{1,\ldots,K\}$, conditionally on $\cD_N$, $G^T(P_{B_k}-\mu_P)$ is a Gaussian random variable, centered, with variance $\sigma_k^2=\|M^{1/2}(P_{B_k}X-\mu_P)\|^2$.
It follows that, for any $k\in \cA_M$,
\[
\P(|G^T(P_{B_k}-\mu_P)|>br|\cD_N)\geqslant \P(|N|>b/a)\geqslant \overline{\Phi}(1)\enspace,
\]
where $N$ denote a standard Gaussian random variable.
Therefore, 
\[
\E[Z|\cD_N]\geqslant \overline{\Phi}(1)|\cA_M|\geqslant \overline{\Phi}(1)\alpha K\enspace.
\]
The Paley-Zygmund inequality grants that, for any $\theta\in[0,1]$, any non-negative random variable $Y$ with finite variance satisfies ({\bf Exercise:} Prove it!)
\[
\P(Y>\theta\E[Y])\geqslant (1-\theta)^2\frac{\E[Y]^2}{\E[Y^2]}\enspace.
\]
As $0\leqslant Z\leqslant K$ almost surely, $\E[Z^2|\cD_N]\leqslant K^2$, so, for $\theta=\beta/\overline{\Phi}(1)$,
\[
\P(Z>\beta\alpha K|\cD_N)\geqslant (1-\theta)^2\frac{(\alpha \overline{\Phi}(1))^2K^2}{\E[K^2]}=\big(\overline{\Phi}(1)-\beta\big)^2\alpha^2\enspace.
\]
The Gaussian concentration inequality implies also that, with probability larger than $1-e^{-x}$, 
\[
\|G\|\leqslant \E[\|G\|]+\sqrt{2\|\bM\|_{\text{op}}x}\enspace.
\]
As $\E[\|G\|]\leqslant \text{Tr}(\bM)\leqslant 1$ and $\|\bM\|_{\text{op}}\leqslant \text{Tr}(\bM)\leqslant 1$, this implies that 
\[
\P(\|G\|\leqslant 1+\sqrt{2x})\geqslant 1-e^{-x}\enspace.
\]
For any $x>-2\log\big[\alpha\big(\overline{\Phi}(1)-\beta\big)\big]$, if $|\cA_{\bM}|\geqslant \alpha K$, the event 
\[
\{Z>\beta\alpha K\}\cap\{(\|G\|\leqslant 1+\sqrt{2x}\}\ne \emptyset\enspace.
\]
Hence, if $|\cA_{\bM}|\geqslant \alpha K$, there exists a vector $g$ such that $\|g\|=3\log\big[\alpha\big(\overline{\Phi}(1)-\beta\big)\big]$ and $\beta\alpha K$ blocks such that $|g^T(P_{B_k}X-\mu_P)|>br$.
Fix $b=3\log\big[\alpha\big(\overline{\Phi}(1)-\beta\big)\big]$ and $a=2b$, the vector $\bv=g/b\in \bS$ satisfies $|\cK_{\bv}|<(1-\beta\alpha)K$ on $|\cA_{\bM}|\geqslant \alpha K$.
Therefore, the event $|\cA_{\bM}|\geqslant \alpha K$ is by definition contained in $\Omega^c$, it has probability smaller than $e^{-K/c_\alpha}$.
\end{proof}

Fix $C_\alpha$ as in Lemma~\ref{lem:PrelimAlgo} and in the remaining of this section, fix 
\[
r=C_\alpha\frac{\sqrt{\text{Tr}(\Sigma)}+\sqrt{\|\Sigma\|_{\text{op}}K}}{\sqrt{N}}\enspace.
\]
Let $\Omega_{\alpha}$ denote the event where there exist more than $(1-\alpha)K$ blocs $B_k$ satisfying
\[
\sup_{\bM\in\cS_1}\|\bM^{1/2}(P_{B_k}X-\mu_P)\|\leqslant r\enspace.
\]
The triangular inequality gives the following corollary of Lemma~\ref{lem:PrelimAlgo}.
\begin{corollary}\label{cor:DistmuPPBk}
 On $\Omega_\alpha$, for any $\bM\in \cS_1$, there are more than $(1-\alpha)K$ blocks such that, for any $\bx\in \R^d$,
 \[
 \|\bM^{1/2}(\mu_P-x)\|-r\leqslant  \|\bM^{1/2}(P_{B_k}X-\bx)\|\leqslant \|\bM^{1/2}(\mu_P-x)\|+r\enspace.
 \]
\end{corollary}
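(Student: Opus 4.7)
The plan is to obtain this as a direct consequence of Lemma~\ref{lem:PrelimAlgo} applied to the same matrix $\bM$, combined with the (reverse) triangle inequality applied to the decomposition $\bM^{1/2}(P_{B_k}X-\bx) = \bM^{1/2}(P_{B_k}X-\mu_P) + \bM^{1/2}(\mu_P-\bx)$.

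First I would fix an arbitrary $\bM \in \cS_1$ and work on the event $\Omega_\alpha$. By the definition of $\Omega_\alpha$ (or equivalently by Lemma~\ref{lem:PrelimAlgo} applied to this particular $\bM$), there exists a set $\cK(\bM) \subset \{1,\ldots,K\}$ with $|\cK(\bM)| > (1-\alpha)K$ such that for every $k \in \cK(\bM)$,
\[
\|\bM^{1/2}(P_{B_k}X - \mu_P)\| \leq r.
\]
The crucial point is that this bound does not involve $\bx$, so the same index set $\cK(\bM)$ will serve uniformly for all $\bx \in \R^d$.

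Then, for any fixed $k \in \cK(\bM)$ and any $\bx \in \R^d$, I would write
\[
\bM^{1/2}(P_{B_k}X - \bx) = \bM^{1/2}(\mu_P - \bx) + \bM^{1/2}(P_{B_k}X - \mu_P),
\]
and apply both the triangle inequality (for the upper bound) and the reverse triangle inequality (for the lower bound) to obtain
\[
\big|\|\bM^{1/2}(P_{B_k}X - \bx)\| - \|\bM^{1/2}(\mu_P - \bx)\|\big| \leq \|\bM^{1/2}(P_{B_k}X - \mu_P)\| \leq r.
\]
Rearranging this chain gives exactly the two-sided inequality in the statement of the corollary.

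There is no real obstacle here: the entire content of the corollary is the uniformity in $\bx$ of the block set, and this uniformity is free because Lemma~\ref{lem:PrelimAlgo} produces a set of good blocks depending only on $\bM$ (and on the sample). The proof is thus essentially two lines.
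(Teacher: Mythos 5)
Your proof is correct and takes essentially the same approach as the paper, which simply remarks that the corollary follows from Lemma~\ref{lem:PrelimAlgo} and the triangle inequality; you have spelled out that one-line argument, correctly emphasizing that the set of good blocks depends only on $\bM$ and not on $\bx$.
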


\section{Toward a convex relaxation}\label{sec:SDPDef}
The section introduces an optimization problem whose solutions are proved in Section~\ref{sec:ItStep} are used in the iteration step of the algorithm.
This problem is solved in Section~\ref{sec:SolvingSDP}.

For any $\bw\in \R^K$ and $\bx\in \R^d$, let 
\[
\widehat{\bM}_{\bw}(\bx)=\sum_{k=1}^Kw_k(P_{B_k}X-\bx)(P_{B_k}X-\bx)^T\enspace.
\]
Let 
\[
\Delta_K=\{w\in [0,1/[(1-\alpha)K]]^K: \sum_{k=1}^Kw_k=1\}\enspace. 
\]
Denote by 
\[
\text{OPT}(\bx)=\sup_{\bM\in \cS_1}\inf_{\bw\in \Delta_K}\text{Tr}(\bM\widehat{\bM}_{\bw}(\bx))\enspace.
\]
Let also $h_{\bx}$ denote the following function.
\[
h_{\bx}:\bM\mapsto\inf_{\bw\in \Delta_K}\text{Tr}(\bM\widehat{\bM}_{\bw}(\bx))
\]
Let $I\subset \{1,\ldots,K\}$ denote the set of indices $k$ such that $(P_{B_k}X-\bx)^T\bM(P_{B_k}X-\bx)$ is one of the $(1-\alpha)K$ smallest values among the $((P_{B_j}X-\bx)^T\bM(P_{B_j}X-\bx))_{j\in \{1,\ldots,K\}}$.
The infimum in the definition of $h_{\bx}(\bM)$ is achieved (it is therefore a minimum) by the vector $\bw$ such that 
\begin{equation}\label{eq:defbwOpt}
w_k=
\begin{cases}
 1/[(1-\alpha)K]&\text{ if } k\in I\enspace,\\
 0&\text{ otherwise}\enspace.
\end{cases}
\end{equation}
The following lemma bounds $\text{OPT}(\bx)$ when $\bx$ is far from $\mu_P$.

\begin{lemma}\label{lem:BoundOPTx}
On $\Omega_{\alpha}$, for any $\bx\in \R^d$,
\[
\text{OPT}(\bx)\leqslant (\|\bx-\mu_P\|+r)^2\enspace.
\] 
Moreover, for any $\bx\in \R^d$ such that $\|\bx-\mu_P\|\geqslant r$, 
\[
\text{OPT}(\bx)\geqslant  \frac{1-2\alpha}{1-\alpha}(\|\bx-\mu_P\|-r)^2\enspace.
\]
\end{lemma}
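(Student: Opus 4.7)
The plan is to exploit the dual formulation of $h_{\bx}(\bM)$ pointed out just before the lemma: because each $w_k$ is constrained to lie in $[0,1/((1-\alpha)K)]$ and $\sum_k w_k=1$, the infimum is an unweighted average over the $(1-\alpha)K$ blocks on which $\|\bM^{1/2}(P_{B_k}X-\bx)\|^2$ is smallest. I will also use the observation that $\|\bM^{1/2}\|_{\text{op}}^2\leqslant\Tr(\bM)=1$, so $\|\bM^{1/2}v\|\leqslant\|v\|$ for every $v\in\R^d$ and every $\bM\in\cS_1$.

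\textbf{Upper bound.} I would fix $\bM\in\cS_1$ and work on $\Omega_\alpha$. By Corollary~\ref{cor:DistmuPPBk} applied to that $\bM$, there are more than $(1-\alpha)K$ blocks $B_k$ on which
\[
\|\bM^{1/2}(P_{B_k}X-\bx)\|\leqslant \|\bM^{1/2}(\mu_P-\bx)\|+r\leqslant \|\mu_P-\bx\|+r,
\]
the last inequality using $\|\bM^{1/2}\|_{\text{op}}\leqslant 1$. Since $h_{\bx}(\bM)$ averages the $(1-\alpha)K$ smallest values of $\|\bM^{1/2}(P_{B_k}X-\bx)\|^2$, and at least $(1-\alpha)K$ of them do not exceed $(\|\mu_P-\bx\|+r)^2$, that average is bounded by $(\|\mu_P-\bx\|+r)^2$. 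Taking the sup over $\bM$ yields the claimed upper bound, which is uniform in $\bx$.

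\textbf{Lower bound.} Assume $\|\bx-\mu_P\|\geqslant r$ and choose the rank-one matrix $\bM=\bv\bv^T\in\cS_1$ with $\bv=(\bx-\mu_P)/\|\bx-\mu_P\|$, so that $\|\bM^{1/2}(P_{B_k}X-\bx)\|^2=(\bv^T(P_{B_k}X-\bx))^2$. Let $J\subset\{1,\ldots,K\}$ be the $\geqslant (1-\alpha)K$ blocks, guaranteed by $\Omega_\alpha$ applied to this $\bM$, on which $|\bv^T(P_{B_k}X-\mu_P)|\leqslant r$. Writing $\bv^T(P_{B_k}X-\bx)=\bv^T(P_{B_k}X-\mu_P)-\|\bx-\mu_P\|$ and using the reverse triangle inequality, for every $k\in J$,
\[
\|\bM^{1/2}(P_{B_k}X-\bx)\|^2\geqslant (\|\bx-\mu_P\|-r)^2.
\]

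\textbf{Combining the two set sizes.} With $I$ the set of $(1-\alpha)K$ smallest values (as in \eqref{eq:defbwOpt}), one has $|I\cap J|\geqslant |I|+|J|-K\geqslant (1-2\alpha)K$. All terms in the trimmed sum are non-negative, so
\[
h_{\bx}(\bM)=\frac{1}{(1-\alpha)K}\sum_{k\in I}\|\bM^{1/2}(P_{B_k}X-\bx)\|^2\geqslant \frac{|I\cap J|}{(1-\alpha)K}(\|\bx-\mu_P\|-r)^2\geqslant \frac{1-2\alpha}{1-\alpha}(\|\bx-\mu_P\|-r)^2,
\]
and $\text{OPT}(\bx)\geqslant h_{\bx}(\bM)$ gives the result. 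The only subtlety — barely an obstacle — is making sure the $(1-2\alpha)K$ lower bound on $|I\cap J|$ is the quantity actually appearing in the prefactor $\tfrac{1-2\alpha}{1-\alpha}$, which is exactly what the sizes $|I|=(1-\alpha)K$ and $|J|\geqslant (1-\alpha)K$ produce.
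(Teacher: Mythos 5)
Your proposal is correct and follows essentially the same route as the paper: both proofs hinge on the blocks certified by $\Omega_\alpha$, the triangle inequality in the $\bM^{1/2}$-metric (Corollary~\ref{cor:DistmuPPBk}), and the inclusion-exclusion bound $|I\cap J|\geqslant(1-2\alpha)K$ for the lower bound. The only cosmetic differences are that you invoke the trimmed-mean characterization of $h_{\bx}(\bM)$ directly for the upper bound where the paper plugs in the uniform weight vector on $\cK_\bM$, and you specialize to the rank-one $\bM=\bv\bv^T$ at the outset for the lower bound, whereas the paper keeps $\bM$ general and takes the supremum at the end—both orderings lead to the same place.
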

\begin{proof}
 Fix $\bM\in \cS_1$ and $\bx\in\R^d$. 
 Let 
 \[
 \cK_\bM=\{k\in\{1,\ldots,K\}: \|\bM^{1/2}(P_{B_k}X-\mu_P)\|\leqslant r\}\enspace.
 \] 
 On $\Omega_{\alpha}$, $|\cK_\bM|\geqslant (1-\alpha)K$.
 By the triangular inequality, for any $k\in \cK_\bM$, 
\begin{equation}\label{eq:BoundMatTransf}
   \|\bM^{1/2}(\mu_P-\bx)\|-r\leqslant  \|\bM^{1/2}(P_{B_k}X-\bx)\|\leqslant \|\bM^{1/2}(\mu_P-\bx)\|+r\enspace.
\end{equation}
 Define the vector $\bw\in \R^K$ as follows:
 \[
w_k=
\begin{cases}
  \frac1{|\cK_\bM|}&\text{ if }k\in \cK_\bM\enspace,\\
  0&\text{ otherwise}\enspace.
\end{cases}
 \]
On $\Omega_{\alpha}$, $\bw\in \Delta_K$, so, by definition of $h_{\bx}$,
\begin{align*}
h_{\bx}(\bM)&\leqslant \text{Tr}(\bM\widehat{\bM}_{\bw}(\bx))=\sum_{k=1}^Kw_k(P_{B_k}X-\bx)^T\bM(P_{B_k}X-\bx)\\
&=\frac1{|\cK_\bM|}\sum_{k\in\cK_\bM}\|\bM^{1/2}(P_{B_k}X-\bx)\|^2\enspace.
\end{align*}
By \eqref{eq:BoundMatTransf}, this implies that
\begin{equation}\label{eq:UBh}
 h_{\bx}(\bM)\leqslant ( \|\bM^{1/2}(\mu_P-\bx)\|+r)^2\enspace.
\end{equation}
Taking the supremum over all $\bM\in\cS_1$ in this inequality shows that
\[
\text{OPT}(\bx)\leqslant (\|\bx-\mu_P\|+r)^2\enspace.
\]
Let now $\bx\in \R^d$ such that $\|\bx-\mu_P\|>r$. 
Fix also $\bM\in\cS_1$ and define $I$ as in the definition of the optimal weights $\bw$ in \eqref{eq:defbwOpt}.
On $\Omega_\alpha$, both $|I|$ and $|\cK_\bM|$ are larger than $(1-\alpha)K$, so $|I\cap\cK_\bM|\geqslant (1-2\alpha)K$, so
\begin{align*}
 h_{\bx}(\bM)&=\frac1{(1-\alpha)K}\sum_{k\in I}\|\bM^{1/2}(P_{B_k}X-\bx)\|^2\\
 &\geqslant \frac1{(1-\alpha)K}\sum_{k\in I\cap\cK_M}\|\bM^{1/2}(P_{B_k}X-\bx)\|^2\enspace.
\end{align*}
By \eqref{eq:BoundMatTransf}, this implies that
\[
 h_{\bx}(\bM)\geqslant \frac{1-2\alpha}{1-\alpha}(\|\bM^{1/2}(\mu_P-\bx)\|-r)^2\enspace.
\]
Taking the supremum over all $\bM\in\cS_1$ in this inequality shows that
\[
\text{OPT}(\bx)\geqslant \frac{1-2\alpha}{1-\alpha}(\|\bx-\mu_P\|-r)^2\enspace.
\]
\end{proof}

\begin{lemma}\label{Lem:LinkEigen}
Let $\beta\in[1/\sqrt{2},1]$.
On $\Omega_\alpha$, for any $\bM\in \cS_1$ such that $h_{\bx}(\bM)\geqslant (\beta\|\bx-\mu_P\|+r)^2$, the (normalized) top eigenvector $\bv$ of $\bM$ satisfies
 \[
 \bigg|\frac{\bv^T(\bx-\mu_P)}{\|\bx-\mu_P\|}\bigg|>\sqrt{2\beta^2-1}\enspace.
 \]
\end{lemma}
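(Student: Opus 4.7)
The plan is to convert the hypothesis $h_{\bx}(\bM)\geq(\beta\|\bx-\mu_P\|+r)^2$ into a lower bound on the quadratic form $\bu^T\bM\bu$, where $\bu=(\bx-\mu_P)/\|\bx-\mu_P\|$, and then extract a lower bound on the squared coordinate of $\bu$ along the top eigenvector by exploiting the trace-one constraint defining $\cS_1$.

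First, I would invoke inequality~\eqref{eq:UBh} established inside the proof of Lemma~\ref{lem:BoundOPTx}, valid on $\Omega_\alpha$: for every $\bM\in\cS_1$,
\[
h_{\bx}(\bM)\;\leq\;\bigl(\|\bM^{1/2}(\mu_P-\bx)\|+r\bigr)^{2}.
\]
Combining this with the hypothesis and taking square roots of both nonnegative sides yields $\|\bM^{1/2}(\bx-\mu_P)\|\geq\beta\|\bx-\mu_P\|$, i.e.\ $\bu^T\bM\bu\geq\beta^{2}$. This is the only place where the assumption on $h_{\bx}(\bM)$ is used; from here the argument becomes purely linear-algebraic.

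Next, diagonalize $\bM=\sum_{i}\lambda_{i}\bv_{i}\bv_{i}^{T}$ with $\lambda_{1}\geq\lambda_{2}\geq\cdots\geq 0$ and set $c_{i}=\bv_{i}^{T}\bu$, so that $\sum_{i}c_{i}^{2}=1$ and $\bu^T\bM\bu=\sum_{i}\lambda_{i}c_{i}^{2}$. The crucial observation, which is where the $\cS_1$ constraint is used, is that $\Tr(\bM)=1$ together with $\lambda_{1}\geq\lambda_{2}$ forces $\lambda_{1}+\lambda_{2}\leq 1$ and hence $\lambda_{2}\leq 1/2$. Using $\lambda_{i}\leq\lambda_{2}$ for $i\geq 2$ and $\sum_{i\geq 2}c_{i}^{2}=1-c_{1}^{2}$, I would bound
\[
\bu^{T}\bM\bu \;=\;\lambda_{1}c_{1}^{2}+\sum_{i\geq 2}\lambda_{i}c_{i}^{2}\;\leq\;\lambda_{1}c_{1}^{2}+\lambda_{2}(1-c_{1}^{2})\;\leq\;c_{1}^{2}+\tfrac{1}{2}(1-c_{1}^{2})\;=\;\tfrac{1+c_{1}^{2}}{2}.
\]

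Combining the two steps gives $\beta^{2}\leq(1+c_{1}^{2})/2$, hence $c_{1}^{2}\geq 2\beta^{2}-1$, which is precisely the claimed bound on $|\bv^T(\bx-\mu_P)|/\|\bx-\mu_P\|$. The main (very mild) obstacle is the strict inequality in the statement: the chain of inequalities above is tight only in the degenerate case $\lambda_{1}=\lambda_{2}=1/2$ with $c_{1}=0$, $c_{2}=1$, so I would note that this forces $\beta=1/\sqrt 2$ and $2\beta^{2}-1=0$, whence strictness can be recovered either by treating this boundary case separately (there the claim $|c_{1}|>0$ still follows since the inequality $\beta^{2}\leq(1+c_{1}^{2})/2$ would have to be strict for a nontrivial $\bM$), or simply by noting that the downstream use of the lemma only requires the non-strict version $|\bv^T(\bx-\mu_P)|/\|\bx-\mu_P\|\geq\sqrt{2\beta^{2}-1}$.
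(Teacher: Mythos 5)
Your argument is correct and follows essentially the same route as the paper's: reduce the hypothesis via inequality~\eqref{eq:UBh} to $\bu^T\bM\bu\geqslant\beta^2$ for $\bu=(\bx-\mu_P)/\|\bx-\mu_P\|$, then use the eigendecomposition of $\bM$ together with $\Tr(\bM)=1$ to conclude $(\bv^T\bu)^2\geqslant 2\beta^2-1$, the only cosmetic difference being that you bound the remainder by $\lambda_2\leqslant 1/2$ whereas the paper bounds it by $\lambda_2\leqslant 1-\lambda_1\leqslant 1-\beta^2$, both of which yield the same final inequality. Your remark about the strict inequality is a genuine improvement on the write-up: the paper's own chain of bounds only establishes the non-strict version, and indeed only $\geqslant$ is invoked when the lemma is used in the proof of Proposition~\ref{prop:udate}.
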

\begin{proof}
 Let $\bM$ satisfying the assumptions of the lemma.
 From \eqref{eq:UBh}, 
  \[
  h_{\bx}(\bM)\leqslant ( \|\bM^{1/2}(\mu_P-\bx)\|+r)^2\enspace.
  \]
  Let $\bu=(\bx-\mu_P)/\|\bx-\mu_P\|$. 
  This implies that
  \[
  \beta\|\bx-\mu_P\|\leqslant \|\bM^{1/2}(\mu_P-\bx)\|,\qquad\text{i.e.}\qquad \|\bM\|_{\text{op}}\geqslant \bu^T\bM\bu\geqslant \beta^2\enspace.
  \]
  Moreover, $\bu-(\bu^T\bv)\bv$ and $\bM[\bu-(\bu^T\bv)\bv]$ are orthogonal to $\bv$, hence, 
  \[
 \bu^T\bM\bu=(\bu^T\bv)^2\bv^T\bM\bv+[\bu-(\bu^T\bv)\bv]^T\bM[\bu-(\bu^T\bv)\bv]\enspace.
  \]
  First, $\bv^T\bM\bv=\|\bM\|_{\text{op}}\leqslant 1$. 
  Second, as $\bu-(\bu^T\bv)\bv$ is orthogonal to $\bv$, $[\bu-(\bu^T\bv)\bv]^T\bM[\bu-(\bu^T\bv)\bv]$ does not exceed the second largest eigenvalue $\lambda$ of $\bM$. As $\lambda+\|\bM\|_{\text{op}}\leqslant \text{Tr}(\bM)\leqslant 1$, it follows that 
\[
[\bu-(\bu^T\bv)\bv]^T\bM[\bu-(\bu^T\bv)\bv]\leqslant 1-\|\bM\|_{\text{op}}\leqslant 1-\beta^2\enspace.
\]
Hence, 
\[
\beta^2\leqslant (\bu^T\bv)^2+1-\beta^2\enspace,
\]
which proves Lemma~\ref{Lem:LinkEigen}.
%
\end{proof}

\section{The iteration step}\label{sec:ItStep}
We are now in position to show that a solution of the optimization problem defined in Section~\ref{sec:SDPDef} can be used to define the iteration step of our algorithm.

Given $\bx\in \R^d$, assume that we are given an approximation $\widehat{\bM}_{\bx}$ of 
\[
\widehat{\bM}_*\in\argmax_{\bM\in \cS_1}\inf_{\bw\in \Delta_K}\text{Tr}(\bM\widehat{\bM}_{\bw}(\bx))\enspace.
\]
This approximation should satisfy the following requirement:
There exists an absolute constant $A$ such that, if $\|\bx-\mu_P\|\geqslant Ar$, then, there exists $\beta\geqslant 0.8$ such that
\[
h_{\bx}(\widehat{\bM}_{\bx})\geqslant (\beta\|\bx-\mu_P\|+r)^2\enspace.
\]
Let then $\widehat{\bv}_{\bx}\in\bS$ denote a top eigenvector of $\widehat{\bM}_{\bx}$ and let 
\[
\theta_{\bx}=-\text{median}(\widehat{\bv}_{\bx}^T(P_{B_k}X-\bx), k=1,\ldots,K)\enspace.
\]
The algorithm moves at each iteration from $\bx$ to $T(\bx)$, where,
\begin{equation}\label{eq:DefItStep}
T(\bx)=\bx-\theta_{\bx} \widehat{\bv}_{\bx}\enspace. 
\end{equation}
\begin{remark}
 The vector $\bv_x$ is defined up to its sign.
 Whatever this sign, the function $T$ is well defined.
\end{remark}
\begin{proposition}\label{prop:udate}
If $\alpha\leqslant 1/2$, on $\Omega_\alpha$, 
 \[
 \|T(\bx)-\mu_P\|^2\leqslant \frac34\|\bx-\mu_P\|^2+(A^2+1)r^2\enspace.
 \]
 \end{proposition}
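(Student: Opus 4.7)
The plan is to establish the bound by a case split on whether $\|\bx-\mu_P\| \geq Ar$, after first identifying $\theta_\bx$ as a close proxy for the projection $\widehat{\bv}_\bx^T(\bx - \mu_P)$.

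First I would control $\theta_\bx$. Applying Corollary~\ref{cor:DistmuPPBk} to the rank-one matrix $\bM = \widehat{\bv}_\bx\widehat{\bv}_\bx^T \in \cS_1$, on $\Omega_\alpha$ there exist at least $(1-\alpha)K$ blocks $B_k$ for which
\[
\bigl| \widehat{\bv}_\bx^T(P_{B_k}X - \bx) - \widehat{\bv}_\bx^T(\mu_P - \bx) \bigr| \leq r.
\]
Since $\alpha \leq 1/2$, more than half the values $\widehat{\bv}_\bx^T(P_{B_k}X - \bx)$ lie within distance $r$ of $\widehat{\bv}_\bx^T(\mu_P - \bx)$, so their median does too. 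Writing $a := \widehat{\bv}_\bx^T(\bx - \mu_P)$, this gives $|\theta_\bx - a| \leq r$.

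Second, I would exploit a clean algebraic identity. Expanding $T(\bx) - \mu_P = (\bx - \mu_P) - \theta_\bx \widehat{\bv}_\bx$ and using that $\widehat{\bv}_\bx$ is unit-norm,
\[
\|T(\bx) - \mu_P\|^2 = \|\bx-\mu_P\|^2 - 2\theta_\bx a + \theta_\bx^2 = \|\bx-\mu_P\|^2 - a^2 + (\theta_\bx - a)^2 \leq \|\bx-\mu_P\|^2 - a^2 + r^2.
\]
(The middle equality is just $-2\theta_\bx a + \theta_\bx^2 = (\theta_\bx - a)^2 - a^2$.) This is the key reduction: everything reduces to a lower bound on $a^2$.

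Third, I would do the case split. If $\|\bx - \mu_P\| \leq Ar$, simply drop the $-a^2$ term to get $\|T(\bx)-\mu_P\|^2 \leq \|\bx-\mu_P\|^2 + r^2 \leq (A^2+1)r^2$, which is bounded by $(3/4)\|\bx-\mu_P\|^2 + (A^2+1)r^2$. If on the other hand $\|\bx - \mu_P\| \geq Ar$, the hypothesis on $\widehat{\bM}_\bx$ supplies some $\beta \geq 0.8$ with $h_\bx(\widehat{\bM}_\bx) \geq (\beta \|\bx-\mu_P\| + r)^2$, so Lemma~\ref{Lem:LinkEigen} applied to $\widehat{\bM}_\bx$ (whose top eigenvector is $\widehat{\bv}_\bx$) yields
\[
\frac{a^2}{\|\bx-\mu_P\|^2} = \Bigl( \frac{\widehat{\bv}_\bx^T(\bx-\mu_P)}{\|\bx-\mu_P\|} \Bigr)^2 \geq 2\beta^2 - 1 \geq 2(0.8)^2 - 1 = 0.28 > \tfrac{1}{4}.
\]
Substituting $a^2 \geq \tfrac{1}{4}\|\bx-\mu_P\|^2$ into the identity above gives $\|T(\bx) - \mu_P\|^2 \leq \tfrac{3}{4}\|\bx - \mu_P\|^2 + r^2$, which is again dominated by the claimed bound. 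Combining both cases finishes the proof.

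I do not expect a serious obstacle: the cancellation $-2\theta_\bx a + \theta_\bx^2 = (\theta_\bx - a)^2 - a^2$ does all the work, and the constant $\beta \geq 0.8$ is chosen precisely so that $2\beta^2 - 1 > 1/4$, matching the $3/4$ contraction factor in the statement.
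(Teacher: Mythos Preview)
Your proof is correct and follows essentially the same approach as the paper's. The paper decomposes $\mu_P-\bx$ along $\widehat{\bv}_\bx$ and its orthogonal complement and applies Pythagoras to obtain $\|T(\bx)-\mu_P\|^2=(\theta_\bx-a)^2+b^2\|\bx-\mu_P\|^2$; your identity $\|T(\bx)-\mu_P\|^2=\|\bx-\mu_P\|^2-a^2+(\theta_\bx-a)^2$ is the same thing written without introducing $b$ explicitly (since $\|\bx-\mu_P\|^2-a^2=b^2\|\bx-\mu_P\|^2$), and the remaining steps---bounding $|\theta_\bx-a|\leq r$ via the median argument, invoking Lemma~\ref{Lem:LinkEigen} when $\|\bx-\mu_P\|\geq Ar$, and the trivial bound otherwise---match the paper exactly.
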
 
\begin{proof}
The idea of the proof is the following decomposition of the distance between $T(\bx)$ and $\mu_P$.
Let $\bv=(\mu_P-\bx)/\|\mu_P-\bx\|$ denote the optimal descent direction and decompose $\bv=a\widehat{\bv}_{\bx}+b\widehat{\bv}_{\bx}^{\perp}$, where $\widehat{\bv}_{\bx}^{\perp}\in \bS\cap \{\widehat{\bv}_{\bx}\}^{\perp}$ and, therefore $a^2+b^2=1$.
We have, by Pythagoras relation
\begin{align}
\notag \|T(\bx)-\mu_P\|^2&= \|\bx-\mu_P-\theta_{\bx} \widehat{\bv}_{\bx}\|^2\\
\notag &=\|\|\mu_P-\bx\|(a\widehat{\bv}_{\bx}+b\widehat{\bv}_{\bx}^{\perp})-\theta_{\bx} \widehat{\bv}_{\bx}\|^2\\
\notag &=\|(a\|\mu_P-\bx\|-\theta_{\bx})\widehat{\bv}_{\bx}+b\|\mu_P-\bx\|\widehat{\bv}_{\bx}^{\perp}\|^2\\
\notag &=(a\|\mu_P-\bx\|-\theta_{\bx})^2+b^2\|\mu_P-\bx\|^2\enspace.
\end{align}
Since $a\|\mu_P-\bx\|=(\bx-\mu_P)^T\widehat{\bv}_{\bx}$ and $b=\bv^T\widehat{\bv}_{\bx}^{\perp}$, this relation can be written
\begin{align}
\label{eq:DecInc}  \|T(\bx)-\mu_P\|^2&=[(\bx-\mu_P)^T\widehat{\bv}_{\bx}-\theta_{\bx}]^2+[\bv^T\widehat{\bv}_{\bx}^{\perp}]^2\|\bx-\mu_P\|^2\enspace.
\end{align}
We bound separately each term in this decomposition.
 On $\Omega_{\alpha}$, there are more than $(1-\alpha)K$ blocks such that
 \[
 |\widehat{\bv}_{\bx}^T(P_{B_k}X-\mu_P)|\leqslant r\enspace.
 \]
On the same blocks
\[
|\widehat{\bv}_{\bx}^T(P_{B_k}X-\bx)-\widehat{\bv}_{\bx}^T(\mu_P-\bx)|\leqslant r\enspace.
\]
Therefore, if $\alpha<1/2$, on $\Omega_\alpha$,
\begin{equation}\label{eq:BoundFirst}
| \theta_{\bx}-\widehat{\bv}_{\bx}^T(\mu_P-\bx)|\leqslant r\enspace.
\end{equation}
If $\|\bx-\mu_P\|>Ar$, there exists $\beta\geqslant 0.8$ such that
\[
h_{\bx}(\widehat{\bM}_{\bx})\geqslant (\beta\|\bx-\mu_P\|+r)^2\enspace.
\]
By Lemma~\ref{Lem:LinkEigen}, this implies that, on $\Omega_{\alpha}$, $|\bv^T\widehat{\bv}_{\bx}|\geqslant \sqrt{2\beta^2-1}$, so $|a|\geqslant \sqrt{2\beta^2-1}$ and $b^2=1-a^2\leqslant 2(1-\beta^2)\leqslant 3/4$.
Plugging this inequality and \eqref{eq:BoundFirst} into \eqref{eq:DecInc} yields the result when $\|\bx-\mu_P\|>Ar$.

If $\|\bx-\mu_P\|\leqslant Ar$, as $|\bv_x^T\bv|\leqslant 1$, from \eqref{eq:BoundFirst} and \eqref{eq:DecInc},
\[
\|T(\bx)-\mu_P\|^2\leqslant (A^2+1)r^2\enspace.
\]
This proves the result when $\|\bx-\mu_P\|\leqslant Ar$.
\end{proof}

\section{Computation of $\widehat{\bM}_{\bx}$.}\label{sec:SolvingSDP}
It remains to compute an approximation $\widehat{\bM}_{\bx}$ of 
\[
\widehat{\bM}_*\in\argmax_{\bM\in \cS_1}\inf_{\bw\in \Delta_K}\text{Tr}(\bM\widehat{\bM}_{\bw}(\bx))=\argmax_{\bM\in \cS_1}h_{\bx}(\bM)\enspace,
\]
satisfying the requirement that there exists an absolute constant $A$ such that, if $\|\bx-\mu_P\|\geqslant Ar$, then, there exists $\beta\geqslant 0.8$ such that
\[
h_{\bx}(\widehat{\bM}_{\bx})\geqslant (\beta\|\bx-\mu_P\|+r)^2\enspace.
\]
We proceed in several steps.
Section~\ref{sec:FirstEqOptProb} presents an equivalent convex problem and Section~\ref{sec:ApproxPb} presents a convex problem whose solutions approximate those of the equivalent problem in the desired way.
This approximating problem is solved using the algorithms of \cite{Peng:2012:FSW:2312005.2312026, Cheng:2019:HRM:3310435.3310606} in Section~\ref{Sec:SolLinTime}.

The approximating depends on a parameter $\rho$ that should be carefully chosen. 
Section~\ref{ref:Lemprep} gathers technical lemmas that are used in Section~\ref{sec:CalAlgoSDP} to calibrate $\rho$ so as to ensure that the solution of the approximating satisfies the requirement that there exists an absolute constant $A$ such that, if $\|\bx-\mu_P\|\geqslant Ar$, then, there exists $\beta\geqslant 0.8$ such that
\[
h_{\bx}(\widehat{\bM}_{\bx})\geqslant (\beta\|\bx-\mu_P\|+r)^2\enspace.
\]

\subsection{An equivalent problem}\label{sec:FirstEqOptProb}
Consider the following convex maximization problem. 
The set of constraint $\cC$ is the set of triplets $z\geqslant 0$, $\by\in \R_+^K$ and $\bM\in \cS_1$ satisfying, for any $k\in\{1,\ldots,K\}$, $(P_{B_k}X-\mu_P)^T\bM(P_{B_k}X-\mu_P)+y_k\geqslant z$.
The goal is to find $(z,\by, \bM)\in \cC$ maximizing the objective function 
\[
z-\|\by\|_1/[(1-\alpha)K]\enspace.
\]
The link with $\widehat{\bM}_*$ is provided by the following lemma.

\begin{lemma}\label{lem:NewObj}
Fix $\bM\in \cS_1$ and define $\cC_\bM$, the set of couples $z\geqslant 0$ and $\by\in \R_+^K$ such that $(z,\by,\bM)\in \cC$ that is, such that, for any $k\in\{1,\ldots,K\}$, $(P_{B_k}X-\mu_P)^T\bM(P_{B_k}X-\mu_P)+y_k\geqslant z$.
Then
\[
 \max_{(z,y)\in \cC_{\bM}}\{z-\|\by\|_1/[(1-\alpha)K]\}=h_{\bx}(\bM)\enspace.
 \]
This maximal value is achieved when $z=z_{\bM}$, the $(1-\alpha)K$ largest values among the set $\{(P_{B_k}X-\mu_P)^T\bM(P_{B_k}X-\mu_P),k=\{1,\ldots,K\}\}$ and $\by=\by_{\bM}$, where the coordinates of $\by_{\bM}$ are defined by
\[
y_k=\big(z_{\bM}-(P_{B_k}X-\mu_P)^T\bM(P_{B_k}X-\mu_P)\big)_+\enspace.
\]
\end{lemma}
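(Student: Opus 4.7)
The plan is to set $a_k = (P_{B_k}X - \bx)^T \bM (P_{B_k}X - \bx) \geqslant 0$ for $k = 1,\dots,K$ (the appearance of $\mu_P$ in the statement of the constraint defining $\cC$ seems to be a typo for $\bx$, since otherwise the identity with $h_\bx(\bM)$, which is defined in terms of $P_{B_k}X-\bx$, cannot hold). With this notation, the optimization over $\cC_\bM$ is the linear program in $(z,\by) \in \R_+ \times \R_+^K$:
\[
\text{maximize } z - \frac{1}{(1-\alpha)K}\sum_{k=1}^K y_k \quad\text{subject to}\quad z - y_k \leqslant a_k \text{ for all } k.
\]

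First I would compute the LP dual. Introducing a multiplier $\lambda_k \geqslant 0$ for the constraint $z - y_k \leqslant a_k$, the dual reads: minimize $\sum_k \lambda_k a_k$ subject to $\lambda_k \in [0, 1/((1-\alpha)K)]$ and $\sum_k \lambda_k \geqslant 1$. Because every $a_k$ is nonnegative, any optimal dual vector satisfies $\sum_k \lambda_k = 1$ (otherwise one could decrease some $\lambda_k$ and strictly lower the objective), so the feasible set collapses to $\Delta_K$ and the dual value equals $h_\bx(\bM)$ by definition. The primal is feasible (take $z=0$, $\by=0$) and bounded above by $\max_k a_k$, so both programs attain their optima and strong LP duality gives primal value $=$ dual value $= h_\bx(\bM)$, which is the first half of the lemma.

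To identify the optimal $(z_\bM, \by_\bM)$ explicitly, I would optimize the primal in stages: for any fixed $z \geqslant 0$ the constraints decouple and the best choice is $y_k = (z - a_k)_+$. Substituting this yields $g(z) := z - \frac{1}{(1-\alpha)K}\sum_k (z - a_k)_+$, a concave, piecewise linear function of $z$ with slope $1 - m(z)/[(1-\alpha)K]$, where $m(z) = |\{k : a_k < z\}|$. This slope is positive for $m(z) < (1-\alpha)K$, zero for $m(z) = (1-\alpha)K$, and negative for $m(z) > (1-\alpha)K$. Ordering $a_{(1)} \leqslant \cdots \leqslant a_{(K)}$, the maximum is therefore attained on the interval $[a_{((1-\alpha)K)}, a_{((1-\alpha)K+1)}]$; choosing the left endpoint gives $z_\bM = a_{((1-\alpha)K)}$, and then $y_k = (z_\bM - a_k)_+$, exactly as stated. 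A short direct check confirms this value of $g$ matches $\frac{1}{(1-\alpha)K}\sum_{k=1}^{(1-\alpha)K} a_{(k)} = h_\bx(\bM)$, giving an independent derivation of the identity without appealing to LP duality.

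The main obstacle is essentially bookkeeping: one must verify strong LP duality (routine here since the primal is feasible with finite value), check nonnegativity of the optimal $z$ (immediate from $a_k \geqslant 0$), and handle the piecewise linear structure carefully to pin down $z_\bM$. Everything else reduces to recognizing that $h_\bx(\bM)$ is the average of the $(1-\alpha)K$ smallest values $a_{(1)}, \ldots, a_{((1-\alpha)K)}$ and that both primal and dual yield exactly this quantity.
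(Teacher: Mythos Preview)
Your proposal is correct. Your second argument---optimize in $\by$ for fixed $z$, then study the resulting piecewise-linear function $g(z)=z-\frac{1}{(1-\alpha)K}\sum_k (z-a_k)_+$ and locate the break at the $(1-\alpha)K$-th order statistic---is exactly the paper's proof, so on that part the two coincide.

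Your LP duality argument is a genuine addition that the paper does not give. It is a cleaner way to obtain the identity $\max_{\cC_\bM}\{z-\|\by\|_1/[(1-\alpha)K]\}=h_\bx(\bM)$ directly: once the dual is recognized as $\min\sum_k\lambda_k a_k$ over $\lambda_k\in[0,1/((1-\alpha)K)]$, $\sum_k\lambda_k\geqslant 1$, nonnegativity of the $a_k$'s forces the constraint $\sum_k\lambda_k=1$ to bind, and the dual becomes precisely the defining minimization for $h_\bx(\bM)$. This route separates the value identification from the construction of the maximizer, whereas the paper's piecewise-linear computation does both at once. Two minor caveats: the ``strictly lower'' step in reducing $\sum_k\lambda_k$ to $1$ only holds when the relevant $a_k>0$, but the conclusion (that the dual value equals $\min_{\bw\in\Delta_K}\sum_k w_k a_k$) still follows since $\Delta_K$ sits inside the dual feasible set and enlarging beyond it can only weakly increase the objective; and your observation that $\mu_P$ should read $\bx$ in the constraint is correct---the paper has this typo throughout the section.
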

\begin{proof}
 To compute $\max_{(z,y)\in \cC_{\bM}}\{z-\|\by\|_1/[(1-\alpha)K]\}$, each $y_k\geqslant 0$ should be chosen as small as possible, thus the constraints imply that the maximum in $\by$ is achieved, for each given $z$, by 
\[
y_k=\big(z-(P_{B_k}X-\mu_P)^T\bM(P_{B_k}X-\mu_P)\big)_+\enspace.
\]
For this value of $\by$, one gets
\[
z-\frac{\|\by\|_1}{(1-\alpha)K}=z-\frac1{(1-\alpha)K}\sum_{k=1}^K\big(z-(P_{B_k}X-\mu_P)^T\bM(P_{B_k}X-\mu_P)\big)_+\enspace.
\]
Assume that $(P_{B_k}X-\mu_P)^T\bM(P_{B_k}X-\mu_P)$ are arranged in non-decreasing order. Then, if $z\in [(P_{B_k}X-\mu_P)^T\bM(P_{B_k}X-\mu_P),(P_{B_{k+1}}X-\mu_P)^T\bM(P_{B_{k+1}}X-\mu_P))$,
\[
z-\frac{\|\by\|_1}{(1-\alpha)K}=z\bigg(1-\frac{k}{(1-\alpha)K}\bigg)+\frac1{(1-\alpha)K}\sum_{j=1}^k(P_{B_j}X-\mu_P)^T\bM(P_{B_j}X-\mu_P)\enspace.
\]
This function of $k$ is 
\[
\begin{cases}
 \text{non-decreasing on the interval }[0,(P_{B_{(1-\alpha)K}}X-\mu_P)^T\bM(P_{B_{(1-\alpha)K}}X-\mu_P)]\enspace,\\
 \text{non-increasing on }[(P_{B_{(1-\alpha)K}}X-\mu_P)^T\bM(P_{B_{(1-\alpha)K}}X-\mu_P),+\infty)\enspace.
\end{cases}
\]
It maximal value is achieved for $z=z_{\bM}$ and is equal to 
\[
\frac1{(1-\alpha)K}\sum_{j=1}^{(1-\alpha)K}(P_{B_j}X-\mu_P)^T\bM(P_{B_j}X-\mu_P)=h_{\bx}(\bM)\enspace.
\]
\end{proof}

A first consequence of Lemma~\ref{lem:NewObj} is that $\widehat{\bM}_*$ satisfies 
\begin{equation}\label{eq:Link}
 (z_{\widehat{\bM}_*},\by_{\widehat{\bM}_*},\widehat{\bM}_*)\in \argmax_{(z,\by,\bM)\in \cC}\{z-\|\by\|_1/[(1-\alpha)K]\}\enspace.
\end{equation}

\subsection{An approximating problem.}\label{sec:ApproxPb}
Consider now the following convex optimization problem. 
Let $\rho>0$ and define the constraint set $\cC_\rho$ as the set of couples $\bM\in\R^{K\times K},\by\in \R^K$ where $\bM\succeq 0$ and, for any $k\in\{1,\ldots,K\}$,
\[
y_k\geqslant 0,\qquad \rho(P_{B_k}X-\bx)^T\bM(P_{B_k}X-\bx)+(1-\alpha)Ky_k\geqslant 1\enspace.
\]
The problem is to find a minimizer on the constraint set $\cC_\rho$ of the objective function
\[
\text{Tr}(\bM)+\|\by\|_1\enspace,
\]
A useful link between this problem and the one of Section~\ref{sec:FirstEqOptProb} is provided in the following lemma.
\begin{lemma}\label{lem:gval}
If we have built $(\bM,\by)\in \cC_\rho$ such that
\[
\text{Tr}(\bM)+\|\by\|_1\leqslant 1\enspace,
\]
then one can build in $O(K)$ operations $(z,\by',\bM')\in \cC$ such that
 \[
 z-\frac{\|\by'\|_1}{(1-\alpha)K}\geqslant \frac1{\rho}\enspace.
 \]
 Conversely, if we have built $(z,\by',\bM')\in \cC$ such that
 \[
 z-\frac{\|\by'\|_1}{(1-\alpha)K}\geqslant \frac1{\rho}\enspace,
 \]
then one can build in $O(1)$ operations $(\bM,\by)\in \cC_\rho$ such that
\[
\text{Tr}(\bM)+\|\by\|_1\leqslant 1\enspace.
\]
In both cases, the top eigenvectors of the matrices $\bM$ and $\bM'$ are equal.
\end{lemma}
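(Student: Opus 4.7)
The plan is to prove Lemma~\ref{lem:gval} by a direct rescaling argument between the two convex problems: the constraint in $\cC_\rho$ is invariant (up to rebookkeeping of $z$) under positive scaling $(\bM,\by) \mapsto (\lambda\bM, \lambda\by)$, and the two objectives are essentially reciprocals of each other once we normalize $\Tr(\bM)=1$. The main content is thus algebraic, and the only subtlety is checking that the rescaling factor is strictly positive in each direction.

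For the first direction, starting from $(\bM,\by)\in\cC_\rho$ with $\Tr(\bM)+\|\by\|_1\leqslant 1$, I would first argue that $t:=\Tr(\bM)>0$: if $t=0$ then $\bM=0$ by positive semi-definiteness, the constraint forces $y_k\geqslant 1/[(1-\alpha)K]$ for every $k$, and $\|\by\|_1\geqslant 1/(1-\alpha)>1$, contradicting the hypothesis. Then I set $\bM':=\bM/t\in\cS_1$, $z:=1/(\rho t)$, and $y'_k:=(1-\alpha)Ky_k/(\rho t)$; dividing the defining inequality of $\cC_\rho$ by $\rho t$ immediately gives $(z,\by',\bM')\in\cC$, and a one-line computation yields
\[
z-\frac{\|\by'\|_1}{(1-\alpha)K}=\frac{1-\|\by\|_1}{\rho t}\geqslant \frac{\Tr(\bM)}{\rho t}=\frac1\rho.
\]
The $O(K)$ cost simply accounts for rescaling the $K$ entries of $\by$; $\bM'$ need not be formed since it is a positive multiple of $\bM$, which also proves the eigenvector claim.

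For the converse, I would start from $(z,\by',\bM')\in\cC$ with $z-\|\by'\|_1/[(1-\alpha)K]\geqslant 1/\rho$; in particular $z\geqslant 1/\rho>0$ so dividing is legitimate. Define $\bM:=\bM'/(z\rho)$ and $y_k:=y'_k/[(1-\alpha)Kz]$; dividing the constraint of $\cC$ by $z$ yields membership in $\cC_\rho$, and the hypothesis, multiplied by $\rho$, reads $z\rho\geqslant 1+\rho\|\by'\|_1/[(1-\alpha)K]$, which is exactly the statement
\[
\Tr(\bM)+\|\by\|_1=\frac1{z\rho}+\frac{\|\by'\|_1}{(1-\alpha)Kz}\leqslant 1.
\]
Here only the scalar $z\rho$ has to be formed to describe $\bM$ and $\by$ implicitly as scaled versions of $\bM'$ and $\by'$, hence $O(1)$ work; the eigenvector claim again follows because $\bM$ and $\bM'$ are positive multiples of one another.

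I do not expect genuine obstacles: the whole lemma is the statement that the two SDPs are equivalent under a change of normalization, and the only non-trivial observation is ruling out the degenerate case $\Tr(\bM)=0$ in the first direction, which the trace-plus-$\ell^1$ budget of $1$ handles immediately. The most important thing to double-check is the arithmetic on the normalization constants (which factor of $\rho$, $z$, and $(1-\alpha)K$ goes where), so I would simply write out both dictionaries $(\bM,\by)\leftrightarrow(z,\by',\bM')$ side by side before committing.
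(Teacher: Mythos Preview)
Your proposal is correct and follows essentially the same rescaling argument as the paper's proof: both directions use the maps $\bM'=\bM/\Tr(\bM)$, $z=1/(\rho\Tr(\bM))$, $\by'=(1-\alpha)K\by/(\rho\Tr(\bM))$ and, conversely, $\bM=\bM'/(z\rho)$, $\by=\by'/[(1-\alpha)Kz]$, with the same one-line objective verifications. Your proof is actually slightly more complete than the paper's, since you explicitly justify $\Tr(\bM)>0$ in the first direction and spell out the eigenvector claim via positive scalar multiples.
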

\begin{proof}
Suppose that we have built $(\bM,\by)\in \cC_\rho$ such that
\[
\text{Tr}(\bM)+\|\by\|_1\leqslant 1\enspace.
\]
Then, one can define
\[
\bM'=\frac{\bM}{\text{Tr}(\bM)},\qquad \by'=\frac{(1-\alpha)K}{\rho\text{Tr}(\bM)}\by\enspace.
\]
Then $\bM'\in \cS_1$ and, for all $k\in \{1,\ldots,K\}$,
\[
y_k'\geqslant 0,\qquad (P_{B_k}X-\bx)^T\bM'(P_{B_k}X-\bx)+y'_k\geqslant \frac{1}{\rho\text{Tr}(\bM)}\enspace.
\]
Therefore, $(z, \by')\in \cC_{\bM'}$, where $z=1/(\rho\text{Tr}(\bM))$ and
\[
z-\frac{\|\by'\|_1}{(1-\alpha)K}=\frac{1-\|\by\|_1}{\rho\text{Tr}(\bM)}\geqslant \frac1{\rho}\enspace.
\]
Conversely, if we have found $\bM$ such that $z_{\bM}-\|\by_{\bM}\|_1/[(1-\alpha)K]\geqslant 1/\rho$, one can define
\[
\bM'=\frac{1}{\rho z_{\bM}} \bM,\qquad \by'=\frac{1}{(1-\alpha)Kz_{\bM}}\by_{\bM}\enspace.
\]
We have $\bM'\succeq 0$, and, for all $k\in\{1,\ldots,K\}$, 
\[
\rho(P_{B_k}X-\bx)^T\frac{1}{\rho z_{\bM}}\bM(P_{B_k}X-\bx)+(1-\alpha)K\frac{y_k}{(1-\alpha)Kz_{\bM}}\geqslant 1\enspace,
\]
that is $(\bM',\by')\in \cC_\rho$.
Moreover, 
\[
\text{Tr}(\bM')+\|\by'\|_1=\frac{1}{\rho z_{\bM}}+\frac{1}{(1-\alpha)Kz_{\bM}}\|\by_{\bM}\|_1\leqslant \frac{1}{\rho z_{\bM}}+\frac{z_{\bM}-1/\rho}{z_{\bM}}=1\enspace.
\]
\end{proof}

\subsection{Solving the approximating problem in nearly linear time}\label{Sec:SolLinTime}
The following Lemma comes from \cite{Peng:2012:FSW:2312005.2312026}, see also \cite[Section 4]{Cheng:2019:HRM:3310435.3310606}.

\begin{lemma}\label{lem:Peng}
For every $\rho>0$ and $\eta>0$, there exists an algorithm $\cA:(\R^d)^n\times [0,1]\to \cC_\rho$ such that, if $U$ is a uniform random variable on $[0,1]$ independent of $\cD_N$ and $\cA(\cD_N,U)=(\bM,\by)$,
\[
 \P(\text{Tr}(\bM)+\|\by\|_1\leqslant (1+\eta)g(\rho)|\cD_N)\geqslant 1-\frac1{e}\enspace.
\]
Moreover, $\cA(\cD_N,U)$ can be evaluated in $O(Kd)$ operations and a top eigenvector of $\bM$ can be computed in $\tilde{O}(Kd)$ operations.
\end{lemma}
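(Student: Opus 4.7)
The plan is to derive Lemma~\ref{lem:Peng} by invoking the multiplicative weights update (MWU) machinery for packing/covering SDPs developed in \cite{Peng:2012:FSW:2312005.2312026} and adapted to the robust mean setting in \cite{Cheng:2019:HRM:3310435.3310606}. First I would recast the optimization problem as a pure covering SDP. Writing $A_k = (P_{B_k}X-\bx)(P_{B_k}X-\bx)^T$, the feasibility set $\cC_\rho$ is exactly the set of pairs $(\bM,\by)$ with $\bM\succeq 0$, $\by\in\R_+^K$, and $\rho\,\mathrm{Tr}(A_k\bM)+(1-\alpha)K y_k\geqslant 1$ for all $k\in\{1,\ldots,K\}$; the objective $\mathrm{Tr}(\bM)+\|\by\|_1$ is linear. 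This is the canonical form to which Peng--Tangwongsan's width-independent solver applies.

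Next I would set up the outer MWU loop. Maintain constraint weights $w_k^{(t)}\geqslant 0$, initialized uniformly and updated multiplicatively according to the observed slack of each of the $K$ covering constraints at the current candidate. At iteration $t$, the oracle call reduces to the following rank-one subproblem: find $(\bM,\by)$ minimizing $\mathrm{Tr}(\bM)+\|\by\|_1$ subject to the single averaged constraint $\sum_k w_k^{(t)}\bigl(\rho\,\mathrm{Tr}(A_k\bM)+(1-\alpha)K y_k\bigr)\geqslant \sum_k w_k^{(t)}$. The minimum is achieved either by $\bM=\lambda_{\max}(S_t)^{-1}\rho^{-1}\bv\bv^T$ where $S_t=\sum_k w_k^{(t)} A_k$ and $\bv$ is its top unit eigenvector, or by placing all mass on the single $y_{k^\star}$ with largest weight --- whichever is cheaper. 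Averaging the iterates over $T=O(\log K/\eta^2)$ steps then yields a pair whose objective value is at most $(1+\eta)$ times the optimum $g(\rho)$ of the relaxed problem, by the standard MWU regret bound; a final $O(K)$ correction (shifting $\by$ upward coordinatewise) projects the output exactly into $\cC_\rho$ without increasing the objective by more than the target $\eta$-slack.

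The complexity bookkeeping is the payoff of this particular reduction. Each iteration needs (i) a top eigenvector of $S_t$, which is a sum of $K$ explicitly represented rank-one matrices in $\R^{d\times d}$ so a matrix--vector product costs $O(Kd)$, and (ii) one pass through the constraints to update the $K$ weights in $O(K)$ time. Approximate top eigenvector computations via the randomized Lanczos / shifted power method initialized from a vector seeded by $U$ succeed with constant probability at cost $\tilde O(Kd)$ per call; chaining $T$ independent calls and using the Peng--Tangwongsan stopping criterion yields, with probability at least $1-1/e$, an output $(\bM,\by)$ with $\mathrm{Tr}(\bM)+\|\by\|_1\leqslant (1+\eta)g(\rho)$, evaluable in $O(Kd)$ amortized operations overall. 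The top eigenvector of the returned $\bM$, needed downstream to define $\widehat\bv_{\bx}$, is then extracted by one further Lanczos call in $\tilde O(Kd)$ time since $\bM$ inherits a rank--$\tilde O(1)$ structure from the averaged MWU iterates.

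The main obstacle is controlling the \emph{width} of the covering SDP, i.e.\ the ratio $\rho\,\mathrm{Tr}(A_k\bM)/1$ at the rank-one MWU candidates, since a poorly aligned $\bv$ can make a single constraint enormous and blow up the iteration count to $\mathrm{poly}(\rho\max_k\|P_{B_k}X-\bx\|^2)$. This is precisely where the Peng--Tangwongsan width-independent analysis is needed: by exploiting the PSD structure and truncating contributions above a threshold (the $\by$ slack variables absorb the excess), the effective width becomes $O(1/\eta)$ and the iteration count remains polylogarithmic. Beyond that, the argument is a direct application of their Theorem together with the reduction of Lemma~\ref{lem:gval}, which guarantees that the approximate optimum of the $(\bM,\by)$ problem translates back to an approximate solution to the original max-min problem over $\cS_1\times \Delta_K$ with the same top eigenvector.
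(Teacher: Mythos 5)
The paper does not prove Lemma~\ref{lem:Peng}: immediately before the statement it writes ``The following Lemma comes from \cite{Peng:2012:FSW:2312005.2312026}, see also \cite[Section 4]{Cheng:2019:HRM:3310435.3310606},'' and the lemma is then stated as a black box with no proof. There is therefore no in-paper argument to compare against; your task reduces to reconstructing the argument of the cited references, which is what you attempt.

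Your reconstruction correctly identifies the framework used there: recast $\cC_\rho$ as a covering SDP with rank-one constraint matrices $A_k=(P_{B_k}X-\bx)(P_{B_k}X-\bx)^T$, run a multiplicative-weights outer loop whose oracle reduces to an (approximate) top-eigenvector computation on $S_t=\sum_k w_k^{(t)}A_k$, and invoke the width-independence of the Peng--Tangwongsan analysis (with the $\by$-variables absorbing the overshoot) to keep the iteration count polylogarithmic. You also correctly locate the role of the external randomness $U$ (seeding Lanczos/power iteration, constant success probability) and the source of the near-linear $\tilde O(Kd)$ per-iteration cost (matrix--vector products against a sum of $K$ explicit rank-one matrices). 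These are precisely the ingredients of \cite{Cheng:2019:HRM:3310435.3310606}'s adaptation of \cite{Peng:2012:FSW:2312005.2312026}, so the approach is the right one.

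Two points of imprecision worth flagging. First, your iteration count $T=O(\log K/\eta^2)$ is the packing/covering \emph{LP} bound; the width-independent positive \emph{SDP} solver of Peng--Tangwongsan has a slightly worse polylogarithmic dependence (roughly $\tilde O(\eta^{-3})$), though this does not affect the $\tilde O(Kd)$ claim. Second, you write that the output is ``evaluable in $O(Kd)$ amortized operations overall,'' but $T=\tilde O(1)$ iterations each costing $O(Kd)$ gives $\tilde O(Kd)$ in total, not $O(Kd)$; the lemma statement's asymmetry between $O(Kd)$ for evaluating $\cA$ and $\tilde O(Kd)$ for the top eigenvector is itself an imprecision in the paper's phrasing of the imported result, and your argument does not resolve it. Neither point undermines the conclusion, but a careful write-up should state $\tilde O(Kd)$ uniformly.
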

A consequence of Lemma~\ref{lem:Peng} is the following result.
\begin{lemma}\label{lem:LBgrho}
 For every $\rho>0$, $\eta>0$, $R>0$, every positive integer $u$ and every $\bx$ in $\R^d$, one can build in $O((u+\log R)Kd)$ operations $\text{Alg}(\cD_N,\rho,\eta,u,\bx)=(\bM,\by)\in \cC_\rho$ such that
 \[
 \P(\text{Tr}(\bM)+\|\by\|_1\leqslant (1+\eta)g(\rho))\geqslant 1-\frac{e^{-u}}{R}\enspace.
 \]
\end{lemma}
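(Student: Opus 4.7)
The plan is a straightforward probability amplification of Lemma~\ref{lem:Peng}. The algorithm $\cA$ of that lemma, fed independent uniform seeds $U_1,\ldots,U_T$ (independent of $\cD_N$ and of each other), produces independent outputs $(\bM_i,\by_i)\in\cC_\rho$, each of which satisfies $\text{Tr}(\bM_i)+\|\by_i\|_1\leqslant (1+\eta)g(\rho)$ with (conditional) probability at least $1-1/e$. Since the outputs all lie in $\cC_\rho$, taking the one that minimises the observable objective $s_i:=\text{Tr}(\bM_i)+\|\by_i\|_1$ yields a pair in $\cC_\rho$ whose score is at most the score of any individual run.

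Concretely, I would define $\text{Alg}(\cD_N,\rho,\eta,u,\bx)$ as follows. Set $T=\lceil u+\log R\rceil$. Draw $U_1,\ldots,U_T$ i.i.d. uniform on $[0,1]$, independent of $\cD_N$, and call $(\bM_i,\by_i)=\cA(\cD_N,U_i)$ for $i=1,\ldots,T$. Compute the scores $s_i=\text{Tr}(\bM_i)+\|\by_i\|_1$ and return $(\bM_{i^*},\by_{i^*})$ where $i^*\in\argmin_i s_i$. By construction $(\bM_{i^*},\by_{i^*})\in\cC_\rho$. Conditionally on $\cD_N$, the events $\cE_i=\{s_i\leqslant (1+\eta)g(\rho)\}$ are independent with $\P(\cE_i\mid\cD_N)\geqslant 1-1/e$, hence
\[
\P\bigl(\text{Tr}(\bM_{i^*})+\|\by_{i^*}\|_1>(1+\eta)g(\rho)\,\bigm|\,\cD_N\bigr)\leqslant \P\bigl(\cap_{i=1}^T\cE_i^c\,\bigm|\,\cD_N\bigr)\leqslant e^{-T}\leqslant \frac{e^{-u}}{R}.
\]
Integrating over $\cD_N$ gives the announced unconditional bound.

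For the complexity, Lemma~\ref{lem:Peng} says each evaluation of $\cA(\cD_N,U_i)$ costs $O(Kd)$ operations, and each score $s_i$ can be evaluated in $O(Kd)$ as well (trace of $\bM_i$ and $\ell^1$-norm of $\by_i$ are linear in the size of the output representation, which is $O(Kd)$). Hence the total cost is $T\cdot O(Kd)=O((u+\log R)Kd)$.

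There is essentially no technical obstacle here: the key points to double-check are (i) that the seeds $U_i$ can be drawn independently so that the $T$ calls to $\cA$ are genuinely independent conditionally on $\cD_N$, which is harmless, and (ii) that the argmin step, which is used merely to select the best candidate, does not break membership in $\cC_\rho$, which is trivial since each $(\bM_i,\by_i)$ already lies in $\cC_\rho$. The slight subtlety worth recording in the write-up is that we do not need to know $g(\rho)$ to perform the selection: we only need the observable scores $s_i$, and the high-probability bound is obtained by noting that the min is dominated by any single successful run.
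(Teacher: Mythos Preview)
Your proof is correct and follows essentially the same approach as the paper: run the algorithm of Lemma~\ref{lem:Peng} independently $T=\lceil u+\log R\rceil$ times and select the output with smallest objective, so that the failure probability is at most $e^{-T}\leqslant e^{-u}/R$. The paper's argument is slightly terser but identical in substance; your extra remarks about not needing to know $g(\rho)$ and about the cost of computing the scores are fine additions.
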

\begin{proof}
Let $t=u+\log R$ and let $U_1,\ldots,U_t$ denote independent random variables with uniform distribution on $[0,1]$, independent from $\cD_N$.
Let $\cA(\cD_N,U_1)=(\bM_1,\by_1), \ldots,\cA(\cD_N,U_t)=(\bM_t,\by_t)$ denote the random variables built with the algorithm $\cA$ of Lemma~\ref{lem:Peng} and let $\hat{t}$ such that $\text{Tr}(\bM_{\hat{t}})+\|\by_{\hat{t}}\|_1$ is minimal.
As the random variables $\cA(\cD_N,U_1),\ldots,\cA(\cD_N,U_t)$ are i.i.d. conditionally on $\cD_N$, 
\[
\P(\text{Tr}(\bM_{\hat{t}})+\|\by_{\hat{t}}\|_1>(1+\eta)g(\rho))=(\P(\text{Tr}(\bM_{1})+\|\by_{1}\|_1>(1+\eta)g(\rho)))^t\enspace.
\]
The result then follows from Lemma~\ref{lem:Peng}.
\end{proof}

\subsection{The optimal solution of the approximating problem}\label{ref:Lemprep}
For any $\rho>0$, let 
\[
g(\rho)=\min_{(\bM,\by)\in \cC_\rho}\{\text{Tr}(\bM)+\|\by\|_1\}\enspace.
\]
\begin{lemma}\label{lem:Basicsg}
 For any $\rho'>\rho>0$, 
 \[
 g(\rho)\geqslant g(\rho')\geqslant \frac{\rho}{\rho'}g(\rho)\enspace.
 \]
\end{lemma}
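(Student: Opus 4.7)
The plan is to exploit the simple algebraic dependence of the constraint set $\cC_\rho$ on the parameter $\rho$. The first inequality will follow from a monotone inclusion of feasible sets, and the second from a homogeneity-type rescaling that sends a feasible point of $\cC_{\rho'}$ to a feasible point of $\cC_\rho$ at a controlled cost.

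For the first inequality $g(\rho)\geqslant g(\rho')$, I would argue that $\cC_\rho \subseteq \cC_{\rho'}$ whenever $\rho < \rho'$. Indeed, if $(\bM,\by)\in\cC_\rho$, then $\bM\succeq 0$ and $y_k\geqslant 0$ are unchanged, while
\[
\rho'(P_{B_k}X-\bx)^T\bM(P_{B_k}X-\bx)+(1-\alpha)Ky_k \geqslant \rho(P_{B_k}X-\bx)^T\bM(P_{B_k}X-\bx)+(1-\alpha)Ky_k\geqslant 1,
\]
since $(P_{B_k}X-\bx)^T\bM(P_{B_k}X-\bx)\geqslant 0$ and $\rho'>\rho$. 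The infimum of the (same) objective $\mathrm{Tr}(\bM)+\|\by\|_1$ over a larger set can only decrease, yielding $g(\rho)\geqslant g(\rho')$.

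For the second inequality, I would take any $(\bM',\by')\in\cC_{\rho'}$ and rescale by the factor $\rho'/\rho > 1$: set $\bM=(\rho'/\rho)\bM'$ and $\by=(\rho'/\rho)\by'$. Plainly $\bM\succeq 0$ and $\by\geqslant 0$, and the constraint becomes
\[
\rho(P_{B_k}X-\bx)^T\bM(P_{B_k}X-\bx)+(1-\alpha)Ky_k = \rho'(P_{B_k}X-\bx)^T\bM'(P_{B_k}X-\bx)+\frac{\rho'}{\rho}(1-\alpha)Ky'_k \geqslant 1,
\]
because $\rho'/\rho\geqslant 1$ and $(\bM',\by')\in\cC_{\rho'}$. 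Thus $(\bM,\by)\in\cC_\rho$, so $g(\rho)\leqslant \mathrm{Tr}(\bM)+\|\by\|_1=(\rho'/\rho)(\mathrm{Tr}(\bM')+\|\by'\|_1)$. Taking the infimum over $(\bM',\by')\in\cC_{\rho'}$ gives $g(\rho)\leqslant (\rho'/\rho)g(\rho')$, i.e.\ $g(\rho')\geqslant (\rho/\rho')g(\rho)$.

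There is no real obstacle here; the only subtlety is to check feasibility after rescaling, which is immediate since we inflate both $\bM$ and $\by$ by the same factor greater than one while the scalar multiplying the quadratic form shrinks by exactly the inverse factor. A minor point is that the minima defining $g(\rho)$ and $g(\rho')$ are attained (the objective is continuous, coercive on the closed convex feasible set), so there is no need to work with approximate minimizers.
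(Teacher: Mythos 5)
Your proof is correct and follows essentially the same strategy as the paper's: the first inequality via the inclusion $\cC_\rho \subseteq \cC_{\rho'}$, the second via the rescaling $(\bM',\by') \mapsto (\rho'/\rho)(\bM',\by')$ mapping $\cC_{\rho'}$ into $\cC_\rho$. Your write-out is in fact a bit more careful than the paper's one-line version, whose final display ``so $g(\rho')\geqslant r g(\rho)$'' has the factor on the wrong side (it should read $g(\rho)\leqslant r\,g(\rho')$, as you correctly derive).
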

\begin{proof}
 Clearly, $\cC_{\rho}\subset\cC_{\rho'}$, so $g(\rho)\geqslant g(\rho')$.
 Moreover, if $(\bM,\by)\in \cC_{\rho'}$, and $r=\rho'/\rho$, $(r\bM,r\by)\in \cC_\rho$, so $g(\rho')\geqslant rg(\rho)$.
\end{proof}

It follows from Lemma~\ref{lem:Basicsg} that $g$ is non-increasing and continuous.
Moreover, from Lemma~\ref{lem:gval}, it satisfies $g(\rho) \leqslant 1$ iff $1/\text{OPT}(\bx)\geqslant \rho$, so
\[
g(1/\text{OPT}(\bx))=1\enspace.
\]

\begin{lemma}\label{lem:UBgrho}
 On the event $\Omega_{\alpha}$, for all $\bx\in \R^d$ such that $\|\bx-\mu_P\|>r$,
 \[
 g(\rho)\leqslant \frac1{\rho\text{Opt}(\bx)}+\frac{9(\|\bx-\mu_P\|+r)^2}{8(\|\bx-\mu_P\|-r)^2}-1\enspace.
 \]
\end{lemma}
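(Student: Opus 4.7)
Plan: My plan is to exhibit an explicit feasible point $(\bM,\by) \in \cC_\rho$ whose cost $\text{Tr}(\bM) + \|\by\|_1$ matches the right-hand side, by scaling a maximizer of the equivalent problem of Section~\ref{sec:FirstEqOptProb}.

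First (construction), I would pick $\bM^\star \in \cS_1$ achieving $h_{\bx}(\bM^\star) = \text{OPT}(\bx)$, and let $(z^\star,\by^\star)$ be the optimal pair in $\cC_{\bM^\star}$ provided by Lemma~\ref{lem:NewObj}, so that $z^\star - \|\by^\star\|_1/[(1-\alpha)K] = \text{OPT}(\bx)$, i.e.\ $\|\by^\star\|_1 = (1-\alpha)K(z^\star - \text{OPT}(\bx))$. Setting
\[
\bM = \frac{\bM^\star}{\rho z^\star}, \qquad y_k = \frac{y^\star_k}{(1-\alpha)K z^\star},
\]
the membership $(z^\star,\by^\star,\bM^\star)\in\cC$ gives $v_k + y^\star_k \geq z^\star$ with $v_k := (P_{B_k}X-\bx)^T\bM^\star(P_{B_k}X-\bx)$, which directly yields $\rho(P_{B_k}X-\bx)^T\bM(P_{B_k}X-\bx) + (1-\alpha)K y_k = (v_k + y^\star_k)/z^\star \geq 1$, so $(\bM,\by)\in\cC_\rho$. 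Summing, $\text{Tr}(\bM) + \|\by\|_1 = 1/(\rho z^\star) + 1 - \text{OPT}(\bx)/z^\star$, and therefore $g(\rho)\leq 1/(\rho z^\star) + 1 - \text{OPT}(\bx)/z^\star$.

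Next (comparison with the claimed bound), since $\text{OPT}(\bx)=h_\bx(\bM^\star)$ is an average of the $v_k$'s and $z^\star$ is the max of those entering the average, $z^\star \geq \text{OPT}(\bx)$, so $1/(\rho z^\star) \leq 1/(\rho \text{OPT}(\bx))$ handles the $\rho$-dependent term. It then suffices to prove, with $a:=\|\bx-\mu_P\|$, that $\text{OPT}(\bx)/z^\star \geq 2 - \frac{9(a+r)^2}{8(a-r)^2}$. The lower bound $\text{OPT}(\bx) \geq \frac{1-2\alpha}{1-\alpha}(a-r)^2$ is already given by Lemma~\ref{lem:BoundOPTx}.

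The main obstacle is the matching upper bound $z^\star \leq (a+r)^2$ on $\Omega_\alpha$. Since $z^\star$ is the $(1-\alpha)K$-th smallest of the $v_k$'s, it is enough to exhibit $(1-\alpha)K$ indices on which $v_k \leq (a+r)^2$. Lemma~\ref{lem:PrelimAlgo} provides at least $(1-\alpha)K$ blocks with $\|\bM^{\star 1/2}(P_{B_k}X - \mu_P)\| \leq r$, and on these blocks the triangle inequality combined with $\|\bM^{\star 1/2}\|_{\text{op}}^2 \leq \text{Tr}(\bM^\star)=1$ gives $\|\bM^{\star 1/2}(P_{B_k}X-\bx)\| \leq a+r$, hence $v_k \leq (a+r)^2$ as required. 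With both bounds in hand, setting $s = (a-r)^2/(a+r)^2 \in (0,1)$ and $\beta = (1-2\alpha)/(1-\alpha)$, the inequality reduces to the one-variable check $\beta s + 9/(8s) \geq 2$ on $(0,1]$; since the map $s \mapsto \beta s + 9/(8s)$ is minimized at $s_\star = 3/(2\sqrt{2\beta}) \geq 1$ whenever $\beta \leq 9/8$, it is decreasing on $(0,1]$, and its value at $s=1$ is $\beta + 9/8 \geq 2$ exactly when $\alpha \leq 1/9$, which is the small-$\alpha$ regime in which the subsequent algorithmic results of the chapter are instantiated.
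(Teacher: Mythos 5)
Your proof is correct and follows the same skeleton as the paper's: construct a feasible point of $\cC_\rho$ by rescaling an optimal triple $(z,\by,\bM)$ from the equivalent problem, then bound $z$ above by $(\|\bx-\mu_P\|+r)^2$ via Lemma~\ref{lem:PrelimAlgo}, bound $\text{OPT}(\bx)$ below via Lemma~\ref{lem:BoundOPTx}, and combine. There are two small departures worth naming. First, you work with an exact maximizer $\bM^\star$; the paper takes a $\nu$-approximate maximizer and sends $\nu\to 0$. Your shortcut is legitimate provided you note that $h_{\bx}$ is concave and continuous on the compact set $\cS_1$ (it is the minimum over the finitely many vertices of $\Delta_K$ of functions linear in $\bM$), so the supremum is attained --- the paper avoids this check by going through the limit. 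Second, you keep the exact ratio $\text{OPT}(\bx)/z^\star$ and get the tighter intermediate bound $g(\rho)\leqslant \tfrac{1}{\rho\,\text{OPT}(\bx)}+1-\beta s$ with $\beta=\tfrac{1-2\alpha}{1-\alpha}$ and $s=\tfrac{(a-r)^2}{(a+r)^2}$, and then close with the one-variable convexity check $\beta s+\tfrac{9}{8s}\geqslant 2$ on $(0,1]$, which holds iff $\alpha\leqslant 1/9$. The paper instead weakens the denominator $z_{\bM_0}$ down to $\text{OPT}(\bx)-\nu$ before estimating, landing directly on $\tfrac{1}{\rho\,\text{OPT}(\bx)}+\tfrac{1-\alpha}{1-2\alpha}\tfrac{(a+r)^2}{(a-r)^2}-1$, which equals the stated constant exactly at $\alpha=1/10$. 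Your route is slightly sharper and, as a side benefit, makes explicit the restriction on $\alpha$ that the lemma leaves implicit; both are consistent with the chapter's eventual choice $\alpha=1/10$.
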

\begin{proof}
For any $\nu>0$, there exists $\bM_0\in\cS_1$ such that $z_{\bM_0},\by_{\bM_0}$ (as defined in Lemma~\ref{lem:NewObj}) satisfy
\begin{align*}
  z_{\bM_0}-\frac{\|\by_{\bM_0}\|_1}{(1-\alpha)K}&>\sup_{(z,\by,\bM)\in \cC}\bigg\{z-\frac{\|\by\|_1}{(1-\alpha)K}\bigg\}-\nu\\
  &=\sup_{\bM\in \cS_1}\sup_{(z,\by)\in \cC_{\bM}}\bigg\{z-\frac{\|\by\|_1}{(1-\alpha)K}\bigg\}-\nu\enspace.
\end{align*}
 By Lemma~\ref{lem:NewObj},
 \[
 \sup_{(z,\by)\in \cC_{\bM}}\bigg\{z-\frac{\|\by\|_1}{(1-\alpha)K}\bigg\}=h_{\bx}(\bM)\enspace.
 \]
Therefore,
 \[
 z_{\bM_0}-\frac{\|\by_{\bM_0}\|_1}{(1-\alpha)K}>\sup_{\bM\in \cS_1}h_{\bx}(\bM)-\nu=\text{OPT}(\bx)-\nu\enspace.
 \]
 Since $\|\bx-\mu_P\|>r$, from Lemma~\ref{lem:BoundOPTx}, on $\Omega_\alpha$,
 \[
\text{OPT}(\bx)\geqslant  \frac{1-2\alpha}{1-\alpha}(\|\bx-\mu_P\|-r)^2\enspace.
\]
From Lemma~\ref{lem:NewObj}, $z_{\bM_0}$ is the $(1-\alpha)K$ largest value in the set $\{(P_{B_k}X-\mu_P)^T\bM(P_{B_k}X-\mu_P),k=\{1,\ldots,K\}\}$.
It follows from Corollary~\ref{cor:DistmuPPBk} that, on $\Omega_\alpha$, 
\[
z_{\bM_0}\leqslant(\|\bM_0^{1/2}(\bx-\mu_P)\|+r)^2\leqslant (\|\bx-\mu_P\|+r)^2\enspace.
\]
Define
\[
\bM'=\frac1{\rho z_{\bM_0}}\bM_0,\qquad \by'=\frac1{(1-\alpha)Kz_{\bM_0}}\by_{\bM_0}\enspace.
\]
We have proved that
\begin{align*}
g(\rho)&\leqslant\text{Tr}(\bM')+\|\by'\|_1=\frac1{\rho z_{\bM_0}}+\frac{z_{\bM_0}+\nu-\text{Opt}(\bx)}{z_{\bM_0}}\\
&\leqslant\frac1{\rho(\text{Opt}(\bx)-\nu)}+\frac{\nu+\text{Opt}(\bx)\bigg(\frac{(1-\alpha)\|\bx-\mu_P\|+r)^2}{(1-2\alpha)\|\bx-\mu_P\|-r)^2}-1\bigg)}{\text{Opt}(\bx)-\nu}\enspace.
\end{align*}
As the result holds for any $\nu>0$, this concludes the proof.
\end{proof}

\subsection{Calibration of the approximating algorithm}\label{sec:CalAlgoSDP}
Assume that $\|\bx-\mu_P\|>Ar$.
On $\Omega_{\alpha}$, by Lemma~\ref{lem:UBgrho},
\begin{equation}\label{eq:UBg}
g(\rho)\leqslant \frac1{\rho\text{Opt}(\bx)}+\frac{9(A+1)^2}{8(A-1)^2}-1\leqslant \frac1{\rho\text{Opt}(\bx)}+b\enspace. 
\end{equation}
Here $b=2\alpha/(1-2\alpha)$.
The last inequality holds for any $A\geqslant A_\alpha$.

\begin{lemma}\label{lem:FirstRed}
 Fix $\epsilon\in (2\alpha/(1-\alpha),0.4(1-\alpha)/(1-2\alpha))$. Assume that $\rho$ satisfies $g(\rho)\geqslant 1-\epsilon+b$.
There exist constants $A_\alpha$ and $\beta>0.8$ such that, on $\Omega_\alpha$, for any $\bx\in \R^d$ satisfying $\|\bx-\mu_P\|>A_\alpha r$,
\[
 h_{\bx}(\bM')\geqslant (\beta\|\bx-\mu_P\|^2+r)^2\enspace.
\]
\end{lemma}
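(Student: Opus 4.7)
The plan is to (i) convert the hypothesis on $g(\rho)$ into a quantitative lower bound $h_{\bx}(\bM')\geqslant 1/\rho$, (ii) translate this into a lower bound in terms of $\|\bx-\mu_P\|$ by combining \eqref{eq:UBg} with the second part of Lemma~\ref{lem:BoundOPTx}, and (iii) calibrate $A_\alpha$ and $\beta$ so that the resulting quadratic inequality is sharper than $(\beta\|\bx-\mu_P\|+r)^2$.

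For step (i), let $(\bM,\by)\in\cC_\rho$ attain (or, up to an arbitrarily small multiplicative slack that may be absorbed into the constants, nearly attain) the minimum $g(\rho)$. In the regime of interest $g(\rho)\leqslant 1$, so $\text{Tr}(\bM)+\|\by\|_1\leqslant 1$; the forward direction of Lemma~\ref{lem:gval} then produces $\bM'\in\cS_1$ together with $(z,\by')\in\cC_{\bM'}$ satisfying $z-\|\by'\|_1/[(1-\alpha)K]\geqslant 1/\rho$, and Lemma~\ref{lem:NewObj} converts this into $h_{\bx}(\bM')\geqslant 1/\rho$.

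For step (ii), invoke \eqref{eq:UBg}: provided $\|\bx-\mu_P\|>A_\alpha r$ for $A_\alpha$ large enough (chosen so that $9(A_\alpha+1)^2/[8(A_\alpha-1)^2]-1\leqslant b$), one has $g(\rho)\leqslant 1/[\rho\,\text{OPT}(\bx)]+b$. Combining this with the hypothesis $g(\rho)\geqslant 1-\epsilon+b$ yields $\rho\,\text{OPT}(\bx)\leqslant 1/(1-\epsilon)$, hence $1/\rho\geqslant (1-\epsilon)\text{OPT}(\bx)$. The second inequality of Lemma~\ref{lem:BoundOPTx} then gives, on $\Omega_\alpha$,
\[
h_{\bx}(\bM')\geqslant \frac{(1-\epsilon)(1-2\alpha)}{1-\alpha}\,\big(\|\bx-\mu_P\|-r\big)^2.
\]

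For step (iii), set $c:=(1-\epsilon)(1-2\alpha)/(1-\alpha)$ and $t:=\|\bx-\mu_P\|$; the target inequality reduces to $\sqrt{c}\,(t-r)\geqslant \beta t+r$. The assumed upper bound $\epsilon<0.4(1-\alpha)/(1-2\alpha)$, combined with the small values of $\alpha$ tolerated in Section~\ref{sec:CalAlgoSDP}, is designed precisely so that $\sqrt{c}>0.8$; any $\beta\in(0.8,\sqrt{c})$ is then admissible, and the choice $A_\alpha:=(1+\sqrt{c})/(\sqrt{c}-\beta)$ makes the inequality hold for every $t\geqslant A_\alpha r$. The main technical obstacle is precisely this calibration $\sqrt{c}>0.8$: one must verify that the two constraints on $\epsilon$ (the lower bound $2\alpha/(1-\alpha)$ ensuring $b<\epsilon$ in step (ii) and the upper bound $0.4(1-\alpha)/(1-2\alpha)$) carve out a nonempty interval for every admissible $\alpha$, and that the final $A_\alpha$ can be chosen compatibly with the threshold required for \eqref{eq:UBg}. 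Once this is done, $\beta$ and $A_\alpha$ depend only on $\alpha$, as asserted.
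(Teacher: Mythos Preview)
Your proposal is correct and follows essentially the same route as the paper: combine the hypothesis $g(\rho)\geqslant 1-\epsilon+b$ with \eqref{eq:UBg} to obtain $1/\rho\geqslant(1-\epsilon)\text{OPT}(\bx)$, pass through Lemmas~\ref{lem:gval} and~\ref{lem:NewObj} to get $h_{\bx}(\bM')\geqslant 1/\rho$, and finish with the lower bound in Lemma~\ref{lem:BoundOPTx}. The only cosmetic difference is ordering (you build $\bM'$ first, then bound $1/\rho$; the paper does the reverse) and that you spell out the calibration step~(iii) explicitly, whereas the paper simply asserts ``the condition on $\epsilon$ implies the result provided that $A$ is large enough.'' Both arguments tacitly use $g(\rho)\leqslant 1$ to invoke Lemma~\ref{lem:gval}, which is guaranteed in the actual application via Lemma~\ref{lem:BinarySurch}.
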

\begin{proof}
For any $\rho$ such that $g(\rho)\geqslant 1-\epsilon+b$, it follows from \eqref{eq:UBg} that 
\[
\frac1{\rho}\geqslant (1-\epsilon)\text{Opt}(\bx)\enspace.
\]
In this case, by Lemma~\ref{lem:gval}, one can build in $O(K)$ operations $(z,\by',\bM')\in \cC$ such that
 \[
 z-\frac{\|\by'\|_1}{(1-\alpha)K}\geqslant \frac1{\rho}\geqslant (1-\epsilon)\text{Opt}(\bx)\enspace.
 \]
 By Lemma~\ref{lem:NewObj}, the matrix $\bM'$ satisfies
 \[
 h_{\bx}(\bM')\geqslant z-\frac{\|\by'\|_1}{(1-\alpha)K}\geqslant (1-\epsilon)\text{Opt}(\bx)\enspace.
 \]
By Lemma~\ref{lem:BoundOPTx}, it follows that, on $\Omega_\alpha$,
\[
 h_{\bx}(\bM')\geqslant (1-\epsilon)\frac{1-2\alpha}{1-\alpha}(\|\bx-\mu_P\|-r)^2\enspace.
\]
The condition on $\epsilon$ implies the result provided that $A$ is large enough.
\end{proof}

It remains to find $\rho$ such that $g(\rho)\geqslant 1-\epsilon+b$.
To do this, we fix $\epsilon>b$, $\eta>0$ and build $\rho$ and $u$ such that the algorithm of Lemma~\ref{lem:LBgrho} outputs $\text{Alg}(\cD_N,\rho,\eta,u,\bx)=(\bM,\by)$ satisfying
\[
(1+\eta)(1-\epsilon+b)\leqslant \text{Tr}(\bM)+\|\by\|_1\leqslant (1+\eta)g(\rho)\leqslant 1\enspace.
\]

As $\rho\to\infty$, by Lemma~\ref{lem:UBgrho}, $g(\rho)\to b$.
If $\alpha<1/4$, $b<1$, so $g(\rho)<1$ for any $\rho$ large enough.
Fix
\[
\eta\in \bigg(0,\frac{1-4\alpha}{2\alpha}\wedge\frac{\alpha}{1-\alpha}\wedge\frac16\bigg)=\bigg(0,\bigg(\frac1b-1\bigg)\wedge\frac{\alpha}{1-\alpha}\wedge\frac16\bigg)\enspace.
\]
In particular, $1/(1+\eta)>b$.
Then, there exists $\rho_0$ such that $g(\rho_0)<1/(1+\eta)$.
For this value of $\rho_0$, by Lemma~\ref{lem:LBgrho}, the algorithm defined in this lemma outputs $\text{Alg}(\cD_N,\rho_0,\eta,u,\bx)=(\bM,\by)$ satisfying
\[
\P(\text{Tr}(\bM)+\|\by\|_1\leqslant 1|\cD_N)\geqslant 1-\frac{e^{-u}}R\enspace.
\]

On the other hand, when $\rho=0$, the minimal value of $\text{Tr}(\bM)+\|\by\|_1$ is achieved by the couple $(\bM,\by)\in \cC_{0}$ such that $\bM=0$ and $\by$ is the vector with all coordinates equal to $1/[(1-\alpha)K]$.
It follows that $g(0)=1/(1-\alpha)>1$.

\begin{lemma}\label{lem:BinarySurch}
For any fixed constant $\nu\in (6\eta,1)$, it is possible to build in $O(T(u+\log R)Kd)$ operations, where $T=O(\log (\rho_0))$, a couple $(\bM,\by)$ satisfying 
\[
\P(1-\nu\leqslant \text{Tr}(\bM)+\|\by\|_1\leqslant1\wedge (1+\eta)g(\eta)|\cD_N)\geqslant 1-\frac{Te^{-u}}{R}\enspace.
\] 
If $R=\tau T$, this gives that, for any fixed constant $\nu\in (5\eta,1)$, it is possible to build in $O(\log (\rho_0)(u+\log(\tau)+\log \log (\rho_0))K)$ operations, a couple $(\bM,\by)$ satisfying 
\[
\P(1-\nu\leqslant \text{Tr}(\bM)+\|\by\|_1\leqslant1\wedge (1+\eta)g(\eta)|\cD_N)\geqslant 1-\frac{e^{-u}}{\tau }\enspace.
\] 
\end{lemma}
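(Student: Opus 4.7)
The plan is to search over $\rho$ using a geometric grid (equivalently a bisection in $\log\rho$), invoking the algorithm of Lemma~\ref{lem:LBgrho} at each test point, and to exploit the multiplicative regularity of $g$ from Lemma~\ref{lem:Basicsg} in order to guarantee that some test value produces an output whose objective lies in $[1-\nu,1]$.

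The crucial reduction is that, since Lemma~\ref{lem:LBgrho} yields $(\bM,\by)\in\cC_\rho$ satisfying $g(\rho)\leqslant S:=\text{Tr}(\bM)+\|\by\|_1\leqslant(1+\eta)g(\rho)$ on an event of probability at least $1-e^{-u}/R$, the requested condition $S\in[1-\nu,1]$ is implied by $g(\rho)\in[1-\nu,1/(1+\eta)]$. The constraint $\nu>6\eta$ ensures $\nu>\eta/(1+\eta)$, so this target window for $g$ is non-empty; its preimage $[\rho_-,\rho_+]\subseteq(0,\rho_0]$ is a well-defined interval by continuity and monotonicity of $g$ (using $g(0)=1/(1-\alpha)>1>1/(1+\eta)$ and $g(\rho_0)<1/(1+\eta)$), and by Lemma~\ref{lem:Basicsg} it satisfies the multiplicative gap $\rho_+/\rho_-\geqslant 1/[(1+\eta)(1-\nu)]=:C_{\eta,\nu}>1$.

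Concretely, I would test $\rho_t=\rho_0 c^{-t}$ with $1<c<C_{\eta,\nu}$ for $t=0,1,\ldots,T-1$ and return the first $(\bM_t,\by_t)$ whose objective $S_t$ lies in $[1-\nu,1]$. By Lemma~\ref{lem:Basicsg}, $g(\rho_{t+1})/g(\rho_t)\leqslant c<\rho_+/\rho_-$, so the grid cannot skip over the target window without entering it; consequently some $\rho_{t_*}\in[\rho_-,\rho_+]$ and, at that index, $S_{t_*}\in[1-\nu,1]$ with probability at least $1-e^{-u}/R$. A union bound over the $T$ queries yields total failure probability $T e^{-u}/R$, which specializes to $e^{-u}/\tau$ under $R=\tau T$, and each call to Lemma~\ref{lem:LBgrho} costs $O((u+\log R)Kd)$ operations, giving total cost $O(T(u+\log R)Kd)$.

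The main obstacle is establishing $T=O(\log\rho_0)$ uniformly, i.e., controlling $\log(1/\rho_-)$. The bound of Lemma~\ref{lem:UBgrho} implies that $g(\rho)<1/(1+\eta)$ forces $\rho\geqslant (1+\eta)/[(1-b(1+\eta))\,\text{OPT}(\bx)]$ (provided $b$ is small enough, which is ensured by $\eta<\alpha/(1-\alpha)$), so $\rho_-\gtrsim 1/\text{OPT}(\bx)$; Lemma~\ref{lem:BoundOPTx} then yields $\text{OPT}(\bx)\leqslant(\|\bx-\mu_P\|+r)^2$, and the iterate $\bx$ produced by the descent of Section~\ref{sec:ItStep} starting from $\muh^{(0)}$ of Lemma~\ref{lem:GeomMean} stays within polynomial distance of $\mu_P$. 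An alternative adaptive stopping criterion is to halt as soon as a query returns $S>1$, which certifies $\rho<\rho_-$ and sidesteps any need to compute $\rho_-$ explicitly in the implementation.
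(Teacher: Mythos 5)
Your proposal is correct in its essentials but takes a genuinely different route from the paper. The paper runs an \emph{additive} bisection on $[0,\rho_0]$, introduces an intermediate pair of thresholds $\nu_0<\nu_1$ so that the inner target window $[1-\nu_1,1-\nu_0]$ is stable under the $(1+\eta)$ approximation factor, and terminates by invoking a qualitative ``there exists $\delta>0$ such that any $\rho\in[\rho_*-\delta,\rho_*+\delta]$ lands in the window'' argument based on continuity of $g$. You instead run over a \emph{geometric} grid $\rho_t=\rho_0 c^{-t}$ and use the multiplicative regularity of $g$ from Lemma~\ref{lem:Basicsg} (namely $g(\rho_{t+1})/g(\rho_t)\leqslant c$) to show that the grid cannot jump over the target window $g^{-1}\big([1-\nu,1/(1+\eta)]\big)$, whose multiplicative width is at least $1/[(1+\eta)(1-\nu)]>c$. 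This is cleaner: the paper's $\delta$ comes from a continuity statement that is only multiplicative, so it degrades near $\rho=0$ and is never explicitly quantified, whereas your geometric grid converts Lemma~\ref{lem:Basicsg} directly into a deterministic guarantee that some $\rho_t$ hits the window, with the number of queries explicitly controlled by $\log(\rho_0/\rho_-)$. Both approaches run the inner algorithm $T=O(\log\rho_0)$ times, union-bound the failures, and substitute $R=\tau T$, so the complexity and probability claims match.

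One flaw in your discussion of why $T=O(\log\rho_0)$: you cite Lemma~\ref{lem:UBgrho} to claim that $g(\rho)<1/(1+\eta)$ \emph{forces} $\rho\gtrsim 1/\text{OPT}(\bx)$, but Lemma~\ref{lem:UBgrho} only supplies an \emph{upper} bound on $g(\rho)$, so knowing $g(\rho)$ is small does not let you conclude anything about $\rho$ from it. What you actually need is a lower bound on the left endpoint of the target window, and that follows from a different and simpler fact stated just before Lemma~\ref{lem:UBgrho}: $g(1/\text{OPT}(\bx))=1>1/(1+\eta)$, so by monotonicity of $g$ the window's left endpoint satisfies $\rho_->1/\text{OPT}(\bx)$, which, combined with the upper bound $\text{OPT}(\bx)\leqslant(\|\bx-\mu_P\|+r)^2$ from Lemma~\ref{lem:BoundOPTx}, gives the required control on $\log(1/\rho_-)$. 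This is a one-line fix and does not change the structure of your argument. Your alternative adaptive stopping rule (halt when the returned objective exceeds $1$) is a sound implementation-level way to sidestep any explicit computation of $\rho_-$.
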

\begin{proof}
Fix $\nu_0$ and $\nu_1$ such that $(1-\nu_0)>(1+\eta)(1-\nu_1)$ and
\[
\frac{2\eta}{1+\eta}< \nu_0<\nu_1<\frac{\nu-\eta+\nu\eta}{2}\enspace.
\]
This is possible since $\nu>5\eta$.
$\nu_0<\nu_1$ are chosen such that
\[
\frac{1-2\nu_1}{1+\eta}\geqslant 1-\nu\quad\text{and}\quad(1+\eta)\bigg(1-\frac{\nu_0}{2}\bigg)\leqslant 1\enspace.
\]
Moreover, as $(1-\nu_0)>(1+\eta)(1-\nu_1)$, we have, for $\rho$ satisfying $g(\rho)=1-\nu_1$ (which exists by continuity of $g$, $1-\nu_1=g(\rho)\leqslant (1+\eta)g(\rho)<1-\nu_0$. 
Hence, by Lemma~\ref{lem:LBgrho}, the algorithm defined in this lemma outputs $\text{Alg}(\cD_N,\rho,\eta,u,\bx)=(\bM,\by)$ satisfying
\[
\P(1-\nu_1\leqslant \text{Tr}(\bM)+\|\by\|_1\leqslant 1-\nu_0|\cD_N)\geqslant 1-e^{-u}/R\enspace.
\]
Consider the following recursive algorithm:
\begin{enumerate}
\item Initialize $\rho_{-}=0$, $\rho_+=\rho_0$, $\text{Alg}(\cD_N,(\rho_++\rho_-)/2,\eta,u,\bx)=(\bM,\by)$, $V=\text{Tr}(\bM)+\|\by\|_1$.
 \item if $V\in[1-\nu_1,1-\nu_0]$, stop,
 \item if $V<1-\nu_1$, update $\rho_+=(\rho_++\rho_-)/2$,
 \item if $V>1-\nu_0$, update $\rho_-=(\rho_++\rho_-)/2$,
 \item update $\text{Alg}(\cD_N,(\rho_++\rho_-)/2,\eta,u,\bx)=(\bM,\by)$, $V=\text{Tr}(\bM)+\|\by\|_1$.
\item Return $\rho_*=(\rho_++\rho_-)/2$.
\end{enumerate}
Clearly, on the event where $\text{Alg}(\cD_N,\rho_0,\eta,u,\bx)=(\bM,\by)$ satisfies
\[
\text{Tr}(\bM)+\|\by\|_1\leqslant 1\enspace,
\]
this algorithm outputs $\rho_*$ such that $\text{Alg}(\cD_N,\rho_*,\eta,u,\bx)=(\bM_*,\by_*)$ satisfies $\text{Tr}(\bM_*)+\|\by_*\|_1\in[1-\nu_1,1-\nu_0]$.
Such a value exists, at least on an event with large probability, by the discussion preceding the algorithm.
Let $T$ denote a number of steps to be defined later.
Using a union bound in Lemma~\ref{lem:LBgrho}, with probability larger than $1-Te^{-u}/R$, for all $\rho$ in the set of all $(\rho_++\rho_-)/2$ along the first $T$ steps of the algorithm, $\text{Alg}(\cD_N,\rho,\eta,u,\bx)=(\bM,\by)$ satisfies
\[
g(\rho)\leqslant \text{Tr}(\bM)+\|\by\|_1\leqslant (1+\eta)g(\rho)\enspace.
\]
Moreover, after $k$ steps of the algorithm,
\[
\rho_+-\rho_-=\frac{\rho_0}{2^k}\enspace.
\]
On this event $\rho_*$ satisfies $g(\rho_*)\in [(1-\nu_1)/(1+\eta), 1-\nu_0]$.
Now, by continuity of $g$ (see Lemma~\ref{lem:Basicsg}), there exists a constant $\delta>0$ such that, for any $\rho\in [\rho_*-\delta,\rho_*+\delta]$, $g(\rho)\in [(1-2\nu_1)/(1+\eta), 1-\nu_0/2]$, so $\text{Alg}(\cD_N,\rho,\eta,u,\bx)=(\bM,\by)$ satisfies
\[
\frac{1-2\nu_1}{1+\eta}\leqslant g(\rho)\leqslant \text{Tr}(\bM)+\|\by\|_1\leqslant (1+\eta)g(\rho)\leqslant (1+\eta)\bigg(1-\frac{\nu_0}{2}\bigg)\enspace.
\]
In particular, after $\log(\rho_0/\delta)/\log(2)$ steps,
$\text{Alg}(\cD_N,\rho,\eta,u,\bx)=(\bM,\by)$ satisfies
\[
1-\nu\leqslant\frac{1-2\nu_0}{1+\eta}\leqslant \text{Tr}(\bM)+\|\by\|_1\leqslant  (1+\eta)\bigg(1-\frac{\nu_0}{2}\bigg)\leqslant 1\enspace.
\]
Hence, the algorithm outputs in $T=\log(\rho_0/\delta)/\log(2)$ steps a solution satisfying the conclusions of Lemma~\ref{lem:BinarySurch}.
\end{proof}

Let now $\bx_0=\muh^{(0)}$ defined in Lemma~\ref{lem:GeomMean}.
This lemma grants that, with probability larger than $1-e^{-K/128}$, $\Omega_0$ holds, where $\Omega_0=\{\|\muh^{(0)}-\mu_P\|\leqslant 12\sqrt{\text{Tr}(\Sigma)K/N}\}$.
Consider the event $\Omega=\Omega_{0}\cap\Omega_{\alpha}$. 
We consider two cases.
On $\Omega$, either
\[
\|\muh^{(0)}-\mu_P\|\leqslant2r\quad \text{or}\quad \|\muh^{(0)}-\mu_P\|>2r\enspace.
\]
By Lemma~\ref{lem:BoundOPTx}, on $\Omega$,
\[
\text{OPT}(\muh^{(0)})\geqslant  \frac{1-2\alpha}{1-\alpha}(\|\muh^{(0)}-\mu_P\|-r)^2\enspace.
\]
If $\|\muh^{(0)}-\mu_P\|>2r$, it follows that
\[
\text{OPT}(\muh^{(0)})\geqslant  \frac{1-2\alpha}{1-\alpha}r^2\enspace.
\]
By \eqref{eq:UBg}, if $\|\muh^{(0)}-\mu_P\|>2r$, therefore
\[
g(\rho)\leqslant  \frac{(1-\alpha)}{(1-2\alpha)\rho r^2}+b\enspace. 
\]
Hence, $g(\rho)<1/(1+\eta)$ if $\rho_0=C_{\alpha}/r^2\leqslant C_\alpha N$.

\subsection{Final algorithm}
\begin{enumerate}
 \item Compute $\muh^{(0)}$, fix $R=\log N$, $\tau=\log[144 K]/\log[3/4]$, $\epsilon\in (2\alpha/(1-\alpha),0.4(1-\alpha)/(1-2\alpha))$, $\nu=\epsilon-b$, $B=0$.
 \item While $t\leqslant \tau$ and $B=0$,
 \begin{enumerate}
  \item Run the algorithm of Lemma~\ref{lem:BinarySurch} with $\rho_0=N$ and output $(\bM,\by)$ and $\rho^*$.
  \item If $\text{Tr}(\bM)+\|\by\|_1\notin [1-\nu,1]$, then $\muh^{(\kappa+1)}=\muh^{(t)}$, $B=1$.
 \item If $\text{Tr}(\bM)+\|\by\|_1\in [1-\nu,1]$, then apply the algorithm of Lemma~\ref{lem:gval} to output $(x,\by',\bM')$ satisfying, for any $\bx$ such that $\|\bx-\mu_P\|>A_{\alpha}r$,
 \[
h_{\bx}(\bM')\geqslant (\beta\|\bx-\mu_P\|+r)^2\enspace .
 \]
 \item Update $\muh^{(t+1)}$ according to \eqref{eq:DefItStep}, with $\bx=\muh^{(t)}$ and $\widehat{\bM}_{\bx}=\bM'$. 
\end{enumerate}
\item Output $\muh=\muh^{(\tau+1)}$.
\end{enumerate}

The algorithm terminates either after $\tau$ operations or when $\muh^{(t)}$ satisfies $\text{Tr}(\bM)+\|\by\|_1\notin [1-\nu,1]$.
Using a union bound in $t\in\{1,\ldots,\tau\}$, this last situation either happen if $\|\muh^{(t)}-\mu_P\|\leqslant 2 r$ or on an event of probability at least $1-e^{-u}$.
If the algorithm runs $\tau$ steps without stopping, on $\Omega=\Omega_0\cap \Omega_\alpha$, the output satisfies, by Proposition~\ref{prop:udate}
\[
\|\muh-\mu_P\|^2\leqslant (3/4)^{\tau} \bigg(144\frac{\text{Tr}(\Sigma)K}N\bigg)+(A_\alpha^2+1)r^2\sum_{i=0}^{+\infty}\bigg(\frac34\bigg)^i\leqslant C_\alpha r^2\enspace.
\]
Overall, choosing for example $\alpha=1/10$, we have obtained the following result.
\begin{theorem}
 There exists a numerical constant $C$ and an algorithm that runs in $\tilde{O}(uK+Kd)$ operations and outputs an estimator $\muh$ of $\mu_P$ such that
 \[
\P\bigg( \|\muh-\mu_P\|\leqslant C\bigg(\sqrt{\frac{\text{Tr}(\Sigma)}N}+\sqrt{\frac{\|\Sigma\|_{\text{op}}K}N}\bigg)\bigg)\geqslant 1-e^{-u\wedge(K/C)}\enspace.
 \]
\end{theorem}
%
%
%
%

\bibliographystyle{plain}
\bibliography{biblio}

\end{document}